\documentclass[11pt]{article}
\usepackage[utf8]{inputenc}
\usepackage[T1]{fontenc}
\usepackage{amsthm, amsmath, mathrsfs, mathtools}
\usepackage{amsfonts}
\usepackage[margin=1in]{geometry}
\usepackage[algo2e,ruled,noend,resetcount,linesnumbered]{algorithm2e}
\usepackage{graphicx}
\usepackage{comment}
\usepackage{enumitem}
\usepackage{thmtools}
\usepackage{thm-restate}
\usepackage{algorithm, algorithmicx, algpseudocode} 
\usepackage[draft,margin=false,inline=true]{fixme}
\fxsetup{mode=multiuser,theme=color, layout=inline}

\setlist[enumerate]{nosep, topsep=1ex}
\setlist[itemize]{nosep, topsep=1ex}
\setlist[description]{nosep}
\allowdisplaybreaks
\lineskiplimit=0pt

\usepackage[colorlinks, backref=page]{hyperref}
\usepackage{cleveref}
\usepackage{xspace}

\usepackage{caption}
\usepackage{subcaption}

\usepackage{algpseudocode}

\newcommand{\chris}[1]{\textcolor{blue}{[CY: #1]}}
\newcommand{\maxh}[1]{\textcolor{red}{[MH: #1]}}
\usepackage{xcolor}

\newcommand{\proofparagraph}[1]{\item \paragraph{#1}\mbox{}\\}

\newcommand{\alg}[1]{\textsc{\bfseries \footnotesize #1}}
\newcommand{\innerAlg}{\mathcal{A}}
\newcommand{\outerAlg}{\mathcal{A}'}

\newcommand{\bigO}[1]{O \left( #1 \right)}
\newcommand{\littleO}[1]{o \left( #1 \right)}
\newcommand{\bigTh}[1]{\Theta \left( #1 \right)}
\newcommand{\tO}[1]{\tilde{O} (#1)}
\newcommand{\bigtO}[1]{\tilde{O} \left( #1 \right)}
\newcommand{\bigOm}[1]{\Omega \left( #1 \right)}

\newcommand{\poly}[1]{\mathrm{poly} \left( #1 \right)}


\newcommand{\E}[1]{\mathbb{E}\left[#1\right]}
\newcommand{\Ep}{\mathbb{E}}

\newcommand{\BinomD}[2]{\mathrm{Binom}\left(#1, #2\right)}
\newcommand{\UnifD}[1]{\mathrm{Unif} \left(#1\right)}

\newcommand{\BernD}[1]{\mathrm{Bern} \left(#1\right)}
\newcommand{\iid}{{i.i.d.}\ }
\newcommand{\ind}[1]{\mathbf{1}[#1]}
\newcommand{\tvd}[1]{d_{TV} \left(#1\right)}
\newcommand{\supp}{\mathrm{supp}}


\newtheorem{theorem}{Theorem}[section]
\newtheorem{corollary}[theorem]{Corollary}
\newtheorem{claim}[theorem]{Claim}
\newtheorem{definition}[theorem]{Definition}
\newtheorem{lemma}[theorem]{Lemma}
\newtheorem{proposition}[theorem]{Proposition}
\newtheorem{remark}[theorem]{Remark}


\newcommand{\N}{\mathbb{N}}
\newcommand{\Z}{\mathbb{Z}}
\newcommand{\R}{\mathbb{R}}
\newcommand{\Q}{\mathbb{Q}}
\newcommand{\family}{\mathcal{F}}
\newcommand{\distribution}{\mathcal{D}}
\newcommand{\domain}{\mathcal{X}}
\newcommand{\range}{\mathcal{Y}}
\newcommand{\hypotheses}{\mathcal{H}}
\newcommand{\calO}{\mathcal{O}}
\newcommand{\partition}{\mathcal{P}}
\newcommand{\gaussian}{\mathcal{N}}

\newcommand{\sign}{\text{sgn}}
\newcommand{\snorm}[2]{||#2||_{#1}}
\newcommand{\eps}{\varepsilon}

\newcommand{\vol}{\mathrm{vol}_{N}}
\newcommand{\area}{\mathrm{vol}_{N - 1}}
\newcommand{\volF}[1]{\mathrm{vol}_{#1}}
\newcommand{\closure}{\mathrm{cl}}
\newcommand{\interior}{\mathrm{int}}
\newcommand{\exterior}{\mathrm{ext}}

\newcommand{\boundary}{\partial}

\newcommand{\given}{\textrm{\xspace s.t.\ \xspace}}

\newcommand{\orT}{\textrm{\xspace or \xspace}}

\newcommand{\otherwise}{\textrm{\xspace o/w \xspace}}
\newcommand{\ceil}[1]{\left\lceil #1 \right\rceil}

\newcommand{\set}[1]{\{#1\}}
\newcommand{\bigset}[1]{\left\{#1\right\}}



\newcommand{\norm}[2]{\left|\left|#1\right|\right|_{#2}}



\newcommand{\lp}{\left}
\newcommand{\rp}{\right}
\newcommand{\abs}[1]{\lp| #1 \rp|}

\newcommand{\snew}[1]{\textcolor{blue}{#1}}

\newcommand{\SO}{\mathbf{SO}}
\newcommand{\Uni}{\text{Uniform}}
\newcommand{\seg}{\text{seg}}
\newcommand{\algname}[1]{\textit{#1}}

\newcommand{\BD}{ K }
\newcommand{\BX}{ Q }
\newcommand{\iat}{isoperimetric approximate tiling}


\newcommand{\simpleHypothesisTester}{\alg{rSimpleHypothesisTesting}}

\newcommand{\rSimpleMultiCoinTester}{\alg{rSimpleMultiCoinTester}}
\newcommand{\rApproxMultiCoinTester}{\alg{rApproxMultiCoinTester}}
\newcommand{\rAdaptiveCoinTester}{\alg{rAdaptiveCoinTester}}
\newcommand{\rAdaptiveStatQ}{\alg{rAdaptiveStatQ}}
\newcommand{\rAdaptiveHeavyHitters}{\alg{rAdaptiveHeavyHitters}}

\newcommand{\rKIdentifier}{\alg{rKBiasIdentification}}

\newcommand{\rMultiStatQ}{\alg{rMultiStatQ}}

\newcommand{\rKPseudoMaximumIdentifier}{\alg{rKPseudoMaximumIdentification}}
\newcommand{\rPseudoMaximumIdentifier}{\alg{rPseudoMaximumIdentification}}
\newcommand{\rCVPP}{\alg{rCVPP}}
\newcommand{\rCVPPP}{\alg{rCVPP-PreProcessing}}
\newcommand{\rCVPPR}{\alg{rCVPP-RealTime}}


\newcommand{\multiCoinTester}{\alg{MultiCoinTester}}
\newcommand{\multiStatQ}{\alg{MultiStatQ}}
\newcommand{\findMaximum}{\alg{findMaximum}}

\newcommand{\rHeavyHitters}{\alg{rHeavyHitters}}

\newcommand{\rAdaptiveAmplify}{\alg{rAdaptiveAmplify}}
\newcommand{\rCoordinateRound}{\alg{rCoordinateRound}}


\newcommand{\cube}{\mathcal{C}}


\newcommand{\accept}{\alg{Accept}}
\newcommand{\reject}{\alg{Reject}}

\newcommand{\failtoreject}{\alg{FailToReject}}


\title{Replicability in High Dimensional Statistics}

\author{Max Hopkins\thanks{University of California, San Diego. nmhopkin@ucsd.edu. Supported by NSF Award CCF-1553288 (CAREER) and a Sloan Research Fellowship.}, Russell Impagliazzo\thanks{University of California, San Diego. rimpagliazzo@ucsd.edu. Supported by NSF Award AF: Medium 2212136}, Daniel Kane\thanks{University of California, San Diego. dakane@ucsd.edu. Supported by NSF Award CCF-1553288 (CAREER) and a Sloan Research Fellowship}, Sihan Liu\thanks{University of California, San Diego. sil046@ucsd.edu. Supported by NSF Award CCF-1553288 (CAREER) and a Sloan Research Fellowship.}, Christopher Ye\thanks{University of California, San Diego. czye@ucsd.edu. Supported by NSF Award AF: Medium 2212136, NSF grants 1652303, 1909046, 2112533, and HDR TRIPODS Phase II grant 2217058.}}

\begin{document}

\pagenumbering{gobble}
\maketitle
\begin{abstract}
    The replicability crisis is a major issue across nearly all areas of empirical science, calling for the formal study of replicability in statistics. Motivated in this context, [Impagliazzo, Lei, Pitassi, and Sorrell STOC 2022] introduced the notion of replicable learning algorithms, and gave basic procedures for $1$-dimensional tasks including statistical queries. In this work, we study the computational and statistical cost of replicability for several fundamental \emph{high dimensional} statistical tasks, including multi-hypothesis testing and mean estimation.

Our main contribution establishes a computational and statistical equivalence between optimal replicable algorithms and high dimensional isoperimetric tilings.
As a consequence, we obtain matching sample complexity upper and lower bounds for replicable mean estimation of distributions with bounded covariance, resolving an open problem of [Bun, Gaboardi, Hopkins, Impagliazzo, Lei, Pitassi, Sivakumar, and Sorrell, STOC2023] and for the $N$-Coin Problem, resolving a problem of [Karbasi, Velegkas, Yang, and Zhou, NeurIPS2023] up to log factors.

While our equivalence is computational, allowing us to shave log factors in sample complexity from the best known efficient algorithms, efficient isoperimetric tilings are not known. To circumvent this, we introduce several relaxed paradigms that do allow for sample and computationally efficient algorithms, including allowing pre-processing, adaptivity, and approximate replicability. In these cases we give efficient algorithms matching or beating the best known sample complexity for mean estimation and the coin problem, including a generic procedure that reduces the standard quadratic overhead of replicability to linear in expectation.

\end{abstract}
\newpage

\tableofcontents
\newpage
\interfootnotelinepenalty=10000
\pagenumbering{arabic}
\setcounter{page}{1}


\section{Introduction}
\label{sec:intro}

The \textit{replicability crisis} permeates almost all areas of science. Recent years have seen the repeated failure of influential work in oncology \cite{begley2012raise}, clinical research \cite{ioannidis2005contradicted}, and other high impact areas to replicate under scrutiny. Indeed the problem is so pervasive that in a survey of 1500 scientists, $70\%$ reported they had tried and failed to replicate another researcher's findings \cite{baker2016}. While many factors underlie the failure of replicability in science, a key component is the instability of underlying \textit{statistical methods}. Even techniques as basic as hypothesis testing suffer from these issues \cite{ioannidis2005most}, and combined with the explosion in \textit{number} of performed tests each year, it seems inevitable published false positives will skyrocket unless new methods are developed.

Motivated in this context, we study the cost of replicability in statistics in the recent algorithmic framework of Impagliazzo, Lei, Pitassi, and Sorrell \cite{impagliazzo2022reproducibility}. An algorithm $\innerAlg$ drawing samples from an (unknown) population $\distribution$ is called $\rho$-replicable if, run twice on \textit{independent} samples and \textit{the same} randomness, $\innerAlg$ produces exactly the same answer with probability $1-\rho$. We focus on characterizing the computational and statistical complexity of replicability for two core interrelated problems: \textit{multi-hypothesis testing} and \textit{high dimensional mean estimation}.

As a warm-up, consider the setting of a single hypothesis test. A typical procedure sets up a test statistic $Z$ to distinguish between a null $h_\emptyset$ and alternative hypothesis $h_1$ such that under $h_\emptyset$, $Z$ is uniform on $[0,1]$, while under $h_1$ there exists $q_0>p_0$ such that $\Pr[Z \leq p_0] \geq q_0$. Formalized in this way, hypothesis testing is equivalent to one of the earliest problems in replicability and distribution testing, the \textit{coin problem} (testing the bias of a weighted coin). Despite its central position, the complexity of the replicable coin problem is not fully understood. Worse, current methods have \textit{quadratic} overhead in $\rho$ which may be infeasible in practice. Our first contribution is a tight characterization of the coin problem, reducing this cost to just linear in expectation.

The coin problem is a fundamental example of \textit{1-dimensional problem} in statistics but, in practice, most problems are really \textit{high dimensional}. An epidemiologist, may, for instance, want to test the prevalence of a suite of $N$ diseases in some population. Or, even in a single hypothesis test, the test statistic itself may involve computing the mean of some $N$-dimensional data; if such pre-processing steps are non-replicable, the final test may be as well. This brings us to the main question addressed in this paper: \textit{how does the cost of replicability scale with dimension $N$?}

High dimensional replicability in this sense was first considered in \cite{DBLP:conf/stoc/BunGHILPSS23} and \cite{KVYZ23}. In \cite{KVYZ23}, the authors study the \textit{$N$-Coin Problem}, akin to the `multi-hypothesis' setup above. They argue that while independently estimating each coin replicably takes $N^3$ flips, by \textit{correlating} choices one can improve this cost to $N^2$, albeit in exponential time. Likewise, \cite{DBLP:conf/stoc/BunGHILPSS23} show a correlated strategy for replicably estimating an $N$-dimensional Gaussian in $N^2$ samples. At outset, it was unclear whether the proposed strategies were optimal: while \cite{KVYZ23} conjectured no better algorithm could exist, \cite{DBLP:conf/stoc/BunGHILPSS23} asked if the problem could be solved in $N$ samples. Is there a principled approach to understanding the cost of such problems?

We resolve this question by proving a tight connection between high dimensional replicability and a well-studied problem in high dimensional geometry: \emph{low surface area tilings of $\R^N$}. 
Low surface area tilings, closely related to optimal packings, are a classical problem dating back to Pappus of Alexandria in the 4th century,\footnote{Pappus claimed a solution for the 2-dimensional case, later proved by Hales \cite{hales2001honeycomb}.} with asymptotically optimal constructions known since the 1950s \cite{rogers50,butler1972simultaneous}. In computer science, such tilings have seen more recent study due to their close connections with lattice cryptography (see e.g.\ \cite{micciancio2004almost,mook2021lattice}) and hardness of approximation \cite{kindler2012spherical,naor2023integer}. 

We prove a computational and statistical equivalence between (efficient) replicable algorithms and (efficient) tilings. Given a replicable algorithm with low sample complexity, we give an oracle-efficient construction of an (approximate) tiling with low surface area. Conversely given an (approximate) tiling with low surface area, we give an oracle-efficient replicable algorithm with low sample complexity. Applying the classical isoperimetric theorem, we immediately get near-tight lower bounds for Gaussian mean estimation and the $N$-Coin Problem matching the algorithms of \cite{KVYZ23,DBLP:conf/stoc/BunGHILPSS23} up to log factors, resolving their corresponding open questions.\footnote{Formally, we resolve the sample complexity of the \textit{non-adaptive} $N$-Coin problem up to log factors. The authors of \cite{KVYZ23} do not consider the adaptive sample model. We discuss this subtlety later on.}

On the algorithmic side, while isoperimetric tilings exist, all known constructions take exponential time. Thus achieving true sample optimality via this approach, similar to \cite{KVYZ23,DBLP:conf/stoc/BunGHILPSS23}, currently requires exponential time. On the other hand, there are efficient tilings that (slightly) beat naive `independent estimation' \cite{micciancio2004almost}. Combined with our equivalence theorem, this gives the best known polynomial time algorithms for $N$-dimensional mean estimation and the coin problem. Further, even if no efficient isoperimetric tilings exist, we argue it is nevertheless possible to \textit{pre-process} an inefficient tiling in such a way that sample-optimal replicability can be achieved in polynomial time with query access to the pre-processing output. We leave the construction (or hardness of) truly efficient isoperimetric tilings as the main question (re)raised by this work.

Finally, in light of the lack of efficient isoperimetric tilings, we introduce two relaxed paradigms for replicability and multi-hypothesis testing that do allow for sample and computationally efficient algorithms. First, we consider \textit{adaptive} algorithms which may choose which of $N$ coins they flip during execution based on prior observations. We exhibit a polynomial time algorithm in this model matching the best-known sample complexity of prior (inefficient) non-adaptive methods. Second, we look at relaxations that only require \textit{approximate} replicability. In particular, we show if one only requires the outputs over two runs to agree on most coins, it is possible to build efficient algorithms \textit{beating} the sample complexity implied by isoperimetric tilings.


\subsection{Our Contributions}
Before stating our results, we briefly recall the formal notion of a replicable algorithm.

\begin{definition}[\cite{impagliazzo2022reproducibility}]
    \label{def:replicability}
    An algorithm $\innerAlg$ is $\rho$-replicable if for all distributions $\distribution$ and i.i.d. samples $S, S' \sim \distribution$
    \begin{equation*}
        \Pr_{r, S, S'} \left( \innerAlg(S; r) = \innerAlg(S'; r) \right) \geq 1 - \rho,
    \end{equation*}
    where $r$ denotes the internal randomness of the algorithm $\innerAlg$.
\end{definition}
Replicable algorithms are inherently randomized, and typically have a corresponding `failure probability' $\delta$. For simplicity, in this overview we will ignore sample dependence on $\delta$ which always scales \textit{logarithmically} in $\frac{1}{\delta}$. Formally, the below results can be thought of as in the regime where $\delta=\Theta(\rho)$. Formal dependencies on all parameters are given in the main body.

\subsubsection{On Replicability in \texorpdfstring{$1$}{1}-Dimension (\texorpdfstring{\Cref{sec:repro-hypothesis-testing}}{})}
While our eventual goal is to understand the price of replicability in high dimensions, it is of course natural to first ask for a tight understanding in $1$-dimension. With this in mind, we first consider the fundamental problems of \textit{single hypothesis testing} and \textit{bias estimation}.

Suppose we have some hypothesis $h_0$ and an experiment designed to test this hypothesis is repeated $m$ times, thus creating a sequence of $m$ $p$-values.
If $h_0$ is true, then the $p$-values should be uniformly distributed. 
On the other hand, if $h_0$ is false, we should gather small $p$-values with higher probability than normal. 
Quantitatively, there are constants $p_0, q_0$ such that $p$-values smaller than $p_0$ are observed with probability $q_0 > p_0$ (in statistics, $q_0$ is called the \textit{power} of the experiment). Given a sequence of $p$-values, we want to design an algorithm that \emph{replicably} determines whether to reject the null hypothesis $h_0$. We formalize this in the following definition.

\begin{restatable*}[Hypothesis Testing]{definition}{defhypothesistesting}
    \label{def:hypothesis-testing}
    Let $0 \leq p_0 < q_0 \leq 1$, $\delta < \frac{1}{2}$.
    A (randomized) algorithm $\innerAlg$ is a $(p_0, q_0)$-hypothesis tester if given sample access $S$ to some unknown $\distribution$ on $[0, 1]$:
    \begin{enumerate}
        \item Given $\distribution = \UnifD{[0, 1]}$, then $\Pr (\innerAlg(S) = \reject) < \delta$.
        \item Given $\Pr_{x \sim \distribution} (x < p_0) \geq q_0$, then $\Pr (\innerAlg(S) = \failtoreject) < \delta$.
    \end{enumerate}
\end{restatable*}

Single hypothesis testing is computationally and statistically equivalent to a well-studied problem in distribution testing, the \textit{coin problem} \cite{beyond_lc}. Given a coin with a hidden bias $p$, the $(p_0,q_0)$-coin problem asks the learner to determine whether the bias $p$ is at most $p_0$, or at least $q_0$. The coin problem was one of the first questions studied in algorithmic replicability and plays a critical role as a subroutine in later works. Nevertheless, there is a still gap in the best known bounds:
\begin{theorem}[\cite{impagliazzo2022reproducibility,KVYZ23}]\label{thm:old-coin}
    Let $p_0,q_0 \in (0,1/2)$ and $\rho \in (0,1)$, there is a computationally efficient $\rho$-replicable algorithm for the $(p_0,q_0)$-coin problem using
    \[
    \tilde{O}\left(\frac{1}{(q_0-p_0)^2\rho^2}\right)
    \]
    samples. Conversely, any algorithm for the $(p_0,q_0)$-coin problem uses at least
    \[
    \Omega\left(\frac{p_0}{(q_0-p_0)^2\rho^2}\right)
    \]
    samples in the worst-case.
\end{theorem}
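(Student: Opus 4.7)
The plan is to handle the upper and lower bounds separately via well-established techniques. The upper bound is achieved by the classical \emph{randomized threshold} algorithm, a staple of the replicability literature, while the lower bound combines the standard distribution-testing bound for distinguishing biased coins with a quantitative penalty for replicability.

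For the upper bound, I would estimate the bias $p$ empirically and then output a decision based on a randomized threshold. Concretely, draw $m = \tilde{O}(1/((q_0-p_0)^2\rho^2))$ samples and let $\hat{p}$ denote the empirical mean. By a Hoeffding bound, with probability at least $1-\delta$ we have $|\hat p - p| \leq \eps$ for $\eps = c\rho(q_0-p_0)$, where the constant $c$ can be chosen small. The internal randomness $r$ then supplies a uniformly sampled threshold $\tau \in [p_0 + (q_0-p_0)/4,\; q_0 - (q_0-p_0)/4]$, an interval of width $(q_0-p_0)/2$. The algorithm rejects $h_0$ iff $\hat p \geq \tau$. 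Correctness is immediate from Hoeffding since $\hat p$ is bounded away from $\tau$ on either side of the gap. For replicability, couple two independent runs using the same $\tau$: they disagree exactly when $\tau$ lies between $\hat p_1$ and $\hat p_2$, and since $|\hat p_1 - \hat p_2|\leq 2\eps$ with high probability, the conditional probability over $\tau$ is at most $4\eps/(q_0-p_0) = O(\rho)$, delivering the desired bound.

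For the lower bound, the strategy is to combine the nonreplicable testing lower bound with a dichotomy forced by replicability. Fix a $\rho$-replicable algorithm $\innerAlg$ and, for each randomness string $r$, let $\alpha(r)$ (resp.\ $\beta(r)$) denote the probability that $\innerAlg$ outputs \reject{} on a coin with bias $p_0$ (resp.\ $q_0$). Replicability implies $\Ep_r[\alpha(r)(1-\alpha(r))] = O(\rho)$ and likewise for $\beta$, so for typical $r$ the quantities $\alpha(r),\beta(r)$ are each near $0$ or $1$. Correctness meanwhile forces $\Ep_r[\alpha(r)]$ small and $\Ep_r[\beta(r)]$ near $1$, so on a $1-O(\rho)$ fraction of $r$'s the two values lie on opposite ends of $[0,1]$. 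This produces a \emph{deterministic-in-$r$} tester distinguishing the two coins with constant advantage. Now apply the standard $\chi^2$/Hellinger distance bound: to distinguish a $p_0$-coin from a $q_0$-coin with $m$ samples at constant advantage requires $m = \Omega(p_0/(q_0-p_0)^2)$. Repeating the argument at advantage scale $\rho$ (via a tensorization/direct sum trick, or by invoking the connection between replicability and differential privacy as in \cite{impagliazzo2022reproducibility}) yields the additional $1/\rho^2$ factor.

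The main obstacle is the lower bound argument: carefully extracting the $1/\rho^2$ from replicability, rather than a weaker $1/\rho$, requires a quantitative version of the ``decision is nearly a function of $r$'' dichotomy above, and must be compatible with the asymmetric $p_0$-vs.-$q_0$ variance scaling. One clean route is to reduce to a statistical query lower bound by showing any $\rho$-replicable tester induces an SQ oracle with tolerance $O(\rho)$, and then invoking the tight SQ bound for the coin problem. The upper bound analysis is routine once the randomized threshold is written down; the only subtlety is bookkeeping of the $\log(1/\delta)$ factors hidden in the $\tilde{O}(\cdot)$.
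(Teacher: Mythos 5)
Your upper bound is essentially correct and matches the paper's (and prior work's) approach: the randomized-threshold algorithm you describe is precisely what the replicable statistical-query oracle $\mathrm{rSTAT}$ of \cite{impagliazzo2022reproducibility} does under the hood, which is how the paper proves the analogous bound in \Cref{thm:r-stat-hypothesis-testing} (Appendix~\ref{app:missing-proofs-hypothesis-testing}). The bookkeeping of the $\log(1/\delta)$ factors and the choice $\eps = c\rho(q_0-p_0)$ are routine and work out.

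Your lower bound has a genuine gap. The dichotomy argument you sketch---that replicability forces $\alpha(r), \beta(r)$ near $\{0,1\}$ for most $r$, yielding a deterministic-in-$r$ distinguisher---is correct as far as it goes, but it concerns only the two endpoints $p_0, q_0$, and the two-point Hellinger bound applied to a fixed deterministic distinguisher gives $m = \Omega\left(p_0/(q_0-p_0)^2\right)$ \emph{with no dependence on $\rho$ at all}. You acknowledge this and gesture at two fixes, but neither closes the gap. ``Tensorization/direct sum'' is not specified: there is no independent repetition structure here to tensorize over, since you are lower-bounding a single coin problem. The SQ reduction is also not available: a yes/no tester on $(p_0,q_0)$ does not induce an SQ oracle with quantitative tolerance $O(\rho)$---to extract a bias estimate you would binary-search, which goes in the wrong direction for a lower bound and is in any case circular (the replicable SQ lower bound in \cite{impagliazzo2022reproducibility} is itself proved via the coin problem). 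The missing idea is that the hard instance is a \emph{continuum} of biases $p$ sampled uniformly from $(p_0,q_0)$, not just the two endpoints. Conditioned on a typical good $r$, one combines (i) the intermediate value theorem to find $p_r \in (p_0,q_0)$ with $\Pr[\innerAlg(S_{p_r};r)=\accept] = 1/2$, and (ii) a Lipschitz-type bound on the acceptance probability $f_r(p)$---either the derivative bound of \cite{impagliazzo2022reproducibility} or the mutual-information bound this paper develops (\Cref{lemma:one-coin-mutual-info-bound}, \Cref{clm:I_r-exist})---to extend $p_r$ to a non-replicable interval $I_r$ of width $\Omega(\sqrt{p_0/m})$. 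The adversary's uniformly random $p$ lands in $I_r$ with probability $\Omega\bigl(\sqrt{p_0/m}/(q_0-p_0)\bigr)$, and forcing this to be $O(\rho)$ yields $m = \Omega\bigl(p_0/((q_0-p_0)^2\rho^2)\bigr)$. Without the continuum and the width bound on $I_r$, the $1/\rho^2$ factor is not reachable by the route you describe.
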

We tighten \Cref{thm:old-coin} in two key aspects. First, 
we resolve the gap in sample dependence on $p_0$ and $q_0$ in the numerator. 
Second, we address a more subtle issue regarding \Cref{thm:old-coin}'s dependence on $\rho$. In particular, we argue that while quadratic dependence on $\rho$ is indeed necessary in the worst-case, \textit{in expectation} the dependence can actually be reduced to \textit{linear}.
\begin{theorem}[Informal \Cref{thm:r-adapt-coin-problem} and \Cref{thm:q0-num-lower-bound}]\label{thm:intro-1-coin}
     Let $p_0,q_0 \in (0,1/2)$ and $\rho \in (0,1)$. The $\rho$-replicable $(p_0, q_0)$-coin problem coin problem requires
    \[
    \tilde{\Theta}\left(\frac{q_0}{(q_0 - p_0)^2 \rho}\right)
    \]
    samples in expectation. 
    Moreover, the same bound holds in the worst-case with quadratic dependence on $\rho$ and the upper bound is computationally efficient.
\end{theorem}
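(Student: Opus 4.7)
For the upper bound I would build a multi-phase adaptive algorithm leveraging shared randomness. Both runs sample a common threshold $t \sim \UnifD{[p_0, q_0]}$ and, for $i = 0, 1, \ldots, L$ with $L = O(\log(1/\rho))$, process cumulative sample batches of size $n_i = 2^i \cdot \tO{q_0/((q_0-p_0)^2\rho)}$ to obtain empirical biases $\hat p_i$ of standard deviation $\sigma_i = O(\sqrt{q_0/n_i})$. For $i < L$, if $|\hat p_i - t| > C \sigma_i \sqrt{\log(1/\rho)}$ the algorithm commits to the side of $t$ indicated by $\hat p_i$; otherwise it advances to phase $i+1$. In the final phase $L$, the algorithm unconditionally outputs based on whether $\hat p_L > t$: at that scale $\sigma_L = O((q_0 - p_0)\rho)$, so this is already $\rho$-replicable via the classical random-threshold trick of \cite{impagliazzo2022reproducibility}, which also yields the worst-case upper bound as a byproduct.

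\textbf{Replicability and expected cost.} Within any fixed phase, both runs use independent samples but share $t$, so a disagreement requires their empirical means to land on opposite sides of $t$ at distance $\Omega(\sigma_i \sqrt{\log(1/\rho)})$---an event of probability $\poly{\rho}$ by Chernoff. The delicate ``mixed'' case, where one run commits in phase $i$ while the other advances, is handled by noting that commitment certifies $|p - t| = \Omega(\sigma_i \sqrt{\log(1/\rho)}) \gg \sigma_j \sqrt{\log(1/\rho)}$ for all $j > i$, so the advancing run's later estimate lands on the same side of $t$ with probability $1 - \poly{\rho}$. Summing failure probabilities across the $L$ phases keeps total non-replicability below $\rho$. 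For expected samples, reaching phase $i$ requires $t$ to lie within $O(\sigma_{i-1} \sqrt{\log(1/\rho)})$ of the true bias $p$, so $\Pr[\text{reach phase } i] = \tO{\sqrt{\rho/2^{i-1}}}$; combining with marginal cost $n_i - n_{i-1} = \Theta(2^{i-1} n_0)$ and summing over $i$ gives $\Ep[\text{samples}] = \tO{n_0} = \tO{q_0/((q_0-p_0)^2 \rho)}$.

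\textbf{Lower bound and main obstacle.} For each fixed random tape $r$, define $g_r(p) := \Pr_S[\innerAlg(S; r) = \reject \mid \text{bias } p]$ and $f(p) := \Ep_r[g_r(p)]$. Replicability enforces $\Ep_r[g_r(p)(1 - g_r(p))] \le \rho/2$ for every $p$, while the power requirement gives $f(p_0) \le \delta$ and $f(q_0) \ge 1 - \delta$. Using that the squared Hellinger distance between $n$-fold Bernoulli products with biases $p, p' \le q_0 \le 1/2$ scales as $\Theta(n(p-p')^2/q_0)$, data processing yields $|g_r(p) - g_r(p')| = O(\sqrt{n(p-p')^2/q_0})$ for each $r$; combining this Lipschitz-type bound with the pointwise replicability constraint and integrating from $p_0$ to $q_0$ (following the schematics of \cite{KVYZ23}) forces $\Ep[n] \cdot \rho = \Omega(q_0/(q_0-p_0)^2)$ up to log factors, and dropping the expectation over $r$ recovers the quadratic-in-$\rho$ worst-case bound. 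The main obstacle is the mixed-execution coupling in the upper bound: without the argument that commitment certifies a safe distance between $p$ and $t$, one loses a $\sqrt{\log(1/\rho)}$ factor per phase and blows the replicability budget. Extracting $1/\rho$ rather than $1/\rho^2$ scaling in expectation also requires an amortization argument that the algorithm can only spend its full sample budget on a $\rho$-fraction of random tapes, which is precisely what the multi-phase structure is engineered to make tight.
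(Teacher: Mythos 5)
Your upper bound is essentially the paper's `rAdaptiveCoinTester' (Algorithm~\ref{alg:r-adapt-coin-problem}): share a random threshold, advance through geometrically growing sample sizes, and commit early once the empirical bias is safely on one side of the threshold. The cumulative-vs-fresh batch distinction is cosmetic (fresh batches, as in the paper, only cost a constant factor and make each phase's estimate independent, which slightly simplifies the replicability accounting), and your handling of the mixed-phase case and the amortized expectation calculation match the paper's.

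The lower bound has the right skeleton --- a per-tape function $g_r(p)$, an information-theoretic Lipschitz bound, and a measure argument showing a non-trivial fraction of biases are non-replicable --- but there is a real gap in the regime $p_0 \ll q_0$. Your claimed rate ``squared Hellinger $\Theta(n(p-p')^2/q_0)$'' is not correct uniformly over $p, p' \le q_0$: the true local rate is $\Theta(n(p-p')^2/\min(p,p'))$, so near $p_r$ the non-replicable interval has length only $\Theta(\sqrt{p_r/n})$, not $\Theta(\sqrt{q_0/n})$. Since the balanced point $p_r$ can sit anywhere in $(p_0, q_0)$, integrating against the uniform measure on $(p_0,q_0)$ only yields $n = \Omega(p_0/((q_0-p_0)^2\rho^2))$ --- recovering the old lower bound, not the claimed $q_0$ numerator. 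The paper's Theorem~\ref{thm:q0-num-lower-bound} handles this with a second construction (Case~2, $p_0 < q_0/100$): the adversary's distribution is uniform over the geometric grid $P = \{p_0, g(p_0), g^{(2)}(p_0), \ldots\}$ with $g(x) = x + c\sqrt{x/m}$, whose spacing at bias $p$ tracks the local resolution $\sqrt{p/m}$. This guarantees every non-replicable interval $I_r$ (of length $\Omega(\sqrt{p_r/m})$) contains a grid point, while a dyadic counting argument bounds $|P| = O(\sqrt{mq_0})$, so the fraction of non-replicable grid biases is $\Omega(1/\sqrt{mq_0})$, giving $m = \Omega(q_0/((q_0-p_0)^2\rho^2))$. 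Your integration-against-Lebesgue scheme would need to be replaced by this (or an equivalent reweighting) to close the $p_0 \ll q_0$ case; otherwise, switching from mutual information (Lemma~\ref{lemma:one-coin-mutual-info-bound}) to Hellinger is a fine, essentially interchangeable choice.
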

A few remarks are in order. First, we note that the linear overhead of \Cref{thm:intro-1-coin} is not specific to the coin problem. In \Cref{sec:adaptive-replicability} we give a generic amplification lemma showing \textit{any} replicable procedure can be performed with linear overhead (in $\rho$) in expectation. Second, we remark that as an immediate consequence of \Cref{thm:intro-1-coin} we obtain a generic procedure to efficiently transform any \textit{non}-replicable distribution testing algorithm into a replicable one with linear expected overhead. In particular, let $\hypotheses_0, \hypotheses_1$ be two families of distributions and suppose some distribution testing algorithm $\innerAlg$ accepts samples from distributions $\distribution \in \hypotheses_0$ with probability at most $\frac{1}{3}$ and rejects samples from distributions 
$\distribution \in \hypotheses_1$ with probability at most $\frac{2}{3}$. We may view the output of $\innerAlg$ as a biased coin and apply \Cref{thm:intro-1-coin} to replicably determine membership in $\hypotheses_0$ or $\hypotheses_1$ with high probability. This gives replicable algorithms for a wide range of distribution testing problems including uniformity, closeness, independence, log-concavity, and monotonicity.

For simplicity of presentation, in the rest of the introduction we state only worst-case sample complexity with quadratic dependence on $\rho$. Up to polylog factors, all our bounds can equivalently be stated in terms of expected complexity with linear dependence.





\subsubsection{Replicability and Isoperimetry in High Dimensional Statistics (\texorpdfstring{\Cref{sec:isoperimetry-from-replicability}}{})}
\label{sec:intro:direct-sum-theorems}

In many applications, scientists may wish to perform multiple experiments simultaneously; an epidemiologist, for instance, may want to determine the prevalence of several diseases or conditions in a population at once. Consider a setting in which a scientist runs $N$ simultaneous hypothesis tests. In the context of replicability, we'd like to ensure that all $N$ findings are simultaneously replicable---how does the cost of this guarantee scale with $N$ and $\rho$?

Like single hypothesis testing, such a \textit{multi}-hypothesis test is equivalent to the problem of testing biases of multiple coins (typically called the $N$-Coin Problem). In this section, we study the more general problem of \textit{high dimensional mean estimation}. In particular, given sample access to a distribution $\distribution$ over $\R^N$, how many samples are required to $\rho$-replicably output an estimate $\hat{\mu}$ s.t.
\[
\Pr_{S \sim D}[||\hat{\mu}-\mu_{\distribution}||_p \geq \varepsilon] \leq \delta \, ?
\]
We say such an algorithm $(\varepsilon,\ell_p)$-learns the mean $\mu_D$ and refer to the problem of giving such an estimator as the \textit{$(\varepsilon,\ell_p)$-mean estimation problem}. We will always assume the distribution $\distribution$ has bounded covariance. Up to log factors, the $N$-Coin problem is the special case where $\distribution$ is the product of $N$ independent Bernoullis and $p=\infty$ (see \Cref{lemma:l-inf-tester-implies-learner}).

Our core contribution is that replicable mean estimation (and therefore multihypothesis testing) is \textit{computationally and statistically equivalent} to the construction of (approximate) low-surface area tilings of space. To state this more formally, first consider the notion of an approximate tiling:

\begin{definition}[Isoperimetric Approximate Tilings (Informal \Cref{def:approximate-tiling})]
\label{def:informal-tiling}
    A $(\gamma, A)$-\iat{} (IAT) of $\R^N$ is a collection of sets $\mathcal{P}=\{P\}$ such that for any cube $\cube \subset \R^N$
    \begin{enumerate}
        \item ($\gamma$-Approximate Volume): 
        $
        \vol(\mathcal{P} \cap \cube) \geq (1-\gamma)\vol(\cube).
        $
        \item ($A$-Approximate Isoperimetry): 
        $
        \area(\partial \mathcal{P} \cap \cube) \leq A \vol(\cube).
        $
        \item (Bounded Diameter): Each $P \in \mathcal P$ has diameter at most $1$.
    \end{enumerate}
    We call $\mathcal{P}$ \textbf{efficient} if there is an efficient 
    membership oracle $\mathcal{O}: \R^N \to \mathcal P$ such that for any $P \in \mathcal{P}$ and $w \in P$, $\mathcal{O}(w)=P$ with high probability.
\end{definition}
In other words, a good approximate tiling covers `most' of $\R^N$ with diameter $1$ bubbles with low surface-area to volume ratio. We prove the sample complexity of replicable mean estimation tightly corresponds to the surface area of an associated tiling, and moreover that there are oracle-efficient reductions between the two. We state the theorem below only for the case of $\ell_2$-estimation, but will discuss its implications and variants for any $p \in [2,\infty]$ shortly.
\begin{theorem}[Replicability $\iff$ Isoperimetry (Informal \Cref{thm:tiling-to-replicable} and \Cref{thm:replicable-non-uniform-tiling-formal})]
    \label{thm:replicable-non-uniform-tiling}~
    \begin{enumerate}
        \item (Replicability $\to$ Isoperimetry): Let $\mathcal{A}$ be a $\rho$-replicable algorithm on $m$ samples that $(\varepsilon,\ell_2)$-learns the mean of $N$ independent Bernoulli variables. Given oracle access to $\innerAlg$, there is an efficient algorithm generating an efficient $N$-dimensional $\left(\rho, O(\eps\rho\sqrt{m}) \right)$-IAT.
        \vspace{.2cm}
        \item (Isoperimetry $\to$ Replicability): Let $\mathcal{P}$ be an $N$-dimensional $(\rho,A)$-IAT. Given access to $\mathcal{P}$'s membership oracle and sample access to a bounded covariance ditribution $\mathcal{D}$ over $\R^N$, there is an efficient $\rho$-replicable algorithm that $(\varepsilon,\ell_2)$-learns $\mu_\mathcal{D}$ in $O(\frac{A^2}{\varepsilon^2\rho^2})$ samples.\footnote{This statement assumes $\delta \geq 2^{-N}$ for simplicity. The true bound is $O\lp(\frac{A^2}{\varepsilon^2\rho^2}+\frac{A^2\log\frac{1}{\delta}}{N\varepsilon^2\rho^2}\rp)$.}
    \end{enumerate}
\end{theorem}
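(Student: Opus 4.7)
The plan is to handle the two directions of the theorem separately, using the empirical mean as a bridge between sample distributions and points in $\R^N$.

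\textbf{Isoperimetry to Replicability.} The algorithm is the natural ``empirical mean plus random shift'' construction: rescale the IAT $\mathcal{P}$ by $\varepsilon$ so that each cell has diameter at most $\varepsilon$, let the internal randomness $r$ fix a uniform shift $v \in [0,\varepsilon]^N$ of the rescaled tiling, draw $m = \Theta(A^2/(\varepsilon^2\rho^2))$ samples, compute the empirical mean $\hat{\mu}$, and output the representative of the (shifted, rescaled) cell containing $\hat{\mu}$ via the membership oracle $\calO$. Accuracy follows from Chebyshev under bounded covariance ($\|\hat{\mu}-\mu\|_2 = O(\varepsilon)$ w.h.p.) together with the $O(\varepsilon)$ diameter bound, while the $\gamma$-approximate volume property absorbs at most $\rho$ into the failure probability. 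For replicability, two independent runs satisfy $\|\hat{\mu}_1-\hat{\mu}_2\|_2 = O(\varepsilon\rho/A)$ with high probability, and the outputs disagree only if the segment $[\hat{\mu}_1,\hat{\mu}_2]$ crosses a boundary of the shifted tiling. Integrating over $v$ via a Cauchy-Crofton style argument bounds the crossing probability by $\|\hat{\mu}_1-\hat{\mu}_2\|_2 \cdot (A/\varepsilon) = O(\rho)$, using the $A$-approximate isoperimetry of $\mathcal{P}$.

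\textbf{Replicability to Isoperimetry.} Given a $\rho$-replicable estimator $\innerAlg$, I build the tiling by reading cells off $\innerAlg$'s output. Fix the internal randomness $r$ and define the canonical function $\phi_r(p) := \arg\max_v \Pr_{S \sim D_p^m}[\innerAlg(S;r) = v]$, where $D_p$ is the $N$-dimensional Bernoulli product with mean $p$; the level sets $\phi_r^{-1}(v)$ partition $[0,1]^N$. Since $\innerAlg$ is $\varepsilon$-accurate in $\ell_2$, each cell has $p$-diameter at most $2\varepsilon$, so rescaling $w := p/\varepsilon$ brings cells to diameter $O(1)$ in $\R^N$. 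For approximate volume, unfolding $\rho$-replicability gives $\Ep_r[1 - q_r(p)] \leq \rho$ where $q_r(p) := \Pr_S[\innerAlg(S;r) = \phi_r(p)]$ is the mode probability; the set of $p$ where no value dominates (say $q_r(p) < 3/4$) has expected measure $O(\rho)$ and is excluded to obtain $\gamma = O(\rho)$. The efficient membership oracle simulates $\innerAlg(\cdot;r)$ on $O(\log(1/\delta))$ fresh samples at the query point and returns the plurality output.

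\textbf{The main obstacle} is the surface-area bound $A = O(\varepsilon\rho\sqrt{m})$. The starting point is the quantitative mode-stability estimate $\Pr_r[\phi_r(p) \neq \phi_r(p')] \leq O(\sqrt{m}\|p-p'\|_2) + O(\rho)$, obtained by coupling $D_p^m$ and $D_{p'}^m$ (TV distance $O(\sqrt{m}\|p-p'\|_2)$) and combining with the replicability bound on mode concentration. Translating this pointwise estimate into a surface density is the delicate step: naively applying a co-area / Cauchy-Crofton limit picks up the additive $\rho$ term in a way that does not vanish as the chord shrinks. To handle this I plan to work with the coupled partition $\psi_{r,U}(p) := \innerAlg(S(p);r)$, where $S(p)$ is generated from a fixed $U \sim \UnifD{[0,1]^{mN}}$ via inverse-CDF coupling, so that $\psi_{r,U}$ is piecewise constant on an explicit arrangement of $mN$ axis-aligned hyperplanes. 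The expected surface area in any cube then reduces to counting expected label flips of $\innerAlg$ as $p$ sweeps across individual hyperplanes of $U$, which is controlled by a replicability-induced noise-stability bound (informally, a $\rho$-replicable algorithm is $\rho$-stable under resampling a single bit of $S$), yielding $O(\rho\sqrt{m})$ surface density per unit $p$-volume. Rescaling back to unit-diameter cells then produces $A = O(\varepsilon\rho\sqrt{m})$ as claimed.
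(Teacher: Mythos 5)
Your proposal has the right high-level architecture (use the replicable algorithm to read off a mode partition; use the tiling plus a randomized rounding to build a replicable estimator), but both directions have a genuine gap that the paper closes with an idea you did not use.

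\textbf{Isoperimetry $\to$ Replicability.} Your ``empirical mean plus random shift'' scheme cannot reach the stated sample complexity because it omits the random rotation. With covariance $\preceq I$, an estimator on $m$ vector samples gives $\|\hat{\mu}_1 - \hat{\mu}_2\|_2 = O(\sqrt{N/m})$, so with $m = \Theta(A^2/(\varepsilon^2\rho^2))$ you only get $\|\hat{\mu}_1 - \hat{\mu}_2\|_2 = O(\sqrt{N}\,\varepsilon\rho/A)$, not $O(\varepsilon\rho/A)$ as you claim. Since your needle $[\hat\mu_1,\hat\mu_2]$ points in a fixed, possibly worst-case direction relative to the tiling, a random shift alone gives crossing probability $O(\|\hat\mu_1-\hat\mu_2\|_2 \cdot A/\varepsilon) = O(\sqrt{N}\,\rho)$, so the scheme as stated is only $O(\sqrt{N}\,\rho)$-replicable. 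The paper's Algorithm~\ref{alg:r-rounding} applies a uniformly random rotation $R \sim \Uni(\SO(N))$ before the shift (and wraps around a large cube so the transformed point is exactly uniform). After rotation the difference $R(\hat\mu_1 - \hat\mu_2)$ points in a uniformly random direction, and Buffon's needle theorem (\Cref{lem:expected-torus-surface-intersection}) produces the crucial extra $O(1/\sqrt{N})$ factor. That is exactly what recovers the $\sqrt{N}$ you are short, and it is the central new idea of \Cref{prop:replicable-rounding}; without it the argument is a factor of $N$ off in samples.

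\textbf{Replicability $\to$ Isoperimetry.} Your cell construction (mode level sets of $\phi_r$, intersected with a ball to control diameter, with the low-confidence $q_r(p)<3/4$ region thrown away) matches the paper's $F_{\hat p}$ and $G_{\hat p}$. You correctly diagnose that a naive co-area limit loses because the additive $O(\rho)$ term in mode stability does not vanish with the chord length. But the proposed fix via the inverse-CDF coupled partition $\psi_{r,U}$ does not close the gap, for two reasons. First, the boundary of the mode partition $\phi_r$ is not a subset of the hyperplane arrangement defined by $U$: $\phi_r(p)$ is an argmax over the output \emph{distribution}, not the output on any single realization, so the cells of $\phi_r$ are not unions of the cells of $\psi_{r,U}$, and bounding the surface area of the refined arrangement tells you nothing directly about $\partial F_{\hat p}$. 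Second, the step ``a $\rho$-replicable algorithm is $\rho$-stable under resampling a single bit of $S$'' is not a consequence of replicability as stated; replicability controls the effect of resampling \emph{all} of $S$, and distributing that budget over $mN$ individual bits requires a Fourier/influence decomposition argument you have not provided (and which would in fact be the heart of the proof if that route worked). The paper instead handles the additive $\rho$ by a thickening argument (see the proof of \Cref{thm:replicable-implies-isoperimetry}): for $R = c/\sqrt{m}$ and each $\ell \in [0,R]$ define $G_{\hat p, \ell} = (F_{\hat p}\cap B_\varepsilon(\hat p)) + B_\ell$, observe via the mutual-information Lipschitz bound (\Cref{lemma:replicable-lipschitz}) that every point of $\partial G_{\hat p,\ell}$ is non-replicable or incorrect for the fixed good $r$, and that the boundaries are pairwise disjoint across all $(\hat p,\ell)$. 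Then
\begin{equation*}
R \cdot \inf_{\ell} \area\Bigl(\bigcup_{\hat p}\partial G_{\hat p,\ell}\cap\cube\Bigr) \leq \int_0^R \area\Bigl(\bigcup_{\hat p}\partial G_{\hat p,\ell}\cap\cube\Bigr)\,d\ell = \vol\Bigl(\bigcup_{\ell,\hat p}\partial G_{\hat p,\ell}\cap\cube\Bigr) \leq O(\rho)\vol(\cube),
\end{equation*}
so some slice $\ell^*$ has surface area $O(\rho\sqrt{m})\vol(\cube)$. The additive $\rho$ simply becomes the bound on the total bad volume rather than an irreducible per-chord term, and no co-area limit is taken. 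You will want to adopt this integral-over-thickenings trick rather than the coupled-partition route.
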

A few remarks are in order. First, notice the surface area and sample complexity in \Cref{thm:replicable-non-uniform-tiling} `match' up to constant factors. That is starting with an $m$-sample algorithm we get an IAT with surface area $O(\varepsilon\rho\sqrt{m})$. Starting with a surface area $O(\varepsilon\rho\sqrt{m})$-IAT, we get an algorithm on $O(\frac{(\varepsilon\rho\sqrt{m})^2}{\varepsilon^2\rho^2})=O(m)$ samples. Second, we note the forward direction above really only relies on the family of input distributions satisfying certain mutual information bounds (see \Cref{lemma:gaussian-mutual-info-bound}), and therefore also holds e.g.\ for standard Gaussians.

By the isoperimetric inequality, the best possible surface area for an isoperimetric approximate tiling is $A=\Omega(N)$, while simply tiling space by cubes achieves $A=O(N^{3/2})$.\footnote{This comes from the diameter restriction. To have diameter $1$, the cubes must be of side-length $\frac{1}{\sqrt{N}}$.} Moreover, constructions of isoperimetric tilings, that is $(0,O(N))$-IATs, have existed since the $50$'s \cite{rogers50}.
Combined with \Cref{thm:replicable-non-uniform-tiling}, these facts lead to a tight statistical characterization of replicable mean estimation:
\begin{corollary}[Replicable $\ell_2$ Mean Estimation (Informal \Cref{thm:replicable-alg-partition-lb} and \Cref{cor:ell-2-mean-estimation}]\label{cor:intro-mean-est-2}
    Let $\varepsilon,\rho \in (0,1)$. The $\rho$-replicable $(\varepsilon,\ell_2)$-mean-estimation problem requires    
    \[
    \Theta\left(
    \frac{N^2}{\varepsilon^2\rho^2}\right)
    \]
    samples. Moreover, the lower bound holds even under Bernoulli or Gaussian distributions. 
\end{corollary}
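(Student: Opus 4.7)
The plan is to apply the equivalence theorem (\Cref{thm:replicable-non-uniform-tiling}) in both directions, using the classical isoperimetric inequality for bounded-diameter bodies to close the loop.

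\textbf{Upper bound.} I would first invoke the classical Rogers construction \cite{rogers50} (or Butler \cite{butler1972simultaneous}), which produces an exact tiling of $\R^N$ by bounded-diameter bodies whose surface-area to volume ratio is $O(N)$. Rescaling so that each piece has diameter at most $1$ yields a $(0, O(N))$-IAT in the language of \Cref{def:informal-tiling}. Feeding this into the second half of \Cref{thm:replicable-non-uniform-tiling} produces a $\rho$-replicable $(\varepsilon,\ell_2)$-mean estimator on bounded-covariance distributions using
\[
O\!\lp \frac{A^2}{\varepsilon^2\rho^2} \rp \;=\; O\!\lp \frac{N^2}{\varepsilon^2 \rho^2} \rp
\]
samples, as desired.

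\textbf{Lower bound.} In the other direction, suppose $\innerAlg$ is any $\rho$-replicable algorithm that $(\varepsilon,\ell_2)$-learns the mean of a product of $N$ independent Bernoullis using $m$ samples. The first half of \Cref{thm:replicable-non-uniform-tiling} converts $\innerAlg$ into a $(\rho, O(\varepsilon \rho \sqrt{m}))$-IAT of $\R^N$. By the classical isoperimetric inequality applied piecewise to bodies of diameter at most $1$ (the extremal case being the unit-diameter ball, whose surface-to-volume ratio is $\Omega(N)$ by Stirling), the surface-area density of \emph{any} such tiling must satisfy $A = \Omega(N)$, provided the uncovered-mass parameter $\rho$ is at most a small enough constant---which we can ensure by standard boosting. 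Combining the two bounds gives $\varepsilon \rho \sqrt{m} = \Omega(N)$, i.e.\ $m = \Omega(N^2/(\varepsilon^2 \rho^2))$. The analogous statement for Gaussian inputs follows from the mutual-information-based variant of the forward direction noted in the remarks following \Cref{thm:replicable-non-uniform-tiling}.

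\textbf{Main obstacle.} Everything above is a direct pipeline through \Cref{thm:replicable-non-uniform-tiling} together with off-the-shelf facts, so the only step demanding genuine care is establishing the isoperimetric lower bound $A = \Omega(N)$ for a \emph{$(\rho, A)$-approximate} tiling rather than an exact one. Exact tilings by diameter-$1$ pieces give $A = \Omega(N)$ by applying the standard isoperimetric inequality to each piece and summing inside a large reference cube $\cube$. For approximate tilings I would argue that an uncovered $\rho$-fraction of $\cube$ cannot meaningfully reduce the total boundary inside $\cube$: the uncovered regions are themselves bounded by pieces of $\partial \mathcal{P}$, so the aggregate surface-to-volume ratio over covered regions still inherits the piecewise lower bound up to a factor of $(1-\rho)$. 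Thus for $\rho$ bounded away from $1$, the $\Omega(N)$ bound is preserved up to constants, completing the lower bound.
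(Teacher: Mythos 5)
Your proposal is correct and follows essentially the same route as the paper: the upper bound instantiates the tiling-to-replicability direction with a $(0, O(N))$-IAT, and the lower bound pushes the algorithm through the replicability-to-tiling direction and invokes isoperimetry to force $A = \Omega(N)$, hence $\varepsilon\rho\sqrt{m} = \Omega(N)$. The one technical shortcut the paper takes is to apply isoperimetry directly to the cube partition of \Cref{thm:replicable-implies-isoperimetry} (handling the cube boundary with \Cref{lemma:boundary-w/o-cube}) rather than first extending to a full IAT of $\R^N$ and re-establishing $A = \Omega(N)$ for approximate tilings, which sidesteps the ``main obstacle'' you flag; that said, the averaging-over-a-large-reference-cube estimate you sketch for the obstacle is sound, and the relevant bound ($A \geq (8 - 10\gamma)N$) is already derived in the paper's \Cref{prop:extend-partition-translation}.
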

\Cref{cor:intro-mean-est-2} resolves (in the negative) \cite[Open Question 4]{DBLP:conf/stoc/BunGHILPSS23} regarding whether estimation can be performed in $O(N)$ samples, as well as the $\ell_2$-variant of \cite{KVYZ23}'s question regarding the complexity of the $N$-Coin Problem. 

\paragraph{Computational Efficiency:} \Cref{thm:replicable-non-uniform-tiling} and \Cref{cor:intro-mean-est-2} leave two important questions: what can we say about computational efficiency, and to what extent does the above hold for norms beyond $\ell_2$? Toward the former, unfortunately all known isoperimetric tilings have membership oracles that run in (at best) \textit{exponential} time, so the above algorithms are not efficient. The best known tiling with an efficient membership oracle, a lattice-based construction of Micciancio \cite{micciancio2004almost}, only manages to shave a log factor. Nevertheless, this gives the first efficient algorithm for replicable mean estimation with (slightly) sub-cubic sample complexity. 
\begin{corollary}[Efficient Mean Estimation in Sub-Cubic Samples (Informal \Cref{cor:sub-cubic})]
    Let $\varepsilon, \rho \in (0,1)$. There is an efficient $\rho$-replicable algorithm for $(\varepsilon,\ell_2)$-mean-estimation using
    \[
    O\left(\frac{N^3}{\rho^2\varepsilon^2}\cdot \frac{\log\log(N)}{\log(N)}\right)
    \]
    samples.
\end{corollary}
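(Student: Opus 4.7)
The plan is to deduce this corollary as a direct application of the Isoperimetry $\to$ Replicability direction of \Cref{thm:replicable-non-uniform-tiling}, instantiated with a known efficient (but only slightly sub-trivial) isoperimetric approximate tiling. The second direction of the equivalence theorem says that given oracle access to an $N$-dimensional $(\rho, A)$-IAT, we obtain an efficient $\rho$-replicable $(\varepsilon, \ell_2)$-mean-estimation algorithm using $O(A^2/(\varepsilon^2 \rho^2))$ samples (ignoring lower-order terms in $\log(1/\delta)$). So the entire task reduces to supplying an efficient $(\rho, A)$-IAT with $A^2 = O\!\left( N^3 \cdot \tfrac{\log \log N}{\log N}\right)$.

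The first step is to recall the construction due to Micciancio \cite{micciancio2004almost} of a lattice in $\R^N$ whose Voronoi cells are (essentially) translates of a common fundamental domain with diameter $\le 1$ and whose surface-area-to-volume ratio beats the trivial cube tiling by a $\sqrt{\log N / \log \log N}$ factor. Concretely, while tiling by axis-aligned cubes of side length $1/\sqrt{N}$ yields an exact $(0, O(N^{3/2}))$-IAT, Micciancio's lattice construction gives a $(0, A)$-IAT with $A = O\!\left( N^{3/2} \sqrt{\tfrac{\log \log N}{\log N}}\right)$. Moreover, the associated membership oracle, i.e., the map sending $w \in \R^N$ to its enclosing Voronoi cell, is computable in polynomial time using the lattice's basis, which satisfies \Cref{def:informal-tiling}'s efficiency requirement.

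The second step is to plug this tiling into the Isoperimetry $\to$ Replicability direction of \Cref{thm:replicable-non-uniform-tiling}. Since the tiling is exact ($\gamma = 0 \le \rho$) and isoperimetric with surface area $A$ as above, the resulting algorithm is efficient, $\rho$-replicable, and $(\varepsilon, \ell_2)$-learns the mean of any bounded-covariance distribution on $\R^N$ using
\[
O\!\left(\frac{A^2}{\varepsilon^2 \rho^2}\right) \;=\; O\!\left(\frac{N^3}{\varepsilon^2 \rho^2} \cdot \frac{\log \log N}{\log N}\right)
\]
samples, matching the claimed bound.

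The main obstacle is not conceptual but bookkeeping: one must verify that Micciancio's lattice construction can indeed be cast in the language of \Cref{def:informal-tiling} — in particular that (i) the Voronoi cells have bounded diameter after appropriate scaling, (ii) the surface-area bound on the lattice translates into a matching $A$-approximate isoperimetry bound against every cube $\cube \subset \R^N$ (and not just globally), and (iii) the polynomial-time membership oracle succeeds with high probability even on points near cell boundaries, which one can ensure by randomly shifting the lattice so that an input $w$ avoids the surface with probability $1-\rho$. Once these details are checked, the corollary is immediate from the two ingredients above.
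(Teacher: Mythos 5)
Your proposal is correct and follows exactly the paper's route: instantiate the Isoperimetry~$\to$~Replicability direction of \Cref{thm:tiling-to-replicable} with Micciancio's polynomial-time lattice construction, normalize so the covering radius is at most one (giving diameter-bounded Voronoi cells), invoke \Cref{lem:convex-surface-to-volume} to bound the surface-to-volume ratio by $N/\lambda = O\bigl(N^{3/2}\sqrt{\log\log N/\log N}\bigr)$, and conclude $O(A^2/(\varepsilon^2\rho^2)) = O\bigl(N^3\log\log N/(\log N\cdot\varepsilon^2\rho^2)\bigr)$ samples in the regime $N \gtrsim \log(1/\delta)$.

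One small correction to your bookkeeping point (iii): the random shift is not needed to make the membership oracle well-defined near cell boundaries. For a lattice Voronoi tiling the membership query is a deterministic CVP solve, and the union of cell boundaries has measure zero, so the oracle is already a valid (in fact deterministic) membership oracle in the sense of \Cref{def:tiling-membership-oracle-random}. The random shift and random rotation are already built into the rounding scheme inside the mean-estimation algorithm (\Cref{alg:r-rounding}), not into the oracle; attributing them to the oracle would double-count that randomization.
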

By the reverse direction of our reduction, any algorithm beating the above must imply improved efficient IATs. In the lattice setting, this problem has remained open since it was proposed in Micciancio's work \cite{micciancio2004almost}. We leave the construction of tilings satisfying our relaxed approximate notion as the main open question from this work.

\paragraph{Replicability Beyond the $\ell_2$-Norm:}

Finally, recall in the context of hypothesis testing we are really interested in learning biases in $\ell_\infty$ rather than $\ell_2$-norm. A version of the equivalence theorem indeed holds for general $\ell_p$-norms
as a consequence of the forward direction of the $\ell_2$ equivalence (\Cref{thm:replicable-non-uniform-tiling}), an $\ell_\infty$ learner based on IATs (\Cref{thm:ell-infty-mean}), and H\"{o}lder's inequality (\Cref{lemma:l-2-lb-implies-l-c-lb}).
\begin{corollary}[$\ell_p$-norm Replicability $\iff$ Tilings]\label{cor:p-norm-eq} 
Fix $p \in [2,\infty], \rho \in (0,1), \varepsilon \in (0, 0.1)$. Then:
        \begin{enumerate}
        \item (Replicability $\to$ Isoperimetry): 
        Let $\mathcal{A}$ be a $(\rho/24)$-replicable algorithm on $m$ samples that $(\frac{\varepsilon}{N^{\frac{1}{2}-\frac{1}{p}}},\ell_p)$-learns biases of $N$ Bernoulli variables. Given oracle access to $\innerAlg$, there is an efficient algorithm generating an efficient $N$-dimensional $\left(\rho, O \lp( \eps\rho\sqrt{m} \rp) \right)$-IAT.
        \vspace{.2cm}
        \item (Isoperimetry $\to$ Replicability): Let $\mathcal{P}$ be an $N$-dimensional $(\rho,A)$-IAT. Given $\mathcal{P}$'s membership oracle and sample access to a bounded covariance ditribution $\mathcal{D}$ over $\R^N$, there is an efficient $O(\rho)$-replicable algorithm that $(\varepsilon,\ell_p)$-learns $\mu_D$ in $\tilde{O}\lp(\frac{A^2}{N^{1-\frac{2}{p}}\varepsilon^2\rho^2}\rp)$ samples.
    \end{enumerate}
\end{corollary}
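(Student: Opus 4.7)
The plan is to derive both implications from the already-established $\ell_2$ equivalence (\Cref{thm:replicable-non-uniform-tiling}) and the $\ell_\infty$ mean-estimation bound (\Cref{thm:ell-infty-mean}), connecting them via the elementary H\"older comparisons
\[
\snorm{2}{x} \leq N^{1/2-1/p}\snorm{p}{x} \quad \text{and} \quad \snorm{p}{x} \leq N^{1/p}\snorm{\infty}{x},
\]
which hold for every $x \in \R^N$ and $p \in [2,\infty]$.

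For the forward direction, I would first note that any $(\eps/N^{1/2-1/p}, \ell_p)$-learner on $m$ samples is automatically an $(\eps, \ell_2)$-learner on the same samples: whenever the $\ell_p$-guarantee succeeds we have $\snorm{2}{\hat\mu-\mu} \leq N^{1/2-1/p}\snorm{p}{\hat\mu-\mu} \leq \eps$. Since replicability is a property of the algorithm's output distribution and is independent of which loss we evaluate, the induced $\ell_2$-learner remains $(\rho/24)$-replicable on $m$ samples. I would then plug it into Part 1 of \Cref{thm:replicable-non-uniform-tiling} to extract the desired efficient $(\rho, O(\eps\rho\sqrt{m}))$-IAT. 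This is precisely the reduction formalized in \Cref{lemma:l-2-lb-implies-l-c-lb}, which I would cite rather than re-derive.

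For the reverse direction, the plan is to invoke the $\ell_\infty$-learner of \Cref{thm:ell-infty-mean} with a rescaled accuracy parameter $\eps' := \eps/N^{1/p}$. Because $\snorm{p}{x} \leq N^{1/p}\snorm{\infty}{x}$, every $(\eps',\ell_\infty)$-accurate estimate is automatically $(\eps,\ell_p)$-accurate. \Cref{thm:ell-infty-mean} produces such a learner with sample complexity $\tilde O(A^2/(N(\eps')^2\rho^2))$; substituting $\eps' = \eps/N^{1/p}$ yields
\[
\tilde O\!\left(\frac{A^2}{N \cdot (\eps^2/N^{2/p}) \cdot \rho^2}\right) = \tilde O\!\left(\frac{A^2}{N^{1-2/p}\,\eps^2\,\rho^2}\right),
\]
and the $O(\rho)$-replicability is inherited directly from \Cref{thm:ell-infty-mean}.

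Since essentially all of the substance lives in the two input theorems, the only thing I expect to have to be careful about is bookkeeping the replicability constants (the $\rho/24$ in the forward direction and the $O(\rho)$ slack in the reverse direction) and keeping straight the asymmetry of the two reductions: the forward direction sacrifices a factor of $N^{1/2-1/p}$ in accuracy to pass from $\ell_p$ up to the stronger $\ell_2$ guarantee needed by the IAT extraction, while the reverse direction sacrifices $N^{1/p}$ to pass from the $\ell_\infty$ output of our algorithm back down to $\ell_p$. Applying these two H\"older losses to the $\ell_\infty$ sample complexity is precisely what produces the $N^{1-2/p}$ savings in the final $\ell_p$ upper bound.
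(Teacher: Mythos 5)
Your proposal is correct and follows precisely the route the paper itself indicates: the forward direction is Lemma~\ref{lemma:l-2-lb-implies-l-c-lb} (Hölder, $\ell_p \to \ell_2$) composed with the forward direction of the $\ell_2$ equivalence, and the reverse direction is Theorem~\ref{thm:ell-infty-mean} with accuracy rescaled by $N^{-1/p}$ to pass from $\ell_\infty$ down to $\ell_p$. No meaningful divergence from the paper's argument.
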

\Cref{cor:p-norm-eq} is somewhat weaker than its $\ell_2$-analog in terms of the applicable range of $\eps$. Namely while it is possible to derive a lower bound for $\ell_p$-estimation of $\Omega(\frac{N^{1+\frac{2}{p}}}{\rho^2\varepsilon^2})$ via \Cref{cor:p-norm-eq}, the result only holds in the regime where $\varepsilon \leq \frac{1}{N^{\frac{1}{2}-\frac{1}{p}}}$. 
To circumvent this issue we prove a direct lower bound in the special case of the $\ell_\infty$-norm by an extra `reflection' trick in our IAT analysis. This results in a near-tight characterization of replicable $\ell_\infty$-mean estimation:
\begin{theorem}[Replicable $\ell_\infty$-Mean-Estimation (Informal \Cref{thm:ell-infty-mean} and \Cref{thm:n-coin-lower-const-delta})]\label{thm:intro-mean-infty}
 Let $\varepsilon,\rho \in (0,1)$. The $\rho$-replicable $(\varepsilon,\ell_\infty)$-mean-estimation problem requires
    \[
    \tilde{\Theta}\left(\frac{N}{\varepsilon^2\rho^2}\right)
    \]
    samples. Moreover, the lower bound holds even under Bernoulli or Gaussian distributions
\end{theorem}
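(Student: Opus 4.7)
The plan is to prove the two directions of the $\tilde{\Theta}$ bound separately.

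\textbf{Upper bound.} The upper bound follows immediately from \Cref{cor:p-norm-eq} (part 2) applied with $p = \infty$. Plugging in any classical isoperimetric tiling of $\R^N$ (e.g., Rogers~\cite{rogers50}) with surface area $A = O(N)$ as the IAT, the sample-complexity expression $\tilde{O}(A^2/(N^{1-2/p}\eps^2\rho^2))$ collapses to $\tilde{O}(N/(\eps^2\rho^2))$. Since the IAT need not be efficient, this yields only an (inefficient) existential algorithm, which suffices to match the lower bound.

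\textbf{Lower bound, Gaussian case.} For standard Gaussians we mirror the forward direction of \Cref{thm:replicable-non-uniform-tiling}. A $\rho$-replicable $(\eps,\ell_\infty)$-learner $\innerAlg$ on $m$ samples induces a partition of the mean space $\R^N$ whose cells have $\ell_\infty$-diameter at most $2\eps$, equivalently $\ell_2$-diameter at most $2\eps\sqrt{N}$. Rescaling by $1/(2\eps\sqrt{N})$ produces a partition with diameter $1$ in $\ell_2$. The replicability/Gaussian-KL analysis that underlies \Cref{thm:replicable-non-uniform-tiling} gives a surface-area-to-volume density of $O(\rho\sqrt{m})$ in the original mean space, which becomes $O(\eps\sqrt{N}\,\rho\sqrt{m})$ after rescaling. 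Combined with the isoperimetric lower bound $\Omega(N)$ on the surface-area density of any IAT (see \Cref{thm:replicable-alg-partition-lb}), this yields $\eps\sqrt{N}\,\rho\sqrt{m} = \Omega(N)$, i.e., $m = \tilde{\Omega}(N/(\eps^2\rho^2))$.

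\textbf{Lower bound, Bernoulli case via reflection.} For Bernoullis the argument is identical up to the isoperimetric step, except that the bias space naturally lives in a bounded subcube such as $[1/4,3/4]^N$ (where each coordinate carries $\Theta(1)$ information per sample). This subcube rescales to a cube of side $\Theta(1/(\eps\sqrt{N}))$, which is smaller than the unit cube as soon as $\eps = \omega(1/\sqrt{N})$; we therefore cannot directly invoke isoperimetry. To circumvent this we apply the \emph{reflection trick}: extend the rescaled partition from the rescaled bias subcube to all of $\R^N$ by iteratively reflecting across each face. Since reflections are isometries, cell diameters and the surface-area-to-volume density of the partition are preserved everywhere, so the extended object is still a valid IAT on $\R^N$ satisfying the same replicability-derived surface-area bound. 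Applying the isoperimetric inequality inside an arbitrary unit cube of this extended space yields the identical inequality $\eps\sqrt{N}\,\rho\sqrt{m} = \Omega(N)$, and hence the claimed lower bound uniformly in $\eps$.

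\textbf{Main obstacle.} The principal technical subtlety is the reflection step. We must verify (i) that reflecting the partition does not introduce spurious surface area along the reflection seams, i.e.\ that cells glue continuously across faces, which will follow because the cells are closed and reflections are isometries; and (ii) that the replicability-derived surface-area bound really does transfer to the reflected copies, which will follow from the fact that it is a \emph{local} density bound and is therefore invariant under isometric reflection. Once these are in place the Gaussian-style isoperimetric argument carries through essentially verbatim, closing the gap left open by \Cref{cor:p-norm-eq} in the regime $\eps > 1/\sqrt{N}$.
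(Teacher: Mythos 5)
Your high-level approach is correct, and the Bernoulli argument is genuinely different from the paper's, but your "Main obstacle" section misidentifies where the actual work lies, and two of the claims you make there are wrong as stated.

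\textbf{Comparison with the paper.} The paper proves the lower bound for the (weaker) $N$-coin \emph{testing} problem (\Cref{thm:n-coin-lower-const-prob}) on the small cube $\cube = [p_0,q_0]^N$ of side $\eps$. Since a testing cell $F_{\hat o}$ may span this entire cube, a whole-partition reflection would produce unbounded glued cells; the paper instead reflects \emph{each cell individually} around its canonical corner $p_{\hat o}$ (\Cref{lemma:surface-area-G}), decomposes $\area(\partial G_{\hat o})$ into interior, agreement-face, and disagreement-face contributions, and uses correctness to bound the disagreement term (\Cref{lemma:f-o-l-properties}, Property~3). This incurs the restriction $\delta < \tfrac{1}{40\sqrt{N}}$ and a majority-amplification step (\Cref{thm:n-coin-lower-const-delta}). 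Your version argues directly about the $\ell_\infty$-\emph{learner} on the $\Theta(1)$-side cube $[1/4,3/4]^N$ and reflects the whole partition. Because the cube is now much larger than the cells, the disagreement bookkeeping disappears, which is genuinely cleaner. The Gaussian direction you give is, after unwinding, the H\"older reduction to the Gaussian $\ell_2$ lower bound (\Cref{cor:gaussian-partition-lb}, which for Gaussians carries no restriction on $\eps$); this matches what the paper implicitly intends and is correct.

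\textbf{Where the justification is off.} Two statements in your obstacle discussion need repair.
First, "cell diameters \dots are preserved everywhere" is false: a cell touching $k$ faces glues with its $2^k$ mirror images and its diameter can roughly double. Asymptotically this is only a constant factor, but the statement should say "remain bounded up to a constant," and the bound must actually be proved.
Second, "cells glue continuously \dots because the cells are closed and reflections are isometries" does not establish the property you actually need, namely that the glued cells have \emph{bounded diameter}. Closedness and isometry say nothing about whether a cell touches two opposite faces of $\cube$, in which case the glued cell would chain indefinitely. The fact that saves you is the one you never state: after intersecting $F_{\hat p}$ with the $\ell_\infty$ ball $B_\eps^\infty(\hat p)$ (the analogue of the paper's $G_{\hat p,\ell} = (F_{\hat p}\cap B_\eps(\hat p)) + B_\ell$ construction, which you omit but must perform to get any diameter bound at all), the thickened cell has $\ell_\infty$-diameter at most $2\eps + O(R) < 1/2$ whenever $\eps < 1/4$, so it cannot meet both $x_i = 1/4$ and $x_i = 3/4$ in any coordinate, and the whole-partition reflection glues it into a bounded piece. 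This is the load-bearing observation that lets you skip the paper's disagreement-face analysis, and it should be the centerpiece of the "main obstacle" discussion rather than the two points you listed. With that observation and the $B_\eps^\infty$ truncation made explicit, the rest of your calculation ($\eps\sqrt{N}\rho\sqrt{m} = \Omega(N)$ after rescaling so the glued cells have $\ell_2$-radius $0.1$) goes through.
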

\Cref{thm:intro-mean-infty} essentially resolves the complexity of the $N$-Coin Problem up to log factors, settling in the positive \cite[Conjecture D.8]{KVYZ23}. We remark that an $\tilde{\Omega}(N)$ lower bound for $N$-Coins was also given in \cite{DBLP:conf/stoc/BunGHILPSS23} under the moniker `One-Way-Marginals' using fingerprinting. It is not clear, however, how to get the appropriate dependence on $\rho$ and $\varepsilon$ using their method.

\subsubsection{Efficient Replicability from Relaxed Models (Sections \texorpdfstring{\ref{sec:cvpp}}{}-\texorpdfstring{\ref{sec:efficient-n-coin-problem}}{})}\label{sec:intro-eff}

In the previous section we saw in the standard model, any replicable algorithm improving over the trivial union bound strategy (beyond log factors) must make progress on the efficient construction of low surface area tilings. In this section, we argue this connection can be circumvented if one is willing to relax the model in question. We consider three relaxations that allow us to obtain efficient algorithms matching (in some cases even beating) the sample complexity implied by isoperimetric partitions: pre-processing, coordinate samples, and approximate replicability.
    
    


\paragraph{Pre-Processing:} While it is true all known constructions of isoperimetric tilings have exponential time membership oracles, instead of paying this cost every time we perform a replicable procedure, we might instead hope to pay this high cost \textit{just once} by constructing a large data structure after which membership queries can be performed in \textit{polynomial} time. In the world of lattices, this problem is actually well-studied; it is known as the \textit{Closest Vector Problem with Pre-processing} (CVPP). Unfortunately, existing algorithms for CVPP still run in exponential time. We show with sufficient pre-processing, it is in fact possible to solve CVPP on any lattice in \textit{polynomial} time. More formally, we show CVPP is solvable in the \textit{decision tree model}:
\begin{theorem}[CVPP (Informal \Cref{thm:CVPP})]\label{thm:intro-CVPP}
        Let $N \in \mathbb{N}$ and $\mathcal{L} \subset \R^N$. There is a depth $O(N^2\log(N))$ decision tree $\mathcal{T}$ satisfying
        \begin{enumerate}
            \item \textbf{Pre-processing}: $\mathcal{T}$ can be constructed in $2^{\text{poly}(N)}$ time and space.
            \item \textbf{Run-time}: Given $\mathcal{T}$, there is an algorithm solving CVP for all $t \in \R^N$ in $\text{poly}(N)$ time.
        \end{enumerate}
\end{theorem}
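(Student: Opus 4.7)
The plan is to use the $2^{\poly{N}}$ pre-processing budget to both reduce the lattice and construct a decision tree that implements an iterated Voronoi-cell reduction algorithm in the style of Micciancio--Voulgaris, with a carefully engineered hierarchical point-location structure to keep the online depth polynomial.

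In the pre-processing phase I would compute (a) an HKZ-reduced basis $B=(b_1,\dots,b_N)$ of $\mathcal{L}$ with its Gram-Schmidt orthogonalization, (b) the full set of Voronoi-relevant vectors $V = \{v_1,\dots,v_k\}$ (of which there are at most $2(2^N-1)$, by classical results on Voronoi cells), and (c) a hierarchical point-location data structure over the arrangement of the halfspaces $H_v = \{x : \langle x, v\rangle \le \tfrac{1}{2}\|v\|^2\}$ for $v \in V$, designed to answer ``find a violated halfspace'' queries using few comparisons. All three subroutines run in time $2^{\poly{N}}$.

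The online algorithm, encoded as a decision tree, proceeds in two phases. First, apply Babai's nearest-plane algorithm with the reduced basis $B$ to obtain an initial approximate closest vector $v_0$; this uses $O(N)$ comparisons (one rounding per Gram-Schmidt coordinate) and localizes the residual $t - v_0$ to a ball of radius $\poly{N}\cdot \lambda_1(\mathcal{L})$ around the origin. Second, iteratively refine $v_0$ by querying the point-location data structure for a $v \in V$ with $\langle t - v_0, v\rangle > \tfrac{1}{2}\|v\|^2$, and updating $v_0 \leftarrow v_0 + v$. When no such $v$ exists, $t - v_0$ lies in the Voronoi cell of the origin, so $v_0$ is the exact closest lattice point, and the tree outputs it at the current leaf. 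A potential argument tracking $\|t - v_0\|^2$ (which strictly decreases by at least $\tfrac{1}{2}\|v\|^2$ at each step, and is initially $\poly{N}\cdot \lambda_1(\mathcal{L})^2$) yields an $O(N)$ bound on the number of iterations, and each query is engineered to use $O(N\log N)$ comparisons, for total depth $O(N^2 \log N)$.

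The main obstacle is designing the point-location data structure with the required $O(N \log N)$ query depth. A naive strategy that sequentially tests each of the $\Theta(2^N)$ Voronoi-relevant hyperplanes gives exponential depth. The key technical step is a hierarchical decomposition of the hyperplane arrangement, which I would build by exploiting the lattice's algebraic structure---for instance, by recursively partitioning $V$ according to signs of projections onto the Gram-Schmidt directions $b_i^*$, or by a Clarkson-style random sampling scheme run during pre-processing so that each internal node of the decision tree narrows the candidate set by a constant factor. Establishing this depth-versus-preprocessing tradeoff (and verifying that the pre-processing remains $2^{\poly{N}}$) is the primary technical content of the proof; once it is in place, correctness and the $\poly{N}$ online running time of traversing a depth-$O(N^2 \log N)$ tree (each comparison costs $O(N)$ arithmetic) follow immediately.
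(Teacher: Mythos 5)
Your architecture is genuinely different from the paper's, and it has two concrete gaps.

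First, the iteration count. Your update rule decreases the potential $\|t - v_0\|^2$ by exactly $2\langle t - v_0, v\rangle - \|v\|^2$, which is positive when the halfspace is violated but can be arbitrarily small: there is no lower bound on how far above $\tfrac{1}{2}\|v\|^2$ the inner product sits. This is precisely why the original Micciancio--Voulgaris iterated-Voronoi argument gives an iteration bound that is exponential in $N$, not $O(N)$, even after warm-starting with Babai. To get a depth-$O(N^2 \log N)$ tree with $O(N\log N)$ comparisons per iteration you would need an $O(N)$ iteration bound, and the potential argument as written does not supply one.

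Second, the point-location subroutine, which you correctly flag as the ``primary technical content,'' is not obtainable by the methods you sketch. Partitioning the Voronoi-relevant vectors by signs of projections onto the Gram--Schmidt directions $b_i^*$ does not control the number of small-margin hyperplanes relative to $t - v_0$: after a few such splits you can still be left with exponentially many hyperplanes whose sign cannot be inferred from any low-precision rounding of the target. Clarkson-style random sampling is a plausible instinct but, without more, gives expected query cost polylogarithmic in $|H|$ only for exact arithmetic queries, and does not give a deterministic comparison tree of the required depth. The paper's solution to exactly this obstacle is a Forster/Barthe-type vector scaling (Lemma~\ref{lem:scaling}, due to Dadush--Ramachandran following Kane--Lovett--Moran and Hopkins--Kane--Lovett--Mahajan): at each node one finds a subspace $V$ containing an $\Omega(k/N)$ fraction of the remaining hyperplanes and a linear map $M: V \to V$ such that for \emph{every} unit direction in $V$, an $\Omega(1/k)$ fraction of the scaled hyperplanes have margin $\Omega(1/\sqrt{k})$. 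Rounding the scaled, projected target to precision $1/(4N)$ then certifies the sign of all large-margin hyperplanes simultaneously via Lemma~\ref{lem:margin-learner}, removing a constant fraction per level and giving depth $O(N\log|H|) = O(N^2(\ell + \log N))$. This rescaling step is the crux, and nothing in your sketch replaces it.

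A structural difference worth noting: the paper does not use iterated Voronoi reduction at all. It solves CVP as a single point-location query over the arrangement of Voronoi-relevant bisectors, after first using approximate CVPP with polynomial-size advice (Theorem~\ref{thm:apx-CVPP}) to shift $t$ into a ball of radius $\text{poly}(N)\cdot 2^{O(\ell)}$, which bounds the number of relevant hyperplanes to enumerate. The leaf of the decision tree is labeled by the unique lattice vector consistent with the sign pattern accumulated along the root-to-leaf path. If you could instead establish an $O(N)$ iteration bound for the reduction phase, your two-phase design could in principle be made to work, but you would still need the Forster-scaling machinery (or something equivalent) inside the ``find a violated halfspace'' oracle, and at that point you would essentially have reconstructed the paper's point-location tree.
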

Since deterministic isoperimetric lattice tilings exist \cite{micciancio2004almost}, all statistical upper bounds in the previous sections relying on the existence of an isoperimetric partition can in fact be executed in polynomial time after a single pre-processing cost of $2^{\text{poly}(N)}$. We remark that \Cref{thm:intro-CVPP} may also be of independent interest. CVPP is an NP-hard problem, and prior results typically focus on improving the constants in the exponent. The decision tree model circumvents the classical hardness of CVPP by allowing access to an exponential size data structure, drawing inspiration from similar results for subset sum and other combinatorial NP hard problems \cite{meyer1984polynomial}.

    
    



\paragraph{Adaptivity and Coordinate Samples:} In \Cref{sec:intro:direct-sum-theorems} we assumed our algorithm draws \textit{vector} samples from an $N$-dimensional distribution over $\R^N$. In hypothesis testing (or indeed even mean estimation), sometimes the tester has more freedom and may instead choose to restrict their test to a particular \textit{subset} of coordinates, drawing from the relevant marginal distribution. Consider, for instance, our prior example of the epidemiologist testing disease prevalence. In this setting, each `vector sample' corresponds to a patient, and each coordinate a particular test or disease. The practitioner need not run every test on the patient (indeed this may not even be possible). Moreover, if during the procedure of the experiment some diseases are exceedingly common or rare, the practioner may wish to adaptively choose to avoid these tests and focus only on coordinates on which the result is less certain.


The equivalence of replicable mean estimation and tilings (and its corresponding lower bounds) actually holds in this coordinate sampling model as well, but only against \textit{non-adaptive} algorithms that must choose ahead of time how many samples they'll draw for each coordinate. In the \textit{adaptive} setting, we can actually give an efficient algorithm with coordinate sample complexity roughly $\tilde O(N^2)$, matching the number of coordinate samples implied by the isoperimetric lower bound for non-adaptive $\ell_\infty$ learning. 
Since the coordinate sampling model is most natural for hypothesis testing and the coin problem, we state the result in this regime:
\begin{restatable}[\Cref{thm:r-n-coin-problem-formal}, informal]{theorem}{rncoinproblem}
    \label{thm:r-n-coin-problem}
    Let $\frac{1}{2} \geq \rho > \delta > 0$.
    There is a $\rho$-replicable algorithm solving the $N$-coin problem using at most $\bigtO{\frac{N^2 q_0}{(q_0 - p_0)^2 \rho^2}}$ coordinate samples and runtime.
\end{restatable}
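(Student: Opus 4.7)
The plan is to reduce the adaptive $N$-coin problem to $N$ parallel instances of the $1$-dimensional coin tester from \Cref{thm:intro-1-coin}, exploiting the fact that the $1$-D tester has \emph{linear} expected sample dependence on $\rho$ and using adaptive composition to share a global sample budget across the $N$ coordinates. This saves a factor of $N$ compared to the naive approach of running each coordinate with worst-case $\rho/N$-replicability independently (which would cost $\tilde O(N^3 q_0/((q_0-p_0)^2 \rho^2))$).

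First, for each coordinate $i \in [N]$, I would instantiate a copy $\innerAlg_i$ of the $1$-D $(p_0, q_0)$-coin tester from \Cref{thm:intro-1-coin} with replicability parameter $\rho_i = \rho/(4N)$ and confidence $\delta_i = \delta/N$. By \Cref{thm:intro-1-coin}, each $\innerAlg_i$ uses at most $m_i := \tilde{O}(N q_0/((q_0-p_0)^2 \rho))$ coordinate samples in expectation (and is computationally efficient). Draw independent shared randomness $r_1, \dots, r_N$ once for the $N$ subroutines. Then simulate all $N$ subroutines in an adaptive, interleaved manner: whenever some $\innerAlg_i$ requests its next sample, flip coin $i$ and feed the outcome to $\innerAlg_i$; when $\innerAlg_i$ halts, record its output and continue with the remaining subroutines. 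Maintain a \emph{global} sample budget $T := \tilde{O}(N^2 q_0/((q_0-p_0)^2 \rho^2))$; if the total number of flips across all coordinates exceeds $T$ before every subroutine halts, abort and output a fixed default answer.

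For the analysis, the expected total number of coordinate samples is $\sum_{i=1}^N m_i = \tilde{O}(N^2 q_0/((q_0-p_0)^2 \rho))$, so by Markov's inequality the probability that any single run exhausts the budget $T$ is at most $\rho/4$. For replicability, consider two runs on independent sample streams but with the \emph{same} randomness tape $(r_1, \dots, r_N)$. Conditioned on neither run exceeding $T$, both runs execute each $\innerAlg_i$ to completion on independent samples with identical randomness $r_i$; by the $\rho_i$-replicability of $\innerAlg_i$, the probability its outputs differ is at most $\rho/(4N)$. Union bounding over the $N$ coordinates gives disagreement probability at most $\rho/4$; together with the two budget-violation events (probability $\rho/4$ each), the total disagreement probability is at most $3\rho/4 \leq \rho$.

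The main obstacle is verifying that the adaptive interleaving does not break per-subroutine replicability. The subtlety is that the point at which a given $\innerAlg_i$ halts (and hence how many global samples have been consumed) is itself a function of its samples, so one must argue the budget-truncation event is essentially independent of the coordinate-wise disagreement events. This should follow cleanly from the adaptive composition framework developed earlier in the paper (e.g.\ \rAdaptiveComposition{} / \rAdaptiveAmplify{} of \Cref{sec:adaptive-replicability}), which is designed precisely to compose replicable subroutines under a shared randomness tape while preserving per-subroutine guarantees conditional on a budget cutoff. Once this is in place, the stated coordinate sample complexity $\tilde O(N^2 q_0/((q_0-p_0)^2 \rho^2))$ and $\poly(N)$ runtime follow immediately from the efficiency of the $1$-D tester.
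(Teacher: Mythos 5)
Your proposal is correct and follows essentially the same route as the paper: compose $N$ copies of the adaptive single-coin tester (\Cref{thm:r-adapt-coin-problem}, with per-coordinate replicability $\Theta(\rho/N)$ and confidence $\Theta(\delta/N)$), interleave them adaptively, impose a global sample budget $T = \tilde O(N^2 q_0/((q_0-p_0)^2\rho^2))$, and use Markov's inequality on the summed expected complexity $\tilde O(N^2 q_0/((q_0-p_0)^2\rho))$ to bound the cutoff probability by $O(\rho)$. This is exactly the instantiation of the paper's adaptive composition lemma (\Cref{lemma:adaptive-composition}) with \Cref{thm:r-adapt-coin-problem}.

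One small point of phrasing: the step where you write ``conditioned on neither run exceeding $T$, ... by $\rho_i$-replicability of $\innerAlg_i$ the disagreement probability is at most $\rho/(4N)$'' is not quite how the argument should be organized, since conditioning on the budget event is a condition on the samples and the replicability guarantee is unconditional. The cleaner bookkeeping — which is what \Cref{lemma:adaptive-composition} actually does — is to bound the disagreement probability \emph{without} the cutoff (directly $\le N \cdot \rho/(4N)$ by a union bound over coordinates), note that the cutoff-enabled output agrees with the cutoff-free output whenever the cutoff does not trigger, and then add $\Pr[\text{cutoff in either run}] \le \rho/2$. You essentially flag this subtlety yourself and correctly defer to the composition framework, so this is a matter of exposition rather than a gap.
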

Our algorithm requires no assumption of independence between coins.
In particular, the estimates are correct and replicable even if certain diseases might be correlated.

In fact, \Cref{thm:r-n-coin-problem} is really a special case of a general \textit{adaptive composition theorem} (see \Cref{sec:adaptive-composition}), a computationally efficient procedure that can solve
any collection of $N$ statistical tasks replicably with $\bigO{\frac{N^2}{\rho}}$ expected samples. The basic procedure proceeds in two steps. First, using adaptive amplification, we can solve each individual task in only $\frac{1}{\rho}$ expected samples. We then compose $N$ such instances that are $\frac{\rho}{N}$-replicable into a $\rho$-replicable algorithm for the composed problem. Each of the $N$ individual procedures costs $\frac{N}{\rho}$ samples in expectation, so linearity of expectation gives $\bigO{\frac{N^2}{\rho}}$ total expected cost. Note that the use of average-case dependence on $\rho$ is critical in this procedure. Composing using worst-case bounds results in a blow-up of $N^3$, since running each individual procedure at $\rho/N$-replicability costs $\frac{N^2}{\rho^2}$ samples.

\paragraph{Relaxing Replicability and the Coin Problem:}


Despite the above improvements, in practice sample complexity quadratic in dimension may still be prohibitively expensive. Toward this end, we consider two final relaxations of the $N$-coin problem where we obtain efficient algorithms with subquadratic sample complexity.

First, we consider relaxing replicability itself by allowing the output sets of the algorithm $\innerAlg$ between two runs to differ in at most $R$ elements, rather than to be exactly identical.

\begin{restatable}[Approximate Replicability]{definition}{approxreplicability}
    \label{def:approx-replicable-n-coin}
    Let $1 \leq R \leq N$. 
    An algorithm $\innerAlg$ 
    that outputs a set 
    is {\bf $(\rho, R)$-replicable} if for all input distributions $\distribution$,
    \begin{equation*}
        \Pr_{r, S, S'} \left( |\innerAlg(S; r) \triangle \innerAlg(S'; r)| \geq R \right) \leq \rho.
    \end{equation*}
\end{restatable}
The output of the $N$-coin problem can be naturally viewed as a set (say the set of output large bias coins). We give an efficient adaptive $(\rho, R)$-replicable algorithm for the $N$-coin problem.
\begin{restatable}[\Cref{thm:r-approx-n-coin-alg-formal}, informal]{theorem}{rapproxncoinalg}
    \label{thm:r-approx-n-coin-alg}
    There exists an efficient, $(\rho, R)$-replicable algorithm solving the $N$-coin problem 
    using at most $\bigtO{\frac{q_0 N^2}{(q_0 - p_0)^2 R \rho^2}}$ coordinate samples.
\end{restatable}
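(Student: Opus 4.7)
The plan is to reduce approximate replicability for the $N$-coin problem to per-coin replicability at an appropriately relaxed parameter, then apply Markov's inequality to control the Hamming distance between the output sets on two runs. The key observation is that tolerating $R$ differences in the output unlocks a factor-of-$R$ savings over the exact-replicable version, by replacing the worst-case union bound with Markov's inequality on the number of disagreeing coordinates.

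Concretely, for each coin $i \in [N]$ I would independently invoke the efficient adaptive single-coin tester of \Cref{thm:intro-1-coin} on coordinate samples from coin $i$, with per-coin replicability parameter $\rho' := \rho R/(2N)$ and per-coin failure probability $\delta' := \delta/N$, sharing internal randomness across the two runs but drawing independent fresh coordinate samples. Each such call is efficient, correctly classifies coin $i$ as $\leq p_0$ or $\geq q_0$ with probability $\geq 1-\delta'$, and uses $\tilde{O}\!\left(\frac{q_0}{(q_0-p_0)^2\,\rho'^{\,2}}\right)$ coordinate samples in the worst case (or $\tilde{O}(q_0/((q_0-p_0)^2\rho'))$ in expectation via adaptive amplification).

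Next I would analyze the joint behavior. Let $X_i \in \{0,1\}$ indicate that the classifications of coin $i$ on the two runs disagree. By construction $\E{X_i} \leq \rho'$, so linearity of expectation yields $\E{\sum_i X_i} \leq N\rho' = \rho R/2$. Markov's inequality then gives $\Pr[\sum_i X_i \geq R] \leq \rho/2 \leq \rho$, which is exactly $(\rho,R)$-approximate replicability in the sense of \Cref{def:approx-replicable-n-coin}. A union bound over $i\in[N]$ ensures all $N$ classifications are simultaneously correct with probability $\geq 1-\delta$. Summing the per-coin cost over $N$ coordinates and substituting $\rho' = \Theta(\rho R/N)$ yields total coordinate sample complexity
\[
N \cdot \tilde{O}\!\left(\frac{q_0}{(q_0-p_0)^2\,\rho'^{\,2}}\right) = \tilde{O}\!\left(\frac{q_0 N^2}{(q_0-p_0)^2\, R\, \rho^2}\right),
\]
matching the claimed bound.

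The proof is structurally short once the right reduction is identified, so the main (conceptual) obstacle is recognizing that Markov's inequality, rather than a union bound, is the correct tool for converting per-coordinate replicability into approximate joint replicability; this is what enables the weaker per-coin parameter $\rho R/N$ in place of $\rho/N$ and is the source of the $1/R$ savings. The only technical care needed is verifying that the dependencies of the per-coin tester on $\delta$ and $\rho$ are logarithmic, so substituting $\rho' = \Theta(\rho R/N)$ and $\delta' = \delta/N$ only inflates the bound by polylogarithmic factors hidden inside the $\tilde{O}$; this is immediate from \Cref{thm:intro-1-coin}.
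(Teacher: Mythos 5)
Your high-level plan --- reduce to independent per-coin testers with a relaxed replicability parameter, then control the number of disagreements probabilistically --- is the same strategy the paper uses, and the Markov argument you give does establish $(\rho,R)$-replicability. However, your sample-complexity accounting has an arithmetic error that invalidates the claimed bound, and even once fixed your parameter choice is suboptimal compared to what the paper does.

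The arithmetic error: with $\rho' = \Theta(\rho R/N)$ we have $1/\rho'^{\,2} = \Theta\lp(N^2/(\rho^2 R^2)\rp)$, so summing the \emph{worst-case} per-coin cost $\tilde O\lp(q_0/((q_0-p_0)^2\rho'^{\,2})\rp)$ over $N$ coins gives $\tilde O\lp(q_0 N^3/((q_0-p_0)^2 R^2 \rho^2)\rp)$, not $\tilde O\lp(q_0 N^2/((q_0-p_0)^2 R\rho^2)\rp)$ as you wrote --- you are off by a factor of $N/R$. This matters: for $R < \sqrt N$ your algorithm would be \emph{worse} than the exact-replicability bound $\tilde O(N^2/\rho^2)$, defeating the entire point of the relaxation. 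The correct bookkeeping is to sum the \emph{expected} per-coin cost $\tilde O(1/\rho')$ (from adaptive amplification), getting expected total $\tilde O\lp(q_0 N^2 /((q_0-p_0)^2 R\rho)\rp)$, and then apply Markov's inequality once at the very end to get a worst-case sample cap of $\tilde O\lp(q_0 N^2/((q_0-p_0)^2 R\rho^2)\rp)$. The cap-exceeding event has probability $O(\rho)$ and can be charged to the replicability budget, with a non-replicable fallback run after the cap --- the paper's Algorithm $\rApproxMultiCoinTester$ makes this explicit, and you omit it.

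Beyond the arithmetic, the paper's route is genuinely different and strictly stronger at the replicability step. Rather than setting $\rho' = \Theta(\rho R/N)$ and using Markov on $\sum_i X_i$, the paper sets $\rho' = \Theta\lp(R/(N\log(1/\rho))\rp)$ --- no factor of $\rho$ at all --- and exploits that the $X_i$ are \emph{independent} Bernoullis, applying a multiplicative Chernoff bound: $\Pr\lp[\sum_i X_i \geq R\rp] \leq \exp(-\Omega(R\log(1/\rho))) \leq \rho/4$. This converts the $1/\rho$ overhead of Markov into a $\log(1/\rho)$ overhead, so the per-coin replicability parameter is larger by a factor of roughly $1/(\rho\log(1/\rho))$, and the expected total sample complexity becomes $\tilde O\lp(q_0 N^2 \log(1/\rho)/((q_0-p_0)^2 R)\rp)$ --- essentially independent of $\rho$ up to a log factor --- with worst case $\tilde O\lp(q_0 N^2 \log(1/\rho)/((q_0-p_0)^2 R\rho)\rp)$. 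Your Markov-only approach, done correctly, recovers the informal $\rho^{-2}$ bound but misses this improvement; the observation that Chernoff (rather than Markov) is applicable here because the per-coin testers use disjoint fresh samples is the key technical idea you would be leaving on the table.
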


Second, we study the cost of determining only the \textit{maximally} biased coins. Returning to our epidemiologist, while we may not have the resources to determine the prevalence of every disease, it may still be useful to determine say the $10$ most prevalent, identifying a subset for which to prioritize treatment.
We design an algorithm that replicably returns a set of $K$ coins within $\varepsilon$ of the maximum bias $p_{\max}$.

\begin{theorem}[\Cref{thm:r-pseudo-maximum-identifier-formal}, informal]
    \label{thm:r-pseudo-maximum-identifier}

    There is an efficient, $\rho$-replicable algorithm that outputs a set of at least $K$ coins $i$ such that $p_i \geq p_{\max} - \varepsilon$ using at most $\bigtO{\frac{N^{4/3} K^{2/3}}{\rho^2 \varepsilon^2}}$ coordinate samples. 
\end{theorem}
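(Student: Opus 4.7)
The plan is a two-phase coarse-then-fine replicable algorithm. In Phase~1, a single non-adaptive call to the $\ell_\infty$ $N$-coin tester produces rough estimates of every bias and isolates a small candidate set $S$ of potential pseudo-maxima; in Phase~2, we refine estimates only on $S$ (possibly after a sub-sampling step) and output the final $K$ coins whose refined estimates are close to the refined maximum. Parameters are chosen so that both phases fit inside the target budget $\tilde{O}(N^{4/3}K^{2/3}/(\rho^2\varepsilon^2))$.

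First I would set an intermediate accuracy $\varepsilon_1 := \varepsilon \cdot (N/K)^{1/3}$ and run the $(\rho/2)$-replicable non-adaptive $\ell_\infty$ mean-estimator from \Cref{thm:intro-mean-infty} at accuracy $\varepsilon_1$. By the theorem, this uses
\[
\tilde{O}\!\lp(\frac{N^2}{\rho^2 \varepsilon_1^2}\rp) = \tilde{O}\!\lp(\frac{N^{4/3}K^{2/3}}{\rho^2\varepsilon^2}\rp)
\]
coordinate samples and produces estimates $\hat p_1,\dots,\hat p_N$ each within $\varepsilon_1$ of $p_i$ (jointly and replicably). Set $\hat p^\star := \max_i \hat p_i$ and $S := \{i : \hat p_i \geq \hat p^\star - 2\varepsilon_1\}$. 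Every true $\varepsilon$-pseudo-maximum lies in $S$, so restricting attention to $S$ loses no valid output coin.

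Phase~2 then branches on the size of $S$. If $|S| \leq N^{2/3}K^{1/3}$, I would directly rerun the $(\rho/2)$-replicable non-adaptive $\ell_\infty$ estimator on $S$ at accuracy $\varepsilon/4$, at cost $\tilde{O}(|S|^2/(\rho^2\varepsilon^2)) \leq \tilde{O}(N^{4/3}K^{2/3}/(\rho^2\varepsilon^2))$, and output the coins whose refined estimates exceed $\hat p^\star - \varepsilon/2$. If instead $|S|$ is larger, I would exploit the fact that $S$ lies inside a bias window of width $O(\varepsilon_1)$: partitioning this window into $(N/K)^{1/3}$ sub-windows of width $\varepsilon$ and pigeonholing shows some sub-window contains at least $|S|(K/N)^{1/3} \geq N^{1/3}K^{2/3} \geq K$ coins, so we can replicably locate a top-populated sub-window, sub-sample $N^{2/3}K^{1/3}$ coins from it via shared randomness, and refine at accuracy $\varepsilon/4$ within the same budget.

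The hard part will be executing the Phase~2 case split and the ensuing sub-sampling replicably: both runs must agree on the candidate set $S$, on which branch is taken, and (in the large-$|S|$ branch) on the specific sub-sample chosen. The first two I would handle by coupling every thresholding decision to a shared random tape, using a random-offset rounding trick analogous to the one in the $N$-coin tester of \Cref{thm:r-n-coin-problem} so that small perturbations in the Phase~1 estimates do not change the candidate set or the branch decision; the third I would handle by fixing a common random permutation of $[N]$ baked into the algorithm's randomness and selecting sub-sample indices in that order, making the sub-sample a deterministic function of $S$ and the shared randomness. A union bound over the Phase~1 replicability event, the Phase~2 replicability event, and the case-split / sub-sampling consistency events then yields overall $\rho$-replicability, and the coordinate sample count is the sum of the two phase budgets, each $\tilde{O}(N^{4/3}K^{2/3}/(\rho^2\varepsilon^2))$.
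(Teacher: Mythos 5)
Your Phase~1 is a reasonable warm-up (and can be implemented efficiently via the adaptive $N$-coin learner rather than the tiling-based estimator), but Phase~2 has a real gap: ``most populated'' and ``closest to the maximum'' are different things, and your argument conflates them. The pigeonhole step shows that \emph{some} width-$\varepsilon$ sub-window of the $O(\varepsilon_1)$-window contains at least $K$ coins, but that sub-window can sit a full $\Omega(\varepsilon_1) = \Omega(\varepsilon(N/K)^{1/3})$ below $p_{\max}$. Concretely, take $K = 1$, one coin at bias $p_{\max}$, and $N/2$ coins at bias $p_{\max} - \varepsilon_1/6$. Then $|S| \approx N/2$ is large, the most-populated sub-window sits at bias $p_{\max} - \varepsilon_1/6$, and sub-sampling $N^{2/3}K^{1/3}$ coins from it gives coins that are far from being $O(\varepsilon)$-pseudo-maxima. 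After refining at accuracy $\varepsilon/4$ you must either output them (violating soundness) or filter them out and return essentially nothing (failing to locate the single true pseudo-maximum, which the theorem does require you to find when it is sufficiently extreme). The branch on $|S|$ is the wrong dichotomy: large $|S|$ does not imply many $\varepsilon$-pseudo-maxima, and when it does not, neither branch of your Phase~2 finds the rare ones.

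What is missing is a mechanism for \emph{replicably} isolating a small set of true pseudo-maxima out of a large set of near-near-maxima, without paying for accurate estimates of all of them. The paper handles exactly this regime with a bucketing trick (\Cref{lemma:r-k-pseudo-maximum-identifier}): randomly distribute the $N$ coins into roughly $K$ buckets so that, when there are at most $K$ pseudo-maxima, each bucket contains $O(\log)$ of them; then the pseudo-max in each bucket is the argmax of that bucket with constant probability, so it becomes an $\Omega(1)$-heavy hitter of the $\findMaximum$ output distribution and can be found cheaply via replicable heavy hitters. The ``many pseudo-maxima'' case is handled not by locating a populated sub-window, but by sampling a random subset $I \subset [N]$ of size roughly $N^{2/3}K^{1/3}$ (shared randomness makes $I$ replicable) and refining each coin in $I$ directly: when there are $\Omega(N^{1/3}K^{2/3})$ pseudo-maxima scattered arbitrarily, $I$ catches at least $K$ of them by concentration. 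The paper also does not need to determine which case holds in advance---it runs both subroutines and merges the outputs, which your branch-on-$|S|$ plan cannot replicate since $|S|$ does not determine the case. The bucketing/heavy-hitter idea is the key new ingredient your proposal lacks.
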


\subsection{Technical Overview}
We now give a high level overview of our core results and techniques, focusing on the equivalence theorem and replicability with linear overhead.


\subsubsection{Replicable Algorithms and Isoperimetry}


\paragraph{Replicable Algorithms to Isoperimetric Tilings:}

Suppose there is a $\rho$-replicable algorithm $\innerAlg$ on $m$ samples estimating the mean of $N$ Bernoulli variables up to $\ell_{2}$-error $\varepsilon$ with probability at least $1 - \delta$. We show $\innerAlg$ induces an approximate partition of the cube $\cube = [1/2 - 5\eps, 1/2 + 5\eps]^N$ whose sets \textbf{1) }cover at least $1 - O(\rho)$ fraction of points from $\cube$, \textbf{2)} have covering radius at most $O(\eps)$, and \textbf{3)} have surface area at most $O(\rho \sqrt{m})$ (excluding the cube boundary). After scaling and translation, we obtain an approximate tiling with constant covering radius and  $A \leq O(\rho \eps \sqrt{m})$ surface area. 

We appeal to a minimax-type argument. Consider an adversary that chooses a random mean vector $p \in \cube$. 
Because $\innerAlg$ is correct and replicable over all biases, it must be the case that for many random strings $r$ the deterministic procedure $\innerAlg(; r)$ is correct and replicable on most $p \in \cube$. Fix such an $r$. For each $p \in \cube$ on which $\innerAlg(; r)$ is replicable, there is some `canonical hypothesis' $\hat{p}$ such that $\innerAlg(S_p, r)=\hat{p}$ with high probability when $S_p$ is drawn from an $N$-Bernoulli distribution with mean $p$. Moreover, $\innerAlg(; r)$ should map any \textit{close} biases $p,p' \in \cube$ to the \textit{same canonical solution} since $S_p$ and $S_{p'}$ will be statistically indistinguishable, suggesting each $\hat{p}$ sits in a small `bubble' of biases mapping to it.
This suggests a natural candidate partitioning of the cube by these bubbles:
\[
F_{ \hat p } \coloneqq \left\{p \in \cube: \Pr[\innerAlg(; r) = \hat{p}] > \frac{3}{4}\right\}.
\]
We note a similar partitioning strategy is taken in \cite{dixon2024list} to lower bound the number of random strings needed by a replicable algorithm (an orthogonal consideration to our goal of characterizing sample complexity). We discuss connections with \cite{dixon2024list} and other geometric methods in algorithmic stability in \Cref{sec:related-work}.

Observe that by definition this partition already (nearly) satisfies Properties \textbf{(1)} and \textbf{(2)}. By replicability of $\innerAlg(;r)$, all but an $O(\rho)$ fraction of biases have some canonical $\hat{p}$, promising the $\{F_{ \hat p } \}$ cover a $1-O(\rho)$ fraction of $\cube$. On the other hand, by correctness at most an $O(\delta)$ fraction of biases $p$ have a canonical hypothesis $\hat{p}$ which is $\varepsilon$-far, meaning the sets $F_{ \hat p }$ almost have small diameter. To ensure the sets truly have bounded diameter, we slightly modify each $F_{ \hat p }$ by intersecting with the $\varepsilon$-ball $B_\varepsilon(\hat{p})$. This forces each set to have $2\varepsilon$-diameter while only removing an $O(\delta)$ fraction of points from the partition.\footnote{Formally this means we need to assume $\delta \leq O(\rho)$. We remark that this step is not really necessary, and one can instead define an equivalence with partitions that have a `$\delta$-approximate diameter' of this sort. However, since $\delta \leq \rho$ is really the main regime of interest anyway, we choose to make this simplifying assumption.} Denote the new partition by $G_{\hat{p}} \coloneqq F_{ \hat p } \cap B_\varepsilon(\hat{p})$.

Next we turn to surface area. Consider a point $p \in \partial G_{ \hat p } $. By construction, $p$ either lies on $\partial B_{\eps}(\hat p)$ or $\partial F_{\hat p}$.
If $p$ lies in the former, its associated canonical output $\hat p$ is $\varepsilon$-far from $p$, so $\innerAlg(;r)$ typically fails correctness on this bias. On the other hand, if $p$ lies in the latter, there is no `canonical hypothesis' and $\innerAlg(;r)$ fails replicability. The key is to observe that this is true not only of points in $ \partial G_{ \hat p } $, but for any point sufficiently nearby. Using tools from information theory, we show that for any $p,q \in \cube$ satisfying $\snorm{2}{p-q} \leq \frac{1}{\sqrt{m}}$, $\innerAlg(;r)$ has similar outputs on samples from $p$ and $q$. As a result, $\innerAlg(;r)$ fails either correctness or replicability on any point in the thickening $\partial G_{ \hat p } + B_{\frac{1}{\sqrt{m}}}$. The desired bound now follows from considering the volume of this set. On the one hand, the volume of this thickening is roughly $\frac{1}{\sqrt{m}}$ times the surface area of $G_{\hat{p}}$.\footnote{In reality, the volume is the integral over boundaries $\partial(G_{\hat{p}} + B_{r})$ for $r \leq \frac{1}{\sqrt{m}}$. We argue there exists some $r^*$ for which the surface area satisfies the desired bound, and take the true final partition to be $G_{\hat{p}}+B_{r^*}$, arguing this does not greatly effect the other desired properties.} On the other hand, by replicability and correctness of $\innerAlg(;r)$, the volume is at most $O(\rho+\delta) \leq O(\rho)$, giving the desired bound.

Finally, observe that $\innerAlg(; r)$ itself immediately gives a membership oracle for this approximate partition. In particular, given $p \in  G_{\hat p}$, the oracle simply runs $\innerAlg(S_p;r)$ on a simulated $p$-biased sample $S_p$ several times and outputs the majority. Since $\Pr[\innerAlg(S_p;r)=\hat{p}] \gg \frac{1}{2}$, the outcome should agree with $\hat p$ with high probability by Chernoff. All that is left to generate such a partition is to actually \textit{find} a good random string $r$. 
We show most strings are good, and one can be easily found by drawing a small number and efficiently testing them for replicability.


\paragraph{Isoperimetric Tilings to Replicable Mean Estimation:}
Suppose we are given a $(\rho,A)$-IAT $\mathcal{P}$ and its associated membership oracle $\mathcal{P}(\cdot)$. We outline an oracle-efficient algorithm for replicable mean estimation for bounded covariance distributions.
Our main technical contribution is an oracle efficient procedure turning any \iat{} into a randomized rounding scheme such that
\textbf{1)} the output after rounding is $\eps$-close to the input with high probability, and \textbf{2)} running the rounding scheme on two inputs within distance $\eps \rho \sqrt{N} / A $ leads to identical outputs with high probability when the two runs share randomness.

Given such a scheme, observe it suffices to estimate the mean non-replicably up to accuracy $\min(\eps/2, \eps \rho \sqrt{N} / A )$. Rounding the estimator then ensures the output is replicable and within $\eps$ distance of the true mean with high probability by the triangle inequality. For simplicity, we focus below on the regime where $\varepsilon$ is constant; general $\eps$ error can be achieved by scaling the tiling by $\eps$. 

Given $p \in \cube$, the most straightforward approach to rounding $p$ would simply be to apply the membership oracle $\mathcal{P}(p)$. This clearly fails Property (2) in the worst case: no matter how close two inputs $p$ and $p'$ may be, as long as the segment $p-p'$ connecting them crosses a boundary of our IAT we will round to different points. This is a standard issue in replicablility (even in $1$-dimension) \cite{impagliazzo2022reproducibility}; the typical trick is to first apply a \textit{random shift} before rounding. In our case, applying a random shift (and wrapping around $\cube$ when necessary) ensures rounding leads to consistent outputs with probability at least $1 - \rho$ whenever the two inputs have distance at most $\rho / A$. 


In high dimensions, however, a simple random shift is insufficient. Estimating the mean up to $\rho/A$ accuracy requires $N A^2 / \rho^2$ samples, so even using an isoperimetric partition ($A=\Theta(N)$) we'd require $N^3$ samples. The issue is we have not accounted for \textit{direction}. Consider inputs $u^{(1)}, u^{(2)}$ that are 
within distance $\eta$ both from each other and the boundary of the partition. Rounding $u^{(1)}$ and $u^{(2)}$ only leads to inconsistent outputs if $u^{(2)} - u^{(1)}$ points in the worst case direction, namely towards the boundary. We can avoid this by \textit{randomly rotating} our input before shifting it. The resulting difference vector $u^{(2)} - u^{(1)}$ then points in a random direction and a simple calculation shows the worst-case direction has size $\frac{1}{\sqrt{N}} \; \snorm{2}{ u^{(2)} - u^{(1)} }$ in expectation. This saves a $\sqrt{N}$ factor, meaning our original points only need to be within distance $O( \rho \sqrt{N} / A )$ as desired.



\subsubsection{Lower Bounds for the \texorpdfstring{$N$}{N}-Coin Problem}
\label{sub-sec:n-coin-lb-tech}

Recall for $\ell_\infty$-estimation and the $N$-Coin Problem, the procedure described above only gives a tight sample lower bound of $\Omega( N \eps^{-2} \rho^{-2} )$ vector samples when $\varepsilon \leq \frac{1}{\sqrt{N}}$. 
We now discuss how to modify the argument to give a tight bound in all regimes. For convenience we work directly with the $N$-Coin Problem, and assume that the algorithm invokes $m$ flips for each of the coin (alternatively, the algorithm takes $m$ vector samples).

Similar to the argument in the $\ell_2$-case, we look at the set of possible canonical outputs $\hat{o} \in \{\accept,\reject\}^N$,
and the approximate partition $\{ F_{\hat{o}} \}$ over $\cube = \left[ \frac{1}{4}, \frac{3}{4} \right]^{N}$ induced by the algorithm in the same manner.
If we could show the surface area of the boundaries of $\{F_{\hat o}\}$ (excluding the cell boundary) is at least $\Omega( \sqrt{N} \eps^{-1} )$, we would be able to use a similar argument to the $\ell_2$ case to show the fraction of non-replicable points is at least $\sqrt{N/m} \eps^{-1} \leq O(\rho)$, implying the desired sample complexity lower bound on $m$.

The main difficulty in the $\ell_\infty$ setting is an issue we brushed under the rug in the previous section: the cube boundary. In particular, the naive way of lower bounding the surface area of $\{F_{\hat o}\}$ is to apply the isoperimetric inequality to $\partial (\cup_{\hat o} F_{\hat o} \cap \cube)$, then subtract out the boundary of the cube. Since we now measure error in $\ell_\infty$-norm, however, we can only bound the radius of $F_{\hat{o}}$ by $\sqrt{N} \eps$ and the above method gives surface area $A \geq \sqrt{N} \eps^{-1} - O(N)$, useless when $\eps > \frac{1}{\sqrt{N}}$.

To circumvent this, we need to somehow apply the isoperimetric inequality to $\partial F_{\hat o} \backslash \partial \mathcal C$ directly. To this end, first observe that, by correctness, $F_{\hat{o}}$ can only intersect a $\delta$-fraction of faces of $\cube$ not incident to the corner $\hat{o}$.
Moreover, if $F_{\hat{o}}$ only intersects such faces, we can create a valid surface by \textit{reflecting} $F_{\hat o}$ across the cube boundary. This forces points on the cube boundary to become interior while otherwise `copying' the boundary of $F_{\hat o}$ itself $2^N$ times. Since reflecting only changes the $\ell_\infty$-radius by a constant factor, we can now apply the isoperimetric inequality to the reflected set with no asymptotic loss to get the desired lower bound on $\partial F_{\hat{o}}$.

\subsubsection{Adaptivity}

Prior replicable algorithms in the literature are \textit{non-adaptive}: they draw a fixed number of samples ahead of time, typically incurring a quadratic dependence on $\rho$ as a result. We show this strategy is wasteful. By instead allowing the algorithm to terminate early based on initial observations, we can reduce this cost to just \emph{linear} expected overhead. As discussed in \Cref{sec:intro-eff}, this also leads to an adaptive composition theorem with improved overhead.

Here we overview our most basic adaptive algorithm, testing the bias of a single coin (say between $1/3$ and $2/3$). Prior algorithms based on statistical queries compute the empirical bias of the coin using a \emph{fixed} number of samples and compare it with a random threshold, ensuring sufficient samples are drawn such that even if the threshold is within $O(\rho)$ of the bias $p$, our estimate still lands on the correct side.
Our adaptive algorithm samples a random threshold $r$ and draws samples adaptively until it determines whether the true bias $p$ lies above or below the threshold $r$.
The key observation is that when the true bias $p$ and the random threshold $r$ are far apart, we only need $\frac{1}{|r - p|^2}$ samples to determine with high confidence whether the true bias is above or below the threshold.
Since $r$ uniformly random, $|r - p|$ is (roughly) uniform over $(\rho, 1/3)$ and
the expected sample complexity is
\begin{equation*}
    \int_{\rho}^{1/3} \frac{1}{x^2} dx = \bigO{\frac{1}{\rho}}.
\end{equation*}
Using similar ideas, we also build an adaptive algorithm for the heavy hitters problem. This allows us to run an adaptive variant of replicability amplification (similar to \cite{impagliazzo2022reproducibility,DBLP:conf/stoc/BunGHILPSS23}) to show \textit{any} replicable algorithm can be run with only linear expected overhead.

\subsection{Further Related Work}\label{sec:related-work}

\paragraph{Replicability:}

Algorithmic replicability was independently introduced in \cite{impagliazzo2022reproducibility, DBLP:conf/nips/GhaziKM21}. Replicable algorithms have since been developed for PAC Learning \cite{DBLP:conf/stoc/BunGHILPSS23,DBLP:conf/icml/KalavasisKMV23}, reinforcement learning \cite{KVYZ23, eaton2023replicable}, bandits \cite{DBLP:conf/iclr/EsfandiariKK0MV23, komiyama2023improved}, clustering \cite{EKMVZ23}, and large-margin halfspaces \cite{impagliazzo2022reproducibility, kalavasis2024replicable}. Several works have shown tight statistical connections between replicability and other notions of algorithmic stability \cite{DBLP:conf/stoc/BunGHILPSS23, DBLP:conf/icml/KalavasisKMV23, CCMY23, CMY23, MSS23, dixon2024list}. Most closely related to our work are the discussed algorithms (and lower bounds) for $N$-Coins and mean estimation problems in \cite{KVYZ23, DBLP:conf/stoc/BunGHILPSS23} respectively, and the work of \cite{dixon2024list} studying `list' or `certificate' replicability for $N$-Coins. The latter in particular uses a similar partitioning strategy to our lower bound, but relies on totally different properties of the partition.

\paragraph{Geometry and Algorithmic Stability:} Our work adds to a growing line of connections between geometry, topology, and algorithmic stability. Such ideas were first introduced in the study of pure differential privacy in \cite{hardt2010geometry}, where packing lower bounds are now a standard tool \cite{bhaskara2012unconditional, nikolov2013geometry, beimel2014bounds, vadhan2017complexity}.
Impossibility results for related notions of replicability, specifically list replicability, certificate replicability, and global stability have been obtained via geometric and topological tools \cite{CMY23, CCMY23, dixon2024list}, in particular via the Sperner lemma and variants of Borsuk-Ulam.


\paragraph{Tilings and Rounding:}

The basic connection between replicability, tilings, and randomized rounding was first observed in \cite{impagliazzo2022reproducibility}. The authors used $1$-dimensional randomized rounding to give the first replicable algorithms for statistical queries and heavy hitters, and the high dimensional scheme of \cite{kindler2012spherical} to build a replicable PAC-learner for large margin halfspaces. The authors also analyzed rounding via cubical tiling, equivalent to independent handling of each coordinate. 


There are many known constructions of isoperimetric tilings \cite{rogers50,butler1972simultaneous,micciancio2004almost,kindler2012spherical,naor2023integer}. Our work is mostly closely related to \cite{kindler2012spherical}, who also observe their construction induces a `noise resistant' rounding scheme. Both our work and \cite{kindler2012spherical} critically rely on the Buffon needle theorem to analyze surface area and noise resistence. The main difference is that \cite{kindler2012spherical} study a specific randomized framework that in some sense `automatically' results in rounding, while we show how to take an arbitrary tiling and transform it into a rounding algorithm.

\section{Preliminaries}
\label{sec:prelims}

We begin with a few preliminaries, starting with basic probability and computational definitions, then necessary background in geometry, information theory, and finally discuss the formal models of sample access used in this work.

Let $\tvd{X, Y}$ denote the total variation distance between two distributions $X, Y$.
We denote a Bernoulli distribution with parameter $p$ as $\BernD{p}$, 
and a Binomial distribution with parameters $n, p$ as $\BinomD{n}{p}$.


Let $A,B \in \R^{N \times N}$ be two symmetric matrices. 
We say $A$ is bounded from above by $B$ if $A$ is at most $B$ in Loewner order, i.e., $B - A$ is a positive semi-definite matrix.

We denote by $\SO(N)$ the set of 
$\R^{N \times N}$
matrices representing the special orthogonal group, i.e., the set of all rotation matrices.

We say an algorithm $\innerAlg$ is \textit{efficient} if it runs in time polynomial in its input size.
An algorithm is \textit{$\outerAlg$-efficient} if, given oracle access to $\outerAlg$, it makes polynomially many queries to $\outerAlg$ and runs in time polynomial of its input size and the size of the query responses from $\outerAlg$. 

At many points throughout the paper, it will be convenient to set some variable to be an appropriate constant multiple of some expression.
Towards this, we use $x \ll y$ to denote that $x \leq c y$ for some sufficiently small constant $c$ and $x \gg y$ to denote that $x \geq c y$ for some sufficiently large constant $c$.


\subsection{Geometry}

For any set $S \subset \R^{n}$, let $\boundary S, \interior(S), \exterior(S), \closure(S)$ denote the boundary, interior, exterior, and closure of $S$ respectively.
Given two sets $S, T \subset \R^{n}$, denote their Minkowski sum $S + T = \set{s + t \given s \in S, t \in T}$.
Given $u =[u_1, \cdots, u_n] \in \R^n$ and $v = [v_1, \cdots, v_m] \in \R^m$, we denote by $u \oplus v$ their direct sum $[u_1, \cdots, u_n, v_1, \cdots v_m] \in \R^{n+m}$.
Similarly, given two sets $S \subset \R^n$ and $T \subset \R^m$, we denote $S \oplus T = \{u \oplus v \mid u \in S, v \in T  \}$.

We require the following notion of `niceness' for sets.
 \begin{definition}
    \label{def:semi-algebraic}
    A set $S \subset \R^{n}$ is \emph{semialgebraic} if it is a finite boolean combination of sets of the form $\set{x \given f(x) > 0}$ and $\set{x \given g(x) = 0}$ where $f, g$ are real polynomials in $x_1, \dotsc, x_{n}$.
\end{definition}
A consequence of a set being semialgebraic and compact
in $\R^N$ is that its boundary is a piece-wise smooth surface, i.e., the disjoint union of finitely many smooth surfaces of Hausdorff  dimension $N-1$, and some other sets of Hausdorff dimension at most $N-2$ \cite{kindler2012spherical}.

Finally we will use the standard isoperimetric inequality.
\begin{lemma}[Isoperimetric Inequality]
    \label{lemma:isoperimetric-inequality}
    Suppose $S \subset \R^{N}$ is a set with finite volume $\vol(S)$.
    Then, the surface area\footnote{Formally here `surface area' means Minkowski content, and the inequality holds for sets whose closure has finite measure. All sets we consider are bounded and have piece-wise smooth boundaries, in which case the Minkowski content is equivalent to the standard Haudorff measure.
    }
    $\area(\boundary S)$ is at least
    \begin{equation*}
        \area(\boundary S) \geq N \vol(S)^{(N-1)/N} \vol(B_1)^{1/N},
    \end{equation*}
    where $\vol(B_1) \geq \left( \frac{2}{\sqrt{N}} \right)^{N}$ is the volume of the unit ball in $N$ dimensions.
\end{lemma}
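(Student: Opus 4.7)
The plan is to derive the inequality from the Brunn--Minkowski inequality combined with the Minkowski content characterization of surface area. For a bounded set $S$ with piecewise smooth boundary --- as guaranteed throughout the paper by the semialgebraic hypothesis of Definition 2.2 and the subsequent remark --- the surface area admits the representation
$$\area(\boundary S) = \liminf_{\varepsilon \to 0^+} \frac{\vol(S + \varepsilon B_1) - \vol(S)}{\varepsilon},$$
so it suffices to lower bound the one-sided derivative of $\vol(S + \varepsilon B_1)$ at $\varepsilon = 0$.

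The main step is to apply Brunn--Minkowski to the Minkowski sum $S + \varepsilon B_1$. Using $\vol(\varepsilon B_1) = \varepsilon^N \vol(B_1)$, it gives
$$\vol(S + \varepsilon B_1)^{1/N} \geq \vol(S)^{1/N} + \varepsilon \, \vol(B_1)^{1/N}.$$
Raising both sides to the $N$-th power and expanding to first order in $\varepsilon$,
$$\vol(S + \varepsilon B_1) \geq \vol(S) + N \varepsilon \, \vol(S)^{(N-1)/N} \vol(B_1)^{1/N} + O(\varepsilon^2).$$
Subtracting $\vol(S)$, dividing by $\varepsilon$, and letting $\varepsilon \to 0^+$ then yields the stated isoperimetric inequality.

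The auxiliary bound $\vol(B_1) \geq (2/\sqrt{N})^N$ follows by inscribing a cube in the unit ball: the cube $[-1/\sqrt{N}, 1/\sqrt{N}]^N$ has each of its vertices at Euclidean distance $\sqrt{N \cdot (1/\sqrt{N})^2} = 1$ from the origin, so by convexity it sits inside the closed unit ball and monotonicity of volume gives the claim. The one genuine subtlety --- and the main potential obstacle --- is confirming that the Minkowski content used above coincides with the $(N-1)$-dimensional Hausdorff measure implicit in the lemma's statement. For bounded sets with piecewise smooth boundary this equivalence is classical, and once it is in hand the Brunn--Minkowski step supplies essentially the entire content of the argument.
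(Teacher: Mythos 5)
The paper does not actually supply a proof of Lemma~\ref{lemma:isoperimetric-inequality}; it is cited as a standard fact, with a footnote merely clarifying that ``surface area'' means Minkowski content and is equivalent to Hausdorff measure for the bounded, piecewise-smooth boundaries that arise throughout. Your derivation via Brunn--Minkowski is exactly the classical proof of this inequality and is correct: applying $\vol(S + \varepsilon B_1)^{1/N} \geq \vol(S)^{1/N} + \varepsilon\,\vol(B_1)^{1/N}$, expanding to first order, and passing to the $\liminf$ in the Minkowski-content definition yields precisely the stated bound, and the inscribed cube $[-1/\sqrt{N}, 1/\sqrt{N}]^N$ gives $\vol(B_1) \geq (2/\sqrt{N})^N$. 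The one caveat you flag --- equivalence of Minkowski content and $(N-1)$-dimensional Hausdorff measure --- is exactly what the paper's footnote already disposes of, so there is no gap.
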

We note the following two immediate corollaries of the isoperimetric inequality:
    \begin{enumerate}
        \item For any set $S$:
        \[
        \area(\boundary S) \geq 2 \sqrt{N} \vol(S)^{(N-1)/N}.
        \]
        \item If $\boundary S$ has covering radius $\eps$, i.e. $\boundary S \subset B_\varepsilon(x)$ for some $x \in \R^N$, then:
        \[
                \area( \boundary S ) \geq \frac{N}{\varepsilon}\; \vol(S).
        \]
    \end{enumerate}


\subsection{Information Theory}

We need the following tools from information theory.
\begin{definition}
    \label{def:entropy}
    Let $X$ be a random variable over domain $\domain$.
    The \emph{entropy} of $X$ is
    \begin{equation*}
        H(X) = \sum_{x \in \domain} \Pr(X = x) \log \frac{1}{\Pr(X = x)}.
    \end{equation*}
\end{definition}

\begin{definition}
    \label{def:k-l-divergence}
    Let $X, Y$ be random variables over domain $\domain$.
    The \emph{KL-divergence} of $X, Y$ is
    \begin{equation*}
        D(X||Y) = \sum_{x \in \domain} \Pr(X = x) \log \frac{\Pr(X = x)}{\Pr(Y = x)}.
    \end{equation*}
\end{definition}



\begin{definition}
    \label{def:mutual-info}
    Let $X, Y$ be random variables over domain $\domain$.
    The \emph{mutual information} of $X, Y$ is
    \begin{equation*}
        I(X: Y) = \sum_{x, y \in \domain} \Pr((X, Y) = (x, y)) \log \frac{\Pr((X, Y) = (x, y))}{\Pr(X = x) \Pr(Y = y)}.
    \end{equation*}
\end{definition}

\begin{lemma}[Data Processing Inequality]
    \label{lemma:data-processing-inequality}
    Let $X, Y, Z$ be random variables over domain $\domain$ such that $Z$ is independent of $X$ conditioned on $Y$.
    Then
    \begin{equation*}
        I(X: Y) \geq I(X: Z).
    \end{equation*}
\end{lemma}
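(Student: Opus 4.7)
My plan is to derive the inequality from the chain rule for mutual information combined with non-negativity of conditional mutual information. The key identity I would first establish is the two expansions of the joint mutual information $I(X : (Y, Z))$:
\begin{equation*}
    I(X : (Y, Z)) = I(X : Y) + I(X : Z \mid Y) = I(X : Z) + I(X : Y \mid Z),
\end{equation*}
where the conditional mutual information is defined by $I(X : Z \mid Y) := \sum_y \Pr(Y = y)\, I(X : Z \mid Y = y)$ and analogously for $I(X : Y \mid Z)$. Given this chain rule, the conditional-independence hypothesis will force $I(X : Z \mid Y) = 0$, and non-negativity of $I(X : Y \mid Z)$ will then yield $I(X : Y) - I(X : Z) = I(X : Y \mid Z) \geq 0$, which is the desired inequality.

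To prove the chain rule I would expand $I(X : (Y, Z))$ directly from \Cref{def:mutual-info}, factor $\Pr((X, Y, Z) = (x, y, z))$ as $\Pr(X = x \mid Y = y, Z = z)\, \Pr((Y, Z) = (y, z))$, and regroup the log-terms. Marginalizing over $Z$ inside one group recovers $I(X : Y)$, and the remaining piece collapses to $I(X : Z \mid Y)$; swapping the roles of $Y$ and $Z$ produces the second equality. The Markov assumption that $Z$ is independent of $X$ given $Y$ translates directly to $\Pr((X, Z) = (x, z) \mid Y = y) = \Pr(X = x \mid Y = y)\, \Pr(Z = z \mid Y = y)$ for every $y$ in the support of $Y$, so $I(X : Z \mid Y = y) = 0$ by inspection of its KL-form, and averaging over $y$ gives $I(X : Z \mid Y) = 0$. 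Non-negativity of $I(X : Y \mid Z)$ reduces pointwise to non-negativity of KL divergence, i.e., Gibbs' inequality, which follows from Jensen applied to the convex function $-\log$.

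I do not anticipate any genuine obstacle here; the statement is classical and the argument is a standard three-line chain-rule proof once Gibbs' inequality is available. The only bookkeeping subtlety is handling $y$ or $z$ with zero marginal probability, which is resolved by the usual convention that terms involving $0 \log 0$ in the defining sums are set to zero. If desired, one could alternatively give a one-shot proof by writing $I(X : Y) - I(X : Z)$ as a single expectation of a log-ratio and recognizing it as a non-negative KL divergence between the joint law of $(X, Y, Z)$ and a reference measure consistent with the Markov structure, but the chain-rule route is cleaner and reuses \Cref{def:mutual-info} and \Cref{def:k-l-divergence} transparently.
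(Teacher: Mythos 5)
The paper states the Data Processing Inequality as a preliminary without proof, treating it as a standard textbook fact, so there is no "paper's approach" to compare against. Your proposed argument via the chain rule for mutual information, the vanishing of $I(X:Z\mid Y)$ under the Markov hypothesis $X \perp Z \mid Y$, and non-negativity of $I(X:Y\mid Z)$ (Gibbs) is the canonical proof and is correct as outlined, including the observation that zero-probability conditioning events contribute nothing by the $0\log 0 = 0$ convention.
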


\subsection{Sampling Models and Adaptivity}


The problems that we consider in this paper have an underlying distribution on $\R^{N}$.
In the mean estimation problem, the distribution is a Gaussian on $\R^{N}$, while in the bias estimation problem, we have a binary product distribution on $\set{0, 1}^{N}$.

We consider two natural models of sample access.
A \emph{vector sample} consists of receiving a vector in $\R^{N}$. 
For example, in mean estimation a vector sample consists of a vector drawn from the Gaussian distribution, while in the $N$-Coin problem a vector sample consists of flipping each coin once, and thus receiving a vector in $\set{0, 1}^{N}$.
A \emph{coordinate sample} consists of receiving a real-valued sample from the marginal distribution restricted to a specific coordinate.
The algorithm can choose freely the coordinate it samples from.
In the $N$-coin problem, this corresponds to flipping a specific coin and observing its outcome.
The coordinate sample complexity is given by
the total number of coordinate samples consumed by the algorithm.
Note that an algorithm taking $m$ vector samples implies that there is an algorithm taking $Nm$ coordinate samples.

We also distinguish adaptive and non-adaptive algorithms.
Suppose the algorithm has sample access to $N$ distributions, $\distribution_1, \dotsc, \distribution_{N}$.

\begin{definition}
    \label{def:non-adaptive}
    An algorithm $\innerAlg$ is non-adaptive if there exist $\set{m_i}_{i \in [N]}$ such that $\innerAlg$ always draws $m_i$ samples from the $i$-th distribution.
\end{definition}

Otherwise, we say an algorithm is adaptive.
We remark the non-adaptive model is essentially equivalent to restricting the algorithm to drawing \emph{vector} samples, where each vector corresponds to a flip on each of the $N$ coins.
This is the more typical setup in mean estimation, discussed in \Cref{sec:isoperimetry-from-replicability} and \Cref{sec:replicability-from-isoperimetry}, where we are simply given sample access to a distribution over $\R^N$. 
In hypothesis testing, this naturally corresponds to scenarios in which a single physical test results in several outcomes, such as taking a blood sample for a suite of diseases. 
In this case, the blood sample (and all the tests the epidemiologist runs on it) corresponds to a vector sample.
On the other hand, it might be the case the epidemiologist only checks the blood sample for a single disease. This would correspond to a coordinate sample.

\section{Single Hypothesis Testing and the Coin Problem}
\label{sec:repro-hypothesis-testing}

In this section, we present our main results for single hypothesis testing. We first establish an equivalence between the replicable hypothesis testing problem and the standard replicable coin problem. We then introduce sample-adaptivity as a technique to obtain replicable algorithms with linear expected sample complexity (Theorem \ref{thm:r-adapt-coin-problem}). Finally we show a matching lower bound (for the constant success regime), introducing our general framework for replicability sample lower bounds via mutual information (Theorem \ref{thm:q0-num-lower-bound}).

\subsection{Hypothesis Testing and the Coin Problem}
For convenience, we recall our formalization of a $(p_0,q_0)$-hypothesis test.

\defhypothesistesting

At a glance, it is clear hypothesis testing is closely related to a standard paradigm in computer science, \textit{bias estimation}. For the moment, we focus on a classical promise variant called the coin problem: given a (possibly biased) coin, how many flips do we need to determine whether the coin's bias is (at most) $p_0$, or at least some $q_0 >p_0$?

\begin{restatable}[Coin Problem]{definition}{defonecoinproblem}
    \label{def:coin-problem}
    Let $0 \leq p_0 < q_0 \leq 1$ be probabilities in $[0, 1]$. 
    Let $\delta < \frac{1}{2}$.
    An algorithm $\innerAlg$ solves the $(p_0, q_0)$-coin problem if given sample access to a coin with bias $p \in [0, 1]$, $\innerAlg$ satisfies the following:
    \begin{enumerate}
        \item If $p = p_0$, then $\Pr_{S, r} (\innerAlg(S; r) = \accept) < \delta$.
        \item If $p \geq q_0$, then $\Pr_{S, r} (\innerAlg(S; r) = \reject) < \delta$.
    \end{enumerate}
    where $S \sim \distribution^{m}$ for $\innerAlg$ taking $m$ samples from $\distribution$ and $r$ is the randomness of algorithm $\innerAlg$.
\end{restatable}

The coin problem is computationally and statistically equivalent to hypothesis testing. With this in mind, beyond this point we will typically focus directly on the coin problem and its variants, giving hypothesis testing bounds as immediate corollaries.

\begin{lemma}
    \label{lem:hypo-test-to-coin}
    Any $\rho$-replicable algorithm $\innerAlg$ for the $(p_0, q_0)$-hypothesis testing problem induces a $\rho$-replicable algorithm $\outerAlg$ for the $(p_0, q_0)$-coin problem with the same sample complexity and asymptotic runtime. The converse also holds.
\end{lemma}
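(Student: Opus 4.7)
The plan is to give explicit two-way reductions, each preserving the sample count, runtime, and replicability parameter exactly (up to an $O(m)$ post-processing overhead).

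\emph{Forward direction} (hypothesis tester $\innerAlg \Rightarrow$ coin solver $\outerAlg$). Given coin flips $c_1,\ldots,c_m \sim \BernD{p}$, I would synthesize real-valued samples $x_i \in [0,1]$ by drawing a fresh auxiliary uniform $U_i \sim \UnifD{[0,1]}$ and setting
\[
x_i = \begin{cases} p_0 \cdot U_i & \text{if } c_i = 1, \\ p_0 + (1-p_0)\, U_i & \text{if } c_i = 0. \end{cases}
\]
A one-line calculation shows $\Pr[x_i < p_0] = p$ and, when $p = p_0$, that the conditional densities $\mathbf{1}/p_0$ on $[0,p_0]$ and $\mathbf{1}/(1-p_0)$ on $[p_0,1]$ combine to make $x_i$ exactly $\UnifD{[0,1]}$. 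Thus $\outerAlg$ simulates $x_1,\ldots,x_m$ and outputs $\innerAlg(x_1,\ldots,x_m;\,r)$, matching the acceptance regions of the hypothesis-testing promise with those of the coin problem.

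\emph{Converse} (coin solver $\innerAlg \Rightarrow$ hypothesis tester $\outerAlg$). This direction is purely deterministic: on input $x_1,\ldots,x_m$, threshold to $b_i = \mathbf{1}[x_i < p_0]$ and invoke $\innerAlg(b_1,\ldots,b_m;\,r)$. Under the null hypothesis $\distribution = \UnifD{[0,1]}$ each $b_i \sim \BernD{p_0}$, while under the alternative $\Pr_{x \sim \distribution}[x < p_0] \ge q_0$ each $b_i \sim \BernD{p}$ with $p \ge q_0$, which is precisely the coin problem's promise.

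The main step requiring care is showing $\rho$-replicability transfers. In the converse this is immediate: the thresholding map is deterministic, so two independent hypothesis sample streams $X, X'$ yield two independent coin streams $B, B'$, and any disagreement of $\outerAlg$ forces a disagreement of $\innerAlg$ with the same shared $r$. In the forward direction the only subtlety is whether the auxiliary uniforms $U_i$ are shared between the two runs. The cleanest resolution is to treat them as freshly drawn (independent of the shared string $r$) in each run, so that the two simulated sample streams $X$ and $X'$ are genuinely i.i.d.\ from the induced distribution on $[0,1]$; the $\rho$-replicability of $\innerAlg$ under shared $r$ then applies verbatim. (Equivalently, one can view $(c_i, U_i)$ as a single augmented sample and push the $U_i$'s into the sample-generation process rather than the shared randomness.) Since both reductions simply preprocess the samples coordinatewise, the sample complexity and runtime match up to an additive $O(m)$, completing the equivalence.
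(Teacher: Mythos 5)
Your proof is correct and follows essentially the same route as the paper: the forward direction simulates samples by mapping heads/tails to uniforms on $[0,p_0]$ and $[p_0,1]$ respectively, and the converse thresholds at $p_0$, exactly as in the paper's argument. Your explicit discussion of why the auxiliary uniforms $U_i$ must be drawn fresh rather than shared is a nice clarification of a point the paper handles only implicitly, but it does not constitute a different approach.
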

\noindent\textbf{Hypothesis Testing $\to$ Coin Problem:} 
\begin{proof} 
Given a $\rho$-replicable algorithm $\innerAlg$ for the $(p_0, q_0)$-hypothesis testing problem, our goal is to design a $\rho$-replicable algorithm $\outerAlg$ solving the $(p_0, q_0)$-coin problem.

    Recall in the latter problem we are given a series of $m$ coin tosses $\set{b_i}$ with $b_i \in \set{H, T}$ drawn from some $p$-biased coin. We show that over the randomness of these coins, it is possible to simulate drawing $m$ samples $\{p_i\}$ from a distribution $\distribution_p$ depending only on the bias parameter such that if $p =p_0$, $\distribution_p$ is uniform over $[0,1]$, and if $p \geq q_0$, then $\Pr_{x \sim \distribution_p} (x < p_0) \geq q_0$. Running $\innerAlg$ on the resulting samples then solves the original instance. Replicability is maintained since two independent executions over a $p$-biased coin result in two independent samples from the same distribution $\distribution_p$.

    We generate the distribution $\distribution_p$ via the following procedure. Given $b_i=H$, generate a random sample $p_i \sim \UnifD{[0, p_0]}$. Given $T$, generate a random sample from $p_i \sim \UnifD{[p_0, 1]}$. It is easy to check that if $p=p_0$, the resulting distribution over the randomness of $b_i$ is uniform, while if $p \geq q_0$, $\Pr_{x \sim \distribution} (x < p_0) \geq q_0$ as desired. Finally note that $\outerAlg$ uses the same number of samples as $\innerAlg$, and the only additional cost in runtime is generating the $p_i$ given $b_i$, which adds no asymptotic cost.
\end{proof}
\noindent\textbf{Coin Problem $\to$ Hypothesis Testing:}
\begin{proof}
    Given a $\rho$-replicable algorithm $\innerAlg$ for the $(p_0, q_0)$-coin problem, our goal is to design a $\rho$-replicable algorithm $\outerAlg$ solving the $(p_0, q_0)$-hypothesis testing problem.

    The strategy is similar. Given samples $\set{p_i}$ from the hypothesis testing distribution $\distribution$, if $p_i \leq p_0$ we let $b_i = H$, and otherwise set $b_i=T$. If $\distribution$ is uniform, then the resulting distribution over coin flips is $p_0$-biased. If $\Pr_{x \sim \distribution} (x \leq p_0) \geq q_0$, the resulting coin is at least $q_0$ biased. Replicability, sample complexity, and runtime follow as above.    
\end{proof}
While the coin problem defined above requires $p = p_0$ in the $\reject$ case (and this is necessary for the equivalence), we remark that all our algorithms hold even if we require correctness for $p \leq p_0$ and all our lower bounds hold when we only require correctness for $p = p_0$. For the rest of this work, we focus on the stricter variant that requires correctness for any $p \leq p_0$.
\subsection{The Coin Problem with Linear Overhead}

A natural strategy for coin problem (likewise hypothesis testing) is to use \textit{statistical queries}, one of the most powerful primitives in replicable algorithm design. In particular, for $x \sim \BernD{p}$ drawn from a $p$-biased coin, let $\phi(x)$ be the indicator of whether $x < p_0$. If we can estimate the expectation of $\phi(x)$ up to $\frac{q_0 - p_0}{2}$ error, we can determine whether $\E{\phi(x)} \leq p_0$ or $\E{\phi(x)} \geq q_0$. This can be done replicably in a blackbox fashion through \cite{impagliazzo2022reproducibility}'s replicable SQ-oracle, giving the following upper bound for hypothesis testing (see \Cref{app:missing-proofs-hypothesis-testing} for details).
\begin{restatable}[\cite{impagliazzo2022reproducibility, KVYZ23}]{theorem}{rstathypothesistest}
    \label{thm:r-stat-hypothesis-testing}
    Let $0 < \delta < \rho \leq 1$.
    There is an efficient $\rho$-replicable algorithm that solves $(p_0, q_0)$-coin problem with sample complexity $\bigO{\frac{\log(1/\delta)}{(q_0 - p_0)^2 \rho^2}}$.
\end{restatable}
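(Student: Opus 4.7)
The plan is to reduce the coin problem to a single invocation of the replicable statistical query (SQ) oracle of \cite{impagliazzo2022reproducibility}. For a coin with bias $p$, take $\phi(x) = x$ so that $\E_{x \sim \BernD{p}}[\phi(x)] = p$. If we can produce a $\rho$-replicable estimate $\hat p$ of $p$ to additive accuracy $\tau = (q_0 - p_0)/2$ with failure probability at most $\delta$, then thresholding at $(p_0+q_0)/2$ yields the decision procedure: output \accept{} if $\hat p \leq (p_0+q_0)/2$ and \reject{} otherwise. Correctness in both cases $p \leq p_0$ and $p \geq q_0$ follows from $\tau = (q_0 - p_0)/2$, and replicability of the overall tester follows from replicability of $\hat p$ (since the thresholding is a deterministic post-processing).

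The remaining task is to implement the replicable SQ subroutine with the stated sample budget. I would use the standard random-threshold rounding scheme: first, draw $m = \Theta(\log(1/\delta) / (\tau \rho)^2)$ samples and compute the empirical mean $\tilde p = \frac{1}{m}\sum_{i} x_i$; by Hoeffding, $|\tilde p - p| \leq \tau\rho/2$ except with probability at most $\delta$. Second, sample an offset $u \sim \UnifD{[0, 2\tau]}$ (shared across both runs via the algorithm's internal randomness $r$) and round $\tilde p$ to the nearest point in the grid $\{u + 2\tau k : k \in \Z\}$, calling the result $\hat p$. The output of the rounding is within $\tau$ of $p$ on the correctness event, giving the desired accuracy.

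For replicability, consider two independent runs producing $\tilde p, \tilde p'$. Conditioned on both being within $\tau\rho/2$ of $p$ (which happens except with probability $2\delta$), we have $|\tilde p - \tilde p'| \leq \tau\rho$. The two runs round to different grid points only if some grid point lies in the interval between $\tilde p$ and $\tilde p'$. Because the grid is uniformly shifted by $u$ with period $2\tau$, the probability of such a grid point existing is at most $\frac{|\tilde p - \tilde p'|}{2\tau} \leq \rho/2$. A union bound with the $2\delta$ failure probability bounds the replicability failure by $\rho/2 + 2\delta \leq \rho$ under the assumption $\delta < \rho$ (absorbing constants). The total sample complexity is $m = O(\log(1/\delta)/((q_0-p_0)^2 \rho^2))$ as claimed, and the algorithm is clearly efficient: it performs $O(m)$ arithmetic operations plus a single draw of $u$ and a rounding step.

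There is no real obstacle here; the only care needed is to track constants so that the combined error and replicability failure probabilities fit inside $\delta$ and $\rho$, respectively. This is purely routine and can be dispatched by rescaling $\tau \mapsto \tau/c$ and $\rho \mapsto \rho/c$ for a small universal constant $c$.
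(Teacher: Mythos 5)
Your approach matches the paper's: reduce the coin problem to a single replicable statistical query with tolerance $\Theta(q_0-p_0)$ and then threshold at $(p_0+q_0)/2$, with the paper invoking \cite{impagliazzo2022reproducibility}'s $\mathrm{rSTAT}$ oracle as a black box while you rebuild it inline. The rebuild is correct up to the constant tracking you defer, with one small slip: for nearest-grid-point rounding with spacing $2\tau$, the event forcing different outputs is that a \emph{rounding boundary} (a midpoint $u+\tau+2\tau k$ between adjacent grid points) lands between $\tilde p$ and $\tilde p'$, not a grid point itself --- though since those midpoints also form a uniformly shifted $2\tau$-periodic grid, the probability bound $\rho/2$ is unchanged, and likewise the rounding step itself introduces error up to $\tau$ (not $0$), so the overall estimate is within $3\tau/2$ of $p$, which is again absorbed by your rescaling of $\tau$.
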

We improve over the naive SQ approach in two senses. First, leveraging the structure of the coin problem, we improve the sample complexity when $(p_0,q_0)$ are small. Second, while worst-case quadratic dependence on $\rho$ is tight \cite{impagliazzo2022reproducibility}, we show \textit{typically} this is quite wasteful. We introduce a new algorithmic paradigm using only
\textit{linear} overhead with no loss in worst-case complexity.



\begin{restatable}{theorem}{radaptcoinproblem}
    \label{thm:r-adapt-coin-problem}
    Let $0 < \delta < \rho \leq 1$.
    Algorithm \ref{alg:r-adapt-coin-problem} is a $\rho$-replicable algorithm solving the $(p_0, q_0)$-coin problem with expected sample complexity $\bigO{\frac{q_0 \log(1/\delta)}{(q_0 - p_0)^2 \rho}}$ and worst-case complexity $\bigO{\frac{q_0 \log(1/\delta)}{(q_0 - p_0)^2 \rho^2}}$.
\end{restatable}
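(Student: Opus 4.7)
The plan is to follow the sketch from the technical overview: draw a single random threshold $r$ uniformly from an interval of length $\Theta(q_0-p_0)$ that contains both $p_0$ and $q_0$ (say $[p_0-(q_0-p_0)/2,\, q_0+(q_0-p_0)/2]$), and then flip the coin adaptively, maintaining an empirical bias $\hat p_t$, stopping the first time $|\hat p_t - r|$ exceeds a confidence radius of the form $C\sqrt{q_0 \log(1/\delta)/t}$. If no decisive separation has occurred after $m_{\max} = \Theta\!\left(q_0\log(1/\delta)/((q_0-p_0)^2\rho^2)\right)$ flips, the algorithm simply outputs based on $\operatorname{sign}(\hat p_{m_{\max}} - r)$. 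The random threshold $r$ is part of the shared internal randomness, so two runs on independent samples share the same $r$.

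For correctness, I would use a multiplicative/Bernstein-style Chernoff bound, which gives $|\hat p_t - p| \lesssim \sqrt{p\log(1/\delta)/t}$ with probability $1-\delta$, extended to hold uniformly across all adaptive stopping times via a geometric doubling + union bound (the extra $\log\log$ factor is absorbed into the $\log(1/\delta)$ as usual). Because $\operatorname{Var}(\hat p_t) \leq p(1-p)/t \leq q_0/t$ in the regime $p \leq q_0 \leq 1/2$ (the $p \geq q_0$ case is symmetric), the factor $q_0$ (rather than $1$) in the sample complexity comes for free from this Bernstein step. The worst-case bound is then immediate: the algorithm never exceeds $m_{\max}$ flips.

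For replicability, I would couple two executions so that they share $r$ and any other internal randomness, leaving only the coin flips independent. Define the bad event $E = \{|r - p| \leq c\rho(q_0-p_0)\}$ for a suitable absolute constant $c$. Since $r$ is uniform on an interval of length $\Theta(q_0-p_0)$, $\Pr[E] \leq O(\rho)$ regardless of the true bias $p$ (this works even for $p \notin [p_0,q_0]$, since the distance from a point outside $[p_0,q_0]$ to a uniform point inside still has a density bounded by $1/(q_0-p_0)$). Conditioned on $\neg E$, the confidence radius evaluated at $m_{\max}$ satisfies $C\sqrt{q_0\log(1/\delta)/m_{\max}} = O(\rho(q_0-p_0)) < |r-p|$, so with probability $1-\delta$ each run correctly identifies which side of $r$ the true bias lies on; both runs therefore output identically. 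Union bounding over $E$ and the two correctness failures gives disagreement probability at most $O(\rho + \delta) = O(\rho)$ after rescaling constants.

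Finally, for the expected sample complexity, I would condition on the value of $|r-p|=x$. A standard anytime-Chernoff argument shows the stopping time satisfies $\mathbb{E}[T \mid |r-p|=x] = O\!\left(\min\{q_0\log(1/\delta)/x^2,\ m_{\max}\}\right)$. Since $r$ is uniform, $|r-p|$ has density at most $1/(q_0-p_0)$ on $[0,\Theta(q_0-p_0)]$, so
\begin{equation*}
\mathbb{E}[T] \ \leq \ \int_0^{\Theta(\rho(q_0-p_0))} \frac{m_{\max}}{q_0-p_0}\, dx \ +\ \int_{\Theta(\rho(q_0-p_0))}^{\Theta(q_0-p_0)} \frac{1}{q_0-p_0}\cdot \frac{C\, q_0 \log(1/\delta)}{x^2}\, dx,
\end{equation*}
where both integrals evaluate to $O\!\left(q_0\log(1/\delta)/((q_0-p_0)^2\rho)\right)$, giving the claimed linear-in-$\rho^{-1}$ expected bound. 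The main obstacle I anticipate is making the anytime confidence bound tight enough that the unavoidable $\log\log$ overhead from uniform control over adaptive stopping times can be absorbed into $\log(1/\delta)$, and verifying that the coupling argument gives replicability uniformly over all $p\in[0,1]$ (not just $p\leq p_0$ or $p\geq q_0$) — this is where using a threshold range strictly larger than $[p_0,q_0]$ is important, so that the density argument bounding $\Pr[E]$ remains valid for every possible true bias.
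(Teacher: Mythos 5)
The overall strategy---a shared random threshold followed by adaptive early stopping, expected cost controlled by integrating $1/|r-p|^2$ against the threshold density---matches the paper's Algorithm~\ref{alg:r-adapt-coin-problem}, and your sample complexity arithmetic is correct. However, there is a genuine gap in how you draw the threshold. You propose $r \sim \mathrm{Unif}[p_0 - (q_0-p_0)/2,\ q_0 + (q_0-p_0)/2]$, i.e.\ an interval strictly larger than $[p_0,q_0]$, and your algorithm always outputs $\accept$ iff $\hat p > r$. This breaks correctness: if the true bias is $p = p_0$ and the sampled threshold happens to satisfy $r < p_0$ (a constant-probability event under your distribution), then the algorithm will correctly conclude that $p > r$ and output $\accept$---which is the wrong answer for the coin problem, and happens with $\Omega(1)$ probability rather than $\leq \delta$. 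The paper instead draws $r$ from $[p_0 + b,\ q_0 - b]$ with $b = \rho(q_0-p_0)/16$, i.e.\ strictly \emph{inside} $[p_0,q_0]$; the containment $r \in (p_0,q_0)$ is exactly what makes ``decide which side of $r$ the bias lies on'' equivalent to ``decide whether $p \leq p_0$ or $p \geq q_0$''.

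Your stated motivation for the wider interval---that the density bound on $\Pr[E]$ should be valid for all $p \in [0,1]$---is solving a non-problem. For $p \notin (p_0,q_0)$, replicability follows for free from correctness: both runs output the (unique) correct answer with probability at least $1 - 2\delta \geq 1 - \rho/2$ once you normalize $\delta \leq \rho/4$, so no threshold analysis is needed there. This is exactly how the paper's proof handles it. A secondary difference worth noting: your fully-adaptive any-time stopping rule (stop at the first $t$ where $|\hat p_t - r|$ exceeds a confidence radius) requires a uniform confidence bound over all stopping times, which you acknowledge needs a $\log\log$ patch; the paper sidesteps this entirely by running a fixed number $T = O(\log(1/\rho))$ of discrete rounds with geometrically increasing \emph{fresh} samples, so a plain union bound over $T$ rounds suffices. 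Both are workable, but the paper's version is cleaner to make rigorous.
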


By \Cref{lem:hypo-test-to-coin} we immediately obtain a computationally and statistically equivalent algorithm for the $(p_0, q_0)$-hypothesis testing problem. We begin with a high level description of the algorithm.


\paragraph{Algorithm Overview}

Fix $q_0-p_0=\varepsilon$. Suppose we are given samples from distribution $\distribution = \BernD{p}$.
Inspired by the statistical query oracle of \cite{impagliazzo2022reproducibility}, consider an algorithm that takes $m$ samples, computes an empirical bias $\hat{p}$, and checks $\hat{p}$ against a random threshold $r \in (p_0, q_0)$. 
In the standard algorithm we'd take $m = \bigtO{\frac{1}{\varepsilon^2 \rho^2}}$ samples to ensure $|\hat{p} - p| \leq \rho\varepsilon$ with high probability. 
However, if $|r - p|$ is large, this is actually wasteful! In particular, a coarser estimate of $\hat{p}$ will still ensure we land on the correct side of $r$, guaranteeing replicability. Since $r$ is random and $p$ is fixed, this means the standard algorithm `typically' draws far too many samples.

With this in mind, our algorithm will instead proceed in iterations, corresponding to the coarseness of our estimate of the true bias $p$. At the $t$-th iteration, the algorithm has drawn $m_t$ samples and can guarantee $|\hat{p}-p| \leq \varepsilon_t$. If $|r - p| \geq 3 \varepsilon_t$, then with high probability any sample of size $m_t$ drawn from $\distribution$ will satisfy $|\hat{p} - r| \geq 2 \varepsilon_t$.
Detecting this, the algorithm terminates after taking $m_t$ samples and outputs $\accept$ if and only if $\hat{p} > r$.
In order to show this process is replicable we argue the algorithm outputs the same result regardless of which iteration it terminates. In particular, since we only terminate when $|\hat{p} - r| \geq 2 \varepsilon_t$, assuming $\varepsilon_t$-correctness of the estimate the algorithm only terminates when $\hat{p}, p$ are on the same side of the random threshold $r$.
We choose the number of iterations so that in the final $T$-th iteration, $\varepsilon_T < \rho \varepsilon$.
Therefore, unless $|r - p| \leq \rho \varepsilon$ which occurs with probability at most $\rho$, Algorithm \ref{alg:r-adapt-coin-problem} is replicable.

\paragraph{Analysis} 
We give a proof sketch in the regime where $p_0,q_0$ are bounded from below by some absolute constant.
After sampling a random threshold $r$, the above procedure terminates after roughly $\frac{1}{|r - p|^2}$ samples. Since $r$ is uniformly distributed in $(p_0, q_0)$, conditioned on $|r - p| \geq \rho \varepsilon$ we have that $|r - p|$ is (essentially) uniform in $(\rho \varepsilon, \varepsilon)$ (this conditioning captures the fact that our algorithm forcibly terminates when $|r - p| \geq \rho \varepsilon$).
A simple computation then yields expected sample complexity roughly
\begin{equation*}
    \frac{1}{\varepsilon} \int_{\rho \varepsilon}^{\varepsilon} \frac{1}{x^2} dx = \bigO{\frac{1}{\rho \varepsilon^2}}.
\end{equation*}


\IncMargin{1em}
\begin{algorithm}

\SetKwInOut{Input}{Input}\SetKwInOut{Output}{Output}\SetKwInOut{Parameters}{Parameters}
\Input{Sample access $S$ to distribution $\distribution=\text{Bern}(p)$. Bias thresholds $p_0 < q_0$ with $\varepsilon = q_0 - p_0$.}
\Parameters{$\rho$ replicability and $\delta$ accuracy}
\Output{$\accept$ if $p=p_0$ and $\reject$ if $p \geq q_0$}

$\delta \gets \min\left(\delta, \frac{\rho}{4}\right)$

$b \gets \frac{\rho (q_0 - p_0)}{16}$ 

$r \gets \UnifD{[p_0 + b, q_0 - b]}$

\For{$t = 1$ to $T = 4 + \log \frac{1}{\rho}$}{
    $\eps_t \gets \frac{(q_0 - p_0)}{2^{t + 2}}$ \Comment{Note: $\eps_{T} = \frac{\rho (q_0 - p_0)}{64}$ and $b = 4 \eps_T$}
    
    $m_t \gets \frac{3 q_0}{\eps_t^2} \log \frac{2 T}{\delta}$
    
    $S_t \gets (b_1, \dotsc, b_{m_t})$ is a fresh sample of size $m_t$ drawn from $\distribution$
    
    $\hat{p}_{t} \gets \frac{1}{m_t} \sum_{i = 1}^{m_t} b_i$
    
    $\hat{b}_{t}^{-} \gets \hat{p}_t - \eps_t$, $\hat{b}_{t}^{+} \gets \hat{p}_{t} + \eps_{t}$
    
    \If{$\min(|\hat{b}_{t}^{-} - r|, |\hat{b}_{t}^{+} - r|) > \eps_t$}{
        \If{$\hat{p}_t > r$}{
            \Return $\accept$
            \Comment{Note: $\hat{b}_{t}^{-} > r + \eps_t$}
        }
        \If {$\hat{p}_t \leq r$}{
            \Return $\reject$
            \Comment{Note: $\hat{b}_{t}^{+} < r - \eps_t$}
        }
    }
}

\Return $\reject$

\caption{$\rAdaptiveCoinTester(\distribution, p_0, q_0, \rho, \delta)$} 
\label{alg:r-adapt-coin-problem}

\end{algorithm}
\DecMargin{1em}
The proof of \Cref{thm:r-adapt-coin-problem} relies on the following elementary claim bounding the success probability of empirical estimation in each iteration.
\begin{claim}
    \label{claim:sample-error-bound}
    Let $\distribution = \BernD{p}$ with $p \in [p_0, q_0]$.
    For $1 \leq t \leq T$, define $E_t$ to be the event that $|\hat{p}_t - p| \geq \eps_t$.
    Define $E = \bigcup_{t = 1}^{T} E_t$ to be the event that any $E_t$ occurs.
    Then, $\Pr(E_t) < \frac{\delta}{2 T}$ and $\Pr(E) < \frac{\delta}{2}$.
\end{claim}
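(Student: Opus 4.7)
The plan is to prove both statements by routine concentration of measure. The first bound $\Pr(E_t) < \frac{\delta}{2T}$ comes from a Bernstein-type Chernoff bound applied to each iteration separately, while the second follows immediately from a union bound over the $T$ iterations.

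For the per-iteration bound, fix $t \in [T]$ and note that $m_t \hat p_t$ is a sum of $m_t$ i.i.d.\ $\BernD{p}$ random variables, each bounded in $[0,1]$ with variance $p(1-p) \le p \le q_0$ (since $p \in [p_0, q_0]$). The key point is that the sample size $m_t = \frac{3 q_0}{\eps_t^2} \log \frac{2T}{\delta}$ scales with $q_0$ rather than $1$, which is crucial for getting the improved $q_0/(q_0-p_0)^2$ dependence in the main theorem. I would apply Bernstein's inequality (or the multiplicative Chernoff bound) in the form
\[
\Pr\bigl(|\hat p_t - p| \ge \eps_t\bigr) \le 2 \exp\!\left(-\frac{m_t \eps_t^2}{2(q_0 + \eps_t/3)}\right).
\]
Since $\eps_t \le (q_0-p_0)/8 \le q_0$ and $m_t = \Theta(q_0 \eps_t^{-2} \log(T/\delta))$, a direct computation shows the exponent is at most $-\log(4T/\delta)$, yielding $\Pr(E_t) \le \delta/(2T)$ as desired.

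Applying the union bound over $t = 1, \ldots, T$ then gives $\Pr(E) \le \sum_{t=1}^{T} \Pr(E_t) < T \cdot \frac{\delta}{2T} = \frac{\delta}{2}$.

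I expect no real obstacle here — the only thing to track carefully is that the constant $3$ in the definition of $m_t$ is chosen precisely so that the Bernstein exponent absorbs both the variance upper bound $q_0$ and the factor of $2$ needed to conclude $\delta/(2T)$ rather than $\delta/T$. If one uses the multiplicative form $\Pr[|\hat p_t - p| \ge \eps_t] \le 2\exp(-m_t \eps_t^2/(3p))$ with $p \le q_0$, the same constants work out directly. The rest is purely mechanical.
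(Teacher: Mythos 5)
Your proposal is correct and takes essentially the same approach as the paper: both apply a multiplicative Chernoff (Bernstein-type) bound per iteration, exploiting that the variance is at most $p \le q_0$ so that $m_t = \Theta(q_0 \eps_t^{-2}\log(T/\delta))$ suffices, and then union bound over $t \in [T]$. The only minor caveat is that the two-sided Chernoff bound with the stated $m_t$ actually gives $\Pr(E_t) \le \delta/T$ rather than $\delta/(2T)$ — a constant-factor discrepancy also present in the paper's own computation, harmless since increasing the constant in $m_t$ fixes it.
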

\Cref{claim:sample-error-bound} is by Chernoff, and proved at the end of the section. We now prove \Cref{thm:r-adapt-coin-problem}.

\begin{proof}[Proof of Correctness of Algorithm \ref{alg:r-adapt-coin-problem}] First consider the case where $\distribution = \BernD{p}$ for $p \leq p_0$.
    For any iteration $t$, Algorithm \ref{alg:r-adapt-coin-problem} will $\accept$ only if
    \begin{equation*}
        \hat{b}_{t}^{-} > r + \eps_{t} \implies \hat{p}_t > r + 2 \eps_t \geq p_0 + 2 \eps_t \geq p + 2 \eps_t.
    \end{equation*}
    By \Cref{claim:sample-error-bound}, this occurs in any iteration with probability at most $\frac{\delta}{2}$.
    
    Similarly if $\distribution = \BernD{p}$ for $p \geq q_0$, Algorithm \ref{alg:r-adapt-coin-problem} will $\reject$ only if either in some iteration
    \begin{equation*}
        \hat{b}_{t}^{+} < r - \eps_{t} \implies \hat{p}_t < r - 2 \eps_t \leq q_0 - 2 \eps_t \leq p - 2 \eps_t,
    \end{equation*}
    or if the algorithm fails to terminate before round $T$. By \Cref{claim:sample-error-bound}, the former occurs in any iteration with probability at most $\frac{\delta}{2}$. For the latter, conditioned on $\overline{E}$ we have $\hat{p}_{T} \geq p - \eps_T$ and $b = 4 \eps_T$, so  
    \begin{equation}
        \label{eq:adaptive-eps-gap}
        \hat{b}_{T}^{-} - r \geq (\hat{p}_T - \eps_T) - (q_0 - b) = \hat{p}_T - q_0 + 3 \eps_T \geq 2 \eps_T 
    \end{equation}
   and Algorithm \ref{alg:r-adapt-coin-problem} terminates (and in particular outputs $\accept$) by iteration $T$ as desired.
\end{proof}

\begin{proof}[Proof of Replicability of Algorithm \ref{alg:r-adapt-coin-problem}]
    Fix $p \in [0, 1]$ and consider $\distribution = \BernD{p}$. By our assumption $\delta \leq \rho$, we may actually assume without loss of generality $\rho \geq 4 \delta$ (losing only constant factors). By correctness and $\rho \geq 4 \delta$, we have that Algorithm \ref{alg:r-adapt-coin-problem} is $\rho$-replicable on distributions $\distribution = \BernD{p}$ for all $p \notin (p_0, q_0)$.
    Assume therefore that $p \in (p_0, q_0)$.
    Define $m = \sum_{t = 1}^{T} m_t$ the worst case sample complexity and let $S_0, S_1 \sim \distribution^{m}$ denote two independent sample sets of size $m$ drawn from $\distribution$.
    Note that Algorithm \ref{alg:r-adapt-coin-problem} may only look at some prefix of $S_0, S_1$.
    Since $p = r$ with probability zero, so we can disregard this case.
    Again, we condition on event $E$ not occurring, bounding this probability by $\frac{\delta}{2}$ with \Cref{claim:sample-error-bound}.
    
    First, assume $p < r$. We argue the algorithm outputs $\reject$ with high probability across both runs. Namely, for any iteration $t$, Algorithm \ref{alg:r-adapt-coin-problem} only outputs $\accept$ if
    \begin{equation*}
        \hat{b}_t^{-} = \hat{p}_t - \eps_t \leq p < r. 
    \end{equation*}
    In other words, Algorithm \ref{alg:r-adapt-coin-problem} only outputs $\accept$ if $E$ occurs. Thus by \Cref{claim:sample-error-bound}, both runs of Algorithm \ref{alg:r-adapt-coin-problem} output $\reject$ with probability at least $1-\delta \geq 1 - \frac{\rho}{4}$.

    Now, assume $p > r$. We argue Algorithm \ref{alg:r-adapt-coin-problem} outputs $\accept$ with high probability across both runs. Toward this end, observe that if $|p-r| > 3\varepsilon_T$, then    
    \begin{equation}
        \label{eq:round-t-terminate-gap}
        \hat{b}_T^{-} - r \geq (\hat{p}_t - \eps_T) - (p + 3 \eps_T) = \hat{p}_t - p + 2 \eps_T \geq \eps_T.
    \end{equation}
    In particular, whenever $r < p - 3 \eps_T$, the algorithm outputs $\accept$ if $E$ does not occur.
    Union bounding over both samples, Algorithm \ref{alg:r-adapt-coin-problem} outputs $\accept$ on both runs with probability at least $1 - \delta \geq 1 - \frac{\rho}{4}$.
    Finally, it remains to bound the probability that $|r - p| \leq 3 \eps_T$.
    This can be upper bounded by
    \begin{align*}
        \frac{6 \eps_{T}}{(q_0 - p_0) - 2b} &= \frac{6 \eps_{T}}{(q_0 - p_0) - 8 \eps_{T}} \\ 
        &= \frac{\frac{3}{32} \rho}{1 - \frac{\rho}{8}} \\
        &= \frac{3 \rho}{32 - 4 \rho} \leq \frac{\rho}{10}
    \end{align*}
    for all $\rho \leq \frac{1}{2}$, proving replicability.
\end{proof}

\begin{proof}[Proof of Sample Complexity of Algorithm \ref{alg:r-adapt-coin-problem}]
    Fix a distribution $\distribution$ with parameter $p$, and condition on the event $\overline{E}$ as in correctness and replicability. Notice that since the samples per iteration $m_t$ increases geometrically, the sample complexity of \Cref{alg:r-adapt-coin-problem} is $O(m_{t_{\text{term}}})$, where $t_{term} \leq T$ is the iteration in which the algorithm terminates.
    
    Fix an iteration $t$. Then conditioned on $\overline{E}$, as in Equation \ref{eq:round-t-terminate-gap} we have
    \[
    |p - r| \geq 3 \eps_t \quad \implies \quad \min(|\hat{b}_{t}^{-} - r|, |\hat{b}_{t}^{+} - r|) > \eps_t.
    \]
    With this in mind, let 
    $t^*$ denote the smallest $t$ such that $|p - r| \geq 3 \eps_t$ and observe that $t_{\text{term}} \leq t^*$. Since $t^*$ is the first such round, we can bound its value in terms of the distance $|r-p|$, namely:
    \begin{align*}
        |r - p| &\leq 3 \eps_{t^*-1} = 3 \cdot \frac{(q_0 - p_0)}{2^{t^*+1}} \quad \implies \quad t^* \leq \log \frac{3 (q_0 - p_0)}{|r - p|}.
    \end{align*}
Since $t^* \geq t_{\text{term}}$, our sample complexity is therefore bounded by
    \begin{equation*}
        O(m_{t^*}) = \bigO{4^{t^*} \cdot \frac{q_0}{(q_0 - p_0)^2} \log \frac{2T}{\delta}} = \bigO{\frac{q_0}{|r - p|^2} \log \frac{2T}{\delta}}.
    \end{equation*}
Note that in the above we have ignored the case where Algorithm \ref{alg:r-adapt-coin-problem} fails to terminate before round $T$, in particular when $|r-p| \leq 3 \varepsilon_{T}$. 
Conditioned on $\overline{E}$, this is a low probability event
    \begin{equation*}
        \Pr\left(\min\left(|\hat{b}_{T - 1}^{-} - r|, |\hat{b}_{T - 1}^{+} - r|\right) \leq \eps_{T - 1} \right) \leq \Pr\left( |r - p| \leq 3 \eps_{T} \right) \leq \frac{6 \eps_{T}}{(q_0 - p_0) - 2b} \leq \frac{3 \rho}{32 - 4 \rho} \leq \frac{\rho}{10}.
    \end{equation*}

    Assume therefore that $|r - p| \geq 3 \eps_{T}$.
    Conditioned on $|r - p| \geq 3 \eps_{T}$, $r$ is uniformly distributed on the remaining interval $I = (p_0 + b, p - 3 \eps_{T}) \cup (p + 3 \eps_{T}, q_0 - b)$.
    Denote $I_1 = (p_0 + b, p - 3 \eps_{T})$, $I_2 = (p + 3 \eps_{T}, q_0 - b)$, and let $L \geq (q_0 - p_0) - 2b - 6 \eps_{T}$ be the length of interval $I$.
    Then there is some universal constant $C$ such that the sample complexity $m$ is

    \begin{align*}
        \E{m} = \int_{I} \frac{C q_0}{|r - p|^2} \log \frac{2T}{\delta} \cdot \frac{1}{L} dr = \frac{C q_0}{L} \log \frac{2T}{\delta} \int_{I} \frac{1}{|r - p|^2} dr.
    \end{align*}

    Without loss of generality, assume $p < \frac{p_0 + q_0}{2}$ so that $|I_2| \geq |I_1|$ and apply the change of variable $s = r - p$ so that
    \begin{align*}
        \E{m} &\leq \frac{2 C q_0}{L} \log \frac{2T}{\delta} \int_{p + 3 \eps_{T}}^{q_0} \frac{1}{|r - p|^2} dr \\
        &= \frac{2 C q_0}{L} \log \frac{2T}{\delta} \int_{3 \eps_{T}}^{q_0 - p} \frac{1}{s^2} ds \\
        &= \frac{2 C q_0}{L} \log \frac{2T}{\delta} \left( \frac{1}{3 \eps_{T}} - \frac{1}{(q_0 - p)} \right) \\
        &\leq \frac{2 C q_0}{3 L \eps_{T}} \log \frac{2T}{\delta}.
    \end{align*}

    Finally we lower bound $L$ as
    \begin{equation*}
        L \geq (q_0 - p_0) - 20 \eps_T = (q_0 - p_0) \left( 1 - \frac{5}{8} \rho \right) \geq \frac{1}{2} (q_0 - p_0).
    \end{equation*}

    Combining, we have that whenever $E$ does not occur and $|r - p| \geq 3 \eps_{T}$, the expected sample complexity is at most
    \begin{equation*}
        \E{m} \leq \bigO{\frac{q_0}{(q_0 - p_0)^2 \rho} \log \frac{2T}{\delta}} = \bigO{\frac{q_0}{(q_0 - p_0)^2 \rho} \log \frac{1}{\delta}}.
    \end{equation*}

    The worst case sample complexity is
    \begin{equation*}
        O(m_T) = \bigO{\frac{q_0}{(q_0 - p_0)^2 \rho^2} \log \frac{1}{\delta}}.
    \end{equation*}

    But this occurs with probability at most $\frac{\delta}{10} + \frac{\rho}{5} < \frac{\rho}{2}$, so that the overall expected sample complexity is as claimed.
\end{proof}


\begin{proof}[Proof of \Cref{claim:sample-error-bound}]
    Fix an iteration $t$.
    Define
    \begin{equation*}
        \delta_t = \sqrt{\frac{3}{m_t p} \log \frac{2 T}{\delta}}.
    \end{equation*}
    By a simple application of the Chernoff Bound
    \begin{equation*}
        \Pr\left(|\hat{p}_t - p|> \delta_t p \right) < \frac{\delta}{2 T}.
    \end{equation*}
    By our choice of $m_t$
    \begin{equation*}
        p \delta_t = \sqrt{\frac{3 p}{m_t} \log \frac{2 T}{\delta}} \leq \eps_t,
    \end{equation*}
    proving $\Pr(E_t) < \frac{\delta}{2T}$. 
    The upper bound on $\Pr(E)$ follows from the union bound.
\end{proof}

\subsection{Lower Bound for Replicability via Mutual Information}

In this section, we prove a near-matching lower bound for \Cref{thm:r-adapt-coin-problem}. 
In the process, we introduce useful tools from information theory that are reused when we prove lower bounds against replicable algorithms for other problems. 
\begin{restatable}{theorem}{onecoinlowerbound}
    \label{thm:q0-num-lower-bound}
    Let $p_0 < q_0 < \frac{1}{2}$.
    Let $\rho, \delta < \frac{1}{16}$.
    Any $\rho$-replicable algorithm $\innerAlg$ solving the $(p_0, q_0)$-coin problem requires expected sample complexity at least  $\bigOm{\frac{q_0}{\rho (q_0 - p_0)^2}}$ and worst-case sample complexity at least $\bigOm{\frac{q_0}{\rho^2 (q_0 - p_0)^2}}$.
\end{restatable}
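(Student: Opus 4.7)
My plan is to introduce a general mutual-information framework for replicability lower bounds and specialize it to the coin problem. For each fixed random string $r$, let $\phi_r(p) := \Pr_{S \sim \BernD{p}^m}[\innerAlg(S;r) = \accept]$. The key technical lemma is a \emph{pointwise} Lipschitz bound on $\phi_r$: for $a, b \in [q_0/2, q_0]$,
\[
|\phi_r(a) - \phi_r(b)| \;\leq\; \tvd{\BernD{a}^m, \BernD{b}^m} \;\leq\; O\!\lp( |b-a| \sqrt{m/q_0} \rp).
\]
The first inequality is immediate, since $\phi_r$ is the expectation of a bounded post-processing of samples. The second follows from Pinsker's inequality combined with the tensorized KL bound $D(\BernD{a}^m || \BernD{b}^m) = O(m(b-a)^2/q_0)$, which holds in this regime because $\bar p(1-\bar p) = \Theta(q_0)$ for $\bar p \in [q_0/2, q_0]$. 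Equivalently, this says $I(X:Y) = O(m(b-a)^2/q_0)$ when $X$ selects between biases $a, b$ uniformly at random, matching the information-theoretic form promised in the introduction.

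Next I would show that the \emph{transition region} $J_r := \{p \in [q_0/2, q_0] : \phi_r(p) \in [1/4, 3/4]\}$ contains an interval of length $\Omega(\sqrt{q_0/m})$ for a constant fraction of random strings $r$. The cleanest case is $p_0 \geq q_0/2$: correctness and Markov imply $\phi_r(p_0) < 1/4$ and $\phi_r(q_0) > 3/4$ simultaneously for a $1 - O(\delta) \geq 1/2$ fraction of $r$, and continuity of $\phi_r$ combined with the Lipschitz bound then forces the crossing of $[1/4, 3/4]$ to span an interval of width $\Omega(\sqrt{q_0/m})$. When $p_0 < q_0/2$, the same Lipschitz-plus-continuity argument applies within whichever sub-interval of $[p_0, q_0]$ actually captures the transition, since the adversary below draws biases from a region of length $\Omega(q_0 - p_0)$ regardless.

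The worst-case lower bound follows from an adversary drawing $p$ uniformly from this transition-carrying sub-interval. On the event $p \in J_r$, two independent runs of $\innerAlg$ with shared randomness disagree with probability $2\phi_r(p)(1 - \phi_r(p)) \geq 3/8$, so $\rho$-replicability forces
\[
\rho \;\geq\; \Omega\!\lp( \frac{\Ep_r[|J_r|]}{q_0 - p_0} \rp) \;\geq\; \Omega\!\lp( \frac{\sqrt{q_0/m}}{q_0 - p_0} \rp),
\]
which rearranges to $m = \Omega(q_0/(\rho^2(q_0-p_0)^2))$.

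For the expected-complexity bound, I would reduce to the worst case by truncation. Given a $\rho$-replicable algorithm with expected sample complexity $\bar m$, define a truncated algorithm that simulates $\innerAlg$ but aborts and outputs $\reject$ if it ever draws more than $10 \bar m/\rho$ samples. Markov's inequality guarantees the abort event occurs on at most a $\rho/10$ fraction of executions, so the truncated algorithm is $O(\rho)$-replicable with correctness $\delta + O(\rho) < 1/8$ and worst-case complexity $O(\bar m/\rho)$. Applying the already-established worst-case bound yields $\bar m = \Omega(q_0/(\rho(q_0-p_0)^2))$. The main technical subtlety will be ensuring the Lipschitz bound is truly pointwise in $r$---the adversary argument needs long transition intervals for \emph{individual} random strings, not just on average---but this falls out of the fact that the TV distance between sample distributions is $r$-independent, so Pinsker applied to data processing delivers the required uniform bound.
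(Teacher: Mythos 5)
Your high-level skeleton (a "transition interval" $J_r$ where the acceptance probability $\phi_r$ crosses $[1/4,3/4]$, an adversary that hits it, and a truncation reduction for the expected-case bound) matches the paper's strategy, and replacing the paper's mutual-information-to-prediction argument (Lemmas 3.9 and 3.10) with a direct TV/Pinsker Lipschitz bound on $\phi_r$ is a valid, arguably cleaner alternative — both routes give $|\phi_r(a)-\phi_r(b)| = O(|b-a|\sqrt{m/\min(a,b)})$ uniformly in $r$. The gap is in your adversary: you propose the uniform distribution over $[p_0,q_0]$ and assert that "the same argument applies" when $p_0 < q_0/2$, but it does not. The local Lipschitz constant of $\phi_r$ near a point $p$ is $\Theta(\sqrt{m/p})$, not $\Theta(\sqrt{m/q_0})$, so if the crossing happens at some $p^* \ll q_0$ (and there is nothing preventing $p^*$ from being as small as $p_0$), the interval $J_r$ around it has width only $\Theta(\sqrt{p^*/m})$. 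A uniform adversary over $[p_0,q_0]$ then hits $J_r$ with probability only $\Theta(\sqrt{p^*/m}/(q_0-p_0))$, yielding $m = \Omega(p^*/(\rho^2(q_0-p_0)^2))$, which can be weaker than the claimed $\Omega(q_0/(\rho^2(q_0-p_0)^2))$ by a factor of $p_0/q_0$ — arbitrarily weak. The paper handles this with a non-uniform adversary in the regime $p_0 < q_0/100$: the support is the geometric-type grid $P = \{p_0, g(p_0), g^{(2)}(p_0), \dotsc\}$ with $g(x) = x + c\sqrt{x/m}$, whose spacing at each scale matches the local width of $J_r$, so some grid point always lands in $J_r$ while $|P| = O(\sqrt{mq_0})$, giving non-replicable mass $\Omega(1/\sqrt{mq_0})$ \emph{regardless} of where the transition occurs. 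This adaptive construction is the key idea you are missing.

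Two smaller issues worth flagging. First, your truncation outputs $\reject$ on abort, which degrades correctness to $\delta + O(\rho)$; with $\delta,\rho < 1/16$ this can exceed the $\delta < 1/16$ hypothesis you need to invoke the worst-case bound, so you should either tighten the abort threshold or, as the paper does, fall back to a cheap non-replicable tester on abort to preserve correctness at $O(\delta)$. Second, even the "clean" case should be stated for $p_0 \geq c q_0$ for a small constant $c$ (the paper uses $c=1/100$), not $p_0 \geq q_0/2$, or you will mishandle the intermediate regime — the constants need to be compatible with wherever you switch to the grid adversary.
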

We remark it suffices to show the worst-case bound
as any algorithm with expected sample complexity 
$m$ can be easily transformed into one with worst case sample complexity $O(m / \rho)$.
In particular, during the runtime, we can forcibly terminate an algorithm if it consumes more than 
$\frac{1}{\rho}$ times its expected sample size, and then run a non-replicable coin tester.
By Markov's inequality, such forcible terminations happen with probability at most $\rho$, making the modified algorithm lose a constant factor in replicability while maintaining its correctness. 

We begin with a brief proof overview of Theorem \ref{thm:q0-num-lower-bound}.

\paragraph{Proof Overview}
Consider a replicable algorithm $\innerAlg$ on $m$ samples.
Let $\distribution$ be a distribution over coins, where the coin bias is sampled uniformly from the interval $(p_0, q_0)$.
To obtain a sample complexity lower bound, we 
argue that
a random coin from $\distribution$ forces $\innerAlg$ to be non-replicable with probability at least $\Omega(\rho)$
unless $m$ is sufficiently large.
Let $r$ be the random string representing the internal randomness of $\innerAlg$.
We first use a minimax style argument to fix a ``good'' random string $r$ such that
$\innerAlg(; r)$ becomes deterministic algorithm that is $O(\rho)$-replicable and correct with probability at least $1 - O(\delta)$
against a random coin instance from $\distribution$.
For such a good random string,
we identify a bias $p_r$ such that the probability $\innerAlg(; r)$ outputs $\accept$ given a $p_r$-biased coin is exactly $\frac{1}{2}$. $\innerAlg(; r)$ clearly fails replicability on this coin, but we need to show this is true for an $\Omega(\rho)$ fraction of biases from $\distribution$. We do this by extending 
the non-replicable region into an interval of length $\Omega(\rho(q_0-p_0))$. 
In particular, if $\innerAlg(;r)$ were replicable 
under a bias $p_r'$ within this range, then $\mathcal{A}(;r)$ would give a procedure to distinguish between $p_r$ and $p_r'$. Roughly speaking, we argue this is information theoretically impossible when $m \leq \frac{q_0}{\rho^2(q_0-p_0)}$, giving the desired bound.
\medskip
\\
\indent More formally, to prove the existence of our non-replicable interval we use mutual information to show that a small change in the bias of the input coin does not significantly affect the probability that $\innerAlg(; r)$ outputs $\accept$. To this end we rely on the following two lemmas. 
The first states that any function with constant advantage predicting a uniformly random bit $X$ from a correlated variable $A$ implies $X$ and $A$ have $\Omega(1)$ mutual information. 
In our context, this will imply that the sample set must contain $\Omega(1)$ information about the underlying distribution.
\begin{lemma}
    \label{lemma:alg-mutual-info-lower-bound}
    Let $X \sim \text{Bern}(\frac{1}{2})$ and $A$ be a random variable possibly correlated with $X$.
    If there exists a (randomized) function $f$ so that $f(A) = X$ with at least 51\% probability, then $I(X:A) \geq 2 \cdot 10^{-4}$. 
\end{lemma}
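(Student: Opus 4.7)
The plan is to reduce this to Fano's inequality combined with the data processing inequality already available as Lemma \ref{lemma:data-processing-inequality}. At a high level, a predictor that beats random guessing by a constant must extract a constant amount of information about $X$ from $A$, and data processing shows this information was already present in $A$ itself.

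First I would reduce to a binary predictor. If $f$ takes values in a larger alphabet, replace it by $\hat{X} = \mathbf{1}[f(A) = 1]$, which still satisfies $\Pr(\hat{X} = X) \geq 0.51$. Absorb the randomness of $f$ into an independent random string $R \perp (X, A)$, so that $\hat{X}$ is a deterministic function of $(A, R)$. This gives the Markov chain $X \to A \to \hat{X}$, and the data processing inequality then yields $I(X : A) \geq I(X : \hat{X})$.

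Next, expand $I(X : \hat{X}) = H(X) - H(X \mid \hat{X})$. Since $X \sim \mathrm{Bern}(1/2)$ we have $H(X) = \log 2$. By Fano's inequality in the binary setting, $H(X \mid \hat{X}) \leq H_b(p_e)$ where $p_e = \Pr(\hat{X} \neq X) \leq 0.49$ and $H_b(p) = -p \log p - (1-p)\log(1-p)$ is the binary entropy. Because $H_b$ is increasing on $[0, 1/2]$, we obtain $H(X \mid \hat{X}) \leq H_b(0.49)$, hence
\begin{equation*}
    I(X : A) \geq \log 2 - H_b(0.49).
\end{equation*}

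The remaining step is a direct numerical check: expanding $H_b$ around $1/2$ as $H_b(1/2 - \eta) = \log 2 - 2\eta^2 - O(\eta^4)$ (in nats) with $\eta = 0.01$ gives $\log 2 - H_b(0.49) \geq 2 \cdot 10^{-4}$. The verification in base 2 is analogous, with the constant $2\eta^2 / \ln 2$ yielding roughly $2.9 \cdot 10^{-4}$ bits. There is no real obstacle here: the proof is a textbook application of Fano and data processing, and the only care needed is (i) absorbing randomized $f$ into the Markov chain via an independent seed, and (ii) matching the numerical constant to the logarithm base used elsewhere in the paper.
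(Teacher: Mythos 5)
Your proof is correct and follows essentially the same route as the paper's: data processing reduces to $I(X : \hat{X})$, and the binary-entropy bound (you cite Fano's inequality, the paper invokes Jensen on the concave function $h$ directly, but these are the same estimate) gives $H(X \mid \hat{X}) \leq H_b(0.49)$ and hence the stated numerical bound. Your explicit reduction to a $\{0,1\}$-valued predictor and the handling of $f$'s internal randomness via an independent seed $R$ to form the Markov chain $X \to A \to \hat{X}$ are two small details that the paper glosses over, but the core argument is identical.
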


\begin{proof}
    We give the proof of this standard fact for completeness (see for example \cite{diakonikolas2016testing}).

    The conditional entropy $H(X|f(A))$ is the expectation over $f(A)$ of $h(q)$ where $h(q)$ is the binary entropy function and $q$ is the probability that $X = f(A)$ given $f(A)$.
    Since $\E{q} \geq 0.51$ and $h$ is concave, $H(X|f(A)) \leq h(0.51) < 1 - 2 \cdot 10^{-4}$.
    Then, by the data processing inequality
    \begin{equation*}
        I(X:A) \geq I(X:f(A)) \geq H(X) - H(X|f(A)) \geq 2 \cdot 10^{-4}.
    \end{equation*}\qedhere

\end{proof}

On the other hand, given $\eta$-close biases $p_1$ and $p_2$, we can upper bound the mutual information between the samples and the underlying distribution by $O(m \eta^2)$.
This ensures that the acceptance probabilities of the algorithm on two $\eta$-close coins cannot be too different unless
$m \gg \frac{1}{\eta^2}$.

\begin{lemma}
    \label{lemma:one-coin-mutual-info-bound}
    Let $m \geq 0$ be an integer and $0 \leq a < b \leq 1$.
    Let $X \sim \text{Bern}(\frac{1}{2})$ and $Y$ be distributed according to $\BinomD{m}{a}$ if $X = 0$ and $\BinomD{m}{b}$ if $X = 1$.
    Then
    \begin{equation*}
        I(X:Y) = \bigO{\frac{m (b - a)^2}{\min(a, b, (1 - a), (1 - b))}}.
    \end{equation*}
\end{lemma}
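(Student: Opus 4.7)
The plan is to reduce the mutual information to a Jensen-Shannon-type quantity, bound this by the two symmetric KL divergences between the conditional distributions, and then bound each KL divergence using $\chi^2$ divergence and tensorization. Let $P_0 = \BinomD{m}{a}$ and $P_1 = \BinomD{m}{b}$ denote the two conditional distributions of $Y$ given $X$, and let $M = \frac{1}{2}(P_0 + P_1)$ be the marginal of $Y$. Since $X \sim \BernD{1/2}$, we have
\begin{equation*}
I(X:Y) = \tfrac{1}{2}D(P_0 \,\|\, M) + \tfrac{1}{2}D(P_1 \,\|\, M).
\end{equation*}

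The first step is to apply convexity of KL divergence in its second argument, which gives $D(P_i \,\|\, M) = D(P_i \,\|\, \tfrac{1}{2}P_i + \tfrac{1}{2}P_{1-i}) \leq \tfrac{1}{2}D(P_i \,\|\, P_{1-i})$ since $D(P_i \,\|\, P_i) = 0$. Substituting yields
\begin{equation*}
I(X:Y) \leq \tfrac{1}{4}\bigl( D(P_0 \,\|\, P_1) + D(P_1 \,\|\, P_0) \bigr).
\end{equation*}
It then suffices to bound each KL divergence by $O\!\left(\frac{m(b-a)^2}{\min(a,b,1-a,1-b)}\right)$.

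For the second step, observe that $P_0$ and $P_1$ are deterministic functions (summation) of the $m$-fold product Bernoulli distributions $\BernD{a}^{\otimes m}$ and $\BernD{b}^{\otimes m}$ respectively. By the data processing inequality and tensorization,
\begin{equation*}
D(P_0 \,\|\, P_1) \leq D(\BernD{a}^{\otimes m} \,\|\, \BernD{b}^{\otimes m}) = m \cdot D(\BernD{a} \,\|\, \BernD{b}).
\end{equation*}
Using the standard inequality $D(P \,\|\, Q) \leq \chi^2(P, Q)$ (which follows from $\log x \leq x-1$), a direct computation yields $\chi^2(\BernD{a}, \BernD{b}) = \frac{(a-b)^2}{b(1-b)}$. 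Symmetrically, $D(P_1 \,\|\, P_0) \leq \frac{m(a-b)^2}{a(1-a)}$.

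Combining, $I(X:Y) \leq \frac{m(b-a)^2}{4}\bigl(\frac{1}{a(1-a)} + \frac{1}{b(1-b)}\bigr)$. The bound $a(1-a) \geq \tfrac{1}{2}\min(a, 1-a)$ (and analogously for $b$) immediately yields the claimed $O\!\left(\frac{m(b-a)^2}{\min(a,b,1-a,1-b)}\right)$ bound. There is no substantive obstacle here; each of the ingredients (convexity of KL in its second argument, data processing, tensorization, and $KL \leq \chi^2$) is standard, and the only care needed is in the final arithmetic step relating $a(1-a)$ to $\min(a,1-a)$.
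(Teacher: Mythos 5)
Your proposal is correct and takes essentially the same approach as the paper: both rewrite $I(X:Y)$ as the average of the two $D(P_i\|M)$ terms, bound it by the symmetrized KL between $P_0$ and $P_1$ via convexity of KL, tensorize to the single-coin Bernoulli KL, and finish with $\ln x \le x-1$ (your $\chi^2$ step). Your write-up is slightly more careful (you get the sharper constant $\tfrac14$ and explicitly invoke data processing for the binomial-versus-product step), but the argument is the same.
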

We defer the proof to the end of the section. We now give the formal argument.
\begin{proof}[Proof of Theorem \ref{thm:q0-num-lower-bound}]
    Let $\innerAlg$ be an algorithm on $m$ samples.
    For any $p \in [0, 1]$, 
    we denote by $S_p$ a dataset of $m$ \iid samples from $\BernD{p}$.
    Let $r$ be a random string representing the internal randomness of $\innerAlg$.
    Observe that by Markov's inequality and the correctness and replicability of $\innerAlg$, for any fixed distribution $\distribution$ over $[0,1]$, the following three properties hold for many random strings:
    \begin{enumerate}
        \item Correctness at $p=p_0$: 
        \[
        \Pr_r \left( \Pr_{S_{p_0}} \left( \innerAlg(S_{p_0}; r) = 1 \right) > 4 \delta \right) < \frac{1}{4}.
        \]
        \item Correctness at $p=q_0$:
        \[
        \Pr_r \left( \Pr_{S_{q_0}} \left( \innerAlg(S_{q_0}; r) = 0 \right) > 4 \delta \right) < \frac{1}{4}.
        \]
        \item Distributional replicability: 
        \[
        \Pr_r \left( \Pr_{p \sim \distribution,S_p,S_{p}'} \left( \innerAlg(S_{p}; r) \neq \innerAlg(S_{p}'; r) \right) > 4 \rho \right) < \frac{1}{4}.
        \]
    \end{enumerate}
    We refer to any string satisfying the negations of all three (inner) conditions
    {\bf $\distribution$-good}. 
    Note that 
    a simple union bound implies at least $\frac{1}{4}$ of random strings are $\distribution$-good for any $\distribution$.

    Our goal is to find a distribution $\distribution$ over coins such that for any $\distribution$-good string $r$, $\innerAlg(;r)$ fails to be replicable with probability $\Omega(\frac{\sqrt{q_0}}{\sqrt{m}(q_0-p_0)})$:
    \[
    \Pr_{p\sim \distribution, S_p,S'_p}\left( \innerAlg(S_{p}; r) \neq \innerAlg(S_{p}'; r) \right) > \Omega\left(\frac{\sqrt{q_0}}{\sqrt{m}(q_0-p_0)}\right).
    \]
    Since $\innerAlg(;r)$ is guaranteed to be $O(\rho)$-replicable over $p \sim \distribution$ by Property (3), this forces $m \geq \Omega(\frac{q_0}{(q_0-p_0)^2\rho^2})$.

    More specifically, based on whether $p_0 \geq q_0 / 100$, we actually provide different constructions of $\distribution$ such that 
    $\innerAlg(;r)$ fails to be replicable with this probability. We build these distributions leveraging the following two core claims.    

    Fix any distribution $\distribution$ over potential coin biases.
    Following \cite[Lemma 7.2]{impagliazzo2022reproducibility}, we first argue that for any $\distribution$-good random string $r$, there exists a bias $p_r$ such that $\innerAlg(S_{p_r};r)$ accepts with probability exactly $1/2$ where the randomness is over $S_{p_r}$.
    \begin{claim}
    \label{clm:p_r-exist}
    For any $\distribution$-good random string $r$,
     there exists some $p_r$ such that 
    $\Pr(\innerAlg(S_{p_r}; r) = \accept) = \frac{1}{2}$.        
    \end{claim}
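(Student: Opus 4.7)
The plan is to fix a $\distribution$-good random string $r$, define the deterministic acceptance-probability function $f_r(p) \coloneqq \Pr_{S_p}(\innerAlg(S_p; r) = \accept)$ on $p \in [0,1]$, and apply the intermediate value theorem on $[p_0, q_0]$. This follows the classical approach of \cite[Lemma~7.2]{impagliazzo2022reproducibility}.

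The key steps are as follows. First I would observe that $f_r$ is continuous in $p$: when $\innerAlg$ uses a fixed budget of $m$ samples (the setting we ultimately reduce to), $f_r$ is an explicit polynomial of degree at most $m$ in $p$, namely a finite sum of terms of the form $p^k (1-p)^{m-k}$ weighted by the indicator of whether the corresponding sample trajectory is accepted by $\innerAlg(\cdot; r)$. Second I would invoke the correctness guarantees implicit in the definition of a $\distribution$-good string together with the assumption $\delta < \frac{1}{16}$: Property (1) yields $f_r(p_0) \leq 4\delta < \frac{1}{4}$, while Property (2) yields $f_r(q_0) \geq 1 - 4\delta > \frac{3}{4}$. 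Continuity and the bounds $f_r(p_0) < \frac{1}{2} < f_r(q_0)$ then immediately give the desired $p_r \in (p_0, q_0)$ with $f_r(p_r) = \frac{1}{2}$ via the intermediate value theorem.

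The main (minor) obstacle is establishing continuity in the adaptive setting, where $\innerAlg(\cdot; r)$ may consume an a priori unbounded number of samples. I would address this via the standard reduction from expected to worst-case sample complexity already highlighted in the remark immediately following the theorem statement: any $\rho$-replicable algorithm with expected sample complexity $m$ can be converted into one with worst-case complexity $O(m/\rho)$ by forcibly terminating after $O(m/\rho)$ samples, at a constant-factor loss in replicability and correctness. It then suffices to prove the lower bound on the transformed algorithm, for which continuity of $f_r$ is immediate from the polynomial form above. Alternatively, one can truncate adaptively at a sample count $M$ large enough that the non-termination tail is uniformly small in $p$, bound $|f_r(p) - f_r^{(M)}(p)|$ accordingly, and take $M \to \infty$ to recover continuity of $f_r$ in full generality.
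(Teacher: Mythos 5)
Your proposal matches the paper's proof essentially verbatim: write the acceptance probability as an explicit degree-$m$ polynomial in $p$, use $\distribution$-goodness plus $\delta < \tfrac{1}{16}$ to get $f_r(p_0) < \tfrac{1}{4}$ and $f_r(q_0) > \tfrac{3}{4}$, and invoke the intermediate value theorem. Your handling of the adaptive case via the expected-to-worst-case reduction is also exactly what the paper does in the remark preceding the proof, so there is no substantive difference.
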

    \begin{proof}
    Fix a $\distribution$-good string $r$. 
    The algorithm $\innerAlg(; r)$ is then a deterministic mapping from possible samples to $\set{\reject, \accept}$ whose acceptance probability given samples from $\BernD{p}$ is
    \begin{equation*}
        f(p) = \Pr(\innerAlg(S_p; r) = \accept) = \sum_{j = 0}^{m} a_j \binom{m}{j} p^j (1 - p)^{m - j}
    \end{equation*}
    where $a_j$ denotes the proportion of strings of hamming weight $j$ that $\innerAlg(; r)$ accepts.
    Since $f(p)$ is continuous in $p$ with $f(p_0) = \Pr(\innerAlg(S_{p_0}; r) = \accept) < 4 \delta < \frac{1}{4}$ and $f(q_0) = \Pr(\innerAlg(S_{q_0}; r) = \accept) > 1 - 4 \delta > \frac{3}{4}$, the intermediate value theorem implies the existence of some bias $p_r \in (p_0,q_0)$ for which $f(p_r) = \Pr(\innerAlg(S_{p_r}; r) = \accept) = \frac{1}{2}$ as desired.
    \end{proof}

    To build our hard distributions, we extend this single unreplicable bias to a full interval using our mutual information bounds.

    \begin{claim}
    \label{clm:I_r-exist}
    For any $\distribution$-good random string $r$,
     there exists some $p_r \in (p_0, q_0)$ and an interval $I_r:= [p_r, p_r + c \sqrt{p_r/m}]$ 
     for some constant $c$
     satisfying the following: for any $p \in I_r$ it holds that
    $\Pr(\innerAlg(S_{p}; r) = \accept)
    \in (1/3, 2/3)$.                    
    \end{claim}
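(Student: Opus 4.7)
The plan is to prove the claim by contradiction, leveraging the two mutual information tools established above: Lemma~\ref{lemma:alg-mutual-info-lower-bound} (a good predictor forces $\Omega(1)$ mutual information) and Lemma~\ref{lemma:one-coin-mutual-info-bound} (Bernoulli mutual information is small when the biases are close). Take $p_r \in (p_0, q_0)$ from Claim~\ref{clm:p_r-exist}, at which $\innerAlg(\,;r)$ accepts with probability exactly $\tfrac{1}{2}$. Suppose toward contradiction there exists some $p' \in [p_r, p_r + c\sqrt{p_r/m}]$ with $\Pr(\innerAlg(S_{p'};r) = \accept) \notin (1/3, 2/3)$, so that the acceptance probability at $p'$ differs from $\tfrac{1}{2}$ by at least $\tfrac{1}{6}$.

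To derive a mutual information lower bound, I would couple a uniformly random bit $X \sim \BernD{1/2}$ to samples $Y$ drawn from $\BernD{p_r}^m$ if $X=0$ and from $\BernD{p'}^m$ if $X=1$. Depending on whether the acceptance probability at $p'$ is $\geq 2/3$ or $\leq 1/3$, define the predictor $f(Y)$ to be $\mathbf{1}[\innerAlg(Y;r)=\accept]$ or $\mathbf{1}[\innerAlg(Y;r)=\reject]$ respectively. A direct computation yields
\[
\Pr(f(Y) = X) \geq \tfrac{1}{2}\cdot\tfrac{1}{2} + \tfrac{1}{2}\cdot\tfrac{2}{3} = \tfrac{7}{12} > 0.51,
\]
so Lemma~\ref{lemma:alg-mutual-info-lower-bound} implies $I(X:Y) \geq 2\cdot 10^{-4}$.

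For the matching upper bound, note that the count of heads in $Y$ is a sufficient statistic for $X$, so $I(X:Y)$ coincides with the mutual information of $X$ with the corresponding binomial, and Lemma~\ref{lemma:one-coin-mutual-info-bound} applies. Since $p_r < q_0 < \tfrac{1}{2}$ and we can ensure $p' < \tfrac{1}{2}$ by choosing $c$ small enough (otherwise $\sqrt{p_r/m} \geq \Omega(1)$ forces $m$ far smaller than the target bound, making Theorem~\ref{thm:q0-num-lower-bound} vacuous), the minimum in the denominator equals $\min(p_r, p') = p_r$, giving
\[
I(X:Y) \;=\; O\!\left(\frac{m\,(p'-p_r)^2}{p_r}\right) \;\leq\; O(c^2).
\]
Taking $c$ a small enough absolute constant contradicts $I(X:Y) \geq 2\cdot 10^{-4}$, so no such $p'$ exists and the claim follows.

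The genuinely delicate point is getting the right scale: one needs to recognize that the mutual information bound of Lemma~\ref{lemma:one-coin-mutual-info-bound} scales with $(p'-p_r)^2/p_r$ rather than just $(p'-p_r)^2$, which is exactly what produces the $\sqrt{p_r/m}$ interval width (and ultimately the $q_0$ factor in the numerator of the main theorem). Outside of this the argument is a standard ``distinguisher-to-mutual-information'' template; the main theorem is then obtained by averaging over $\distribution$-good random strings $r$ and choosing $\distribution$ so that a random bias from $\distribution$ falls in $I_r$ with probability $\Omega(\rho)$.
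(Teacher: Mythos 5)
Your proof is correct and follows essentially the same route as the paper's: start from the balanced point $p_r$, suppose some nearby $p'$ has acceptance probability far from $1/2$, build the natural predictor $f(Y)$ from $\innerAlg(\,;r)$ to get $\Pr(f(Y)=X) > 0.51$, invoke Lemma~\ref{lemma:alg-mutual-info-lower-bound} for the lower bound on $I(X:Y)$, and contradict it with the $O(m(p'-p_r)^2/p_r)$ upper bound from Lemma~\ref{lemma:one-coin-mutual-info-bound} after choosing $c$ small. Your observation that the head-count is a sufficient statistic is exactly the paper's remark about feeding a random string of the observed Hamming weight into $\mathcal{A}$, and your explicit justification that $\min(a,b,1-a,1-b)=p_r$ (using $q_0<1/2$ and the harmless assumption $p'<1/2$) is a detail the paper leaves implicit but is worth spelling out.
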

    \begin{proof}
    Let $p_r$ be the non-replicable point guaranteed by \Cref{clm:p_r-exist}.
    Fix some $b > p_r$ and define $X$ to be a uniformly random bit and $Y$ distributed according to $\BinomD{m}{p_r}$ if $X = 0$ and $\BinomD{m}{b}$ if $X = 1$. By \Cref{lemma:one-coin-mutual-info-bound}, the mutual information between $X$ and $Y$ is at most
    \begin{equation*}
        I(X:Y) = \bigO{\frac{m (b - p_r)^2}{p_r}}.
    \end{equation*}
    Consider any $b \leq p_r + c \sqrt{\frac{p_r}{m}}$ for $c>0$ some sufficiently small constant. Then the mutual information satisfies
    $I(X:Y) < 2 \cdot 10^{-4}$ and by \Cref{lemma:alg-mutual-info-lower-bound} there is no function $f$ such that $f(Y)=X$ with probability at least $51\%$. 
    
    On the other hand, if $|g(b)-g(p_r)| > \frac{1}{10}$ there is an elementary distiguisher $f$, namely $\innerAlg$ itself. In particular, assume without loss of generality that $g(b) > g(p_r)$ (else take $1-f$ in what follows), and define $f(Y)=\innerAlg(Y;r)$. Then $\Pr[f(Y)=1 | X=1] = g(b) \geq \frac{3}{5}$ and $\Pr[f(Y)=0 | X=0] = g(p_r)=\frac{1}{2}$. Since $X$ is unbiased, we then have
    \[
    \Pr[f(Y)=X] \geq \frac{1}{2}\Pr[f(Y)=1 | X=1] + \frac{1}{2}\Pr[f(Y)=0 | X=0] \geq \frac{11}{20},
    \]
    violating \Cref{lemma:alg-mutual-info-lower-bound}. Note there is a subtlety here that $Y$ is a $p$-biased binomial variable, while $\mathcal{A}$ takes a $p$-biased $m$-wise Bernoulli. This is handled formally by passing a random $m$-bit string of hamming weight $Y$ into $\mathcal{A}$ instead of $Y$ itself, which is then distributed as the desired Bernoulli.
    \end{proof}


    We are finally ready to construct $\distribution$ for our two cases:
    \paragraph{Case 1: $\mathbf{p_0 \geq \frac{q_0}{100}}$.}
    Take $\distribution$ to be the uniform distribution over $(p_0, q_0)$.
    By \Cref{clm:p_r-exist},
    for every $\distribution$-good random string $r$,
    there exists some interval $I_r$ 
    of length $ \Omega \lp( \sqrt{ \frac{p_0}{m} } \rp) = \Omega \lp( \sqrt{ \frac{q_0}{m} } \rp)$
    such that $\innerAlg(;r)$ is constantly non-replicable under any coin with bias $p \in I_r$.    
    Since this non-replicable interval has mass 
    $\Omega\lp(\frac{\sqrt{ q_0 } }{(q_0 - p_0) \sqrt{m} }\rp)
    $ over $\distribution$,
    the non-replicable probability lower bound follows.



    
    \paragraph{Case 2: $\mathbf{p_0 < \frac{q_0}{100}}$.}
    We construct a new adversary distribution as follows.
    Consider the function $g(x) = x + c\sqrt{\frac{x}{m}}$ for some sufficiently small constant $c$, and the set $P = \set{p_0, g(p_0), g^{(2)}(p_0), \dotsc, g^{(T)}(p_0)}$ where $g^{(k)}$ denotes $g$ applied $k$ times and $T$ is the largest integer such that $g^{(T)}(p_0) \leq q_0$. 
    Our coin distribution $\distribution$ will be uniform over $P$. It is left to prove that for any $\distribution$-good random string $r$, 
    $\innerAlg(;r)$ is (constantly) non-replicable under
    an $\Omega(\frac{1}{\sqrt{m q_0}}) = \Omega(\frac{\sqrt{q_0}}{\sqrt{m}(q_0-p_0)})$ fraction of biases from $P$. 

    Let $p_r, I_r$ be the non-replicable point and non-replicable interval guaranteed by \Cref{clm:p_r-exist} and \Cref{clm:I_r-exist} respectively.
    We first claim that $|P \cap I_r| \geq 1$
    for any $\distribution$-good $r$.
    Fix such an $r$, and let $t_r$ be such that $p_r \in \left[ g^{(t_r)}(p_0), g^{(t_r + 1)}(p_0) \right]$.
    It suffices to show that $g^{(t_r + 1)}(p_0) \in I_r$.
    This follows as $g$ is an increasing function:
    \begin{equation}
    \label{eq:g-increase}
        p_r < g^{(t_r + 1)}(p_0) = g\left( g^{(t_r)}(p_0) \right) < g(p_r).
    \end{equation}
Recall that $I_r$ is an interval of length at least 
$ \Omega\lp( \sqrt{\frac{p_r}{m}} \rp) $ starting at $p_r$, and we define $g(x) = x + c \sqrt{x/m}$. 
We thus have $[p_r,g(p_r)] \subset I_r$ when $c$ is sufficiently small.
Combining this with \Cref{eq:g-increase} then shows that $g^{(t_r + 1)}(p_0) \in I_r$.

It is now sufficient to show that $|P| \leq O(\sqrt{m q_0})$, since $I_r \cap P$ then accounts for at least a $\Omega\lp( \frac{1}{ \sqrt{m q_0} } \rp)$ fraction of $P$ and the non-replicable probability lower bound follows. Let $K$ be the largest integer such that $2^K p_0 \leq q_0$.
    Thus, $K \leq \log \frac{q_0}{p_0}$.
    Let $J_k = [2^k p_0, 2^{k + 1} p_0]$ for $0 \leq k \leq K$ and define $T_k = | P \cap J_k|$. 
    We first upper bound $T_k$. 
    Since $g$ is increasing in $x$, then points in $P$ are at least $c \sqrt{\frac{2^k p_0}{m}}$ apart in the interval $J_{k}$.
    Thus, in an interval of length $2^{k} p_0$, there are at most
    \begin{equation*}
        T_k \leq \frac{2^{k} p_0 \sqrt{m}}{c \sqrt{2^{k} p_0}} = \frac{\sqrt{2^{k} p_0 m}}{c}
    \end{equation*}
    points.
    Then, summing over all $T_k$, we obtain the bound
    \begin{equation*}
        T = \sum_{k = 0}^{K} T_k \leq \frac{\sqrt{m p_0}}{c} \sum_{k = 0}^{K} 2^{k/2} = \frac{\sqrt{m p_0}}{c} \frac{2^{K/2} - 1}{\sqrt{2} - 1} = \bigO{\sqrt{m p_0} \sqrt{\frac{q_0}{p_0}}} = \bigO{\sqrt{m q_0}}
    \end{equation*}
as desired. This concludes the proof of Case 2 and \Cref{thm:q0-num-lower-bound}.
\end{proof}

It is left to prove the key lemma upper bounding mutual information between the underlying distribution and the observed samples as a function of the number of samples taken by the algorithm.
\begin{proof}[Proof of \Cref{lemma:one-coin-mutual-info-bound}]
    We proceed by the following identity:
    \begin{equation*}
        I(X:Y) = H(X) + H(Y) - H(X, Y).
    \end{equation*}
    Since $X$ is a uniformly random bit, $H(X) = 1$.
    Next,
    \begin{align*}
        H(X, Y) &= - \sum_{b} \sum_{j} \Pr(X = b, Y = j) \log \frac{1}{\Pr(X = b, Y = j)} \\
        &= \sum_{j} \frac{\Pr(Y = j | X = 0)}{2} \log \frac{2}{\Pr(Y = j | X = 0)} + \frac{\Pr(Y = j | X = 1)}{2} \log \frac{2}{\Pr(Y = j | X = 1)} \\
        &= 1 + \frac{1}{2} \left( H(Y|X = 0) + H(Y|X = 1) \right).
    \end{align*}
    Finally, we compute the entropy of $Y$ as
    \begin{align*}
        H(Y) &= \sum_{j} \Pr(Y = j) \log \frac{1}{\Pr(Y = j)} \\
        &= \sum_{j} \frac{\Pr(Y = j | X = 0) + \Pr(Y = j | X = 1)}{2} \log \frac{1}{\Pr(Y = j)}.
    \end{align*}
    Combining the three terms, we get
    \begin{align*}
        I(X:Y) &= \frac{1}{2} \left( H(Y|X = 0) + H(Y|X = 1) \right) - H(Y) \\
        &= \sum_{b} \sum_{j} \frac{\Pr(Y = j|X = b)}{2} \log \frac{\Pr(Y = j|X = b)}{\Pr(Y = j)} \\
        &= \frac{1}{2} \left( D(\distribution_{0} || \distribution_{1/2}) + D(\distribution_{1} || \distribution_{1/2}) \right),
    \end{align*}
    where $\distribution_{0} \sim \BinomD{m}{a}, \distribution_{1} \sim \BinomD{m}{b}$ and $\distribution_{1/2}$ is the mixture of $\distribution_{0}, \distribution_{1}$ with weight $\frac{1}{2}$ each.
    Since $\distribution_{1/2} = \frac{\distribution_{0} + \distribution_{1}}{2}$, we apply the convexity of KL divergence, so
    \begin{align*}
        I(X:Y) &\leq \frac{1}{2} \left( D(\distribution_{0} || \distribution_{1}) + D(\distribution_{1} || \distribution_{0}) \right) \\
        &= \frac{m}{2} \left( D(a || b) + D(b || a) \right) \\
        &= \bigO{\frac{m (b - a)^2}{\min(a, b, (1 - a), (1 - b))}},
    \end{align*}
    where $D(a||b)$ denotes the KL divergence of two Bernoulli random variables with parameters $a, b$, and in the last line we use the elementary inequality $\ln x \leq x - 1$ for all $x > 0$.
\end{proof}

\section{Isoperimetric Tiling and Replicable Learning}
\label{sec:isoperimetry-from-replicability}

In Section \ref{sec:repro-hypothesis-testing}, we characterized the sample complexity of testing a single coin/hypothesis. 
In many cases, however, we may want to simultaneously test many hypotheses, or simultaneously conduct a large number of statistical inference tasks.
The multiple hypothesis testing problem is a fundamental and well studied problem in the hypothesis testing literature, including error controlling procedures for family-wise error rate \cite{tukey1949, vsidak1967, holm1979, wilson2019hmp} and false discovery rate \cite{bonferroni1935calcolo, benjaminihochberg}.
See Shafer for a review of the classical results in the multiple hypothesis testing literature \cite{shaffer1995multiple}.

Returning to our running practical example, an epidemiologist hopes to determine the most prevalent diseases in a population in order to better understand public health/prioritize pharmaceutical development.
Suppose the scientist conducts the study applying a $0.05$-replicable hypothesis testing algorithm to analyze each treatment.
If the scientist tests the drug against $100$ diseases, we expect to see that the effectiveness of the drug fails to replicate for $5$ diseases through random chance alone, showing how replicability naturally degrades as the scale of the statistic tasks increase.

In this section, we study the computational and statistical complexity of algorithms that replicate with high probability across \emph{every} tested hypothesis. Similar to the single hypothesis setting, we start by equating basic multi-hypothesis testing with the classical problem of \textit{high dimensional mean estimation}. 
We then show sample and computationally efficient replicable mean estimation is itself equivalent to the efficient construction of \textit{low surface area tilings of $\R^N$}. 
This connection allows us to derive tight upper and lower bounds on the sample complexity of replicable mean estimation and multi-hypothesis testing via the isoperimetric inequality, as well as to tie the construction of efficient sample-optimal algorithms to longstanding open problems in TCS and high dimensional geometry.

In this and the following section, we focus on the vector sample and \textit{non-adaptive} coordinate sample models (\Cref{def:non-adaptive}). 
We remark that our worst-case lower bounds for vector-sample algorithms imply lower bounds for adaptive vector-sample algorithms by applying Markov's inequality as in \ref{thm:q0-num-lower-bound}.
We study the adaptive coordinate sample model further in \Cref{sec:adaptive-replicability} and \Cref{sec:efficient-n-coin-problem}, and also discuss how to reduce the expected sample complexity in the adaptive vector sample model.
\subsection{From Multi-hypothesis Testing to High Dimensional Mean Estimation}


We start by defining the formal notion of a multi-hypothesis test:
\begin{definition}[Multi-Hypothesis Test]
    \label{def:multi-hypothesis-testing}
    Let $0 \leq p_0 < q_0 \leq 1$,
    $\delta \in (0, 1/2)$,
    and $N  \in \mathbb Z_+$.
    An algorithm $\innerAlg$ solves the $N$-hypothesis testing problem if given sample access to (possibly correlated) distributions $\{\distribution_i\}_{i=1}^N$ on $[0, 1]$, $\innerAlg$ outputs a set $\calO \subset [N]$ that satisfies the following:
    \begin{equation*}
        \Pr(i \in \calO \textrm{\xspace for any uniform $\distribution_i$} \orT i \not\in \calO \textrm{\xspace for any $\Pr_{x \sim \distribution_i}(x < p_0) \geq q_0$}) < \delta.
    \end{equation*}
\end{definition}
Following the equivalence between single hypothesis testing and coin problem established in Section~\ref{sec:repro-hypothesis-testing}, 
we can study replicability of multi-hypothesis via
the $N$-coin problem.
\begin{restatable}[$N$-Coin Problem]{definition}{ncoinproblem}
    \label{def:n-coin-problem}
    Let $0 \leq p_0 < q_0 \leq 1$,
    $\delta \in (0, 1/2)$,
    and $N  \in \mathbb Z_+$. 
    An algorithm $\innerAlg$ solves the $N$-coin problem if given sample access\footnote{We will be clear about the sampling model when we state the sample complexity of the problem.} to $N$ (possibly correlated) coins with bias $\set{p_i}_{i = 1}^{N}$ with $p_i \in [0, 1]$, $\innerAlg$ outputs a set $\calO \subset [N]$ that satisfies the following:
    \begin{equation*}
        \Pr(i \in \calO \textrm{\xspace for any $p_i \leq p_0$} \orT i \not\in \calO \textrm{\xspace for any $p_i \geq q_0$}) < \delta.
    \end{equation*}
\end{restatable}
By exactly the same argument as \Cref{lem:hypo-test-to-coin}, the $N$-coin problem is computationally and statistically equivalent to multi-hypothesis testing.



The $N$-coin problem as we have defined it is a distributional \emph{testing} problem. An equally fundamental variant of the coin problem is distribution \emph{learning}. In other words, rather than test if the biases satisfy certain constraints, we want to learn the biases up to some small error.

\begin{restatable}[Learning $N$-Coin Problem]{definition}{linfncoinproblemlearning}
    \label{def:l-inf-learning-n-coin-problem}
    Let $\delta \in (0, 1/2)$, $\eps > 0$, $c \in [1,\infty]$, and $N  \in \mathbb Z_+$. 
    An algorithm $\innerAlg$ solves the $\ell_c$ learning $N$-coin problem if given sample access to $N$ (possibly correlated) coins with bias $\set{p_i}_{i = 1}^{N}$ where $p_i \in [0, 1]$, $\innerAlg$ outputs a vector $\hat{p} \in [0, 1]^{N}$ such that
    \begin{equation*}
        \Pr(\norm{\hat{p} - p}{c} \geq \varepsilon) < \delta.
    \end{equation*}
\end{restatable}
The $N$-Coin problem is essentially equivalent to learning biases in $\ell_\infty$ up to logarithmic factors. In particular, any replicable algorithm that learns biases up to error $\frac{q_0-p_0}{2}$ in $\ell_{\infty}$-norm clearly solves the $N$-coin problem, and conversely any replicable algorithm that solves the $N$-Coin Problem can be used to learn biases up to error $\varepsilon$ in $\ell_{\infty}$-norm via binary search.

\begin{restatable}{lemma}{testBinarySearchLearn}
    \label{lemma:l-inf-tester-implies-learner}
    Suppose there is a $\rho$-replicable algorithm $\innerAlg$ solving the $N$-Coin Problem with vector sample complexity $f(N, p_0, q_0, \rho, \delta)$.
    Then, there is a $\rho$-replicable algorithm solving the $\ell_{\infty}$-Learning $N$-Coin Problem with 
    vector sample complexity $$
    \bigO{\log \frac{1}{\varepsilon} \cdot f\left( N, \frac{1}{2} - \frac{\varepsilon}{8} , \frac{1}{2} + \frac{\varepsilon}{8}, \frac{\rho}{\log(1/\varepsilon)}, \frac{\delta}{\log(1/\varepsilon)} \right)}.$$
\end{restatable}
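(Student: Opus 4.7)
My plan is binary search: for each coin $i$ I maintain an interval $[a_i, b_i] \subseteq [0,1]$ initialized to $[0,1]$, and over $k = \lceil \log(2/\varepsilon) \rceil$ rounds I halve each interval using a single call to $\innerAlg$ per round, outputting $\hat p_i = (a_i + b_i)/2$ at the end.

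To reduce one binary-search step to a single invocation of $\innerAlg$ despite the fact that each coin has a different midpoint $t_i := (a_i + b_i)/2$, I would apply a per-coin randomized bias shift before handing samples to the tester. Given a fresh sample $b \sim \BernD{p_i}$ and auxiliary uniform bits, output $b$ with probability $\tfrac{1}{2}$, output $1$ with probability $\tfrac{1}{2} - \tfrac{t_i}{2}$, and output $0$ with probability $\tfrac{t_i}{2}$; the resulting bit is Bernoulli with bias $g(p_i) = \tfrac{p_i}{2} + \tfrac{1}{2} - \tfrac{t_i}{2}$, so that $p_i \le t_i - \tfrac{\varepsilon}{4}$ forces $g(p_i) \le \tfrac{1}{2} - \tfrac{\varepsilon}{8}$ and $p_i \ge t_i + \tfrac{\varepsilon}{4}$ forces $g(p_i) \ge \tfrac{1}{2} + \tfrac{\varepsilon}{8}$. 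Feeding the transformed samples into $\innerAlg$ at thresholds $(\tfrac{1}{2} - \tfrac{\varepsilon}{8}, \tfrac{1}{2} + \tfrac{\varepsilon}{8})$ with replicability $\rho/(2k)$ and failure $\delta/k$ returns a set $\mathcal{O}_r \subseteq [N]$ of coins it flags as having bias above $q_0$; accordingly I shrink $[a_i, b_i]$ to $[t_i, b_i]$ if $i \in \mathcal{O}_r$ and to $[a_i, t_i]$ otherwise.

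Correctness follows from an invariant: conditional on every call to $\innerAlg$ being correct, $p_i \in [a_i - \tfrac{\varepsilon}{4}, b_i + \tfrac{\varepsilon}{4}]$ is preserved round-to-round. If $p_i$ lies outside the ambiguous strip $(t_i - \tfrac{\varepsilon}{4}, t_i + \tfrac{\varepsilon}{4})$, the tester's correctness places $p_i$ in the chosen half and the invariant trivially persists; if $p_i$ lies inside the strip, either half is within $\tfrac{\varepsilon}{4}$ of $p_i$ and so the enlarged half still contains $p_i$. After $k$ rounds the interval has length at most $\varepsilon/2$, giving $|\hat p_i - p_i| \le \tfrac{\varepsilon}{4} + \tfrac{\varepsilon}{4} < \varepsilon$; a union bound over the $k$ rounds controls the total failure probability by $\delta$.

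For replicability, I induct on the round: conditional on both runs producing identical $\mathcal{O}_1, \dots, \mathcal{O}_{r-1}$, the two runs maintain identical intervals and therefore identical transformation parameters in round $r$. The one subtlety is that the auxiliary simulation bits, if treated as part of the shared internal randomness per \Cref{def:replicability}, induce a coupling between the transformed sample sets across the two runs that does not match the independent-sample setup of $\innerAlg$'s replicability. I handle this with a standard triangle trick: introduce a hypothetical third run using independent coin samples \emph{and} independent simulation bits, then $\Pr[Y_1 \ne Y_2] \le \Pr[Y_1 \ne Y_3] + \Pr[Y_2 \ne Y_3] \le 2 \cdot \rho/(2k) = \rho/k$, since each of the latter pairs is a genuine independent-sample invocation of $\innerAlg$. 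Summing over the $k$ rounds gives overall $\rho$-replicability, while the total sample cost is $k \cdot f\!\left(N, \tfrac{1}{2} - \tfrac{\varepsilon}{8}, \tfrac{1}{2} + \tfrac{\varepsilon}{8}, \rho/(2k), \delta/k\right)$, matching the stated bound. The main thing to verify carefully is that the invariant survives rounds in which the tester ``kicks'' $p_i$ outside the current interval by up to $\varepsilon/4$: in such a case the next midpoint is at distance at least $L_{r+1}/2 = 2^{-r-2}$ from the old boundary, so as long as $L_{r+1} > \varepsilon/2$ the next test is unambiguous and pulls $p_i$ back, while for smaller intervals the ambiguous strip already contains the stray mass and the invariant continues to hold.
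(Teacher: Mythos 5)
Your proposal is correct and follows essentially the same strategy as the paper: per-coordinate binary search, with a per-coin randomized bias shift that contracts each coin's bias by a factor of $2$ and recentres the test threshold at $1/2$, so a single call to the $N$-Coin tester executes one simultaneous binary-search step. Your shift $g(p) = (p - t_i)/2 + 1/2$ is a cleaner parameterization of the same transformation (both have slope $1/2$), and your invariant $p_i \in [a_i - \varepsilon/4,\, b_i + \varepsilon/4]$ does the work that the paper handles more informally via ``alive intervals.'' You also explicitly identify and fix, via the third-run triangle inequality, a genuine subtlety the paper elides: the auxiliary bits that implement the bias shift are part of the composed algorithm's shared internal randomness, so the two runs' transformed sample sets are coupled and the tester's replicability guarantee does not apply directly; the hybrid argument is the right way to close that gap. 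Your final-paragraph worry about the invariant ``drifting'' is unfounded---as your own case analysis shows, a wrong-side ``kick'' can only occur when $p_i$ is already inside the $\varepsilon/4$-ambiguous strip around $t_i$, and either half of the interval is then within $\varepsilon/4$ of $p_i$, so the invariant is preserved exactly each round with no dependence on the current interval length.
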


The proof is fairly standard and is deferred to Appendix \Cref{app:testing-learning-equivalence}. 
The only subtlety is that the N-coin problem tests the same $[p_0,q_0]$ interval over every coordinate, while simultaneous binary search requires testing different intervals across each coordinate. 
This can be simulated by by reflipping each coin with certain probability to shift each desired test interval into $[\frac{1}{2} - \frac{\varepsilon}{8} , \frac{1}{2} + \frac{\varepsilon}{8}]$.



Moreover, we note that the connection also extends to mean estimation under different norms.
In particular, an elementary application of H\"{o}lder's inequality shows that any $\ell_{c}$-learner implies an $\ell_2$-learner.

\begin{lemma}
    \label{lemma:l-2-lb-implies-l-c-lb}
    Suppose there is an algorithm $\innerAlg$ solving the $\ell_{c}$ Learning $N$-Coin Problem for $c \geq 2$ with $m = f(N, \varepsilon, \rho, \delta)$ vector samples.
    Then, there is an algorithm solving the $\ell_{2}$ Learning $N$-Coin Problem with $f\left( N, \frac{\varepsilon}{N^{\frac{1}{2} - \frac{1}{c}}}, \rho, \delta \right)$ vector samples.
\end{lemma}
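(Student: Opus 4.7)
The plan is an essentially syntactic reduction using the standard comparison between $\ell_p$-norms in finite-dimensional spaces. Specifically, I would invoke the classical inequality
\begin{equation*}
    \snorm{2}{x} \leq N^{\frac{1}{2}-\frac{1}{c}} \snorm{c}{x} \quad \text{for all } x \in \R^N \text{ and } c \geq 2,
\end{equation*}
which is an immediate consequence of H\"{o}lder's inequality applied to the decomposition $\sum_i x_i^2 = \sum_i x_i^2 \cdot 1$ with conjugate exponents $c/2$ and $c/(c-2)$. Concretely,
\begin{equation*}
    \sum_i x_i^2 \leq \left(\sum_i |x_i|^c\right)^{2/c}\left(\sum_i 1\right)^{(c-2)/c} = \snorm{c}{x}^2 \cdot N^{(c-2)/c}.
\end{equation*}

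Given this, the reduction is trivial. First, run $\innerAlg$ with target accuracy $\varepsilon' \coloneqq \frac{\varepsilon}{N^{1/2 - 1/c}}$, replicability parameter $\rho$, and failure probability $\delta$, using sample access to the $N$ coins. This uses $f(N, \varepsilon', \rho, \delta) = f\lp( N, \frac{\varepsilon}{N^{1/2-1/c}}, \rho, \delta\rp)$ vector samples by assumption. Second, output whatever estimate $\hat p \in [0,1]^N$ that $\innerAlg$ produces.

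For correctness: with probability at least $1-\delta$ we have $\snorm{c}{\hat p - p} \leq \varepsilon'$, which by the norm comparison yields
\begin{equation*}
    \snorm{2}{\hat p - p} \leq N^{\frac{1}{2}-\frac{1}{c}} \cdot \snorm{c}{\hat p - p} \leq N^{\frac{1}{2}-\frac{1}{c}} \cdot \varepsilon' = \varepsilon.
\end{equation*}
Replicability is inherited directly: the induced algorithm is the same algorithm with different parameters, so two independent runs on samples $S,S'$ with shared randomness agree with probability at least $1-\rho$ whenever $\innerAlg$ does.

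I do not anticipate any real obstacle here, as the argument is purely a norm-comparison plus a black-box invocation. The only things to double-check are (i) that the direction of the norm inequality is the one we want (it is: $\ell_2$ is smaller than $N^{1/2 - 1/c} \ell_c$ when $c \geq 2$, so making $\ell_c$ small forces $\ell_2$ small), and (ii) that the replicability and failure probability parameters pass through unchanged (they do, since we are literally running the same algorithm).
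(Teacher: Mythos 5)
Your proof is correct and is essentially the same argument as the paper's: the paper also runs the $\ell_c$ learner at accuracy $\varepsilon/N^{1/2-1/c}$ and invokes the H\"{o}lder comparison $\snorm{2}{x} \leq N^{1/2-1/c}\snorm{c}{x}$ to conclude. You simply spell out the H\"{o}lder step and the replicability inheritance more explicitly, but there is no substantive difference in approach.
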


\begin{proof}
    Suppose we have an algorithm for the $\ell_{c}$ Learning $N$-Coin Problem.
    In particular, with probability at least $1 - \delta$, $\innerAlg(S_p; r)$ outputs $\hat{p}$ such that
    \begin{equation*}
        \norm{\hat{p} - p}{c} < \frac{\varepsilon}{N^{\frac{1}{2} - \frac{1}{c}}}.
    \end{equation*}
    An elementary application of H\"{o}lder's inequality gives
    \[
    \norm{\hat{p} - p}{2} \leq N^{\frac{1}{2} - \frac{1}{c}}\norm{\hat{p} - p}{c} \leq \varepsilon.
    \]
    as desired.
\end{proof}



\subsection{Isoperimetric Approximate Tilings}

We now establish an equivalence between replicable mean estimation and isoperimetric approximate tilings of $\R^N$.
At a high level, an isoperimetric approximate tiling of $\R^N$ is a collection of disjoint, bounded radius sets that approximately cover $\R^N$ and have good surface area ``on average''. Formally, we also impose a few additional `niceness' conditions on the sets as below.
\begin{restatable}[Isoperimetric Approximate Tiling]{definition}{ApproximateTiling}
    \label{def:approximate-tiling}
    A countable collection of sets $\set{S_{v}}$ labeled by vectors $v \in \R^N$ is a \emph{$(\gamma, A)$-approximate tiling}
    of $\R^N$
    if the following are satisfied:
    \begin{enumerate}
        \item (Disjoint) The interiors $\interior(S_{v}) \subset \R^{N}$ are mutually disjoint.
        \label{item:approximate-tiling:disjoint}

        \item (Non-Zero Volume) $\vol(S_{v}) > 0$ for each $v$.
        \label{item:approximate-tiling:non-zero-volume}

        \item (Piecewise Smooth) The boundary of each set of $\boundary S_{v}$ is piece-wise smooth.
        \label{item:approximate-tiling:smooth-boundary}

        \item ($\gamma$-Approximate Volume) For all $u \in \Z^{N}$ and $\cube_{u} = u + [0, 1]^{N}$,
        \begin{equation*}
            \vol \left( \bigcup_{v} S_{v} \cap \cube_{u} \right) \geq \gamma \cdot \vol(\cube_{u}) = \gamma.
        \end{equation*}
        \label{item:approximate-tiling:approx-volume}
        
        \item (Bounded Radius) For all $S_{v}$ with label $v \in \R^N$ and $x \in S_{v}$,
        \begin{equation*}
            \snorm{2}{x - v} \leq 0.1.
        \end{equation*}
        \label{item:approximate-tiling:radius}
        
       \item (Normalized Surface Area) For all $u \in \Z^{N}$ and $\cube_{u} = u + [0, 1]^{N}$,
        \begin{equation*}
            \area \left( \bigcup_{v} \boundary S_{v} \cap \cube_{u} \right) \leq A \cdot \vol(\cube_{u}) = A.
        \end{equation*}
        \label{item:approximate-tiling:surface-area}  
    \end{enumerate}    
\end{restatable}
We briefly comment on the two parameters $( \gamma, A )$ of an \iat{}. 
The first parameter $\gamma$ characterizes the portion of points from $\R^N$ that are left uncovered by the tiling (hence the name ``approximate tiling''). 
We will see later that this parameter essentially corresponds to the replicability parameter $\rho$.
The second parameter $A$ intuitively measures the surface area of the sets within the tiling in a normalized manner.
More formally, we take an arbitrary unit integer cube in the space, and $A$ poses an upper bound on the $N-1$ dimensional volume of the boundaries of the sets inside the cube. 
By the isoperimetric inequality (\Cref{lemma:isoperimetric-inequality}), any set with small constant covering radius
should have surface area at least $\Omega(N)$ times its volume. With this in mind, we call a tiling `isoperimetric' if $A =\Theta(N)$.
As we will see shortly, the approximate tiling induced by a sample-optimal replicable algorithm indeed realizes this limit.

Crucially, our equivalence will be both statistical and computational.
Thus, we define a membership oracle for any isoperimetric approximate tiling.

\begin{definition}[Membership Oracle of Approximate Tiling] \label{def:tiling-membership-oracle-random}
A (possibly randomized) algorithm $\innerAlg$ is said to be a membership oracle of an approximate tiling $\{S_v\}$ of $\R^N$ 
if the following hold:
\begin{enumerate}
    \item Suppose $x \in \bigcup_{v} S_v$.
    With probability at least $\frac{2}{3}$, $\innerAlg(x) = v$ such that $x \in S_v$.
    \item Suppose $x \not \in \bigcup_{v} S_v$. $\innerAlg(x)$ could return an arbitrary vector $u$.
\end{enumerate}
The membership oracle is said to be efficient if the algorithm runs in polynomial time in the bit-complexity of the input $x$.
\end{definition}



\subsection{Replicable Learning via Isoperimetric Approximate Tiling}
\label{sec:replicability-from-isoperimetry}
In this subsection, we construct a replicable algorithm for mean estimation based on $(\gamma,A)$-isoperimetric approximate tilings whose sample complexity scales with the surface-to-volume ratio $A$. In particular, we will see in the following section that the algorithm achieves the optimal sample complexity when $A=\Theta(N)$.

Formally, our algorithm is based on access to the membership oracle of the \iat.
For simplicity, unlike in \Cref{def:informal-tiling}, we assume that the membership oracle deterministically returns the right answer on points within the partition.
This is without loss of generality: given a randomized membership oracle that succeeds with probability at least $2/3$, one can always query it $\Theta\lp(\log(1/\delta) \rp)$ times to boost its success probability to $1-\delta$. Since we will only call the oracle once in our algorithm, this incurs a cost of at most $O(\log(1/\delta))$ in the runtime of the membership query.

Given a $(\rho, A)$-\iat{} $\mathcal V$, we  design an $O(\rho)$-replicable algorithm that learns the mean of any distribution with bounded covariance in $\ell_2$ distance, and shows that the algorithm's sample complexity scales quadratically with the surface area parameter $A$ of $\mathcal V$.
At a high level, this is achieved by first computing an optimal, non-replicable $\ell_2$ mean estimator for bounded covariance distributions, and then use a randomized rounding scheme based on $\mathcal V$ (see \Cref{prop:replicable-rounding} and \Cref{alg:r-rounding}) to ensure replicability.
\begin{theorem}[From Tiling to $\ell_2$ Replicable Mean Estimation]
\label{thm:tiling-to-replicable}
Let  $\eps \in (0, \sqrt{N}]$,
$N \in \mathbb Z_+$,
$A > 0$, 
$\delta < \rho \in (0, 1)$, 
$\mathcal V$ be a $(\rho, A)$-\iat{} of $\R^N$,
and $\distribution$ be an $N$-dimensional distribution with covariance bounded from above by $I$.
Given access to a membership oracle $\mathcal R$ of $\mathcal V$, there exists a $O(\rho)$-replicable algorithm (\Cref{alg:r-mean-est}) that estimates the mean of $\distribution$ up to error $\eps$ in $\ell_2$ distance with success probability $1 - \delta$.
Moreover, the algorithm 
uses $m:=\bigO{\lp( N + \log(1/\delta) \rp) A^2 N^{-1} \eps^{-2} \rho^{-2}}$ vector samples,\footnote{Note that $A \geq \Omega(N)$ even for optimal tilings, so this does not violate the standard $\Omega(N \eps^{-2})$ lower bound for (non-replicable) mean estimation.}
uses $1$ deterministic membership query, and runs in time $\poly{N m}$.
\end{theorem}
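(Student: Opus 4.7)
The plan is to decouple correctness and replicability: first compute a \textit{non}-replicable estimator $\hat{\mu}$ of $\mu$ to high accuracy, then replicably round $\hat\mu$ to a canonical value using a randomized rounding scheme $\mathcal R$ built from the tiling $\mathcal V$. I design $\mathcal R$ to satisfy two guarantees: (i) \emph{closeness}, $\snorm{2}{\mathcal R(u) - u} \leq \eps/2$ with probability at least $1 - O(\rho)$ over $\mathcal R$'s internal randomness; and (ii) \emph{stability}, for any $u^{(1)}, u^{(2)}$ with $\snorm{2}{u^{(1)} - u^{(2)}} \leq \eta := \Theta(\eps \rho \sqrt{N}/A)$, the outputs $\mathcal R(u^{(1)}), \mathcal R(u^{(2)})$ agree with probability at least $1 - O(\rho)$ when the two runs share $\mathcal R$'s randomness.

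To build $\mathcal R$, I first rescale $\mathcal V$ by $\eps$ so that its covering radius becomes $\leq 0.1\eps$ and its per-unit-volume surface area becomes $A/\eps$. The scheme samples a uniform rotation $R \in \SO(N)$ together with a uniform shift $s \in [0, \eps]^N$, and on input $u$ returns $\mathcal R(u) := R^{-1}(v - s)$, where $v$ is the label output by the membership oracle on $Ru + s$ (one oracle query). Closeness is direct: whenever $Ru + s$ lies in some $S_v$, the bounded-radius condition yields $\snorm{2}{v - (Ru + s)} \leq 0.1\eps$, and the $\rho$-approximate-volume property bounds the probability (over the shift $s$) that $Ru + s$ is uncovered by $\rho$.

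For stability, $\mathcal R(u^{(1)}) \neq \mathcal R(u^{(2)})$ only if the segment connecting $Ru^{(1)} + s$ and $Ru^{(2)} + s$ crosses $\partial \mathcal V$. Fix $R$ and let $w_R := R(u^{(2)} - u^{(1)})/\snorm{2}{u^{(2)} - u^{(1)}}$, a unit vector. Integrating the crossing indicator over the uniform shift in a unit cube and invoking piecewise smoothness of $\partial \mathcal V$ (a Buffon-needle computation) bounds the crossing probability conditional on $R$ by $\snorm{2}{u^{(2)} - u^{(1)}} \cdot (A/\eps) \cdot \mathbb{E}_{x \sim \partial \mathcal V} [|w_R \cdot n(x)|]$, where $n(x)$ is the unit normal at $x$. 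Taking expectations over $R \in \SO(N)$ and invoking the standard spherical-integration fact $\mathbb{E}_R |w_R \cdot n| = \Theta(1/\sqrt{N})$ for every fixed unit $n$, the total disagreement probability is $O(\eta A/(\eps \sqrt{N})) = O(\rho)$ for our choice of $\eta$.

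Putting the pieces together, I would use an optimal bounded-covariance $\ell_2$ mean estimator (e.g., a median-of-means variant) achieving $\snorm{2}{\hat\mu - \mu} \leq \eta/2$ with probability $1 - \delta$ in $m = O((N + \log(1/\delta))/\eta^2) = O((N + \log(1/\delta)) A^2 / (N \eps^2 \rho^2))$ samples, and output $\mathcal R(\hat\mu)$. Two runs produce $\hat\mu^{(1)}, \hat\mu^{(2)}$ each within $\eta/2$ of $\mu$ (hence $\eta$-close) except with probability $2\delta \leq O(\rho)$; conditional on this and sharing randomness $(R, s)$, stability forces equal outputs with probability $1 - O(\rho)$. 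Correctness is the triangle inequality $\snorm{2}{\mathcal R(\hat\mu) - \mu} \leq \snorm{2}{\mathcal R(\hat\mu) - \hat\mu} + \snorm{2}{\hat\mu - \mu} \leq \eps$. The main obstacle will be making the Buffon-needle calculation fully rigorous using only the normalized-surface-area axiom on unit cubes (summing crossing contributions across adjacent cubes without double-counting boundary pieces), and carefully averaging $|w_R \cdot n|$ uniformly over $\SO(N)$ to extract the $\Theta(1/\sqrt{N})$ savings that let the scheme beat the naive $A/\rho$-accuracy requirement (which would otherwise cost an extra factor of $N$ in samples).
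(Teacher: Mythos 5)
Your high-level plan — decouple correctness and replicability, build a replicable rounding scheme from the tiling using a random rotation plus a random shift, and use a Buffon-needle argument to extract the $\Theta(1/\sqrt{N})$ savings — is the same as the paper's (Proposition~\ref{prop:replicable-rounding} and \Cref{alg:r-rounding}). However, there is a genuine gap in how you set up the random shift.

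You take the shift $s$ uniform over $[0,\eps]^N$ (a single $\eps$-cell of the rescaled tiling). The problem is that the normalized-surface-area axiom (Property~\ref{item:approximate-tiling:surface-area} of \Cref{def:approximate-tiling}) only controls the surface area of $\partial\mathcal V$ inside \emph{grid-aligned} unit cubes $u + [0,1]^N$ with $u \in \Z^N$ (equivalently, after rescaling, inside $\eps$-cells aligned with $\eps\Z^N$). With your shift, the shifted point $Ru + s$ is uniform over the box $Ru + [0,\eps]^N$, whose position depends on $Ru$ and which is generically not aligned with the grid. The axiom therefore does not directly bound the surface area inside this box; to transfer the bound you would need to cover the shifted box by aligned $\eps$-cells, which costs a factor exponential in $N$. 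The Buffon-needle step as you've written it silently assumes the shifted point is uniformly distributed over a region the axiom controls, and that assumption fails.

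The paper's fix is the ``wrap-around'' construction: first run a cheap replicable coarse-rounding step (\Cref{lem:warmup-round}) to place the true mean inside $[-N/2, N/2]^N$, then shift by $b$ drawn uniformly from a \emph{large} cube $[-Q, Q]^N$ with $Q \gg N^{3/2}$ and wrap modulo the cube. Because $\|Ru\|_\infty \leq N^{3/2} \ll Q$, the wrapped point $v = \text{Wrap}(Ru + b)$ is exactly uniform over the fixed cube $[-Q,Q]^N$ independent of $u$ (\Cref{lem:Buffon-equivalence}); this cube is a union of aligned cells, so the per-cell surface-area bound sums cleanly, and the whole picture becomes the $\eps$-Buffon needle distribution on the torus, where \cite{kindler2012spherical}'s needle theorem applies directly. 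Without both the bounded-domain warmup and the large wrapping shift, the rounding scheme's replicability analysis doesn't go through. The rest of your argument (closeness from bounded radius plus $\rho$-approximate volume, $\mathbb{E}_R |w_R \cdot n| = \Theta(1/\sqrt{N})$, triangle inequality for correctness, $\eta = \Theta(\eps\rho\sqrt{N}/A)$ giving the stated sample complexity) matches the paper and is fine once the shift is repaired.
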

We remark that there are known constructions, i.e., ``randomized foams'' from \cite{kindler2012spherical}, that give isoperimetric tilings of $\R^N$ with surface-area to volume ratio $O(N)$.
Combining the construction with \Cref{thm:tiling-to-replicable} then
immediately gives a replicable algorithm with an optimal {(vector)} sample complexity of $\Theta \lp( N^2 \rho^{-2} \eps^{-2}  \rp)$ (assuming $N \gg \log(1/\delta)$).
Unfortunately membership queries in such constructions take exponential time, so computational efficiency is lost in this process. 
In \Cref{sec:cvpp}, we will discuss other isoperimetrically optimal candidates for sample-optimal mean estimation based on lattices. 
Compared to randomized foams, these constructions have the advantage of having computationally efficient rounding schemes after an initial pre-processing stage, and seem to have greater potential to give a truly computationally efficient sample-optimal algorithm.

We also provide a similar algorithm (\Cref{alg:r-mean-est-infty}) that replicably learns the mean in $\ell_\infty$ distance.
Note that the $N$-coin distribution indeed has its covariance bounded from above by $I$.
Hence, \Cref{alg:r-mean-est-infty} can be used to solve the $N$-coin problem with the same sample complexity (see \Cref{cor:spherical-foam-l-inf-mean}).

Glossing over some detail, we show that one can learn the mean up to $\gamma$ error in $\ell_\infty$ distance if one simply uses (almost) the same algorithm 
from \Cref{thm:tiling-to-replicable} to learn the mean up to $\eps := \tilde \Theta \lp( \sqrt{N} \gamma \rp)$ error in $\ell_2$ distance. 
The intuition is that the rounding error will be roughly in a uniform direction, and therefore the $\ell_\infty$ error will be $N^{-1/2}$ times the $\ell_2$ error with high probability.
This gives the following upper bound on replicable mean estimation in $\ell_\infty$ distance.
\begin{theorem}[$\ell_\infty$ Replicable Mean Estimation]
\label{thm:ell-infty-mean}
Under the same setup as \Cref{thm:tiling-to-replicable},
if the objective is to estimate up to $\gamma$ error in $\ell_\infty$ distance for some $\gamma \in (0, 1)$, 
then there exists an $O(\rho)$-replicable mean estimation algorithm (\Cref{alg:r-mean-est-infty}) 
that succeeds with probability at least $1 - \delta$.
Moreover, the algorithm uses $m := \Theta \lp( A^2 N^{-1} \gamma^{-2} \rho^{-2} \log^3(N/\delta) \rp)$ vector samples, $1$ deterministic membership query, and runs in time $\poly{N m}$.
\end{theorem}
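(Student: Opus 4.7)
My plan is to run Algorithm~\ref{alg:r-mean-est} of \Cref{thm:tiling-to-replicable} with a larger $\ell_2$ target accuracy $\eps := c \sqrt N\,\gamma / \sqrt{\log(N/\delta)}$ (with $c$ a small absolute constant), and then upgrade the $\ell_2$ guarantee to an $\ell_\infty$ guarantee essentially for free by exploiting the uniformly random rotation $R$ that the $\ell_2$ algorithm already applies before rounding. Plugging this value of $\eps$ into the sample complexity of \Cref{thm:tiling-to-replicable} yields
$\bigO{(N+\log(1/\delta))\, A^2 N^{-1} \eps^{-2} \rho^{-2}} = \Theta(A^2 N^{-1} \gamma^{-2} \rho^{-2} \log^3(N/\delta))$ after absorbing the $N + \log(1/\delta)$ factor into the polylogarithmic overhead, matching the claim. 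Replicability, the single deterministic membership oracle call (after boosting), and the $\poly{Nm}$ runtime are all inherited from \Cref{thm:tiling-to-replicable}.

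I would then decompose the final error as $\hat\mu - \mu = R^{\top}(v - T) + (\hat\mu_{\mathrm{emp}} - \mu)$, where $\hat\mu_{\mathrm{emp}}$ is the internal non-replicable empirical mean, $T = R \hat\mu_{\mathrm{emp}} + s$ is the rotated-and-shifted point passed to the oracle, and $v$ is the returned tile label. For the empirical term, a coordinate-wise Bernstein bound (using $\Cov \preceq I$, so marginal variance is at most $1$) together with a union bound over the $N$ coordinates yields $\snorm{\infty}{\hat\mu_{\mathrm{emp}} - \mu} \leq \gamma/2$ with probability $1 - \delta/2$ once $m = \Omega(\log(N/\delta)/\gamma^2)$, which is comfortably implied by the sample complexity above.

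For the rounding error $R^\top(v - T)$, the bounded-radius property of the \iat{} immediately gives $\snorm{2}{v - T} \leq 0.1\,\eps$ almost surely. The intuition is that, since $R$ is uniform on $\SO(N)$, the vector $R^\top(v - T)$ is spread roughly isotropically, so each of its $N$ coordinates has magnitude only about $\eps/\sqrt N$. Concretely, if $v - T$ were independent of $R$, then for any coordinate $i$, $(R^\top(v-T))_i = (v-T)\cdot R e_i$ would be the inner product of a fixed vector of $\ell_2$-norm $\leq 0.1\eps$ with a uniformly random unit vector $Re_i \in S^{N-1}$, which by standard Levy/sub-Gaussian concentration on the sphere is $O(\eps\sqrt{\log(N/\delta)/N})$ in absolute value except with probability $\delta/(2N)$. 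A union bound over $i \in [N]$ and the choice of $\eps$ would then give $\snorm{\infty}{R^\top(v-T)} \leq \gamma/2$ with probability $1 - \delta/2$, and the triangle inequality would complete the proof.

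The main obstacle is precisely that $v - T$ is \emph{not} independent of $R$, since the oracle is evaluated at the $R$-rotated point. The cleanest way I see to close this gap is to use that the random shift $s$ in Algorithm~\ref{alg:r-mean-est} is uniform over a fundamental domain of the tiling: conditioning on $\hat\mu_{\mathrm{emp}}$ and on the returned tile $v$, the residual $v - T$ is distributed uniformly on $v - S_v$ and therefore has a law that depends only on the tile (not on $R$ beyond tile selection). Integrating over $s$ to decouple $R$ from $v - T$, one recovers the $O(\eps/\sqrt N)$ sub-Gaussian projection bound, at the cost of at most $\polylog(N/\delta)$ factors that are absorbed into the $\log^3$ in the final sample complexity. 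Carrying out this decoupling carefully — and confirming that no correlation between $R$ and the selected tile breaks the spherical-concentration argument — is the one nonroutine step; all remaining ingredients (replicability, runtime, oracle count, and the sample-complexity bookkeeping) follow directly from \Cref{thm:tiling-to-replicable}.
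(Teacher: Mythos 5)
Your high-level strategy is the same as the paper's: use a randomly rotated rounding scheme so that the rounding error — which is bounded only in $\ell_2$ by $\eps$ — gets spread isotropically and is therefore only $O(\eps\sqrt{\log(N/\delta)/N})$ in $\ell_\infty$, and identify as the key subtlety the fact that the rounding residual is a priori correlated with $R$. Your proposed resolution (the uniform translation makes the post-transformation point $v$ uniform over the cube regardless of $R$, hence $R$ is still uniform after conditioning on $v$, hence on the residual) is exactly the argument the paper uses, phrased slightly differently. That part is fine.

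There is, however, a concrete error in how you control the estimator term. You write that ``a coordinate-wise Bernstein bound (using $\Cov \preceq I$, so marginal variance is at most $1$) together with a union bound over the $N$ coordinates yields $\snorm{\infty}{\hat\mu_{\mathrm{emp}} - \mu} \leq \gamma/2$ with probability $1 - \delta/2$ once $m = \Omega(\log(N/\delta)/\gamma^2)$.'' Bernstein-type exponential concentration of the empirical mean requires bounded or sub-exponential coordinates; the theorem's hypothesis is only that the covariance is bounded by $I$. For a heavy-tailed distribution with bounded variance, the empirical mean has only Chebyshev decay per coordinate, so the $\log(N/\delta)$ overhead you wrote does not suffice for the union bound. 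This is precisely why the paper's Algorithm~\ref{alg:r-mean-est-infty} replaces the sub-Gaussian $\ell_2$ estimator of \Cref{thm:sub-gaussian-mean} with the \emph{median-of-means} estimator of \Cref{clm:median-of-mean}, whose coordinate-wise bound $\snorm{\infty}{\bar\mu - \mu} = O(\sqrt{\log(N/\delta)/m})$ holds under bounded variance alone. Relatedly, your claim to ``run Algorithm~\ref{alg:r-mean-est}'' — which internally uses the \Cref{thm:sub-gaussian-mean} estimator, not the raw empirical mean — is inconsistent with your subsequent analysis of $\hat\mu_{\mathrm{emp}}$ as an empirical mean. You can rescue the plan either by switching to median-of-means (as the paper does) or by arguing purely in $\ell_2$ for the estimator term: the $\ell_2$ estimation error is already forced to be $\ll \sqrt{N}\eps\rho/A \leq \gamma\rho/\sqrt{\log(N/\delta)} \leq \gamma$ by the replicability requirement (since $A \geq N$), and $\snorm{\infty}{\cdot}\leq\snorm{2}{\cdot}$ — but the Bernstein step as written is not valid under the hypotheses.
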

Similar to \Cref{thm:tiling-to-replicable}, \Cref{thm:ell-infty-mean} combined with an  isoperimetric tiling with optimal surface-area to volume ratio yields a replicable $\ell_\infty$ mean estimation algorithm that has a (nearly) optimal sample complexity of $ \Theta\lp(  N \gamma^{-2}  \rho^{-2} \log^3(N/\delta) \rp)$.

\subsubsection{Rounding Scheme from Isoperimetric Tiling}
The core of our mean estimation algorithms is a method of building a replicable rounding scheme from any isoperimetric approximate tiling, that is a scheme whose outputs on two nearby inputs with \emph{shared randomness} will with high probability 1) be the same, and 2) be close to their original inputs. Such schemes are also sometimes called ``noise resistant'' or ``coordinated discretization schemes''. \cite{kindler2012spherical}.

\begin{proposition}[Replicable Rounding Scheme]
\label{prop:replicable-rounding}
Let $\rho \in (0, 1)$ and $A >0$.
Let $\mathcal V$ be a $(\rho,A)$-isoperimetric approximate tiling of $\R^N$, and $\mathcal R: \R^N \mapsto \R^N$ its membership oracle.
Given access to $\mathcal R$, there exists an algorithm 
\alg{rPartialTilingRounding} that takes input 
$u \in [-N, N]^N$,
$\eps \in (0, \sqrt{N})$, and outputs a rounded vector $\bar u \in \R^N$
such that the following hold:
\begin{itemize}
\item \textbf{Bounded Rounding Error:} We have that $\snorm{2}{ \bar u - u } \leq \epsilon $.
\item \textbf{Replicability:}
Let $\bar u^{(i)} = \alg{rPartialTilingRounding}( u^{(i)}, \eps, \rho )$ for $i \in \{1, 2\}$ with shared internal randomness.
Assume that $\snorm{2}{ u^{(1)} - u^{(2)} } \leq c \frac{\sqrt{N} \eps \rho}{A}$ for some sufficiently small constant~$c$.  Then it holds
$ \bar u^{(1)} = \bar u^{(2)} $ with probability at least $1 - \rho$.
\end{itemize}
\end{proposition}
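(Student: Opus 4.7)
The plan is to build the rounding scheme by scaling the tiling so that its cells have radius $\varepsilon$, then composing the membership oracle with a random rotation and a random shift. Specifically, let $s = 10\varepsilon$ so that the scaled tiling $s\mathcal V$ has cell radius $\varepsilon$, and sample $R \sim \mathrm{Uniform}(\SO(N))$ together with $\tau \sim \mathrm{Uniform}([0,L]^N)$ for some $L = \poly(N)$ chosen large enough that the boundary of $[0,L]^N$ is negligible relative to its interior (say $L = 100 N$). Given $u$, compute $y = Ru/s + \tau$, query $v = \mathcal R(y)$, and output $\bar u = R^{-1}(s(v - \tau))$; if $y$ lies outside $\bigcup_v S_v$ then output any default vector.

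The bounded rounding error will be immediate from Item~\ref{item:approximate-tiling:radius} of \Cref{def:approximate-tiling}: since $v$ labels a cell containing $y$, we have $\snorm{2}{v - y} \leq 0.1$, so $\snorm{2}{\bar u - u} = s\,\snorm{2}{v - y} \leq s \cdot 0.1 = \varepsilon$.

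For replicability I decompose the bad event into two pieces. The first is that some $y^{(i)}$ lands in the uncovered region $\R^N \setminus \bigcup_v S_v$. Since $\tau$ is uniform over a large cube, $y^{(i)}$ is uniform over a cube of volume $L^N$ (shifted by $R u^{(i)}/s$), which overlaps roughly $(L+1)^N$ integer unit cubes; within each such integer cube the uncovered fraction is controlled by the approximate volume condition, giving a total uncovered probability of $(1 + O(1/L))^N \rho = O(\rho)$ when $L \gg N$, with a union bound over $i \in \{1,2\}$ costing only a constant. The second piece, conditioned on both $y^{(i)}$ lying in cells, is that the segment from $y^{(1)}$ to $y^{(2)}$ crosses some boundary of $\mathcal V$. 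This segment has length $\snorm{2}{u^{(1)} - u^{(2)}}/s$, its direction is uniform on the unit sphere (from the random rotation $R$), and its starting point is approximately uniform in a large cube (from $\tau$). By the Cauchy--Crofton / Buffon-needle formula from integral geometry, the expected number of boundary crossings is
\begin{equation*}
\frac{\snorm{2}{u^{(1)} - u^{(2)}}}{s} \cdot \Ep_{w \sim \mathrm{Unif}(S^{N-1})}\bigl[|\langle w, n(x)\rangle|\bigr] \cdot A,
\end{equation*}
where $n(x)$ is the unit normal integrated against the surface. The crucial rotation estimate $\E{|w_1|} = \Theta(1/\sqrt{N})$ for $w \sim \mathrm{Unif}(S^{N-1})$ then yields an expected crossing count of order $\snorm{2}{u^{(1)} - u^{(2)}} \cdot A / (s\sqrt{N})$. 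Setting this to at most $\rho$ and solving gives the advertised distance bound $\snorm{2}{u^{(1)} - u^{(2)}} \leq c\sqrt{N}\varepsilon\rho/A$.

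The main obstacle will be making the Crofton-style argument rigorous for the piecewise-smooth but otherwise arbitrary (non-periodic, non-isotropic) boundary of $\mathcal V$. The random rotation substitutes for an isotropy hypothesis by averaging the normal against a uniform direction, which is where the $1/\sqrt{N}$ gain central to the target bound actually comes in; the random shift over a large cube substitutes for periodicity of the tiling by an averaging argument whose error terms decay like $O(N/L)$. Combining these two randomizations with the surface-area bound from Item~\ref{item:approximate-tiling:surface-area} of \Cref{def:approximate-tiling}, and tracking the constant factors lost in each union bound, will pin down the absolute constant $c$ in the replicability guarantee.
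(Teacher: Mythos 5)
Your scheme follows the paper's blueprint closely: a random rotation $R$ to randomize direction, a random shift $\tau$ to randomize location, a scaled membership query, inversion, and a Buffon-needle analysis with the key $\Ep[|w_1|] = \Theta(1/\sqrt N)$ saving. The one genuinely different design choice is that you shift over a large cube $[0,L]^N$ and argue boundary effects decay like $O(N/L)$, whereas the paper wraps the shifted point around a fixed cube $[-Q,Q]^N$ (Eq.~(4)), which turns the ambient space into a torus $\mathbb T^N$. The torus formulation makes the post-shift distribution \emph{exactly} uniform independent of $u$, lets the paper invoke a clean Buffon-needle theorem on the torus (\Cref{lem:expected-torus-surface-intersection}), and isolates the only remaining boundary issue into a single event ``the needle wraps around the torus,'' which is directly bounded. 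Your large-cube version has to track two separate effects — translation-dependent overcounting of integer cubes, and needles exiting the cube — and as you yourself say (``the main obstacle will be making the Crofton-style argument rigorous''), you have not actually closed this; the paper is closed by citing an existing integral-geometry lemma on the torus. So the routes diverge precisely at the point you flag as unresolved.

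There is also a concrete gap in your handling of the bounded rounding error. You stipulate that if $y$ lies outside $\bigcup_v S_v$ you ``output any default vector.'' That breaks the guarantee $\snorm{2}{\bar u - u} \leq \varepsilon$, which the proposition states \emph{unconditionally}, not with high probability. The membership oracle's behavior on uncovered points is arbitrary, and even when it does return a label the corresponding cell need not contain $y$, so you cannot conclude $\snorm{2}{v-y}\leq 0.1$ in that case. The paper's fix — which you should adopt — is to always compute $\bar u$, then output $\bar u$ only if $\snorm{2}{\bar u - u} \leq \varepsilon$ and output $u$ itself otherwise; this makes the error bound hold deterministically, and the event that the fallback fires is absorbed into the $O(\rho)$ replicability budget. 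Without this step your ``Bounded Rounding Error'' bullet is simply false.

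Finally a small quantitative point: $L$ must be large enough that two conditions hold, (i) integer-cube overcounting gives only a $(1+1/L)^N \leq e^{N/L}$ multiplicative loss on the uncovered probability, and (ii) the probability the (already $1/s$-scaled) needle, whose length is at most $c\rho/(10\sqrt{N})$ after using $A\geq N$, exits the cube is $O(\rho)$. Your $L = 100N$ does satisfy both of these after the $1/s$ scaling, so the issue is not fatal, but you should state (ii) explicitly — in the paper's parametrization the analogous box $[-Q,Q]^N$ is chosen with $Q = C(N^{3/2} + N\varepsilon\rho^{-1})$ precisely so that the pre-wrap point $Ru$ lies strictly inside the box and the boundary-crossing probability is $\ll \rho$.
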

It is not hard to imagine that \Cref{thm:tiling-to-replicable} and \Cref{thm:ell-infty-mean} should follow from combining such a scheme with a non-replicable estimator, since rounding the output of the latter over two independent samples (with shared randomness) then results in the same solution with high probability. Of course some further detail is required, which we defer to the following subsections.


We first overview the steps of our rounding scheme. Naively, one might hope to simply round using the membership oracle of $\mathcal{V}$ itself, but as was observed by \cite{impagliazzo2022reproducibility} in the 1-D case, deterministic procedures of this sort always fail replicability in the worst-case. Namely no matter how close two inputs might be, if they cross the boundary of the partition, they will round to different outputs.
To avoid such worst case scenarios, we will apply a random shift $b \sim \Uni([-\BX, \BX]^N)$ to $u$ before we apply the rounding procedure for some $Q$ that is sufficiently large in $N, \epsilon, \rho^{-1}$.
For technical reasons, the proceeding analysis will be easier if the post-shift point has a uniformly random distribution over a fixed cube $\Uni([-\BX, \BX]^N)$ for any input $u \in [-N, N]$. 
Thus, we will also apply a ``wrap around'' after shifting:
\begin{align}
\label{eq:wrap-def}
    \text{Wrap}(x)_i =  \lp( (x_i + \BX)  \mod 2\BX \rp) - \BX. 
\end{align}
This ensures that the distribution over the post-shifted point will be uniform over the cube. As a result, no matter where the original input vector lies, after translating and wrapping the probability the point lies near the boundary of our tiling is proportional to the boundary's surface area.


Given only the above strategy, if two input vectors $u^{(1)}, u^{(2)}$ are $\eta$-close, to ensure replicability we need our wrapped translation procedure to send $u^{(1)}$ $\eta$-far away from the boundary since $u^{(2)}-u^{(1)}$ might point directly towards the boundary and cross otherwise. This turns out to be too expensive, and can be avoided by first \textit{rotating} the vectors, ensuring this difference lies in a random instead of worst-case direction.

Altogether, this leads to the following random transformation on the estimator $u$ before applying the membership query.
\begin{align*}
   v := \text{Wrap} \lp(  R u + b \rp) \, ,
\end{align*}
where $R \sim \Uni( \SO(N) )$, 
and $b \sim \Uni( [-\BX, \BX]^N )$.

The above addresses only replicability of the rounding scheme. Ensuring that we have small rounding error (i.e., the rounding output is close to the input) requires some extra care. First, since we are working with a tiling with constant radius, naively applying the membership oracle leads to constant error. To achieve any error $\varepsilon$, we may simply `scale' the tiling by outputting $\tilde v = \eps \mathcal R(\eps^{-1}v )$ on input $v$ to the membership oracle $\mathcal R$. Finally, to ensure our output is actually near the original point, we need to `invert' our rotation and translation procedure.
The final output will therefore be given by 
$ \bar u = R^{-1} \text{Wrap}\lp( \tilde v - b\rp) $.
This ensures the output $\bar u$ will be close to the original input $u$ rather than the transformed input $v$. 
The pseudocode is given in \Cref{alg:r-mean-est}.

In the remaining of the subsection, we provide the proof of \Cref{prop:replicable-rounding}.
We reiterate the notations that will be used throughout the analysis.
Let $u^{(1)}, u^{(2)}$ be the input vectors given in two different runs of the algorithm, $v^{(1)}, v^{(2)}$ be the randomly transformed points $v^{(i)} = \text{Wrap} \lp( R u^{(i)} + b \rp) $, $\tilde v^{(1)}, \tilde v^{(2)}$ be the rounded points, i.e., $\tilde v^{(i)} = \eps\mathcal R(\eps^{-1}v^{(i)} )$, and $\bar u^{(1)}, \bar u^{(2)}$ be the outputs after inverting the transformation, i.e.,
$\bar u^{(i)} = R^{-1}\text{Wrap} \lp( \tilde v^{(i)} - b \rp)$. 


Since the randomness of $R \in \SO(N)$ and $b\in [-\BX,\BX]^N$ is shared across the two runs of the algorithm, the outcomes are identical as long as $\eps^{-1} v^{(1)}, \eps^{-1} v^{(2)}$ lie in some common set $V \in \mathcal V$ within the tiling (implying that they will be rounded to the same point).
Our main task is to bound this probability by the surface-area parameter $A$ of the tiling.

\IncMargin{1em}
\begin{algorithm}

\SetKwInOut{Input}{Input}\SetKwInOut{Output}{Output}\SetKwInOut{Parameters}{Parameters}
\Input{Input vector $u \in [-N, N]^N$, membership oracle $\mathcal R: \R^N \mapsto \R^N$ of a $(\rho,A)$-approximate isoperimetric tiling $\mathcal V$.}
\Parameters{Rounding error $\eps \in (0, \sqrt{N})$, replicability parameter $\rho \in (0, 1)$.}
\Output{
$\bar \mu \in \R^N$.}
\begin{algorithmic}[1]


\State 
Set $\BX := C\lp( N^{3/2} + N \varepsilon \rho^{-1} \rp)$ for some sufficiently large constant $C$.

\State Sample $R \sim \Uni( \SO(N) )$, $b \sim\Uni\lp(   [-\BX, \BX]^N \rp)$.

\State Apply the random transformation: $v = \text{Wrap} \lp( R u + b \rp)$.

\State Apply the scaled membership query: $\tilde v = \eps \mathcal R \lp( 
 \eps^{-1} v \rp)$.

\State Invert the transformation: $\bar u = R^{-1} \text{Wrap} \lp( \tilde v - b \rp)$.

\State \Return $\bar u$ if $\snorm{2}{ \bar u - u } \leq \eps$ 
and $u$ otherwise.

\caption{Replicable Tiling Rounding} 
\label{alg:r-rounding}

\end{algorithmic}
\end{algorithm}
\DecMargin{1em}
If one thinks of the segments $\seg\lp( u^{(1)}, u^{(2)} \rp)$ as  a ``needle'',
at a high level, applying the random rotation $R$ ensures that the needle will point towards a random direction, and adding the random offset $b$ ensures that the needle will appear at any place of the cube $ [-\BX, \BX]^N$ uniformly at random.
A caveat is that the needle may cross the boundary of the cube after the random shift, and hence 
be  ``wrapped around'' the cube. 
A convenient way of thinking of the ``wrap around'' effect is to view the process as taking place on the torus $\mathbb T^N := \R^N / \mathbb Z^N$ after scaling and translating the space appropriately. 
This gives the following equivalent stochastic process (up to proper scaling and translation) described via a random Buffon needle in $\mathbb T^d$.
\begin{definition}[Random Buffon Needle in $\mathbb T^N$]
Let $\eps \in [0, 1/3)$.
Let $x$ be a uniformly random point from $\mathbb T^N$,  $u$ a uniformly random unit vector, and $y = x + \eps u   \in \mathbb T^N$.\footnote{We assume by default the addition and multiplication on elements from $\mathbb T^N$ are the ``wrapped around'' operations, i.e., $ a + b = (a+b) \mod 1$ and $a b = (a b) \mod 1$.}
Define the torus segment
$\ell_{\mathbb T}(x, y):= \{
x + t (y-x)  \in \mathbb T^N, \, \text{ where } 0 \leq t \leq 1
\}$.
We call the distribution of the torus segment $\ell_{\mathbb T}(x, y)$ the $\eps$-Buffon needle distribution.
\end{definition}
\begin{lemma}
\label{lem:Buffon-equivalence}
Let $u^{(1)}, u^{(2)} \in [-N, N]^N$,   
$R \sim \Uni(\SO(N))$, $b \in \Uni(  [-\BX, \BX]^N )$ for some $\BX \gg N^{3/2}$,  and $v^{(1)}, v^{(2)}$ be such that
$v^{(i)} = \text{Wrap}\lp( R u^{(i)} + b  \rp)$.
Then the distribution of
$$
\ell_{\mathbb T}\lp( 
\frac{v^{(1)}}{2\BX} + \mathbf 1_N/2
, \frac{v^{(2)}}{2\BX} + \mathbf 1_N/2 \rp)
$$ 
is identical to the $\eps$-Buffon needle distribution, where $\eps:=\snorm{2}{ u^{(2)} - u^{(1)} } / (2\BX)$.
\end{lemma}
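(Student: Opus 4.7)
The plan is to show that the two pieces of randomness in the construction serve complementary roles: the shift $b$ together with the wrap controls the \emph{position} of the image on the torus, while the rotation $R$ controls the \emph{direction} of the needle. I will combine these in sequence to conclude that the scaled, wrapped segment is distributed exactly as an $\varepsilon$-Buffon needle.

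First I would verify the trivial case $u^{(1)} = u^{(2)}$ (so $\varepsilon = 0$) separately, where the claim reduces to showing $v^{(1)}$ alone is uniform on $[-\BX,\BX]^N$; this follows because for any fixed $Ru^{(1)} \in \R^N$, adding $b \sim \Uni([-\BX,\BX]^N)$ gives a uniform distribution on $Ru^{(1)} + [-\BX,\BX]^N$, and the coordinate-wise wrap from \Cref{eq:wrap-def} is precisely the quotient map of the torus $\R^N / 2\BX \Z^N$, which sends this uniform to the uniform measure on the fundamental domain $[-\BX,\BX]^N$. I would make this a separate claim since the same argument is reused for the general case to conclude that $v^{(1)}$ is uniform on $[-\BX,\BX]^N$ for every fixed realization of $R$ (in particular after marginalizing over $R$).

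Next, for the direction, I would fix $b$ and use that $R \sim \Uni(\SO(N))$ is Haar-distributed and thus acts transitively on spheres: for any fixed nonzero vector $w = u^{(2)} - u^{(1)}$, the pushforward of Haar measure under $R \mapsto R w / \snorm{2}{w}$ is the uniform measure on $S^{N-1}$. Since $R$ and $b$ are independent and the wrap commutes with integer-multiple-of-$2\BX$ translations, the ``torus difference'' $v^{(2)} - v^{(1)} \pmod{2\BX}$ equals $R(u^{(2)} - u^{(1)})$, which is therefore $\snorm{2}{w}$ times a uniform unit vector on $S^{N-1}$, independent of $v^{(1)}$.

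Finally I would transport everything by the affine map $x \mapsto x/(2\BX) + \mathbf 1_N/2$, which is a measure-preserving bijection from the torus $\R^N / 2\BX \Z^N$ (with fundamental domain $[-\BX,\BX]^N$) to $\mathbb T^N = \R^N/\Z^N$ (with fundamental domain $[0,1]^N$). Under this map $v^{(1)}$ becomes uniform on $\mathbb T^N$, and the torus difference becomes $\varepsilon \hat u$ with $\hat u \sim \Uni(S^{N-1})$ independent of the image of $v^{(1)}$. These two items are exactly the defining data of the $\varepsilon$-Buffon needle. The one subtlety to flag is well-definedness of the torus segment $\ell_{\mathbb T}$: to rule out ambiguity from the wrap, I would check that $\varepsilon = \snorm{2}{u^{(2)} - u^{(1)}}/(2\BX) \leq 2N^{3/2}/(2\BX) \ll 1/3$, using $u^{(i)} \in [-N,N]^N$ and the hypothesis $\BX \gg N^{3/2}$, so that the lift of $v^{(2)}-v^{(1)}$ of norm below $1/2$ is unique and the interpolation $x + t(y-x)$ in the definition of $\ell_{\mathbb T}$ agrees with the straight segment in the lift. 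This final scaling/identification step is the only place one could make an off-by-one mistake, and will be the main technical item to write carefully; the rest is just bookkeeping for independence and uniformity.
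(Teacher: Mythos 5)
Your proof is correct and follows essentially the same approach as the paper: decompose into (a) the rotation $R$ making the direction of $u^{(2)}-u^{(1)}$ uniform on the sphere, and (b) the shift $b$ followed by the wrap making $v^{(1)}$ uniform on the cube conditionally on any $R$, with independence of position and direction following from this conditional uniformity. Your write-up is somewhat more explicit than the paper's (the paper leaves the affine transport to $\mathbb{T}^N$ and the well-definedness of the torus segment implicit), but the underlying argument is the same.
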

\begin{proof}
It suffices to argue that (a) $R (u^{(1)} - u^{(2)})$ points towards a uniformly random direction, and
(b) $v^{(1)}$ is uniform over the cube $[-\BX, \BX]^{N}$ conditioned on any $R$. 

Since $u^{(i)} \in [-N, N]^N$, it follows that $R u^{(i)} \in [-N^{3/2}, N^{3/2}]^N$.
Claim (b) then follows from the observation that for any interval $[a,b]$ and $y \sim [0,b-a]$ uniformly, if $x \in [a,b]$ is any arbitrary point then $x + y$ wrapped around $[a,b]$ is uniform. Applying this to each coordinate of $v^{(i)}$ then implies the desired uniformity. 
Claim (a) follows from the fact that $R$ is sampled uniformly at random from~$\SO(N)$.
\end{proof}

With the equivalence of the two stochastic processes in mind, we can bound the probability that two close points cross the boundary of our partition (i.e.\ fail replicability) after the random transformation by a classical result in geometry known as Buffon's needle theorem \cite{santalo2004integral}.
\begin{lemma}[Buffon's Needle Theorem (as stated in \cite{kindler2012spherical})]
\label{lem:expected-torus-surface-intersection}
Let $S$ be a piecewise smooth surface in the torus $\mathbb T^N$.
Let $x,y$ be a pair of random points such that 
the torus segment $\ell_{\mathbb T}(x, y)$ follows the $\eps$-Buffon needle distribution. 
Let $\kappa$ denotes the number of times the torus segment $\ell_{\mathbb T}(x, y)$ intersects with the surface $S$.
Then it holds that
$$
\Ep_{ \ell_{\mathbb T}(x, y) }\lp[ \kappa \rp] \leq 
O \lp( \frac{1}{ \sqrt{N} } \rp) \; \eps \; \area(S).
$$
\end{lemma}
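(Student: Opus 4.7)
The plan is to bound $\Ep\lp[\kappa\rp]$ by linearity of expectation combined with a Cauchy--Crofton-style per-patch intersection computation. First, I would use the piecewise smoothness of $S$ to write $S$ (up to a measure-zero set of seams) as a finite disjoint union of smooth patches. By linearity of expectation,
\[
\Ep_{\ell_{\mathbb T}(x,y)}\lp[\kappa\rp] = \int_S \Ep\lp[dN(p)\rp] \, da(p),
\]
where $dN(p)$ denotes the expected intersection multiplicity contributed by an infinitesimal area element at $p$ with unit normal $n(p)$.

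Next, for a fixed direction $u \in S^{N-1}$, the torus segment $\ell_{\mathbb T}(x, x + \eps u)$ crosses an infinitesimal smooth patch at $p$ with normal $n(p)$ if and only if $x$ lies in the parallelepiped swept by translating the patch by $-tu$ for $t \in [0,\eps]$. This swept set has $N$-dimensional Lebesgue measure $\eps \, |u \cdot n(p)| \, da(p)$ up to lower-order boundary terms. Since $x$ is uniform on the unit-volume torus $\mathbb T^N$, the conditional expected contribution of that patch, given $u$, is therefore $\eps \, |u \cdot n(p)| \, da(p)$.

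Then I would average over the uniformly random direction $u$ using the standard spherical estimate: for any fixed unit vector $n$,
\[
\Ep_{u \sim \Uni(S^{N-1})}\lp[|u \cdot n|\rp] \leq \sqrt{\Ep\lp[(u \cdot n)^2\rp]} = \frac{1}{\sqrt{N}},
\]
where the equality uses coordinate symmetry on $S^{N-1}$ (so $\Ep[(u \cdot n)^2] = \frac{1}{N}\Ep\lp[\snorm{2}{u}^2\rp] = \tfrac{1}{N}$) and the inequality is Jensen/Cauchy--Schwarz. Integrating over $S$ yields
\[
\Ep\lp[\kappa\rp] = \eps \int_S \Ep_u\lp[|u \cdot n(p)|\rp] \, da(p) \leq O\lp(\tfrac{1}{\sqrt{N}}\rp) \, \eps \, \area(S),
\]
as claimed.

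The main obstacle is making the infinitesimal patch computation fully rigorous: one must verify that the seams where smooth pieces meet (which form a Hausdorff $(N-2)$-dimensional set, hence null for the relevant $(N-1)$-measures) truly contribute nothing, and that tangential crossings of codimension-$1$ sheets are handled correctly so that intersections are counted once without creating delta-function contributions. Both points are standard in integral geometry and can be formalized either via the co-area formula applied to the signed-distance function along $u$, or by a direct limiting argument on increasingly fine partitions of $S$, as carried out in Santal\'{o}'s text; since the lemma is cited from \cite{kindler2012spherical}, one can simply defer to their proof for these technical points.
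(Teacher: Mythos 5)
Your proposal is correct, and since the paper states this lemma without proof — citing it directly to \cite{kindler2012spherical} — the Cauchy--Crofton-style computation you sketch (infinitesimal swept-prism measure $\eps\,|u\cdot n(p)|\,da(p)$, followed by $\Ep_u[|u\cdot n|]\le 1/\sqrt{N}$ via Cauchy--Schwarz, then integration over $S$) is exactly the standard route taken in that reference. The deferred technical points you flag (measure-zero seams, tangential crossings) are indeed the only places requiring care, and handling them via the co-area formula or the treatment in Santal\'{o} is the right call.
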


We are now ready to give the formal proof of \Cref{prop:replicable-rounding}.
\begin{proof}[Proof of \Cref{prop:replicable-rounding}]

The algorithm returns $\bar u$ if $\snorm{2}{ \bar u - u } \leq \eps$ and otherwise returns $u$.
Hence, the rounding error is always at most $\eps$.
To bound the replicability of the algorithm, we 
first argue that 
$\snorm{2}{ \bar u - u } > \eps$ with probability at most $O(\rho)$, and then argue that $\bar u$ is replicable with probability at least $1 - O(\rho)$.
It then follows that the rounding scheme is replicable with probability at least $1 - O( \rho )$ by the union bound.
\paragraph{Rounding Error of $\bar u$}
Recall our algorithm first applies a random transformation on the input and wraps it around the cube $ [-\BX, \BX]^N$, where $\BX := C \lp(  N^{3/2} + N \eps \rho^{-1} \rp)$ for some sufficiently large constant $C$.
In particular, let $v = \text{Wrap}\lp( R u + b \rp)$, where $R \sim \Uni( \SO(N) )$, $b \in \Uni\lp( [-\BX , \BX]^{N} \rp)$, and $\text{Wrap}: \R^N \mapsto  [-\BX, \BX]^N$ is as in \Cref{eq:wrap-def}.
The algorithm then applies the scaled membership oracle, which gives $\tilde v = \eps \mathcal R( \eps^{-1} v)$, where $\mathcal R$ is the rounding query with respect to the tiling $\mathcal V$.
$\bar u$ is defined as
$ \bar u = R^{-1} \text{Wrap}\lp(  \tilde v - b \rp) $.
Our goal is to show that
\begin{align}\label{eq:part-to-rep-correct}
|| \bar u - u ||_2 \leq \varepsilon
\end{align}
with probability at least $1 - O(\rho)$.

We will show this in two steps. 
First, we claim that the final transformation 
$f(x) = R^{-1} \text{Wrap}( x - b )$ inverts the random transformation $g(x) = \text{Wrap}(Rx +b)$.
\begin{claim}\label{claim:no-wrapping}
It holds that
$
R^{-1}\text{Wrap}(v-b) = u.
$
\end{claim}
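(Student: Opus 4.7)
The plan is to verify that, because $\BX$ is chosen large enough, the pre-rotated point $Ru$ sits comfortably inside the wrap cube $[-\BX,\BX]^N$, so wrapping commutes with undoing the shift by $b$.

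First, I would record the elementary property of the wrap operation: for any $x \in \R^N$ there exists a (unique) $k \in \Z^N$ such that
\[
\text{Wrap}(x) = x - 2\BX \cdot k,
\]
since each coordinate is just reduced modulo $2\BX$ into $[-\BX,\BX]$. Applying this to $x = Ru + b$, we can write $v = \text{Wrap}(Ru+b) = Ru + b - 2\BX k$ for some integer vector $k \in \Z^N$. Subtracting $b$ and applying $\text{Wrap}$ again yields
\[
\text{Wrap}(v - b) = \text{Wrap}(Ru - 2\BX k) = \text{Wrap}(Ru),
\]
where the second equality uses the $2\BX$-periodicity of $\text{Wrap}$ in each coordinate.

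Second, I would check that $\text{Wrap}(Ru) = Ru$, i.e.\ that $Ru$ already lies in the wrap cube. Since $R \in \SO(N)$ is an isometry and $u \in [-N,N]^N$, we have $\snorm{2}{Ru} = \snorm{2}{u} \leq \sqrt{N}\cdot N = N^{3/2}$, so in particular $\snorm{\infty}{Ru} \leq N^{3/2}$. By the choice $\BX = C(N^{3/2} + N\varepsilon\rho^{-1})$ with $C$ sufficiently large, $N^{3/2} \leq \BX$, hence $Ru \in [-\BX,\BX]^N$ and $\text{Wrap}(Ru) = Ru$.

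Combining the two steps, $R^{-1}\text{Wrap}(v-b) = R^{-1} R u = u$, which is what the claim asserts. No real obstacle arises; the argument is essentially unpacking the definition of $\text{Wrap}$ and noting that the parameter $\BX$ was defined precisely so that the rotated input never escapes $[-\BX,\BX]^N$, meaning the only wrap that actually occurs is the one caused by adding $b$, and this is exactly undone when we subtract $b$ before the final wrap.
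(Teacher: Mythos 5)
Your proof is correct and follows essentially the same route as the paper: both observe that $u \in [-N,N]^N$ forces $Ru \in [-N^{3/2},N^{3/2}]^N \subset [-\BX,\BX]^N$, so the only wrapping that occurs is the one induced by adding $b$, which is undone by subtracting $b$ and wrapping again. Your explicit decomposition $\text{Wrap}(x) = x - 2\BX k$ and the appeal to $2\BX$-periodicity make the same computation slightly more explicit than the paper's one-line assertion, but it is the identical argument.
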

Consider the following events.
\begin{itemize}
\item $\mathcal B: v$ is $\epsilon$-far to the boundary of the cube $[-\BX ,\BX]^N$.
\item $\mathcal E:$ the rounding error is $\eps$, i.e., $\snorm{2}{ v - \tilde v } \leq \eps$, which holds as long as $v$ lies in some set within the approximate tiling.
\end{itemize}
We next claim that, conditioned on $\mathcal B$ and $\mathcal E$,
inverting the transformation will not further enlarge the rounding error $\snorm{2}{ v - \tilde v }$.

\begin{claim}\label{claim:rounding-error}
If $\snorm{2}{ v - \tilde v } \leq \epsilon$ and $v$ is $\epsilon$-far from the boundary of the cube $[-\BX, \BX]^N$,
it holds that
\begin{align*}
\snorm{2}{ R^{-1}\text{Wrap}\lp( \tilde v - b \rp) 
-R^{-1}\text{Wrap}\lp( v - b \rp)
}\leq \varepsilon.
\end{align*}
\end{claim}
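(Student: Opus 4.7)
The plan is to exploit that $R^{-1}$ is orthogonal, reducing the target to bounding $\snorm{2}{ \text{Wrap}(\tilde v - b) - \text{Wrap}(v - b) }$. I then show that under the hypotheses the two Wrap outputs differ by exactly $\tilde v - v$, so the conclusion drops out of $\snorm{2}{v - \tilde v} \leq \eps$ directly. The core observation is that Wrap reduces its input modulo $2\BX \Z^N$ to a representative in $[-\BX, \BX]^N$, so Wrap$(a) - $ Wrap$(b) \equiv a - b \pmod{2\BX \Z^N}$ always; the real task is to rule out any coordinate-wise wrap mismatch between $v - b$ and $\tilde v - b$.

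The first step is to compute $\text{Wrap}(v - b)$ explicitly, essentially reusing Claim \ref{claim:no-wrapping}. Since $v = \text{Wrap}(Ru + b)$, we have $v - b \equiv Ru \pmod{2\BX \Z^N}$, hence $\text{Wrap}(v - b) = \text{Wrap}(Ru)$. With $R \in \SO(N)$ and $u \in [-N, N]^N$, I get $\snorm{\infty}{Ru} \leq \snorm{2}{Ru} = \snorm{2}{u} \leq N^{3/2}$, which is much smaller than $\BX$ by the choice $\BX = C(N^{3/2} + N \eps \rho^{-1})$ with $C$ large. Consequently Wrap acts as the identity coordinate-wise on $Ru$, giving $\text{Wrap}(v - b) = Ru$.

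The second step is to compute $\text{Wrap}(\tilde v - b)$ by the same strategy. I write $\tilde v - b = (v - b) + (\tilde v - v) \equiv Ru + (\tilde v - v) \pmod{2\BX \Z^N}$. From $\snorm{2}{\tilde v - v} \leq \eps$ I have $\snorm{\infty}{\tilde v - v} \leq \eps$, so $\snorm{\infty}{Ru + (\tilde v - v)} \leq N^{3/2} + \eps \ll \BX$, and Wrap again acts as the identity on this lift. Thus $\text{Wrap}(\tilde v - b) = Ru + (\tilde v - v)$. Subtracting yields $\text{Wrap}(\tilde v - b) - \text{Wrap}(v - b) = \tilde v - v$, and applying the isometry $R^{-1}$ preserves $\ell_2$ norm, giving the desired bound of $\eps$.

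The main subtlety is just careful bookkeeping around the modular identities: one must check that both lifts $Ru$ and $Ru + (\tilde v - v)$ sit comfortably inside the fundamental domain $[-\BX, \BX]^N$ so that Wrap acts trivially on them. The stated hypothesis that $v$ is $\eps$-far from the boundary of $[-\BX, \BX]^N$ is a convenient condition ensuring $\tilde v$ also lies in $[-\BX, \BX]^N$, but the underlying reason the argument succeeds is that the relevant lifts sit deep in the interior, separated from wrap boundaries by a margin of $\Omega(\BX - N^{3/2}) \gg \eps$, which dwarfs the $\eps$-scale perturbation $\tilde v - v$ and rules out differing wrap offsets.
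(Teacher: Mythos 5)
Your proposal is correct and, at its core, the same argument as the paper's: apply the isometry $R^{-1}$, reduce to comparing $\text{Wrap}(\tilde v - b)$ with $\text{Wrap}(v - b)$, and use Claim~\ref{claim:no-wrapping} to identify $\text{Wrap}(v-b) = Ru \in [-N^{3/2},N^{3/2}]^N$, which sits so deep inside $[-Q,Q]^N$ that an $\eps$-scale perturbation cannot cause a coordinate to cross a wrap boundary. Your version computes both wraps explicitly as lifts into the fundamental domain rather than phrasing it as ``$\text{Wrap}(\cdot - b)$ maps the $\eps$-ball around $v$ isometrically onto the $\eps$-ball around $\text{Wrap}(v-b)$,'' but these are the same observation in two guises. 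Your closing remark is also accurate: the stated hypothesis that $v$ is $\eps$-far from $\partial[-Q,Q]^N$ is not actually invoked in the substance of the argument (the paper's proof also ultimately only uses that $\text{Wrap}(v-b)=Ru$ is $\eps$-far from the boundary, which follows from $u\in[-N,N]^N$ and $Q \gg N^{3/2}$); the hypothesis is retained in the claim statement but is not what makes the proof go through.
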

Note that the rounding error $\snorm{2}{ v - \tilde v }$ will indeed be at most $\eps$. 
In particular, as the approximate tiling has covering radius $0.1$, we must have that
$  \snorm{2}{\eps^{-1} \tilde v - \eps^{-1} v}= 
\snorm{2}{\mathcal R( \eps^{-1} v ) - \eps^{-1} v}  \leq 0.1$.
It then follows that $\snorm{2}{ \tilde v - v } \leq 0.1 \; \eps$.
Conditioned on events $\mathcal B$ and $\mathcal E$, we can apply \Cref{claim:rounding-error} and \Cref{claim:no-wrapping} to derive \Cref{eq:part-to-rep-correct}. 
In particular,
\begin{align*}
|| \bar u - u ||_2
&=
\| R^{-1} \text{Wrap}\lp( \tilde v - b \rp) - u \|_2 \\
&=
\| R^{-1} \text{Wrap}\lp( \tilde v - b \rp) -  R^{-1} \text{Wrap}\lp( v - b \rp) \|_2 \\
&\leq \eps,
\end{align*}
where the first equality is true by the definition of $\bar u = R^{-1} \text{Wrap}\lp( \tilde v - b \rp)$,
the second equality uses \Cref{claim:no-wrapping}, and the last inequality uses \Cref{claim:rounding-error}.
Using the fact that $v$ is uniform over $[-\BX, \BX]^N$ (\Cref{lem:Buffon-equivalence}), we have that $\mathcal B$ fails to hold with probability at most $ N \epsilon / Q \ll \rho $ by our choice of $Q$.  
Moreover, $\mathcal E$ fails to hold with probability at most $\rho$ since the approximate tiling is promised to cover at least $1 - \rho$ fraction of points within any cube.
By the union bound, we thus have that \Cref{eq:part-to-rep-correct} fails to hold with probability at most $O(\rho)$.


It is left to prove the claims. Toward \Cref{claim:no-wrapping}, recall $u \in [-N, N]^N$ by assumption, so $R u \in [-N^{3/2}, N^{3/2}]^N \subset [-\BX, \BX]^N$ and we have
 $$
 \text{Wrap} \lp( \text{Wrap} \lp( Ru + b \rp) - b \rp) = Ru.
 $$
 Combining this with the definition $v =  \text{Wrap}\lp( R u + b\rp)$ implies
\begin{align}
\label{eq:inverse-wrap}
u = R^{-1} \text{Wrap} \lp( \text{Wrap} \lp( Ru + b \rp) - b \rp)
=  R^{-1} \text{Wrap}\lp( v - b \rp).
\end{align}
as desired. 

Toward \Cref{claim:rounding-error}, 
we claim that  
\begin{align}
\label{eq:wrap-preserve-distance}    
\snorm{2}{ \text{Wrap}\lp( \tilde v - b \rp) 
-\text{Wrap}\lp( v - b \rp)
} = \snorm{2}{ \tilde v - v }
\end{align}
when $v$ is $\eps$-far from the boundary of the cube $[-\BX, \BX]^N$.
Since $R^{-1}$ is a rotation it preserves $\ell_2$-norm, so \Cref{claim:rounding-error} immediately follows from \Cref{eq:wrap-preserve-distance}.

The only obstruction to \Cref{eq:wrap-preserve-distance} is if exactly one of $\tilde v$ and $v$ `wraps' around the cube when shifted by $b$. Note this cannot occur if both $v$ and $\text{Wrap}(v-b)$ are at least $\eps$-far from the boundaries of the cube $ [-\BX, \BX]^N$. In this case the function $f(x) = \text{Wrap}(x - b)$ maps the $\eps$-ball around $v$ to the $\eps$-ball around $\text{Wrap}\lp( v - b \rp)$. 
Since $\tilde{v}$ is promised to lie in an $\eps$-ball around $v$, $\text{Wrap}\lp( \tilde v - b \rp)$ therefore must lie in an $\eps$-ball around $\text{Wrap}\lp(v - b \rp)$ as desired and in particular has the same $\ell_2$ distance to $\text{Wrap}\lp( v - b \rp)$ as that between $\tilde v$ and $v$.

It remains to show that $\text{Wrap}(v-b)$  and $v$ are both at least $\eps$-far from the boundaries with probability at least $1 - \rho$.
The latter is $\eps$-far from the boundaries by the assuption of \Cref{claim:rounding-error}.
By \Cref{claim:no-wrapping}, the former is exactly $R u$, and hence lies in the cube $[-N^{3/2}, N^{3/2}]^N$.
It follows that the former is also $\eps$-far from the boundaries of the cube $[-\BX, \BX]^N$ as $\BX \gg N^{3/2}$ and $\epsilon <\sqrt{N}$.
This completes the proof of \Cref{eq:wrap-preserve-distance} and therefore \Cref{claim:rounding-error} as desired.

\paragraph{Replicability of $\bar u$}
Next we argue that $\bar u$ is replicability with probability at least $1 - O(\rho)$ across two executions.
Let $u^{(1)}, u^{(2)}$ be any two inputs satisfying
\begin{align}
\label{eq:median-of-mean-concentration}
\snorm{2}{u^{(1)} - u^{(2)}}
\ll \frac{ \sqrt{N} \eps \rho  }{ A }.
\end{align}
Our goal is to show that $\bar u^{(1)}, \bar u^{(2)}$ computed from $u^{(1)}$ and $u^{(2)}$ are the same with high probability:
\[
\Pr[\bar{u}^{(1)} = \bar{u}^{(2)}] \geq 1-O(\rho)
\]
where the probability is over the (shared) internal randomness of the rounding procedure.

With this in mind, let $R \sim \Uni( \SO(N) )$, $b \in \Uni\lp( [-\BX,\BX]^N \rp)$, $v^{(i)} = \text{Wrap}\lp( R u^{(i)} + b\rp)$, and  $\tilde v^{(i)} = \eps \mathcal R\lp( \eps^{-1} v^{(i)} \rp)$.
Conditioned on the events
$L_1: \tilde v^{(1)} = \tilde v^{(2)}$,
and $L_2:\snorm{2}{ \tilde v^{(i)} - v^{(i)} } \leq \epsilon$ for $i \in \{1, 2\}$, 
the algorithm in both runs computes 
$\bar{u}^{(1)}=R^{-1} \text{Wrap}( \tilde v^{(1)} - b  ) = R^{-1} \text{Wrap}( \tilde v^{(2)} - b  )=\bar{u}^{(2)}$, so it suffices to show
\begin{align}
\label{eq:bad-rep-event}
\Pr(\neg L_1 \lor \neg  L_2) \leq O(\rho). 
\end{align}
Since we only have good control of the membership query when the input point lies in some $V \in \mathcal V$ within the approximate tiling, we will perform a case analysis depending on the ancillary event 
$J: \eps^{-1} v^{(1)}, \eps^{-1} v^{(2)} \in \bigcup_{V \in \mathcal V} V$. 
Note that
\begin{align}
\label{eq:marginalize-J}
\Pr(  \neg L_1 \lor \neg  L_2  )
&= 
\Pr(  \lp( \neg L_1 \lor \neg  L_2 \rp) \land J )
+ \Pr(  \lp( \neg L_1 \lor \neg  L_2 \rp) \land \neg J )
\nonumber \\
&\leq 
\Pr(  \lp( \neg L_1 \lor \neg  L_2 \rp) \land J )
+ 
\Pr(\neg J) \nonumber \\
&\leq
\Pr(  \neg L_1 \land J )
+ 
\Pr( \neg L_2 \land J )
+ 
\Pr(\neg J) \, ,
\end{align}
where in the second inequality we apply the union bound.

We will tackle the three terms separately.
By definition, $\bigcup_{V \in \mathcal V} V$ covers all but a $\rho$-fraction of points within every integer cube.
By \Cref{lem:Buffon-equivalence}, the distribution of $v^{(i)}$ is uniform over the cube $[-Q, Q]^N$. 
Then $\eps^{-1} v^{(i)}$ will be uniformly random over the cube $[-\eps^{-1} Q, \eps^{-1} Q]^N$, implying
\begin{align}
\label{eq:J-bound}
    \Pr(\neg J)
    \leq 2 \; \Pr_{ v \sim \Uni\lp( [-\eps^{-1} Q, \eps^{-1} Q]^N \rp) }( v^{(1)} \not \in \bigcup_{V \in \mathcal V} V \lor v^{(2)} \not \in \bigcup_{V \in \mathcal V} V )
    \leq 2 \rho.
\end{align}
By the definition of an approximate isoperimetric tiling (\Cref{def:approximate-tiling}), each set $V \in \mathcal V$ has diameter $0.1$. 
Thus, whenever $\eps^{-1} v^{(i)} \in V$ for some $V \in \mathcal V$, 
we will have that
$\snorm{2}{ \eps^{-1} \tilde v^{(i)} - \eps^{-1} v^{(i)} } \leq 1$, which implies that 
$\snorm{2}{  \tilde v^{(i)} -  v^{(i)} } \leq \eps$.
In other words, $J$ implies $L_2$, showing that
\begin{align}
\label{eq:J-imply-L2}
    \Pr(\neg L_2 \land J) = 0.
\end{align}

It remains to bound from above $\Pr(\neg L_1 \land J)$ over the randomness of $v^{(1)}, v^{(2)}$.
Toward this end, we need the following definitions regarding the boundary of our tiling $\mathcal{V}$:
\begin{itemize}
    \item Let $F$ be the union of all boundaries of $\mathcal V$, i.e., 
    $F = \bigcup_{V \in \mathcal V} \partial V$.
    \item Denote by $F_{\mathbb T}$ the torus restriction of $F$ after proper translation and scaling. Specifically, define $F_{\mathbb T}
= 
\lp( F \cap [- \eps^{-1} \BX,  \eps^{-1} \BX]^N \rp)
\; \frac{ \eps }{ 2 \BX } + \mathbf 1_N/2 \subset [0, 1]^N \cong \mathbb T^N$.
\end{itemize}
We note that $\neg L_1 \land J$ happens only if 
$\eps^{-1} v^{(1)}, \eps^{-1} v^{(2)}$ lie in different sets within $\mathcal V$, or equivalently, when the segment $\lp( \eps^{-1} v^{(1)}, \eps^{-1} v^{(2)}\rp)$ intersects with the union of the boundaries $F$.

To bound from above the probability this occurs, we view the segment as a random Buffon needle in $\mathbb T^N$ after proper translation and scaling.
In particular, we consider the torus segment  
$$
\ell_{\mathbb T}\lp( 
\frac{v^{(1)}}{2\BX} + \mathbf 1_N/2
, \frac{v^{(2)}}{2\BX} + \mathbf 1_N/2 \rp).
$$ 
One can verify that this is a well-defined segment in the torus $\mathbb T^N$, indeed $\frac{v^{(i)}}{2\BX} + \mathbf 1_N/2$ lies in $[0, 1]^N$ since $v^{(i)} \in  [-\BX, \BX]^N$.
There are only two cases when the segment $ \lp( \eps^{-1} v^{(1)}, \eps^{-1} v^{(2)} \rp)$ intersects with $F$: 
either the above torus segment intersects with $F_{\mathbb T}$   
or  $v^{(1)} + R ( u^{(2)} - u^{(1)} ) \not \in  [-\BX, \BX]^N$ 
(when the torus segment is wrapped around the torus). In other words, we have that 
\begin{align}
&\Pr(\neg L_1 \land J)  \nonumber \\
&\leq \Pr\lp( v^{(1)} + R ( u^{(2)} - u^{(1)} ) \not \in  [-\BX, \BX]^N \rp) + \Pr\lp(  
\ell_{\mathbb T}\lp( 
\frac{v^{(1)}}{2\BX} + \mathbf 1_N/2
, \frac{v^{(2)}}{2\BX} + \mathbf 1_N/2 \rp) \cap F_{\mathbb T} 
\neq \emptyset
\rp).
\label{eq:wrap-around-marginalize}
\end{align}

It is not hard to see that $ v^{(1)} + R ( u^{(2)} - u^{(1)} ) \not \in  [-\BX, \BX]^N $ only if $v^{(1)}$ is at least $\snorm{2}{ u^{(2)} - u^{(1)} }$-close to some boundary of the cube $ [-\BX, \BX]^N$. Since $v^{(1)}$ is uniform over the cube, we can bound this probability from above as
\begin{align}
\label{eq:close-to-boundary}
\Pr( v^{(1)} + R ( u^{(2)} - u^{(1)} ) \not \in  [-\BX, \BX]^N ) 
\leq 2 N \; \frac{ \snorm{2}{u^{(2)} - u^{(1)}} }{\BX} \ll 
\delta \leq \rho     \, ,
\end{align}
since $\snorm{2}{u^{(2)} - u^{(1)}} \ll \eps$ and 
$\BX \gg N \varepsilon \delta^{-1}$.

It remains to bound the probability that the torus segment $\ell_{\mathbb T}\lp( 
\frac{v^{(1)}}{2\BX} + \mathbf 1_N/2
, \frac{v^{(2)}}{2\BX} + \mathbf 1_N/2 \rp)$ intersects with 
$F_{\mathbb T}$.
To bound the probability of this event,
we consider the expected number of intersections between the torus segment and $F_{\mathbb T}$:
$$
\kappa := \Ep\lp[ 
\abs{
\ell_{\mathbb T}\lp( 
\frac{v^{(1)}}{2\BX} + \mathbf 1_N/2
, \frac{v^{(2)}}{2\BX} + \mathbf 1_N/2 \rp) \cap F_{\mathbb T} }
\rp].
$$
We claim that
\begin{align}
\label{eq:exp-intersection-bound}    
\kappa \leq \rho.
\end{align}
Given \Cref{eq:exp-intersection-bound}, we can apply Markov's inequality to derive that
\begin{align}
&\Pr\lp(  
\ell_{\mathbb T}\lp( 
\frac{v^{(1)}}{2\BX} + \mathbf 1_N/2
, \frac{v^{(2)}}{2\BX} + \mathbf 1_N/2 \rp) \cap F_{\mathbb T} 
\neq \emptyset
\rp) \nonumber \\
&= \Pr\lp(
\lp|
\ell_{\mathbb T}\lp( 
\frac{v^{(1)}}{2\BX} + \mathbf 1_N/2
, \frac{v^{(2)}}{2\BX} + \mathbf 1_N/2 \rp) \cap F_{\mathbb T} 
\rp|
\geq 1
\rp) \nonumber \\
&\leq \Pr\lp( 
\lp|
\ell_{\mathbb T}\lp( 
\frac{v^{(1)}}{2\BX} + \mathbf 1_N/2
, \frac{v^{(2)}}{2\BX} + \mathbf 1_N/2 \rp) \cap F_{\mathbb T} 
\rp|
\geq \rho^{-1} \kappa
\rp)  \leq \rho \, ,
\label{eq:prob-to-surface}
\end{align}
Combining \Cref{eq:prob-to-surface,eq:wrap-around-marginalize,eq:close-to-boundary} then shows that
\begin{align}
\label{eq:neg-L1-J-bound}
\Pr( \neg L_1 \land J )
\leq O(\rho).
\end{align}
Substituting \Cref{eq:J-bound,eq:J-imply-L2,eq:neg-L1-J-bound} into \Cref{eq:marginalize-J} then shows \Cref{eq:bad-rep-event},
implying that the algorithm is $O(\rho)$-replicable.


It remains to show \Cref{eq:exp-intersection-bound}.
Note that the distance between the two points
$ v^{(i)} / (2\BX) + \mathbf 1_{N} / 2 $ within the torus $\mathbb T^N$ is exactly $\snorm{2}{ u^{(1)} - u^{(2)} } / (2\BX)$
\footnote{We define the distance between two points within the torus to be the length of the shortest torus segment connecting the two points. Under such a notion, the torus distance between the two points $ v^{(i)} / (2\BX) + \mathbf 1_{N} / 2 $ is exactly $\snorm{2}{ u^{(1)} - u^{(2)} } / (2\BX)$ in all cases. }.
By \Cref{lem:Buffon-equivalence} and \Cref{lem:expected-torus-surface-intersection}, it holds
\begin{align*}
\kappa \leq O( N^{-1/2} )
\frac{\snorm{2}{ u^{(1)} - u^{(2)}  }}{2\BX}
\area( F_{\mathbb T} ).
\end{align*}
By our assumption \Cref{eq:median-of-mean-concentration}, we further have that
\begin{equation}
\label{eq:kappa-to-area}
\kappa \ll  \frac{ \eps \rho }{2 \; A \; Q} \; \area( F_{\mathbb T} ).    
\end{equation}

We then proceed to bound from above 
$\area( F_{\mathbb T}  )$.
Note that 
\begin{equation}
\label{eq:surface-scale-equality}
    \area( F_{\mathbb T}  ) = 
    \frac{ \area\lp( F \cap [-\eps^{-1} \BX , \eps^{-1} \BX ]^N \rp)  }{ (2 \BX \eps^{-1})^{N-1} }
\end{equation}
by definition.
{
Without loss of generality, we assume that $Q \eps^{-1}$ are integers. 
Then the cube $[- \eps^{-1} \BX, \eps^{-1} \BX]^N$ can be partitioned into $ \lp( 2 \eps^{-1} \BX \rp)^N $ many integer unit cubes.
}
Thus, by the property that an \iat{} has small normalized surface area (Property~\ref{item:approximate-tiling:surface-area} in \Cref{def:approximate-cube-isoperimetry}), 
we have that
\begin{equation}
\label{eq:extend-voronoi-boundary-surface}
\area\lp( F \cap [-\eps^{-1}\BX, \eps^{-1}\BX]^N \rp)
\leq 
A  \vol \lp( \bigcup_{V \in \mathcal V }
V \cap [-\eps^{-1}\BX, \eps^{-1}\BX]^N \rp)
\leq A \lp(2\BX \eps^{-1}\rp)^N.
\end{equation}
\begin{remark}
\label{rmk:huge-cube-relax}
We note that this is the only place where we invoke the normalized surface area of an \iat~in the argument. 
Since $\eps^{-1} \BX \gg N$, 
one can see that the argument still holds if we relax the definition to be that the surface area of the tiling within 
any cube of side length $C N$ is at most $O(A) \; (C N)^N$ for some appropriate constant $C$.
This will be important to the proof of \Cref{thm:rep-pre} in \Cref{sec:cvpp}, where we instantiate the tiling to be the Voronoi cells of lattices that don't necessarily give rise to partitions of integer cubes.
\end{remark}

Combining 
\Cref{eq:surface-scale-equality,eq:extend-voronoi-boundary-surface,}
then gives
\begin{align*}
\area( F_{\mathbb T}  )
\leq
2 A Q \eps^{-1}
\end{align*}
Substituting the above into \Cref{eq:kappa-to-area} then gives that
\begin{align*}
\kappa 
\ll \frac{\eps \rho}{A Q} \;  A Q  \; \eps^{-1}
\leq \rho \, ,
\end{align*}
which shows \Cref{eq:exp-intersection-bound},
and hence concludes the proof of \Cref{prop:replicable-rounding}.
\end{proof}

\subsubsection{Reduction to Bounded Domain}
A slight caveat of our rounding scheme (\Cref{prop:replicable-rounding}) is that it requires the input to be within a bounded cube $[-N, N]^N$. 
We show that this is a mild assumption when we want to apply the routine to replicable mean estimation: there is an easy replicable reduction to the case where the mean of the unknown distribution lies in a bounded domain.
This is done via a replicable procedure that coarsely learns the mean up to error $\BD < N$ in $\ell_\infty$ distance. Translating our coordinate system by the learned vector then places the true mean within a $\BD$-length cube with high probability.
\begin{lemma}
\label{lem:warmup-round}
Let $\delta < \rho \in (0, 1)$, and $\distribution$ be a distribution on $\R^N$ with covariance at most $I$. 
Then there exists a $\rho$-replicable algorithm \rCoordinateRound{} that 
draws $\Theta\lp(  
( K \rho / N )^{-2}
\log(N/\delta) \rp) $ many \iid 
vector samples, and efficiently estimates the mean of $\distribution$ up to $K$-accuracy in $\ell_\infty$-distance with probability at least $1 - \delta$.
\end{lemma}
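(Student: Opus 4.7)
The plan is to reduce the high-dimensional problem to coordinate-wise scalar mean estimation, then apply standard 1-D replicable randomized rounding independently to each coordinate. Since $\distribution$ has covariance bounded by $I$, each marginal $\distribution_i$ has variance at most $1$, so a median-of-means estimator on $m$ samples yields, for every coordinate $i$, an estimate $\hat{\mu}_i$ satisfying $|\hat{\mu}_i - \mu_i| \leq \eta$ with probability at least $1 - \delta/N$. Setting $\eta := c K\rho/N$ for a small constant $c$ requires $m = \Theta(\eta^{-2}\log(N/\delta)) = \Theta((N/(K\rho))^2 \log(N/\delta))$ samples, matching the claimed complexity. A union bound over coordinates then gives $\|\hat{\mu} - \mu\|_\infty \leq \eta$ with probability at least $1 - \delta$.

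Next, to ensure replicability I would sample a single random offset vector $\alpha \in [0, s]^N$ with $s := K/2$ (each $\alpha_i$ independently uniform in $[0,s]$) as part of the internal randomness shared across runs, and round each coordinate to the nearest grid point of the shifted grid $\{k s + \alpha_i : k \in \Z\}$, producing $\bar{\mu}_i$. By construction, $|\bar{\mu}_i - \hat{\mu}_i| \leq s/2$, so conditioning on the correctness event above gives $\|\bar{\mu} - \mu\|_\infty \leq \eta + s/2 \leq K$ for a suitable choice of $c$.

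For replicability, consider two independent sample sets $S, S'$ sharing the random $\alpha$. On the intersection of the two correctness events (which holds with probability at least $1-2\delta$), the two per-coordinate estimates $\hat{\mu}_i$ and $\hat{\mu}_i'$ differ by at most $2\eta$. Conditioned on this, a standard calculation (the classical 1-D randomized rounding argument from \cite{impagliazzo2022reproducibility}) shows that over the uniform choice of $\alpha_i \in [0,s]$, the two values round to distinct grid points with probability at most $2\eta/s = 4c\rho/N$. Union bounding over the $N$ coordinates and adding the $2\delta$ correctness slack gives total disagreement probability at most $4c\rho + 2\delta \leq \rho$ for $c$ sufficiently small and using $\delta < \rho$. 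Thus the algorithm is $\rho$-replicable.

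This proof is essentially routine; the main thing to get right is choosing $s$ and $\eta$ as constant fractions of $K$ and $K\rho/N$ respectively so that (i) the rounded estimate is within $K$ in $\ell_\infty$, (ii) each coordinate is individually $\rho/N$-replicable, and (iii) the sample complexity matches the target $\Theta((N/(K\rho))^2 \log(N/\delta))$. The only substantive ingredient beyond bookkeeping is the use of median-of-means to convert the per-coordinate variance bound into an exponential-tail concentration bound, which is needed to get the logarithmic dependence on $N/\delta$ rather than polynomial dependence arising from a direct Chebyshev bound.
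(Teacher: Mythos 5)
Your proof is correct and follows essentially the same route as the paper: a coordinate-wise median-of-means estimator to get an $\ell_\infty$ bound of order $K\rho/N$ with $\Theta((N/(K\rho))^2 \log(N/\delta))$ samples, followed by per-coordinate randomized grid rounding with a shared offset, bounding disagreement by $O(\rho/N)$ per coordinate and union-bounding. One small slip: the final inequality ``$4c\rho + 2\delta \leq \rho$ for $c$ sufficiently small and using $\delta < \rho$'' does not actually hold when $\delta$ is close to $\rho$ (the $2\delta$ term alone can approach $2\rho$); what you really get is $O(\rho)$-replicability, but this matches the paper's own proof, which also only concludes ``replicable with probability at least $1-O(\rho)$,'' and the constant is absorbed downstream in \Cref{thm:tiling-to-replicable} and \Cref{thm:ell-infty-mean}, which are stated with $O(\rho)$-replicability.
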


\rCoordinateRound{} is based on the classical ``median-of-mean'' estimator. 
\begin{claim}[Median of Means]
\label{clm:median-of-mean}
Let $\delta \in (0, 1)$, and $\distribution$ be an $N$-dimensional distribution with 
mean $\mu$ and covariance at most $I$. 
Given $m$ \iid samples, there exists a \algname{Median-of-Mean} estimator 
$\bar \mu$ such that
\begin{align*}
\snorm{\infty}{ \bar \mu  - \mu}
\leq O \lp( \sqrt{\log (N/\delta) / m} \rp)
\end{align*}
with probability at least $1 - \delta$.
\end{claim}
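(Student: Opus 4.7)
The plan is to reduce to the standard $1$-dimensional median-of-means construction, run it independently on each coordinate, and union bound. Since the covariance of $\distribution$ is bounded above by $I$ in Loewner order, each coordinate marginal has variance at most $1$. Thus it suffices to prove the $1$-dimensional claim: given $m$ i.i.d.\ samples from a distribution with variance at most $1$ and mean $\mu_i$, we can construct an estimator $\bar{\mu}_i$ such that $|\bar{\mu}_i - \mu_i| \leq O(\sqrt{\log(N/\delta)/m})$ with probability at least $1 - \delta/N$. Taking a union bound over the $N$ coordinates then controls the $\ell_\infty$ error with probability $1 - \delta$ as desired.

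For each coordinate $i$, I would partition the $m$ samples into $k := \Theta(\log(N/\delta))$ disjoint buckets, each of size $m/k$. Let $\hat{\mu}_{i,j}$ denote the empirical mean of coordinate $i$ within bucket $j$, and define $\bar{\mu}_i := \text{median}_j \hat{\mu}_{i,j}$. By Chebyshev's inequality applied to each bucket (recalling $\Var(X_i) \leq 1$), for any fixed threshold $\tau = C \sqrt{k/m}$ with $C$ a sufficiently large constant we have
\begin{equation*}
\Pr\left(|\hat{\mu}_{i,j} - \mu_i| > \tau\right) \leq \frac{1}{C^2} \leq \frac{1}{4}.
\end{equation*}
Thus each bucket mean is ``good'' (within $\tau$ of $\mu_i$) with probability at least $3/4$ independently across buckets. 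The median of the $k$ bucket means fails to lie within $\tau$ of $\mu_i$ only if at least $k/2$ buckets are bad, which by a Chernoff bound on the i.i.d.\ Bernoulli indicators of badness happens with probability at most $\exp(-\Omega(k)) \leq \delta/N$ by our choice of $k$. Hence $|\bar{\mu}_i - \mu_i| \leq O(\sqrt{k/m}) = O(\sqrt{\log(N/\delta)/m})$ with the required probability.

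Finally, a union bound over the $N$ coordinates yields $\|\bar{\mu} - \mu\|_\infty \leq O(\sqrt{\log(N/\delta)/m})$ with probability at least $1 - \delta$, matching the claimed bound. There is no real obstacle here: this is the standard median-of-means analysis, and the only mild subtlety is that we use $\Var \leq 1$ per coordinate (which is immediate from the Loewner bound on the covariance) rather than any stronger tail assumption. Computational efficiency is immediate since we only compute $N k$ empirical means and $N$ medians of $k$ numbers.
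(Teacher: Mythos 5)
Your proposal is correct and follows essentially the same route as the paper's proof: partition the $m$ samples into $\Theta(\log(N/\delta))$ buckets, apply Chebyshev within each bucket to get a constant-probability per-bucket guarantee, apply a Chernoff bound to the coordinate-wise median over buckets, and union bound over the $N$ coordinates. The paper's writeup is slightly more terse but uses the identical decomposition and the same three inequalities (Chebyshev, Chernoff, union bound).
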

\begin{proof}
After taking $m /  \log(N/\delta) $ many samples, 
the algorithm computes the empirical mean $\hat \mu$.
Let $\mu$ be the true mean of $\distribution$.
Fix some coordinate $i$.
By Chebyshev's inequality, we have that $\abs{\hat \mu_i - \mu_i} \leq O \lp( \sqrt{ \log (N/\delta) / m } \rp)$ with probability at least $2/3$.
If we repeat this process $T:=\log(N/\delta)$ many times, we obtain a set of empirical means $\hat \mu^{(1)}, \cdots, \hat \mu^{(T)}$.
Let $\bar \mu$ be the coordinate-wise median of these empirical means.
Fix some coordinate $i$.
Then, by the Chernoff Bound, it holds that
$\abs{\bar \mu_i - \mu_i} \leq O \lp( \sqrt{ \log (N/\delta) / m } \rp)$ with probability at least $1 - \delta / N$.
Finally union bounding over $N$ coordinates gives that
$$
\snorm{\infty}{ \bar \mu - \mu  } \leq O \lp( \sqrt{ \log (N/\delta) / m } \rp)
$$
with probability at least $1 - \delta$.
\end{proof}
We obtain the desired coarse replicable estimator simply by rounding each coordinate of Median-of-Means to the nearest integer multiple of $N$.
\begin{proof}[Proof of {\Cref{lem:warmup-round}}]
Let $\bar \mu$ be the \textit{Median-of-Mean} estimator from \Cref{clm:median-of-mean} computed from $m \geq C \; ( K \rho / N )^{-2}
\log(N/\delta)$ many samples for some sufficiently large constant $C$.
Then by \Cref{clm:median-of-mean}
\begin{align}
\label{eq:median-of-mean-err}
\snorm{\infty}{\bar \mu - \mu} \leq  K \rho  / N
\end{align}
with probability at least $1 - \delta$.
Consider the process in which we round each coordinate of $\bar \mu$ to the nearest integer multiples of $ K $ based on some random threshold chosen uniformly at random. 
Specifically, if $\bar \mu_i \in [a K, (a+1) K]$ for some integer $a$, we choose $\alpha_i \sim \Uni( [aK, (a+1)K] )$, and round $\bar \mu_i$ to $aK$ if $\bar \mu_i < \alpha_i$ (and to $(a+1) K$) otherwise.
Fix some coordinate $i$.
It holds that $\mu_i$ is $K \rho / N$ close to a rounding threshold with probability at most $O(\rho / N)$. 
By the union bound, the rounding threshold $\alpha_i$ is at least $K \rho / N$-far from the true mean $\mu_i$ for every coordinate $i$ with probability at least $1 - O(\rho)$.  
It then follows that the algorithm's output is replicable with probability at least $1 - O(\rho)$.
Finally, the resulting error of the rounded estimator follows from \Cref{eq:median-of-mean-err} and the fact that the rounding shifts each coordinate by at most $K$.
\end{proof}

\subsubsection{A Replicable \texorpdfstring{$\ell_2$}{2 norm} Estimator}
To prove \Cref{thm:tiling-to-replicable}, we combine \Cref{prop:replicable-rounding} with an $\ell_2$ estimator for bounded covariance distributions.
Since we care about the sample complexity of the problem in the high success probability regime, we will be using the estimator from \cite{cherapanamjeri2019fast}.
Importantly, the estimator achieves the optimal sub-gaussian convergence rate, and is efficiently computable.
\begin{theorem}[Sub-Gaussian mean estimator from \cite{cherapanamjeri2019fast}]
\label{thm:sub-gaussian-mean}
Let $\distribution$ be a distribution 
on $\R^N$
with covariance bounded from above by $I$, and $\snorm{2}{ \Ep[\distribution] } \leq \poly{N}$.
Given $m$ \iid samples from $\distribution$,
there exists an efficient algorithm that computes an estimate $\hat \mu$ such that 
\begin{align*}
\snorm{2}{\Ep[\distribution]  - \hat \mu}
\leq O\lp( 
\sqrt{ d / m }
+ \sqrt{ \log(1/\delta) / m }
\rp).
\end{align*}
\end{theorem}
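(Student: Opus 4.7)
The theorem is essentially a restatement of the main result of \cite{cherapanamjeri2019fast}, so my proposal is really a sketch of why such a result holds, rather than a novel argument. The plan is to follow the median-of-means paradigm lifted to high dimensions. I would first partition the $m$ samples into $k = \Theta(\log(1/\delta))$ disjoint buckets of equal size and compute the bucket means $\hat{\mu}_1, \ldots, \hat{\mu}_k$. Since each bucket contains $m/k$ samples from a distribution of covariance at most $I$, Chebyshev's inequality gives $\snorm{2}{\hat{\mu}_i - \mu}^2 \leq O(Nk/m)$ with probability at least $3/4$ for each $i$ independently. Thus in expectation a constant fraction of the bucket means lie in a ball of radius $O(\sqrt{Nk/m}) = O(\sqrt{N/m} + \sqrt{\log(1/\delta)/m})$ around $\mu$, and by Chernoff's bound this happens simultaneously with high probability.

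Second, I would aggregate the bucket means into a single estimator $\hat{\mu}$ that is close to a majority of the $\hat{\mu}_i$ in a direction-uniform sense. The simplest version is to find any point that lies in the smallest enclosing ball of more than half the bucket means; by a standard argument, such a point automatically inherits the bucket accuracy up to constant factors. Equivalently, one can seek $\hat{\mu}$ such that for every unit direction $v$, the one-dimensional projection $\langle\hat{\mu}, v\rangle$ is close to the median of $\{\langle\hat{\mu}_i, v\rangle\}_i$. In either formulation one can verify directly that such an $\hat{\mu}$ achieves the target rate.

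The main obstacle is computational efficiency. A Tukey-median-style construction would work statistically but is not known to be polynomial-time computable in $N$. The algorithm of \cite{cherapanamjeri2019fast} sidesteps this via a convex SDP relaxation: one iteratively identifies a ``bad'' direction in which the current estimate is far from most bucket means, then performs a gradient step to move the estimate along that direction. Standard convex-analytic arguments show convergence in a small number of iterations to a point satisfying the desired high-dimensional median property, yielding the $\sqrt{N/m} + \sqrt{\log(1/\delta)/m}$ rate with polynomial runtime. The assumption $\snorm{2}{\mu} \leq \poly{N}$ is only used to initialize the iteration at a point of bounded norm.
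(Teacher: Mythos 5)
The paper itself does not prove \Cref{thm:sub-gaussian-mean}: it is imported verbatim as a black-box citation of \cite{cherapanamjeri2019fast}, so there is no ``paper proof'' to compare against. Your task was therefore really to assess whether your sketch of the cited result is sound, and here there is a genuine gap.

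The first paragraph contains an incorrect identity that hides the whole difficulty of sub-Gaussian mean estimation. You write that a constant fraction of bucket means lie in a ball of radius $O\left(\sqrt{Nk/m}\right) = O\left(\sqrt{N/m} + \sqrt{\log(1/\delta)/m}\right)$. With $k = \Theta(\log(1/\delta))$ these quantities are not comparable: $\sqrt{Nk/m} = \sqrt{N\log(1/\delta)/m}$, which is \emph{multiplicative} in $\sqrt{N}$ and $\sqrt{\log(1/\delta)}$, whereas the target rate is \emph{additive}. For instance, taking $N = \log(1/\delta) = 100$, the left side is $100/\sqrt{m}$ while the right side is $20/\sqrt{m}$. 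The simple ``smallest enclosing ball of a majority of bucket means'' estimator really does only achieve the weaker $\sqrt{N\log(1/\delta)/m}$ rate, precisely because Chebyshev applied to the full $N$-dimensional deviation couples the two parameters. Consequently, your second paragraph's ``Equivalently'' is also misleading: the ball-of-majority estimator and the direction-wise median criterion are \emph{not} equivalent at the level of achievable rates. The point of Lugosi--Mendelson and of \cite{cherapanamjeri2019fast} is that the direction-wise formulation, analyzed with a uniform-over-directions argument (a chaining / VC-type bound, not a single application of Chebyshev to $\|\hat\mu_i - \mu\|_2$), decouples $N$ from $\log(1/\delta)$ and yields the additive $\sqrt{N/m} + \sqrt{\log(1/\delta)/m}$ bound. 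To repair the sketch, you would need to replace the ball argument by the claim that, with high probability, a fraction $> 1/2$ of bucket means satisfy $|\langle \hat\mu_i - \mu, v\rangle| \leq O(\sqrt{N/m} + \sqrt{k/m})$ \emph{simultaneously for all unit $v$}, which is the content of the Lugosi--Mendelson concentration lemma, and then argue that the SDP-based descent of \cite{cherapanamjeri2019fast} efficiently finds a point satisfying this directional certificate. Your description of that algorithmic step and the role of the $\snorm{2}{\mu}\leq\poly{N}$ assumption is otherwise accurate.
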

\IncMargin{1em}
\begin{algorithm}

\SetKwInOut{Input}{Input}\SetKwInOut{Output}{Output}\SetKwInOut{Parameters}{Parameters}
\Input{Sample access to distribution $\distribution$ on $\R^N$ and rounding query $\mathcal R: \R^N \mapsto \R^N$.}
\Parameters{$\eps$ accuracy}
\Output{
$\hat \mu$: an estimate of $\Ep[\distribution]$ up to $\eps$-accuracy
in $\ell_2$ distance.}
\begin{algorithmic}[1]
\State 
Set $m_1:=C  \log(N/\delta) \rho^{-2}$ and $m_2 := 
C \; \lp( N + \log(1/\delta) \rp) A^2 N^{-1} \eps^{-2} \rho^{-2}$
for some sufficiently large constant $C$.
\State Use \Cref{lem:warmup-round} to replicably learn $\tilde u$
with $m_1$ samples such that $\snorm{\infty}{ \tilde u - \Ep[\distribution]} \leq N$ in $\ell_\infty$ distance.\label{line:warmup1}

\State Transform the coordinate system by $\tilde \mu$. \label{line:warmup2}
\State Compute the {estimator $u$ with the algorithm from \Cref{thm:sub-gaussian-mean}} with $m_2$ many samples.

\State Round $u$ into $\hat u$ with the randomized rounding scheme \Cref{alg:r-rounding} with accuracy $\epsilon$  and replicability parameter $\rho$.

\State Output $\hat u$.

\caption{Replicable $\ell_2$ mean estimator} 
\label{alg:r-mean-est}

\end{algorithmic}
\end{algorithm}
\DecMargin{1em}
We are now ready to conclude the proof of \Cref{thm:tiling-to-replicable}.
\begin{proof}[Proof of \Cref{thm:tiling-to-replicable}]
By \Cref{lem:warmup-round}, the warmup rounding process (Lines~\ref{line:warmup1} and~\ref{line:warmup2} from \Cref{alg:r-mean-est}) succeeds with probability at least $1 - \delta$, and is $\rho$-replicable.
Conditioned on that, we reduce to the case where the distribution mean $\Ep[\distribution]$ lies in the domain $[-N/2 , N/2]^{N}$.

Let $u$ be the estimator from \Cref{thm:sub-gaussian-mean} computed from $\Theta \lp(  \lp( N + \log(1/\delta) \rp)
A^2 N^{-1} \eps^{-2} \rho^{-2} \rp) \gg (N + \log(1/\delta)) \eps^{-2}$ \iid samples (Recall that we know the surface-to-volume ratio $A$ of any tiling is at least $N$ by the isoperimetric inequality).
By \Cref{thm:sub-gaussian-mean}, we have that
\begin{align}
\label{eq:sub-gaussian-mean}
\snorm{2}{u - \Ep[\distribution]} \ll  \frac{\sqrt{N} \eps \rho  }{A} \,
\end{align}
{with probability at least $1 - \delta$.}
Applying \Cref{prop:replicable-rounding} gives that
$\snorm{2}{ \hat u - u } \leq \eps$.
Thus, the output is close to $\Ep[\distribution]$ in $\ell_2$ distance with probability at least $1 - \delta$ by the triangle inequality.

Let $u^{(1)}, u^{(2)}$ be the estimator computed in two different runs of the algorithm. Still, by the triangle inequality and the union bound, it holds that
$$
\snorm{2}{ u^{(1)} - u^{(2)} } \ll  \frac{\sqrt{N} \eps \rho  }{A}.
$$
Conditioned on the above, 
applying \Cref{prop:replicable-rounding} gives that the algorithm is replicable with probability at least $1 - \rho$.
This concludes the proof of \Cref{thm:tiling-to-replicable}.
\end{proof}

\subsubsection{A Replicable \texorpdfstring{$\ell_\infty$}{infinity norm} Estimator}

Finally we cover the case of $\ell_\infty$-mean-estimation (\Cref{thm:ell-infty-mean}). The algorithm is identical to \Cref{alg:r-mean-est} except that we replace the non-replicable estimator from \Cref{thm:sub-gaussian-mean}, which is customized for $\ell_2$ estimation, with the Median-of-Means estimator from \Cref{clm:median-of-mean}. Roughly speaking replicability then follows the same argument, while for correctness we take advantage of the fact that
the rounding error vector $\xi$ points in a uniformly random direction.
This means that with high probability
$$
\snorm{\infty}{\xi} \leq O \lp( \log N \snorm{2}{\xi} / \sqrt{N} \rp),
$$
allowing us to save an extra factor of $\tilde \Theta\lp(N\rp)$ from the overall algorithm's sample complexity.
\IncMargin{1em}
\begin{algorithm}

\SetKwInOut{Input}{Input}\SetKwInOut{Output}{Output}\SetKwInOut{Parameters}{Parameters}
\Input{Sample access to distribution $\distribution$ on $\R^N$ and rounding query $\mathcal R: \R^N \mapsto \R^N$.}
\Parameters{$\gamma$ accuracy}
\Output{
$\hat \mu$: an estimate of $\Ep[\distribution]$ up to $\gamma$-accuracy
in $\ell_\infty$ distance.}
\begin{algorithmic}[1]
\State 
Set $m_1:=C  \log(N/\delta) \rho^{-2}$ and $m_2 := 
C \;    A^2 N^{-1} \; \log^3(N/\delta) \; \gamma^{-2} \rho^{-2}$
for some sufficiently large constant $C$.
\State Use \Cref{lem:warmup-round} to replicably learn $\tilde u$
with $m_1$ samples such that $\snorm{\infty}{ \tilde u - \Ep[\distribution]} \leq N$.

\State Transform the coordinate system by $\tilde \mu$.
\State Compute the median-of-mean estimator $u$ (\Cref{clm:median-of-mean}) with $m_2$ many samples.

\State Round $u$ into $\hat u$ with the randomized rounding scheme \Cref{alg:r-rounding} with accuracy $\epsilon:=\sqrt{N} \gamma / \log(N/\delta)$ and replicability parameter $\rho$.

\State Output $\hat u$.

\caption{Replicable $\ell_{\infty}$ mean estimator} 
\label{alg:r-mean-est-infty}

\end{algorithmic}
\end{algorithm}
\DecMargin{1em}
\begin{proof}[Proof of \Cref{thm:ell-infty-mean}]

As in the proof of \Cref{thm:tiling-to-replicable}, we will condition on the success of the warm-up rounding procedure and therefore assume that $\Ep[\distribution] \in [-N/2, N/2]^N$.

Let $u^{(1)}, u^{(2)}$ be the median-of-mean estimators computed 
from $m = C \; A^{2} N^{-1} \log^3(N/\delta) \gamma^{-2} \rho^{-2}$ many samples in two runs.
By \Cref{clm:median-of-mean}, we have that
$$
\snorm{\infty}{ u^{(i)} - \Ep[\distribution] } \ll 
\frac{\sqrt{N} \gamma \rho }{ \log(N/\delta) A} \, ,
$$
which implies that
$$
\snorm{2}{ u^{(i)} - \Ep[\distribution] } \ll 
\frac{N \gamma \rho }{\log(N/\delta) A}
$$
for $ i \in \{1, 2\}$.
By the union bound and the triangle inequality,
we have that
$$
\snorm{2}{ u^{(1)} - u^{(2)} } \ll \frac{N \gamma \rho }{\log(N/\delta) A}.
$$
Conditioned on the aboe inequality, applying \Cref{prop:replicable-rounding} 
with $\eps = \sqrt{N} \gamma / \log(N/\delta)$ gives that the algorithm is replicable
with probability at least $1 - \rho$.

It remains to argue correctness.
Let $u$ be the median-of-mean estimator computed in a single run of the algorithm.
Toward this end, we will condition on the following events: (i) the warmup rounding procedure succeeds such that we reduce to the case
$\Ep[\distribution] \in [-N/2, N/2]^{N}$, and (ii) 
the median-of-mean estimator $u$ is $\sqrt{N} \gamma / \log(N/\delta)$-close to $\Ep[\distribution]$ in $\ell_2$ distance.
By \Cref{lem:warmup-round}, (i) succeeds with probability at least $1 - \delta$.
(ii) holds with probability at least $1 - \delta$ as argued in the replicability proof.
By the union bound, the above events hold simultaneously with probability at least $1 - O(\delta)$.
We observe that $u \in [-N, N]^N$ conditioned on the above two events, hence meeting the input requirement of the rounding procedure.

Let $\hat u$ be the rounding result. 
To argue that $u$ and $\hat u$ are close in $\ell_\infty$ norm, we need to take a closer look at the randomized rounding scheme.
In particular, let $v = \text{Wrap}\lp( R u + b \rp)$, where $R \sim \Uni( \SO(N) )$, $b \in \Uni\lp( [-\BX , \BX]^{N} \rp)$, $\tilde v = \eps \mathcal R( \eps^{-1} v)$ where $\mathcal R$ is the rounding query with respect to the tiling $\mathcal V$, and $\eps = \sqrt{N} \gamma / \log (N/\delta)$, 
and $ \bar u = R^{-1} \text{Wrap}\lp(  \tilde v - b \rp)$.
The rounding scheme returns $\hat u = \bar u$ if 
$\snorm{2}{ \bar u - u } \leq \eps$. 
Otherwise, it directly returns $\tilde u = u$. 
In the latter case, the final error guarantee follows directly from the guarantee of the median-of-mean estimator.
Therefore, we focus on the case $\snorm{2}{ \bar u - u } \leq \eps = \sqrt{N} \gamma / \log(N/\delta)$.
This immediately gives us a bound on the rounding error in the $\ell_2$ norm. 
To get the bound on $\ell_\infty$, we take advantage of the random rotation $R$ applied to argue that the direction of the error vector is near-uniform. To do this, we need to first observe that $R$ remains uniform over $\SO(N)$ even under the condition $\snorm{2}{ \bar u - u } \leq \eps$. Note this condition is essentially with respect to the random variables $v$ and $u$ since $\bar u$ can be computed deterministically given $u, v$. $u$ is clearly independent from the randomness of $R$. Hence, it suffices to show that $R$ and $v$ are independent.
It is not hard to see that the distribution of $v$ is uniform conditioned on any $R$. Namely since $v=\text{Wrap}(Ru+b)$, which is uniform over the cube $[-Q,Q]^N$ no matter the `start point' $Ru$. This implies that $v, R$ are independent, so in particular $R$ is still uniform conditioned on any particular value of $v$.

We now bound the distance from $u$ to the output $\bar u = R^{-1}\text{Wrap}( \tilde v - b )$ in $\ell_\infty$ distance conditioned on $\snorm{2}{u - \bar u} \leq \eps$. 
By \Cref{claim:no-wrapping}, we always have that
\begin{align}\label{eq:no-wrapping-2}
R^{-1}\text{Wrap}(v-b) = u
\end{align}
This further implies that
\[
\snorm{\infty}{ \bar u  -  u
}
=
\snorm{\infty}{ R^{-1} 
\lp( 
\text{Wrap} \lp(\tilde v - b\rp)  - 
\text{Wrap} \lp(v - b\rp) 
\rp)}.
\]
Since any rotation $R$ preserves $\ell_2$ distance, 
we have that $\snorm{2}{ \bar u - u } \leq \eps  = \sqrt{N} \gamma / \log(N/\delta)$ if and only if
\begin{align}
\label{eq:before-rotation}
\snorm{2}{ 
\text{Wrap}\lp( v - b \rp) - 
\text{Wrap}\lp( \tilde v - b \rp) } \leq \sqrt{N} \gamma / \log(N/\delta).
\end{align}

\noindent 
Therefore, the condition on $\snorm{2}{ \bar u - u } \leq \eps$ is equivalent to the condition on \Cref{eq:before-rotation}.
Fix any $v$ such that \Cref{eq:before-rotation} is satisfied ($\tilde v$ is then also uniquely determined).
Consider the $i$-th coordinate of the vector $R^{-1} \lp( 
\text{Wrap} \lp(\tilde v - b\rp)  - 
\text{Wrap} \lp(v - b\rp) 
\rp)$. We thus have
\begin{align*}
&\Ep_{ R \sim \Uni( \SO(N) )  }
\lp[ 
\lp( R^{-1} 
\lp( 
\text{Wrap} \lp(\tilde v - b\rp)  - 
\text{Wrap} \lp(v - b\rp) 
\rp) \rp)_i
\rp] \\
 &\leq O\lp( N^{-1/2} \rp)  \snorm{2}{\lp( 
\text{Wrap} \lp(\tilde v - b\rp)  - 
\text{Wrap} \lp(v - b\rp) 
\rp)}\\
 &\leq O( \gamma / {\log (N/\delta)} ) \, ,
\end{align*}
where in the last inequality we use \Cref{eq:before-rotation}.
By the concentration of inner product of random unit vectors (see e.g.\ \cite[Theorem A.10]{diakonikolas2023algorithmic})\footnote{Theorem A.10 applies to the inner product of two random unit vectors. However, it is easy to see that the distribution of the inner product is equivalent to that between a random unit vector and a fixed unit vector. Here we apply the theorem to the inner product between the normalized version of $R^{-1} 
\lp( 
\text{Wrap} \lp(\tilde v - b\rp)  - 
\text{Wrap} \lp(v - b\rp) 
\rp)$, which is a random unit vector, and $e_i$, the $i$-th standard basis vector.}, we have that
$$
\Pr \lp(
 \lp| \lp( R^{-1} 
\lp( 
\text{Wrap} \lp(\tilde v - b\rp)  - 
\text{Wrap} \lp(v - b\rp) 
\rp) \rp)_i \rp|
 \geq  C' \gamma 
\rp)
\leq \delta / N \, ,
$$
where $C'$ is a sufficiently large constant.

By the union bound, it follows that 
$$
\snorm{\infty}{ 
R^{-1}  \text{Wrap} \lp(\tilde v - b\rp)  - 
u} =
\snorm{\infty}{ R^{-1} 
\lp( 
\text{Wrap} \lp(\tilde v - b\rp)  - 
\text{Wrap} \lp(v - b\rp) 
\rp)} \leq O(\gamma)
$$
with probability at least $1 - O(\delta)$, completing the proof.
\end{proof}

\subsection{Isoperimetric Approximate Tiling via Replicable Mean Estimation}

\newcommand{\boxwidth}{5}
\newcommand{\boxdiameter}{10}
\newcommand{\unscaledboxwidth}{20}
\newcommand{\unscaledboxdiameter}{40}

In this subsection, we show (efficient) replicable mean estimation for learning the biases of $N$-coins naturally induces an (efficient) \iat{} of $\R^N$.

\begin{restatable}[From $\ell_2$ Replicable Mean Estimation to Tiling ]{theorem}{ReplicableImpliesIsoperimetry}
    \label{thm:replicable-non-uniform-tiling-formal}
    Let $\delta \leq \rho < \frac{1}{2}$ and $\varepsilon < \frac{1}{\boxdiameter}$.
    Suppose there exists a non-adaptive $\frac{\rho}{24}$-replicable algorithm $\innerAlg$ using $m$ vector samples for the $\ell_{2}$ Learning $N$-Coin Problem that learns the mean up to error $\frac{\varepsilon}{4}$ with probability at least $1 - \delta^2 / \log(1/\delta)$.
    
    Given oracle access to $\innerAlg$, there is an $\innerAlg$-efficient algorithm $\outerAlg$ that with probability at least $1 - \delta$ generates an $\innerAlg$-efficient membership oracle for a $\left(\rho, \bigO{\rho \eps \sqrt{m}} \right)$-\iat.
\end{restatable}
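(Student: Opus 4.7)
The plan is to follow the ``Replicable Algorithms to Isoperimetric Tilings'' strategy sketched in the technical overview. I would first perform a minimax/averaging step: draw $p$ uniformly from $\cube := [1/2-5\varepsilon,1/2+5\varepsilon]^N$ and let $r$ denote the internal randomness of $\innerAlg$. Combining the replicability and correctness hypotheses with Markov's inequality, a constant fraction of strings $r$ are ``good'' in the sense that $\innerAlg(\cdot;r)$, viewed as a deterministic map once $r$ is fixed, is both $O(\rho)$-replicable and $O(\delta)$-correct when averaged over $p \sim \cube$. Fixing such a good $r$, for all but an $O(\rho)$-fraction of biases $p \in \cube$ there is a unique \emph{canonical output} $\hat{p}$ satisfying $\Pr_{S_p}[\innerAlg(S_p;r)=\hat{p}] > 3/4$. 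Algorithmically, such an $r$ is found efficiently by drawing $\mathrm{poly}(1/\delta)$ candidate strings and empirically verifying goodness by estimating these probabilities on simulated $p$-biased samples for random $p$.

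Define the candidate partition $F_{\hat p} := \{p \in \cube : \Pr_{S_p}[\innerAlg(S_p;r)=\hat p] > 3/4\}$; by the $3/4$ threshold, these sets are pairwise disjoint. To enforce bounded diameter, refine to $G_{\hat p} := F_{\hat p} \cap B_{\varepsilon/2}(\hat p)$. Coverage follows immediately: replicability of $\innerAlg(\cdot;r)$ forces $\vol(\bigcup_{\hat p} F_{\hat p}) \geq (1-O(\rho))\vol(\cube)$, while correctness ensures that for all but an $O(\delta)$-fraction of $p \in F_{\hat p}$ one has $\|p - \hat p\|_2 \leq \varepsilon/4$, so $\vol(\bigcup_{\hat p} G_{\hat p}) \geq (1-O(\rho+\delta))\vol(\cube)$.

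The main technical step is bounding the total surface area of $\bigcup_{\hat p} \partial G_{\hat p}$ inside $\cube$ by $O(\rho \sqrt{m})\vol(\cube)$. Extending the mutual information bound of \Cref{lemma:one-coin-mutual-info-bound} coordinatewise to product Bernoullis gives $I(X;Y) = O(m \|p-q\|_2^2)$ when $X$ is a uniform choice between biases $p$ and $q$ and $Y$ is the resulting $m$-sample product. Combined with \Cref{lemma:alg-mutual-info-lower-bound}, this implies that whenever $\|p-q\|_2 \ll 1/\sqrt m$, no test (in particular $\innerAlg(\cdot;r)$) can shift any fixed output's probability by more than $1/12$. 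Therefore any point within Euclidean distance $1/\sqrt m$ of $\partial G_{\hat p}$ must fail either replicability or $\varepsilon/4$-correctness under $\innerAlg(\cdot;r)$, so the Minkowski thickening $(\bigcup_{\hat p} \partial G_{\hat p}) + B_{1/\sqrt m}$ has volume at most $O(\rho)\vol(\cube)$. By the coarea formula (or simple Fubini over thickening radii), there exists some $r^\star \leq 1/\sqrt m$ for which $\area(\bigcup_{\hat p} \partial(G_{\hat p} + B_{r^\star}) \cap \cube) \leq O(\rho\sqrt m) \vol(\cube)$; I would take the final partition cells to be these slightly thickened $G_{\hat p} + B_{r^\star}$, which preserves disjointness (up to minor volume loss), the $\varepsilon$-diameter bound, and piecewise smoothness.

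Finally, to obtain a true $(\rho, O(\rho \varepsilon \sqrt m))$-IAT of $\R^N$ in the sense of \Cref{def:approximate-tiling}, rescale $\cube$ by a factor $1/(10\varepsilon)$ to a unit cube (so bubbles have diameter at most $0.1$ and the per-unit-cube surface area becomes $O(\rho \varepsilon \sqrt m)$) and tile $\R^N$ by integer translates. The membership oracle on input $x \in \R^N$ inverts the affine rescaling to recover $p \in \cube$, runs $\innerAlg(\cdot;r)$ on $O(\log(1/\delta))$ independent simulated $p$-biased sample sets, and returns the majority output; whenever $p \in G_{\hat p}$, Chernoff guarantees the majority equals $\hat p$ with high probability, and $\innerAlg$-efficiency is immediate. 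The main obstacle I anticipate is making the thickening step simultaneously yield (i) low surface area, (ii) disjoint cells, (iii) piecewise smooth/semialgebraic boundaries as required by \Cref{def:semi-algebraic}, and (iv) a clean $\varepsilon$-diameter bound; this will likely require combining the averaging over $r^\star$ with a careful semialgebraic approximation of the original level sets $F_{\hat p}$ before thickening.
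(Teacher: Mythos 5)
Your proposal is correct and follows essentially the same route as the paper: fix a good random string via Markov, define canonical-output cells $F_{\hat p}$ via a $3/4$-probability threshold, intersect with a small ball to enforce bounded diameter, thicken and average over the thickening radius $\ell \in [0, 1/\sqrt m]$ to extract one with low surface area, rescale and tile by integer translates. The one anticipated obstacle you raise — needing a ``careful semialgebraic approximation'' of the level sets — turns out to be a non-issue: since the acceptance probability $\Pr_{S_p}[\innerAlg(S_p;r)=\hat p]$ is already a polynomial in $p$ (of the form $\sum_j f_j \prod_i \binom{m}{j_i} p_i^{j_i}(1-p_i)^{m-j_i}$), each $F_{\hat p}$ is a polynomial superlevel set, hence semialgebraic by construction, and Minkowski sums with balls preserve semialgebraicity via Tarski--Seidenberg.
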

If there exists a $\rho$-replicable mean estimation algorithm in $\ell_2$ norm with (vector) sample complexity $m = O( N^2 \eps^{-2} \rho^{-2} )$ (assuming $N \gg \log(1/\delta)$),
\Cref{thm:replicable-non-uniform-tiling} immediately yields an \iat~whose normalized surface area is asymptotically optimal, i.e., $\bigO{\rho \eps \sqrt{m}} = O(N)$. 
{Together with \Cref{thm:tiling-to-replicable}, this shows that our reductions are optimal up to constant factors
with respect to the bounds on the surface area and the sample complexity.}

\subsubsection{Approximate Partition of Cubes}
The core of \Cref{thm:replicable-non-uniform-tiling-formal} is really to show that any replicable algorithm for learning $N$-coins implies an \emph{approximate hypercube partition} whose boundary has low surface area. We will see how this can be extended to an \iat{} of $\R^N$ in \Cref{sec:extend}.
\begin{definition}[Approximate Partition of Cubes]
    \label{def:approximate-cube-isoperimetry}
    Let $\cube = [a_i, b_i]^{N} \subset \R^{N}$.
    A finite collection of sets $\set{S_{v}}$ labeled by $v \in \R^N$
    is a \emph{$(\gamma, \eps, A)$-approximate partition of $\cube$}
    if:
    \begin{enumerate}        
        \item (Disjoint) The interiors $\interior(S_{v}) \subset \R^{N}$ are mutually disjoint
        \label{item:approximate-cube-partition:disjoint}

        \item (Non-Zero Volume) $\vol(S_{v} \cap \cube) > 0$ for each $v$
        \label{item:approximate-cube-partition:non-zero-volume}

        \item (Semialgebraic) $S_{v}$ is semialgebraic
        \label{item:approximate-cube-partition:semialgebraic}

        \item ($\gamma$-Approximate Volume) 
        \begin{equation*}
            \vol \left( \bigcup_{v} S_{v} \cap \cube \right) \geq (1 - \gamma) \cdot \vol(\cube)
        \end{equation*}
        \label{item:approximate-cube-partition:approx-volume}
        
        \item ($\varepsilon$-Radius) For all $S_{v}$ with label $v$ and $x \in S_{v}$,
        \begin{equation*}
            \norm{x - v}{2} \leq \varepsilon
        \end{equation*}
        \label{item:approximate-cube-partition:radius}
        
        \item ($A$-Surface Area)
        \begin{equation*}
            \area \left( \bigcup_{v} \boundary S_{v} \cap \cube \right) \leq A \cdot \vol(\cube)
        \end{equation*}
        \label{item:approximate-cube-partition:surface-area}
    \end{enumerate}
    We call the partition \emph{efficient} if there is an algorithm $\innerAlg$ such that for all $v$ and $x \in S_{v}$, $\Pr(\innerAlg(x) = v) \geq \frac{2}{3}$.
    We call $\innerAlg$ the membership oracle.
\end{definition}
We first prove the following variant of \Cref{thm:replicable-non-uniform-tiling-formal} for approximate hypercube partitions.
The formal statement of the reduction is given below.
\begin{restatable}{proposition}{l2learnpartition}
    \label{thm:replicable-implies-isoperimetry}
    Let $\delta \leq \rho$ and $\varepsilon < \frac{1}{\unscaledboxdiameter}$.
    Suppose there exists a $\frac{\rho}{24}$-replicable algorithm $\innerAlg$ using $m$ vector samples for the $\ell_2$-Learning $N$-Coin Problem that learns the mean up to error $\frac{\varepsilon}{4}$ with probability at least $1 - \delta^2 / \log(1/\delta)$. 
    Given oracle access to $\mathcal{A}$, there is an $\mathcal{A}$-efficient algorithm $\outerAlg$ that with probability at least $1-\delta$ generates an $\innerAlg$-efficient membership oracle for a $\left(\rho, \frac{1}{\boxdiameter}, \bigO{\rho \varepsilon \sqrt{m}}\right)$-partition $\mathcal{P}$ of $\left[0, 1 \right]^{N}$ 
\end{restatable}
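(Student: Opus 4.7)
The plan is to first construct an approximate partition of the smaller sub-cube $C' := [1/2 - 5\varepsilon, 1/2 + 5\varepsilon]^N$, and then dilate onto $[0,1]^N$ by the factor $\lambda := 1/(10\varepsilon)$. Working inside $C'$ has two key benefits. First, every bias in $C'$ lies in $[3/8, 5/8]$ (since $\varepsilon < 1/\unscaledboxdiameter$), so the coordinate-wise mutual information bound of \Cref{lemma:one-coin-mutual-info-bound} applies cleanly with denominator $\Theta(1)$. Second, dilation carries a partition of $C'$ with set-radius $\varepsilon$ and surface-area density $O(\rho\sqrt{m})$ into a partition of $[0,1]^N$ with set-radius $1/\boxdiameter$ and total surface area $O(\rho\varepsilon\sqrt{m})$: radius scales by $\lambda$ while the surface-area-to-volume ratio scales by $\lambda^{-1} = 10\varepsilon$, which is precisely the origin of the $\varepsilon$ in the target bound.

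\textbf{Fixing a good $r$ and the canonical partition.} Markov applied to $\mathcal{A}$'s replicability shows that a uniformly random internal string $r$ is $(\rho/2)$-replicable over biases in $C'$ with probability $\geq 11/12$; Markov applied to the correctness guarantee $1-\delta^2/\log(1/\delta)$ shows that $\mathcal{A}(\cdot;r)$ outputs an $\snorm{2}{\cdot}$-error exceeding $\varepsilon/4$ with probability $\leq \delta/\log(1/\delta)$ on all but a $\delta$-fraction of biases in $C'$, again with probability $\geq 1-\delta$ over $r$. Drawing $O(\log 1/\delta)$ candidates and empirically verifying these two properties on fresh biases produces a ``good'' $r$ with probability $\geq 1-\delta$. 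For such an $r$, define
\[
F_{\hat p} := \Bigl\{\, p \in C' : \Pr_{S_p}\!\bigl[\mathcal{A}(S_p; r) = \hat p\bigr] \geq 3/4 \,\Bigr\}, \qquad G_{\hat p} := F_{\hat p} \cap B_{\varepsilon}(\hat p),
\]
ranging over all $\hat p$ that appear as canonical outputs. These sets are pairwise disjoint (a single $p$ cannot have two outputs each with mass $\geq 3/4$), semialgebraic (each $\Pr[\mathcal{A}(S_p;r) = \hat p]$ is a polynomial of degree $m$ in $p$), have radius at most $\varepsilon$ by construction, and cover at least a $1 - O(\rho)$ fraction of $C'$: replicability of $r$ forces canonical outputs to exist on all but an $O(\rho)$-fraction, and the ball intersection only excises the $O(\delta) = O(\rho)$ fraction of biases on which $\mathcal{A}(\cdot;r)$ is typically incorrect.

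\textbf{Surface area via mutual information.} The central estimate is that for $\eta := c/\sqrt{m}$ with $c$ a sufficiently small constant, every point $p \in C'$ within $\ell_2$-distance $\eta$ of $\partial\bigl(\bigcup_{\hat p} G_{\hat p}\bigr)$ must be non-replicable or typically incorrect under $\mathcal{A}(\cdot;r)$. Indeed, any such $p$ admits a witness $q \in C'$ with $\snorm{2}{p - q} \leq 2\eta$ lying in a different cell $G_{\hat q}$; applying \Cref{lemma:one-coin-mutual-info-bound} coordinate-wise (biases bounded away from $\{0,1\}$) gives $I(X{:}Y) = O(m\snorm{2}{p-q}^2) = O(1)$ where $X$ is uniform over $\{p,q\}$ and $Y$ is the $m$-sample dataset, so by \Cref{lemma:alg-mutual-info-lower-bound} the distributions of $\mathcal{A}(S_p;r)$ and $\mathcal{A}(S_q;r)$ are within a small constant in total variation; since these concentrate on \emph{distinct} canonical outputs with probability $\geq 3/4$ each, $p$ itself must be non-replicable. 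Because each $G_{\hat p}$ is semialgebraic and hence has piecewise smooth boundary, the $\eta$-thickening inside $C'$ has volume $\geq \Omega(\eta) \cdot \area(\partial G \cap C')$, and the thickening is contained in the non-replicable/incorrect region of mass $O(\rho)\vol(C')$. Rearranging gives $\area(\partial G \cap C') \leq O(\rho\sqrt{m})\vol(C')$; dilating by $\lambda$ yields the stated $O(\rho\varepsilon\sqrt{m})$ surface-area bound on $[0,1]^N$.

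\textbf{Membership oracle and main obstacle.} The $\innerAlg$-efficient oracle on input $x \in [0,1]^N$ applies $\lambda^{-1}$ to map $x$ into $C'$, draws $O(\log 1/\delta)$ simulated product-Bernoulli sample sets at the resulting biases, runs $\mathcal{A}$ on each with the fixed string $r$, and returns the majority output; by Chernoff this recovers the canonical $\hat p$ with probability $\geq 2/3$ whenever $x$ lies in $G_{\hat p}$. I expect the main technical obstacle to be making the Minkowski-content step above rigorous, i.e.\ justifying $\vol(\partial G + B_\eta) \geq \Omega(\eta)\area(\partial G)$; this requires controlling the piecewise-smooth structure of $\partial G_{\hat p}$, which should follow because $F_{\hat p}$ is a semialgebraic sublevel set of a polynomial of degree $m$ and $G_{\hat p}$ is its intersection with a Euclidean ball. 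All remaining properties of \Cref{def:approximate-cube-isoperimetry} (disjointness, non-zero volume for realized $\hat p$, semialgebraicity, approximate volume, and radius) reduce to the above, and the final dilation to $[0,1]^N$ is a standard scaling computation.
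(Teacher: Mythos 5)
Your construction of the canonical partition $\{F_{\hat p}\}$, $\{G_{\hat p}\}$ from a fixed good random string, and your use of coordinate-wise mutual information to show that near-boundary biases are non-replicable, match the paper's proof quite closely. There is, however, a genuine gap in the surface-area step, and you have correctly identified where it is.

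The inequality $\vol(\partial G + B_\eta) \geq \Omega(\eta)\cdot\area(\partial G)$ at the fixed scale $\eta \sim 1/\sqrt{m}$ is false in general, and semialgebraicity does not save it. Minkowski content only controls the limit as $\eta \to 0^+$; at a fixed positive $\eta$, a boundary that oscillates at scale $\ll \eta$ (which a degree-$m$ polynomial level set can easily do) produces an $\eta$-thickening whose volume is far smaller than $\eta$ times the surface area, because the thickened tubes overlap. Worse, there is a circularity: how small $\eta$ must be for the Minkowski approximation to kick in depends on $\area(\partial G)$, which is exactly the quantity you are trying to bound. So you cannot conclude $\area(\partial G_{\hat p} \cap C') \leq O(\rho\sqrt{m})\vol(C')$ from your volume bound on the thickened region.

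The paper's fix is to not attempt to bound the surface area of the $\ell = 0$ partition at all. Instead it works with the full one-parameter family of thickenings $G_{\hat p, \ell} = (F_{\hat p}\cap B_\varepsilon(\hat p)) + B_\ell$ for $\ell \in [0,R]$, observes that the boundaries $\partial G_{\hat p, \ell}$ are pairwise disjoint as $\ell$ and $\hat p$ vary, and applies the coarea formula
\[
\vol\Bigl(\bigcup_{\ell\in[0,R]}\bigcup_{\hat p}\partial G_{\hat p, \ell}\cap\cube\Bigr) \;=\; \int_0^R \area\Bigl(\bigcup_{\hat p}\partial G_{\hat p,\ell}\cap\cube\Bigr)\,d\ell \;\geq\; R\cdot A_0,
\]
where $A_0 := \inf_{\ell\in[0,R]}\area(\cdot)$. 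Combined with the volume upper bound $O(\rho)\vol(\cube)$ coming from the non-replicability argument, this gives $A_0 = O(\rho\sqrt{m})\vol(\cube)$, and the final output partition is $\{G_{\hat p,\ell^*}\}$ for a specific $\ell^* \in [0,R]$ achieving area within a factor two of $A_0$. This is a structurally different object from the $\{G_{\hat p}\}$ you propose to output; you cannot avoid the re-thickening. (One must also re-verify disjointness, radius $\leq 2\varepsilon$, and total volume for the $\ell^*$-thickened sets, which the paper does in its Lemma on properties of $G_{\hat p,\ell}$, but those checks are routine once the family is introduced.) Everything else in your sketch --- the good-string search, the mutual-information Lipschitz estimate, the membership oracle via majority vote, the scaling to $[0,1]^N$ --- is essentially what the paper does.
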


We first overview the proof. Assume for simplicity that $\delta = 0$ and $\varepsilon$ is a small constant.
The key idea behind our construction is to argue that a replicable algorithm, after fixing the random string $r$, defines a partition $\mathcal{P}_r$ of the input space whose sets correspond to the ``pre-image'' of certain high probability ``canonical'' outcomes of the algorithm, i.e., 
$F_{\hat p} = \{ p \in [0,1]^N \mid \innerAlg( S_p;r ) = \hat p \text{ with high probability}\}$ for a canonical output $\hat p$. 
To understand the properties of $\mathcal{P}_r$, we use a minimax type argument and consider an adversary drawing input biases uniformly from $\cube = \left[\frac{1}{4}, \frac{3}{4}\right]^{N}$. 
In general, our cube $\cube$ will have side length depending on $\varepsilon$, but we assume $\varepsilon$ is a constant for the overview.

We first argue that $\mathcal{P}_r$ has good volume and diameter. Similar to our lower bound for the single coin problem, we know that for any `good' random string $r$ most input biases $p$ will have some canonical (replicating) outcome $\hat p$. As a result, the corresponding partition $\mathcal{P}_r$ must cover at least a $1 - \rho$ fraction of the input space $\cube$. For diameter, consider one such set $F_{\hat{p}} \in \mathcal{P}_r$, corresponding to the set of biases $p \in \cube$ whose canonical output is $\hat{p}$. By correctness, we know most $p \in F_{\hat{p}}$ should satisfy $\norm{p-\hat{p}}{2} \leq \varepsilon$, giving the desired upper bound.

We now turn to the main challenge: surface area. The argument has two key components, similar to our lower bound for the 1-Coin problem. First, we argue that on the \textit{boundary} of each set $F_{\hat{p}}$, the probability of the canonical outcome is bounded away from $1$ and therefore $\mathcal{A}$ fails to replicate on such points (this takes the place of the `balanced' point $p_r$ for $1$-Coin). Using mutual information, we then argue that any algorithm on $m$ samples, and in particular $\mathcal{A}$,
cannot effectively distinguish input biases $\norm{p - q}{2} \leq \sqrt{1/m}$. This means the algorithm must fail to replicate not just on $\partial F_{\hat{p}}$, but on any `thickening' $\partial F_{\hat{p}} + B_{\ell}$ for $\ell \leq \sqrt{1/m}$. If the boundary of each thickening has surface area at least $A$, their combined volume is at least $A \sqrt{1/m} \leq \rho$. As a result, there is some radius $\ell^*$ such that the thickened sets $\{F_{\hat{p}} + B_{\ell^*}\}$ gives the desired partition.

Finally, to simulate the membership oracle of this partition,
given a point $p \in \cube$ we simply generate a sample from $p$ and run the algorithm with the fixed random string on the sample. Generating the membership oracle (and implicitly the partition) can be done by testing $\mathcal{A}$ on several random strings until a `good' (i.e.\ replicable and correct) string is found with high probability.

\begin{proof}[Proof of \Cref{thm:replicable-implies-isoperimetry}]
    We will in fact show that given a $\rho$-replicable algorithm $\innerAlg$ with $m$ {vector samples} that learns the mean up to error $\varepsilon$ in $\ell_{2}$ norm, we obtain an $\innerAlg$-efficient membership oracle for a $\left(24 \rho, 4 \varepsilon, \bigO{\rho \sqrt{m}}\right)$-partition of $\left[ \frac{1}{2} - \unscaledboxwidth \varepsilon, \frac{1}{2} + \unscaledboxwidth \varepsilon\right]^{N}$.
    Thus, by starting with a $\frac{\rho}{24}$-replicable algorithm with error $\frac{\varepsilon}{4}$, we obtain a $\left(\rho, \varepsilon, \bigO{\rho \sqrt{m}}\right)$-partition of $\left[ \frac{1}{2} - \boxwidth \varepsilon, \frac{1}{2} + \boxwidth \varepsilon\right]^{N}$.
    Then, by translating the sets of the partition we obtain a $\left(\rho, \varepsilon, \bigO{\rho \sqrt{m}}\right)$-partition of $\left[0, \boxdiameter \varepsilon \right]^{N}$.
    Scaling the partition by $\frac{10}{\eps}$ then gives the desired partition of the cube $[0, 1]^N$.
    \begin{restatable}[Partition Rescale]{lemma}{scalingPartition}
        \label{lemma:scaling-partition}
        Suppose $\partition$ is a $(\rho, \varepsilon, A)$-partition of $[0, \boxdiameter \varepsilon]^{N}$.
        Then, there is a $(\rho, 1/\boxdiameter, \boxdiameter \varepsilon A)$-partition of $[0, 1]^{N}$.
    \end{restatable}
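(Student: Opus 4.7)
The plan is to simply dilate the partition by the scalar $\frac{1}{\boxdiameter \varepsilon}$, which maps the cube $[0, \boxdiameter \varepsilon]^N$ to $[0,1]^N$. Concretely, given $\partition = \{S_v\}$ on $[0, \boxdiameter \varepsilon]^N$, define the new partition $\partition' = \{S'_{v'}\}$ by $v' = v/(\boxdiameter \varepsilon)$ and
\[
S'_{v'} = \bigl\{x \in \R^N : \boxdiameter \varepsilon \cdot x \in S_v\bigr\}.
\]
Disjointness of interiors, positivity of volumes, and the semialgebraic property are all preserved under a nonsingular linear map, so Properties~\ref{item:approximate-cube-partition:disjoint},~\ref{item:approximate-cube-partition:non-zero-volume}, and~\ref{item:approximate-cube-partition:semialgebraic} are immediate.

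For the $\rho$-approximate volume property, since $N$-dimensional volume scales by $(\boxdiameter \varepsilon)^{-N}$ uniformly on the cube and its subsets, the ratio $\vol(\bigcup_{v'} S'_{v'} \cap [0,1]^N)/\vol([0,1]^N)$ equals $\vol(\bigcup_v S_v \cap [0,\boxdiameter\varepsilon]^N)/\vol([0,\boxdiameter\varepsilon]^N) \geq 1-\rho$. For the radius condition, any $x \in S'_{v'}$ satisfies $\boxdiameter \varepsilon \cdot x \in S_v$, so $\snorm{2}{x - v'} = \frac{1}{\boxdiameter \varepsilon}\snorm{2}{\boxdiameter \varepsilon \cdot x - v} \leq \frac{\varepsilon}{\boxdiameter \varepsilon} = \frac{1}{\boxdiameter}$.

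The only step requiring a brief computation is the surface area bound: $(N-1)$-dimensional Hausdorff measure scales by $(\boxdiameter\varepsilon)^{-(N-1)}$ while the cube volume on the right scales by $(\boxdiameter\varepsilon)^{-N}$. Hence
\[
\area\Bigl(\bigcup_{v'} \boundary S'_{v'} \cap [0,1]^N\Bigr) = (\boxdiameter\varepsilon)^{-(N-1)} \area\Bigl(\bigcup_v \boundary S_v \cap [0,\boxdiameter\varepsilon]^N\Bigr) \leq (\boxdiameter\varepsilon)^{-(N-1)} A (\boxdiameter\varepsilon)^N = \boxdiameter \varepsilon A,
\]
which is exactly $\boxdiameter \varepsilon A \cdot \vol([0,1]^N)$ as required. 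No step appears genuinely difficult; the entire lemma is a bookkeeping check that a single uniform dilation converts the parameters as claimed.
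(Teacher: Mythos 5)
Your proof is correct and follows exactly the same route as the paper's: dilate by $\frac{1}{\boxdiameter\varepsilon}$ and track how each property of Definition~\ref{def:approximate-cube-isoperimetry} transforms under scaling (volume ratio preserved, distances scale by $\frac{1}{\boxdiameter\varepsilon}$, surface-area-to-volume ratio scales by $\boxdiameter\varepsilon$). You merely write out the arithmetic a bit more explicitly than the paper does.
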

    We defer the proof of \Cref{lemma:scaling-partition}, and first show how to construct our 
    $\left(24 \rho, 4 \varepsilon, \bigO{\rho \sqrt{m}} \right)$-partition of $\left[ \frac{1}{2} - \unscaledboxwidth \varepsilon, \frac{1}{2} + \unscaledboxwidth \varepsilon\right]^{N}$.
    Define $\cube = \left[ \frac{1}{2} - \unscaledboxwidth \varepsilon, \frac{1}{2} + \unscaledboxwidth \varepsilon\right]^{N}$, and consider an adversary who samples a bias vector $p \in \cube$ uniformly. 
    Our construction requires the following two facts:
    \begin{enumerate}
        \item Most random strings are good (Definition \ref{def:good-random-string}).

        \item On any good random string, $\innerAlg(; r)$ partitions $\cube$.
    \end{enumerate}
    Then, by sampling a random point $p$, whenever we sample $p \in \cube$ close to the boundary of the partition, $\innerAlg(; r)$ is not replicable on samples from $p$.
    Therefore, the surface area of the partition must be small for $\innerAlg(; r)$ to be replicable on a large fraction of biases.

    We begin by defining good random strings.
    \begin{definition}
        \label{def:good-random-string}
        A random string $r$ is good if it satisfies the following two conditions:
        \begin{enumerate}
            \item (Global Correctness)
            \begin{equation*}
                \Pr_{p, S_p} \left(\norm{p - \innerAlg(S_p; r)}{2} \geq \varepsilon \right) \leq 3 \delta .
            \end{equation*}
            \item (Global Replicability)
            \begin{equation*}
                \Pr_{p, S_p, S_p'} \left( \innerAlg(S_p; r) \neq \innerAlg(S_p'; r) \right) \leq 3 \rho ,
            \end{equation*}
            where $p$ is sampled uniformly from $\cube$.
    \end{enumerate}
    \end{definition}

    The following lemma states that most random strings are good.

    \begin{restatable}[Many Good Random Strings]{lemma}{manyGoodR}
        \label{lemma:non-adapt-learning-n-coin-good}
        If $r$ is selected uniformly at random, $r$ is good with probability at least $\frac{1}{3}$.
    \end{restatable}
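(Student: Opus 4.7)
The plan is to apply Markov's inequality twice (once for the correctness condition and once for the replicability condition) and then union bound, mirroring the minimax-style argument sketched in the proof overview.

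First, I will unpack the two inner quantities defining a good string. By the correctness hypothesis on $\innerAlg$, for every fixed bias vector $p \in \cube$ we have
\[
\Pr_{S_p, r}\!\left(\norm{p - \innerAlg(S_p; r)}{2} \geq \varepsilon\right) \leq \delta,
\]
where here $\delta$ plays the role of the error probability in the version being invoked (the $\delta^2/\log(1/\delta)$ from the theorem statement only matters for the final rescaling; in this sub-proof we have already reduced to the cleaner $\rho$-replicable, error-$\varepsilon$ setting). Taking expectation over a uniformly random $p \in \cube$ and swapping the order of randomness gives $\Pr_{p, S_p, r}(\ldots) \leq \delta$. Likewise, the $\rho$-replicability hypothesis yields $\Pr_{p, S_p, S_p', r}(\innerAlg(S_p; r) \neq \innerAlg(S_p'; r)) \leq \rho$.

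Next, I apply Markov's inequality to each of these quantities viewed as a function of $r$ alone. Define
\[
A(r) \coloneqq \Pr_{p, S_p}(\norm{p - \innerAlg(S_p; r)}{2} \geq \varepsilon), \qquad B(r) \coloneqq \Pr_{p, S_p, S_p'}(\innerAlg(S_p; r) \neq \innerAlg(S_p'; r)).
\]
We have $\Ep_r[A(r)] \leq \delta$ and $\Ep_r[B(r)] \leq \rho$, so Markov gives $\Pr_r(A(r) \geq 3\delta) \leq 1/3$ and $\Pr_r(B(r) \geq 3\rho) \leq 1/3$. A union bound then shows that both global correctness and global replicability hold for at least a $1 - 2/3 = 1/3$ fraction of random strings, which is the claim.

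There is no real obstacle here; the only thing to be careful about is Fubini/interchange of the order of randomness between $r$ and the pair $(p, S_p)$, which is trivially valid since the distributions are discrete/product. Consequently the proof is a few lines long and can be slotted in directly after the statement of \Cref{lemma:non-adapt-learning-n-coin-good}.
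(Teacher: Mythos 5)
Your proof is correct and takes the same route the paper intends: the paper omits the proof with a pointer to "a similar application of Markov's inequality as in the single coin case," and your two applications of Markov's inequality (one for the global correctness quantity $A(r)$ at threshold $3\delta$, one for the global replicability quantity $B(r)$ at threshold $3\rho$) followed by a union bound are exactly that argument, matching the thresholds $3\delta$ and $3\rho$ in Definition \ref{def:good-random-string} and yielding the claimed $\frac{1}{3}$ bound.
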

    
    Fix a good random string $r$ satisfying correctness and replicability on a large fraction of inputs from $\cube$.
    Then, $\innerAlg(; r)$ induces an approximate partition of $[0, 1]^{N}$ as follows.
    
    Consider the following function of $p \in \cube$ and some outcome $\hat{p}$:
    \begin{align*}
        \Pr(\innerAlg(S_p; r) = \hat{p}) &= \sum_{S} \mathbf{1}[\innerAlg(S; r) = \hat{p}] \Pr(S | p),
    \end{align*}
    where $S$ ranges over all samples of size $m$ and $\Pr(S | p)$ is the probability of drawing sample $S$ from distribution with bias vector $p$. 
    Since every sample with $j_i$ heads on the $i$-th coin is equally likely, note that we may also write
    \begin{equation*}
        \Pr(\innerAlg(S_p; r) = \hat{p}) = \sum_{(j_1, j_2, \dotsc, j_{N})} f_{(j_1, j_2, \dotsc, j_{N})} \prod_{i = 1}^{N}  \binom{m}{j_i} p_i^{j_i} (1 - p_i)^{m - j_i},
    \end{equation*}
    where $f_{(j_1, j_2, \dotsc, j_{N})}$ is the proportion of samples with $j_i$ heads on the $i$-th coin on which $\innerAlg(; r)$ outputs $\hat{p}$.
    We define the following polynomial that agrees with $\Pr(\innerAlg(S_p; r) = \hat{p})$ on $\cube \subset [0, 1]^{N}$:
    \begin{equation}
        \label{eq:outcome-prob-polynomial}
        h_{\hat{p}}(p) \coloneqq \sum_{(j_1, j_2, \dotsc, j_{N})} f_{(j_1, j_2, \dotsc, j_{N})} \prod_{i = 1}^{N}  \binom{m}{j_i} p_i^{j_i} (1 - p_i)^{m - j_i}.
    \end{equation}

    We then partition $\cube$ into sets labeled by the canonical outcomes $\hat{p}$.
    \begin{definition}
        \label{def:f-p-hat}
        Fix random string $r$ and $\hat{p} \in \R^{N}$.
        Define
        \begin{equation*}
            F_{\hat{p}}(r) = \bigset{p \given h_{\hat{p}}(p) > \frac{3}{4}},
        \end{equation*}
        where $h_{\hat{p}}$ is defined as in \Cref{eq:outcome-prob-polynomial}.
        
        For any $\ell > 0$, define $F_{\hat{p}, \ell}(r) = F_{\hat{p}}(r) + B_{\ell}$.
        When the random string $r$ is clear, we omit $r$ and write $F_{\hat{p}}, F_{\hat{p}, \ell}$.
    \end{definition}

    Below we list some basic properties of the sets $\{F_{\hat p}\}$ that follow from their definitions.
    First, 
    for any $p \not\in F_{\hat{p}}$, we have that $\innerAlg(S_p; r)$ has no (strongly) canonical output, i.e., output that appears with probability strictly more than $3/4$, and therefore is not $\rho$-replicable.
    Second,
    we observe that each non-empty $F_{\hat{p}}$ is open, and therefore the collection of non-empty $F_{\hat{p}}$ is countable.
    Lastly, each set $F_{\hat{p}}$ is semialgebraic (as they are defined by polynomial threshold functions).
    \begin{restatable}[Basic Properties of Canonical Approximate Partition]{lemma}{complementNotReplicable}
        \label{lemma:complement-not-replicable}
        Suppose $p \in \cube \setminus \bigcup_{\hat{p}} F_{\hat{p}}$.        
        Let $S_{p}, S_{p}'$ denote two independent samples drawn from $\text{Bern}(m,p)$. Then
        \begin{equation*}
            \Pr(\innerAlg(S_p; r) = \innerAlg(S_p', r)) \leq \frac{5}{8}.
        \end{equation*}        
        Moreover, each set $F_{\hat{p}}$ is open and semialgebraic.
    \end{restatable}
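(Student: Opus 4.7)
The plan is to establish the collision probability bound via a direct second-moment computation, and then the two structural properties by inspecting the polynomial definition of $F_{\hat p}$.

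First I would fix the random string $r$ and the bias $p \in \cube \setminus \bigcup_{\hat{p}} F_{\hat{p}}$. By definition of $F_{\hat p}$, the assumption $p \notin \bigcup_{\hat p} F_{\hat p}$ says that $h_{\hat p}(p) = \Pr_{S_p}(\innerAlg(S_p;r) = \hat p) \leq 3/4$ for \emph{every} candidate output $\hat p$. Let $q_{\hat p} := \Pr_{S_p}(\innerAlg(S_p;r) = \hat p)$, so that $\sum_{\hat p} q_{\hat p} = 1$ and $q^* := \max_{\hat p} q_{\hat p} \leq 3/4$. Since $S_p$ and $S_p'$ are drawn independently and $r$ is fixed, the collision probability factors as
\begin{equation*}
\Pr(\innerAlg(S_p;r) = \innerAlg(S_p';r)) \;=\; \sum_{\hat p} q_{\hat p}^{\,2}.
\end{equation*}

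Next I would bound this sum of squares subject to $\sum q_{\hat p} = 1$ and $q_{\hat p} \leq 3/4$. Writing $\sum_{\hat p} q_{\hat p}^{\,2} \leq (q^*)^2 + (1 - q^*)^2$, since the remaining mass $1 - q^*$ contributes at most $(1-q^*)^2$ (achieved when concentrated at a single outcome). The function $x \mapsto x^2 + (1-x)^2$ is increasing on $[1/2, 1]$, so on $[0, 3/4]$ it attains its maximum at $x = 3/4$ with value $\tfrac{9}{16} + \tfrac{1}{16} = \tfrac{5}{8}$; for $q^* < 1/2$ we simply use $\sum q_{\hat p}^2 \leq q^* < 1/2 < 5/8$. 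Either way we conclude $\Pr(\innerAlg(S_p;r) = \innerAlg(S_p';r)) \leq 5/8$, which is the desired bound.

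For the second part, I would return to the defining polynomial $h_{\hat p}(p)$ from \Cref{eq:outcome-prob-polynomial}. Since $h_{\hat p}$ is a polynomial in the coordinates of $p$, it is continuous on $\R^N$, so $F_{\hat p} = h_{\hat p}^{-1}\!\left((3/4, \infty)\right)$ is the preimage of an open set under a continuous map and is therefore open. Semialgebraicity is immediate from \Cref{def:semi-algebraic}: $F_{\hat p}$ is exactly the set $\{p : h_{\hat p}(p) - 3/4 > 0\}$, i.e., a single strict polynomial inequality, which is one of the basic building blocks of semialgebraic sets.

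No step here presents a real obstacle; the only point that requires a moment's care is verifying that the extremal distribution for $\sum q_{\hat p}^2$ under the constraint $q^* \leq 3/4$ places all remaining mass on a \emph{single} second outcome (so that the bound is $(3/4)^2 + (1/4)^2$ rather than something weaker), which follows from convexity of $x \mapsto x^2$.
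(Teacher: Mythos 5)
Your proof is correct and follows essentially the same approach as the paper: expand the collision probability as $\sum_{\hat p} q_{\hat p}^2$ and bound it using the constraint $q_{\hat p} \leq 3/4$, then read off openness and semialgebraicity directly from the polynomial definition of $F_{\hat p}$. You are a bit more careful than the paper in the extremal step—splitting on $q^* \gtrless 1/2$ and justifying why $(q^*)^2 + (1-q^*)^2$ is an upper bound—whereas the paper simply asserts the maximum is attained at $(3/4, 1/4)$; but the reasoning is the same in substance.
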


    Recall from the proof overview that our end partition will actually correspond to thickenings of $F_{\hat{p}}$. In particular, we'd like to show both that such thickenings are disjoint, and further that the thickened boundaries $\partial F_{\hat{p}} + B_\ell$ are non-replicable. The key to both facts is the following lemma based on our mutual information framework showing nearby biases have similar canonical solutions:

    \begin{restatable}[Output Lipschitzness]
    {lemma}{replicableLipschitz}
        \label{lemma:replicable-lipschitz}
        There exists a universal constant $c$ such that for any $p, q \in \left[\frac{1}{4}, \frac{3}{4} \right]^{N}$ with $\norm{p - q}{2} \leq \frac{c}{\sqrt{m}}$ and any $\hat{p}$
        \begin{equation*}
            |\Pr_{S_p}(\innerAlg(S_p; r) = \hat{p}) - \Pr_{S_q}(\innerAlg(S_q; r) = \hat{p})| < \frac{1}{15}.
        \end{equation*}
    \end{restatable}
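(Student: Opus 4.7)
The plan is to prove this Lipschitzness statement by the mutual information framework already established in Section 3. Suppose for contradiction there exist $p, q \in [1/4, 3/4]^N$ with $\|p-q\|_2 \leq c/\sqrt{m}$ and some canonical output $\hat p$ such that $|\Pr_{S_p}(\innerAlg(S_p; r) = \hat p) - \Pr_{S_q}(\innerAlg(S_q; r) = \hat p)| \geq 1/15$. I will use $\innerAlg(\cdot; r)$ itself as a distinguisher to derive a lower bound on mutual information, and then contradict this with an upper bound from $\|p-q\|_2$ being small.

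Concretely, consider the hypothesis testing setup where $X \sim \BernD{1/2}$ determines whether the $m$ vector samples $Y = (Y_1, \ldots, Y_N)$ are drawn from the $p$-biased product Bernoulli (when $X=0$) or the $q$-biased one (when $X=1$). Assuming WLOG that $\Pr_{S_p}(\innerAlg(S_p; r) = \hat p) \geq \Pr_{S_q}(\innerAlg(S_q; r) = \hat p) + 1/15$, define the distinguisher $f(Y) = 0$ if $\innerAlg(Y; r) = \hat p$ and $f(Y) = 1$ otherwise. A direct calculation yields
\begin{equation*}
    \Pr[f(Y) = X] = \tfrac{1}{2} + \tfrac{1}{2}\lp( \Pr_{S_p}(\innerAlg(S_p;r)=\hat p) - \Pr_{S_q}(\innerAlg(S_q;r)=\hat p) \rp) \geq \tfrac{1}{2} + \tfrac{1}{30} > 0.51.
\end{equation*}
By \Cref{lemma:alg-mutual-info-lower-bound}, this forces $I(X:Y) \geq 2 \cdot 10^{-4}$.

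On the other hand, since each coordinate of $Y$ is conditionally independent given $X$ (as the coins are independent), we may decompose
\begin{equation*}
    I(X:Y) \leq \sum_{i=1}^{N} I(X:Y_i),
\end{equation*}
which follows from $H(Y|X) = \sum_i H(Y_i|X)$ by conditional independence and subadditivity $H(Y) \leq \sum_i H(Y_i)$. Each marginal $Y_i$ is a single coordinate sample of $m$ flips whose distribution is either $\BinomD{m}{p_i}$ or $\BinomD{m}{q_i}$ depending on $X$, so by \Cref{lemma:one-coin-mutual-info-bound},
\begin{equation*}
    I(X:Y_i) = \bigO{ \frac{m(p_i - q_i)^2}{\min(p_i, q_i, 1-p_i, 1-q_i)} } = \bigO{m(p_i - q_i)^2},
\end{equation*}
where the final bound uses $p_i, q_i \in [1/4, 3/4]$. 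Summing, $I(X:Y) = \bigO{m \|p - q\|_2^2} = \bigO{c^2}$. Choosing the constant $c$ to be sufficiently small (so that the upper bound is strictly less than $2 \cdot 10^{-4}$) gives a contradiction to the lower bound derived from the distinguisher.

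The core of the argument is routine given the machinery of \Cref{lemma:alg-mutual-info-lower-bound} and \Cref{lemma:one-coin-mutual-info-bound} already developed in Section 3; the only step requiring care is the subadditivity of $I(X:Y)$ across coordinates, which crucially uses conditional independence of the coin flips given $X$ — the same structural property that allowed the $N$-dimensional generalization of the single-coin mutual information bound. I do not anticipate this to pose a real obstacle.
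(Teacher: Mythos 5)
Your proposal is correct and uses the same core machinery as the paper: the mutual-information framework of Lemmas 4.4 and 4.5, with the decomposition $I(X:Y) \le \sum_i I(X:Y_i)$ via conditional independence of the coordinates given $X$, yielding $I(X:Y) = O(m\|p-q\|_2^2)$.

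The one genuine (and favorable) difference is in the distinguisher. The paper draws $\overline{Y}$, a collection of $1000$ independent copies of $Y$, defines an empirical estimate $Z(\overline{Y},\hat{p})$ of the acceptance probability, and applies a concentration argument to show the threshold test at the midpoint succeeds with probability $\ge 0.51$; it then bounds $I(X:\overline{Y})$ (absorbing the factor of $1000$ into the choice of $c$). You instead note that a \emph{single} evaluation $f(Y) = \mathbf{1}[\innerAlg(Y;r) \ne \hat{p}]$ already achieves
\[
\Pr[f(Y) = X] = \tfrac{1}{2} + \tfrac{1}{2}\bigl(\Pr_{S_p}(\innerAlg(S_p;r)=\hat{p}) - \Pr_{S_q}(\innerAlg(S_q;r)=\hat{p})\bigr) \ge \tfrac{1}{2} + \tfrac{1}{30} > 0.51,
\]
so no repetition or concentration is needed. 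This is cleaner, avoids the extra repetition factor in the mutual-information upper bound, and consequently yields a better constant $c$. Both arguments are valid; yours is the tighter presentation of the same idea.
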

    Note that by assumption we have $\varepsilon < \frac{1}{\unscaledboxdiameter}$ so $\cube \subset \left[\frac{1}{4}, \frac{3}{4} \right]^{N}$.
    Let $R=\frac{c}{\sqrt{m}}$ where $c$ is as in the above lemma.
    Instead of working directly with the thickenings $F_{\hat{p},\ell}$, to ensure our sets have bounded diameter we instead work with thickenings of the thresholded sets $F_{\hat{p}} \cap B_\varepsilon(\hat{p})$. Note that this typically only removes a $\delta$ fraction of the set, since correctness promises most elements are indeed within $\varepsilon$ of $\hat{p}$. With this in mind, 
    for each $\ell \leq R$ we define the set of (thresholded) thickenings:
    \begin{definition}
        \label{def:g-p-hat}
        Given $F_{\hat{p}}$ and $0 \leq \ell \leq R$, define $G_{\hat{p}, \ell}$ as
        \begin{equation*}
            G_{\hat{p}, \ell} = (F_{\hat{p}} \cap B_{\varepsilon}(\hat{p})) + B_{\ell}.
        \end{equation*}
        Furthermore, let $G_{\hat{p}} = G_{\hat{p}, 0} = F_{\hat{p}} \cap B_{\varepsilon}(\hat{p})$.
    \end{definition}
    We choose our final partition to be given by the thickening $\ell$ with the lowest surface area. In particular, denote the infinum surface area over thickenings as
    \begin{equation*}
        A_0 = \inf_{\ell \in [0, R]} \area \left( \bigcup_{\hat{p}} \boundary G_{\hat{p}, \ell} \cap \cube \right)
    \end{equation*}
    Since the surface area $\area(\bigcup_{\hat{p}} \boundary G_{\hat{p}, \ell} \cap \cube)$ is continuous in $\ell$, the infimum is achieved on the compact interval $[0, R]$ so that
    there exists some $\ell^* \in [0, R]$ achieving surface area near the infinum:
    \begin{equation*}
        \area \left( \bigcup_{\hat{p}} \boundary G_{\hat{p}, \ell^*} \cap \cube \right) \leq 2 A_0.
    \end{equation*}
    We claim the collection of non-empty $G_{\hat{p}, \ell^*}$ 
    give the desired partition of $\cube$. We next show that this $G_{\hat{p}, \ell^*}$ satisfies Properties (\ref{item:approximate-cube-partition:disjoint}-\ref{item:approximate-cube-partition:radius}) of \Cref{def:approximate-cube-isoperimetry}.
    \paragraph{Properties (\ref{item:approximate-cube-partition:disjoint}-\ref{item:approximate-cube-partition:radius}).}

    We restate the properties here as a lemma for convenience.
    \begin{restatable}{lemma}{gplProperties}
        \label{lemma:learn-g-p-l-properties}
        Let $\delta \leq \rho$.
        Let $G_{\hat{p}, \ell}$ be a collection of subsets indexed by $\hat{p} \in \R^{N}$ specified in Definition \ref{def:g-p-hat} where $\ell \leq R$.
        Then, the following properties hold,
        \begin{enumerate}
            \item (Disjoint) $\closure \left(G_{\hat{p}_1, \ell} \cap \cube\right) \cap \closure \left(G_{\hat{p}_2, \ell} \cap \cube\right) = \emptyset$ for all $\hat{p}_1 \neq \hat{p}_2$.
            \label{item:learn-g-p-l-properties:disjoint}

            \item (Non-Zero Volume) If $G_{\hat{p}, \ell} \neq \emptyset$, then $\vol(G_{\hat{p}, \ell}) > 0$.
            \label{item:learn-g-p-l-properties:non-zero-volume}

            \item (Semialgebraic) Each $G_{\hat{p}, \ell}$ is semialgebraic.
            \label{item:learn-g-p-l-properties:semialgebraic}

            \item (Large Total Volume) 
            \begin{equation*}
                \vol \left( \bigcup_{\hat{p}} G_{\hat{p}, \ell} \cap \cube \right) = \sum_{\hat{p}} \vol \left( G_{\hat{p}, \ell} \cap \cube \right) \geq (1 - 12 \rho) \vol(\cube).
            \end{equation*}
            \label{item:learn-g-p-l-properties:total-volume}
            
            \item (Small Radius) For every $p \in G_{\hat{p}, \ell}$: $\norm{p-\hat{p}}{2} \leq 2\varepsilon$.
            \label{item:learn-g-p-l-properties:radius}
        \end{enumerate}
    \end{restatable}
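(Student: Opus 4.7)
The plan is to verify the five properties in sequence, with the bulk of the technical work sitting in disjointness (property \ref{item:learn-g-p-l-properties:disjoint}) and the volume bound (property \ref{item:learn-g-p-l-properties:total-volume}); the remaining three properties follow almost immediately from the construction and earlier lemmas. I would first observe that \Cref{lemma:complement-not-replicable} gives that each $F_{\hat p}$ is open and semialgebraic, the balls $B_\varepsilon(\hat p)$ and $B_\ell$ are open and semialgebraic, and these properties are preserved under intersection and Minkowski sum (the latter by Tarski--Seidenberg, since Minkowski sum is an existential projection of a semialgebraic relation). This immediately gives property \ref{item:learn-g-p-l-properties:semialgebraic}, and since any non-empty $G_{\hat p,\ell}$ contains an open ball it has positive Lebesgue volume, giving property \ref{item:learn-g-p-l-properties:non-zero-volume}.

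For disjointness, the key leverage is Output Lipschitzness (\Cref{lemma:replicable-lipschitz}). Suppose $p \in G_{\hat p,\ell}$ with $\ell \leq R = c/\sqrt{m}$. Then there exists $q \in F_{\hat p} \cap B_\varepsilon(\hat p)$ with $\snorm{2}{p-q} \leq \ell \leq R$, so $h_{\hat p}(q) > 3/4$ and by \Cref{lemma:replicable-lipschitz} we get $h_{\hat p}(p) > 3/4 - 1/15 > 2/3$. Because $h_{\hat p}$ is a polynomial and hence continuous, the inequality $h_{\hat p}(p) \geq 2/3$ extends to the closure of $G_{\hat p,\ell} \cap \cube$. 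If a point $p$ belonged to both $\closure(G_{\hat p_1,\ell} \cap \cube)$ and $\closure(G_{\hat p_2,\ell} \cap \cube)$ for $\hat p_1 \neq \hat p_2$, we would conclude $h_{\hat p_1}(p) + h_{\hat p_2}(p) \geq 4/3 > 1$, contradicting that the events $\{\innerAlg(S_p;r) = \hat p_i\}$ are mutually exclusive. I would verify along the way that the relevant points remain inside $[1/4,3/4]^N$ (the domain hypothesis of \Cref{lemma:replicable-lipschitz}), which follows from $\varepsilon < 1/\unscaledboxdiameter$ and $\ell \leq R \ll 1$.

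The volume bound will be proved by bounding the complement of $\bigcup_{\hat p} G_{\hat p,\ell}$ in $\cube$. Any $p \in \cube \setminus \bigcup_{\hat p} F_{\hat p}$ produces non-matching outputs with probability at least $3/8$ by \Cref{lemma:complement-not-replicable}, so global replicability of the good string $r$ (\Cref{def:good-random-string}) combined with Markov gives $\vol(\cube \setminus \bigcup_{\hat p} F_{\hat p}) \leq 8\rho \cdot \vol(\cube)$. Intersecting with $B_\varepsilon(\hat p)$ to form each $G_{\hat p}$ removes at most an additional $4\delta \leq 4\rho$ fraction, since on any $p \in F_{\hat p} \setminus B_\varepsilon(\hat p)$ the algorithm outputs an $\varepsilon$-far label with probability at least $3/4$, which is controlled by global correctness. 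Thickening can only grow the union, so $\vol(\bigcup_{\hat p} G_{\hat p,\ell} \cap \cube) \geq (1-12\rho)\vol(\cube)$, and by property \ref{item:learn-g-p-l-properties:disjoint} this union equals the sum of the individual volumes. Finally, property \ref{item:learn-g-p-l-properties:radius} is the triangle inequality: for $p \in G_{\hat p,\ell}$ there is $q \in B_\varepsilon(\hat p)$ with $\snorm{2}{p-q}\leq \ell \leq R$, so $\snorm{2}{p-\hat p} \leq \varepsilon + R \leq 2\varepsilon$, using that any $m$-sample learner with $\ell_2$ error $\varepsilon/4$ must take at least $m \gtrsim 1/\varepsilon^2$ samples so that $R = c/\sqrt{m} \leq \varepsilon$.

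The main obstacle I expect is the disjointness argument, specifically propagating the threshold $h_{\hat p} > 3/4$ through an $\ell$-thickening and into the closure while preserving a strict enough gap to contradict the probability constraint. This is exactly where Output Lipschitzness pays off: it converts the geometric thickening into a quantitative $1/15$ slack in the canonical probability, and the bound $2/3 + 2/3 > 1$ has enough room for the closure step. Once disjointness is secured, the volume accounting is a clean union bound and the remaining properties are immediate from the definitions.
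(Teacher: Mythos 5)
Your proposal is correct and takes essentially the same approach as the paper's proof: disjointness follows from the Lipschitz lemma propagating the $3/4$ threshold of $h_{\hat p}$ through the $\ell$-thickening (and by continuity into the closure), the volume bound is the same accounting via global replicability (contributing $8\rho$) and global correctness (contributing $4\delta \leq 4\rho$), and the other three properties follow from openness, semialgebraicity being preserved under intersection and Minkowski sum, and $R \leq \varepsilon$. The only cosmetic difference is the threshold used in the disjointness step ($2/3$ rather than the paper's $3/5$), both of which suffice since twice either exceeds $1$.
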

\begin{proof}
    We proceed in order. Toward Property \ref{item:learn-g-p-l-properties:disjoint}, observe \Cref{lemma:replicable-lipschitz} implies that for every $p \in \closure(G_{\hat{p}, \ell})$ we have $\Pr(\innerAlg(S_{p}; r) = \hat{p}) \geq \frac{3}{4} - \frac{1}{15} = \frac{3}{5}$. 
    Thus $\{\closure(G_{\hat{p}, \ell})\}_{\hat p}$ are disjoint for distinct $\hat{p}$.
    

    Toward Property \ref{item:learn-g-p-l-properties:non-zero-volume}, observe any $G_{\hat{p}, \ell}$ is a Minkowski sum of a set with the open ball $B_{\ell}$ and is therefore open. Any non-empty open set has non-zero volume, so we are done.

    Toward Property \ref{item:learn-g-p-l-properties:semialgebraic}, observe that $F_{\hat{p}}$ is a semialgebraic set by \Cref{lemma:complement-not-replicable}.
    Furthermore, $B_{\varepsilon}(\hat{p})$ is a semialgebraic set as the set of $p$ satisfying $\norm{p - \hat{p}}{2}^2 = \sum_{i} (p_i - \hat{p}_i)^2 < \varepsilon^2$. Intersections and Minkowski sums of semialgebraic sets are semialgebraic so we are done (the former is standard, for the latter see \Cref{lemma:minkowski-sum-semialgebraic}).

    
    Toward Property \ref{item:learn-g-p-l-properties:total-volume}, observe that since $G_{\hat{p}} \subset G_{\hat{p}, \ell}$ it suffices to prove $\bigcup_{\hat{p}} G_{\hat{p}}$ satisfies the volume bound.
    Suppose $p \in \cube \setminus \bigcup_{\hat{p}} G_{\hat{p}}$.
    There are two cases:
    \begin{enumerate}
        \item $p \in \bigcup_{\hat{p}} F_{\hat{p}}$.
        Then $p \not\in B_{\varepsilon}(\hat{p})$ since otherwise $p \in G_{\hat{p}} \subset G_{\hat{p}, \ell}$ for the $\hat{p}$ such that $p \in F_{\hat{p}}$.
        Then, $\innerAlg(; r)$ is incorrect with probability at least $\frac{3}{4}$:
        \begin{equation*}
            \Pr(\norm{\innerAlg(S_p; r) - p}{2} \geq \varepsilon) \geq \Pr(\innerAlg(S_p; r) = \hat{p}) \geq \frac{3}{4}.
        \end{equation*}

        \item $p \not\in \bigcup_{\hat{p}} F_{\hat{p}}$.
        Then \Cref{lemma:complement-not-replicable} implies that $\innerAlg(; r)$ is not replicable with probability at least $\frac{3}{8}$.
    \end{enumerate}
    Since $p$ is sampled uniformly from $\cube$ we have by goodness of $r$:
    \begin{align*}
        3 \delta &\geq \Pr_{p, S_p} \left(\norm{\innerAlg(S_p; r) - p}{2} \geq \varepsilon \right) \geq \frac{3}{4} \frac{\vol\left( \bigcup_{\hat{p}} F_{\hat{p}} \setminus B_{\varepsilon}(\hat{p}) \right)}{\vol(\cube)} \\
        3 \rho &\geq \Pr_{p, S_p, S_p'} \left(\innerAlg(S_p; r) \neq \innerAlg(S_p'; r) \right) \geq \frac{3}{8} \frac{\vol\left( \cube \setminus \bigcup_{\hat{p}} F_{\hat{p}} \right)}{\vol(\cube)}.
    \end{align*}
    Finally, since $\cube \setminus \bigcup_{\hat{p}} G_{\hat{p}} \subset \left(\cube \setminus \bigcup_{\hat{p}} F_{\hat{p}} \right) \cup \left( \bigcup_{\hat{p}} F_{\hat{p}} \setminus B_{\varepsilon}(\hat{p}) \right)$, we upper bound,
    \begin{align*}
        \frac{\vol\left( \cube \setminus \bigcup_{\hat{p}} G_{\hat{p}} \right)}{\vol(\cube)} &\leq \frac{\vol\left( \cube \setminus \bigcup_{\hat{p}} F_{\hat{p}} \right)}{\vol(\cube)} + \frac{\vol\left( \bigcup_{\hat{p}} F_{\hat{p}} \setminus B_{\varepsilon}(\hat{p}) \right)}{\vol(\cube)} \\
        &\leq 4 \delta + 8 \rho \leq 12 \rho,
    \end{align*}
    where we use our assumption $\delta \leq \rho$.

    It is left to prove Property \ref{item:learn-g-p-l-properties:radius}. 
    Note that we can assume $m = \bigOm{1/\varepsilon^2}$ as such a lower bound holds even for (non-replicably) learning the bias of a single coin (see, e.g., \Cref{thm:q0-num-lower-bound}).
    Thus, for the appropriate setting of constants we have $R \leq \varepsilon$ and $G_{\hat{p}, \ell} \subset B_{2 \varepsilon}(\hat{p})$ and has radius at most $2 \varepsilon$ with respect to $\hat{p}$ as desired.
\end{proof}
    \paragraph{Property \ref{item:approximate-cube-partition:surface-area}.}
    To complete the proof that $\{G_{\hat{p},\ell^*}\}$ is a $\left(12\rho, 2\varepsilon, \bigO{\rho \sqrt{m}}\right)$-partition, we need to bound its surface area to volume ratio. We first argue that for every $0 \leq \ell \leq R$ and  $p \in \boundary G_{\hat{p}, \ell}$, $\innerAlg(\cdot;r)$ is either incorrect or non-replicable on $p$. Namely we break into the following two cases:    
    \begin{enumerate}
        \item \textbf{Case 1:} $p \not\in B_{\varepsilon}(\hat{p})$. 
        
        Since there exists by construction $x \in F_{\hat{p}}$ such that $\norm{p - x}{2} \leq \ell \leq R$, \Cref{lemma:replicable-lipschitz} implies
        \begin{equation*}
            \Pr_{S_p}(\norm{\innerAlg(S_p; r) - p}{2} \geq \varepsilon) \geq \Pr_{S_p}(\innerAlg(S_p; r) = \hat{p}) \geq \frac{3}{4} - \frac{1}{15} = \frac{3}{5}.
        \end{equation*}
        Since $\norm{p-\hat{p}}{2} \geq \varepsilon$, $\innerAlg(; r)$ is therefore incorrect with probability at least $\frac{3}{5}$.
        
        \item \textbf{Case 2:} $p \in B_{\varepsilon}(\hat{p}) = \interior(B_{\varepsilon}(\hat{p}))$.
        
        In this case, observe $p \not\in F_{\hat{p}}$. Otherwise $p \in \interior(F_{\hat{p}} \cap B_\varepsilon(\hat{p}))\subset \interior(G_{\hat{p}, \ell})$, and in particular is not in $\boundary G_{\hat{p}, \ell}$.
        Since $\boundary G_{\hat{p}, \ell}$ is disjoint from $\bigcup_{\hat{p}} F_{\hat{p}}$, \Cref{lemma:complement-not-replicable} then implies $\innerAlg(; r)$ is not replicable with probability at least $\frac{3}{8}$.
    \end{enumerate}
    Since the bias $p$ is drawn uniformly from $\cube$, goodness of $r$ then implies the following relation:    
    \begin{align}\label{eq:vol-bound-1}
        \frac{3}{8} \vol \left( \bigcup_{\ell} \bigcup_{\hat{p}} \boundary G_{\hat{p}, \ell} \cap \cube \right) &\leq 3 (\delta + \rho) \cdot \vol(\cube) \leq 6 \rho \cdot \vol(\cube).
    \end{align}
    Recall we showed for any $\hat{p}_1 \neq \hat{p}_2$, $G_{\hat{p}, R}$ are disjoint (in fact their closures are disjoint). Further it is elementary to see for fixed $\hat{p}$ and $\ell_1 \neq \ell_2$ the boundaries $\partial G_{\hat{p},\ell_1}$ and $\partial G_{\hat{p},\ell_2}$ are disjoint.
    
    Thus, we can lower bound the combined volume of all the boundaries $\boundary G_{\hat{p}, \ell}$ as
    
    \begin{align*}
        \vol \left( \bigcup_{\ell} \bigcup_{\hat{p}} \boundary G_{\hat{p}, \ell} \cap \cube \right) &= \int_{0}^{R} \area \left( \bigcup_{\hat{p}} \boundary G_{\hat{p}, \ell} \cap \cube \right) d\ell \\
        &\geq R A_0 \\
        &= \bigOm{\frac{A_0}{\sqrt{m}}}.
    \end{align*}
    Combined with \Cref{eq:vol-bound-1} we therefore have
    \begin{align*}
        A_0 &= \bigO{\rho \sqrt{m} \vol(\cube)},
    \end{align*}
    and re-arranging gives the desired bound on the surface to volume ratio of $\{G_{\hat{p},\ell^*}\}$:
    \begin{equation*}
        \area \left( \bigcup_{\hat{p}} \boundary G_{\hat{p}, \ell^*} \cap \cube \right) \leq 2 A_0 = \bigO{\rho \sqrt{m} \vol(\cube)}.
    \end{equation*}

    \paragraph{The Membership Oracle.} It is left to give the oracle-efficient algorithm $\outerAlg$ that `generates' some good $\mathcal{P}_r=\{G_{\hat{p},\ell^*}\}$ in the sense of producing an oracle-efficient membership oracle to $\mathcal{P}_r$. The main challenge in this process is simply finding a `good' random string, for which we've already shown $\mathcal{P}_r$ is a $(12\rho,2\varepsilon,O(\rho\sqrt{m}))$-partition.    
    

    \begin{restatable}
    [$\innerAlg$-Efficient Membership Oracle]
    {proposition}{generateP}
    \label{prop:find-good-random-string-eff}
        Let $\delta \leq \rho$.
        Suppose there is a $\rho$-replicable algorithm $\innerAlg$ with $f(N, \varepsilon, \rho, \delta)$ time complexity for the $\ell_{2}$ Learning $N$-Coin Problem.
        Then, there is a randomized algorithm in $\bigO{\frac{\log^2 1/\delta}{\rho^2} f \left(N, \varepsilon, \frac{\rho}{9}, \frac{\delta^2}{\log(3/\delta)} \right)}$ time that finds a good random string $r$ for $\innerAlg$ with probability at least $1 - \delta$.
    \end{restatable}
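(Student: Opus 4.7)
The algorithm samples many candidate random strings uniformly at random, empirically verifies replicability for each, and returns the first candidate that passes the test. The key structural insight is that correctness of a random string is essentially free when the inner algorithm is instantiated with the aggressive failure probability $\delta_i := \delta^2/\log(3/\delta)$, so only replicability needs to be tested.

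Set $\rho_i := \rho/9$, $T = \Theta(\log(1/\delta))$, and $M = \Theta(\log(1/\delta)/\rho^2)$. Draw $T$ independent uniform random strings $r_1, \ldots, r_T$. For each $r_i$, perform $M$ trials: sample $p$ uniformly from $\cube$, draw independent Bernoulli samples $S_p, S_p'$, and record whether $\innerAlg(S_p; r_i) = \innerAlg(S_p'; r_i)$. Letting $\bar Y_i$ denote the resulting empirical non-replication rate, return the first $r_i$ with $\bar Y_i \leq \rho$. The correctness analysis pivots on three events, each holding with probability $\geq 1 - O(\delta)$: (a) all $T$ strings are $3\delta$-correct, (b) at least one $r_i$ has true replicability failure at most $\rho/3$, and (c) every $\bar Y_i$ is within $\rho/4$ of its true rate. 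Event (a) follows from quantitative Markov: $\Pr_r[\text{correctness failure} > 3\delta] \leq \delta_i / \delta = \delta/\log(3/\delta)$, and a union bound over $T$ strings gives the stated failure probability. Event (b) follows from Markov on the $\rho_i$-replicability of $\innerAlg$: a random string is $(\rho/3)$-replicable with probability $\geq 2/3$, and $T = \Omega(\log(1/\delta))$ independent trials ensure at least one such string with probability $\geq 1-\delta$. Event (c) follows from additive Chernoff with $M = \Omega(\log(T/\delta)/\rho^2)$ and a union bound. Conditioning on all three events, any $(\rho/3)$-replicable string passes the test since $\bar Y_i \leq \rho/3 + \rho/4 < \rho$, while any accepted string has true replicability failure at most $\rho + \rho/4 < 3\rho$; combined with (a), the returned string satisfies both conditions of \Cref{def:good-random-string}. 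The procedure makes $2TM = O(\log^2(1/\delta)/\rho^2)$ calls to $\innerAlg$ at parameters $(\rho/9, \delta^2/\log(3/\delta))$, matching the claimed runtime after absorbing constants into the failure probability.

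The main obstacle is that naively verifying correctness empirically would require $\Omega(1/\delta) \geq \Omega(1/\rho)$ samples per string (either via additive or multiplicative Chernoff to certify failure rate $\leq 3\delta$), pushing the runtime beyond the claimed budget when $\delta \ll \rho$. The trick is to avoid testing correctness at all: by driving the inner failure probability to $\delta_i = \delta^2/\log(3/\delta)$, a straightforward quantitative Markov argument forces every one of the $T$ sampled strings to be correctness-good with overall probability $\geq 1 - O(\delta)$, so the only quantity that must actually be estimated empirically is the coarser replicability parameter $\rho$.
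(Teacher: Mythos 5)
Your proposal is correct and follows essentially the same approach as the paper's own proof: sample $T = \Theta(\log(1/\delta))$ random strings, test each with $\Theta(\log(1/\delta)/\rho^2)$ replicability trials, and use the key trick of driving the inner failure probability down to $\delta^2/\log(3/\delta)$ so that the global correctness condition holds for all sampled strings by a Markov-plus-union-bound argument rather than by direct testing. The only differences are inessential constant choices in the Markov thresholds and the empirical acceptance criterion (the paper accepts when $\hat\phi_t \geq 1-2\rho$ with a $(1-\rho)$-replicable witness; you accept when $\bar Y_i \leq \rho$ with a $\rho/3$-replicable witness), and you correctly identify the role of the inner $\delta$-budget as the crux of the argument.
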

    We defer the proof, which simply amounts to directly testing replicability and bounding the correctness probability.
    Once we have the good string $r$ in hand, implementing the membership oracle is as simple as running $\mathcal{A}(;r)$. Namely for any $p \in G_{\hat{p}, \ell^*}$, run $\innerAlg(; r)$ on $m$ independently samples from $\text{Bern}(p)$. 
    By construction, $\innerAlg(; r)$ outputs $\hat{p}$ on samples under $p$ with probability at least $\frac{3}{5}$. We can repeat the procedure a constant number of times to boost the probability of the majority answer by an arbitrary constant factor, obtaining an algorithm outputting $\hat{p}$ with probability at least $\frac{2}{3}$ as desired.
    This concludes the proof of \Cref{thm:replicable-implies-isoperimetry}.
\end{proof}

It is left to prove the claimed lemmas. We proceed in order.
\paragraph{Many Random Strings are Good}

First, we prove that many random strings are good. 

\manyGoodR*

This follows from a similar application of Markov's inequality as in the single coin case (\Cref{thm:q0-num-lower-bound}), and we omit the proof.

\paragraph{Properties of $F_{\hat p}$} We next demonstrate that the canonical sets $\{F_{\hat p}\}$ constructed satisfy basic niceness properties.

\complementNotReplicable*

\begin{proof}
    Note that $F_{\hat{p}}$ is the pre-image of $(3/4, \infty)$ of $h_{\hat{p}}(p)$, which is a multivariate polynomial in $p$.
    This immediately implies that $F_{\hat{p}}$ is open and semialgebraic.
    
    For any $p \not\in \bigcup_{\hat{p}} F_{\hat{p}}$ we have
    \begin{align*}
        \Pr_{S_p, S_p'} \left( \innerAlg(S_p; r) = \innerAlg(S_p'; r) \right) &= \sum_{\hat{p}} \Pr_{S_p, S_p'} \left( \innerAlg(S_p; r) = \innerAlg(S_p'; r) = \hat{p} \right) \\
        &= \sum_{\hat{p}} \Pr_{S_p} \left( \innerAlg(S_p; r) = \hat{p} \right)^2 \\
        &\leq \frac{5}{8}
    \end{align*}
    since the sum of squares is maximized when the probabilities are concentrated on as few elements as possible (one with probability $3/4$ and the other with probability $1/4$).
\end{proof}

    \begin{restatable}{lemma}{minkowskiSumSemialgebraic}
        \label{lemma:minkowski-sum-semialgebraic}
        Suppose $X, Y \subset \R^{n}$ are semialgebraic sets.
        Then, the Minkowski sum $X + Y \subset \R^{n}$ is semialgebraic.
    \end{restatable}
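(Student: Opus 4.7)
The plan is to reduce the claim to the Tarski--Seidenberg theorem, which states that the image of a semialgebraic set under a polynomial map (equivalently, the projection of a semialgebraic set along a coordinate) is again semialgebraic. The Minkowski sum has a natural description as such a projection, so this should be essentially immediate once the setup is in place.

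First I would observe that the Cartesian product $X \times Y \subset \R^{2n}$ is itself semialgebraic. This follows because if $X$ is defined by a boolean combination of polynomial (in)equalities in variables $x_1, \dots, x_n$ and $Y$ by a boolean combination in variables $y_1, \dots, y_n$, then the same formulas, interpreted in $\R^{2n}$ with coordinates $(x_1,\dots,x_n,y_1,\dots,y_n)$, describe $X \times \R^n$ and $\R^n \times Y$, whose intersection is $X \times Y$. Boolean combinations of semialgebraic sets are semialgebraic by definition.

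Next, consider the polynomial (in fact linear) map $\varphi: \R^{2n} \to \R^{n}$ given by $\varphi(x, y) = x + y$. Then by definition
\[
X + Y \;=\; \{\, x + y : x \in X,\; y \in Y \,\} \;=\; \varphi(X \times Y).
\]
By the Tarski--Seidenberg theorem (the image of a semialgebraic set under a polynomial map is semialgebraic, or equivalently that first-order formulas in the theory of real closed fields admit quantifier elimination), $\varphi(X \times Y)$ is semialgebraic in $\R^n$. Concretely, $z \in X + Y$ iff the formula $\exists x\, \exists y\, \bigl(\phi_X(x) \wedge \phi_Y(y) \wedge \bigwedge_i z_i = x_i + y_i\bigr)$ holds, where $\phi_X, \phi_Y$ are the defining formulas of $X, Y$; quantifier elimination then yields a quantifier-free boolean combination of polynomial (in)equalities in $z$.

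There is no real obstacle here beyond citing Tarski--Seidenberg; the only thing to be slightly careful about is making sure we invoke the version for arbitrary polynomial maps (or projections following a polynomial change of variables) rather than only axis-aligned projections. If one prefers to avoid this, one can first apply the linear change of coordinates $(x, y) \mapsto (x, x+y)$ in $\R^{2n}$ (which preserves semialgebraicity, as it is a polynomial bijection with polynomial inverse) and then project onto the last $n$ coordinates, reducing to the standard coordinate-projection form of Tarski--Seidenberg.
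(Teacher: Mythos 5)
Your proof is correct and takes essentially the same approach as the paper: encode the Minkowski sum as a projection of a semialgebraic set in a higher-dimensional space and invoke Tarski--Seidenberg. The paper works in $\R^{3n}$ with the graph $\Gamma = \{(z,x,y) : z = x+y\}$ and projects onto the first $n$ coordinates, whereas you work in $\R^{2n}$ with the map $\varphi(x,y) = x+y$ (or equivalently a linear change of coordinates followed by a coordinate projection); this is only a cosmetic difference in packaging.
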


The proof requires the Tarski-Seidelberg Theorem.

\begin{theorem}[Tarski-Seidelberg \cite{bierstone1988semianalytic}]
    \label{thm:tarski-seidelberg}
    Suppose $X \subset \R^{n + 1}$ is a semialgebraic set and let $\pi: \R^{n + 1} \mapsto \R^{n}$ be the projection map given by
    \begin{equation*}
        \pi(x_1, \dotsc, x_{n}, x_{n + 1}) = (x_1, \dotsc, x_{n}).
    \end{equation*}
    Then, $\pi(X) \subset \R^{n}$ is semialgebraic.
\end{theorem}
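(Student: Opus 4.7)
The plan is to prove this classical quantifier-elimination result by reducing the projection of a general semialgebraic set to the projection of a basic semialgebraic set, and then handling the univariate elimination via subresultants. First, I would use the fact that every semialgebraic set is, by definition, a finite Boolean combination of sets $\{f > 0\}$ and $\{g = 0\}$, which (after distributing and applying de Morgan) can be rewritten as a finite union of basic sets of the form $B = \{(y,t) : P_1(y,t) > 0, \ldots, P_k(y,t) > 0,\; Q_1(y,t) = 0, \ldots, Q_l(y,t) = 0\}$ with $P_i, Q_j \in \R[x_1, \ldots, x_{n+1}]$. Since $\pi$ commutes with finite unions, it suffices to show $\pi(B)$ is semialgebraic for a single basic set $B$.

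For a basic set $B$, view each $P_i$ and $Q_j$ as a univariate polynomial in $t$ whose coefficients are polynomials in $y = (x_1, \ldots, x_n)$. Then $y \in \pi(B)$ iff there exists $t \in \R$ realizing the specified sign pattern on these polynomials. The central step is to show that, for any fixed target sign pattern, the set of coefficient vectors for which some $t \in \R$ realizes that pattern is itself semialgebraic in those coefficients, and hence, by composition with the polynomial maps $y \mapsto (\text{coefficients})$, semialgebraic in $y$. This is the core content of real quantifier elimination in one variable, and can be established using subresultants together with Sturm's theorem (or equivalently Thom's lemma): these tools encode the number, location, and signs-of-other-polynomials at the real roots of a univariate polynomial as Boolean combinations of polynomial inequalities in its coefficients.

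The main obstacle is the delicate case analysis that arises when the leading coefficients of the $P_i, Q_j$ (as polynomials in $t$) vanish on some locus in $y$, since the degree in $t$ drops there and the root structure changes. A standard way to organize this bookkeeping is via cylindrical algebraic decomposition: one inductively constructs a finite partition of $\R^{n+1}$ into semialgebraic cells on which each $P_i, Q_j$ has constant sign, arranged so that the image under $\pi$ is automatically a semialgebraic partition of $\R^n$; the projection of $B$ is then the union of those cells on which all the required sign conditions hold. Either the subresultant-based route or cylindrical algebraic decomposition completes the argument; the technical content is substantial but entirely classical, so a full writeup would either devote most of its length to the elimination machinery or simply cite a standard reference such as Bochnak--Coste--Roy for this step.
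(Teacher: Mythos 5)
The paper does not prove this statement at all: it is quoted as a classical theorem and attributed to the literature (\cite{bierstone1988semianalytic}), then used as a black box in the proof of \Cref{lemma:minkowski-sum-semialgebraic}. So there is no in-paper argument to compare against; the relevant comparison is with the standard proof in the real-algebraic-geometry literature, and your outline is exactly that proof. The reduction to basic semialgebraic sets via distributing the Boolean combination, the observation that $\pi$ commutes with finite unions, the reformulation of $y \in \pi(B)$ as realizability of a sign condition on univariate polynomials in $t$ with coefficients polynomial in $y$, and the appeal to univariate real quantifier elimination (subresultant/Sturm--Tarski machinery or cylindrical algebraic decomposition) is the canonical route, e.g.\ as in Bochnak--Coste--Roy.

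Two small cautions if you were to write this out in full. First, plain Sturm's theorem only counts real roots of a single polynomial; to decide whether some $t$ realizes a sign condition involving \emph{several} polynomials you need the Sturm--Tarski generalization (Tarski queries / sign determination at the roots of one polynomial with respect to others) or Thom's lemma, so your parenthetical ``Sturm's theorem (or equivalently Thom's lemma)'' is slightly loose, though the toolbox you name does contain what is needed. Second, the genuinely delicate part is the one you flag yourself: the locus where leading coefficients in $t$ vanish and the degree drops, which forces the case analysis that makes the full proof long; deferring this to CAD or to a standard reference is legitimate, but it means your writeup is a correct proof \emph{outline} rather than a self-contained proof -- which is consistent with how the paper itself treats the statement, namely by citation.
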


\begin{proof}[Proof of \Cref{lemma:minkowski-sum-semialgebraic}]
    Define the following set $\Gamma \subset \R^{n} \times \R^{n} \times \R^{n}$ as follows,
    \begin{equation*}
        \Gamma = \set{(z, x, y) \given z = x + y}.
    \end{equation*}
    Note that $\Gamma$ is semialgebraic as it is defined by the polynomial equalities $z_i = x_i + y_i$.
    Furthermore, note that $\R^{n} \times X \times \R^{n}$ and $\R^{n} \times \R^{n} \times Y$ are also semialgebraic, by considering the polynomial constraints imposed only on the relevant coordinates.
    Thus,
    \begin{equation*}
        \Gamma \times (\R^{n} \times X \times \R^{n}) \times (\R^{n} \times \R^{n} \times Y) = \set{(z, x, y) \given z = x + y, x \in X, y \in Y}
    \end{equation*}
    is semialgebraic as an intersection of semialgebraic sets.
    Finally, by repeatedly applying the Tarski-Seidelberg Theorem (\Cref{thm:tarski-seidelberg}), we can project onto the first $n$ coordinates, which is precisely the set $X + Y$.
\end{proof}

\paragraph{Lipschitz Conditions}
We now show that for two nearby points $p, q \in \cube$, the output distribution of $\innerAlg$ given sample sets $S_p, S_q$
generated from $N$ coins with mean $p$ and $q$ respectively
cannot be too different.

\replicableLipschitz*

\newcommand{\numAlgRepeats}{1000}
\begin{proof}
    The proof is similar to the 1-Coin variant. 
    Recall that the algorithm takes $m$ vector samples, and each vector sample consists of a flip of each coin.
    Given $p,q \in \cube$, let $X$ be a fair coin. 
    If $X = 0$, we generate samples from $N$ coins parametrized by $p$.
    Otherwise, we generate samples from $N$ coins parametrized by $q$.
    Let $Y = (Y_1, \ldots, Y_N)$ be a random integer vector correlated with $X$, where $Y_i$ denotes the number of 
    head counts of the $i$-th coin observed by the algorithm.
    Note that $Y_i$ are independet conditioned on $X$.
    Moreover, $Y_i$ is distributed as $\BinomD{m}{p_i}$ if $X = 0$ and $\BinomD{m}{q_i}$ if $X = 1$.
    Let $\overline{Y}$ denote \numAlgRepeats{} independent samples of $Y$.
    
    Thus, by \Cref{lemma:one-coin-mutual-info-bound}, the mutual information of $X$ and $Y$ is at most
    \begin{align*}
        I(X: Y) \leq \sum_{i} I(X: Y_i) = \bigO{\sum_{i} m |q_i - p_i|^2} = \bigO{m \norm{p-q}{2}^2}.
    \end{align*}
    Furthermore, since each sample of $Y$ in $\overline{Y}$ is independent, we also have
    \begin{equation*}
        I(X: \overline{Y}) = O(m \norm{p - q}{2}^2).
    \end{equation*}
    Thus for $\norm{p-q}{2} < \frac{c}{\sqrt{m}}$ for $c>0$ some sufficiently small constant, we have that the mutual information is at most $I(X: \overline{Y}) < 2 \cdot 10^{-4}$. In particular, by \Cref{lemma:alg-mutual-info-lower-bound}, there is no function $f(\overline{Y})=X$ with probability at least $51\%$.

    On the other hand, assume for the sake of contradiction there exists $\norm{p-q}{2} < \frac{c}{\sqrt{m}}$ such that    
    \begin{equation*}
        |\Pr_{S_p}(\innerAlg(S_p; r) = \hat{p}) - \Pr_{S_q}(\innerAlg(S_q; r) = \hat{p})| \geq \frac{1}{15}.
    \end{equation*}
    Assume without loss of generality that $\Pr_{S_q}(\innerAlg(S_q; r) = \hat{p}) \geq \Pr_{S_p}(\innerAlg(S_p; r) = \hat{p})$, and let $Z(\overline{Y}, \hat p)$ denote the fraction of datasets $S$ such that $\mathcal{A}(S;r)=\hat{p}$ among all datasets whose head counts agree with $Y$. Then define the function
    \[
    f(\overline{Y}) = \begin{cases}
        1 & Z(\overline{Y}, \hat p) > {\lp(\Pr_{S_p}(\innerAlg(S_p; r) = \hat{p})+\Pr_{S_q}(\innerAlg(S_q; r) = \hat{p})\rp)}/{2}\\
        0 & \otherwise
    \end{cases}.
    \]
    By standard concentration bounds, one can check that the above function guesses $X$ with probability at least $\Pr[f(\overline{Y})=X] \geq .51$, leading to a contradiction. 
    \qedhere
\end{proof}

\paragraph{Scaling of Partitions}
We show how scaling affects the parameters of an isoperimetric approximate hypercube partition.
\scalingPartition*
\begin{proof}
    Define the partition $\partition'$ as follows.
    For each $S \in \partition$, let $S' = \set{\frac{x}{\boxdiameter \varepsilon} \given x \in S}$ to be the set $S$ scaled by $\frac{1}{\boxdiameter \varepsilon}$.
    The partition $\partition'$ is made up of sets of $S'$.
    We equip set $S_v'$ with label $\frac{v}{\boxdiameter \varepsilon}$.

    Toward Property \ref{item:approximate-cube-partition:disjoint}, note that the scaling map is injective.
    
    Toward Property \ref{item:approximate-cube-partition:non-zero-volume}, note that the volume scales by $\frac{1}{\boxdiameter \varepsilon}$ so a set with non-zero volume continues to have non-zero volume.

    Toward Property \ref{item:approximate-cube-partition:semialgebraic}, note that scaling preserves semialgebraicity (by scaling the variables in the polynomial constraints by the appropriate constant).

    Toward Property \ref{item:approximate-cube-partition:approx-volume}, note that all volumes scale by $\frac{1}{(\boxdiameter \varepsilon)^{N}}$ so that the ratios of the volumes are preserved.

    Toward Property \ref{item:approximate-cube-partition:radius}, note that distances scale with $\frac{1}{\boxdiameter \varepsilon}$.

    Toward Property \ref{item:approximate-cube-partition:surface-area}, note that surfaces areas scale by $\frac{1}{(\boxdiameter \varepsilon)^{N - 1}}$ while volumes scale by $\frac{1}{(\boxdiameter \varepsilon)^{N}}$ so that surface area to volume ratios scale by $\boxdiameter \varepsilon$, as desired.
\end{proof}

\paragraph{Implement Membership Oracle}
We have shown that if there exists an efficient replicable algorithm with low sample complexity, then there exists a partition of the cube $\left[0, 1\right]^{N}$ with low surface area.
Furthermore, this algorithm (instantiated with the appropriate random string $r$) gives a membership oracle for points in the partition.
Thus, whenever $\innerAlg$ is efficient, so is the membership oracle $\innerAlg(; r)$.
However, it is not necessarily clear how to find such a good random string $r$.
We give such a procedure below.

\generateP*
    
\begin{proof}
    We initialize a $\frac{\rho}{9}$-replicable algorithm $\innerAlg$ such that
    \begin{equation*}
        \Pr_{p, S_p}(\norm{\innerAlg(S_p; r) - p}{2} \geq \varepsilon) \leq \frac{\delta^2}{\log(3/\delta)},
    \end{equation*}
    when $p \sim \cube$ is drawn from the adversarial distribution above.
    Applying a similar application of Markov's inequality as \Cref{lemma:non-adapt-learning-n-coin-good}, if $r$ is selected uniformly at random then with probability at least $\frac{8}{9}$,
    \begin{equation}
        \label{eq:prb-random-string-not-replicable}
        \Pr_{p, S_p, S_p'} \left( \innerAlg(S_p; r) \neq \innerAlg(S_p'; r) \right) \leq \rho.
    \end{equation}
    We design the following procedure to find a good string $r$ with probability at least $1 - \delta$:
    \begin{enumerate}
        \item Let $r_1, r_2, \dotsc, r_T$ be $T = \log \frac{3}{\delta}$ random strings chosen uniformly at random.
        \item Let $\phi_r(p, S_p, S_p')$ be the indicator for $\innerAlg(S_p; r) = \innerAlg(S_p'; r)$.
        \item For each $1 \leq t \leq T$, estimate $\E{\phi_{r_t}(p, S_p, S_p')}$ using $\bigO{\frac{\log(T/\delta)}{\rho^2}}$ samples drawn from distribution $\distribution$, where $\distribution$ generates tuples $(p, S_p, S_p')$ such that $p \sim \cube$ is drawn uniformly at random and $S_p, S_p'$ are independent samples drawn from the distribution parameterized by $p$.
        \item Return an arbitrary $r_t$ where $\hat{\phi_t} \geq 1 - 2 \rho$.
    \end{enumerate}
    We condition on the following events:
    \begin{enumerate}
        \item There exists some $r_t$ satisfying \Cref{eq:prb-random-string-not-replicable}.
        \item $|\hat{\phi}_t - \E{\phi_{r_t}}| < \rho$ for all $t \in [T]$.
        \item For all $t \in [T]$
        \begin{equation*}
            \Pr_{p, S_p}(\norm{\innerAlg(S_p; r_t) - p}{2} \geq \varepsilon) \leq 3 \delta.
        \end{equation*}
    \end{enumerate}
    By our choice of $T \geq \log \frac{3}{\delta}$, the first fails with probability at most $\frac{\delta}{3}$.
    By the union bound and standard concentration inequalities, the second fails with probability at most $\frac{\delta}{3}$.
    By Markov's inequality, each $r_t$ fails to satisfy the given equation with probability at most $\frac{\delta}{3 \log(3/\delta)}$.
    By the union bound, any single one fails with probability at most $\frac{\delta}{3}$.
    Therefore, we can condition on all of the events holding with probability at least $1 - \delta$.
    
    Under this condition, there exists some $r_t$ for which $\E{\phi_{r_t}} \geq 1 - \rho$, so that this $r_t$ has $\hat{\phi}_t \geq 1 - 2 \rho$, thus the procedure returns some $\hat{\phi}_t$.
    Furthermore, any random string returned must satisfy the global replicability condition of goodness.
    Finally, since every string satisfies the global correctness condition, the returned string is good.

    We now analyze the time complexity of the procedure.
    Each estimation of $\phi_{r}(p, S_p, S_p')$ requires time
    \begin{equation*}
        \bigO{\frac{1}{\rho^2} \log \frac{T}{\delta} f \left( N, \varepsilon, \frac{\rho}{9}, \frac{\delta^2}{\log(3/\delta)} \right)} = \bigO{\frac{\log 1/\delta}{\rho^2} f \left( N, \varepsilon, \frac{\rho}{9}, \frac{\delta^2}{\log(3/\delta)} \right)}.
    \end{equation*}
    Over $T$ iterations, this results in time
    \begin{equation*}
        \bigO{\frac{\log^2 1/\delta}{\rho^2} f \left( N, \varepsilon, \frac{\rho}{9}, \frac{\delta^2}{\log(3/\delta)} \right)}.
    \end{equation*}
\end{proof}

\paragraph{Partitions from Gaussians and Lipschitz Distributions}

The proof of \Cref{thm:replicable-implies-isoperimetry} does not make heavy use of the fact that the algorithm estimates the mean of a Bernoulli product distribution.
In fact, the only argument that specifically applies to the Bernoulli product distribution is \Cref{lemma:replicable-lipschitz}.
Otherwise, we only required that on the cube $\cube = \left[ \frac{1}{2} - \boxwidth \varepsilon, \frac{1}{2} + \boxwidth \varepsilon \right]^{N}$ the algorithm is correct and replicable with at least constant probability on more than a $(1 - O(\rho))$-fraction of input distributions. Our reduction goes through for any family of distributions parameterized by vectors in $\R^{N}$ that satisfy the Lipschitz property (\Cref{lemma:replicable-lipschitz}).

\begin{definition}
    \label{def:lipschitz-distribution-families}
    Let 
    $\eps > 0$, and $\cube \subset \R^{N}$ be a hypercube with side-length at least $10 \varepsilon$.
    Let $\family$ be a family of distributions $\set{\distribution_{p}}$ parameterized by $p \in \cube$, where $p$ is the mean of $\distribution_{p}$.
    
    We say $\family$ is \emph{$(R, \cube)$-lipschitz} if for any pair $p, q \in \cube$ with $\norm{p - q}{2} \leq R = R(m)$,
    \begin{equation*}
        |\Pr_{S_p}(\innerAlg(S_p) = y) - \Pr_{S_q}(\innerAlg(S_q) = y)| \leq \frac{1}{15},
    \end{equation*}
    where $\innerAlg$ is an arbitrary algorithm with $m$ samples and range $\range$ and $y \in \range$.
\end{definition}

In particular, following the argument of \Cref{thm:replicable-implies-isoperimetry}, any algorithm with sample complexity $m$ that replicably estimates the mean of a $(R, \cube)$-lipschitz family of distributions up to error $\varepsilon$ induces a $(\rho, \varepsilon, O(\rho R))$-partition of $\cube$.
As a useful example, we consider Gaussians $\gaussian(p, I)$ with mean $p$ and covariance matrix $I$.
We can prove the following analogue of \Cref{lemma:one-coin-mutual-info-bound}.

\begin{lemma}[Gaussian Mutual Information Bound]
    \label{lemma:gaussian-mutual-info-bound}
    Let $m \geq 0$ be an integer and $a < b$.
    Let $X$ be a uniformly random bit and $Y$ be $m$ \iid samples from $\gaussian(a, 1)$ if $X = 0$ and $m$ \iid from $\gaussian(b, 1)$ if $X = 1$.
    Then,
    \begin{equation*}
        I(X: Y) = \bigO{m (b - a)^2}.
    \end{equation*}
\end{lemma}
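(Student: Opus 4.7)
The plan is to mirror the argument used for \Cref{lemma:one-coin-mutual-info-bound}, exploiting the fact that KL divergence between equal-variance Gaussians has a particularly clean closed form. First, I would rewrite the mutual information in terms of KL divergences to the mixture distribution. Specifically, letting $\distribution_0 = \gaussian(a,1)^{\otimes m}$, $\distribution_1 = \gaussian(b,1)^{\otimes m}$, and $\distribution_{1/2} = \frac{1}{2}(\distribution_0 + \distribution_1)$, the same identity-manipulation used in the Bernoulli proof gives
\[
I(X:Y) \;=\; \frac{1}{2}\bigl(D(\distribution_0 \,\|\, \distribution_{1/2}) + D(\distribution_1 \,\|\, \distribution_{1/2})\bigr).
\]

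Next, I would apply convexity of KL divergence in its second argument (exactly as in the proof of \Cref{lemma:one-coin-mutual-info-bound}) to replace the mixture by the pure Gaussians:
\[
I(X:Y) \;\leq\; \frac{1}{2}\bigl(D(\distribution_0 \,\|\, \distribution_1) + D(\distribution_1 \,\|\, \distribution_0)\bigr).
\]
By tensorization of KL divergence across independent coordinates, each of these equals $m \cdot D(\gaussian(a,1)\,\|\,\gaussian(b,1))$ or $m \cdot D(\gaussian(b,1)\,\|\,\gaussian(a,1))$. Using the standard closed form for the KL divergence between two equal-variance Gaussians, namely $D(\gaussian(\mu_1,\sigma^2)\,\|\,\gaussian(\mu_2,\sigma^2)) = \tfrac{(\mu_1 - \mu_2)^2}{2\sigma^2}$, both of these are equal to $\tfrac{(b-a)^2}{2}$.

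Putting the pieces together yields
\[
I(X:Y) \;\leq\; \frac{1}{2}\bigl( m \cdot \tfrac{(b-a)^2}{2} + m \cdot \tfrac{(b-a)^2}{2}\bigr) \;=\; \frac{m(b-a)^2}{2} \;=\; \bigO{m(b-a)^2},
\]
as required. The only real work is invoking the closed form for Gaussian KL and tensorization, both of which are standard, so no step should be an obstacle; compared to the Bernoulli case this is actually simpler since we avoid the Taylor expansion of $\log(1+x)$ used to bound $D(a\|b)$ for Bernoulli parameters.
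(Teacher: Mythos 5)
Your proof is correct and follows essentially the same route as the paper: express the mutual information as a symmetrized KL divergence to the mixture, apply convexity of KL, and plug in the closed-form Gaussian KL. The only cosmetic difference is that the paper first reduces to a single sample via $I(X:Y) \le m\, I(X:Y_1)$ before computing the KL, whereas you keep the $m$-fold products throughout and tensorize the KL at the end; these are interchangeable steps.
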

This gives an analogue of the Lipschitz condition for Gaussian distributions.
Moreover, unlike the case of Bernoulli product distributions, we require no assumption on $p, q \in \left[ \frac{1}{4}, \frac{3}{4} \right]$ and therefore do not require an assumption on $\varepsilon < \frac{1}{\unscaledboxdiameter}$ to ensure that the cube $\cube$ lies in $\left[ \frac{1}{4}, \frac{3}{4} \right]$.
Thus, \Cref{thm:replicable-implies-isoperimetry} in fact holds for algorithms that estimate the mean of Gaussians, even for large $\varepsilon = \Omega(1)$.
Let $\innerAlg$ be an algorithm that estimates the mean of Gaussians with covariance $I$.

\begin{lemma}
    \label{lemma:gaussian-lipschitz}
    Let $m \in \Z_+$, $p, q \in \R^N$, and $S_p, S_q$ be $m$ samples drawn from $\gaussian(p, I), \gaussian(q, I)$ respectively.
    There exists a universal constant $c$ such that for any $p, q$ with $\norm{p - q}{2} \leq \frac{c}{\sqrt{m}}$ and any~$\hat{p}$,
    \begin{equation*}
        |\Pr_{S_p}(\innerAlg(S_p; r) = \hat{p}) - \Pr_{S_q}(\innerAlg(S_q; r) = \hat{p})| < \frac{1}{15}.
    \end{equation*}
\end{lemma}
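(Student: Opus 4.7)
The plan is to closely mirror the proof of Lemma \ref{lemma:replicable-lipschitz} (the Bernoulli case), swapping in the Gaussian mutual information bound of Lemma \ref{lemma:gaussian-mutual-info-bound} for Lemma \ref{lemma:one-coin-mutual-info-bound}. Concretely, I will set up a standard hypothesis-testing style argument: let $X \sim \BernD{1/2}$, and let $Y = (Y_1, \dotsc, Y_m)$ be $m$ \iid samples from $\gaussian(p, I)$ if $X=0$ and $\gaussian(q, I)$ if $X=1$. Since $\gaussian(\cdot, I)$ is a product distribution across coordinates and the $m$ samples are independent, conditioned on $X$ the coordinate-streams $Y^{(i)} = (Y_{1,i}, \dotsc, Y_{m,i})$ are mutually independent. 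By sub-additivity of mutual information under conditional independence together with Lemma \ref{lemma:gaussian-mutual-info-bound} applied per coordinate, this yields
\[
I(X : Y) \leq \sum_{i=1}^{N} I(X : Y^{(i)}) = \bigO{\sum_{i=1}^{N} m (p_i - q_i)^2} = \bigO{m \snorm{2}{p - q}^2}.
\]

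Choosing $c$ to be a sufficiently small universal constant, whenever $\snorm{2}{p-q} \leq c/\sqrt{m}$ the right-hand side is below $2 \cdot 10^{-4}$, so Lemma \ref{lemma:alg-mutual-info-lower-bound} forbids any (randomized) function $f(Y)$ from predicting $X$ with advantage at least $1/50$ over a fair coin. To conclude, I will argue the contrapositive: suppose for some $\hat{p}$ the acceptance probabilities on $\mathcal{D}_p = \gaussian(p,I)$ and $\mathcal{D}_q = \gaussian(q,I)$ differ by at least $1/15$. Without loss of generality assume $\Pr_{S_q}(\innerAlg(S_q;r) = \hat p) \geq \Pr_{S_p}(\innerAlg(S_p;r) = \hat p) + 1/15$, and exactly as in the Bernoulli proof define a distinguisher $f$ that runs $\innerAlg(\cdot\,; r)$ on a bounded number of independent copies of $Y$ and thresholds the empirical frequency of the output $\hat p$ at the midpoint $\frac{1}{2}(\Pr_{S_p}(\innerAlg(S_p;r) = \hat p) + \Pr_{S_q}(\innerAlg(S_q;r) = \hat p))$. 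A Chernoff bound shows $f(Y)$ matches $X$ with probability at least $0.51$, contradicting the mutual information upper bound.

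The one minor wrinkle, compared to the Bernoulli argument, is that the distinguisher above actually sees several \iid copies of $Y$ rather than a single copy. However, since the number of copies is an absolute constant (chosen only to make Chernoff kick in for the fixed gap $1/15$), the mutual information bound merely loses a constant factor, and by shrinking $c$ appropriately we can still force $I(X : \overline Y) < 2 \cdot 10^{-4}$. No new ideas beyond those in Lemma \ref{lemma:replicable-lipschitz} are required; indeed, the Gaussian case is in some sense cleaner since we do not need the lower bound $p_i, q_i \in [1/4, 3/4]$ that restricted the Bernoulli analysis, which is precisely why the resulting partition result extends to all $\eps = \Omega(1)$ as noted in the surrounding discussion. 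The only "obstacle," if any, is verifying that the per-coordinate decomposition of mutual information goes through cleanly — but this is immediate from the product structure of $\gaussian(\mu, I)$ together with the chain rule for entropy.
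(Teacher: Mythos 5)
Your proposal is correct and follows the same approach as the paper: decompose the mutual information coordinate-wise using conditional independence, apply the one-dimensional Gaussian bound (Lemma \ref{lemma:gaussian-mutual-info-bound}) per coordinate, and then invoke the same distinguisher/Chernoff argument as in the Bernoulli Lemma \ref{lemma:replicable-lipschitz}, absorbing the constant number of repetitions into the choice of $c$. The paper itself only writes out the mutual-information step and refers to the Bernoulli proof for the rest, so you have simply spelled out what it leaves implicit; your remark that the Gaussian case drops the $[1/4,3/4]$ restriction matches the paper's own observation.
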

\begin{proof}
    Since the proof is identical to \Cref{lemma:replicable-lipschitz}, we only show the mutual information upper bound.
    We note that each sample consists of one sample each from each marginal distribution.
    Let $Y_i$ denote the samples observed from the $i$-th marginal distribution and $Y = (Y_1, \dotsc, Y_{N})$ denote the full observed samples.
    Since the $Y_i$ are independent conditioned on $X$, we may apply \Cref{lemma:gaussian-mutual-info-bound},
    \begin{equation*}
        I(X: Y) \leq \sum_{i} I(X:Y_i) = \bigO{\sum_{i} m |q_i - p_i|^2} = \bigO{m \norm{q - p}{2}^2}.
    \end{equation*}
\end{proof}

\begin{proof}[Proof of \Cref{lemma:gaussian-mutual-info-bound}]
    Since $Y = (Y_1, \dotsc, Y_{m})$ consists of \iid samples, the $Y_i$ are independent conditioned on $X$ and therefore
    \begin{equation*}
        I(X: Y) \leq m I(X: Y_1),
    \end{equation*}
    where $Y_1 \sim \gaussian(a, 1)$ if $X = 0$ and $Y_1 \sim \gaussian(b, 1)$ otherwise.
    As in \Cref{lemma:one-coin-mutual-info-bound}, we proceed by the following identity:
    \begin{equation*}
        I(X:Y_1) = H(X) + H(Y_1) - H(X, Y_1).
    \end{equation*}
    Since $X$ is a uniformly random bit, $H(X) = 1$.
    Next,
    \begin{align*}
        H(X, Y_1) &= - \sum_{b} \int_{- \infty}^{\infty} \Pr(X = b, Y_1 = y) \log \frac{1}{\Pr(X = b, Y_1 = y)} dy \\
        &= 1 + \frac{1}{2} \left( H(Y_1|X = 0) + H(Y_1|X = 1) \right).
    \end{align*}
    We compute the entropy of $Y_1$ as
    \begin{align*}
        H(Y_1) &= \int_{\infty}^{\infty} \frac{\Pr(Y_1 = y|X = 0) + \Pr(Y_1 = y| X = 1)}{2} \log \frac{1}{\Pr(Y_1 = y)} dy.
    \end{align*}
    Combining the three terms,
    \begin{align*}
        I(X:Y) &= \frac{1}{2} \left( H(Y|X = 0) + H(Y|X = 1) \right) - H(Y) \\
        &= \sum_{b} \int_{-\infty}^{\infty} \frac{\Pr(Y = y|X = b)}{2} \log \frac{\Pr(Y = y|X = b)}{\Pr(Y = j)} dy \\
        &= \frac{1}{2} \left( D(\distribution_{0} || \distribution_{1/2}) + D(\distribution_{1} || \distribution_{1/2}) \right),
    \end{align*}
    where $\distribution_{0} \sim \gaussian(a, 1), \distribution_{1} \sim \gaussian(b, 1)$ and $\distribution_{1/2}$ is the mixture of $\distribution_{0}, \distribution_{1}$ with weight $\frac{1}{2}$ each.
    Since $\distribution_{1/2} = \frac{\distribution_{0} + \distribution_{1}}{2}$, we apply the convexity of KL divergence so
    \begin{align*}
        I(X:Y_1) \leq \frac{1}{2} \left( D(\distribution_{0} || \distribution_{1}) + D(\distribution_{1} || \distribution_{0}) \right) = \bigO{(b - a)^2},
    \end{align*}
    where we used the KL divergence of $\gaussian(a, 1)$ and $\gaussian(b, 1)$ is $O((b - a)^2)$.
    Thus, $I(X: Y) = O(m(b - a)^2)$.
\end{proof}

\subsubsection{Extending Partitions to Tilings}
\label{sec:extend}

\Cref{thm:replicable-implies-isoperimetry} shows that a replicable algorithm induces an approximate partition of a bounded cube.
We now show one can extend this approximate partition to an \iat{} of $\R^{N}$ at the cost of slightly increasing the surface-area.

\begin{proposition}[From Hypercube Partition to Approximate Tiling]
    \label{prop:extend-partition-translation}
    Let $\innerAlg$ be a membership oracle for a {$(\gamma, 0.1, A)$}-isoperimetric approximate partition of $\left[0, 1\right]^{N}$.
    
    Then there is an $\innerAlg$-efficient membership oracle $\outerAlg$ for a $\left(\gamma, \frac{5 - 5 \gamma}{4 - 5 \gamma} A \right)$-\iat{} of $\R^N$.
\end{proposition}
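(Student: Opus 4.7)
The natural approach is to extend the partition of $[0,1]^{N}$ to all of $\R^{N}$ by tiling with integer translates. Given the input partition $\mathcal{P} = \{S_{v}\}$ of $[0,1]^{N}$ and its membership oracle $\innerAlg$, I would form the collection
\[
\mathcal{T} \;=\; \bigl\{\, S_{v} + u \,:\, S_{v}\in \mathcal{P},\; u \in \Z^{N} \,\bigr\},
\]
where the tile $S_{v}+u$ is assigned label $v+u \in \R^{N}$. The membership oracle $\outerAlg$ on input $x \in \R^{N}$ simply computes $u = \lfloor x \rfloor$ coordinate-wise, queries $\innerAlg(x-u)$ to obtain a label $v \in [0,1]^{N}$, and returns $v+u$. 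Each call uses a single query to $\innerAlg$ plus $O(N)$ arithmetic, so $\outerAlg$ is $\innerAlg$-efficient and inherits the same success probability as $\innerAlg$ (which can be boosted if desired).

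Next, I would verify each of the six properties of Definition~\ref{def:approximate-tiling} in order. Disjointness of interiors within a single integer cube is inherited from $\mathcal{P}$, while across cubes it follows from the fact that each tile has radius at most $0.1 < \tfrac{1}{2}$ so its closure stays inside its cube. Non-zero volume, semialgebraicity, and piecewise-smoothness of boundaries all transfer directly under translation; the bounded-radius requirement is immediate since rigid translation preserves $\ell_{2}$ distances. For the $\gamma$-approximate volume condition, every integer cube $C_{u} = u + [0,1]^{N}$ is exactly a translated copy of the base cube, so the translated partition covers at least a $(1-\gamma)$-fraction of $C_{u}$ by assumption on $\mathcal{P}$.

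The main technical step is the normalized surface-area bound. For a given integer cube $C_{u}$, the boundary $\bigcup_{v} \partial(S_{v}+u) \cap C_{u}$ is the translated image of $\bigcup_{v} \partial S_{v} \cap [0,1]^{N}$, which by the $A$-surface-area property of $\mathcal{P}$ has $(N{-}1)$-dimensional measure at most $A$. The slack factor $\frac{5-5\gamma}{4-5\gamma}$ arises when one additionally accounts for portions of $\partial C_{u}$ that were faces of some $S_{v}$ in the partition (and thus counted once in $A$) but become \emph{shared} interior surfaces between tiles of adjacent translates in $\mathcal{T}$. A Fubini-style decomposition over the coordinate faces, combined with the covering guarantee that the uncovered region has total volume at most $\gamma$ per cube, yields the stated multiplicative overhead; equivalently, one can view the factor as the ratio between the full cube mass and the covered mass, inflated by a constant to absorb face contributions.

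The main obstacle I anticipate is precisely this surface-area bookkeeping near the cube boundary: one must simultaneously (i) check that translated tiles do not introduce boundary pieces that were absent from $\partial S_{v} \cap [0,1]^{N}$ (they do not, since tiles have radius $0.1$ and so each $S_{v}$ is already bounded by $\partial S_{v}$ inside its own cube), and (ii) argue that faces of $C_{u}$ which do appear in $\bigcup_{v} \partial S_{v}$ contribute at most a $\frac{5-5\gamma}{4-5\gamma}$-fraction overhead relative to the partition's internal boundary. Once this bookkeeping is done, the remaining items—tracking constants, confirming the label map $v \mapsto v+u$ is well-defined and injective across integer translates, and verifying that the membership oracle $\outerAlg$ satisfies Definition~\ref{def:tiling-membership-oracle-random}—are routine.
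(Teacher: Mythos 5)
Your overall plan — tile $\R^{N}$ by integer translates of the base partition and reduce to the $[0,1]^{N}$ case via flooring coordinates — is the same as the paper's, and the membership-oracle construction is exactly right. But there is a gap in the set construction: you translate the sets $S_{v}$ directly, and you justify disjointness across integer cubes by asserting that ``each tile has radius at most $0.1 < \tfrac{1}{2}$ so its closure stays inside its cube.'' That claim is false. Property 5 of the definition only bounds $\norm{x-v}{2} \leq 0.1$ where $v \in \R^{N}$ is the label; $v$ can lie on or near $\partial[0,1]^{N}$, so $S_{v}$ can extend as much as $0.1$ outside the unit cube. Once that happens, $S_{v}+u_{1}$ and $S_{w}+u_{2}$ can have overlapping interiors for $u_{1}\neq u_{2}$, violating Property 1 (disjointness) of the target tiling. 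The paper sidesteps this by first truncating, setting $T_{v} = S_{v}\cap [0,1]^{N}$, and only then translating; this forces $\interior(T_{v}+w)\subset \interior(w+[0,1]^{N})$, and disjointness follows because the open unit cubes are disjoint.

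The truncation is also what makes the surface-area overhead appear and is why your item~(i) is the wrong concern. After truncation, $\partial T_{v}$ genuinely \emph{does} pick up new pieces — namely $S_{v}\cap \partial[0,1]^{N}$ — beyond $\partial S_{v}\cap[0,1]^{N}$. The paper controls these with \Cref{lemma:boundary-w/o-cube}: summing over $v$ shows
\[
\area\Bigl(\bigcup_{v} \partial(S_{v}\cap\cube)\Bigr) \leq \area(\partial\cube) + A\,\vol(\cube) = (2N+A)\,\vol(\cube).
\]
The constant $\frac{5-5\gamma}{4-5\gamma}$ is not a ``ratio of full cube mass to covered mass'' nor does it come from a Fubini decomposition over faces; it comes from converting the additive $2N$ into a multiplicative overhead using the \emph{isoperimetric inequality} applied to each $T_{v}$ (radius $\leq 0.1$), which gives $A \geq (8-10\gamma)N$, hence $2N \leq \frac{A}{4-5\gamma}$ and $2N+A \leq \frac{5-5\gamma}{4-5\gamma}A$. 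Without this lower bound on $A$ in terms of $N$ there is no way to absorb the $\area(\partial\cube)=2N$ term into a constant factor of $A$.
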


\begin{proof}
    Let $\{S_v\}$ be the given isoperimetric approximate partition of $\left[0, 1\right]^{N}$. Let $T_{v} = S_{v} \cap \cube$ where $\cube = [0, 1]^{N}$
    We define the following \iat{}.
    For each $w \in \Z^{N}$ and $v$, define
    \begin{equation*}
        S_{w, v} = w + T_{v}
    \end{equation*}
    with the label $w + v$. Note that as a countable union of countable collections, the resulting collection is still countable.
    We verify the relevant properties.
    \begin{enumerate}
        \item (Disjoint) Suppose $\interior(S_{w_1, v_1}) \cap \interior(S_{w_2, v_2})$ is non-empty and contains some point $x$.
        Then, 
        \begin{equation*}
            x - w_i \in \interior(T_{v_i}).
        \end{equation*}
        Since the interiors $w + \cube$ are disjoint for distinct choices of $w$,
        we conclude $w_1 = w_2$ and $x - w_i \in \interior(\cube)$.
        However, since the interiors of $T_{v}$ are disjoint, $v_1 = v_2$.

        \item (Non-Zero Volume) Since each $T_{v}$ has non-zero volume, so does each $S_{w, v}$.

        \item (Piecewise Smooth) Note $\cube$ is semialgebraic so that $T_{v} = S_{v} \cap \cube$ is semialgebraic and therefore has a piecewise smooth boundary.
        Finally, observe that under translation by vector $w$ the boundary remains piecewise smooth.

        \item ($\gamma$-Approximate Volume) $\gamma$-approximate volume follows immediately as each cube $\cube_{w}$ is simply a translation of $\cube$, and the sets $S_{w, v}$ cover a $(1 - \gamma)$-fraction of its volume.

        \item ($\varepsilon$-Radius) Each set $S_{w, v}$ (up to translation) is a subset of $S_{v}$ with the same label, and therefore satisfies the radius constraint.

        \item ($A$-Normalized Surface Area) To show the surface area bound, we again note that $\{\cube_{w}\}$ are simply translations of $\cube$.
        Now, for a fixed $w$, without loss of generality assume $w = \mathbf{0}$.
        Using \Cref{lemma:boundary-w/o-cube} we argue that
        \begin{equation*}
            \boundary T_{v} \setminus (S_{v} \cap \boundary \cube) = \boundary(S_{v} \cap \cube) \setminus (S_{v} \cap \boundary \cube) \subset \boundary S_{v} \cap \cube,
        \end{equation*}
        so that summing over all $v$, using that the collections are countable and disjoint,
        \begin{align*}
            \area \left( \bigcup_{v} \boundary (S_{v} \cap \cube) \right) &\leq \area \left( \bigcup_{v} S_{v} \cap \boundary \cube \right) + \area \left( \bigcup_{v} \boundary S_{v} \cap \cube \right) \\
            &\leq \area(\boundary \cube) + A \cdot \vol(\cube) \\
            &\leq (2N + A) \cdot \vol(\cube).
        \end{align*}
    \end{enumerate}
    Finally, we claim that $2N + A \leq \frac{5 - 5 \gamma}{4 - 5 \gamma} A$.
    In particular, by the isoperimetric inequality (\Cref{lemma:isoperimetric-inequality}),
    \begin{equation*}
        \area (\boundary T_{v}) \geq N \cdot (\vol(T_{v}))^{(N - 1)/N} \cdot \vol(B_1)^{1/N} \geq \boxdiameter N \vol(T_{v}).
    \end{equation*}
    Summing over all the partitions $v$ and combining with \Cref{lemma:boundary-w/o-cube},
    \begin{align*}
        A \cdot \vol(\cube) &\geq \area \left( \bigcup_{v} \boundary S_{v} \cap \cube \right) \\
        &\geq \area \left(\bigcup_{v} \boundary T_{v}\right) - \area \left( \bigcup_{v} S_{v} \cap \boundary \cube \right) \\
        &\geq \boxdiameter N (1 - \gamma) \vol(\cube) - 2 N \vol(\cube) \\
        &= (8 - 10 \gamma) N,
    \end{align*}
    so that $A \geq (8 - 10 \gamma) N$, and therefore
    \begin{equation*}
        2 N + A \leq \left( 1 + \frac{1}{4 - 5 \gamma} \right) A = \frac{5 - 5 \gamma}{4 - 5 \gamma} A. 
    \end{equation*}

    To conclude, we describe the membership oracle.
    Given any $x \in \R^{N}$, let $w \in \Z^{N}, v \in [0, 1)^{N}$ be given by
    \begin{equation*}
        x_i = w_i + v_i,
    \end{equation*}
    so that $\outerAlg$ outputs $w + \innerAlg(x')$.
    Note that this is exactly a membership oracle for the \iat{} consisting of $S_{w, v}$ described above.
    Clearly, $\outerAlg$ is efficient whenever $\innerAlg$ is.
\end{proof}

We have constructed an \iat{} from our cube partition with a constant factor loss in surface area.
Altogether, \Cref{thm:replicable-implies-isoperimetry} and \Cref{prop:extend-partition-translation} gives the construction of an \iat{} from a replicable algorithm.


\begin{proof}[Proof of \Cref{thm:replicable-non-uniform-tiling}]
    From \Cref{thm:replicable-implies-isoperimetry} there is an algorithm that with probability at least $1 - \delta$ outputs an $\innerAlg$-efficient membership oracle $\innerAlg$ for a $\left( \rho, \frac{1}{\boxdiameter}, \bigO{\rho \varepsilon \sqrt{m}} \right)$-partition $\partition$ of $[0, 1]^{N}$.
    We condition on the success of this event.
    Using \Cref{prop:extend-partition-translation}, we obtain an $\innerAlg$-efficient membership oracle for a $\left(\rho, \bigO{\frac{5 - 5 \rho}{4 - 5 \rho} \rho \varepsilon \sqrt{m}}\right)$-\iat.
    We conclude the proof by noting that $\rho < \frac{1}{2}$ by assumption.
\end{proof}

\paragraph{Tiling from Algorithms with Coordinate Samples} Finally, we note that \Cref{thm:replicable-non-uniform-tiling-formal} (tilings from replicable eman estimation) also holds in the \textit{non-adaptive coordinate sample model}.
Let $\innerAlg$ be an algorithm with coordinate sample access, drawing $m_i$ samples from the $i$-th coin so that $m = \sum_{i} m_i$ is the total sample complexity.
Note that at most $\frac{N}{2}$ coins can have sample complexity $m_i > \frac{2m}{N}$ so we may simply restrict our adversary to give a fixed bias (say $p_i = 0$) to coins with $m_i > \frac{2m}{N}$ and construct a hard instance on the remaining at least $\frac{N}{2}$ coins. 
{This shows that we can assume without loss of generality that the algorithm takes $\Theta(m/N)$ vector samples from an at least $N/2$-dimension Bernoulli distribution.
Applying \Cref{lemma:replicable-lipschitz} then gives that}
there is a constant $c$ such that for any $\norm{p - q}{2} \leq c \sqrt{\frac{N}{m}}$ and outcome $\hat{p}$,
\begin{equation*}
    |\Pr_{S_p}(\innerAlg(S_p; r) = \hat{p}) - \Pr_{S_q}(\innerAlg(S_q; r) = \hat{p})| < \frac{1}{15}.
\end{equation*}

Then, our discussion above in fact yields a $(\rho, 0.1, \rho \varepsilon \sqrt{m/N})$-partition of the $N/2$ dimensional hypercube $\left[ \frac{1}{2} - \boxwidth \varepsilon, \frac{1}{2} + \boxwidth \varepsilon\right]^{N/2}$.
We show how to extend this to a partition of the $N$-dimensional hypercube, losing only a factor of $2$ in each parameter.

In particular, given an approximate partition 
$\partition$ of an $N/2$-dimensional
hypercube  $\left[ \frac{1}{2} - \boxwidth \varepsilon, \frac{1}{2} + \boxwidth \varepsilon\right]^{N/2}$,
we can easily construct a partition $\partition'$ 
of the $N$-dimensional hypercube $\left[ \frac{1}{2} - \boxwidth \varepsilon, \frac{1}{2} + \boxwidth \varepsilon\right]^{N}$ using the direct sum operation:
$\partition' := \set{S \oplus T \mid S,T \in \partition}$.
\begin{restatable}{proposition}{productPartition}
    \label{prop:product-partition}
    Suppose $\partition$ is a $(\rho, \varepsilon, A)$-partition of $\left[ \frac{1}{2} - \boxwidth \varepsilon, \frac{1}{2} + \boxwidth \varepsilon\right]^{N/2}$.
    Then, $\partition' = \set{S \oplus T \mid S, T \in \partition}$ is a $(2 \rho, 2 \varepsilon, 2 A)$-partition of $\left[ \frac{1}{2} - \boxwidth \varepsilon, \frac{1}{2} + \boxwidth \varepsilon\right]^{N}$.
\end{restatable}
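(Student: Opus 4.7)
The plan is to verify each of the six properties of \Cref{def:approximate-cube-isoperimetry} for the product family $\mathcal{P}' = \{S \oplus T : S, T \in \mathcal{P}\}$, labeling each set $S_v \oplus T_u$ with the direct-sum label $v \oplus u$. Write $\cube_{N/2} = [\tfrac{1}{2} - 5\varepsilon, \tfrac{1}{2} + 5\varepsilon]^{N/2}$ and $\cube_N = \cube_{N/2} \oplus \cube_{N/2}$, and note that $\mathcal{P}'$ is a countable family since $\mathcal{P}$ is. Disjoint interiors, non-zero volume, and semialgebraicity are all immediate: if $\interior(S_1 \oplus T_1) \cap \interior(S_2 \oplus T_2) \neq \emptyset$ then projections force $\interior(S_1) \cap \interior(S_2) \neq \emptyset$ and $\interior(T_1) \cap \interior(T_2) \neq \emptyset$, hence $S_1 = S_2$, $T_1 = T_2$; volumes multiply; and a direct sum of semialgebraic sets is cut out by the conjunction of the two defining systems of polynomial (in)equalities on the disjoint sets of coordinates.

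For the radius bound, any $x \oplus y \in S_v \oplus T_u$ satisfies
\[
\|x \oplus y - v \oplus u\|_2 = \sqrt{\|x - v\|_2^2 + \|y - u\|_2^2} \leq \sqrt{2}\,\varepsilon \leq 2\varepsilon.
\]
For the approximate-volume property, Fubini and the given volume guarantee yield
\[
\vol\!\left(\bigcup_{S,T} (S \oplus T) \cap \cube_N\right) = \vol\!\left(\bigcup_S S \cap \cube_{N/2}\right)^2 \geq (1 - \rho)^2 \vol(\cube_{N/2})^2 \geq (1 - 2\rho)\vol(\cube_N).
\]

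The main computation is the surface-area bound. The key observation is that the boundary of a direct sum decomposes as
\[
\boundary(S \oplus T) = (\boundary S \oplus T) \cup (S \oplus \boundary T),
\]
with the two pieces meeting in a set of Hausdorff codimension $\geq 2$, so by an application of the product formula for Hausdorff measure,
\[
\area\!\left(\boundary(S \oplus T) \cap \cube_N\right) = \area(\boundary S \cap \cube_{N/2}) \cdot \vol(T \cap \cube_{N/2}) + \vol(S \cap \cube_{N/2}) \cdot \area(\boundary T \cap \cube_{N/2}).
\]
Summing over all pairs $(S,T) \in \mathcal{P} \times \mathcal{P}$ and factoring,
\[
\sum_{S,T} \area\!\left(\boundary(S \oplus T) \cap \cube_N\right) = 2 \left(\sum_S \area(\boundary S \cap \cube_{N/2})\right)\!\left(\sum_T \vol(T \cap \cube_{N/2})\right) \leq 2 \cdot A \cdot \vol(\cube_{N/2})^2 = 2A\, \vol(\cube_N),
\]
giving the desired normalized surface area of $2A$.

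The only subtle step is the boundary decomposition above: one needs to confirm that $\boundary(S \oplus T)$ really equals the claimed union (not just is contained in it) and that the overlap $\boundary S \oplus \boundary T$ has vanishing $(N-1)$-dimensional measure. Both follow from the piecewise-smoothness assumption built into \Cref{def:approximate-cube-isoperimetry} via semialgebraicity, since the overlap is a semialgebraic set of Hausdorff dimension at most $(N/2 - 1) + (N/2 - 1) = N - 2$ and thus contributes $0$ to the $(N-1)$-dimensional surface measure. Everything else is a routine bookkeeping check, so I expect this measure-theoretic decomposition of the product boundary to be the only place requiring care.
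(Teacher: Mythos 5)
Your proof is correct and takes essentially the same route as the paper's: verify each of the six defining properties, with the only non-trivial step being the surface-area bound, which both arguments handle by decomposing $\boundary(S\oplus T)$ into $(\boundary S\oplus T)\cup(S\oplus\boundary T)$, noting the overlap has Hausdorff dimension $\leq N-2$, and factoring the resulting double sum. The minor improvements in your write-up — Pythagoras gives the tighter radius bound $\sqrt{2}\,\varepsilon$ rather than the paper's triangle-inequality bound $2\varepsilon$, and you argue semialgebraicity directly from the disjointness of coordinates rather than invoking the more general Minkowski-sum lemma — are both valid and in fact cleaner, since a direct sum is genuinely a product and not merely a Minkowski sum.
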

The proof is deferred to \Cref{sub-sec:vector-to-non-adaptive}.

\subsection{Implications of the Replicability/Isoperimetry Equivalence}


We now show the implications of our equivalence theorem to replicable mean estimation and the $N$-Coin problem. First, note that for any $\rho > 0$, \cite{kindler2012spherical} give a $(\rho, \Theta(N))$-\iat{} (in fact they provide a $(0, \Theta(N))$-\iat{}).
Thus, applying \Cref{thm:tiling-to-replicable} we obtain a replicable algorithm for estimating the mean of distributions with bounded covariance.

\begin{corollary}[$\ell_2$ Mean Estimation Upper Bound]
    \label{cor:ell-2-mean-estimation}
    Let $\distribution$ be an $N$-dimensional distribution with covariance bounded from above by $I$.
    Then, there exists a $\rho$-replicable algorithm that estimates the mean of $\distribution$ up to error $\varepsilon$ in $\ell_{2}$ distance with success probability $1 - \delta$.
    Moreover, the algorithm uses $m := \bigO{\frac{(N + \log(1/\delta)) N}{\varepsilon^2 \rho^2}}$ vector samples, $1$ deterministic membership query to \cite{kindler2012spherical}'s tiling, and runs in time $\poly{mN}$.
\end{corollary}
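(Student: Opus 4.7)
The proof is essentially a direct plug-in of the tiling constructed by \cite{kindler2012spherical} into our generic isoperimetry-to-replicability reduction (\Cref{thm:tiling-to-replicable}), so the plan is short.

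First I would invoke the construction of \cite{kindler2012spherical}, which produces a $(0, O(N))$-isoperimetric tiling $\mathcal{V}$ of $\R^N$ (in particular a $(\rho,O(N))$-IAT for any $\rho \in (0,1)$), together with a deterministic membership oracle $\mathcal{R}$ for $\mathcal{V}$. Strictly speaking the oracle $\mathcal{R}$ runs in time $2^{\mathrm{poly}(N)}$, but since we only charge the algorithm a single query to it, this cost is absorbed into the ``$1$ deterministic membership query'' term in the statement rather than into the ``$\poly(mN)$ runtime'' term.

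Next I would feed $\mathcal{V}$ and its oracle $\mathcal{R}$ into \Cref{thm:tiling-to-replicable} with surface-area parameter $A = \Theta(N)$, replicability parameter $\rho$, and error parameter $\varepsilon$. The theorem then yields an $O(\rho)$-replicable algorithm (which can be rescaled to exactly $\rho$-replicable by adjusting constants) that, on any distribution $\mathcal D$ with covariance bounded above by $I$, outputs an estimate $\hat\mu$ with $\snorm{2}{\hat\mu - \Ep[\mathcal D]} \leq \varepsilon$ with probability at least $1-\delta$. Substituting $A = \Theta(N)$ into the stated vector sample complexity
\[
\bigO{(N+\log(1/\delta))\, A^2\, N^{-1}\, \varepsilon^{-2}\, \rho^{-2}}
\]
gives exactly $m = \bigO{(N+\log(1/\delta))\, N\, \varepsilon^{-2}\, \rho^{-2}}$, which matches the corollary.

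Finally, to verify the computational claim: \Cref{thm:tiling-to-replicable} establishes that, apart from its single membership query to $\mathcal{R}$, the entire procedure runs in $\poly(mN)$ time (it consists of the warm-up \rCoordinateRound{} subroutine from \Cref{lem:warmup-round}, the sub-Gaussian mean estimator of \Cref{thm:sub-gaussian-mean}, and the rounding transformation of \Cref{alg:r-rounding}, each of which is efficient). Since no step other than the tiling lookup itself requires more than polynomial time, the overall runtime bound follows. There is no substantive technical obstacle here — the work has already been done in \Cref{thm:tiling-to-replicable}; the only non-routine observation is that the exponential cost of \cite{kindler2012spherical}'s membership oracle is precisely what is being quarantined in the ``1 deterministic membership query'' accounting, which is why the corollary gives an $A=\Theta(N)$-optimal sample bound but not (yet) an unconditionally efficient algorithm.
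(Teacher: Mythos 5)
Your proposal is correct and matches the paper's own (very terse) argument: instantiate \Cref{thm:tiling-to-replicable} with the $(0,\Theta(N))$-isoperimetric tiling of \cite{kindler2012spherical}, treat the tiling's (exponential-time, but boostable-to-deterministic) membership oracle as a single black-box query as the theorem already does, and substitute $A=\Theta(N)$ into the stated sample bound to get $m = O\bigl((N+\log(1/\delta))\,N\,\varepsilon^{-2}\rho^{-2}\bigr)$. Your additional remark that the $O(\rho)$-replicability of \Cref{thm:tiling-to-replicable} can be rescaled to exactly $\rho$ by constant adjustment, and that all non-oracle steps (\rCoordinateRound, the estimator of \Cref{thm:sub-gaussian-mean}, and \Cref{alg:r-rounding}) run in $\poly(mN)$ time, is exactly the accounting the corollary is performing.
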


Unfortunately, the membership oracle of \cite{kindler2012spherical}'s tiling requires exponential time, so the above algorithm is not computationally efficient. Using the best known lattice construction with an efficient decoding \cite{micciancio2004almost}, we also obtain the best known sample complexity for an \textit{efficient} non-adaptive algorithm.

\begin{corollary}[Efficient $\ell_{2}$ Mean Estimation]
\label{cor:sub-cubic}
    Let $\distribution$ be an $N$-dimensional distribution with covariance bounded from above by $I$. Then, there exists a $\rho$-replicable algorithm that estimates the mean of $\distribution$ up to error $\varepsilon$ in $\ell_{2}$ distance with success probability $1 - \delta$.
    Moreover, the algorithm uses $m := 
    \bigO{\frac{(N + \log(1/\delta)) N^2}{\varepsilon^2 \rho^2}\cdot \frac{\log\log(N)}{\log(N)}}$ vector samples and runs in time $\poly{mN}$.
\end{corollary}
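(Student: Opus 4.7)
The plan is to instantiate Theorem~\ref{thm:tiling-to-replicable} with Micciancio's lattice tiling from \cite{micciancio2004almost} and read off the resulting sample complexity. Micciancio's construction yields, in polynomial time, a full-rank lattice $\mathcal{L}\subset\R^N$ together with a polynomial-time Closest Vector Problem decoder, whose Voronoi cells have covering-to-packing radius ratio $\mu/r = O \lp( \sqrt{N\log\log N/\log N} \rp)$ -- a $\sqrt{\log N/\log\log N}$-factor improvement over the worst-case $\sqrt{N}$ ratio achieved by cubes.

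First I would rescale $\mathcal{L}$ so that its covering radius is $\mu = 0.1$, and then verify that the resulting Voronoi partition forms a $(0,A)$-\iat{} in the sense of Definition~\ref{def:approximate-tiling} (in the form relaxed by Remark~\ref{rmk:huge-cube-relax}, which only requires the normalized surface-area bound on cubes of side $\Omega(N)$, a condition any periodic tiling of polynomial fundamental volume satisfies trivially). Disjointness, non-zero volume, piecewise smoothness, $0.1$-bounded radius, and exact coverage ($\gamma=0$) follow immediately from the Voronoi construction; the membership oracle is precisely Micciancio's CVP decoder, which gives the ``efficient'' part. The quantitative heart of the argument is to bound the normalized surface area $A$, which for a periodic tiling equals $\mathrm{Surf}(V)/(2\,\mathrm{Vol}(V))$ for the fundamental Voronoi cell $V$. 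Since $V$ is a convex body containing a ball of radius $r$ centered at its lattice point, the standard inequality $\mathrm{Surf}(V)\leq N\,\mathrm{Vol}(V)/r$ (which follows from $\mathrm{Vol}(V) = \frac{1}{N}\int_{\partial V}\langle x,n(x)\rangle\,dx \geq \frac{r}{N}\mathrm{Surf}(V)$) combined with Micciancio's bound $r = \Omega(\sqrt{\log N/(N\log\log N)})$ yields $A = O \lp( N^{3/2}\sqrt{\log\log N/\log N} \rp)$.

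Finally, substituting this value of $A$ into Theorem~\ref{thm:tiling-to-replicable} gives sample complexity
\[
O \lp( \frac{(N+\log(1/\delta))\,A^2}{N\,\varepsilon^2\rho^2} \rp)
= O \lp( \frac{(N+\log(1/\delta))\,N^2}{\varepsilon^2\rho^2}\cdot\frac{\log\log N}{\log N} \rp),
\]
with a $\poly{N,m}$ runtime since the algorithm of Theorem~\ref{thm:tiling-to-replicable} issues a single membership query and Micciancio's CVP decoder runs in polynomial time. The main obstacle is the surface-area translation: Micciancio's guarantee is naturally stated as a covering/packing ratio rather than as a surface-to-volume bound, so we must pass through the convexity-based inequality $\mathrm{Surf}\leq N\,\mathrm{Vol}/r_{in}$ in order to avoid the exponential loss that would result from naively bounding facet area via the surface area of an enclosing ball.
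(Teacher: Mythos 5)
Your proposal is correct and follows essentially the same route as the paper's (very terse) proof: instantiate Theorem~\ref{thm:tiling-to-replicable} with Micciancio's polynomial-time lattice, bound the Voronoi cells' normalized surface area via the convex inradius inequality (the paper invokes Lemma~\ref{lem:convex-surface-to-volume}, you rederive it from the divergence theorem), and read off $A = O(N^{3/2}\sqrt{\log\log N/\log N})$, which substituted into the theorem gives the claimed sample complexity. The extra details you supply — verification of the IAT conditions via Remark~\ref{rmk:huge-cube-relax}, the factor-of-two sharpening from facet-sharing, and the explicit identification of the CVP decoder as the membership oracle — are all sound and consistent with the paper's argument (and indeed your computed surface-area parameter corrects what appears to be a typo in the paper's own one-line proof, where the $N$ factor was dropped from $A$).
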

We defer the proof of \Cref{cor:sub-cubic} to the next section in which we discuss lattice-based tilings.

Whenever $N \gg \log(1/\delta)$, the sample complexity upper bound of \Cref{cor:ell-2-mean-estimation}
is tight. 
In particular, applying the isoperimetric inequality leads to the following matching lower bound even for learning the biases of independent coins. 

\begin{theorem}[$\ell_2$ Learning $N$-coin Lower Bound]
    \label{thm:replicable-alg-partition-lb}
    Let $\delta \leq \rho < \frac{1}{16}$ and $\varepsilon < \frac{1}{\boxdiameter}$.
    Any $\rho$-replicable algorithm that succeeds in learning the mean of $N$ independent Bernouli random variables up to $\eps$-accuracy in $\ell_2$ distance with probability $1 - \delta$
    requires $\bigOm{\frac{N^2}{\rho^2 \varepsilon^2}}$ vector samples (or $\bigOm{\frac{N^3}{\rho^2 \varepsilon^2}}$ coordinate samples).
\end{theorem}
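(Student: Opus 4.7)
The plan is to invoke the forward direction of the equivalence theorem to convert any hypothetical algorithm into an isoperimetric approximate partition of low surface area, and then derive the lower bound directly from the isoperimetric inequality.

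Concretely, suppose $\mathcal{A}$ is a $\rho$-replicable algorithm using $m$ vector samples that learns the mean of $N$ independent Bernoullis up to $\ell_2$-error $\varepsilon$ with probability $1-\delta$, where $\delta \leq \rho$. Following the analysis of \Cref{thm:replicable-implies-isoperimetry}, specifically the minimax-style argument fixing a ``good'' random string $r$ (in the sense of \Cref{def:good-random-string}) and the construction of the thickened canonical sets $G_{\hat p, \ell^*}$ of \Cref{def:g-p-hat}, $\mathcal{A}$ induces -- for at least a constant fraction of random strings -- a $(O(\rho), \tfrac{1}{10}, O(\rho \varepsilon \sqrt{m}))$-approximate partition $\mathcal{P}$ of the unit cube $[0,1]^N$. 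Note that unlike the algorithmic direction, we only need the \emph{existence} of this partition, not an efficient membership oracle; hence the stronger failure requirement $\delta^2/\log(1/\delta)$ appearing in \Cref{thm:replicable-implies-isoperimetry} (which is used only to find a good $r$ efficiently) can be relaxed to $\delta \leq \rho$, as that is all that is actually needed in \Cref{lemma:learn-g-p-l-properties} to guarantee the partition's coverage, radius, and surface-area bounds.

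Next I would apply the second corollary of the isoperimetric inequality (\Cref{lemma:isoperimetric-inequality}): since each cell of $\mathcal{P}$ has covering radius at most $\tfrac{1}{10}$, its surface area is at least $10 N$ times its volume. Summing over all cells and using that $\mathcal{P}$ covers at least a $(1-O(\rho)) \geq \tfrac{1}{2}$ fraction of $[0,1]^N$, the total surface area inside the unit cube is $\Omega(N)$. Comparing this with the upper bound $O(\rho\varepsilon\sqrt{m})$ on the partition's surface area forces
\[
\rho \, \varepsilon \, \sqrt{m} \;\geq\; \Omega(N), \qquad \text{i.e.,} \qquad m \;\geq\; \Omega\!\left( \tfrac{N^2}{\rho^2 \varepsilon^2} \right).
\]

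For the coordinate-sample variant, I would invoke the non-adaptive coordinate-sample version of the reduction sketched at the end of \Cref{sec:isoperimetry-from-replicability}. Since an algorithm using $k$ total coordinate samples must use at most $2k/N$ samples on each of at least $N/2$ coordinates, the adversary can restrict to those coordinates and, applying \Cref{lemma:replicable-lipschitz} in dimension $N/2$ with $\Theta(k/N)$ effective vector samples, obtain the Lipschitz scale $\Omega(\sqrt{N/k})$ (rather than $\Omega(1/\sqrt{m})$). Using \Cref{prop:product-partition} to lift the resulting $(N/2)$-dimensional partition back to the $N$-dimensional cube (at only constant loss in all parameters), the same isoperimetric comparison then gives $\rho \varepsilon \sqrt{k/N} \geq \Omega(N)$, i.e., $k \geq \Omega(N^3/(\rho^2 \varepsilon^2))$.

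The only real obstacle is bookkeeping: matching the parameters of the hypothesized algorithm ($\rho, \varepsilon, \delta$) with those appearing in \Cref{thm:replicable-implies-isoperimetry} ($\rho/24$, $\varepsilon/4$, $\delta^2/\log(1/\delta)$) requires some care with constants and with weakening the statement of that theorem as noted above, but none of these adjustments affect the asymptotic bound. The conceptual content is entirely in the equivalence theorem and the isoperimetric inequality.
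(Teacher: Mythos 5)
Your proposal is correct and takes essentially the same route as the paper: invoke the forward equivalence (\Cref{thm:replicable-implies-isoperimetry}) to convert the algorithm into a $\left(\rho, \tfrac{1}{\boxdiameter}, O(\rho\varepsilon\sqrt{m})\right)$-approximate partition of the cube, then apply the isoperimetric inequality to the bounded-radius cells and compare to the surface-area parameter. Your observation that the $\delta^2/\log(1/\delta)$ requirement in \Cref{thm:replicable-implies-isoperimetry} is only needed for \emph{efficiently generating} a good random string, and that mere existence under $\delta\leq\rho$ suffices here, is also exactly right and is implicitly used in the paper.

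One step you elide that the paper handles explicitly: the isoperimetric corollary lower-bounds $\area\bigl(\partial(G_{\hat p,\ell^*}\cap\cube)\bigr)$, but the partition's surface-area parameter (Property 6 of \Cref{def:approximate-cube-isoperimetry}) controls $\area\bigl(\partial G_{\hat p,\ell^*}\cap\cube\bigr)$, which excludes the portion of $\partial\cube$ covered by the cells. The paper bridges this via \Cref{lemma:boundary-w/o-cube} and subtracts $\area(\partial\cube)=2N\vol(\cube)$. This is harmless precisely because the radius bound $\tfrac{1}{\boxdiameter}$ with $\boxdiameter=10$ makes the isoperimetric lower bound $10N\cdot\Omega(1)\cdot\vol(\cube)$ exceed $2N\vol(\cube)$ by a constant factor, but it is not automatic and should appear in a complete write-up. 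The coordinate-sample branch via restriction to well-sampled coordinates and \Cref{prop:product-partition} matches the paper's discussion at the end of the section.
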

\begin{proof}
    From \Cref{thm:replicable-implies-isoperimetry}, we obtain a partition $G_{\hat{p}, \ell^*}$.
    For each $\hat{p}$, observe that $G_{\hat{p}, \ell^*}$ has radius $\frac{1}{\boxdiameter}$ so that $\vol(G_{\hat{p}, \ell^*}) \leq \vol(B_{1/\boxdiameter})$.
    Then, the isoperimetric inequality (\Cref{lemma:isoperimetric-inequality}) implies that,
    \begin{align*}
        \area(\boundary (G_{\hat{p}, \ell^*} \cap \cube)) &\geq N \vol(G_{\hat{p}, \ell^*} \cap \cube)^{(N - 1)/N} \vol(B_{1})^{1/N} \\
        &= N \vol(G_{\hat{p}, \ell^*})^{(N - 1)/N} \left( \boxdiameter^{N} \vol(B_{1/\boxdiameter}) \right)^{1/N} \\
        &\geq \boxdiameter N \vol(G_{\hat{p}, \ell^*} \cap \cube).
    \end{align*}

    The following lemma allows us to relate the perimeters of $\boundary(F_{\hat{p}} \cap \cube)$ and $\boundary F_{\hat{p}} \cap \cube$.
    
    \begin{lemma}
        \label{lemma:boundary-w/o-cube}
        For all $S \subset \R^{N}$, 
        \begin{equation*}
            \boundary(S \cap \cube) \setminus (S \cap \boundary \cube) \subset \boundary S \cap \cube.
        \end{equation*}
    \end{lemma}
    
    \begin{proof}
        Let $x \in \boundary(S \cap \cube) \setminus (S \cap \boundary \cube)$. 
        First, we claim $x \in \cube$ since $\cube$ is closed. 
        Otherwise, there exists a neighborhood around $x$ disjoint from $\cube$ so this same neighborhood must be disjoint from $S \cap \cube$ and therefore $x$ cannot be in $\boundary(S \cap \cube)$.
        By a similar argument, we can argue $x \in \closure(S)$.
        
        Now, suppose $x \not\in \boundary S$ or equivalently $x \in \interior(S) \subset S$. 
        We know $x \not\in S \cap \boundary \cube$ and therefore $x \not\in \boundary \cube$.
        Thus, we may assume $x \in \interior(\cube) \cap \interior(S) = \interior(S \cap \cube)$ or equivalently $x \not\in \boundary(S \cap \cube)$.
        Thus, $x \in \boundary S \cap \cube$, proving the claim.
    \end{proof}

    Now, we apply Lemma \ref{lemma:boundary-w/o-cube} and use disjointness to observe
    \begin{align*}
        \sum_{\hat{p}} \area(\boundary G_{\hat{p}, \ell^*} \cap \cube) &\geq \sum_{\hat{p}} \area(\boundary (G_{\hat{p}, \ell^*} \cap \cube)) - \area( G_{\hat{p}, \ell^*} \cap \boundary \cube) \\
        &\geq \boxdiameter N \sum_{\hat{p}} \vol(G_{\hat{p}, \ell^*} \cap \cube) - \area(\boundary \cube) \\
        &\geq 3 N \vol(\cube) - 2 N \vol(\cube) \\
        &= N \vol(\cube),
    \end{align*}
    where in the third inequality we used Property \ref{item:approximate-cube-partition:approx-volume}, $\delta \leq \rho < \frac{1}{16}$, and that the surface area of a unit hypercube is $2N$ times its volume.
    Finally, since the surface area of the partition is at most $\bigO{\rho \varepsilon \sqrt{m} \vol(\cube)}$, this gives the desired sample complexity lower bound.
    This concludes the proof of \Cref{thm:replicable-alg-partition-lb}.
\end{proof}

Using slightly more involved techniques, we can also remove the assumption $\delta \leq \rho$. The proof is deferred to \Cref{app:replicable-l-2-lb}.
\begin{restatable}{theorem}{lTwoLearnNCoinLB}
    \label{thm:l-2-learning-n-coin-lb}
    Let $\rho, \delta < \frac{1}{16}$, and $\varepsilon < \frac{1}{17}$.
    Any non-adaptive $\rho$-replicable algorithm $\innerAlg$ solving the $\ell_2$ Learning $N$-Coin Problem requires at least $\bigOm{\frac{N^3}{\rho^2 \varepsilon^2}}$ coordinate samples.
\end{restatable}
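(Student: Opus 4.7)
The goal is to extend \Cref{thm:replicable-alg-partition-lb} by dropping the assumption $\delta \leq \rho$, retaining only $\delta, \rho < \frac{1}{16}$. When $\delta \leq \rho$, \Cref{thm:replicable-alg-partition-lb} applies verbatim and gives the desired $\bigOm{N^3/(\rho^2\varepsilon^2)}$ coordinate-sample bound, so the actual content of the proof is to handle the regime $\delta > \rho$. The plan is to reduce this regime to the previous one via a replicability-preserving boosting procedure that transforms the given $\rho$-replicable algorithm $\innerAlg$ with failure probability $\delta < \frac{1}{16}$ into a new $O(\rho)$-replicable algorithm $\innerAlg'$ with failure probability at most $\rho$, incurring only $O(1)$ overhead in coordinate-sample complexity. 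Applying \Cref{thm:replicable-alg-partition-lb} to $\innerAlg'$ then concludes the proof.

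The boosting will run $\innerAlg$ a constant number of times, sharing the same random string $r$ but drawing independent sample sets, and aggregate the outputs by majority vote. The correctness analysis leans on the observation that replicability forces the per-$r$ output distribution to be heavily concentrated on a canonical value: the identity $\sum_{\hat p}\Pr_S(\innerAlg(S;r)=\hat p)^2 = 1 - \rho_{\text{local}}(r)$ yields $\max_{\hat p}\Pr_S(\innerAlg(S;r) = \hat p) \geq 1 - \rho_{\text{local}}(r)$, and Markov on $\rho_{\text{local}}$ will ensure that for a constant fraction of $r$'s the canonical $\hat p^\ast(r)$ appears with probability at least $1 - O(\rho)$ in a single run. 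For such $r$'s a constant number of reruns suffices for the majority vote to agree with $\hat p^\ast(r)$ with probability $\geq 1 - \rho$ by Chernoff, and correctness of $\innerAlg$ at level $\delta < \frac{1}{16}$ then transfers to $\innerAlg'$ by an averaging argument over $r$ after discarding a small measure of $r$'s on which $\hat p^\ast(r)$ is itself far from the true mean.

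The main obstacle will be driving the overhead of the boosting below the $\Theta(\log(1/\rho))$ loss one would incur by a naive Chernoff-based amplification from failure $\delta$ to failure $\rho$. Markov alone only guarantees mode probability $1 - O(\rho)$ on a \emph{constant} fraction of $r$'s, not on all but an $O(\rho)$-fraction of them, which is the true bottleneck. If the constant-overhead boosting proves insufficient, the backup plan is to work with the canonical partition $\{F_{\hat p}(r)\}$ from the proof of \Cref{thm:replicable-implies-isoperimetry} directly and refine the boundary-thickening bound: decompose $\partial G_{\hat p, \ell}$ into an ``interior'' portion sitting inside $\partial F_{\hat p} + B_\ell$ (whose thickening volume over $\ell \in [0, R]$ is bounded by $O(\rho\,\vol(\cube))$ via \Cref{lemma:replicable-lipschitz}) and an ``exterior'' portion near $\partial B_\varepsilon(\hat p)$ (whose thickening is bounded by $O(\delta\,\vol(\cube))$ via correctness), and apply the isoperimetric inequality only against the interior contribution after subtracting off the exterior. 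This separation would kill the $\delta$-dependence at the cost of a more intricate geometric argument, and is the most plausible route to the claimed $\bigOm{N^3/(\rho^2\varepsilon^2)}$ bound without any log factor.
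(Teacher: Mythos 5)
Your primary plan does not work, and the obstruction is one of principle rather than of overhead. Fix a bias vector $p$; the mode $\hat p^\ast(r)$ of $\innerAlg(\cdot\,; r)$ is a deterministic function of $r$, and majority voting over reruns with a shared random string can only concentrate the boosted output on $\hat p^\ast(r)$. If $\hat p^\ast(r)$ is far from $p$, additional reruns cannot help. A Markov argument on the per-$r$ correctness probability, combined with the high-mode-probability observation you make, shows only that the fraction of random strings on which $\hat p^\ast(r)$ is incorrect is $O(\delta)$ — not $O(\rho)$. With $\delta$ allowed up to $\frac{1}{16} > \rho$, the boosted algorithm retains failure probability $\Theta(\delta)$, so the hypothesis of \Cref{thm:replicable-alg-partition-lb} remains unmet. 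No amount of repetition, logarithmic or otherwise, fixes this, because the error lives in the choice of $r$, which repetition preserves.

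Your backup plan has the right spirit — track the $O(\delta)$ error mass separately from the $O(\rho)$ non-replicability mass — but the surface-subtraction lands in the wrong place quantitatively. The isoperimetric inequality lower-bounds the \emph{total} boundary area of each thickened cell; after subtracting the exterior contribution, integrating over $\ell\in[0,R]$, and comparing with the interior thickening, you obtain an inequality of the shape $\frac{NR}{\varepsilon} \lesssim \rho + \delta$, which for $\delta > \rho$ gives $m = \Omega(N^3/\varepsilon^2)$ with no $\rho^{-2}$ factor. The paper (\Cref{lemma:learn-f-p-l-properties}, \Cref{lemma:learn-f-p-l-total-area}) avoids this by dropping the intersection with $B_\varepsilon(\hat p)$ entirely and working with $F_{\hat p,\ell} = F_{\hat p} + B_\ell$ directly. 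Correctness is then encoded as a \emph{volume} constraint, $\sum_{\hat p}\vol\bigl((F_{\hat p,\ell}\cap\cube)\setminus B_\varepsilon(\hat p)\bigr) \le 5\delta\,\vol(\cube)$, and — crucially — the argument establishes a lower bound on the \emph{number} $X$ of non-empty cells, $X\ge (1-8\rho-5\delta)\vol(\cube)/(\varepsilon^N\vol(B_1))$. Setting $r_{\hat p}$ to be the radius of a ball of the same volume as $F_{\hat p}\cap\cube$ and applying the isoperimetric inequality per cell, one minimizes $\sum r_{\hat p}^{N-1}$ subject to upper and lower bounds on $\sum r_{\hat p}^N$ via a KKT argument (\Cref{lemma:kkt-minimize}); the minimizer equalizes all $r_{\hat p}$, and the excess volume budget $O(\delta\,\vol(\cube))$ is then spread across $X$ cells, contributing only $O(\delta)\varepsilon^N$ per cell, so that $r_0 \le 2\varepsilon$. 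The $\delta$-dependence is thus absorbed into a constant, whereas in your subtraction it remains additive. This spreading-over-many-cells step, backed by the counting lemma, is the key missing idea.
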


By plugging in \Cref{lemma:gaussian-lipschitz} instead of \Cref{lemma:replicable-lipschitz} in the proof of \Cref{thm:replicable-implies-isoperimetry}, an identical argument gives a near-tight lower bound for Gaussian mean estimation, resolving \cite[Open Question 4]{DBLP:conf/stoc/BunGHILPSS23}.
\begin{corollary}[$\ell_2$ Gaussian Mean Estimation Lower Bound]
    \label{cor:gaussian-partition-lb}
    Let $\delta \leq \rho < \frac{1}{16}$.
    Any $\rho$-replicable algorithm estimating the mean of an $N$-variate Gaussian with covariance $I$ up to error $\varepsilon$ requires at least $\bigOm{\frac{N^2}{\rho^2 \varepsilon^2}}$ vector samples.
\end{corollary}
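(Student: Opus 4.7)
The plan is to follow the proof of \Cref{thm:replicable-alg-partition-lb} essentially verbatim, substituting the Bernoulli-specific ingredients with their Gaussian analogues that are already established in the paper. Specifically, the argument for the Bernoulli case factored through two pieces: (i) the reduction \Cref{thm:replicable-implies-isoperimetry} turning a replicable mean estimator into an efficient $(\rho, 1/\boxdiameter, O(\rho \varepsilon \sqrt{m}))$-partition of a hypercube, and (ii) the isoperimetric inequality applied to that partition. Step (ii) is purely geometric and uses no properties of the underlying distribution family, so it can be reused without modification.

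First, I would revisit the proof of \Cref{thm:replicable-implies-isoperimetry} and identify the unique place where Bernoulli structure enters, namely \Cref{lemma:replicable-lipschitz}. That lemma is invoked to guarantee that $\innerAlg(\cdot; r)$ produces close output distributions on nearby means, and its proof relies on \Cref{lemma:one-coin-mutual-info-bound}. For Gaussians with identity covariance, the paper has already provided \Cref{lemma:gaussian-lipschitz} (built from \Cref{lemma:gaussian-mutual-info-bound}) giving exactly the same conclusion: there exists an absolute constant $c$ such that whenever $\snorm{2}{p-q} \leq c/\sqrt{m}$, the output distributions of $\innerAlg$ on $m$ \iid samples from $\gaussian(p,I)$ and $\gaussian(q,I)$ differ by at most $1/15$ in total variation per outcome. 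Crucially, unlike the Bernoulli case, this Lipschitz bound is not restricted to biases in $[1/4,3/4]^N$, so the restriction $\varepsilon < 1/\unscaledboxdiameter$ is not needed.

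With \Cref{lemma:gaussian-lipschitz} replacing \Cref{lemma:replicable-lipschitz}, the remainder of the construction in \Cref{thm:replicable-implies-isoperimetry} carries through unchanged: fix a good random string $r$ (most strings are good by a Markov argument identical to \Cref{lemma:non-adapt-learning-n-coin-good} applied to the uniform distribution of Gaussian means over the cube $\cube = [\tfrac12 - \unscaledboxwidth\varepsilon, \tfrac12 + \unscaledboxwidth\varepsilon]^N$), define the canonical sets $F_{\hat p}$ as sublevel sets of the acceptance probability, intersect with $\varepsilon$-balls, choose a thickening radius $\ell^* \leq R = c/\sqrt{m}$ whose boundary surface area is nearly infimal, and use the Gaussian Lipschitz bound to argue every point of $\partial G_{\hat p, \ell}$ is either incorrect or non-replicable. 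After rescaling we obtain a $(\rho, 1/\boxdiameter, O(\rho \varepsilon \sqrt{m}))$-approximate partition of $[0,1]^N$.

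Then step (ii) is applied as in \Cref{thm:replicable-alg-partition-lb}: each set in the partition has bounded radius, so the isoperimetric inequality (\Cref{lemma:isoperimetric-inequality}) gives $\area(\partial(G_{\hat p,\ell^*} \cap \cube)) \geq \boxdiameter N \vol(G_{\hat p,\ell^*} \cap \cube)$, summing over $\hat p$ and subtracting the cube boundary contribution of $O(N)\vol(\cube)$ yields $\sum_{\hat p} \area(\partial G_{\hat p,\ell^*} \cap \cube) \geq \Omega(N \vol(\cube))$, which compared against the $O(\rho \varepsilon \sqrt{m} \vol(\cube))$ upper bound forces $m = \Omega(N^2/(\rho^2 \varepsilon^2))$. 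The main (and really the only) obstacle is a bookkeeping check that the Gaussian mutual information bound of \Cref{lemma:gaussian-mutual-info-bound} is strong enough to yield the same Lipschitz constant regime as the Bernoulli case; since the KL divergence between $\gaussian(a,1)$ and $\gaussian(b,1)$ is $O((b-a)^2)$ with no small-bias blow-up, this actually makes the Gaussian reduction cleaner, and no further modification is needed.
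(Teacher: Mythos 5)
Your proposal is correct and follows the same route as the paper: the paper explicitly says that Corollary \ref{cor:gaussian-partition-lb} is obtained by swapping \Cref{lemma:replicable-lipschitz} for \Cref{lemma:gaussian-lipschitz} inside the proof of \Cref{thm:replicable-implies-isoperimetry} and then running the isoperimetric-inequality calculation of \Cref{thm:replicable-alg-partition-lb} unchanged, which is exactly what you describe. Your observation that the $\varepsilon < 1/\unscaledboxdiameter$ restriction can be dropped for Gaussians (since the mutual information bound has no small-bias denominator) is also the same remark the paper makes immediately preceding the corollary.

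One subtlety you inherit from the paper rather than resolve: the hypercube-partition definition demands semialgebraic cells, and in the Bernoulli case this comes for free because the acceptance probability $h_{\hat p}(p)$ is a polynomial. For Gaussians $h_{\hat p}$ is merely real-analytic, so the cells $F_{\hat p}$ are sub-analytic rather than semialgebraic. This is harmless for the lower bound (the isoperimetric inequality only needs a piecewise-smooth or rectifiable boundary, which sub-analytic sets supply), but strictly speaking you should either relax Property~(\ref{item:approximate-cube-partition:semialgebraic}) in \Cref{def:approximate-cube-isoperimetry} to ``piecewise smooth'' for this application, or note that \Cref{thm:replicable-alg-partition-lb} never actually uses semialgebraicity beyond its implication of piecewise smoothness. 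The paper glosses over this as well, so it is not a gap in your argument relative to the paper's, but it is worth flagging in a careful write-up.
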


\subsubsection{Implications for the \texorpdfstring{$N$}{N}-Coin Problem}

So far we have focused mostly on the case of $\ell_2$-mean estimation and vector samples. We now turn our attention to the testing (equivalently, $\ell_\infty$) counterpart to these questions --- the $N$-coin problem and (non-adaptive) coordinate samples.


First, we show that our techniques immediately yield a matching upper bound.
We combine the $(0, \Theta(N))$-\iat{} of \cite{kindler2012spherical} with \Cref{thm:ell-infty-mean}, and note that any algorithm learning means with $\ell_{\infty}$ error solves the $N$-Coin problem. 

\begin{corollary}[Upper bound on the $N$-coin problem]
    \label{cor:spherical-foam-l-inf-mean}
    There is a $\rho$-replicable algorithm for the $N$-Coin problem with
    {$\bigO{\frac{N}{(q_0 - p_0)^2 \rho^2} \log^3 \frac{N}{\delta}}$ vector samples
    (or $\bigO{\frac{N^2}{(q_0 - p_0)^2 \rho^2} \log^3 \frac{N}{\delta}}$ coordinate samples).}
\end{corollary}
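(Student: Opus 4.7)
The plan is to apply the $\ell_\infty$ mean estimation result of \Cref{thm:ell-infty-mean} with the optimal isoperimetric tiling of \cite{kindler2012spherical} as the underlying \iat, then reduce the $N$-Coin Problem to $\ell_\infty$-mean estimation.

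First, I would recall that \cite{kindler2012spherical} (``spherical foams'') provide a $(0, O(N))$-\iat\ of $\R^N$ with surface-area to volume ratio $A = \Theta(N)$. This is information-theoretically optimal by the isoperimetric inequality (\Cref{lemma:isoperimetric-inequality}): any tiling with bounded-radius sets must have $A \geq \Omega(N)$. Even though the membership oracle of this tiling is not computationally efficient, for the sample-complexity statement in question, oracle access suffices.

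Next, I would instantiate \Cref{thm:ell-infty-mean} on this tiling to obtain a $\rho$-replicable algorithm that estimates the mean of any distribution with covariance bounded by $I$ up to $\ell_\infty$-error $\gamma$, using
\[
m = \Theta\!\left( A^2 N^{-1} \gamma^{-2} \rho^{-2} \log^3(N/\delta) \right) = \Theta\!\left( N \gamma^{-2} \rho^{-2} \log^3(N/\delta) \right)
\]
vector samples. The $N$-Coin distribution is a product (or more generally, bounded-covariance) Bernoulli distribution, whose covariance is dominated by $I$, so the hypotheses of \Cref{thm:ell-infty-mean} are satisfied. Setting $\gamma = (q_0 - p_0)/2$, the resulting estimate $\hat{p}$ satisfies $|\hat{p}_i - p_i| < (q_0 - p_0)/2$ for every coordinate $i$ with probability at least $1 - \delta$. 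Thresholding coordinate-wise at $(p_0 + q_0)/2$ then correctly classifies every coin as $p_i \leq p_0$ or $p_i \geq q_0$, yielding an $\rho$-replicable solution to the $N$-Coin Problem with the claimed vector sample complexity.

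Finally, for the coordinate sample bound, I would note that any algorithm taking $m$ vector samples can be simulated using $Nm$ coordinate samples by drawing one sample from each marginal per vector sample. This gives the $O(N^2 (q_0 - p_0)^{-2} \rho^{-2} \log^3(N/\delta))$ coordinate sample bound. There is no real obstacle here: the corollary is a straightforward composition of the tiling construction of \cite{kindler2012spherical}, the $\ell_\infty$ rounding algorithm of \Cref{thm:ell-infty-mean}, and the trivial reduction from the $N$-Coin Problem to $\ell_\infty$-learning of Bernoulli biases. The only subtlety worth checking carefully is that the $\gamma$ parameter enters the sample complexity of \Cref{thm:ell-infty-mean} quadratically (which it does), giving the desired $(q_0 - p_0)^{-2}$ dependence.
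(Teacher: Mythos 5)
Your proposal is correct and takes essentially the same route as the paper: instantiate \Cref{thm:ell-infty-mean} with the $(0,\Theta(N))$-\iat\ of \cite{kindler2012spherical}, set $\gamma = (q_0-p_0)/2$, threshold coordinatewise, and multiply by $N$ for the coordinate bound. The paper's own proof is a one-line remark combining exactly these ingredients.
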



While it is possible to combine our lower bound for $\ell_{2}$-mean estimation (\Cref{lemma:l-2-lb-implies-l-c-lb}) with Holder's inquality (\Cref{lemma:l-inf-tester-implies-learner}) to prove a lower bound on the coin problem, this only works in the regime where $\varepsilon \ll \frac{1}{\sqrt{N}}$.

Instead, we use an $\ell_\infty$-variant of isoperimetric approximate tilings requiring somewhat more careful handling of the surface area analysis to obtain the following near-tight lower bound, resolving \cite[Conjecture D.8]{KVYZ23} up to log factors.
\begin{restatable}[Lower bound on the $N$-coin problem]{theorem}{NCoinLBLogN}
    \label{thm:n-coin-lower-const-delta}
    Let $p_0 < q_0$ with $p_0, q_0 \in \left( \frac{1}{4}, \frac{3}{4} \right)$.
    Let $\rho < \frac{1}{10}$ and $\delta < \frac{1}{3}$.
    Any non-adaptive $\rho$-replicable algorithm $\innerAlg$ solving the $N$-Coin Problem requires at least $\bigOm{\frac{N^2}{\rho^2 (q_0 - p_0)^2 \log^3 N}}$ coordinate samples (or $\bigOm{\frac{N}{\rho^2 (q_0 - p_0)^2 \log^3 N}}$ vector samples).
\end{restatable}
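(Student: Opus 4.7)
My plan is to adapt the $\ell_2$-lower bound argument of \Cref{thm:replicable-alg-partition-lb} to the $\ell_\infty$/$N$-coin regime, with the crucial new ingredient being the reflection trick outlined in \Cref{sub-sec:n-coin-lb-tech}. Fix $\eps := (q_0-p_0)/2$. Following the standard reduction, we may assume without loss of generality the algorithm draws $\Theta(m/N)$ samples from each of $N/2$ coins (losing only constants). Consider the adversary that draws a bias vector $p$ uniformly from $\cube = [1/2-\eps, 1/2+\eps]^{N}$ (which lies inside $[1/4,3/4]^N$) and, by the minimax argument of \Cref{lemma:non-adapt-learning-n-coin-good}, fix a random string $r$ that is globally correct and replicable on an $\Omega(1)$ fraction of $p\in\cube$. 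For each canonical output $\hat o \in \{\accept,\reject\}^N$, I define the canonical cell $F_{\hat o}$ via the polynomial threshold $h_{\hat o}(p)>3/4$ as in \Cref{def:f-p-hat}, obtaining a partition $\{F_{\hat o}\}$ of most of $\cube$.

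The first main step is an \emph{upper bound on the total boundary surface area}. Lifting \Cref{lemma:replicable-lipschitz} to the coordinate sample model (using \Cref{lemma:one-coin-mutual-info-bound} coordinatewise, exploiting that each coin gets $\Theta(m/N)$ samples), any two biases $p,q \in \cube$ with $\snorm{2}{p-q}\leq c\sqrt{N/m}$ satisfy $|\Pr_{S_p}(\innerAlg(S_p;r)=\hat o)-\Pr_{S_q}(\innerAlg(S_q;r)=\hat o)|<\tfrac{1}{15}$. Thickening by $\eta = c\sqrt{N/m}$ and repeating the argument of \Cref{thm:replicable-alg-partition-lb} shows that $\innerAlg(;r)$ fails replicability on every point of $\partial F_{\hat o}+B_{\eta'}$ for a suitable $\eta'\leq \eta$, so the goodness of $r$ forces
\[
\sum_{\hat o} \area\!\bigl(\partial F_{\hat o}\cap \interior(\cube)\bigr) \;\leq\; O\!\left(\frac{\rho}{\eta}\right)\vol(\cube) \;=\; O\!\left(\rho\sqrt{m/N}\right)\vol(\cube).
\]

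The second (and main) step is a \emph{lower bound on the same surface area} that does \textbf{not} pay the cubical penalty of $2N$. Naively applying \Cref{lemma:isoperimetric-inequality} to $\partial(F_{\hat o}\cap\cube)$ and subtracting the cube boundary loses $\Theta(N)$, which is fatal once $\eps > 1/\sqrt{N}$. Instead, for each $\hat o$, correctness of $\innerAlg(;r)$ forces $F_{\hat o}$ to be essentially localized near the corner $c_{\hat o}\in\{1/2-\eps,1/2+\eps\}^N$ of $\cube$ associated with $\hat o$: up to a $\delta$-fraction of bad cells, $F_{\hat o}$ is disjoint from every face of $\cube$ \emph{not} incident to $c_{\hat o}$. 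I reflect $F_{\hat o}$ across all $N$ faces incident to $c_{\hat o}$ to form a set $\tilde F_{\hat o}\subset \mathbb{R}^N$ consisting of $2^N$ isometric copies of $F_{\hat o}$; the union of the reflections turns every point of $\partial F_{\hat o}$ lying on $\partial \cube$ into an interior point while copying $\partial F_{\hat o}\cap \interior(\cube)$ exactly $2^N$ times. The reflected body $\tilde F_{\hat o}$ has $\ell_\infty$-radius $O(\eps)$, hence $\ell_2$-covering radius $O(\sqrt N\,\eps)$, so the covering-radius form of the isoperimetric inequality yields
\[
\area(\partial \tilde F_{\hat o}) \;\geq\; \frac{N}{O(\sqrt N \eps)}\,\vol(\tilde F_{\hat o}) \;=\; \Omega\!\left(\tfrac{\sqrt N}{\eps}\right)\vol(\tilde F_{\hat o}).
\]
Dividing by $2^N$ gives $\area(\partial F_{\hat o}\cap \interior(\cube))\geq \Omega(\sqrt N/\eps)\,\vol(F_{\hat o}\cap\cube)$; summing over $\hat o$ and using $\sum_{\hat o}\vol(F_{\hat o}\cap\cube)\geq (1-O(\rho))\vol(\cube)$ gives total surface area at least $\Omega(\sqrt N/\eps)\vol(\cube)$.

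Combining the two bounds yields $\sqrt{N}/\eps \leq O(\rho\sqrt{m/N})$, i.e.\ $m = \Omega(N^2/(\rho^2\eps^2))$ coordinate samples, with a matching vector-sample bound by the factor-$N$ conversion. The $\log^3 N$ slack absorbs the losses from (a) the reduction to the balanced-sample case, (b) the thickening step (which, as in \Cref{thm:replicable-implies-isoperimetry}, chooses an optimal thickness and may lose a $\polylog$ in the probabilistic covering argument), and (c) truncating $F_{\hat o}$ to an $\ell_\infty$-ball to enforce bounded diameter, analogous to how $G_{\hat p,\ell^*}$ arises from $F_{\hat p}$. I expect the main obstacle to be carrying out step three rigorously: controlling the set of $\hat o$ for which $F_{\hat o}$ \emph{does} intersect a non-incident face (using correctness of $r$ to bound the total volume of such ``badly-placed'' cells by $O(\delta)\vol(\cube)$ and discarding them from the sum) and verifying that the reflection preserves piecewise smoothness so that the isoperimetric inequality applies.
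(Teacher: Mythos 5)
Your high-level plan — define canonical cells $F_{\hat o}$, upper-bound their total boundary area via the mutual-information Lipschitz argument and thickening, and lower-bound it by reflecting each $F_{\hat o}$ across the corner associated with $\hat o$ to escape the cube-boundary penalty — is the same strategy the paper uses. However, there is a real gap you have not addressed: the reflection argument as you describe it only establishes the bound when $\delta$ is small, and your account of where the $\log^3 N$ comes from is wrong.

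Specifically, when you reflect $F_{\hat o}$ and pass from $\area(\partial \tilde F_{\hat o})$ back to $\area(\partial F_{\hat o} \cap \interior(\cube))$, you cannot simply ``discard'' the contribution of $F_{\hat o}$ on faces of $\cube$ \emph{not} incident to $c_{\hat o}$. Those points remain on $\partial \tilde F_{\hat o}$ after reflection, and their total surface area (over all $\hat o$) is only bounded by correctness as $O(\delta N/\varepsilon)\vol(\cube)$ (the paper's Lemma~\ref{lemma:f-o-l-properties}, Property~3). Subtracting this from the isoperimetric lower bound $\Omega(\sqrt N/\varepsilon)\vol(\cube)$ only leaves something positive when $\delta = O(1/\sqrt N)$. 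The paper therefore first proves the bound for $\delta < \frac{1}{40\sqrt N}$ \emph{without} any log factors (\Cref{thm:n-coin-lower-const-prob}), and then handles constant $\delta < 1/3$ by running the algorithm $T = \Theta(\log N)$ times with replicability $\rho/T$ and taking coordinatewise majority. That amplification — not the balanced-sample reduction, thickening, or truncation, all of which cost only constants — is what produces the $\log^3 N$ in the final statement. Your sketch does not include this reduction, so as written it would not prove the theorem for the stated range of $\delta$.

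One smaller imprecision: the useful statement is a bound on the \emph{surface area} $\sum_{\hat o}\area\bigl(F_{\hat o,\ell}\cap(\partial\cube\setminus Y_{\hat o})\bigr)$, not a ``$\delta$-fraction of bad cells.'' A cell can intersect a non-incident face on a small set without being entirely discardable, and the paper's Lemmas~\ref{lemma:surface-area-G}--\ref{lemma:disagreement-G-subset-F} do the bookkeeping carefully: they decompose $\area(\partial G_{\hat o})$ into interior, agreement-face, and disagreement-face pieces, show the first two are reproduced in $\partial F_{\hat o}\cap\cube$, and bound the third by the correctness term. You should replace the ``discard bad cells'' step with this quantitative decomposition.
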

We defer the formal proof to \Cref{app:replicable-l-inf-lb-reflection}, but recall the main idea here. In particular, recall in the $\ell_{2}$-learning lower bound, given a partition $G_{\hat{p}}$ where each set has small constant radius, we can apply the isoperimetric inequality, and even after removing the cube boundary $\boundary \cube$, we still have that the surface are to volume $A \geq N$.
However, in the $N$-Coin problem, the partitions $F_{\hat{o}}$ can have radius $\sqrt{N}$ so that $F_{\hat{o}} \cap \cube$ only has surface area to volume ratio $\sqrt{N}$, and we can no longer afford to remove the entire cube boundary $\boundary \cube$.
For each fixed outcome $\hat{o}$, there is one corner of the hypercube $\cube$ that agrees with $\hat{o}$, that is, the unique corner for which $\hat{o}$ is the correct output.
There are $N$ faces incident to this corner, and $N$ opposite faces.
Using correctness, we argue that $F_{\hat{o}}$ contains at most a $\delta$-fraction of the opposite faces.
Then, we use a reflection trick (see \Cref{sub-sec:n-coin-lb-tech}) to remove the contribution of the cube boundary on the incident faces.
\paragraph{Discussion on Adaptivity}
While \Cref{thm:n-coin-lower-const-delta} applies to any algorithm for the $N$-Coin problem with deterministic sample complexity (or non-adaptive sample access), it does not fully resolve the sample complexity of the $N$-Coin problem.
In particular, an algorithm could have adaptive sample access.
We study this model in depth in \Cref{sec:adaptive-replicability}, and show that given adaptive sample access, there is a computationally efficient algorithm for the $N$-Coin problem with sample complexity matching the best known algorithm of \cite{KVYZ23}.

\section{Lattices, Pre-Processing, and the Closest Vector Problem}
\label{sec:cvpp}
In the prior section we saw asymptotically beating the union bound in mean estimation is \textit{equivalent} to constructing approximate tilings of $\R^N$ with normalized surface area better than $\Theta(N^{3/2})$ (that is beating the trivial cube tiling), and moreover that this equivalence is \textit{algorithmic} in the sense that computational efficiency of the replicable algorithm corresponds to an efficient `decoding scheme' for the tiling. While good (indeed even isoperimetric) tilings of space exist \cite{rogers50,micciancio2004almost,kindler2012spherical}, all known constructions are random and very unlikely to have efficient (or even sub-exponential time) decoders. In this section, we discuss the first of three methods circumventing this issue to build sample and computationally efficient replicable procedures: \textit{pre-processing}. 

Consider the following motivating scenario. Over the course of a century, scientists perform millions of statistical tests. Instead of paying an exponential cost every time such a test is run (say to decode \cite{kindler2012spherical}'s foam), we may hope instead to pay this cost \textit{just once} by `pre-computing' an exponential size data structure $\mathcal{T}$ such that, given access to $\mathcal{T}$, replicable testing and estimation can be solved optimally in \textit{polynomial} time. In this section we formalize this intuition by giving a sample-optimal and polynomial time algorithm for replicable mean estimation in the \textit{decision tree model}. In other words, we pre-compute a polynomial depth tree $\mathcal{T}$ based on a `spherical lattice' which, once constructed, can be queried in polynomial time to simulate a rounding oracle. This leads to the following efficient replicable learning procedure.
\begin{theorem}[Efficient Replicability from Pre-Processing]\label{thm:rep-pre}
    Let $n \in \mathbb{N}$. There exists a weakly explicit,\footnote{That is to say $\mathcal{T}$ is computable in time polynomial in its size.} degree $N^{O(N)}$ and depth $O(N^2\log(N))$ decision tree $\mathcal{T}$ such that, given query access to $\mathcal{T}$, bounded covariance $(\varepsilon,\delta,\rho)_p$-replicable mean estimation can be solved in $O\left(\frac{N^{1+\frac{2}{p}}}{\rho^2\varepsilon^2}\right)$ vector samples and $\text{poly}(N)$ time.
\end{theorem}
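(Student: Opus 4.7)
The strategy is to combine three ingredients: (i) a lattice $\mathcal{L} \subset \R^N$ whose Voronoi cells form an isoperimetric tiling of $\R^N$ with surface-area-to-volume ratio $A = \Theta(N)$; (ii) a decision-tree procedure that, after an exponential preprocessing stage, solves the closest vector problem on $\mathcal{L}$ in polynomial time and depth $O(N^2\log N)$; and (iii) the oracle-efficient $\ell_p$ mean estimator from \Cref{cor:p-norm-eq}. The plan is to first instantiate the tiling so that its labels correspond to lattice points, then build $\mathcal{T}$ as the CVPP decoder for $\mathcal{L}$, and finally plug the membership oracle induced by $\mathcal{T}$ into \Cref{cor:p-norm-eq}.

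For (i), classical constructions (e.g.\ Rogers \cite{rogers50}) produce lattices whose Voronoi cells give $(0, O(N))$-\iats~of $\R^N$ after a small rescaling to meet the bounded-diameter requirement of \Cref{def:approximate-tiling}. A generic lattice tiling is \emph{not} commensurate with the integer grid used in \Cref{def:approximate-tiling}, so to invoke the rounding analysis of \Cref{prop:replicable-rounding} we appeal to \Cref{rmk:huge-cube-relax}, which relaxes the surface-area condition to hold only on cubes of side length $CN$. This relaxation is satisfied by any lattice tiling whose Voronoi cells have diameter $O(1)$ and surface-area-to-volume ratio $O(N)$.

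For (ii), which is the main obstacle, we construct $\mathcal{T}$ as follows. After a single warm-up step that uses $\poly(N)$ samples to localize the estimator to a ball of radius $\poly(N)$ around the origin (analogous to \Cref{lem:warmup-round}), the target $t$ fed to the oracle lies in a region containing only $N^{O(N)}$ lattice points. The pairwise bisecting hyperplanes of these points arrange $\R^N$ into at most $N^{O(N)}$ full-dimensional cells, each corresponding to a unique nearest lattice vector. During preprocessing we enumerate this arrangement and organize its cells into a decision tree whose internal nodes test a single linear inequality (of bit-complexity $\poly(N)$ once the basis is fixed) and whose leaves output the associated lattice vector. A balanced partitioning argument on the arrangement — recursively choosing at each node a hyperplane that halves the number of remaining candidate cells, while exploiting that each lattice coordinate is determined by $O(N\log N)$ bits — yields depth $O(N^2\log N)$, even though the total leaf count is $N^{O(N)}$. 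The resulting tree is weakly explicit because the enumeration and balancing steps run in $2^{\poly(N)}$ time.

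Combining (i) and (ii) with \Cref{cor:p-norm-eq} finishes the proof: each query to the membership oracle is a single root-to-leaf traversal and runs in $\poly(N)$ time given $\mathcal{T}$, while the sample complexity becomes
\[
\tilde O\!\left(\frac{A^2}{N^{1-2/p}\,\varepsilon^2 \rho^2}\right) \;=\; \tilde O\!\left(\frac{N^{1+2/p}}{\varepsilon^2 \rho^2}\right),
\]
since $A = \Theta(N)$. The hard step is ensuring the $O(N^2\log N)$ depth bound for $\mathcal{T}$: naïve enumeration of the Voronoi facets yields depth $2^{\Theta(N)}$, and it is the balanced-splitting argument — together with the fact that the combinatorial complexity of $\mathcal{L}$'s Voronoi cells projected onto coordinate subspaces is polynomial — that brings the depth down to quasi-quadratic.
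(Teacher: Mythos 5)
Your high-level architecture (isoperimetric lattice tiling $\to$ CVPP-in-a-decision-tree $\to$ rounding-based replicable estimator) matches the paper's, but two of your three ingredients have real problems.

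First, the lattice. You cite Rogers-type random constructions. These are existence proofs: the lattice basis is not computable, even in $N^{O(N)}$ time, so the resulting $\mathcal{T}$ would not be \emph{weakly explicit} as the theorem requires. The paper instead relies on Micciancio's derandomization (\Cref{lem:sphere-lattice}), which produces a $3$-spherical lattice basis of $\text{poly}(N)$ bit-complexity in $N^{O(N)}$ time. That is the one you need.

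Second, and more seriously, your decision-tree construction is a hand-wave at exactly the step that is hard. You propose a binary tree where each node tests one linear inequality and ``recursively chooses a hyperplane that halves the number of remaining candidate cells.'' There is no reason such a halving hyperplane exists among the bisectors, and even if one did, this would give depth $O(\log(\#\text{cells}))$ but would not give a procedure that is correct for \emph{all} real targets $t$: after a single sign test $\sign\langle h, t\rangle$ the set of cells consistent with $t$ can remain arbitrarily large whenever $t$ lies close to many of the remaining hyperplanes — this ``small margin'' regime is exactly why naive enumeration fails. Your parenthetical appeal to ``the combinatorial complexity of $\mathcal{L}$'s Voronoi cells projected onto coordinate subspaces'' being polynomial is neither true in general nor tied to any argument. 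The paper's construction (\Cref{prop:CVPP}) resolves the margin issue in a fundamentally different way: each internal node queries a Forster/vector-scaling transform $(M,V)$ (\Cref{lem:scaling}) and has $N^{O(N)}$ children indexed by the \emph{rounded, scaled, projected target}, so that a $\Omega(1/N)$-\emph{fraction of remaining hyperplanes} acquire margin $\Omega(1/\sqrt{N})$ and can all be labeled in one round (\Cref{lem:margin-learner}); iterating this $O(N\log|H|) = O(N^2\log N)$ times classifies every Voronoi-relevant hyperplane. This is also why the stated tree has degree $N^{O(N)}$, not $2$. Without the Forster-transform step or some genuine substitute, your depth bound has no proof. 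Finally, the paper reduces to bounded targets using the advice-based approximate-CVPP solver of \cite{dadush2014closest} (\Cref{thm:apx-CVPP}), not a statistical warm-up as you suggest, though that substitution is comparatively minor.
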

We remark that the decision tree $\mathcal{T}$, while not strongly explicit (meaning neighboring nodes could be computed, and therefore traversed, in polynomial time), can at least be computed locally in $N^{O(N)}$ time. This means if one wishes to avoid pre-processing this procedure nearly recovers the singly-exponential strategy of rounding via \cite{kindler2012spherical} or using correlated sampling as in \cite{KVYZ23}. We note as well that for the sake of simplicity we have only stated the result for mean estimation, but it of course holds for multihypothesis testing as well.
\subsection{Spherical Lattices}
The data structure underlying \Cref{thm:rep-pre} is based on tilings stemming from \textit{lattices}, discrete additive subgroups $\Lambda \subset \R^N$. Lattices are typically described by a \textit{basis} $\{b_1,\ldots,b_n\}$, and we write 
\[
\Lambda(B) \coloneqq \left\{\sum\limits_{i=1}^n a_ib_i~\Bigg|~ a_i \in \mathbb{Z}\right\}.
\]

Lattices give rise to a natural tiling of space corresponding to the set of closest vectors to each lattice point, called the \textit{Voronoi cells}.
\begin{definition}[Voronoi Cell]
Let $\mathcal L$ be a lattice and $u \in \mathcal L$ be a lattice point. 
The Voronoi cell of $u$ is the set of points in $\R^N$ such that $u$ is their closest lattice point:
$$
V(u) = \lp \{ 
x \in \R^N : \snorm{2}{x - u} \leq \snorm{2}{x - w} \,\text{ for all }\, w \in \mathcal L \setminus \set{u}
\rp\}.
$$
\end{definition}
Since lattices are periodic, the Voronoi cells of an $N$-dimenional lattice give a tiling of $\R^N$ by identical polytopes. Ideally, we'd like these polytopes to be isoperimetric. In the case of lattices, this actually follows from a simple condition: the \emph{packing} and \emph{covering} radius of the lattice, i.e.\
\[
\lambda(\Lambda) \coloneqq \frac{1}{2}\inf_{x \in \Lambda \setminus \{0\}}\{ \norm{x}{} \}, \quad \quad \text{and} \quad \quad \mu(\Lambda) \coloneqq \inf\{\mu: \bigcup_{x \in \Lambda}B_\mu(x) = \R^N\}
\]
respectively, should be within a constant factor. We call such lattices \textit{spherical}.
\begin{definition}[Spherical Lattice]
    A lattice $\Lambda \subset \R^N$ is called $\alpha$-spherical if $\frac{\mu}{\lambda} \leq \alpha$.
\end{definition}
Spherical lattices, which are closely related to near-optimal packings, have seen substantial study in the both combinatorics and cryptography. Random constructions of spherical lattices have been known since the 50's \cite{rogers50,butler1972simultaneous}. These are not sufficient for our purposes since we need $\mathcal{T}$ to be weakly explicit (and efficiently accessible), but a weak de-randomization of these results was later shown by Micciancio \cite{micciancio2004almost}.
\begin{lemma}[Spherical Lattice {\cite[Theorem 2]{micciancio2004almost}}]\label{lem:sphere-lattice}
For every $N \in \mathbb{N}$, there exists an $N^{O(N)}$ time algorithm $\mathcal{A}$ which generates a full-rank basis $B$ of $\R^N$ such that
\begin{enumerate}
    \item \textbf{Isoperimetry:} $\Lambda(B)$ is $3$-spherical
    \item \textbf{Bit Complexity:} $B$ has $\text{poly}(N)$ bit complexity.
\end{enumerate}
\end{lemma}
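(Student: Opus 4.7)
The plan is to follow the two-step strategy pioneered by Rogers and weakly derandomized by Micciancio: first establish existence of an $O(1)$-spherical lattice via a probabilistic averaging argument, then convert the existence proof into an explicit enumeration of moderate size.

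For existence, I would work with an ensemble of lattices of determinant $1$ (e.g.\ Minkowski--Hlawka style random lattices, or random bases within a compact region of $\mathrm{SL}_N(\R)/\mathrm{SL}_N(\Z)$). Rogers' classical bound shows such a random lattice has covering radius $\mu = O(\sqrt{N})$ in expectation, while Minkowski's theorem gives packing radius $\lambda = \Omega(\sqrt{N})$ for any unit-determinant lattice. Markov's inequality then produces a lattice with $\mu/\lambda \leq 3$ (or any fixed constant $> 2$) with positive probability. This step is standard and will not be the obstacle.

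For the derandomization, the key observation is that sphericity is robust: perturbing each basis vector by $\eta \ll \lambda$ changes $\mu$ and $\lambda$ by at most $O(N\eta)$. Hence it suffices to enumerate over an $\eta$-net of candidate bases lying in a bounded region of $\R^{N\times N}$, testing each for sphericity. A naive net has size $N^{O(N^2)}$, which is too large; the main technical work is to reduce this to $N^{O(N)}$ by restricting the search to a canonical form (e.g.\ Hermite or LLL-reduced bases with short first vector, where most degrees of freedom are eliminated by unimodular equivalence). For each candidate basis $B$, the quantities $\lambda(\Lambda(B))$ and $\mu(\Lambda(B))$ can be computed or approximated in $2^{O(N)}$ time via Kannan's enumeration algorithm and its covering-radius analogue, so the total runtime is $N^{O(N)} \cdot 2^{O(N)} = N^{O(N)}$ as required. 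Bit complexity is handled automatically: since the enumeration takes place over a rational grid of spacing $\eta = 1/\mathrm{poly}(N)$ inside a $\mathrm{poly}(N)$-bounded region, every basis vector has entries expressible with $\mathrm{poly}(N)$ bits.

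The main obstacle is precisely the step that shrinks the enumeration from $N^{O(N^2)}$ to $N^{O(N)}$; one must argue that spherical lattices admit a short combinatorial description (e.g.\ that in LLL-reduced form only $O(N)$ of the Gram--Schmidt parameters are essential up to constant distortion of $\mu/\lambda$). This is the content of Micciancio's Theorem 2, and the proposal above follows his approach; any alternative derandomization (e.g.\ via explicit number-theoretic lattices like those arising from algebraic constructions) would need to independently verify both the sphericity guarantee and the bit-complexity bound.
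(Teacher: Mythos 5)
The paper invokes this lemma as a black-box citation of \cite[Theorem 2]{micciancio2004almost}; there is no proof in the paper to compare against, so your proposal has to stand on its own.

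Your high-level shape (probabilistic existence, then derandomize by enumeration) is reasonable, but the derandomization step contains a genuine gap. You propose to shrink the search from $N^{O(N^2)}$ candidate bases to $N^{O(N)}$ by arguing that, after reduction, only $O(N)$ of the Gram--Schmidt parameters are ``essential'' for $\mu/\lambda$ up to constant distortion. You flag this yourself as the main obstacle, but then attribute it to Micciancio's Theorem~2 and move on. This claim is unsubstantiated and almost certainly false: a rank-$N$ lattice, even modulo rotations and unimodular equivalence, lives in a $\Theta(N^2)$-dimensional moduli space (positive-definite quadratic forms up to scale), and there is no structural reason to expect the covering-to-packing ratio to be controlled, up to constants, by an $O(N)$-dimensional slice of that space. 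Without such a lemma your net stays at size $2^{\Theta(N^2)}$, and the approach does not close.

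It is also not how Micciancio actually derandomizes the argument. His construction is iterative rather than exhaustive: starting from a low-dimensional lattice (e.g.\ $\mathbb{Z}$), he repeatedly lifts via a random linear code over a bounded alphabet, roughly doubling the dimension at each stage while preserving the covering-to-packing ratio up to constants. At each stage there are only exponentially many candidate codes, so brute-force enumeration over codes suffices, and the total work across $O(\log N)$ stages is $2^{O(N\log N)}=N^{O(N)}$. The $\mathrm{poly}(N)$ bit-complexity bound also falls out for free, since the construction keeps integer entries of bounded magnitude. This entirely sidesteps any need for a short combinatorial description of spherical lattices, which is the ingredient your proposal is missing.

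Separately, your existence step misuses Minkowski's theorem: Minkowski gives an \emph{upper} bound $\lambda_1 \le O(\sqrt{N})$ for a unit-determinant lattice, not the lower bound ``$\lambda=\Omega(\sqrt{N})$ for any unit-determinant lattice'' (consider $\varepsilon\mathbb{Z}\oplus\varepsilon^{-1/(N-1)}\mathbb{Z}^{N-1}$). The correct tool for the lower bound is the Siegel/Minkowski--Hlawka mean-value theorem, which gives $\lambda_1=\Omega(\sqrt{N})$ with high probability over a \emph{random} unit-determinant lattice; combined with the covering-radius bound via Markov, a union bound then yields a lattice with both properties. This is a fixable slip, but as written the existence argument is not correct either.
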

It is well known that spherical lattices lead to isoperimetric tilings. This follows from a simple lemma used in the design of LDPC codes \cite{gallagher2003low}, stated more explicitly in this context as \cite[Lemma 3]{naor2023integer}.
\begin{lemma}
\label{lem:convex-surface-to-volume}
Fix $N \in \N$ and $r > 0$. Suppose $K \subset \R^N$ is a convex body such that $K \supseteq B_\lambda$. Then
$$
\frac{ \area{ \partial K} }{ \vol{K} }
\leq \frac{N}{\lambda}.
$$
\end{lemma}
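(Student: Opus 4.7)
My plan is to prove the inequality via a cone-decomposition identity for convex bodies, essentially the divergence theorem applied to the radial vector field $F(x) = x$. Without loss of generality I would place the center of the inscribed ball $B_\lambda$ at the origin, so that $B_\lambda \subseteq K$ and the origin lies in the interior of $K$. Since $K$ is a convex body, its boundary is Lipschitz and admits an outward unit normal $\hat n(x)$ almost everywhere on $\partial K$, which is all we need for the divergence theorem.

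The key identity I would invoke is
\[
N \vol(K) \;=\; \int_K \nabla \cdot x \; dx \;=\; \int_{\partial K} \langle x, \hat n(x) \rangle \, d\sigma(x),
\]
where the first equality uses $\nabla \cdot x = N$ and the second is the divergence theorem. Geometrically, this is just the statement that a convex body can be decomposed as a union of cones from the origin to infinitesimal surface patches, each cone having volume $\frac{1}{N}\langle x, \hat n(x)\rangle \, d\sigma(x)$.

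Next, I would use convexity together with the inscribed ball condition to bound the integrand from below. At any regular boundary point $x \in \partial K$ with outward unit normal $\hat n(x)$, the supporting hyperplane at $x$ lies on one side of $K$, and since $B_\lambda \subseteq K$, the origin-centered ball of radius $\lambda$ lies on the same side. Hence the signed distance from the origin to this supporting hyperplane, which equals $\langle x, \hat n(x)\rangle$, is at least $\lambda$. Substituting this pointwise lower bound into the identity gives
\[
N \vol(K) \;=\; \int_{\partial K} \langle x, \hat n(x)\rangle \, d\sigma(x) \;\geq\; \lambda \cdot \area(\partial K),
\]
from which the claim $\area(\partial K)/\vol(K) \leq N/\lambda$ follows by rearrangement.

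The only subtlety I anticipate is justifying the divergence theorem for a general convex body (whose boundary need not be smooth, only Lipschitz), but this is standard: convex bodies have boundaries of locally finite perimeter and the divergence theorem holds in the sense of sets of finite perimeter, with $\hat n$ defined almost everywhere in the measure-theoretic sense. Once this is granted, the proof is a two-line computation, so I do not expect a genuine obstacle.
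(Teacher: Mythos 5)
Your proof is correct, and it is the standard argument for this inequality (sometimes called the cone-volume or Minkowski integral formula bound for convex bodies). The paper does not include its own proof of this lemma; it cites it as a known fact from the LDPC-codes and lattice literature (Gallager's thesis, restated as Lemma 3 of Naor--Regev--Stephens-Davidowitz). Your divergence-theorem / cone-decomposition argument is precisely the proof those sources have in mind. The key steps — the identity $N\vol(K) = \int_{\partial K}\langle x,\hat n\rangle\,d\sigma$, and the pointwise bound $\langle x,\hat n(x)\rangle\ge\lambda$ from the fact that every supporting hyperplane of $K$ must have distance at least $\lambda$ from the center of the inscribed ball — are exactly right, and your remark about the measure-theoretic version of the divergence theorem for Lipschitz (or finite-perimeter) boundaries correctly handles the regularity issue. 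One cosmetic note: the lemma statement as printed in the paper introduces an unused variable $r>0$; you correctly ignored it and worked with $\lambda$.
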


Combined with Micciancio's spherical lattices, this implies the following (weakly) explicit isoperimetric tiling by its Voronoi cells.
\begin{corollary}[Isoperimetric Tilings from Lattices]
    Fix $N \in \mathbb{N}$. There exists a lattice $\Lambda_N$, constructable in $N^{O(N)}$-time, whose Voronoi cells give a tiling of $\R^N$ by semi-algebraic convex sets $\{V_{w}\}_{w \in \Lambda_N}$ such that for every $V_w$ and $v \in V_w$:
    \begin{enumerate}
        \item (Radius)
        \[
        \norm{v-w}{2} \leq 1.
        \]
        \item (Surface-to-Volume) 
        \[
        \area(\partial V_w) \leq N \vol(V_w).
        \]
        In particular, for any hypercube $\mathcal C$ of side length $C N$ for some sufficiently large constant $C$, it holds that
        $$
        \area \lp( \bigcup_{ w \in \Lambda_N} \partial V_w \cup  \mathcal C \rp) 
        \leq O(N) (C N)^N.
        $$
    \end{enumerate}
\end{corollary}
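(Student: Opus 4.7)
The plan is to assemble the corollary directly from Lemmas \ref{lem:sphere-lattice} and \ref{lem:convex-surface-to-volume}, with a small rescaling step and a routine volume comparison to handle the cube statement. Specifically, I would proceed as follows.

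First, apply Lemma \ref{lem:sphere-lattice} to obtain, in $N^{O(N)}$ time, a basis $B$ of a $3$-spherical lattice $\Lambda(B)$ with packing radius $\lambda'$ and covering radius $\mu' \leq 3\lambda'$. Rescale by defining $\Lambda_N = (1/\mu')\Lambda(B)$ so that $\Lambda_N$ has covering radius $\mu = 1$ and packing radius $\lambda \geq 1/3$. Since the bit complexity of $B$ is $\mathrm{poly}(N)$, so is the rescaled basis. The Voronoi cells $V_w = w + V_0$ are defined by finitely many linear inequalities of the form $2x\cdot(w'-w) \leq \|w'\|^2 - \|w\|^2$ (one per lattice vector $w' \neq w$), so each $V_w$ is a convex polytope, in particular convex and semi-algebraic.

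Next I would verify the two numbered conditions. For (1), any $v \in V_w$ satisfies $\|v - w\|_2 \leq \mu = 1$ by definition of the covering radius. For (2), each $V_w$ is convex and contains the ball $B_\lambda(w)$: indeed for $x \in B_\lambda(w)$ and any lattice point $w' \neq w$, $\|x - w'\| \geq \|w' - w\| - \|x - w\| > 2\lambda - \lambda = \lambda \geq \|x-w\|$. Applying Lemma \ref{lem:convex-surface-to-volume} to the translate $V_w - w$ (which contains $B_\lambda$ at the origin) gives
\[
\frac{\area(\partial V_w)}{\vol(V_w)} \leq \frac{N}{\lambda} \leq 3N,
\]
so $\area(\partial V_w) \leq O(N) \vol(V_w)$ as claimed (the factor $3$ is absorbed into the asymptotic notation, matching the $A = \Theta(N)$ convention used earlier).

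Finally, for the cube statement, fix a hypercube $\mathcal{C}$ of side length $CN$ with $C$ a sufficiently large absolute constant. Let $W = \{w \in \Lambda_N : V_w \cap \mathcal{C} \neq \emptyset\}$. By the radius bound (1), every $V_w$ with $w \in W$ is contained in the inflated region $\mathcal{C}' := \mathcal{C} + [-1,1]^N$, whose volume is at most $(CN+2)^N \leq e \cdot (CN)^N$ once $C \geq 2$. Since the $V_w$'s tile $\R^N$ and are interior-disjoint, $\sum_{w \in W} \vol(V_w) = \vol\left(\bigcup_{w \in W} V_w\right) \leq \vol(\mathcal{C}') = O((CN)^N)$. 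Combining with the per-cell surface area bound,
\[
\area\!\left(\bigcup_{w \in \Lambda_N} \partial V_w \cap \mathcal{C}\right) \leq \sum_{w \in W} \area(\partial V_w) \leq O(N) \sum_{w \in W} \vol(V_w) \leq O(N)(CN)^N.
\]

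The steps are all essentially standard once the right lemmas are in hand; the only mild subtlety — and the item most worth handling carefully — is tracking the scaling so that the covering radius is exactly $1$ while the $3$-spherical property still yields packing radius bounded below by an absolute constant, and choosing $C$ large enough that the inflation $\mathcal{C}'$ only changes the cube volume by a multiplicative constant (rather than an $N$-dependent factor such as $2^N$). Beyond that, the argument is a direct composition of Micciancio's construction, convexity of Voronoi cells, and the convex isoperimetric estimate.
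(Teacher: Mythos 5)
Your proof is correct and follows the same approach as the paper: instantiate Micciancio's $3$-spherical lattice (Lemma~\ref{lem:sphere-lattice}), observe that Voronoi cells are convex polytopes containing the packing ball, apply Lemma~\ref{lem:convex-surface-to-volume}, and then bound the cube statement by inflating $\mathcal C$ by one cell radius and using the additivity of surface area over interior-disjoint cells. The only difference is normalization: you scale to covering radius $1$ (giving radius $\leq 1$ exactly but surface-to-volume $\leq 3N$), while the paper scales to packing radius $1$ (giving surface-to-volume $\leq N$ exactly but radius $\leq 3$); neither choice hits both of the stated constants literally because $3$-sphericality makes them incompatible, and both are equivalent up to the constant factor that the downstream applications absorb anyway.
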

\begin{proof}
    Without loss of generality, we may scale the lattice such that its packing radius is $1$. Since $\Lambda$ is $3$-Spherical, its covering radius is at most $3$. Consider the partition given by the Voronoi cells, labeled by their respective lattice vector. This is a true tiling whose sets are disjoint, have non-zero volume, and are semi-algebraic (namely their boundaries are an intersection of hyperplanes). 
    The tiling radius is at most $3$ from the covering guarantee, and by \Cref{lem:convex-surface-to-volume} every cell $V$ satisfies $\area(\partial V) \leq N \vol(V)$ as desired.

    Lastly, let $\mathcal C$ be a hypercube of side length $ C N$ for some sufficiently large constant $C$, and $\mathcal V$ be the set of Voronoi cells that have non-trivial intersection with the cube $\mathcal C$.
    Since each Voronoi cell has covering radius $3$, one can see that $\bigcup_{V \in \mathcal V} V$ is included in a larger hypercube of side length $C N + 12$. 
    Thus, it follows that the total surface area of the union of the boundaries of these cells is at most
    $
    N \; (C N + 12)^N
    \leq O(N) (C N)^N
    $
    as long as $C$ is sufficiently large.
\end{proof}
The proof of \Cref{cor:sub-cubic} follows exactly the same argument.
\begin{proof}[Proof of \Cref{cor:sub-cubic}]
    For any $N \in \mathbb{N}$, \cite{micciancio2004almost} gives a full rank $N$-dimensional lattice $\Lambda$, constructed in polynomial time, such that $\frac{\mu}{\lambda} \leq O(\sqrt{\frac{N\log\log(N)}{\log(N)}})$. Normalizing to have covering radius $1$, $\Lambda$ has packing radius $\Omega(\sqrt{\frac{\log(N)}{N\log\log(N)}})$, so by the same argument as above results in the desired $(0,\sqrt{\frac{N\log\log(N)}{\log(N)}})$-\iat, and the desired sample complexity follows from \Cref{thm:tiling-to-replicable}.
\end{proof}
\subsection{The Closest Vector Problem}
Given a lattice $\Lambda$, the closest vector problem (CVP) on $\Lambda$ is the `decoding' problem for the Voronoi tiling. In other words, given $\mathcal{L} \subset \R^N$ (specified by basis $B$) and a target vector $t \in \R^N$, find $x \in \mathcal{L}$ minimizing $\norm{t-x}{2}$. Unsurprisingly, CVP is in general a challenging problem. It is NP-hard \cite{van1981another} (even to approximate \cite{dinur1998approximating}), and its many variants form the core of lattice-based cryptography \cite{regev2009lattices}. Allowing pre-processing in CVP (called CVPP) is a popular relaxation of the problem in lattice cryptography since, akin to our setting, the same lattice may be re-used many times (e.g. as a public key in various schemes). In this section we give a \textit{polynomial time} algorithm for CVPP.


More formally, in this section we solve CVPP in the \textit{decision tree model}, which pre-processes the lattice $\mathcal{L}$ by constructing a polynomial depth tree whose nodes correspond to efficiently computable queries on the target and whose leaves give the corresponding closest vector.
\begin{theorem}[CVPP]\label{thm:CVPP}
        Let $\ell, N \in \mathbb{N}$ and $\mathcal{L} \subset \R^N$ be a lattice given by a basis $B$ with bit complexity $\ell$. There is a depth $O(N^2(\ell+\log(N))$ decision tree $\mathcal{T}$ satisfying
        \begin{enumerate}
            \item \textbf{Pre-processing}: $\mathcal{T}$ can be constructed in $N^{O(N^3(\ell+\log(N)))}$ time and space
            \item \textbf{Run-time}: Given $\mathcal{T}$, there is an algorithm solving CVP for all $t \in \R^N$ in $\text{poly}(N,\ell,\ell_t)$ time\footnote{Formally this assumes access to $\mathcal{T}$ in a standard `pointer machine' model where given the memory block corresponding to any node in the tree, the algorithm may access the memory storing the nodes' children in $O(1)$ time.}
        \end{enumerate}
        where $\ell_t$ is the bit complexity of the target.
\end{theorem}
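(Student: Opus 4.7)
The plan is to reduce CVP on $\mathcal{L}$ to point location in a hyperplane arrangement, then invoke Meiser's classical algorithm \cite{meiser1993point} for point location in hyperplane arrangements in the linear decision tree model.

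First, during preprocessing, I would LLL-reduce the basis $B$ so its vectors have bit complexity $\text{poly}(N,\ell)$, then enumerate all Voronoi relevant vectors of $\mathcal{L}$ (at most $2(2^N-1)$ of them), each computable in $2^{\text{poly}(N,\ell)}$ time with bit complexity $\text{poly}(N,\ell)$. Fix the fundamental parallelepiped $P = B \cdot [0,1)^N$; I would argue by a volume comparison, using the LLL guarantees to bound $\text{diam}(P)$ and the covering radius $\mu(\mathcal{L})$ relative to $\det(\mathcal{L})$, that the set $S$ of lattice points $w$ whose Voronoi cell $V(w) = w + V(0)$ intersects $P$ satisfies $|S| \leq 2^{O(N(\ell + \log N))}$. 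For each $w \in S$ and each Voronoi relevant vector $v$, I would add the hyperplane $\{x : \langle x - w, v \rangle = \tfrac{1}{2}\|v\|^2\}$ to a collection $\mathcal{H}$; the total number of hyperplanes is $m := |\mathcal{H}| \leq 2^{O(N(\ell + \log N))}$, each with coefficients of bit complexity $\text{poly}(N,\ell)$.

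Next, I would apply Meiser's construction to $\mathcal{H}$ to build a decision tree of depth $O(N \log m) = O(N^2(\ell + \log N))$, where each internal node queries $\text{sign}(\langle a, t' \rangle - b)$ for a single hyperplane in $\mathcal{H}$. During preprocessing, I would label each leaf with the unique $w \in S$ whose Voronoi cell contains that leaf's arrangement cell, determined by evaluating all defining inequalities at an interior witness point. For the query phase, given $t \in \R^N$ with bit complexity $\ell_t$, the algorithm computes $y = \lfloor B^{-1} t \rfloor$ and the residue $t' = t - By \in P$ in $\text{poly}(N,\ell,\ell_t)$ time, traverses the decision tree on $t'$ (each sign query takes $\text{poly}(N,\ell,\ell_t)$ time), arrives at a leaf labelled $w' \in S$, and returns $w := By + w'$. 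Correctness follows from lattice periodicity: $V(w) = V(w') + By$, so $t' \in V(w')$ implies $t \in V(w)$. Meiser's preprocessing runs in time $m^{O(N)}$, which fits within the claimed $N^{O(N^3(\ell + \log N))}$ bound.

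The main obstacle is establishing the bound $|S| = 2^{O(N(\ell + \log N))}$, since any weaker control propagates through Meiser's $O(N \log m)$ depth and inflates the decision tree past the advertised $O(N^2(\ell + \log N))$. This requires careful use of LLL reduction to simultaneously bound $\text{diam}(P) \leq 2^{O(N)} \lambda_N(\mathcal{L})$ and the covering radius $\mu(\mathcal{L}) \leq \sqrt{N}\,\lambda_N(\mathcal{L})$, combined with Minkowski's theorem lower-bounding $\det(\mathcal{L})$ by $\prod_i \lambda_i(\mathcal{L})$ and the fact that $\lambda_N / \lambda_1 \leq 2^{O(N\ell)}$ for any basis of bit complexity $\ell$. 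A secondary point to verify is that every sign query along the traversal operates on numbers of bounded bit complexity so the query runtime remains polynomial in $N, \ell, \ell_t$ even though the tree itself is of doubly-exponential size.
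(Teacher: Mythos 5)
Your high-level plan (reduce CVP to point location in the arrangement of Voronoi-face hyperplanes and build a decision tree) matches the paper's, but the proposal rests on a misattribution at the one step that carries all the weight.

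You claim "Meiser's classical algorithm" gives a decision tree of depth $O(N\log m)$ for point location among $m$ hyperplanes in $\R^N$. It does not: Meiser's 1993 construction gives depth roughly $O(N^4\log m)$, which would blow up your final depth to $O(N^5(\ell+\log N))$ or worse and also change the constants in the preprocessing exponent. The depth $O(N\log m)$ (or even $O(N^2\log m)$) is the information-theoretic optimum and is only known from much more recent work (Kane--Lovett--Moran, Ezra--Sharir, Hopkins--Kane--Lovett--Mahajan), all of which replace single-hyperplane sign queries with a richer query model and, crucially, rely on Forster (vector scaling) transforms to guarantee that a constant fraction of the remaining hyperplanes acquire large margin in each round. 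Moreover, those results are stated with randomized construction or require access to a strongly polynomial Forster solver to derandomize. The paper's actual proof \emph{is} precisely this derandomized construction: it bootstraps the Dadush--Ramachandran deterministic strongly polynomial scaling algorithm and the large-margin rounding lemma into an explicit decision tree whose internal nodes have degree $N^{O(N)}$ (edges indexed by possible roundings of the scaled target), rather than degree $2$ sign-queries. That custom construction is the theorem's content; you cannot cite it as a black box under Meiser's name.

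A secondary concern is the reduction to a bounded target. The paper first runs the polynomial-time $O(N/\log N)$-approximate CVPP algorithm of Dadush to shift the target into a ball of radius $N^2 2^{O(\ell)}$, then enumerates relevant hyperplanes in that ball, getting $m = 2^{O(N(\ell+\log N))}$. Your alternative of reducing modulo a fundamental parallelepiped of an LLL-reduced basis is cleaner in spirit, but LLL leaves a $2^{O(N)}$ approximation factor in the Gram--Schmidt profile, so the parallelepiped can be very skewed and the number of Voronoi cells meeting it is bounded only by $2^{O(N^2 + N\ell)}$, not $2^{O(N(\ell+\log N))}$. Since preprocessing is allowed to be expensive, you could repair this with an HKZ-reduced basis (poly$(N)$ approximation factor) or more directly bound $|S|$ via $\vol(P + \mu B_1)/\det\mathcal L$ with a near-cubic $P$, but LLL alone does not give the stated bound. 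Either way this is minor compared to the first gap.
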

\Cref{thm:CVPP} is in contrast to typical results on CVPP which usually focus on giving algorithms with both exponential pre-processing and run-time, but with improved constants (see e.g.\ \cite{laarhoven2016sieving}). In fact, \Cref{thm:CVPP} may come as somewhat of a surprise to those familiar with the literature since, like CVP, CVPP is actually still NP-hard \cite{micciancio2001hardness,alekhnovich2005hardness}. The contrast stems from the fact that such hardness results typically assume that while pre-processing is computationally unbounded, its output should be polynomial size. In contrast we output an exponential size data structure that can be \textit{queried} in polynomial time, analogous to NP-hard problems such as subset-sum which can also be solved efficiently via polynomial depth decision trees \cite{meyer1984polynomial} (indeed our techniques draw heavily from this literature). 

For the purposes of replicability, we will really only need to solve CVPP for targets within some bounded radius. We say a decision tree $R$-solves CVPP on $\mathcal{L}$ if for every $t \in \R^N$ with norm $\norm{t}{2} \leq R$, its corresponding path ends at a leaf labeled by the closest vector in $\mathcal{L}$ to $t$. We first give an algorithm for this bounded setting, then prove \Cref{thm:CVPP} as a corollary by reducing to this regime via known algorithms for approximate CVPP.
\begin{proposition}[Bounded CVPP]\label{prop:CVPP}
    Let $R>0$, $\ell, N \in \mathbb{N}$, and $\mathcal{L} \subset \R^N$ be a lattice given by a basis $B$ whose vectors have bit complexity at most $\ell$. There is a depth $O(N^2\log\frac{R+\mu}{\lambda})$ decision tree $\mathcal{T}$ satisfying
        \begin{enumerate}
            \item \textbf{Pre-processing}: $\mathcal{T}$ can be constructed in $N^{O(N^3\log\frac{R+\mu}{\lambda})}$ time and space
            \item \textbf{Run-time}: Given $\mathcal{T}$, CVP is solvable for all $\norm{t}{2} \leq R$ in $\text{poly}(N,\ell,\ell_t,\log R)$ time,
        \end{enumerate}
where we recall $\lambda$ is the length of the shortest non-zero vector in $\mathcal{L}$ and $\mu$ is the covering radius.
\end{proposition}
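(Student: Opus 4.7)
The plan is to reduce bounded CVPP to point location in a hyperplane arrangement and then invoke a Meiser-style decision tree for such arrangements.

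First, I would pin down the candidate set of closest vectors. Any target $t \in \R^N$ with $\norm{t}{2} \leq R$ has its true closest lattice point $u^*$ lying in the ball of radius $R + \mu$ around the origin, so it suffices to restrict attention to $S := \mathcal{L} \cap B_{R+\mu}(0)$. A standard packing argument (the open balls of radius $\lambda/2$ around distinct lattice points are pairwise disjoint and all fit inside $B_{R + \mu + \lambda/2}(0)$) gives $|S| \leq ((R + \mu)/\lambda)^{O(N)}$, with each vector in $S$ having bit complexity $\poly{N, \ell, \log R}$.

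Next, I would reduce CVP on $S$ to hyperplane point location. For each pair $u \neq v \in S$, the perpendicular bisector $H_{uv} = \{t \in \R^N : \langle t - (u+v)/2, v - u \rangle = 0\}$ is a hyperplane, and the sign of $\langle t - (u+v)/2, v - u \rangle$ records which of $u, v$ is closer to $t$. The Voronoi diagram of $S$ is a coarsening of the arrangement $\mathcal{H}$ of at most $|S|^2$ such hyperplanes in $\R^N$, so locating $t$ inside a cell of $\mathcal{H}$ determines its closest vector in $S$, and hence in $\mathcal{L}$ under the promise $\norm{t}{2} \leq R$.

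Finally, I would invoke a Meiser-style decision tree for point location in hyperplane arrangements. A suitable variant gives, for $n$ hyperplanes in $\R^N$, a decision tree of depth $O(N \log n)$ with preprocessing $n^{O(N)}$, where each internal node evaluates a sign test of a bounded-degree polynomial. Substituting $n = |S|^2$ yields depth $O(N \cdot N \log \tfrac{R+\mu}{\lambda}) = O(N^2 \log \tfrac{R+\mu}{\lambda})$ and preprocessing $((R+\mu)/\lambda)^{O(N^2)} \leq N^{O(N^3 \log \tfrac{R + \mu}{\lambda})}$. Each leaf is pre-labeled with the unique closest vector for its cell, computable during preprocessing by brute-forcing CVP over $S$ at one representative point per leaf. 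At query time, one walks the tree in $\poly{N, \ell, \ell_t, \log R}$ time by evaluating $O(N^2 \log \tfrac{R+\mu}{\lambda})$ polynomial tests on $t$.

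The main obstacle is securing the stated depth bound. Naive point location (testing against every hyperplane, or directly applying Meiser's original analysis with depth $O(N^4 \log n)$) is too deep by polynomial factors in $N$ and would only yield depth $O(N^5 \log \tfrac{R+\mu}{\lambda})$. Matching $O(N^2 \log \tfrac{R+\mu}{\lambda})$ requires a more refined cutting-based scheme, plausibly allowing higher-degree polynomial queries at each node---consistent with the ``degree $N^{O(N)}$'' decision tree claim in Theorem~5.1---or exploiting lattice-specific structure, notably that each Voronoi cell has at most $2(2^N - 1)$ facets, so that one need not materialize all $|S|^2$ perpendicular bisectors explicitly.
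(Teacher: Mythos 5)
Your high-level route---enumerate lattice points in $B_{R+\mu}$, pass to the arrangement of perpendicular bisectors, and solve point location by a shallow decision tree---is exactly the paper's. The enumeration, the bound $|S| \leq ((R+\mu)/\lambda)^{O(N)}$, and the depth/preprocessing arithmetic are all correct. Where the proposal falls short is precisely at the point you flag yourself: you have not actually supplied a mechanism that achieves depth $O(N\log n)$ in a deterministic decision tree, and this is the entire technical content of the proposition.

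The paper fills this gap via the approximate Forster transform (deterministic, strongly polynomial vector scaling of Dadush--Ramachandran, Lemma~\ref{lem:scaling}). Each internal node of the tree applies a precomputed linear transform $M: V \to V$ on a subspace $V$ containing a dense subset of remaining hyperplanes, chosen so that for \emph{every} unit direction, an $\Omega(1/N)$-fraction of the remaining hyperplanes has margin $\geq \frac{1}{2\sqrt{k}}$ after scaling. The query at the node is then a rounding of the transformed, projected target to the $\frac{1}{4N}$-grid; by Lemma~\ref{lem:margin-learner}, this rounding determines the correct sign for every large-margin hyperplane. The node has one child per grid point, so $N^{O(N)}$ children --- this is what ``degree $N^{O(N)}$'' means in Theorem~\ref{thm:rep-pre}, the branching factor of the tree, not the polynomial degree of a query predicate. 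Because an $\Omega(1/N)$-fraction of hyperplanes is resolved at each level, the depth is $O(N \log |H|)$, which gives the stated $O(N^2\log\frac{R+\mu}{\lambda})$. Without some such derandomized cutting argument, the Meiser-style scheme you invoke either gives depth $O(N^4\log n)$ (too deep) or is randomized (wrong model for preprocessing), as you correctly observe.

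Two smaller issues. First, the paper does not materialize all $|S|^2$ bisectors; it enumerates only the hyperplanes that bound a Voronoi face intersecting $B_R$, which are bisectors to the origin of lattice vectors in $B_{O(R+\mu)}$ --- at most $((R+\mu)/\lambda)^{O(N)}$ of them, not $((R+\mu)/\lambda)^{O(N^2)}$. Your count is still within the stated preprocessing bound after the inflation $((R+\mu)/\lambda)^{O(N^2)} \leq N^{O(N^3 \log\frac{R+\mu}{\lambda})}$, but the tighter count matters for the depth: $\log |H| = O(N\log\frac{R+\mu}{\lambda})$ already, so $O(N\log|H|) = O(N^2\log\frac{R+\mu}{\lambda})$ is achieved with a single factor of $N$ from the Forster cutting, not from the square on $|S|$. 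Second, ``pre-label each leaf by brute-forcing CVP at a representative point'' needs the guarantee that every target reaching that leaf has the same closest vector. In the paper's construction this is argued directly: along the root-to-leaf path, each relevant hyperplane is assigned a sign exactly once (when it enters $m_{M,v}$), and the Zero-Error property of Lemma~\ref{lem:margin-learner} ensures these signs agree with the true arrangement cell of any target following that path; the leaf is then labeled by the unique lattice vector consistent with those signs. Your version would need an analogous argument that the decision tree's leaves refine the arrangement cells, which does not follow automatically from a generic point-location black box unless it is already stated in those terms.
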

The proof of \Cref{prop:CVPP} is by reduction to \emph{point location}, a classical problem in computational geometry which asks ``given a hyperplane arrangement $H$ and a target point $t$, find the cell in the arrangement containing $t$.'' Point location has long been known to be efficiently solvable in the decision tree model via (randomized) $\text{poly}(N)\log(|H|)$ expected depth linear decision trees \cite{meyer1984polynomial}, with a long line of work eventually bringing the depth to its information theoretical optimum $O(N\log(|H|))$ \cite{meiser1993point,cardinal2015solving,kane2017active,kane2018generalized,ezra2019nearly,hopkins2020power,hopkins2020point}.

The closest vector problem may equivalently be rephrased as determining the Voronoi cell in which $t$ lies. In the bounded radius setting, this leads to an obvious reduction to point location with respect to the Voronoi faces that intersect the radius $R$ ball. The proof of \Cref{thm:CVPP} follows from building a deterministic decision tree for point location on this family based on a simplified deterministic variant of \cite{hopkins2020point}'s randomized linear decision trees for point location.

Concretely, our construction is based on the two following lemmas. The first is a simple method for learning `large margin' hyperplanes via rounding. A (homogeneous) hyperplane $\langle h, \cdot \rangle =0$ is said to have margin $\gamma$ with respect to $t$ if
\[
m_M(h) \coloneqq |\langle h, t \rangle| \geq \gamma,
\]
or in other words, if $h$ is $\gamma$-far from $t$. Even though the Voronoi hyperplanes are affine, that is of the form $\langle h, \cdot \rangle = b$, we can always embed any affine hyperplane $h$ and target $t$ into one higher dimension via the transform
\[
\langle h, \cdot \rangle = b \to (h,b) \quad \quad \text{and} \quad \quad t \to (t,-1).
\]
This preserves the shifted inner product
\[
\langle h, t \rangle - b = \langle (h,b),(t,-1) \rangle,
\]
where $\sign(\langle h, t \rangle - b) = \sign(\langle (h,b),(t,-1) \rangle)$ determines on which side of the (affine) hyperplane the target $t$ lies. As such, determining the Voronoi cell over the embedded hyperplanes is sufficient to solve CVP, and we will assume homogeneity for the rest of the section.

Finally, for any $\varepsilon>0$, we write $\text{Round}^\varepsilon(t)$ to be the rounding of $t$ to the closest factor of $\varepsilon$ in every component. We can now state our learner for large margin hyperplanes.
\begin{lemma}[Large Margin Inference]\label{lem:margin-learner}
    Given a family of $n$-dimensional homogeneous hyperplanes $H$ with unit normals and a target $t \in \R^N$, there is a deterministic algorithm $\mathcal{A}: \R \times H \to \{\pm 1,\bot\}$ such that given access to $r(t)=\text{Round}^{\frac{1}{4N}}(t)$, $\mathcal{A}$ has the following properties
    \begin{enumerate}
        \item \textbf{Zero-Error}: if $\mathcal{A}(r(t),h) \in \{\pm 1\}$, then $\mathcal{A}(r(t),\cdot) = \text{sgn}(\langle h, t\rangle)$
        \item \textbf{Large Margin Coverage:} $\forall h \in H$ with margin at least $\frac{1}{2\sqrt{N}}$, $\mathcal{A}(h) \in \{\pm 1\}$
        \item \textbf{Computational Efficiency:} $\mathcal{A}(r(t),h)$ can be computed in $O(N)$ time.
    \end{enumerate}
\end{lemma}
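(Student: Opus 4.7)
The plan is to have $\mathcal{A}$ compute $\langle h, r(t)\rangle$ and threshold its magnitude: output $\sign(\langle h, r(t)\rangle)$ when $|\langle h, r(t)\rangle| \ge \tau$ for some carefully chosen $\tau$, and output $\bot$ otherwise. The whole argument then reduces to picking $\tau$ so that (i) whenever $\mathcal{A}$ commits to a sign, the rounding perturbation cannot have flipped the true sign, and (ii) any $h \in H$ with true margin at least $\frac{1}{2\sqrt{N}}$ still clears the threshold.

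My first step is to bound the rounding error. Since $r(t)$ perturbs each coordinate of $t$ by at most $\frac{1}{8N}$, we have $\snorm{2}{r(t)-t} \leq \frac{1}{8\sqrt{N}}$, and Cauchy--Schwarz with the unit normal $h$ then gives $|\langle h, r(t)\rangle - \langle h, t\rangle| \le \frac{1}{8\sqrt{N}}$. This is the only analytic input to the proof.

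My second step is threshold calibration. Taking $\tau = \frac{1}{4\sqrt{N}}$ achieves both jobs simultaneously: Zero-Error follows because $|\langle h, r(t)\rangle| \ge \frac{1}{4\sqrt{N}}$ together with the $\frac{1}{8\sqrt{N}}$ perturbation bound forces $\langle h, t\rangle$ to be nonzero and to share the sign of $\langle h, r(t)\rangle$; Large-Margin Coverage follows because a true margin of at least $\frac{1}{2\sqrt{N}}$ becomes a rounded inner product of magnitude at least $\frac{1}{2\sqrt{N}} - \frac{1}{8\sqrt{N}} = \frac{3}{8\sqrt{N}} > \tau$. The $O(N)$ runtime is immediate since $\mathcal{A}$ performs a single $N$-dimensional inner product and one comparison.

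I do not expect any real obstacle here---the lemma is essentially a triangle-inequality argument calibrated to the rounding granularity $\frac{1}{4N}$, which was chosen precisely so that the rounded-inner-product slack is strictly less than half the large-margin threshold. The only subtlety worth flagging is that $r(t)$ must be oblivious to $h$, so the same rounded target works simultaneously for every $h \in H$; this is automatic since the perturbation bound $\snorm{2}{r(t)-t}\leq \frac{1}{8\sqrt{N}}$ depends only on $t$ and $N$.
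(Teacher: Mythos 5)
Your proof is correct and takes essentially the same route as the paper: compute $\langle h, r(t)\rangle$, threshold at $\tau = \frac{1}{4\sqrt{N}}$, and argue via a perturbation bound on $|\langle h, r(t)-t\rangle|$. The only cosmetic difference is that you control this quantity by Cauchy--Schwarz ($\snorm{2}{h}\snorm{2}{r(t)-t}$) whereas the paper uses H\"older ($\snorm{1}{h}\snorm{\infty}{r(t)-t}$ combined with $\snorm{1}{h}\leq\sqrt{N}$); both yield a perturbation bound of order $\frac{1}{\sqrt{N}}$, and your version is a factor of~$2$ tighter because you correctly track that rounding to the nearest multiple of $\frac{1}{4N}$ moves each coordinate by at most $\frac{1}{8N}$.
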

\begin{proof}
By definition we can expand the inner product in the standard basis as
\[
\langle h, t \rangle = \sum\limits_{i=1}^n h_it_i.
\]
As such, if we consider $r(t)=\text{Round}^\varepsilon(t)$ instead of $t$, we induce error at most
\begin{equation}\label{eq:margin-error}
|\langle h, t \rangle - \langle h, r(t) \rangle| \leq \varepsilon \norm{h}{1}
\end{equation}
by Holder's inequality. In particular if $|(\langle h, r(t) \rangle)| > \varepsilon \norm{h}{1}$ it must be the case that
\[
\text{sgn}(\langle h, t \rangle) = \text{sgn}(\langle h, r(t) \rangle).
\]
Moreover, any $h$ with true margin $m_M(h)$ at least $2\varepsilon\norm{h}{1}$ the rounded margin meets this threshold:
\[
|(\langle h, r(t) \rangle)| > \varepsilon \norm{h}{1}.
\]
By assumption, we have $\norm{h}{1} \leq \sqrt{N}$. Thus setting $\varepsilon=\frac{1}{4N}$, the algorithm $\mathcal{A}$ simply computes $\langle h, r(t) \rangle$ and outputs the corresponding sign if the absolute value is at least $\frac{1}{4\sqrt{N}}$, and $\bot$ otherwise. By the first observation above this procedure never errs, and by the second observation it always labels any hyperplane of margin at least $\frac{1}{2\sqrt{N}}$.
\end{proof}
We remark that the above argument actually works over any subspace $V \subset \R^N$ where the standard basis is replaced with some orthonormal basis of $V$.

In the setting of CVP, if $t$ is close to the boundary of the cell it is quite likely that there are exponentially many hyperplanes with low margin, and the above method will fail. This leads us to the key idea of Kane, Lovett, and Moran \cite{kane2018generalized}: the relative sign of the target and cell boundaries is \textit{invariant} under linear transformation and scaling. If, given $H$, we could find a generic transform for the remaining small margin hyperplanes that forces a good fraction of them to have large margin in the transformed space, we could again apply the above lemma and repeat until we find $t$. 

This may sound far fetched, since it has to be done with no (or at least minimal) knowledge of the target to ensure the size of the decision tree remains bounded. However, such transforms indeed exist, and can even be found in strongly polynomial time! The method is based on the so-called (approximate) `Forster transform', originally introduced by Barthe \cite{barthe1998reverse}, Forster \cite{forster2002linear}, and Dvir, Saraf, and Wigderson \cite{dvir2014breaking}, and further developed in the setting of hyperplane learning by Hopkins, Kane, Lovett, and Mahajan \cite{hopkins2020point}, Diakonikolas, Kane, and Tzamos \cite{diakonikolas2023strongly}, and Dadush and Ramachandran \cite{dadush2024strongly}. We use the lattermost's method which gives a deterministic, strongly polynomial\footnote{Note here we mean polynomial with respect to $\max\{|H|,n\}$, so this is really exponential time for our setting. We only use this algorithm in pre-processing so this is not a problem.} time algorithm for finding sufficiently strong approximate scalings.\footnote{We remark the below is not quite as stated in \cite{dadush2024strongly}. Namely the authors allow for approximate `right scalings' $r \in \R^N$. However, for small enough approximation error (say $\text{poly}(d^{-1})$) in our setting it is always possible to take the scaling vector as $r_j=\frac{1}{\norm{T h_j}{2}}$ which is implicit in our normalization of $M(h)$ below in Property (2).}
\begin{lemma}[Vector Scaling {\cite[Theorem 1.3]{dadush2024strongly}}]\label{lem:scaling}
    There is a deterministic, strongly polynomial time algorithm which given a family of homogeneous hyperplanes $H$ in $\R^N$, finds a $k$-dimensional subspace $V$\footnote{Formally, their algorithm outputs the set of $h \in H$ lying in this subspace.} and a linear isomorphism $M: V \to V$ satisfying
    \begin{enumerate}
        \item \textbf{Density:} $|V \cap H| \geq \frac{k}{N}|H|$
        \item \textbf{Margin:} For any unit vector $v \in V$, many transformed elements $M(h)$ have large margin:
        \[
        \Pr_{h \in H}\left[\left\langle \frac{M(h)}{\norm{M(h)}{2}},v \right\rangle \geq \frac{1}{2\sqrt{k}}\right] \geq \frac{1}{2k}
        \]
    \end{enumerate}
\end{lemma}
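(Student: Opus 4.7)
The plan is to reduce the lemma to producing a \emph{Forster-balanced} (radially isotropic) system. Recall that a collection of unit vectors $\{v_1, \ldots, v_m\} \subset \R^k$ is said to be in Forster position when $\sum_i v_i v_i^\top = (m/k) I_k$. A classical theorem of Barthe and Forster asserts that such a linear isomorphism $M$ exists for any system $H \subset \R^k$ provided $H$ is in \emph{general position} with respect to proper subspaces: namely $|H \cap W| / \dim W \leq |H|/k$ for every $W \subsetneq \R^k$. I would take this existence result as a black box.

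Given this, I would argue Properties (1) and (2) by a recursive dichotomy. If $H$ is already in general position in $\R^N$, set $V = \R^N$ and take $M$ to be the Forster transform furnished above. Otherwise, there must be a proper subspace $V \subsetneq \R^N$ with $|H \cap V|/\dim V > |H|/N$; pass to $V$ and recurse. Since dimensions strictly decrease, the recursion terminates in at most $N$ steps, and on termination the subset $H \cap V$ lies in general position inside $V$, yielding both a valid Forster transform $M: V \to V$ and the density bound $|V \cap H|/k \geq |H|/N$. For the margin property, once $M(H \cap V)$ is in Forster position, any unit $v \in V$ satisfies
\[
\sum_{h \in H \cap V} \left\langle \frac{M(h)}{\|M(h)\|_2}, v\right\rangle^2 \;=\; \frac{|H \cap V|}{k}.
\]
A standard Paley--Zygmund / second-moment argument applied to this identity (using that each summand lies in $[0,1]$) produces an $\Omega(1/k)$ fraction of indices with $|\langle M(h)/\|M(h)\|_2, v \rangle| \geq 1/(2\sqrt{k})$, which is exactly the required margin guarantee.

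The main obstacle, and the real contribution of \cite{dadush2024strongly} over prior weakly polynomial constructions such as \cite{diakonikolas2023strongly}, is making the construction of $M$ \emph{strongly} polynomial. My plan here is to optimize a convex potential $\Phi$ on $\mathrm{GL}(V)$ -- essentially the log of a partition function associated to the Gibbs distribution over $H$ reweighted by the current transform -- whose minimizer is precisely the Forster position and whose gradient at a given $M$ measures the deviation from isotropy. An (approximate) gradient descent scheme decreases $\Phi$ by an inverse-polynomial amount per step, and the total potential range is polynomial, so $\mathrm{poly}(N,|H|)$ iterations suffice to reach an approximate Forster transform strong enough to drive the margin argument above. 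The delicate part is controlling the bit-complexity of intermediate iterates: naive gradient steps produce rationals whose encoding grows additively per iteration. I expect the bulk of the work to be in designing a rounding/truncation scheme that keeps each iterate polynomial-size while preserving enough of the potential decrease to guarantee termination, mirroring the techniques in \cite{dadush2024strongly} for radially isotropic position.
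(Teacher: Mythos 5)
The paper does not prove this lemma: it is imported wholesale from Dadush--Ramachandran (cited as their Theorem~1.3), with a one-sentence remark that their result gives an algorithm which either puts $H$ into approximate unit isotropic position or exhibits a dense subspace, and that the reformulation above was observed in~\cite{kane2018generalized,hopkins2020point}. So there is no ``paper's proof'' to compare against.

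Your sketch is consistent with the route that remark describes: you invoke Barthe--Forster existence under the general-position condition $|H\cap W|/\dim W \le |H|/k$ for all proper $W$, recurse into a dense subspace whenever that condition fails, and note the strictly increasing density ratio to conclude $|H\cap V|/k \ge |H|/N$. Your margin calculation is fine once stated carefully: from $\frac{1}{|H\cap V|}\sum_h \langle u_h,v\rangle^2 = 1/k$ with each summand in $[0,1]$, a Markov-style averaging argument (not really Paley--Zygmund, which bounds the other tail) gives a $\ge 3/(4k)$ fraction with $|\langle u_h,v\rangle|\ge 1/(2\sqrt{k})$; this matches the lemma modulo a sign convention, which is immaterial since each $h$ can be replaced by $-h$ without changing the hyperplane. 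Two caveats are worth flagging. First, the recursive-dichotomy step, as you've written it, presupposes an oracle that detects general position or produces a witnessing dense subspace; detecting this directly is not obviously tractable, and in Dadush--Ramachandran it is the scaling algorithm itself that \emph{either} converges to (approximate) Forster position \emph{or} exposes a dense subspace as a byproduct -- the dichotomy and the algorithm are not separable the way your outline presents them. Second, the strongly polynomial time guarantee, which you correctly identify as the real content of the citation, is left as an expectation in your sketch; the bit-complexity control there (rounding iterates without destroying potential progress) is precisely where a blind reconstruction would have to do the most work, and your description of it is a plan rather than an argument. Given that the paper treats the whole lemma as a black box, this is acceptable, but a proof of the lemma proper would need to fill that in.
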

More accurately, Dadush and Ramachandran give an algorithm that either scales $H$ into `approximate unit isotropic position', or finds a subspace $V$ with many vectors (in which case such a transform on $\R^N$ is impossible). It is observed in \cite{kane2018generalized,hopkins2020point} that this implies the above reformulation.

Given these two lemmas, our CVPP algorithm is roughly given by the following two procedures. First is the pre-processing algorithm, which is a modified point location procedure. In particular, given a lattice $\mathcal{L}$ and radius $R$, we assume $\rCVPPP$ is fed the family of hyperplanes $H$ which correspond to the faces of any Voronoi cell intersecting $B_R$. In the formal argument we will just intake a lattice basis and compute the above, but it is convenient to express the algorithm in terms of $H$ to set up its recursive structure.
\IncMargin{1em}
\begin{algorithm}[h!]

\SetKwInOut{Input}{Input}\SetKwInOut{Output}{Output}\SetKwInOut{Parameters}{Parameters}
\Input{Voronoi Relevant Hyperplanes $H$}
\Output{Decision Tree $\mathcal{T}$}



$\mathcal{T} \gets \emptyset$

\While{$H \neq \emptyset$}{
    
    $M,V \gets \text{Scale}(H)$ (\Cref{lem:scaling})

    Add node $(M,V)$ to $\mathcal{T}$

    \For{$v \in \{-1,-1 +\frac{1}{4N},\ldots,1-\frac{1}{4N},1\}$}{

    $m_M \gets \left\{h \in H \cap V: \left|\left\langle \frac{M(h)}{\norm{M(h)}{2}}, v \right\rangle_V\right| \geq \frac{1}{2\sqrt{\text{dim}(V)}}\right\}$

    Add sub-tree \rCVPP$(H \setminus m_M)$ as `$v$'-Child of $\mathcal{T}(M,V)$
}
}
\Return leaf labeled with $w \in \mathcal{L}_H$ matching $\sign\left(\left\langle \frac{M(h)}{\norm{M(h)}{2}}, v \right\rangle_V\right)$ from root-to-leaf path.

\caption{$\rCVPPP(H)$} 
\label{alg:CVPP}

\end{algorithm}
\DecMargin{1em}

\Cref{alg:CVPP} corresponds to building a decision tree whose internal nodes are labeled by linear transformations $M: V \to V$ (and a corresponding basis for $V$) computed via \Cref{lem:scaling}, and whose edges are labeled by $\{-1,-1+\frac{1}{4N},\ldots,1-\frac{1}{4N},1\}^{\text{dim}(V)}$ for some sufficiently small $\varepsilon$ (potential roundings of $\text{Proj}_V(t)$). Given access to this tree, the real-time algorithm traverses it starting at the root by querying for the transformation and basis, then following the edge corresponding to rounding the scaled target:
\IncMargin{1em}
\begin{algorithm}

\SetKwInOut{Input}{Input}\SetKwInOut{Output}{Output}\SetKwInOut{Parameters}{Parameters}
\Input{Decision Tree $\mathcal{T}$, target vector $t$}
\Output{Minimizer $w \in \mathcal{L}$ of $\norm{x-t}{2}$}



Node $\gets \mathcal{T}$(root)

\While{$\mathcal{T}(\text{\emph{Node}})$ is internal}{

    Query $\mathcal{T}(\text{Node})$ for scaling $(M,V)$

    $t_{M,V} \gets (M^{-1})^T\left(\text{Proj}_V(t)\right)$

    $r \gets \text{Round}^{\frac{1}{4N}}\left(\frac{t_{M,V}}{\norm{t_{M,V}}{2}}\right)$

    Node $\gets$ $r$-child of $\mathcal{T}(\text{Node})$
    


}

\Return vector $w \in \mathcal{L}$ labeling leaf $\mathcal{T}(\text{Node})$

\caption{$\rCVPPR(\mathcal{T},t)$} 
\label{alg:CVPPR}

\end{algorithm}
\DecMargin{1em}

We now give the formal analysis.
\begin{proof}[Proof of \Cref{prop:CVPP}]~
\vspace{.2cm}
\\
\textbf{Construction:} We first describe the construction of our decision tree, then prove its correctness and compute the pre-processing and runtime costs. Our first step in is to enumerate the hyperplanes defining the faces of any Voronoi cell non-trivially intersecting the ball $B_R$. We call such hyperplanes `relevant'. The lattice points defining these faces must sit within $B_{R+\mu}$, so to enumerate all relevant hyperplanes it is enough to enumerate all lattice points of norm $O(R+\mu)$ and take their bisectors. The total number of relevant hyperplanes is at most $\left(\frac{R+\mu}{\lambda}\right)^{O(N)}$ by elementary volume arguments, and enumeration can be done by the classical Kannan-Fincke-Pohst method \cite{kannan1983improved,fincke1985improved} in time $(\frac{R+\mu}{\lambda})^{O(N\log(N))}$. 

After completing enumeration, embed the relevant hyperplanes $H$ as homogeneous in one higher dimension as described above. We now recursively construct the described decision tree. We briefly introduce some relevant notation allowing us to freely move between $\R^N$ and subspaces. For a $k$-dimensional subspace $V \subset \R^N$, let $\{v_i\}_{i \in [k]}$ denote an orthonormal basis\footnote{Here we mean orthornormal with respect to the original inner product on $\R^N$.} for $V$, and define its corresponding $k$-dimensional inner product between vectors $v=\sum\limits_{i=1}^k a_i v_i$ and $w = \sum\limits_{i=1}^k b_i v_i$ as
\[
\langle v,w \rangle_V = \sum\limits_{i=1}^k a_ib_i = \langle v, w \rangle
\]
where the last equality views $v$ and $w$ as elements of $\R^N$ and is by orthonormality.

To construct the root node of our tree, \Cref{lem:scaling} states we can find a $k$-dimensional subspace $V$ containing an $\Omega(\frac{k}{N+1})$ fraction of $|H|$ and an invertible linear transform $M:V \to V$ such that for any unit vector $v \in V$:
\[
\Pr_{h \in H \cap V}\left[ \left\langle \frac{M(h)}{\norm{M(h)}{2}},v \right\rangle_V > \frac{1}{2\sqrt{k}} \right] \geq \frac{1}{2k}.
\]
$V$ is given as a list of vectors $h \in H$, that lie in $V$, which we use to generate an orthonormal basis in $\text{poly}(N,|H|)$ time.

We now introduce a child for every $v \in \{-1,-1+\frac{1}{4N},\ldots,1-\frac{1}{4N},1\}^{k}$, corresponding to possible roundings of (scaled and projected) target vectors for CVP in the $\{v_i\}$ basis. The main idea is that after scaling, projecting, and rounding, it is still possible to correctly learn the sign of any `large margin' hyperplanes:
\[
m_{M,v} \coloneqq \left\{h \in H \cap V: \left|\left\langle \frac{M(h)}{\norm{M(h)}{2}},v \right\rangle_V\right| \geq \frac{1}{2\sqrt{k}}\right\}.
\]
In particular, we prove in the correctness section that if $$
v=\text{Round}^{\frac{1}{4N}}\left(\frac{1}{\norm{(M^{-1})^T(\text{Proj}_V(t))}{2}}(M^{-1})^T(\text{Proj}_V(t))\right) \, ,
$$
that is the rounded transformation of the target $t$, then all hyperplanes $h \in m_{M,V}$ satisfy
\[
\text{sgn}\left( \langle h, t \rangle\right) = \text{sgn}\left(\left\langle \frac{M(h)}{\norm{M(h)}{2}},v \right\rangle_V\right)
\]

With this in mind, the child node corresponding to $v$ then falls into one of two cases. Either $H_v=H \setminus m_{M,v}$ is non-empty, in which case we recursively repeat the above process starting from $H_v$, or $H_v = \emptyset$, in which case the child becomes a leaf. In the latter case, we now need to compute the lattice vector corresponding to this leaf $L_v$.

To do this, consider the unique path from the root to $L_v$. Each hyperplane $h \in H$ lies in the set $m_{M,v}$ for exactly one node in the path. Assign $h$ the label
\[
L(h) \coloneqq \text{sgn}\left(\left\langle \frac{M(h)}{\norm{M(h)}{2}},v \right\rangle_V \right)
\]
of its corresponding $(M,v)$ pair. As discussed above, these signs correctly label every hyperplane for any target following this root to leaf path. To find the corresponding lattice vector, we then enumerate through $\mathcal{L}_H$ until we find a lattice vector $w$ such that 
\[
\forall h \in H: \text{sgn}(\langle w, h \rangle) = L(h)
\]
In other words, if all labels are correct, this means that $w$ and any target following the root to leaf paths lie in the same Voronoi cell as desired. Note that each lattice vector lies in a different cell of the arrangement $H$, so there is at most one possible matching vector. It is also possible no such vector is found. This is because we have been somewhat lax and allowed paths that don't correspond to any valid target. Such paths are irrelevant and can either be pruned or simply left unlabeled.

\paragraph{Correctness:} Fix any $\norm{t}{} \leq R$. It is enough to argue that following the path generated by edge
\[
v = \text{Round}^{\frac{1}{4N}}\left(\frac{1}{\norm{(M^{-1})^T(\text{Proj}_V(t))}{2}}(M^{-1})^T(\text{Proj}_V(t))\right)
\]
in every step ends at a leaf labeled by $t$'s closest vector in $v_t \in \mathcal{L}$, where rounding is performed with respect to a fixed orthonormal basis for $V$ as above, not with respect to the standard basis.

By construction of the leaf node labelings, it is sufficient to argue that the labelings $L(H)$ corresponding $t$'s root-to-leaf path match the labelings of $v_t$. To see this, denote the transformed target \textit{before} rounding as
\[
v' = \frac{1}{\norm{(M^{-1})^T(\text{Proj}_V(t))}{2}}(M^{-1})^T(\text{Proj}_V(t))
\]
Observe that in this case the sign of the transformed inner product with any $h \in H \cap V$ indeed matches the sign of the original inner product:
\begin{align*}
    \sign\left( \left\langle \frac{M(h)}{\norm{M(h)}{2}}, v' \right\rangle_V \right) &= \sign\left( \left\langle \frac{M(h)}{\norm{M(h)}{2}}, \frac{(M^{-1})^T(\text{Proj}_V(t))}{\norm{(M^{-1})^T(\text{Proj}_V(t))}{2}} \right\rangle_V \right)\\
    &= \sign\left( \left\langle \frac{h}{\norm{M(h)}{2}}, \frac{\text{Proj}_V(t)}{\norm{(M^{-1})^T(\text{Proj}_V(t))}{2}} \right\rangle_V \right) \\
    &= \sign\left( \left\langle h, \text{Proj}_V(t) \right\rangle_V \right) \\
    &= \sign(\langle h, t \rangle)
\end{align*}
On the other hand, \Cref{lem:margin-learner} shows that for any $h \in m_{T,v}$ we have
\[
\sign\left( \left\langle \frac{M(h)}{\norm{M(h)}{}}, v' \right\rangle \right) = \sign\left( \left\langle \frac{M(h)}{\norm{M(h)}{}}, \text{Round}^{\frac{1}{4N}}(v') \right\rangle \right) = \sign\left( \left\langle \frac{M(h)}{\norm{M(h)}{}}, v \right\rangle \right)
\]
Combining these facts, we get that the root-to-leaf labeling $L(H)$ for $t$ exactly records its cell in the Voronoi diagram. Since by construction the leaf of $t$ is labeled by the lattice vector matching these signs (i.e.\ the corresponding vector of the cell) we are done.

\paragraph{Pre-Processing, Depth, and Runtime}

By \Cref{lem:scaling}, each level of the decision tree removes at least a $\Omega(\frac{1}{N})$-fraction of the remaining hyperplanes. Thus after $O(N\log(|H|)) \leq O(N^2\log(\frac{R+\mu}{\lambda}))$ rounds, every branch of the construction halts giving the desired depth bound. Each node has at most $N^{O(N)}$ children, so the size bound follows. Computationally, the pre-processing is dominated by the time of constructing the Forster transform $T$ for every node, which takes \[
\text{poly}(\ell) \left(\frac{R+\mu}{\lambda}\right)^{O(N)}N^{O(N^3\log(\frac{R+\mu}{\lambda}))} \leq N^{O(N^3\log(\frac{R+\mu}{\lambda}))}
\]
time altogether \cite{dadush2024strongly}. The orthonormal basis of each $V$ can also be computed in time polynomial in the bit complexity and number of relevant hyperplanes $|H|=\left(\frac{R+\mu}{\lambda}\right)^{O(N)}$. For each of the $N^{O(N)}$ possible roundings, computing the set $m_{T,v}$ simply amounts to computing at most $|H|$ inner products.

Given access to the constructed tree, recall the real-time algorithm $\innerAlg$ queries the tree in each round for the desired transform $M: V \to V$ and an othornormal basis $\{v_i\}$ for $V$ of polynomial bit complexity. $\innerAlg$ then computes the corresponding rounding $\text{Round}^{\frac{1}{4N}}\left(\frac{1}{\norm{(M^{-1})^T(\text{Proj}_V(t))}{2}}(M^{-1})^T(\text{Proj}_V(t))\right)$ in $\text{poly}(N,\ell,\ell_t)$ time. $\innerAlg$ then traverses the edge corresponding to this rounding and repeats until it hits a leaf. Each node has $N^{O(N)}$ children so traversal can be implemented in $O(N \log N)$-time in a standard pointer model, and the number of steps is at most the depth of the tree. This gives a runtime in terms of the covering and packing radius. To relate to bit complexity, we appeal to the following elementary lemma
\begin{lemma}[{\cite[Lemma 2.1]{dadush2014closest}}]
    Let $\mathcal{L} \subset \Q^N$ be a lattice and $B$ a basis represented by $\ell$ bits. Then $\mu \leq 2^{O(\ell)}$ and $\lambda \geq 2^{-O(\ell)}$.
\end{lemma}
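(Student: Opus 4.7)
The plan is to prove the two bounds separately by elementary arguments about bases of rational lattices.

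For the upper bound $\mu \leq 2^{O(\ell)}$ on the covering radius, I would proceed as follows. Since the basis $B = (b_1,\dots,b_N)$ has total bit complexity $\ell$, every entry of every basis vector is a rational with numerator and denominator of absolute value at most $2^{O(\ell)}$. In particular each $\norm{b_i}{2} \leq 2^{O(\ell)}$. Given an arbitrary target $x \in \R^N$, since $B$ is a (rational) basis of the full-dimensional lattice one can write $x = B\alpha$ for a unique $\alpha \in \R^N$. Rounding each coordinate to its nearest integer gives a lattice point $v = B\lfloor\alpha\rceil \in \mathcal{L}$ with
\begin{equation*}
\norm{x-v}{2} = \norm{B(\alpha - \lfloor\alpha\rceil)}{2} \leq \tfrac{1}{2}\sum_{i=1}^N \norm{b_i}{2} \leq \tfrac{N}{2}\cdot 2^{O(\ell)} = 2^{O(\ell)}.
\end{equation*}
Taking the supremum over $x$ yields the desired bound on $\mu$.

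For the lower bound $\lambda \geq 2^{-O(\ell)}$ on the packing radius, I would reduce to the integer case by scaling. Let $Q$ be the least common denominator of all $N^2$ entries of $B$. Since the total bit complexity of the basis is $\ell$, the product (hence the LCM) of the denominators is bounded by $2^{O(\ell)}$, so $Q \leq 2^{O(\ell)}$. Then $Q \cdot B$ is an integer matrix, and the scaled lattice $Q\mathcal{L} = \mathcal{L}(QB)$ is a sublattice of $\Z^N$. Every non-zero vector in $\Z^N$ has squared $\ell_2$-norm at least $1$, so $\lambda(Q\mathcal{L}) \geq 1$, and therefore
\begin{equation*}
\lambda(\mathcal{L}) = \tfrac{1}{Q}\,\lambda(Q\mathcal{L}) \geq \tfrac{1}{Q} \geq 2^{-O(\ell)},
\end{equation*}
as required.

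There is no real obstacle in this proof; the only subtlety is the LCM bound on $Q$, which is immediate once one observes that the total bit complexity $\ell$ simultaneously bounds the bit lengths of all denominators, so their LCM (and even their product) is at most $2^{O(\ell)}$.
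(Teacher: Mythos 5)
Your proof is correct and self-contained; since the paper simply cites this lemma from Dadush--Kun (\cite{dadush2014closest}) without reproducing an argument, there is no ``paper's proof'' to compare against, and both of your halves are exactly the standard elementary arguments one would give. The rounding bound $\norm{B(\alpha - \lfloor\alpha\rceil)}{2} \leq \tfrac{1}{2}\sum_i\norm{b_i}{2}$ correctly handles the covering radius, and the clear-denominators reduction to $\Z^N$ handles the packing radius. One trivial nit: under the paper's convention $\lambda(\Lambda) = \tfrac{1}{2}\inf_{x\in\Lambda\setminus\{0\}}\norm{x}{2}$, a shortest nonzero vector of length $\geq 1$ gives $\lambda(Q\mathcal{L})\geq \tfrac{1}{2}$ rather than $\geq 1$; this constant is absorbed into the $2^{-O(\ell)}$ and does not affect the conclusion. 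Your implicit uses of $N \leq 2^{O(\ell)}$ (e.g.\ in $\norm{b_i}{2}\leq\sqrt{N}\cdot 2^{O(\ell)}$ and in summing $N$ terms) are justified since a full-rank basis has $N^2$ entries and thus $\ell \geq N^2$.
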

Thus $\log \frac{R+\mu}{\lambda} \leq O(\log(R)+\ell)$, giving the desired result.
\end{proof}
The proof of the main theorem is now essentially immediate from combining the above with the following efficient algorithm for approximate CVPP, which we use to find a lattice vector to shift the target into a bounded ball.
\begin{theorem}[Approximate CVPP {\cite{dadush2014closest}}]\label{thm:apx-CVPP}
    Let $\ell,N \in \mathbb{N}$ and $\mathcal{L} \subset \R^N$ be a lattice given by a basis $B$ with bit complexity $\ell$. There is an algorithm which after a $\text{poly}(\ell)2^{O(N)}$-time pre-processing procedure on $B$ outputs an advice string of size $\text{poly}(N,\ell)$, given any $t \in \R^N$ computes $y \in \mathcal{L}$ satisfying
    \[
    \norm{y - t}{2} \leq O\left(\frac{N}{\log(N)}\min_{x \in \mathcal{L}}\{\norm{x-t}{2}\}\right)
    \]
    in $\text{poly}(N,\ell,\ell_t)$ time.
\end{theorem}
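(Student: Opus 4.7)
The plan is to prove this via a two-stage approach: use the $2^{O(N)}$ preprocessing budget to compute a strongly reduced basis of $\mathcal{L}$, then answer each CVP query in polynomial time by running a variant of Babai's nearest-plane algorithm on this basis. The key geometric fact to exploit is that the approximation factor of Babai's nearest-plane algorithm with respect to a basis $B' = (b'_1, \dotsc, b'_N)$ is essentially controlled by the ratios $\max_i \|b_1^*\|_2 / \|b_i^*\|_2$ of the Gram-Schmidt vectors, and block-reduction with block size $\Theta(\log N)$ suffices to drive this ratio down to $O(N/\log N)$.

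In the preprocessing phase, starting from $B$, I would first LLL-reduce in $\poly(\ell)$ time to obtain basis vectors with bounded bit complexity, then run a slide-reduction or BKZ-style reduction using blocks of size $k = \Theta(\log N)$. Each block reduction requires solving SVP (or HKZ) on a $k$-dimensional sublattice, which can be done in $2^{O(k)} = \poly(N)$ time per call via e.g.\ the Micciancio–Voulgaris Voronoi-cell algorithm, with the total number of calls being $\poly(N, \ell)$. This process runs in $2^{O(N)}\poly(\ell)$ overall and produces a basis $B'$ (together with its Gram-Schmidt orthogonalization truncated to $\poly(N,\ell)$ bits) whose orthogonal lengths satisfy $\|b_1^*\|_2 / \|b_i^*\|_2 = O(N/\log N)$ for all $i$. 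The advice string is exactly $B'$ and its truncated Gram-Schmidt data.

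At query time, given $t \in \R^N$ of bit complexity $\ell_t$, I would run Babai's nearest-plane algorithm: iteratively project onto the affine hyperplane spanned by $b'_1, \dotsc, b'_{N-1}$ through the nearest integer multiple of $b'_N$ in the $b_N^*$-direction, recurse, and return the resulting lattice point $y$. Correctness of the approximation factor then reduces to a standard inductive argument: the distance $\|y - t\|_2^2$ is bounded by $\tfrac{1}{4}\sum_i \|b_i^*\|_2^2$, while any lattice vector $x$ within the half-open parallelepiped determined by $B'$ around $t$ has $\|x - t\|_2 \geq \tfrac{1}{2} \min_i \|b_i^*\|_2$. Combining these with the ratio bound from block reduction yields the claimed $O(N/\log N)$ approximation factor.

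The main obstacle is the transference-type inequality bounding $\max_i \|b_1^*\|_2 / \|b_i^*\|_2$ for block-reduced bases; this requires carefully leveraging Mordell-type inequalities applied block-by-block (as in the Gama–Nguyen analysis of slide reduction) to turn a logarithmic-sized HKZ guarantee on each block into the global $O(N/\log N)$ bound. A secondary technical point is verifying that truncating the Gram–Schmidt coefficients to $\poly(N,\ell)$ bits in the advice does not destroy the approximation guarantee, which follows from standard perturbation bounds on Babai's rounding once $B'$ itself has polynomial bit complexity (guaranteed by the initial LLL step).
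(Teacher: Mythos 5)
You are trying to prove from scratch a statement that the paper itself never proves: it is imported as a black box from the cited work of Dadush et al.\ (search-CVPP with polynomial advice), so there is no in-paper argument to match, and your attempt must stand on its own. It does not, because its central quantitative claim is wrong. Under BKZ/slide reduction with block size $k$, the Gram--Schmidt norms decay \emph{geometrically}, by roughly $\gamma_k^{1/(k-1)}\approx k^{1/k}$ per index (this is exactly what the Mordell/Gama--Nguyen block analysis yields), so the end-to-end ratio $\max_i \|b_1^*\|_2/\|b_i^*\|_2$ is of order $k^{\Theta(N/k)}$; with $k=\Theta(\log N)$ that is $2^{\Theta((N/\log N)\log\log N)}$ --- super-polynomial, nowhere near $O(N/\log N)$. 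Forcing a polynomial ratio requires block size nearly linear in $N$ (which the $2^{O(N)}$ preprocessing budget could afford, e.g.\ full HKZ reduction, but this destroys the stated rationale for $k=\Theta(\log N)$).

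Even granting a perfectly balanced Gram--Schmidt profile, the two inequalities you plan to combine do not give a ratio-only bound. Nearest plane guarantees $\|y-t\|_2\le \tfrac12\bigl(\sum_i\|b_i^*\|_2^2\bigr)^{1/2}\approx \tfrac{\sqrt N}{2}\max_i\|b_i^*\|_2$, while your lower bound (``any lattice vector in the half-open parallelepiped is at distance $\ge\tfrac12\min_i\|b_i^*\|_2$'') is false as stated; the correct fact is that nearest plane is exact when $\mathrm{dist}(t,\mathcal L)<\tfrac12\min_i\|b_i^*\|_2$, and otherwise one divides the upper bound by $\mathrm{dist}(t,\mathcal L)$, which costs an additional factor of roughly $\sqrt N\cdot\max_i\|b_i^*\|_2/\min_i\|b_i^*\|_2$. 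Consequently the reduction-plus-Babai template tops out near the classical Lagarias--Lenstra--Schnorr $O(N^{3/2})$ bound; no basis-reduction-plus-Babai argument is known to reach even $O(N)$, let alone $O(N/\log N)$. The cited result is obtained by entirely different machinery: the Aharonov--Regev periodic Gaussian function $\sum_{y\in\mathcal L}e^{-\pi\|x-y\|_2^2}$ serves as the polynomial-size advice and Dadush--Regev--Stephens-Davidowitz make it algorithmic, giving $O(N/\sqrt{\log N})$ (note the factor as printed in the theorem is even slightly stronger than that; for this paper's application any $\mathrm{poly}(N)$ factor suffices, and indeed even Babai with an LLL basis would do at the cost of a polynomially deeper decision tree).
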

We are finally ready to prove \Cref{thm:CVPP}.
\begin{proof}[Proof of \Cref{thm:CVPP}]
    Our pre-processing of the basis $B$ will simply consist of the decision tree from \Cref{prop:CVPP} with radius $R=N^22^{O(\ell)}$ and the advice string of \Cref{thm:apx-CVPP}.

    Given this structure, to compute the closest vector to a target $t \in \R^N$, we first run approximate-CVPP to find a lattice vector $y$ such that
    \[
    \norm{t-y}{2} \leq O\left(\frac{N}{\log(N)}\mu\right) \leq n2^{O(\ell)}
    \]
    where we've again used the fact that $\mu \leq 2^{\ell}$. Since $\norm{t-y}{2} \leq R$, we can now run our CVPP algorithm for bounded targets to find the closest $x \in \mathcal{L}$ to $t-y$. It is an elementary observation that $x+y$ is then the closest vector to $t$. Correctness is therefore immediate from the correctness of \Cref{prop:CVPP} and \Cref{thm:apx-CVPP}.

    The pre-processing cost of this algorithm is dominated by the construction of the decision tree for $B$, which takes $2^{O(N^3\log \frac{R+\mu}{\lambda})}\leq 2^{O(N^3(\ell+\log(N))}$ time and space. The decision tree is of depth $O(N^2(\ell+\log(N)))$, so the running time of given the target $t$ is $\text{poly}(N,\ell,\ell_t)$. Approximate CVPP also runs in time $\text{poly}(N,\ell,\ell_t)$, which gives the final desired runtime.
\end{proof}

For completeness, we now give the proof of \Cref{thm:rep-pre}.
\begin{proof}[Proof of \Cref{thm:rep-pre}]
    Recall \Cref{thm:ell-infty-mean} gives an efficient, sample-optimal algorithm for mean estimation assuming access to an rounding oracle of an (approximate) isoperimetric tiling. \Cref{prop:CVPP} shows that given access to the tree $\mathcal{T}$ correpsonding to a $O(1)$-spherical lattice, this oracle can be simulated in $\text{poly}(N)$ time. The result is then immediate from the existence of spherical lattices (\Cref{lem:sphere-lattice}).
\end{proof}

\section{Adaptivity and Replicability}
\label{sec:adaptive-replicability}

In this section, we discuss \textit{adaptivity} as a technique for building sample and computationally efficient algorithms for the $N$-Coin problem.
The main result of the section gives an adaptive algorithm matching the worst-case sample complexity of the best known non-adaptive algorithms for $N$-Coins.
Furthermore, we obtain an algorithm with improved expected sample complexity and time complexity that is linear in sample complexity, thus obtaining a truly efficient algorithm (without requiring preprocessing). 

\begin{theorem}[\Cref{thm:r-n-coin-problem}, formal]
    \label{thm:r-n-coin-problem-formal}
    Let $4 \delta < \rho < \frac{1}{2}$.
    There is a $\rho$-replicable algorithm solving the $N$-coin problem with expected coordinate sample complexity (and running time) $\bigO{\frac{N^2 q_0 \log(N/\delta)}{(q_0 - p_0)^2 \rho}}$ and worst case coordinate sample complexity (and running time) $\bigO{\frac{N^2 q_0 \log(N/\delta)}{(q_0 - p_0)^2 \rho^2}}$.
\end{theorem}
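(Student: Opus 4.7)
The plan is to obtain the $N$-coin algorithm by composing $N$ parallel instances of the adaptive single-coin tester $\rAdaptiveCoinTester$ (\Cref{alg:r-adapt-coin-problem}) from \Cref{thm:r-adapt-coin-problem}. Concretely, for each coordinate $i \in [N]$, instantiate $\rAdaptiveCoinTester$ with bias thresholds $p_0, q_0$, replicability parameter $\rho' = \rho/(2N)$, failure parameter $\delta' = \delta/N$, and its own fresh internal random string $r_i$. Run all $N$ instances in parallel on the corresponding coordinate samples and output $\calO = \{i : \rAdaptiveCoinTester_i = \accept\}$. Because the per-coordinate testers draw only coordinate samples from coin $i$, the $N$ invocations operate on disjoint samples and can share no state other than via the (independent) random strings $r_1,\ldots,r_N$, which together form the internal randomness of the composed algorithm.

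For correctness, by \Cref{thm:r-adapt-coin-problem} each $\rAdaptiveCoinTester_i$ fails to correctly classify coin $i$ with probability at most $\delta/N$; a union bound over the $N$ coordinates then gives overall failure probability at most $\delta$. For replicability, note that two independent runs with the \emph{same} random tuple $(r_1,\ldots,r_N)$ correspond to two independent fresh sample sequences per coordinate with the same internal randomness $r_i$ per coordinate. Thus by \Cref{thm:r-adapt-coin-problem} each $\rAdaptiveCoinTester_i$ reproduces its output with probability at least $1-\rho/(2N)$; a union bound yields non-replication probability at most $\rho/2$.

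For sample complexity, \Cref{thm:r-adapt-coin-problem} says each instance uses expected
\[
\bigO{\frac{q_0 \log(1/\delta')}{(q_0 - p_0)^2 \, \rho'}} = \bigO{\frac{N q_0 \log(N/\delta)}{(q_0 - p_0)^2 \, \rho}}
\]
coordinate samples, and linearity of expectation gives the claimed expected bound of $\bigO{N^2 q_0 \log(N/\delta)/((q_0-p_0)^2\rho)}$ on the total. For the worst-case bound, the plan is to impose a global cutoff: forcibly terminate the entire procedure and output some canonical default (e.g.\ $\calO = \emptyset$) if the total number of coordinate samples consumed exceeds $M := c \cdot N^2 q_0 \log(N/\delta)/((q_0-p_0)^2 \rho^2)$ for a sufficiently large constant $c$. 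By Markov's inequality applied to the total sample count (whose expectation is the bound above), this event occurs with probability at most $\rho/2$ across any single run, and hence with probability at most $\rho$ across a pair of runs; charging this to replicability costs only another $\rho$ factor, which we absorb by halving $\rho'$ at the outset.

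The main obstacle I anticipate is the subtle interaction between adaptive termination and replicability under composition. In particular, in two independent runs the per-coordinate testers may terminate at different iterations $t$ of their internal schedules; the replicability guarantee of \Cref{thm:r-adapt-coin-problem} already handles this at the single-coin level, but one must verify that the forced global cutoff based on the \emph{sum} of per-coordinate sample counts does not introduce additional correlation that breaks per-coordinate replicability. This is where using independent random strings $r_i$ across coordinates, together with charging the cutoff event separately via Markov, keeps the argument clean: conditioned on no cutoff occurring in either run, the outputs of every $\rAdaptiveCoinTester_i$ are independently replicable, and the cutoff itself fires with only $O(\rho)$ probability. The efficient runtime bound follows since each instance of $\rAdaptiveCoinTester$ is efficient and the composition only adds $O(N)$ bookkeeping overhead.
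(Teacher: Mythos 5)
Your approach matches the paper's at a high level: the paper also composes $N$ parallel instances of $\rAdaptiveCoinTester$, each roughly $(\rho/N)$-replicable and $(\delta/N)$-correct, and deduces the theorem from a general adaptive composition lemma (\Cref{lemma:adaptive-composition}) together with \Cref{thm:r-adapt-coin-problem}. The union-bound arguments for correctness and replicability and the linearity-of-expectation bound on the expected sample complexity are the same.

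There is, however, a genuine bug in your global cutoff step. You propose that when the total coordinate sample count exceeds $M = \Theta\bigl(N^2 q_0 \log(N/\delta) / ((q_0-p_0)^2 \rho^2)\bigr)$, the procedure should halt and output a canonical default such as $\calO = \emptyset$. By Markov that cutoff fires with probability $\Theta(\rho)$, and since the theorem assumes $4\delta < \rho$, this is much larger than the correctness budget $\delta$. On an instance where some coin has bias at least $q_0$, outputting $\emptyset$ is wrong, so your algorithm would be incorrect with probability $\Omega(\rho) \gg \delta$, violating \Cref{def:n-coin-problem}. You write that this can be ``charged to replicability,'' but the problem is one of correctness, not of replicability; the two are distinct budgets in the definition, and the cutoff event hits the wrong one.

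The fix, and what the paper does in \Cref{lemma:adaptive-composition}, is to require the post-cutoff branch to itself be $(\delta/2)$-correct, not a fixed default. Concretely, once the cutoff fires, discard the accumulated state and run a cheap non-replicable tester: draw $O(q_0 \log(N/\delta)/(q_0-p_0)^2)$ fresh samples per coin, threshold each empirical bias at $(p_0+q_0)/2$, and output the resulting set. This costs $O(N q_0 \log(N/\delta)/(q_0-p_0)^2) \ll M$ additional coordinate samples, so the worst-case bound is unchanged, and it succeeds with probability $1-\delta/2$. Then one bounds the overall failure probability by splitting on whether the cutoff fires: conditioned on no cutoff, correctness follows from the per-coordinate union bound exactly as you wrote; conditioned on the cutoff, the post-cutoff procedure is $(\delta/2)$-correct. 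The cutoff event is charged only to replicability (Markov plus a union bound over the two runs gives an $O(\rho)$ loss there), and replicability of the no-cutoff branch follows from the per-coordinate guarantee as you argued. With this change your proof goes through and is essentially the paper's argument specialized to the coin problem.
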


The algorithm is obtained essentially by composing $N$ instances of our linear overhead expected sample complexity algorithm for the single coin.
In fact, we give a general framework for adaptively composing many replicable algorithms (\Cref{lemma:adaptive-composition}), which in conjunction with \Cref{thm:r-adapt-coin-problem} will prove the above theorem.

\begin{remark}[Subset Coordinate Sampling]
Our algorithm also works in a more general sampling model we call {\bf subset coordinate sampling.} Consider again an epidemiologist testing patients for a suite of diseases, with each disease requiring a different test.
Beyond simply minimizing the number of tests (coordinate samples), the epidemiologist may also wish to minimize the number of patients (vector samples).
As a result, instead of testing a different patient for every single test, the epidemiologist may wish to perform a subset of tests on any given patient.
If we think of patients as independently distributed vectors drawn from some population, this naturally leads to the subset coordinate sampling model, where the algorithm draws a vector sample and can then choose to examine an arbitrary subset of the coordinates of the vector. Note that in this case the resulting coordinate samples may be correlated. The coordinate sample complexity of an algorithm in this model is the total number of coordinates viewed.

The algorithms presented in this section may be naturally adapted to the subset model to reduce the total number of vectors drawn (roughly speaking, our algorithms maintain a subset of the coins for which they continue to draw samples, and may draw a subset sample over these coins at each step). Unfortunately, the reduction in vector samples is not by more than constant factors, so we will ignore this detail and just work in the coordinate model for simplicity.

\end{remark}

\subsection{Statistical Queries}

Before introducing our general amplification and composition framework (which in particular will require a good adaptive algorithm for heavy hitters), we start with the classical setting of statistical queries as a warmup. We first recall the definition of a statistical query oracle from \cite{impagliazzo2022reproducibility}.
\begin{restatable}[Statistical Query Oracle]{definition}{statqoracle}
    Let $\tau, \delta \in [0, 1]$.
    An algorithm $\innerAlg$ is a statistical query oracle if given sample access to some unknown distribution $\distribution$ over domain $\domain$ and query $\phi: \domain \mapsto [0, 1]$,
    $\innerAlg$ outputs a value $v$ such that $|v - \mu| \leq \tau$ with probability at least $1 - \delta$, where $\mu = \E{\phi(x)}$ is the expectation of $\phi(x)$ when $x$ is drawn from $\distribution$.
    \label{def:stat-q-oracle}
\end{restatable}
We give an algorithm for statistical queries with improved expected sample complexity.
\begin{restatable}{theorem}{radaptstatq}
    \label{thm:r-adaptive-stat-q}
    Let $4 \delta < \rho < \frac{1}{2}$ and $\tau \in [0, 1]$.
    Algorithm \ref{alg:r-adapt-stat-q} is a $\rho$-replicable statistical query oracle with expected sample complexity $\bigO{\frac{\log(1/\delta)}{\tau^2 \rho}}$.
\end{restatable}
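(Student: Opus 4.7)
The plan is to mimic the adaptive coin tester (Algorithm~\ref{alg:r-adapt-coin-problem}) but replace the single random threshold in $(p_0,q_0)$ with a \emph{random grid} of thresholds at spacing $\tau$, thereby turning the testing primitive into an estimator. Concretely, sample once a uniform shift $r\sim\UnifD{[0,\tau)}$ defining the grid $G_r=\{r+k\tau : k\in\Z\}\cap[0,1]$. The algorithm then iteratively doubles its sample size: at iteration $t$, draw $m_t=\Theta(\log(T/\delta)/\eps_t^2)$ fresh samples with $\eps_t=\tau/2^{t+2}$, compute the empirical mean $\hat\mu_t$ of $\phi$ (which by Hoeffding satisfies $|\hat\mu_t-\mu|\le\eps_t$ with probability $1-\delta/(2T)$), and terminate as soon as the confidence interval $[\hat\mu_t-\eps_t,\hat\mu_t+\eps_t]$ lies strictly inside a single grid cell $[r+k\tau,r+(k+1)\tau]$. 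Output the midpoint $r+(k+\tfrac12)\tau$, which is within $\tau$ of $\mu$. Run for at most $T=\Theta(\log(1/\rho))$ iterations so that $\eps_T=\Theta(\rho\tau)$.

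Correctness is immediate from a union bound over the $T$ iterations: conditioned on $|\hat\mu_t-\mu|\le\eps_t$ in every round, the output cell contains $\mu$, so the midpoint is at distance $\le\tau$ from $\mu$. Failure occurs with probability at most $\delta/2$. Replicability follows as in Theorem~\ref{thm:r-adapt-coin-problem}: two independent runs sharing the same $r$ terminate with the same cell \emph{unless} $\mu$ lies within $3\eps_T=O(\rho\tau)$ of some grid point of $G_r$, which (since $r$ is uniform and $\mu$ fixed) happens with probability at most $O(\rho)$. Adding the probability $\le\delta/2$ of a bad Chernoff event in either run gives total non-replicability at most $O(\rho)$; rescaling $\rho$ by a constant yields the desired bound.

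For the expected sample complexity, observe that the algorithm terminates by round $t$ whenever $\mu$ is at distance at least $3\eps_t$ from every point in $G_r$. Because $\mu$ is fixed and $r$ is uniform on $[0,\tau)$, the distance $d=\min_{g\in G_r}|\mu-g|$ is (up to a factor of $2$) uniform on $[0,\tau/2]$. Conditioned on $d\ge 3\eps_T=\Theta(\rho\tau)$, the algorithm stops at the smallest $t^\star$ with $3\eps_{t^\star}\le d$, at cost $m_{t^\star}=\Theta(\log(1/\delta)/d^{\,2})$. Integrating:
\begin{equation*}
\Ep[m]\;\lesssim\;\frac{\log(1/\delta)}{\tau}\int_{\rho\tau}^{\tau/2}\frac{dx}{x^2}\;=\;O\!\left(\frac{\log(1/\delta)}{\tau^2\rho}\right),
\end{equation*}
while the $O(\rho)$-probability event that $d<3\eps_T$ contributes at most $\rho\cdot m_T=O(\log(1/\delta)/(\tau^2\rho))$ in expectation, matching the stated bound.

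The main obstacle is the analysis of the random-shift argument: we must verify that the ``distance to nearest grid point'' is essentially uniform on $[0,\tau/2]$ \emph{regardless of $\mu$}, which follows from the shift invariance of $G_r$, and that the doubling schedule cleanly interpolates between the per-round Chernoff error and the grid-spacing threshold without logarithmic blowup beyond $\log(T/\delta)=\Theta(\log\log(1/\rho)+\log(1/\delta))$ --- which is absorbed into $\log(1/\delta)$ under the assumption $\delta<\rho$. All other steps are direct analogues of the proof of Theorem~\ref{thm:r-adapt-coin-problem}.
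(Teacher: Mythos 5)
Your proposal is correct and follows essentially the same approach as the paper's Algorithm~\ref{alg:r-adapt-stat-q} and its analysis: a random offset grid, geometric doubling of the per-round sample size $m_t$, early termination when the empirical confidence interval falls strictly inside one grid cell, and an integral $\int \frac{dx}{x^2}$ over the (approximately uniform) distance from $\mu$ to the nearest grid point to bound the expected cost. The only differences are cosmetic constant choices (grid spacing $\tau$ versus the paper's $\alpha = \tau/8$, and the exact width of the terminating interval), which do not affect correctness, replicability, or the asymptotic sample bound.
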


We observe that this is optimal with respect to $\rho$.
If there is any algorithm with expected sample complexity $\littleO{\frac{1}{\tau^2 \rho}}$, then by Markov's inequality, it has worst case sample complexity at most $\littleO{\frac{1}{\tau^2 \rho^{2}}}$, contradicting the lower bound of \cite{impagliazzo2022reproducibility}, as well as Theorem \ref{thm:q0-num-lower-bound} since determining the bias of a coin is a special case of a statistical query.

\paragraph{Algorithm Overview}

Recall the statistical query oracle of \cite{impagliazzo2022reproducibility}.
The oracle draws a random offset $\alpha_{\mathrm{off}}$ and partitions $[0, 1]$ into regions $[\alpha_{\mathrm{off}} + i \alpha, \alpha_{\mathrm{off}} + (i + 1) \alpha]$ for all $i \geq 0$. Finally, the algorithm computes the empirical mean $\hat{\mu}$ on $\frac{1}{\tau^2 \rho^2}$ and checks its corresponding region, outputting the center point of that sub-interval.
Whenever the true mean is more than $\rho \tau$ away from the nearest boundary $\alpha_{\mathrm{off}} + i \alpha$, concentration bounds imply that two samples drawn from the same distribution must have empirical means within the same region, so that the algorithm outputs the midpoint of the same region given either sample.

Following the adaptive algorithm for the $(p_0, q_0)$-coin problem, we design an adaptive statistical query oracle that recognizes when the current empirical mean $\hat{\mu}$ is sufficiently far from any randomized threshold. 
Let $\mu$ be the true mean of the statistical query, $\alpha_{\mathrm{off}}$ the random offset, and $b_{\mu}$ the boundary $\alpha_{\mathrm{off}} + i \alpha$ that is closest to $\mu$.
Suppose at the $t$-th iteration we have taken $m_t$ samples so that $|\hat{\mu} - \mu| \leq \tau_t$.
If $|\mu - b_{\mu}| \geq 3 \tau_t$ then for any sample, $|\hat{\mu} - b_{\mu}| \geq 2 \tau_t$.
When this occurs, the algorithm terminates in the $t$-th iteration and outputs the midpoint of the region containing $\hat{\mu}$.

To show that this algorithm is replicable, we argue that regardless of the termination iteration, the algorithm chooses the same region to output the midpoint of, in particular the region containing $\mu$.
Indeed, if $|\hat{\mu} - b_{\mu}| \geq 2 \tau_t$, then $\mu$ must be in the same region as $\hat{\mu}$.
We again choose the number of iterations so that the algorithm successfully terminates unless $|\mu - b_{\mu}| \leq \rho \tau$, which occurs with probability at most $\rho$.

\paragraph{Analysis}

To bound the expected sample complexity, we observe that the sample complexity given a random offset is $\frac{1}{|\mu - b_{\mu}|^2}$.
Conditioned on $|\mu - b_{\mu}| \geq \rho \tau$, this is uniformly distributed in $(\rho \tau, \tau)$.
A similar computation of the expected sample complexity yields
\begin{equation*}
    \frac{1}{\tau} \int_{\rho \tau}^{\tau} \frac{1}{x^2} dx = \bigO{\frac{1}{\rho \tau^2}}.
\end{equation*}

We now present the algorithm and proof of Theorem \ref{thm:r-adaptive-stat-q}.

\IncMargin{1em}
\begin{algorithm}

\SetKwInOut{Input}{Input}\SetKwInOut{Output}{Output}\SetKwInOut{Parameters}{Parameters}
\Input{Sample access $S$ to distribution $\distribution$ on $\domain$ and query $\phi: \domain \mapsto [0, 1]$.}
\Parameters{$\rho$ replicability, $\tau$ tolerance, and $\delta$ accuracy}
\Output{$v \in [0, 1]$ such that $|v - \mu| \leq \tau$ with probability at least $1 - \delta$}

$\delta \gets \min\left(\delta, \frac{\rho}{4}\right)$

$\alpha \gets \frac{\tau}{8}$

$\alpha_{\mathrm{off}} \gets \UnifD{[0, \alpha]}$

Split $[0, 1]$ into regions: $R = \set{[0, \alpha_{\mathrm{off}}], [\alpha_{\mathrm{off}}, \alpha_{\mathrm{off}} + \alpha], \dotsc, [\alpha_{\mathrm{off}} + k \alpha, 1]}$

\For{$t = 1$ to $T = 4 + \log \frac{1}{\rho}$}{
    $\tau_t \gets \frac{\tau}{2^{t + 2}}$ \Comment{Note: $\tau_{T} = \frac{\rho \tau}{64}$}
    
    $m_t \gets \frac{3}{\tau_t^2} \log \frac{2 T}{\delta}$
    
    $S_t \gets (x_1, \dotsc, x_{m_t})$ is a fresh sample of size $m_t$ drawn from $\distribution$
    
    $\hat{\mu}_{t} \gets \frac{1}{m_t} \sum_{i = 1}^{m_t} \phi(x_i)$
    
    $\hat{b}_{t}^{-} \gets \hat{\mu}_t - 2 \tau_t$, $\hat{b}_{t}^{+} \gets \hat{\mu}_{t} + 2 \tau_{t}$
    
    \If{$(\hat{b}_{t}^{-}, \hat{b}_{t}^{+}) \subset [\alpha_{\mathrm{off}} + (i^* - 1) \alpha, \alpha_{\mathrm{off}} + i^* \alpha]$ for some  $i^* \geq 0$}{
        \Return $\alpha_{\mathrm{off}} + \frac{2i^* - 1}{2} \alpha$
    }
}

\Return $\hat{\mu}_T$

\caption{$\rAdaptiveStatQ(\distribution, \phi, \rho, \tau, \delta)$} 
\label{alg:r-adapt-stat-q}

\end{algorithm}
\DecMargin{1em}

\begin{lemma}
    \label{lemma:stat-q-sample-error-bound}
    Let $\distribution$ be a distribution over $\domain$.
    For $1 \leq t \leq T$, define $E_t$ to be the event that $|\hat{\mu}_t - \mu| \geq \tau_t$.
    Define $E = \bigcup_{t = 1}^{T} E_t$ to be the event that any $E_t$ occurs.
    Then, $\Pr(E_t) < \frac{\delta}{2 T}$ and $\Pr(E) < \frac{\delta}{2}$.
\end{lemma}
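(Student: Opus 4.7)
} The plan is to reduce the statement to an off-the-shelf concentration bound for bounded random variables, applied coordinate-by-coordinate in $t$ and then finished via a union bound. The only nonroutine aspect is making sure the constants chosen in \Cref{alg:r-adapt-stat-q} (the factor $3$ in $m_t = \frac{3}{\tau_t^2}\log\frac{2T}{\delta}$, and the definition of $T$) actually deliver the claimed per-iteration failure probability of $\tfrac{\delta}{2T}$.

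First I would fix an iteration $t$ and observe that the sample $S_t = (x_1,\dotsc,x_{m_t})$ is drawn i.i.d.\ from $\distribution$ and is independent of the samples used in all other iterations (this independence is needed only for the union bound step at the end, but it is worth noting). Since $\phi:\domain\to[0,1]$, the random variables $\phi(x_i)$ are independent, bounded in $[0,1]$, and have common expectation $\mu=\E{\phi(x)}$. Therefore by Hoeffding's inequality applied to $\hat{\mu}_t = \frac{1}{m_t}\sum_{i=1}^{m_t}\phi(x_i)$,
\begin{equation*}
    \Pr(E_t) = \Pr\left(|\hat{\mu}_t - \mu|\geq \tau_t\right)\;\leq\; 2\exp(-2m_t\tau_t^2).
\end{equation*}
Plugging in $m_t\tau_t^2 = 3\log\frac{2T}{\delta}$ gives $\Pr(E_t)\leq 2\bigl(\frac{\delta}{2T}\bigr)^{6}$, which is bounded by $\frac{\delta}{2T}$ as soon as $\delta/(2T)\leq 1$. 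This last inequality holds trivially since we set $\delta\leq\rho/4<1$ at the top of the algorithm and $T\geq 1$, giving the first claim.

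For the second claim I would simply union bound over the $T$ iterations:
\begin{equation*}
    \Pr(E) \;=\; \Pr\left(\bigcup_{t=1}^{T} E_t\right) \;\leq\; \sum_{t=1}^{T}\Pr(E_t) \;\leq\; T\cdot\frac{\delta}{2T} \;=\; \frac{\delta}{2}.
\end{equation*}
No step requires substantial work: the only thing to be careful about is the domain of $\phi$, which ensures Hoeffding applies directly (rather than needing the multiplicative Chernoff bound used in the Bernoulli-valued \Cref{claim:sample-error-bound}). Hence I would not expect any genuine obstacle in carrying out this plan.
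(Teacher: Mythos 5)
Your proof is correct and takes essentially the same route as the paper, which also invokes a standard additive Chernoff/Hoeffding bound followed by a union bound over the $T$ iterations. One minor imprecision: the claim that $2(\delta/2T)^6 \leq \delta/2T$ ``as soon as $\delta/(2T)\leq 1$'' is not literally true (at $\delta/2T=1$ the left side is $2$); you actually need $2(\delta/2T)^5\leq 1$, i.e.\ $\delta/(2T)\leq 2^{-1/5}$, which holds comfortably here since the algorithm enforces $\delta\leq\rho/4<1$ and $T\geq 1$.
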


\begin{proof}
    Fix an iteration $t$.
    By a simple application of the Chernoff Bound,
    \begin{equation*}
        \Pr(E_t) = \Pr\left(|\hat{p}_t - \mu|> \tau_t \right) < \frac{\delta}{2 T}.
    \end{equation*}
    The upper bound on $\Pr(E)$ follows from the union bound.
\end{proof}
Throughout the proof of Theorem \ref{thm:r-adaptive-stat-q}, we assume that error event $E$ of Lemma \ref{lemma:stat-q-sample-error-bound} does not occur, noting that this error occurs with probability at most $\frac{\delta}{2}$.
\begin{proof}[Proof of Correctness of Algorithm \ref{alg:r-adapt-stat-q}]

    For any iteration, we have $|\hat{\mu}_t - \mu| \leq \tau_t$ so that $\mu \in (\hat{b}_{t}^{-}, \hat{b}_{t}^{+})$ for all $t \in [T]$.
    In particular, if $\rAdaptiveStatQ$ returns $\alpha_{\mathrm{off}} + \frac{2i^* - 1}{2} \alpha$ in the $t$-th iteration, then
    \begin{equation*}
        \mu \in (\hat{b}_{t}^{-}, \hat{b}_{t}^{+}) \subset [\alpha_{\mathrm{off}} + (i^* - 1) \alpha, \alpha_{\mathrm{off}} + i^* \alpha],
    \end{equation*}
    and the mid-point of the interval $\alpha_{\mathrm{off}} + \frac{2i^* - 1}{2} \alpha$ is at most $\alpha < \tau$ from any point in the interval, including $\mu$.
    Otherwise, $\rAdaptiveStatQ$ returns $\hat{\mu}_T$, which is at most $\tau_T < \tau$ from the true mean $\mu$.
\end{proof}

\begin{proof}[Proof of Replicability of Algorithm \ref{alg:r-adapt-stat-q}]

    Since $\mu = \alpha_{\mathrm{off}} + i \alpha$ for any $i$ with probability $0$, we may disregard this case.
    Then, suppose $\mu \in (\alpha_{\mathrm{off}} + (i_{\mu} - 1) \alpha, \alpha_{\mathrm{off}} + i_{\mu} \alpha)$ for some $i_{\mu} \geq 0$.

    First, suppose $\mu \in (\alpha_{\mathrm{off}} + (i_{\mu} - 1) \alpha + 3 \tau_T, \alpha_{\mathrm{off}} + i_{\mu} \alpha - 3 \tau_T)$.
    From the correctness argument above, when the error event $E$ does not occur, $\rAdaptiveStatQ$ can only return $\alpha_{\mathrm{off}} + \frac{2i_{\mu} - 1}{2} \alpha$ in the first $T$ iterations, as $\mu$ must be contained within the interval of the output midpoint. 
    In the $T$-th iteration, $|\hat{\mu}_T - \mu| \leq \tau_T$ so that $(\hat{b}_{t}^{-}, \hat{b}_{t}^{+}) \subset [\alpha_{\mathrm{off}} + (i_{\mu} - 1) \alpha, \alpha_{\mathrm{off}} + i_{\mu} \alpha]$ so that we output the midpoint of this region except with probability $\delta$, by union bounding over the error event $E$ in both invocations of $\rAdaptiveStatQ$.

    Otherwise, suppose $|\mu - (\alpha_{\mathrm{off}} + (i_{\mu} - 1) \alpha)| \leq 3 \tau_T$ or $|\mu - (\alpha_{\mathrm{off}} + (i_{\mu}) \alpha)| \leq 3 \tau_T$. 
    In this case, we do not guarantee any replicability.
    However, this event is extremely unlikely.
    Since the offset $\alpha_{\mathrm{off}}$ is chosen uniformly at random, this event occurs with probability at most $\frac{6 \tau_T}{\alpha} = \frac{3}{4} \rho$.
    Therefore, by the union bound, the probability that the outcome of the two invocations do not agree is at most $\frac{3}{4} \rho + \delta = \rho$.
\end{proof}

\begin{proof}[Proof of Sample Complexity of Algorithm \ref{alg:r-adapt-stat-q}]

    Again, we assume that $E$ does not occur and furthermore we assume that $\mu \in (\alpha_{\mathrm{off}} + (i_{\mu} - 1) \alpha + 3 \tau_T, \alpha_{\mathrm{off}} + i_{\mu} \alpha - 3 \tau_T)$. 
    Following above arguments, these events occur with probability at least $1 - \rho$.

    Then, we follow a similar argument to the sample complexity bound of Theorem \ref{thm:r-adapt-coin-problem}. 
    Let $t^*$ be the smallest value of $t$ such that
    \begin{equation*}
        \mu \in (\alpha_{\mathrm{off}} + (i_{\mu} - 1) \alpha + 3 \tau_t, \alpha_{\mathrm{off}} + i_{\mu} \alpha - 3 \tau_t).
    \end{equation*}
    By our choice of conditioning, $t^* \leq T$ and
    \begin{equation*}
         (\hat{b}_{t^*}^{-}, \hat{b}_{t^*}^{+}) \subset [\alpha_{\mathrm{off}} + (i_{\mu} - 1) \alpha, \alpha_{\mathrm{off}} + i_{\mu} \alpha],
    \end{equation*}
    so that the algorithm terminates in the $t^*$-th iteration. 
    Since the sample sizes $m_t$ increase geometrically, the algorithm requires $O(m_{t^*})$ samples.

    Let $b_{\mu}$ be the boundary $\alpha_{\mathrm{off}} + (i_{\mu} - 1) \alpha$ or $\alpha_{\mathrm{off}} + i_{\mu} \alpha$ closest to $\mu$.
    Then we can bound
    \begin{align*}
        \frac{3 \tau}{2^{t^* + 2}} = 3 \tau_{t^*} \leq |\mu - b_{\mu}| &\leq 6 \tau_{t^*} = \frac{3 \tau}{2^{t^* + 1}} \\
        \log \frac{3 \tau}{4 |\mu - b_{\mu}|} \leq t^* &\leq \log \frac{3 \tau}{2 |\mu - b_{\mu}|},
    \end{align*}
    so that the sample complexity is
    \begin{equation*}
        O(m_{t^*}) = \bigO{\frac{1}{\tau_{t^*}^2} \log \frac{2T}{\delta}} = \bigO{\frac{4^{t^*}}{\tau^2} \log \frac{2T}{\delta}} = \bigO{\frac{1}{|\mu - b_{\mu}|^2} \log \frac{T}{\delta}}.
    \end{equation*}

    Conditioned on $|\mu - b_{\mu}| \geq 3 \tau_T$, observe that $\alpha_{\mathrm{off}} + i_{\mu} \alpha$ is uniformly drawn from $(\mu + 3 \tau_T, \mu + \alpha - 3 \tau_T)$.
    With probability $\frac{1}{2}$, $\alpha_{\mathrm{off}} + i_{\mu} \alpha \leq \mu + \frac{\alpha}{2}$ and $b_{\mu} - \mu$ is uniformly drawn from $\left(3 \tau_T, \frac{\alpha}{2} \right)$.
    Otherwise, $\alpha_{\mathrm{off}} + i_{\mu} \alpha \geq \mu + \frac{\alpha}{2}$ and $\mu - b_{\mu}$ is uniformly drawn from $\left(3 \tau_T, \frac{\alpha}{2} \right)$.
    Then, if $r = |\mu - b_{\mu}|$, there is a universal constant $C$ such that the sample complexity is
    \begin{equation*}
        \E{M} \leq \int_{3 \tau_T}^{\frac{\alpha}{2}} \frac{C}{r^2} \frac{2}{\frac{\alpha}{2} - 3 \tau_T} \log \frac{2T}{\delta} dr = \frac{4 C}{\alpha - 6 \tau_T} \log \frac{2T}{\delta} \int_{3 \tau_T}^{\frac{\alpha}{2}} \frac{1}{r^2} dr = \frac{4 C}{\alpha - 6 \tau_T} \log \frac{2T}{\delta} \left( \frac{1}{3 \tau_T} - \frac{2}{\alpha} \right).
    \end{equation*}
    Plugging in $\alpha = \frac{\tau}{8}$ and $\tau_T = \frac{\rho \tau}{64} \leq \frac{\tau}{128}$,
    \begin{equation*}
        \E{M} = \bigO{ \frac{1}{\tau^2 \rho} \log \frac{2T}{\delta}} = \bigO{ \frac{1}{\rho \tau^2} \log \frac{1}{\delta}}.
    \end{equation*}

    Now, suppose $E$ occurs or $|\mu - b_{\mu}| \leq 3 \tau_T$.
    In this case, the worst case sample complexity is $O(m_T) = \bigO{\frac{1}{\rho^2 \tau^2} \log \frac{1}{\delta}}$, but this occurs with probability at most $\rho$, so that the expected sample complexity is as claimed.
\end{proof}

A natural generalization of our multiple hypothesis testing framework is the problem of replicably answering many statistical queries at once. 
Our adaptive single statistical query algorithm above gives a simple computationally efficient algorithm for replicably answering $N$ statistical queries using only $N^2$ samples.

\begin{definition}[Adaptive Statistical Query Algorithm]
Let $\distribution$ be a distribution over domain $\domain$.
Let $\tau, \delta \in [0, 1]$ and $\phi_i: \domain \mapsto [0, 1]$ be queries.
Let $\mu_i = \E{\phi_i(x)}$ where $x$ is drawn from $\distribution$.
Then, a $N$-query adaptive statistical query oracle algorithm outputs $v$ such that $|v_i - \mu_i| \leq \tau$ for all $i$ with probability at least $1 - \delta$.
In the adaptive setting, the query function $\phi_i$ is allowed to depend on 
the results of the previous statistical query oracles, i.e., $\{ v_j \}_{j=1}^i$.
\end{definition}


We give the theorem below. 
Again, this will be a corollary of our general adaptive composition framework given by \Cref{lemma:adaptive-composition}.

\begin{restatable}{theorem}{rnstatqoracle}
    \label{thm:r-n-stat-q-adaptive}
    Let $4 \delta < \rho < \frac{1}{2}$.
    There is a $\rho$-replicable $N$-query adaptive statistical query oracle algorithm with expected sample complexity $\bigO{\frac{N^2 \log(N/\delta)}{\tau^2 \rho}}$ and worst case sample complexity $\bigO{\frac{N^2 \log(N/\delta)}{\tau^2 \rho^2}}$.
\end{restatable}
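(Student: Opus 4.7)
The plan is to reduce \Cref{thm:r-n-stat-q-adaptive} to the single-query replicable statistical query oracle \rAdaptiveStatQ{} of \Cref{thm:r-adaptive-stat-q} via sequential composition, exactly in the spirit of the promised adaptive composition framework (\Cref{lemma:adaptive-composition}). First I would run \rAdaptiveStatQ{} a total of $N$ times, processing the queries $\phi_1,\dots,\phi_N$ in order. The $i$-th invocation is instantiated with tolerance $\tau$, replicability parameter $\rho/(2N)$, and accuracy parameter $\delta/(2N)$, and uses its own independently sampled internal random string $r_i$ which is shared between the two executions.

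The key observation making adaptive composition go through is that the queries $\phi_i$ are determined by the outputs $v_1,\dots,v_{i-1}$ of the previous calls. If the first $i-1$ calls output identical values in both runs (an event we will ensure by replicability), then $\phi_i$ is identical in both runs as well, and we can apply the replicability guarantee of \rAdaptiveStatQ{} to conclude that $v_i$ agrees in both runs with probability $1 - \rho/(2N)$. Inducting on $i$ and union bounding across $i \in [N]$, all $N$ outputs agree in both executions with probability at least $1 - \rho/2$. Correctness of each individual output holds with probability $1 - \delta/(2N)$, so a second union bound gives joint correctness with probability $1 - \delta/2 \geq 1 - \delta$.

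For sample complexity, by \Cref{thm:r-adaptive-stat-q} each call has expected cost $O\bigl(\log(N/\delta)/(\tau^2 \rho/N)\bigr) = O\bigl(N \log(N/\delta)/(\tau^2 \rho)\bigr)$ and worst-case cost $O\bigl(N^2 \log(N/\delta)/(\tau^2 \rho^2)\bigr)$. Linearity of expectation gives total expected cost $O\bigl(N^2 \log(N/\delta)/(\tau^2 \rho)\bigr)$, as required. For the worst-case bound, I would include a ``sample budget'' of $M = C \cdot N^2 \log(N/\delta)/(\tau^2 \rho^2)$ samples for a sufficiently large constant $C$: if the total number of samples consumed so far ever exceeds $M$, we abort and output arbitrary values for the remaining queries. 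By Markov's inequality applied to the total expected cost, the event that we hit this cap happens with probability at most $\rho/2$, so we lose only an additional $\rho/2$ in replicability, yielding the final $\rho$-replicability bound.

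The main obstacle (modulo stating the composition lemma cleanly) is really a book-keeping matter: verifying that identity of the shared random strings across the two executions truly suffices to synchronize the adaptively-chosen queries. This is handled by induction as sketched above, conditioning in round $i$ on the joint event that the first $i-1$ outputs matched, which is exactly what lets us treat the $i$-th query as a fixed function and invoke the single-query replicability guarantee. Everything else, including the Markov-based conversion between expected and worst-case sample complexity, is routine.
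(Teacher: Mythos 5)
Your overall plan — sequential composition of $N$ calls to $\rAdaptiveStatQ$ with per-call parameters $(\tau,\rho/(2N),\delta/(2N))$, an inductive synchronization argument, union bounds for replicability and correctness, and a Markov-based sample cap — is exactly the route the paper takes (via its adaptive composition lemma). There is, however, one genuine gap in the way you handle the sample cap.

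You propose that upon exceeding the budget $M$ the algorithm should ``abort and output arbitrary values for the remaining queries,'' and you charge this event only against replicability. But this breaks correctness: the cap is hit with probability up to $\rho/2$, and in that event the arbitrary outputs need not satisfy $|v_i - \mu_i| \le \tau$. Since $\rho > 4\delta$, your correctness probability degrades to roughly $1 - \rho/2 - \delta/2$, which is strictly worse than the required $1-\delta$. The paper's composition lemma sidesteps this by stipulating that when the cap is hit, the algorithm switches to a cheap \emph{non-replicable but $\delta/2$-correct} fallback procedure. For this particular theorem the fallback is a standard (non-replicable) multi-statistical-query estimator using $O(N\log(N/\delta)/\tau^2)$ samples, which easily fits inside the worst-case budget $M = O(N^2\log(N/\delta)/(\tau^2\rho^2))$. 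With that change, the cap event costs you only in replicability (it yields a different execution trace) and not in correctness, because the fallback remains accurate. You would also want to be slightly more careful with constants when union-bounding the cap event over \emph{both} runs of the algorithm, since replicability is a two-run property; this is routine. Everything else in your proposal is sound.
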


\subsection{Heavy Hitters}

So far, we have used adaptivity to improve the expected sample complexity of two fundamental statistical tasks: bias estimation and statistical queries.
In fact, we will show that we can adaptively amplify any replicable algorithm from a $0.01$-replicable algorithm to a $\rho$-replicable algorithm with linear overhead in expected sample complexity.

The critical subroutine of replicable amplification is the \textit{heavy hitters} problem. We begin by reviewing the best known algorithm for finding replicable heavy hitters.

\begin{theorem}[Lemma 5 of \cite{DBLP:conf/icml/KalavasisKMV23}]
    \label{thm:r-heavy-hitters}
    Let $\distribution$ be a distribution on domain $\domain$.
    For any $x \in \domain$, let $\distribution(x)$ denote the probability mass of $x$. 
    For any $v, \delta, \rho, \varepsilon \in (0, 1)$, there is an algorithm $\rHeavyHitters(\distribution, v, \varepsilon, \delta, \rho)$ that is $\rho$-replicable and outputs $S \subset \domain$ satisfying the following with probability at least $1 - \delta$:
    \begin{enumerate}
        \item If $\distribution(x) > v'$, then $x \in S$.
        \item If $\distribution(x) < v'$, then $x \notin S$.
    \end{enumerate}
    where $v' \in [v - \varepsilon, v + \varepsilon]$.
    Moreover, $\rHeavyHitters$ has sample complexity $\bigO{\frac{\log(1/\min(\delta, \rho) (v - \varepsilon))}{(v - \varepsilon) \rho^2 \varepsilon^2}}$
\end{theorem}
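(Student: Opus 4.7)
The plan is to use the now-standard ``random threshold'' paradigm for replicability, combined with a concentration-based estimator of empirical frequencies. First I would draw a batch of $m$ i.i.d.\ samples from $\distribution$ and, for each element $x$ that appears in the batch, compute the empirical frequency $\hat p(x)$. Elements that do not appear in the sample are implicitly treated as having empirical frequency $0$. The algorithm then samples a random threshold $v' \sim \UnifD{[v - \varepsilon, v + \varepsilon]}$ as part of its internal (shared) randomness and outputs $S = \{x : \hat p(x) \geq v'\}$.

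The two things to prove are correctness and replicability, and both boil down to the same question: how accurately do we need to estimate $\hat p(x)$ for elements $x$ whose true mass lies near $v$? Let $\tau$ denote the accuracy target. For any element $x$ with $p(x) \in [v - \varepsilon, v + \varepsilon]$, Bernstein's inequality gives $|\hat p(x) - p(x)| \leq \tau$ with probability $1 - \delta'$ using $m = O\bigl(\tfrac{p(x) \log(1/\delta')}{\tau^2}\bigr) = O\bigl(\tfrac{(v + \varepsilon) \log(1/\delta')}{\tau^2}\bigr)$ samples. There are at most $\lceil 1/(v-\varepsilon) \rceil$ elements whose true mass can lie in $[v-\varepsilon, v+\varepsilon]$, so a union bound controls the failure probability over all of them if we set $\delta' \approx \min(\delta,\rho)(v-\varepsilon)$, matching the logarithmic factor in the stated sample complexity. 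Any element with $p(x) > v+\varepsilon$ trivially appears in the sample with overwhelming probability since $mp(x) \gg \log(1/\delta)$, and any element with $p(x) < v - \varepsilon$ cannot affect correctness (because the threshold lies in $[v-\varepsilon, v+\varepsilon]$ and $\hat p(x)$ concentrates below it).

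For replicability, condition on the high-probability concentration event for both independent runs. Then both runs agree on element $x$ unless $|v' - p(x)| \leq 2\tau$ and $p(x)$ falls in (a slight thickening of) $[v-\varepsilon, v+\varepsilon]$. For each such $x$, since $v'$ is uniform on an interval of length $2\varepsilon$, this bad event occurs with probability at most $2\tau/\varepsilon$. Union bounding over the at most $1/(v-\varepsilon)$ potential heavy hitters, the total failure probability is $O\bigl(\tfrac{\tau}{(v-\varepsilon)\varepsilon}\bigr)$. Setting this to $\rho$ yields the target accuracy $\tau = \Theta(\rho(v-\varepsilon)\varepsilon)$, and plugging back into the Bernstein bound gives the sample complexity
\begin{equation*}
m = O\!\left(\frac{(v+\varepsilon)\log(1/\min(\delta,\rho)(v-\varepsilon))}{\rho^2(v-\varepsilon)^2\varepsilon^2}\right) = O\!\left(\frac{\log(1/\min(\delta,\rho)(v-\varepsilon))}{(v-\varepsilon)\rho^2\varepsilon^2}\right),
\end{equation*}
where the simplification uses that either $v \leq 2(v-\varepsilon)$ or $v - \varepsilon$ and $\varepsilon$ are comparable (the case $v \leq 2\varepsilon$ can be handled by a trivial algorithm since the domain has only $O(1/(v-\varepsilon))$ relevant items). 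Adding the output-set correctness event (with probability $1-\delta$) and the replicability failure event (with probability $\rho$) gives the theorem.

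The main obstacle, and the one place the argument is slightly subtle, is ensuring that the union bound on replicability is sharp enough to match the stated sample complexity: a naive $\tau = \Theta(\rho\varepsilon)$ setting (treating each element in isolation) would yield a worse $1/(v-\varepsilon)^2$ dependence. The key observation is that the number of elements with true mass near $v$ is at most $1/(v-\varepsilon)$, turning one of the $1/(v-\varepsilon)$ factors from the variance bound into a harmless logarithmic factor by incorporating it into the effective accuracy $\tau$. The rest of the proof is routine concentration and book-keeping.
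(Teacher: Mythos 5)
The paper does not prove this statement—it is imported as a black-box citation to Lemma~5 of \cite{DBLP:conf/icml/KalavasisKMV23}—so there is no internal proof to compare against, but your proposal does contain a genuine gap. Your per-element accuracy target $\tau = \Theta(\rho\varepsilon(v-\varepsilon))$ (obtained by union-bounding the threshold-collision probability over $K \approx 1/(v-\varepsilon)$ candidates) combined with Bernstein's inequality (which costs $m = O(p(x)\log(1/\delta')/\tau^2)$ per candidate with $p(x)$ as large as $v+\varepsilon$) gives $m = O\bigl((v+\varepsilon)\log(1/\delta')/(\rho^2(v-\varepsilon)^2\varepsilon^2)\bigr)$. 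This matches the stated bound \emph{only when} $v+\varepsilon = O(v-\varepsilon)$, i.e.\ $v \geq c\varepsilon$ for a constant $c > 1$ bounded away from $1$. When $v - \varepsilon \ll \varepsilon$ (say $v = 1.01\varepsilon$), your bound has an extra multiplicative factor of $(v+\varepsilon)/(v-\varepsilon) \approx \varepsilon/(v-\varepsilon)$, which is unbounded. Your parenthetical ``the case $v \leq 2\varepsilon$ can be handled by a trivial algorithm'' does not repair this: the claim that ``$v - \varepsilon$ and $\varepsilon$ are comparable'' is simply false in that regime, and any ``trivial'' per-candidate estimation scheme re-encounters exactly the same $\tau = \rho\varepsilon(v-\varepsilon)$ accuracy requirement and hence the same loss.

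The missing idea is to control the \emph{total} $\ell_1$ deviation $\sum_x |\hat p(x) - p(x)|$ over the candidate set directly, rather than per-element deviations. This is what the Bretagnolle--Huber--Carol inequality (stated in this paper as Lemma~\ref{lemma:bhc-inequality}) gives: for a $k$-category multinomial, $\Pr\bigl(\sum_i |\hat p_i - p_i| \geq t\bigr) \leq 2^k e^{-mt^2/2}$, with no dependence on the individual $p_i$. One first draws a small initial sample of size $m_0 = O(\log(1/\delta(v-\varepsilon))/(v-\varepsilon))$ to fix a candidate set of size $k \leq m_0$, then applies BHC with total-deviation target $t = \Theta(\rho\varepsilon)$ (the replicability argument needs only that the measure of thresholds $v'$ separating the two empirical frequency vectors is at most $\sum_x |\hat p_1(x) - \hat p_2(x)| \leq 2t$, so $t/\varepsilon \leq \rho$ suffices). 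This yields $m = O\bigl((k + \log(1/\delta))/(\rho^2\varepsilon^2)\bigr) = O\bigl(\log(1/\min(\delta,\rho)(v-\varepsilon))/((v-\varepsilon)\rho^2\varepsilon^2)\bigr)$, matching the theorem. The point is that BHC trades the $\sqrt{K}$ improvement you lose in a naive per-element union bound for a mere additive $k$ in the exponent, which is exactly the source of the $1/(v-\varepsilon)$ (rather than $1/(v-\varepsilon)^2$) dependence. The paper's own adaptive variant (\Cref{thm:r-adaptive-heavy-hitters}) is built around a refinement of this same inequality (\Cref{lemma:max-multinomial-error}).
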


We note that the algorithm chooses $v'$ randomly from $[v - \varepsilon, v + \varepsilon]$ and could also return $v'$ along with the set of heavy hitters.
This result relies on the Bretagnolle-Huber-Carol inequality.

\begin{lemma}[\cite{wellner2013weak}]
    \label{lemma:bhc-inequality}
    Let $\varepsilon > 0$.
    Let $\distribution$ be a multinomial distribution supported on $k$ elements.
    Then, given access to $m$ i.i.d.\ samples from $\distribution$,
    \begin{equation*}
        \Pr \left( \sum_{i = 1}^{k} |\hat{p}_i - \distribution(i)| \geq \varepsilon \right) \leq 2^{k} e^{- m \varepsilon^2/2}
    \end{equation*}
    where $\hat{p}_i$ is the empirical frequency of $i$ in the sample $S$.
\end{lemma}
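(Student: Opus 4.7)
The plan is to reduce the $\ell_1$ deviation to a supremum of signed one-sided deviations over $2^k$ subset events and then apply Hoeffding plus union bound. The key identity I would establish first is that for any two probability vectors $p, \hat p$ on $[k]$,
\[
\sum_{i=1}^k |\hat p_i - p_i| \;=\; 2 \max_{A \subseteq [k]} (\hat p_A - p_A),
\]
where $p_A := \sum_{i\in A} p_i$ and $\hat p_A := \sum_{i\in A} \hat p_i$. This follows because the signed differences $\hat p_i - p_i$ sum to zero, so the positive part and the negative part each equal $\tfrac12 \sum_i |\hat p_i - p_i|$, and the maximum on the right is attained by $A^\star := \{i : \hat p_i > p_i\}$.

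Next, I would fix any subset $A \subseteq [k]$ and observe that the total number of samples that land in $A$ is distributed as $\BinomD{m}{p_A}$, since each i.i.d. draw lands in $A$ independently with probability $p_A$. Applying the one-sided Hoeffding bound to this binomial random variable (whose entries are $\{0,1\}$-valued) yields
\[
\Pr\!\left(\hat p_A - p_A \geq \varepsilon/2\right) \;\leq\; e^{-m \varepsilon^2/2}.
\]

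Combining the two ingredients, the event $\sum_i |\hat p_i - p_i| \geq \varepsilon$ is by the identity equivalent to $\max_A (\hat p_A - p_A) \geq \varepsilon/2$, so a union bound over all $2^k$ subsets of $[k]$ gives
\[
\Pr\!\left(\sum_{i=1}^{k} |\hat p_i - p_i| \geq \varepsilon\right) \;\leq\; \sum_{A \subseteq [k]} \Pr\!\left(\hat p_A - p_A \geq \varepsilon/2\right) \;\leq\; 2^k e^{-m\varepsilon^2/2},
\]
as claimed. The only nontrivial step is the $\ell_1$-to-subset identity; once that is in hand, the rest is a textbook application of Hoeffding's inequality and a union bound, so I do not anticipate any serious obstacle.
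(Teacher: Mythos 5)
Your proof is correct, and it is in fact the canonical proof of the Bretagnolle–Huber–Carol inequality (which the paper cites from Wellner's book rather than proving). The $\ell_1$-to-subset identity $\sum_i |\hat p_i - p_i| = 2\max_A(\hat p_A - p_A)$ holds because the signed differences sum to zero, each subset-marginal count is $\BinomD{m}{p_A}$, and Hoeffding with $t=\varepsilon/2$ gives exactly $e^{-2m(\varepsilon/2)^2} = e^{-m\varepsilon^2/2}$; the union bound over the $2^k$ subsets then closes the argument. Since the paper does not supply its own proof of this lemma, there is nothing to contrast against, but your argument is complete and matches the standard one in the literature.
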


Our adaptive algorithm will rely on a variation of this inequality, obtaining concentration for each element in the multinomial distribution.

\begin{lemma}
    \label{lemma:max-multinomial-error}
    Let $\varepsilon > 0$.
    Let $\distribution$ be a multinomial distribution supported on $k$ elements.
    Then, given access to $m$ i.i.d.\ samples from $\distribution$,
    \begin{equation*}
        \Pr \left( |\hat{p}_i - \distribution(i)| \geq \sqrt{\frac{3 \distribution(i)}{m} \log \frac{k}{\delta}} \textrm{\xspace for any $i \in [k]$} \right) \leq \delta,
    \end{equation*}
    where $\hat{p}_i$ is the empirical frequency of $i$ in the sample $S$.
\end{lemma}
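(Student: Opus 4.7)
The plan is to prove the lemma by combining a multiplicative Chernoff bound on each coordinate with a union bound over the $k$ coordinates. Since the statement gives us a per-coordinate deviation of order $\sqrt{p_i \log(k/\delta)/m}$, which scales with $\sqrt{p_i}$ rather than a uniform $\sqrt{1/m}$, the right tool is a \emph{relative-error} Chernoff bound (Bernstein-type) rather than Hoeffding's inequality or the Bretagnolle-Huber-Carol inequality used for the joint $\ell_1$ deviation in Lemma~\ref{lemma:bhc-inequality}.

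First, I would fix a single index $i \in [k]$ and write $m \hat{p}_i = \sum_{j=1}^{m} X_{i,j}$, where $X_{i,j} \in \{0,1\}$ is the indicator that the $j$-th sample equals $i$. These are i.i.d.\ Bernoulli$(p_i)$, so $m\hat{p}_i \sim \BinomD{m}{p_i}$. Setting $\eps_i = \sqrt{\tfrac{3}{m p_i}\log(k/\delta)}$, the standard multiplicative Chernoff bound gives
\begin{equation*}
\Pr\bigl(|\hat{p}_i - p_i| \geq \eps_i p_i\bigr) \leq 2\exp\bigl(-m p_i \eps_i^2 / 3\bigr) = 2\exp(-\log(k/\delta)) = 2\delta/k.
\end{equation*}
Rewriting $\eps_i p_i = \sqrt{3 p_i \log(k/\delta)/m}$ yields the per-coordinate deviation in the statement. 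Union bounding over the $k$ coordinates gives a failure probability of $2\delta$; the factor of $2$ can be absorbed either by a mild constant adjustment (e.g.\ replacing $3$ by a slightly larger constant inside the square root) or by invoking the one-sided Chernoff bounds $\Pr(\hat p_i - p_i \geq \eps_i p_i) \leq \exp(-m p_i \eps_i^2/3)$ and $\Pr(p_i - \hat p_i \geq \eps_i p_i) \leq \exp(-m p_i \eps_i^2/2)$ separately and tuning the threshold.

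A subtlety worth flagging is that the multiplicative Chernoff bound in the form above requires $\eps_i \in (0,1)$, which corresponds to $m p_i \gtrsim \log(k/\delta)$. For $i$ with $m p_i \ll \log(k/\delta)$, the claimed deviation $\sqrt{3 p_i \log(k/\delta)/m}$ exceeds $p_i$ itself, and the bound becomes the statement that $\hat p_i$ is with high probability at most $O(\log(k/\delta)/m)$, which follows directly from a Chernoff bound in the rare-event regime (for instance, from $\Pr(\hat p_i \geq t) \leq \exp(-m\, D(t\|p_i))$ applied at $t = C\log(k/\delta)/m$). So I would handle the two regimes separately: use the standard multiplicative bound when $m p_i \geq \log(k/\delta)$, and a direct Chernoff tail bound for rare events otherwise. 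The main (very minor) obstacle is simply bookkeeping the constants across these two regimes so that the final threshold is uniformly $\sqrt{3 p_i \log(k/\delta)/m}$; everything else is routine.
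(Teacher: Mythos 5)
Your proof is correct and takes essentially the same approach as the paper: a per-coordinate multiplicative Chernoff bound with relative deviation $\eta_i = \sqrt{3\log(k/\delta)/(m p_i)}$, followed by a union bound over $i \in [k]$. You are in fact more careful than the paper, which silently drops the factor of $2$ from the two-sided Chernoff bound and does not comment on the regime $\eta_i > 1$; neither issue affects how the lemma is used downstream, since the constant $3$ is not tight and the conclusion only needs to hold up to constants.
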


\begin{proof}
    Fix an element $i$ and an iteration $t$.
    Let $\eta_{i} = \sqrt{\frac{3}{m \distribution(i)} \log \frac{k}{\delta}}$.
    By a Chernoff bound,
    \begin{equation*}
        \Pr\left( |\hat{p}_i - \distribution(i)| > \eta_{i} \distribution(i) \right) < \frac{\delta}{k}
    \end{equation*}
    so that by the union bound, the following holds for all $i$ except with probability $\delta$,
    \begin{equation*}
        |\hat{p}_i - \distribution(i)| \leq \eta_{i} \distribution(i) = \sqrt{\frac{3 \distribution(i)}{m} \log \frac{k}{\delta}}
    \end{equation*}
    Note that we recover (up to polylogarithmic factors) the BHC inequality by summing  up all terms and applying Cauchy-Schwartz
    \begin{equation*}
        \sum_{i = 1}^{k} |\hat{p}_i - \distribution(i)| \leq \sqrt{\frac{3}{m} \log \frac{k}{\delta}} \sum_{i = 1}^{k} \sqrt{\distribution(i)} \leq \sqrt{\frac{3 k}{m} \log \frac{k}{\delta}}
    \end{equation*}
\end{proof}

We now give an adaptive algorithm with optimal expected sample complexity with respect to $\rho$.

\begin{restatable}{theorem}{radaptheavyhitters}
    \label{thm:r-adaptive-heavy-hitters}
    Let $\distribution$ be a distribution on domain $\domain$.
    For any $x \in \domain$, let $\distribution(x)$ denote the probability mass of $x$. 
    For any $v, \delta, \rho, \varepsilon \in (0, 1)$ with $4 \delta < \rho$ and $4 \varepsilon < v$, Algorithm \ref{alg:r-adapt-heavy-hitters} is $\rho$-replicable and outputs $S \subset \domain$ satisfying the following with probability at least $1 - \delta$:
    \begin{enumerate}
        \item If $\distribution(x) > v'$, then $x \in S$.
        \item If $\distribution(x) < v'$, then $x \notin S$.
    \end{enumerate}
    where $v' \in [v - \varepsilon, v + \varepsilon]$.
    Moreover, Algorithm \ref{alg:r-adapt-heavy-hitters} has expected sample complexity 
    \begin{equation*}
        \bigO{\frac{1}{(v - \varepsilon) \varepsilon^2 \rho} \log \frac{1}{\delta(v - \varepsilon)}}.
    \end{equation*}
\end{restatable}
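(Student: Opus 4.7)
The algorithm will follow the same adaptive-threshold template as \rAdaptiveCoinTester{} (\Cref{alg:r-adapt-coin-problem}) and \rAdaptiveStatQ{} (\Cref{alg:r-adapt-stat-q}). First sample a random threshold $v' \sim \UnifD{[v-\eps/2, v+\eps/2]}$, then proceed in $T = \Theta\lp(\log \frac{1}{\rho(v-\eps)}\rp)$ iterations with geometrically shrinking tolerances $\tau_t = c \eps / 2^t$ and sample sizes $m_t = \Theta\lp(\frac{(v+\eps)}{\tau_t^2} \log \frac{1}{\delta(v-\eps)}\rp)$ so that, by \Cref{lemma:max-multinomial-error}, with probability at least $1-\delta/(2T)$ every element $i$ with $p_i \leq v+\eps$ satisfies $|\hat p_i - p_i| \leq \tau_t$ in iteration $t$. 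At the end of each iteration the algorithm maintains the \emph{candidate set} $C_t = \{i : \hat p_i \geq v-\eps/2\}$ (which has size at most $O(1/(v-\eps))$), and terminates as soon as every $i \in C_t$ satisfies $|\hat p_i - v'| > 2\tau_t$, returning $S = \{i \in C_t : \hat p_i > v'\}$.

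Correctness follows by union bounding the concentration events of \Cref{lemma:max-multinomial-error} across the $T$ iterations: on this event any $i$ with $|\hat p_i - v'| > 2\tau_t$ must satisfy $p_i$ on the same side of $v'$ as $\hat p_i$, and any $i$ with $p_i > v' + \eps \geq v + \eps/2$ (or $<v' - \eps$) will appear in (resp.\ be excluded from) the candidate set. Replicability follows the argument in \Cref{thm:r-adapt-coin-problem}: conditional on $v'$ being at least $3\tau_t$ from every $p_i$, the concentration bound forces both runs to terminate at exactly the same iteration $t$ with the same classification of every relevant element. Non-replicability can therefore only occur if $v'$ lands within $3\tau_T = \Theta(\rho(v-\eps)\eps)$ of some $p_i$; since $\sum_i p_i \leq 1$ there are at most $O(1/(v-\eps))$ elements with $p_i \geq v-\eps$, and elements with $p_i < v-\eps$ are already $\Omega(\eps)$-far from $v'$, so a union bound over these plus the uniformity of $v'$ in a length-$\eps$ interval yields non-replicability probability $O\lp(\frac{\rho(v-\eps)\eps}{(v-\eps)\eps}\rp) = O(\rho)$.

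For sample complexity, let $r = \min_i |v'-p_i|$ over elements with $p_i \geq v-\eps$. Under the concentration events the algorithm terminates at iteration $t^\ast$ where $\tau_{t^\ast} = \Theta(r)$, using $m_{t^\ast} = O\lp(\frac{v}{r^2} \log \frac{1}{\delta(v-\eps)}\rp)$ samples. The density of $r$ is at most $O\lp(\frac{1}{(v-\eps)\eps}\rp)$ on $[0,\eps]$, since there are at most $O(1/(v-\eps))$ points and $v'$ is uniform on an interval of length $\eps$. Integrating,
\begin{equation*}
\Ep[m] \lesssim \int_{\rho(v-\eps)\eps}^{\eps} \frac{v \log \frac{1}{\delta(v-\eps)}}{r^2} \cdot \frac{dr}{(v-\eps)\eps}
= O\lp(\frac{1}{(v-\eps)\eps^2 \rho}\log \frac{1}{\delta(v-\eps)}\rp),
\end{equation*}
using $v = O(v-\eps)$ from the hypothesis $4\eps < v$. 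The residual case where the algorithm hits iteration $T$ without terminating contributes only an $O(\rho) \cdot m_T = O\lp(\frac{\log(1/(\delta(v-\eps)))}{(v-\eps)\eps^2 \rho}\rp)$ term to the expectation, matching the claimed bound.

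The main obstacle is properly handling the variable number of candidate elements across runs: a spurious low-probability element might enter $C_t$ in one run but not the other, potentially preventing termination or causing disagreement. This is resolved by noting that if $p_i < v - \eps$ then both runs see $\hat p_i < v - \eps/2$ under concentration (since $\tau_t \leq \eps/4$ from iteration $1$ onward), so the candidate set is a deterministic function of the $p_i$ up to the concentration failure event, which we already union-bound into $\delta$. A secondary subtle point is that \Cref{lemma:max-multinomial-error}'s union bound scales with the support size $k$; restricting attention to the $O(1/(v-\eps))$ elements that can plausibly be candidates (those with $p_i \geq (v-\eps)/2$, identifiable from empirical counts) absorbs this into the $\log \frac{1}{\delta(v-\eps)}$ factor.
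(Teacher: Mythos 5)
Your proposal follows the same high-level template as the paper's proof (random threshold, geometrically shrinking tolerances, early termination when the threshold is far from all empirical frequencies), but it uses a \emph{uniform} tolerance $\tau_t$ with $m_t \propto (v+\varepsilon)/\tau_t^2$ in place of the paper's per-element tolerance $\eta_{t,x} \propto \sqrt{\hat p_{t,x}/m_t}$. This is a genuine simplification, and it is arguably cleaner: the paper's choice $\varepsilon_T \propto \rho\varepsilon/\sqrt{m_0}$ is forced by a Cauchy--Schwarz bound $\sum_x \eta_{T,x} \lesssim \varepsilon_T\sqrt{m_0}$ where $m_0 \approx \log(1/\delta(v-\varepsilon))/(v-\varepsilon)$, and the paper's final expectation calculation actually produces $\bigO{\frac{\log^2(1/\delta(v-\varepsilon))}{(v-\varepsilon)\rho\varepsilon^2}}$ --- one extra log beyond the theorem statement. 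Your uniform choice $\tau_T \propto \rho\varepsilon(v-\varepsilon)$ is dictated by the cruder but tight bound $K\tau_T \lesssim \rho\varepsilon$ with $K = O(1/(v-\varepsilon))$ relevant elements, and the resulting integral $\int_{\tau_T}^\varepsilon \frac{v}{r^2}\cdot\frac{1}{(v-\varepsilon)\varepsilon}\,dr$ gives a single log, matching the stated bound directly. Your approach also sidesteps the paper's somewhat delicate monotonicity argument (showing that if the nearest element $y$ satisfies $|v_r - \distribution(y)| \geq 3\eta_{t,y}$ then so do all other candidates) since with a uniform $\tau_t$ the termination criterion reduces to the single scalar condition $r = \min_i|v'-p_i| \geq 3\tau_t$.

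Two points need tightening. First, your candidate set $C_t = \{i : \hat p_{t,i} \geq v - \varepsilon/2\}$ is recomputed each iteration and your remark about ``restricting to elements with $p_i \geq (v-\varepsilon)/2$, identifiable from empirical counts'' is circular as stated: the union bound underlying the concentration event must be taken over a \emph{fixed} set determined before the iterations, since naively unioning over all elements of $\domain$ that might spuriously cross the threshold is unbounded. The paper resolves this by drawing an initial batch of $m_0 \approx \log(1/\delta(v-\varepsilon))/(v-\varepsilon)$ samples, defining $\domain'$ as the observed support, and applying the multinomial concentration lemma (\Cref{lemma:max-multinomial-error}) to the collapsed distribution on $\domain' \cup \{\bot\}$; you should adopt the same device. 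Second, you do not specify a fallback when the loop exhausts all $T$ iterations without terminating. The paper falls back to a $\rho/2$-replicable, $\delta/8$-accurate call to a non-adaptive heavy-hitters routine; this is what allows the $O(\rho) \cdot m_T$ term you mention to be charged to the expectation without compromising correctness or replicability on that event. Neither is a conceptual gap --- both fixes are routine --- but the proof is incomplete without them.
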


First, we describe our algorithm and techniques at a high level.

\paragraph{Algorithm Overview}
Our algorithm applies adaptivity to reduce the sample complexity of the heavy hitters algorithm of \cite{DBLP:conf/icml/KalavasisKMV23}.
We begin by recapping the algorithm of \cite{DBLP:conf/icml/KalavasisKMV23}. 
As a first step, the algorithm takes $\frac{1}{v - \varepsilon}$ samples from $\distribution$ to collect a set of candidate heavy hitters $\domain'$.
With high probability, any $(v - \varepsilon)$ heavy hitter will be included in the candidate set.

Then, we take $\frac{1}{(v - \varepsilon) \varepsilon^2 \rho^2}$ samples so that the total estimation error over all candidates is $\rho \varepsilon$, i.e. $\sum_{x \in \domain'} |\hat{p}_x - \distribution(x)| \leq \rho \varepsilon$.
Next, we draw a random threshold $v_r \in (v - \varepsilon, v + \varepsilon)$ uniformly, and return the set of elements whose empirical frequencies exceed $v_r$.
In order to guarantee replicability, we have to ensure that over two samples, the same set of elements is returned. 
Since this occurs only when $\hat{p}_{x, 1} < v_r < \hat{p}_{x, 2}$ for some $x$, we can bound this probability by $\rho$, where $\hat{p}_{x, i}$ is the empirical frequency of $x$ in the $i$-th sample.

In the adaptive setting, the algorithm instead terminates whenever the random threshold is sufficiently far from the empirical frequency of any element.
First, we use a variation of the Bretagnolle-Huber-Carol inequality (Lemma \ref{lemma:max-multinomial-error}) which additionally specifies the error $|\hat{p}_{x} - \distribution(x)|$ on each candidate heavy hitter, rather than a bound only on the total error.
We obtain this by observing that for elements with smaller mass $\distribution(x)$, we can obtain tighter concentration in its empirical frequency.

Consider the $t$-th iteration.
For each iteration, after taking $m_t$ samples (where $m_t = 4 m_{t - 1}$ and $m_0 = \frac{1}{\varepsilon^2}$), the empirical frequencies satisfy $|\hat{p}_x - \distribution(x)| \leq \eta_{t, x} = \varepsilon_t \sqrt{\distribution(x)}$ for all $x$.
Suppose $|v_r - \distribution(x)| \geq 3 \eta_{t, x}$ for all $x$.
Then, $|v_r - \hat{p}_x| \geq 2 \eta_{t, x}$ for all $x$ for any sample drawn from $\distribution$.
Whenever this occurs, the algorithm guarantees that $\hat{p}_x > v_r$ if and only if $\distribution(x) > v_r$, so that the algorithm safely terminates with $O(m_t)$ sample complexity.

In order to show that the algorithm is replicable, we argue that the same set is returned regardless of which iteration the algorithm returns.
Following our arguments above, regardless of which iteration the algorithm terminates, $x$ is included in the output set if and only if $\distribution(x) > v_r$. 
We choose the number of iterations to guarantee that the algorithm only fails to terminate before the final iteration if $|v_r - \distribution(x)| < \rho \varepsilon \sqrt{\distribution(x) (v - \varepsilon)}$ which occurs with probability at most $\rho$.

\paragraph{Analysis Overview}
As in the adaptive statistical queries algorithm, we proceed in iterations, terminating whenever we can conclude with high confidence that we have found our set of heavy hitters.
We proceed for $T$ iterations, where $T$ is a parameter to be defined later.
Let $y$ denote the element whose true mean is closest to the random threshold. 
Then, the sample complexity of our algorithm is $\frac{1}{|v_r - \distribution(y)|}$.
Since $|v_r - \distribution(y)| \leq \rho \varepsilon \sqrt{\distribution(y) (v - \varepsilon)}$ with probability at most $\rho$, we can condition on $|v_r - \distribution(x)| \geq \rho \sqrt{\distribution(y) (v - \varepsilon)} = \eta_{T, x}$ for all $x \in \domain'$.
Conditioned on this event, let $v_r$ be drawn uniformly from the set $I$ where

\begin{equation*}
    I = \left[ v - \frac{\varepsilon}{2}, v + \frac{\varepsilon}{2} \right] \setminus \left( \bigcup_{x \in \domain'} \left( \distribution(x) - 3 \eta_{T, x}, \distribution(x) + 3 \eta_{T, x} \right) \right)
\end{equation*}

is the union of at most $\frac{1}{v - \varepsilon}$ disjoint intervals where each $v_r$ is in each sub-interval with probability $\frac{|I_i|}{|I|}$.
Conditioned on $v_r \in I_i$, $|v_r - \distribution(y)|$ is uniformly distributed on $(\rho \varepsilon \sqrt{\distribution(y) (v - \varepsilon)}, |I_i|)$, so that the expected sample complexity is

\begin{equation*}
    \E{M | v_r \in I_i} = \frac{\sqrt{\distribution(y)}}{\rho \varepsilon |I_i| \sqrt{v - \varepsilon}}.
\end{equation*}

Then, the total expectation is
\begin{equation*}
    \E{M} = \sum_{i} \E{M | v_r \in I_i} \frac{|I_i|}{|I|} = \frac{1}{\rho \varepsilon^2 (v - \varepsilon)}.
\end{equation*}

We now give the formal algorithm and proof of Theorem \ref{thm:r-adaptive-heavy-hitters}.

\IncMargin{1em}
\begin{algorithm}

\SetKwInOut{Input}{Input}\SetKwInOut{Output}{Output}\SetKwInOut{Parameters}{Parameters}
\Input{Sample access $S$ to distribution $\distribution$ on $\domain$ and threshold $v$.}
\Parameters{$\varepsilon$ tolerance, $\rho$ replicability, and $\delta$ accuracy}
\Output{$S \subset \domain$ containing all elements with mass $\distribution(x) \geq v + \varepsilon$ and no elements with mass $\distribution(x) \leq v - \varepsilon$}

$\delta \gets \min\left(\delta, \frac{\rho}{4}\right)$

$S_1 \gets m_0$ i.i.d. samples from $\distribution$ where $m_0 \gets \frac{\log (8/\delta(v - \varepsilon))}{(v - \varepsilon)}$

$\domain' \gets$ unique elements of $S_1$ 
\label{line:heavy-hitters:domain'}
\Comment{Observe that $|\domain'| \leq m_0$}

$v_r \gets \UnifD{[v - \varepsilon/2, v + \varepsilon/2]}$

\For{$t = 1$ to $T = 7 + \log \frac{\sqrt{m_0}}{\rho}$}{
    $\varepsilon_t \gets \frac{\varepsilon}{2^{t + 2}}$
    \Comment{$\varepsilon_T = \frac{\rho \varepsilon}{2^{9} \sqrt{m_0}}$}

    $S_t \gets m_t$ i.i.d. samples from $\distribution$ where $m_t \gets \frac{3}{\varepsilon_t^2} \log \frac{4 (m_0 + 1) T}{\delta}$ 
    \Comment{$m_T = \frac{3 \cdot 2^{18} m_0}{\rho^2 \varepsilon^2} \log \frac{4 (m_0 + 1) T}{\delta}$}
    \label{line:heavy-hitters:m-sample}

    $\hat{p}_{t, x} \gets \frac{1}{m_t} \sum_{i = 1}^{m_t} \ind{s_{t, i} = x}$ for all $x \in \domain'$
    \Comment{$\hat{p}_{t, x}$ is the empirical frequency of $x$ in $S_t$}

    $\eta_{t, x} \gets \min\left(\varepsilon_t, 2 \sqrt{\frac{\hat{p}_{t, x}}{m_t} \log \frac{4 T (m_0 + 1)}{\delta}} \right)$
    \Comment{$\eta_{T, x} \leq  \frac{\sqrt{5}}{3 \cdot 2^{9}} \rho \varepsilon \sqrt{\frac{\distribution(x)}{m_0}}$ if $\distribution(x) \geq v - 2 \varepsilon$}

    \If{$v_r \not\in \bigcup_{x \in \domain'} (\hat{p}_{t, x} - 2 \eta_{t, x}, \hat{p}_{t, x} + 2 \eta_{t, x})$}
    {
        \label{line:r-heavy-hitters:termination-condition}
        
        \Return $\domain^* \gets \set{x \in \domain' \given \hat{p}_{t, x} > v_r}$
        \label{line:r-heavy-hitters:output-set}
    }
}

\Return $\rHeavyHitters(\distribution, v, \varepsilon, \frac{\rho}{2}, \frac{\delta}{8})$
\label{line:r-heavy-hitters:failure-return}

\caption{$\rAdaptiveHeavyHitters(\distribution, v, \varepsilon, \rho, \delta)$} 
\label{alg:r-adapt-heavy-hitters}

\end{algorithm}
\DecMargin{1em}

As before we'll condition on the absence of sampling errors. 
\begin{lemma}
    \label{lemma:heavy-hitters-sample-error-bound}
    Let $\distribution$ be a distribution over domain $\domain$.
    For $\domain' \subset \domain$ defined in \Cref{line:heavy-hitters:domain'}, let $\distribution_{\domain'}$ be the distribution over $\domain' \cup \set{\bot}$ where $x \in \domain'$ has probability mass $\distribution(x)$ and $\bot$ has mass $\distribution(\domain \setminus \domain')$.
    For $1 \leq t \leq T$, define $E_t$ to be the event that $|\hat{p}_{t, x} - \distribution(x)| \geq \eta_{t, x}$ for any $\distribution(x) \geq v - 2 \varepsilon$ or $|\hat{p}_{t, x} - \distribution(x)| \geq \varepsilon_t$ for any $x \in \domain'$.
    Define $E = \bigcup_{t = 1}^{T} E_t$ to be the event that any $E_t$ occurs.
    Then, $\Pr(E_t) < \frac{3 \delta}{4 T}$ and $\Pr(E) < \frac{3 \delta}{4}$.
\end{lemma}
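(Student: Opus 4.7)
The plan is a standard two-sided Chernoff plus union bound argument, made slightly subtle by two features of the algorithm: the reference set $\domain'$ is itself random (but of bounded size $|\domain'|\leq m_0$), and the confidence radius $\eta_{t,x}$ is defined in terms of the \emph{empirical} frequency $\hat p_{t,x}$ rather than $\distribution(x)$.

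First I would fix an iteration $t$ and an element $x \in \domain'$, and argue two separate tail bounds on $\hat p_{t,x}$, one additive and one multiplicative. For the additive piece, an application of Hoeffding's inequality on $m_t$ Bernoulli samples gives
\[
\Pr\bigl(|\hat p_{t,x}-\distribution(x)| \geq \varepsilon_t\bigr) \leq 2\exp(-2 m_t \varepsilon_t^2) \leq \frac{\delta}{4(m_0+1)T},
\]
using the choice $m_t = \frac{3}{\varepsilon_t^2}\log\frac{4(m_0+1)T}{\delta}$ from Line \ref{line:heavy-hitters:m-sample}. For the multiplicative piece, applied only when $\distribution(x) \geq v - 2\varepsilon$, a standard Chernoff bound yields
\[
\Pr\!\left(|\hat p_{t,x}-\distribution(x)| \geq \sqrt{\tfrac{3\distribution(x)}{m_t}\log\tfrac{4(m_0+1)T}{\delta}}\right) \leq \frac{\delta}{4(m_0+1)T}.
\]

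Next I would handle the definitional mismatch: $\eta_{t,x}$ uses $\hat p_{t,x}$ rather than $\distribution(x)$. The claim is that on the good event of the multiplicative bound above, $\hat p_{t,x} \geq \distribution(x)/2$ (since the deviation is at most a small constant times $\distribution(x)$ whenever $\distribution(x) \geq v-2\varepsilon$ and $m_t$ is large enough to make the radius smaller than $\distribution(x)/2$, which holds by the choice of $m_t$ and $\varepsilon_t \leq \varepsilon$). Hence
\[
2\sqrt{\tfrac{\hat p_{t,x}}{m_t}\log\tfrac{4(m_0+1)T}{\delta}} \;\geq\; \sqrt{\tfrac{3\distribution(x)}{m_t}\log\tfrac{4(m_0+1)T}{\delta}},
\]
so the Chernoff deviation is at most $\eta_{t,x}$ on this event, as required.

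Finally I would take union bounds in the natural order: over the (at most) $m_0+1$ elements in $\domain' \cup \{\bot\}$, which yields probability at most $\frac{(m_0+1)\delta}{4(m_0+1)T} + \frac{(m_0+1)\delta}{4(m_0+1)T} = \frac{\delta}{2T} < \frac{3\delta}{4T}$ for $E_t$; then union bound over the $T$ iterations to obtain $\Pr(E) \leq T \cdot \frac{3\delta}{4T} = \frac{3\delta}{4}$. The main (minor) obstacle is the cleanup verifying that $\eta_{t,x}$, which depends on $\hat p_{t,x}$, really dominates the Chernoff radius; this reduces to checking that the chosen $m_t$ is large enough that the multiplicative deviation is below $\distribution(x)/2$, which follows from $m_t \varepsilon_t^2 \gg \log\frac{4(m_0+1)T}{\delta}$ and the assumption $\distribution(x) \geq v - 2\varepsilon \gg \varepsilon_t$ in the regime $\varepsilon \ll v$.
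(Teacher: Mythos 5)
Your approach is essentially the paper's: additive Chernoff for all of $\domain'$, a multiplicative bound for relevant elements, a bridge from the population-side radius to the empirical-side $\eta_{t,x}$, then a union bound over $\domain'\cup\{\bot\}$ and over $t$. However, your bridge step has an arithmetic slip that breaks the chain as written.

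You claim that on the good event $\hat p_{t,x} \geq \distribution(x)/2$, and conclude ``hence''
\[
2\sqrt{\tfrac{\hat p_{t,x}}{m_t}\log\tfrac{4(m_0+1)T}{\delta}} \;\geq\; \sqrt{\tfrac{3\distribution(x)}{m_t}\log\tfrac{4(m_0+1)T}{\delta}}.
\]
The right-hand side is dominated by the left iff $4\hat p_{t,x}\geq 3\distribution(x)$, i.e.\ $\hat p_{t,x}\geq\tfrac{3}{4}\distribution(x)$. The bound $\hat p_{t,x}\geq\distribution(x)/2$ only gives $4\hat p_{t,x}\geq 2\distribution(x)$, which is not enough; the ``hence'' does not follow. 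The paper avoids this by a \emph{separate} Chernoff bound giving $|\hat p_{t,x}-\distribution(x)|\leq\tfrac{1}{4}\distribution(x)$ (with its own $\frac{\delta}{4(m_0+1)T}$ failure budget, accounting for the $\frac{3\delta}{4T}$ per-round bound rather than your $\frac{\delta}{2T}$), from which $\hat p_{t,x}\geq\tfrac34\distribution(x)$ exactly. Your argument is salvageable: the step-3 radius $\sqrt{3\distribution(x)\log(\cdot)/m_t}$ is in fact $\leq\tfrac14\distribution(x)$ for the chosen $m_t$ once one verifies $\distribution(x)\geq 16\varepsilon_t^2$ (which holds since $\distribution(x)\geq v-2\varepsilon>2\varepsilon$ and $\varepsilon_t\leq\varepsilon/8$), so you do not need the extra Chernoff — but you must strengthen your intermediate claim to $\hat p_{t,x}\geq\tfrac34\distribution(x)$ and spell out that verification; the $\distribution(x)/2$ closeness you assert is simply insufficient.
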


\begin{proof}[Proof of Lemma \ref{lemma:heavy-hitters-sample-error-bound}]
    First, denote an  element \emph{relevant} if $\distribution(x) \geq v - 2 \varepsilon$.
    Fix an iteration $t$.
    For all $x \in \domain$ (and therefore all $x \in \domain'$), a Chernoff bound implies that
    \begin{equation*}
        \Pr \left( |\hat{p}_{t, x} - \distribution(x)| > \varepsilon_t \right) < \frac{\delta}{4 (m_0 + 1) T}.
    \end{equation*}
    Note that for all irrelevant $x$ (i.e. $\distribution(x) < v - 2 \varepsilon$, $\hat{p}_{t, x} \leq \distribution(x) + \varepsilon_t < v - \varepsilon$.
    
    Next, we consider relevant $x$ (i.e. $\distribution(x) \geq v - 2 \varepsilon$).
    Following \Cref{lemma:max-multinomial-error}, we have with probability at most $\frac{\delta}{4 (m_0 + 1) T}$,
    \begin{equation}
        \label{eq:heavy-hitter-estimate-concentrate}
        |\hat{p}_{t, x} - \distribution(x)| \geq \sqrt{\frac{3 \distribution(x)}{m_t} \log \frac{4 (m_0 + 1) T}{\delta}}.
    \end{equation}

    To show that $|\hat{p}_{t, x} - \distribution(x)| \geq \eta_{t, x}$ with small probability, we require that $\distribution(x), \hat{p}_{t, x}$ are close.
    We show that that these values are not too far off for relevant $x$.
    Since \snew{$v > 3 \varepsilon$}, we have $\distribution(x) > \varepsilon$ so
    \begin{equation*}
        \E{m_t \hat{p}_{t, x}} = m_t \distribution(x) = \frac{3 \distribution(x)}{\varepsilon_t^2} \log \frac{4 (m_0 + 1) T}{\delta} \geq \frac{48 \cdot 4^{t}}{\varepsilon} \log \frac{4 (m_0 + 1) T}{\delta}.
    \end{equation*}
    Then, by a Chernoff bound,
    \begin{equation*}
        \Pr \left( |\hat{p}_{t, x} - \distribution(x)| \geq \frac{1}{4} \distribution(x) \right) < \frac{\delta}{4 (m_0 + 1) T}.
    \end{equation*}
    
    In particular, we have $3 \distribution(x) \leq 4 \hat{p}_{t, x}$.
    Then, combining with \Cref{eq:heavy-hitter-estimate-concentrate}, we obtain by the union bound that with probability at most $\frac{\delta}{2(m_0 + 1)T}$,
    \begin{equation*}
        |\hat{p}_{t, x} - \distribution(x)| \leq \sqrt{\frac{3 \distribution(x)}{m_t} \log \frac{3 T m_0}{\delta}} \leq 2 \sqrt{\frac{\hat{p}_{t, x}}{m_t} \log \frac{3 T m_0}{\delta}} < \eta_{t, x}
    \end{equation*}

    Since $\distribution_{\domain'}$ is supported on at most $m_0 + 1$ elements, we upper bound $\Pr(E_t) \leq \frac{3\delta}{4T}$ by a union bound.
    Finally, the upper bound on $\Pr(E)$ follows from a union bound.
\end{proof}
\begin{proof}[Proof of Correctness of Algorithm \ref{alg:r-adapt-heavy-hitters}]
    Let $v_r \in [v - \varepsilon, v + \varepsilon]$ be chosen as in the algorithm.

    First, we show that all $v_r$-heavy hitters are included in the candidate set $\domain'$.
    Let $x$ be a heavy hitter with $\distribution(x) \geq v_r \geq v - \varepsilon$.
    Note that there are at most $\frac{1}{v - \varepsilon}$ such elements.
    For a fixed $x$ with mass at least $v - \varepsilon$, $x$ does not occur in $S_1$ with probability at most,
    \begin{equation*}
        \left(1 - (v - \varepsilon) \right)^{m_0} < \frac{\delta (v - \varepsilon)}{8}
    \end{equation*}
    which is at most $\frac{\delta}{8}$ when union bounding over the at most $\frac{1}{v - \varepsilon}$ heavy hitters.

    Thus, in the following, assume $\domain'$ includes all $v_r$-heavy hitters.
    We also assume the error event $E$ does not occur since $\Pr[E] \leq \frac{3 \delta}{4}$ by \Cref{lemma:heavy-hitters-sample-error-bound}.
    The following then holds for all iterations $t \in [T]$.
    For all irrelevant $x$ such that $\distribution(x) \leq v - 2 \varepsilon$, we have $\hat{p}_{t, x} < \distribution(x) + \varepsilon_t \leq v - \varepsilon \leq v_r$ so that $x \notin S$.
    Similarly, for all relevant $\distribution(x) \geq v - 2 \varepsilon$, we have $|\hat{p}_{t, x} - \distribution(x)| \leq \eta_{t, x}$.
    
    Suppose for some $t \in [T]$ the condition in \Cref{line:r-heavy-hitters:termination-condition} is satisfied.
    Then, we have $|\hat{p}_{t, x} - v_r| \geq 2 \eta_{t, x}$ for all $x \in \domain'$.
    Then for every $x$ such that $\distribution(x) \geq v_r \geq v - \varepsilon$ (and therefore $x \in \domain'$),
    \begin{equation*}
        \hat{p}_{t, x} \geq \distribution(x) - \eta_{t, x} \geq v_r - \eta_{t, x},
    \end{equation*}
    which implies that $\hat{p}_{t, x} \geq v_r$, since otherwise $\hat{p}_{t, x} < v_r - 2 \eta_{t, x}$.
    Thus, $x \in S$ is returned in \Cref{line:r-heavy-hitters:output-set}.
    On the other hand, if $\distribution(x) \leq v_r \leq v + \varepsilon$, then
    \begin{equation*}
        \hat{p}_{t, x} \leq \distribution(x) + \eta_{t, x} \leq v_r + \eta_{t, x}.
    \end{equation*}
    Then, $\hat{p}_{t, x} \leq v_r$ since otherwise $\hat{p}_{t, x} > v_r + 2 \eta_{t, x}$, so $x \not\in S$ is not returned in \Cref{line:r-heavy-hitters:output-set}.
    Otherwise, if the condition of \Cref{line:r-heavy-hitters:termination-condition} is never satisfied, we output a set of heavy hitters in Line \ref{line:r-heavy-hitters:failure-return} with probability at least $1 - \frac{\delta}{8}$. 
    The overall failure condition follows from a union bound.
\end{proof}

\begin{proof}[Proof of Replicability of Algorithm \ref{alg:r-adapt-heavy-hitters}]
    Again, we assume that $E$ does not occur and all $(v - \varepsilon)$ heavy hitters are included in $\domain'$ except with probability at most $\frac{\delta}{8}$.

    Since $\eta_{t, x} \leq \varepsilon_t$, we have that $\hat{p}_{t, x} + 2 \eta_{t, x} \leq v - \frac{3 \varepsilon}{2}$ for all $\distribution(x) \leq v - 2 \varepsilon$.
    In particular, not only will these elements never be included in the output set, the interval $(\hat{p}_{t, x} - 2 \eta_{t, x}, \hat{p}_{t, x} + 2 \eta_{t, x})$ will never contain the threshold $v_r$.
    In the following, we consider only the elements $\distribution(x) \geq v - 2 \varepsilon$.
    We call these \emph{relevant} elements on $\domain'$.

    First, suppose $|v_r - \distribution(x)| \geq 3 \eta_{T, x}$ for all relevant $\distribution(x) \geq v - 2 \varepsilon$.
    Then, since $E$ does not occur, $\rAdaptiveHeavyHitters$ returns exactly the set $\set{x \in \domain' \given \distribution(x) > v_r}$, as $|\hat{p}_{t, x} - \distribution(x)| \leq \eta_{t, x}$ and if $\rAdaptiveHeavyHitters$ returns a set in the iteration $t$, every element in the output has $\hat{p}_{t, x} > v_r + 2 \eta_{t, x}$ so that $\distribution(x) > v_r + \eta_{t, x}$.
    Similarly, every element not in the output has $\distribution(x) < v_r - \eta_{t, x}$.
    
    Now we claim that the algorithm returns by the $T$-th iteration.
    Since $|v_r - \distribution(x)| \geq 3 \eta_{T, x}$ for all $x \in \domain'$, then in the $T$-th iteration if $E$ does not occur, $|\hat{p}_{T, x} - v_r| \geq 2 \eta_{T, x}$ for all $x \in \domain'$, so that the algorithm terminates in the $T$-th iteration and outputs the set of elements $\hat{p}_{T, x} > v_r$ (equivalently the set $\distribution(x) > v_r$).

    Finally, suppose $|v_r - \distribution(x)| \leq 3 \eta_{T, x}$ for some relevant $x$.
    We show that this case occurs with low probability.
    In fact, we show that $|v_r - \distribution(x)| \leq 50 \eta_{T, x}$ occurs with low probability.
    Observe that
    \begin{align*}
        \left| \bigcup_{\distribution(x) \geq v - 2 \varepsilon} \left( \distribution(x) - 50 \eta_{T, x}, \distribution(x) + 50 \eta_{T, x} \right) \right| &\leq \sum_{\distribution(x) \geq v - 2 \varepsilon} 100 \eta_{T, x} \\
        &\leq \sum_{\distribution(x) \geq v - 2 \varepsilon} 200 \sqrt{\frac{\hat{p}_{t, x}}{m_T} \log \frac{4 T (m_0 + 1)}{\delta}} \\
        &\leq \frac{200}{2^{9}} \frac{\rho \varepsilon}{\sqrt{3 m_0}} \sum_{\distribution(x) \geq v - 2 \varepsilon} \sqrt{\hat{p}_{t, x}} \\
        &\leq \frac{\rho \varepsilon}{4},
    \end{align*}
    where in the first inequality we apply the definition of $\eta_{T, x}$,
    in the second inequality we apply the definition of $m_T$ (\Cref{line:heavy-hitters:m-sample}),
    and in the third inequality, we apply Cauchy-Schwartz noting there are at most $m_0$ relevant elements.
    Then, since $v_r$ is drawn uniformly from an interval of length $\varepsilon$, $v_r$ falls in this set with probability at most $\frac{\rho}{4}$.
    Therefore, over both runs of the algorithm $\rAdaptiveHeavyHitters$, the probability that the algorithm is not replicable is at most $\delta + \frac{\rho}{2} < \rho$.
\end{proof}

\begin{proof}[Proof of Sample Complexity of Algorithm \ref{alg:r-adapt-heavy-hitters}]
    As before, we begin by assuming that the error event $E$ does not occur.
    Let $y = \arg \min_{\distribution(x) \geq v - 2 \varepsilon} |v_r - \distribution(y)|$ be the relevant element whose probability is closest to $r$.
    
    Let $t^*$ be the smallest value of $t$ such that
    \begin{equation*}
        |v_r - \distribution(y)| \geq 3 \eta_{t, y},
    \end{equation*}
    so that $|v_r - \hat{p}_{t, y}| \geq 2 \eta_{t, y}$.
    We claim that the condition of \Cref{line:r-heavy-hitters:termination-condition} is satisfied at the $t^*$-th iteration and the algorithm will terminate.
    Consider a relevant $\distribution(x) \geq v - 2 \varepsilon$.
    If $\distribution(x) < \distribution(y)$, then
    \begin{equation*}
        |v_r - \hat{p}_{t, x}| \geq |v_r - \distribution(x)| - |\distribution(x) - \hat{p}_{t, x}| \geq 3 \eta_{t, y} - \eta_{t, x} \geq 2 \eta_{t, x}.
    \end{equation*}
    On the other hand, if $\distribution(x) > \distribution(y)$, note that $\hat{p}_{t, x} - 2 \eta_{t, x} \geq \hat{p}_{t, y} - 2 \eta_{t, y}$ by considering the function
    \begin{align*}
        h(\hat{p}_{t, x}) = \hat{p}_{t, x} - 4 \sqrt{\frac{\hat{p}_{t, x}}{m_t} \log \frac{4 T (m_0 + 1)}{\delta}} = \hat{p}_{t, x} - \frac{\varepsilon}{2^{t} \cdot 3} \sqrt{\hat{p}_{t, x}}.
    \end{align*}
    observing that this function has positive derivative whenever $\hat{p}_{t, x} \geq \frac{\varepsilon^2}{9 \cdot 4^{t + 1}}$ which holds for $\hat{p}_{t, x} \geq v - 3 \varepsilon \geq \varepsilon$.
    In particular, we have shown that $|v_r - \hat{p}_{t, x}| \geq 2 \eta_{t, x}$ for all relevant $x$.
    For all irrelevant $\distribution(x) \leq v - 2 \varepsilon$, observe
    \begin{equation*}
        \hat{p}_{t, x} \leq \distribution(x) + \varepsilon_t \leq \distribution(x) + \varepsilon_t \leq v_r - 2 \varepsilon_t \leq v_r - 2 \eta_{t, x},
    \end{equation*}
    since $\varepsilon_t \leq \frac{\varepsilon}{4}$.

    Now we consider the sample complexity if the algorithm terminates in the $t^*$-th iteration.
    Since $m_t$ increases geometrically, the algorithm requires $O(m_{t^*})$ samples. 
    We can bound $t^*$ as follows. 
    Since $t^*$ is minimal,
    \begin{align*}
          |v_r - \distribution(y)| &\leq 6 \eta_{t^*, y} = \frac{3 \varepsilon \sqrt{\hat{p}_{t^*, y}}}{2^{t^*}} \leq \frac{7 \varepsilon \sqrt{\distribution(y)}}{2 \cdot 2^{t^*}} \implies t^* \leq \log \frac{7 \varepsilon \sqrt{\distribution(y)}}{2 |v_r - \distribution(y)|},
    \end{align*}
    where we hvae used $|\hat{p}_{t, x} - \distribution(x)| \leq \frac{1}{4} \distribution(x)$ for all relevant $x$.
    Then, we can bound the sample complexity
    \begin{equation*}
        O(m_{t^*}) = \bigO{\frac{1}{\varepsilon_{t^*}^2} \log \frac{T m_0}{\delta}} = \bigO{\frac{4^{t^*}}{\varepsilon^2} \log \frac{T m_0}{\delta}} = \bigO{\frac{\distribution(y)}{|v_r - \distribution(y)|^2} \log \frac{T}{\delta}}.
    \end{equation*}

    We further condition on $|v_r - \distribution(x)| \geq 3 \varepsilon_T$ for all relevant $x \in \domain'$.
    We note that $v_r$ is uniformly distributed on the union of intervals
    \begin{equation*}
        I = \left[ v - \frac{\varepsilon}{2}, v + \frac{\varepsilon}{2} \right] \setminus \left( \bigcup_{\distribution(x) \geq v - 2 \varepsilon} \left( \distribution(x) - 3 \eta_{T, x}, \distribution(x) + 3 \eta_{T, x} \right) \right).
    \end{equation*}
    Since there are at most $m_0$ relevant $x$, $I$ is composed of at most $m_0 + 1$ distinct intervals, $I_1, \dotsc, I_{s}$ for $s \leq m_0 + 1$, where each interval is of the form $(\distribution(x) + 3 \eta_{T, x}, \distribution(x') + 3 \eta_{T, x'})$ for relevant $x, x'$. 
    We can also assume that each interval $|I_i| \geq 47 \eta_{T, x}$ since $|v_r - \distribution(x)| \leq 50 \eta_{T, x}$ holds for all relevant $x$ with probability at most $\frac{\rho}{4}$, where $x$ is the element of smallest probability measure $\distribution(x)$ above the interval $I_i$.
    Note that $v_r \in I_i$ with probability $\frac{|I_i|}{\sum_{i = 1}^{s} |I_i|}$.
    To compute the overall expected sample complexity, we begin with the expected sample complexity conditioned on $v_r \in I_i$ for some fixed $i$.
    Since $v_r$ is uniformly distributed in $I_i$ and we have lower bounded the size of the interval $I_i$, $|v_r - \distribution(y)|$ stochastically dominates a uniformly distributed in the interval $\left(3 \eta_{T, x}, 20 \eta_{T, x} \right)$ where $x$ is the smallest $\distribution(x)$ above the interval $I_i$.
    Then, there is some universal constant $C$ such that the expected conditional sample complexity is,
    \begin{equation*}
        \E{M | I_i} \leq \int_{3 \eta_{T, x}}^{|I_i|/2} \frac{2}{|I_i|} \frac{C \distribution(x)}{r^2} \log \frac{m_0 T}{\delta} dr \leq \frac{2 C \distribution(x)}{|I_i|} \frac{1}{3 \eta_{T, x}} \log \frac{m_0 T}{\delta} = \bigO{\frac{\sqrt{m_0 \distribution(x)}}{|I_i| \rho \varepsilon} \log \frac{m_0 T}{\delta}}
    \end{equation*}
    where we have applied $\frac{1}{\eta_{T, x}} = \bigO{\sqrt{\frac{m_0}{\distribution(x)}} \frac{1}{\rho \varepsilon}}$.
    Now, marginalizing over the choice of $I_i$, we can bound the sample complexity using an appropriately large constant $C$,
    \begin{equation*}
        \E{M} \leq \sum_{i = 1}^{s} \frac{|I_i|}{|I|} \frac{C \sqrt{m_0 \distribution(x_s)}}{|I_i| \rho \varepsilon} \log \frac{m_0 T}{\delta} \leq \frac{C \sqrt{m_0}}{|I| \rho \varepsilon} \log \frac{m_0 T}{\delta} \sum_{x \in \domain'} \sqrt{\distribution(x)}
    \end{equation*}
    since each $x \in \domain'$ appears at most once in the summation over intervals.
    Applying the Cauchy-Schwartz inequality to $\sum \sqrt{\distribution(x)}$ and using $|I| \geq \varepsilon - 6 m_0 \varepsilon_T$, and $\varepsilon_T = \frac{\rho \varepsilon}{64 m_0}$, we obtain,
    \begin{equation*}
        \E{M} = \bigO{\frac{m_0}{\rho \varepsilon^2} \log \frac{m_0 T}{\delta}} = \bigO{\frac{1}{(v - \varepsilon) \varepsilon^2 \rho} \log \frac{1}{\delta (v - \varepsilon)}}
    \end{equation*}
    expected sample complexity.
    Now, the events that we conditioned on do not occur with probability at most $O(\rho)$.
    The worst case sample complexity of our algorithm is,
    \begin{equation*}
        m_T + \bigO{\frac{1}{(v - \varepsilon) \rho^2 \varepsilon^2} \log \frac{1}{\delta (v - \varepsilon)}} = \bigO{\frac{1}{(v - \varepsilon) \rho^2 \varepsilon^2} \log \frac{1}{\delta (v - \varepsilon)}}
    \end{equation*}
\end{proof}

\subsection{Adaptive Amplification}
Leveraging adaptive heavy hitters, we finally show that \textit{any} constantly replicable procedure can be amplified to a $\rho$-replicable algorithm with linear expected sample complexity. Toward this end, we will work in the slightly more general setup of `statistical problems' which capture all settings considered in this work. In particular a statistical problem consists of a domain $\domain$, a family of underlying distributions $\mathscr{P}=\{\distribution\}$ over $X$, a solution space $Y$, and a correctness relation $C: \mathscr{P} \to Y$. An algorithm $\innerAlg: \domain^* \to Y$ is said to be $\delta$-correct if for every $\distribution \in P$, it outputs $y \in C(\distribution)$ with probability at least $1-\delta$.

We now move to our adaptive amplification lemma, which simply modifies an amplification procedure in \cite{DBLP:conf/stoc/BunGHILPSS23} by replacing standard heavy hitters with adaptive heavy hitters.

\IncMargin{1em}
\begin{algorithm}

\SetKwInOut{Input}{Input}\SetKwInOut{Output}{Output}\SetKwInOut{Parameters}{Parameters}
\Input{Sample access $S$ to distribution $\distribution$ on $\domain$ and algorithm $\innerAlg: (\domain^n,R) \to Y$}
\Parameters{$\rho$ replicability, $\delta$ accuracy}
\Output{Correct $y \in Y$ except with probability $\delta$}

$k \gets O(\log\frac{1}{\rho})$

$\rho' \gets \frac{\rho}{2k}$

$\delta' \gets O(\frac{\rho^2}{\log^3\frac{1}{\rho}}\delta)$

$\{r_1,\ldots, r_k\} \gets R$

\For{$i = 1$ to $k$}{
    $y_i \gets \rAdaptiveHeavyHitters(\innerAlg(\distribution^n;r_i),0.8,0.1,\rho',\delta')$
}

\If{$\{y_i\}=\emptyset$}{
\Return any $y \sim \innerAlg(\distribution^n,r_i)$ sampled during adaptive heavy hitters.
}

\Return random $y \in \{y_i\}$

\caption{$\rAdaptiveAmplify (\distribution, \innerAlg, \rho, \delta)$} 
\label{alg:r-adapt-amp-q}

\end{algorithm}
\DecMargin{1em}

\begin{lemma}[Adaptive Amplification]
    \label{lem:adaptive-amplify}
    For any $\delta,\rho>0$, if $\innerAlg$ is $\bigtO{\rho^2 \delta}$-correct, $.01$-replicable, and uses at most $n$ samples, then $\rAdaptiveAmplify$ is $\delta$-correct, $\rho$-replicable, has expected sample complexity $\bigO{\frac{n}{\rho}  \log^2 \frac{1}{\rho} \log \frac{1}{\min(\delta, \rho)}}$, and worst-case complexity $\bigO{\frac{n}{\rho^2} \log^2 \frac{1}{\rho} \log \frac{1}{\min(\delta, \rho)}}$.
\end{lemma}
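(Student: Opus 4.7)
The plan is to adapt the amplification procedure of \cite{DBLP:conf/stoc/BunGHILPSS23} by invoking the adaptive heavy hitters routine (\Cref{thm:r-adaptive-heavy-hitters}) in place of its non-adaptive counterpart, which is the source of the improved linear-in-$1/\rho$ expected dependence. The core observation is that by $0.01$-replicability, for most internal random strings $r$, the distribution $\innerAlg(\distribution^n; r)$ is concentrated on a single ``canonical'' output $y^*_r$; the amplification scheme isolates such canonical outputs via heavy hitters and returns a uniformly random one.

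First I would establish that canonical outputs exist and are typically correct. Writing $p_r(y) := \Pr_S[\innerAlg(S;r)=y]$, the $0.01$-replicability of $\innerAlg$ gives $\Ep_r[\sum_y p_r(y)^2] \geq 0.99$, so by Markov's inequality all but an $O(\rho/\log(1/\rho))$ fraction of strings $r$ satisfy $\max_y p_r(y) \geq 0.9$. By the $\tilde O(\rho^2\delta)$-correctness hypothesis and a second application of Markov, an analogous fraction of $r$ satisfy $\Pr_S[\innerAlg(S;r) \text{ incorrect}] \leq \tilde O(\rho\delta)$, which forces their canonical output to be correct (otherwise $\innerAlg$ would err with probability $\geq 0.9$). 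Call such $r$ \emph{good}; with $k = \Theta(\log(1/\rho))$ fresh strings $r_1, \dotsc, r_k$, at least one $r_i$ is good except with probability $\ll \delta$.

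Replicability then follows from the replicability of the subroutine. Because both executions share the strings $r_i$ and the internal randomness of \rAdaptiveHeavyHitters, \Cref{thm:r-adaptive-heavy-hitters} applied with threshold $v = 0.8$, tolerance $\eps = 0.1$, and parameter $\rho' = \rho/(2k)$ guarantees that each invocation produces the same singleton output across both runs with probability $1 - \rho'$. A union bound over $k$ invocations plus the (shared) random selection from $\{y_i\}$ yields overall $\rho$-replicability. Correctness is immediate from the heavy-hitters analysis: for each good $r_i$, the canonical output has mass $0.9 > v + \eps$ in $p_{r_i}$, so \rAdaptiveHeavyHitters\ returns exactly $\{y^*_{r_i}\}$ with probability $1 - \delta'$; setting $\delta' = \tilde O(\rho^2\delta)$ and union bounding over $k$ invocations, every $y_i$ arising from a good $r_i$ is a correct canonical output, so the random selection is correct with probability $1 - \delta$.

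Finally, the sample complexity calculation is mechanical. Each heavy-hitters call requires in expectation
\[
O\lp(\frac{1}{(v-\eps)\eps^2 \rho'}\log \frac{1}{\delta'(v-\eps)}\rp) = O\lp(\frac{k}{\rho}\log \frac{1}{\min(\rho,\delta)}\rp)
\]
draws from $\innerAlg(\distribution^n; r_i)$, each consuming $n$ samples of $\distribution$; summing over $k = \Theta(\log(1/\rho))$ invocations gives the claimed expected bound, and the worst-case bound follows identically from the worst-case guarantee of \rAdaptiveHeavyHitters. The main delicate step is coordinating the parameters $k$, $\rho'$, $\delta'$, and the base $\tilde O(\rho^2\delta)$-correctness of $\innerAlg$ so that all the high-probability events hold simultaneously while the $\polylog(1/\rho)$ blowup in sample complexity stays absorbed into the stated bound; this bookkeeping is routine once the structure above is fixed, and is the only piece requiring care.
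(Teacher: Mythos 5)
Your overall plan follows the paper's approach (swap adaptive heavy hitters into the amplification scheme of \cite{DBLP:conf/stoc/BunGHILPSS23} to gain the linear expected overhead), but there are genuine gaps in the probability bookkeeping that break both the correctness and replicability bounds as you've written them. First, your Markov bound in the opening paragraph is off: from $\Ep_r[\sum_y p_r(y)^2]\geq 0.99$ you get that at most a constant fraction (about $10\%$) of strings $r$ lack a $0.9$-heavy hitter, not an $O(\rho/\log(1/\rho))$ fraction; that constant is \emph{why} you need $k=\Theta(\log(1/\rho))$ independent strings at all, and this same error propagates into your ``analogous'' Markov bound on error rates, where thresholding at $\tilde O(\rho\delta)$ only gives that the fraction of $r$ with error above that threshold is $\tilde O(\rho)$, not $\tilde O(\rho^2\delta)$, so after a union bound over $k$ invocations you are left with $\tilde O(\rho)$-correctness rather than $\delta$-correctness. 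Second, your correctness conclusion that ``every $y_i$ arising from a good $r_i$ is correct, so the random selection is correct'' silently ignores $y_i$'s contributed by \emph{bad} $r_i$'s that nonetheless pass the heavy-hitter test; you need to either bound the probability of ever having such a bad $r_i$ in the sample (with a constant Markov threshold, the bad fraction is $\tilde O(\rho^2\delta)$ and this does union bound correctly), or better, take the paper's shortcut: the final output is always one of the $\tilde O(k^2/\rho^2)$ raw outputs of $\innerAlg$ observed during heavy hitters, each of which is correct with probability $1-\tilde O(\rho^2\delta)$, so a single union bound gives $O(\delta)$-correctness without any case analysis on which $r_i$ are good.

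Third, your replicability argument has a hole for the empty-list case. The shared random selection ``$\text{return random } y\in\{y_i\}$'' is only a deterministic function of the (shared) random seed and the set $\{y_i\}$ when that set is non-empty; when both runs return an empty list the algorithm falls through to an unreplicable fallback. You must separately charge the event ``list is empty'' against the $\rho$ budget: by the $90\%$ heavy-hitter argument and $k=\Theta(\log(1/\rho))$, the probability all $k$ strings lack a $0.9$-heavy hitter (plus all heavy-hitter failures) is at most $\rho/2$, and combined with the $\rho/2$ budget for list disagreement you get $\rho$-replicability. You implicitly have the right ingredient in your first paragraph (``at least one $r_i$ is good except with probability $\ll\delta$''), but that bound needs to be stated against $\rho$, not $\delta$, since it feeds the replicability guarantee and the lemma does not assume $\delta\leq\rho$.
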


\begin{proof}
    Correctness and replicability follow exactly as in \cite{DBLP:conf/stoc/BunGHILPSS23}. Since each application of adaptive heavy hitters is $\frac{\rho}{2k}$-replicable, the list $\{h_1,\ldots,h_k\}$ is $\frac{\rho}{2}$-replicable by a union bound. If $\{y_i\}$ is non-empty, outputting a random element of the list is a fully replicable procedure. If $\{y_i\}$ is empty replicability may fail, but this occurs with probability at most $\rho/2$ by our choice of $k$ and the fact that at least $90\%$ of random strings have a $0.9$-heavy-hitter. For correctness, observe since each individual output $\innerAlg(S,r)$ is correct with probability at least $1/\delta$, union bounding over the at most $O(\frac{k^2\log\frac{1}{\rho}}{\rho^2})$ calls to $\innerAlg$ the output is correct except with $\delta$ probability by the correctness assumption on $\innerAlg$.

    Finally, by linearity of expectation the expected sample complexity of $\rAdaptiveAmplify$ is $k$ times the expected sample complexity of \rAdaptiveHeavyHitters. Likewise, the worst-case sample complexity is $k$ times its worst-case complexity. \Cref{thm:r-adaptive-heavy-hitters} then gives the desired bounds.
\end{proof}
We remark it is also possible to run a similar procedure with correctness $O(\delta + \rho)$ as in \cite{impagliazzo2022reproducibility}. However, we typically think of $\delta \ll \rho$ in learning and testing, so the above typically results in better parameters in our setting.

\subsection{Adaptive Composition}\label{sec:adaptive-composition}
We close the section by showing adaptive replicability implies a generic composition theorem beating the typical union bound approach. We'll consider the following general setup. Let $\{\innerAlg_i: (\domain_i^*,R) \to Y_i\}$ be a family of randomized sub-routines. We say a randomized algorithm $\innerAlg: (\domain^*,R) \to Y$ is an \textit{adaptive $n$-composition} if it conforms to the following procedure. For every $i \in [n]$:
\begin{enumerate}
    \item $\innerAlg$ draws a sample-string pair $(S_i,r_i)$.
    \item $\innerAlg$ runs the sub-routine $\innerAlg_{j_i}(S_i,r_i)$
    \item $\innerAlg$ chooses the subroutine $\innerAlg_{j_{i+1}}$ for the following round.\footnote{$\innerAlg_{j_1}$ may be any sub-routine.}
\end{enumerate}
After the sub-routines $\innerAlg$ applies an arbitrary aggregation function to the output transcript:
\[
\innerAlg(S_1,\ldots,S_n;r_1,\ldots,r_n) = f_{ag}(\innerAlg_1(S_1,r_1),\ldots,\innerAlg_n(S_n,r_n)).
\]
Finally, we allow $\innerAlg$ to have a `sample cap' $m$, beyond which it may ignore the composition and run a separate procedure. One should think of the cap as typically being a low probability event. If $\innerAlg$ and each of its sub-routines are solving a statistical task, we say $\innerAlg$ is \textit{consistent} if $f_{ag}(\cdot)$ is correct whenever all its inputs are correct.

Before moving to the composition lemma itself, we look at a concrete instantiation of this framework for $\ell_\infty$-mean estimation over $[0,1]^n$. In this case, one might reasonably take each $\innerAlg_i: [0,1] \to [0,1]$ to be an estimator for the $i$th coordinate and take $\innerAlg$ to be the trivial $n$-composition:
\[
\innerAlg(S_1,\ldots,S_n;r_1,\ldots,r_n) = (\innerAlg_1(S_1;r_1),\ldots,\innerAlg_n(S_n;r_n)),
\]
where each $S_i$ is drawn from the $i$th marginal. That is to say $\innerAlg$ runs each $\innerAlg_i$ independently on marginal samples from the $i$th coordinate. This is clearly a `consistent' procedure since the righthand tuple is $\varepsilon$-close to the mean in $\ell_\infty$ exactly when every coordinate is $\varepsilon$-close to its corresponding value.

We now formalize this example as a general lemma for adaptive composition. Since any replicable procedure may be amplified to have expected sample complexity near-linear in $\frac{1}{\rho}$ (\Cref{lem:adaptive-amplify}), we will assume this is the case for our sub-routines without loss of generality.

\begin{lemma}[Adaptive Composition]
    \label{lemma:adaptive-composition}
    Let $c>0$ be any constant, $N \in \N$, and $\{\innerAlg_i\}$ a family of $\frac{\rho}{3}$-replicable, $\frac{\delta}{2}$-correct sub-routines using $C \frac{\log^c\frac{1}{\rho}}{\rho}$ expected samples and $V=V(\rho)$ runtime. 
    Let $\innerAlg$ be any consistent $N$-composition
    \[
    \innerAlg(S_1,\ldots,S_{N};r_1,\ldots,r_{N}) = f_{ag}(\innerAlg_1(S_1,r_1),\ldots,\innerAlg_n(S_{N},r_{N}))
    \]
    with sample cutoff $m=\bigO{C \frac{\log^c\frac{1}{\rho}}{\rho^2}}$ which runs an $m$-sample $\frac{\delta}{2}$-correct procedure post-cutoff. Then:
    \begin{enumerate}
        \item $\innerAlg$ uses at most $\bigO{C \frac{N \log^{c} \frac{1}{\rho}}{\rho}}$ samples in expectation
        \item $\innerAlg$ uses at most $\bigO{C \frac{\log^{c} \frac{1}{\rho}}{\rho^2}}$ samples in the worst case
        \item $\innerAlg$ runs in time at most $\bigO{N V(\rho)}$
        \item $\innerAlg$ is $(N \delta)$-correct and $(N \rho)$-replicable
    \end{enumerate}
\end{lemma}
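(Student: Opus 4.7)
}
The plan is to handle the four claims in the natural order: expected cost, worst-case cost, runtime, and finally correctness/replicability. First I will condition on the event $E$ that the total number of samples drawn stays below the cap $m$. Since each subroutine $\innerAlg_i$ uses $C\log^c(1/\rho)/\rho$ samples in expectation and the composition runs $N$ of them, linearity of expectation gives total expected sample usage $\E{M} \leq NC\log^c(1/\rho)/\rho$, regardless of the adaptive choices (expected cost of each $\innerAlg_i$ is independent of which $i$ is chosen). Applying Markov's inequality to the non-negative random variable $M$ then shows $\Pr[\neg E] = \Pr[M > m] \leq \E{M}/m \leq N\rho/3$ for an appropriate constant in the definition of $m = O(C\log^c(1/\rho)/\rho^2)$. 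In the worst case, $\innerAlg$ runs the fallback procedure, which by assumption uses $m$ samples, giving the claimed $O(C\log^c(1/\rho)/\rho^2)$ worst-case bound. The runtime bound is immediate: at most $N$ calls to subroutines of time $V(\rho)$, plus $O(N)$ bookkeeping.

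For correctness, I will condition on $E$ and apply a union bound. Each $\innerAlg_i$ is $(\delta/2)$-correct, so with probability at least $1 - N\delta/2$ all $N$ subroutines output correct values; by consistency of $f_{ag}$ the aggregated output is then correct. On $\neg E$ (probability at most $N\rho/3$, but we do not need this tightness here), the fallback is $(\delta/2)$-correct. Combining, $\innerAlg$ is correct with probability at least $1 - N\delta/2 - \delta/2 \geq 1 - N\delta$.

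The main conceptual step is replicability, which is where adaptivity could have introduced trouble. The plan is to prove by induction on $i$ that, with shared randomness $(r_1,\dots,r_N)$ and independent sample streams across the two runs, the first $i$ outputs $\innerAlg_{j_1}(S_1;r_1),\dots,\innerAlg_{j_i}(S_i;r_i)$ agree across the two runs with probability at least $1 - i\rho/3$. The inductive step is the key observation: conditional on all outputs through round $i$ being identical across the two runs, the adaptive selection rule produces the same index $j_{i+1}$ and hence the same input distribution $\distribution_{j_{i+1}}$ in both runs; the samples $S_{i+1}, S_{i+1}'$ are then i.i.d.\ draws from that common distribution and $r_{i+1}$ is shared, so $(\rho/3)$-replicability of $\innerAlg_{j_{i+1}}$ applies verbatim and contributes an extra $\rho/3$ to the union bound. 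After $N$ steps this gives replicability failure at most $N\rho/3$ on the event $E$. Adding the $\Pr[\neg E] \leq N\rho/3$ contribution (where replicability may fail arbitrarily) yields overall replicability failure at most $2N\rho/3 \leq N\rho$.

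The main obstacle I anticipate is formalizing the adaptive replicability argument above --- specifically, arguing that once the partial transcripts agree, the conditional distribution of $(S_{i+1}, S_{i+1}')$ is exactly a pair of independent samples from the chosen marginal, so that replicability of $\innerAlg_{j_{i+1}}$ applies without any additional conditioning issues. Once that step is nailed down, the rest is union bounds and Markov's inequality.
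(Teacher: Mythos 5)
Your proof is correct and follows essentially the same approach as the paper: linearity of expectation for the expected cost, Markov's inequality to bound the cutoff probability, and union bounds over the $N$ subroutines for correctness and replicability, with the cutoff event handled by a case split. The one place where you go into more detail than the paper is the inductive argument for replicability---the observation that once the partial transcripts of the two runs agree, the adaptive selection rule picks the same index, so the $(\rho/3)$-replicability of each subroutine applies conditionally and the union bound still accumulates at rate $\rho/3$ per round. The paper states this as a bare union bound without spelling out why adaptivity is harmless; your induction makes that step explicit and is a worthwhile elaboration.
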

\begin{proof}
    Properties (1), (2), and (3) are essentially immediate from construction. 
    Namely, by assumption the algorithm uses at most $2m$ samples. By linearity of expectation, the expected sample complexity is at most the sum of the expected samples of each $\innerAlg_i$ (the cut-off only improves expected complexity). 
    Similarly, the run time is at most the sum of the sub-routine runtimes (we assume for simplicity the aggregation function compute time is negligible in comparison).

    We now move to property (4), correctness and replicability. Let $E_{cut}$ denote the event that $\mathcal{A}$ triggers the sample cutoff. 
    Since the expected sample complexity of $\mathcal{A}$ with no cutoff is at most $C \frac{N \log^{c} \frac{1}{\rho}}{\rho}$ by the same argument as above, we can choose $m$ large enough so that $E_{cut}$ occurs with probability at most $\frac{N \rho}{6}$ by Markov's inequality. 
    The correctness of the algorithm may be written as:
    \begin{align*}
        \Pr[\mathcal{A}~\text{is correct}] &= (1-\Pr[E_{cut}])\Pr[\mathcal{A}~\text{is correct}~|~\overline{E_{cut}}] + \Pr[E_{cut}]\Pr[\mathcal{A}~\text{is correct}~|~E_{cut}]\\
        &\geq 1-\frac{N \delta}{2}-\Pr[E_{cut}] + \Pr[E_{cut}]\left(1-\frac{N \delta}{2}\right)\\
        &\geq 1 - N \delta
    \end{align*}
    Replicability follows similarly. Consider two runs of $\mathcal{A}$ on independent samples. 
    The probability at least one run hits the sample cap is at most $\frac{N \rho}{3}$. 
    On the other hand, by a union bound with no sample cap the two runs are the same except with probability $\frac{N \rho}{3}$. 
    A similar argument to the above then gives $\rho$-replicability for the overall procedure.
\end{proof}
We remark one can of course give more fine-grain guarantees on the complexity and running time by considering the individual complexity of each sub-routine.

As an immediate corollary of adaptive composition, we obtain an efficient $N$-Coin algorithm (\Cref{thm:r-n-coin-problem-formal}) by combining \Cref{lemma:adaptive-composition} with \Cref{thm:r-adapt-coin-problem} by beginning with $\frac{\rho}{N}$-replicable and $\frac{\delta}{N}$-correct algorithms for a single coin.
Note that the cut-off algorithm can simply be obtained by taking $N$ non-replicable instances of \Cref{thm:r-adapt-coin-problem}, each $\frac{\delta}{N}$-correct.
Similarly, we prove \Cref{thm:r-n-stat-q-adaptive} can be shown by combining \Cref{lemma:adaptive-composition} with the single statistical query algorithm of \Cref{thm:r-adaptive-stat-q}.


\section{Efficient Replicability via Relaxations of the \texorpdfstring{$N$}{N}-Coin Problem}
\label{sec:efficient-n-coin-problem}

We have shown that any non-adaptive $\rho$-replicable algorithm solving the $N$-coin problem requires $\frac{N^2}{(q_0 - p_0)^2 \rho^2}$ samples.
For large values of $N$, this results in a prohibitively expensive experimental process, as the number of samples required scales \emph{quadratically}, not linearly as expected, with the number of experiments.
In this section, we discuss relaxations of the $N$-Coin Problem that allow us to circumvent the quadratic lower bound imposed by \Cref{thm:n-coin-lower-const-delta}.
Concretely, we relax either the correctness or replicability constraints of the problem, and obtain significantly more efficient algorithms with total sample complexity linear in the number of experiments (note that this is optimal even for non-replicable algorithms).

\subsection{Finding a Small Set of Pseudo-Maximum Coins}

We begin with an algorithm that replicably identifies a small set of maximally biased coins.
Consider the example of the epidemiologist.
While we may not have sufficient data to determine the prevalence of every disease, we might just hope to determine the most prevalent diseases. We give an algorithm whose sample and computational efficiency scales with the desired number of high bias coins.

To state the result, we first define the notion of a pseudo-maximal coin.

\begin{definition}
    \label{def:eps-pseudo-maximum}
    Let $N, \varepsilon > 0$.
    There are $N$ coins with bias $p_i \in [0, 1]$ for $i \in [N]$.
    Let $p_{\max} = \max_i p_i$ be the maximum bias.
    A coin $j$ is an $\varepsilon$ pseudo-maximum if  $p_j \geq p_{\max} - \varepsilon$.
\end{definition}

\begin{restatable}[\Cref{thm:r-pseudo-maximum-identifier}, formal]{theorem}{ThmPseudoMaximumIdentifier}
    \label{thm:r-pseudo-maximum-identifier-formal}
    Let $\varepsilon, \delta > 0$ and $0 \leq K \leq N$.
    \newline
    Algorithm $\rPseudoMaximumIdentifier$ is a $\rho$-replicable algorithm that with probability at least $1 - \delta$, outputs $S$ satisfying,
    \begin{enumerate}
        \item (Soundness) If $i \in S$, then $i$ is a $6 \varepsilon$ pseudo-maximum.
        \item (Completeness) $S$ satisfies the following:
        \begin{enumerate}
            \item If there are $C \leq K \left( \frac{N}{K} \right)^{1/3}$ $\varepsilon$ pseudo-maximum coins, then $S$ includes all $i$ such that,
            \begin{equation*}
                p_i \geq p_{\max} - \bigTh{\varepsilon \sqrt{\frac{\min(p_{\min}, 1 - p_{\max})}{\log(N/\delta) \log(K/\delta)}}}
            \end{equation*}
            and $|S| \geq \Omega(\frac{C}{\log(K/\delta)})$.
            \item If there are $C \geq K \left( \frac{N}{K} \right)^{1/3}$ $\varepsilon$ pseudo-maximum coins, then $|S| \geq K$.
        \end{enumerate}
    \end{enumerate}
    Moreover, $\rPseudoMaximumIdentifier$ has expected sample complexity
    \begin{equation*}
        \bigO{\frac{N^{4/3} K^{2/3}}{\rho \varepsilon^2} \log^{4} \frac{N}{\delta}}.
    \end{equation*}
\end{restatable}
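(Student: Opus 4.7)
The plan is to combine the adaptive coin-testing machinery of \Cref{sec:repro-hypothesis-testing} with the linear-overhead composition framework of \Cref{sec:adaptive-replicability} in a two-stage procedure. In the first stage, I would replicably compute a coarse estimate $\hat{p}_{\max}$ of $p_{\max}$ to additive $O(\varepsilon)$ accuracy, by binary searching over $\varepsilon$-spaced threshold levels using an amplified version of \Cref{thm:r-adapt-coin-problem} applied across all $N$ coins in parallel. In the second stage, I would draw a random threshold $\tau$ uniformly from a short window just below $\hat{p}_{\max}$, then for each coin adaptively decide whether $p_i > \tau$ via \Cref{alg:r-adapt-coin-problem}, returning the set of coins that pass, capped at $K$ by keeping those with the largest empirical biases.

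Concretely, I would set $T := K(N/K)^{1/3} = K^{2/3} N^{1/3}$, the completeness threshold appearing in the theorem, invoke each per-coin test with replicability parameter $\rho' = \Theta(\rho/T)$ and correspondingly amplified accuracy, and use adaptive composition (\Cref{lemma:adaptive-composition}) to combine the $N$ per-coin tests into an overall $\rho$-replicable procedure. Soundness is then routine: any coin output has $p_i \geq \tau - O(\varepsilon) \geq p_{\max} - 6\varepsilon$. For completeness I would split on $C$: when $C \leq T$, essentially all $\varepsilon$-pseudo-maxima cross the random threshold and are included (and the finer sub-$\varepsilon$ accuracy guarantee in the theorem follows from the $q_0$-dependent cost of \Cref{thm:r-adapt-coin-problem}, which is sharper when $p_{\max}$ is close to $0$ or $1$); when $C \geq T$, the $K$-cap is the active constraint and there are plainly $\geq K$ coins passing the threshold.

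The main obstacle will be the sample complexity analysis. The key idea is to exploit two facts: (i) for a coin with gap $\gamma_i := |p_i - \tau|$, \Cref{thm:r-adapt-coin-problem} has expected cost only $\tilde O(p_{\max}/(\gamma_i^2 \rho'))$, that is, linear in $1/\rho'$ rather than quadratic; and (ii) only coins whose bias lands in the sample-error band around $\tau$ can contribute to replicability failure, and by the random choice of $\tau$ at most $O(T)$ coins fall into this band (using the two-case structure above). Together these would let us afford $\rho' = \rho/T$ instead of the pessimistic $\rho' = \rho/N$, yielding per-coin expected cost $\tilde O(T/(\rho\varepsilon^2))$ and total $\tilde O(N \cdot T/(\rho\varepsilon^2)) = \tilde O(N^{4/3} K^{2/3}/(\rho\varepsilon^2))$. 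Formalizing the ``effective composition over $T$ coins'' argument, in particular controlling the randomness of $\tau$ simultaneously with the adaptive stopping rule of each per-coin tester, and handling the rare event that many coins become simultaneously ambiguous, is where I expect most of the technical effort to lie.
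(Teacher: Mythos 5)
Your proposal diverges substantially from the paper's approach, and it has a genuine gap in the replicability argument.

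The central claim in your sample-complexity analysis is that you can afford per-coin replicability parameter $\rho' = \rho/T$ (with $T = K(N/K)^{1/3}$) rather than $\rho/N$, because ``only coins whose bias lands in the sample-error band around $\tau$ can contribute to replicability failure, and at most $O(T)$ coins fall into this band.'' This is where the argument breaks. The number of coins that fall within $O(\rho'\varepsilon)$ of the shared random threshold $\tau$ is \emph{not} controlled by $C$, the number of $\varepsilon$-pseudo-maxima. Since any window you place strictly below $p_{\max}$ (which you need for completeness) spans biases in roughly $[p_{\max}-2\varepsilon,\,p_{\max}-\tfrac{\varepsilon}{2}]$, the coins that can render the per-coin test ambiguous are (roughly) $2\varepsilon$-pseudo-maxima, not $\varepsilon$-pseudo-maxima. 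An adversary can place a single coin at $p_{\max}$ and all $N-1$ remaining coins at $p_{\max}-1.1\varepsilon$: then $C=1 \le T$, yet the union bound you invoke sums failure probabilities over $\Theta(N)$ coins in the ambiguity band, giving replicability $\approx N\rho' = N\rho/T \gg \rho$. So the algorithm you describe is not $\rho$-replicable for all inputs, whereas the theorem demands worst-case replicability (Definition~\ref{def:replicability}). The same issue also undermines your first stage: a replicable coarse estimate of $p_{\max}$ via binary search over $\varepsilon$-spaced levels requires distinguishing ``some coin lies above $\tau_j$'' replicably, and the number of coins near a given $\tau_j$ is again not bounded by $C$, so you cannot get away with running the $N$-coin test at a relaxed replicability parameter there either.

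The paper sidesteps this by never running an $N$-fold union bound at all. It randomly partitions the $N$ coins into $B \approx K/\log(K/\delta)$ buckets, so that each bucket has at most $\tilde{O}(1)$ pseudo-maxima (Lemma~\ref{lemma:num-coins-in-bucket}), and then uses $\findMaximum$ to induce a \emph{distribution over coins} in the bucket that is concentrated on pseudo-maxima, collapsing the effective universe to $m_0 \approx K/B$ candidates. Replicable heavy hitters (Theorem~\ref{thm:r-adaptive-heavy-hitters}) is then applied to this distribution; the heavy-hitters union bound is over at most $m_0$ candidate coins per bucket (plus $B$ buckets), not over all $N$ coins, which is what saves the extra $\rho$ factor. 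The second regime $C > T$ is handled by random subsampling of $|I| \approx KN/T$ coins and an $\rAdaptiveStatQ$ call per sampled coin. In short: the essential missing idea in your proposal is a \emph{filter} that reduces the set of coins contributing to the replicability union bound before the random-threshold step, and the bucketed $\findMaximum$-plus-heavy-hitters construction is exactly that filter.
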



First, we provide a key subroutine that replicably identifies pseudo-maximum coins when there are not too many.

\begin{lemma}
    \label{lemma:r-k-pseudo-maximum-identifier}
    Let $\rho, \delta < \frac{1}{6}$ and $K \leq \frac{N}{4}$.
    Let $p_i, \dotsc, p_{N} \in [0, 1]$ with $p_{\min} = \min p_i$ the minimum bias and $p_{\max} = \max p_i$ the maximum bias.
    Suppose there are $C \leq K$ coins that are $\varepsilon$ pseudo-maximum.
    
    Algorithm $\rKPseudoMaximumIdentifier$ is a $\rho$-replicable algorithm that with probability at least $1 - \delta$ outputs $S, \hat{p}_{\max}$ satisfying,
    \begin{enumerate}
        \item If $i \in S$, then Coin $i$ is a $10 \varepsilon$ pseudo-maximum.
        \item If $p_i \geq p_{\max} - \bigTh{\varepsilon \sqrt{\frac{\min(p_{\min}, 1 - p_{\max})}{\log(N/\delta) \log(K/\delta)}}}$, then $i \in S$
        \item $|S| \geq \frac{C}{18 \log(6K/\delta)} = \bigOm{\frac{C}{\log(K/\delta)}}$
        \item $|\hat{p}_{\max} - p_i| \leq 7 \varepsilon$ for all $i \in S$ (this holds even if $C > K$)
    \end{enumerate}
    Moreover, $\rKPseudoMaximumIdentifier$ has sample complexity $\bigO{\frac{K N}{\rho \varepsilon^2} \log^4 \frac{N}{\delta}}$.
\end{lemma}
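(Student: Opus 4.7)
The plan is to design \rKPseudoMaximumIdentifier{} as a two-phase random-threshold procedure, in the spirit of \rAdaptiveHeavyHitters{} (\Cref{thm:r-adaptive-heavy-hitters}). In the first phase I would flip each of the $N$ coins roughly $m_0 = \tilde\Theta(K/(\rho\varepsilon^2))$ times to obtain empirical biases $\hat p_i$ and a coarse (non-replicable) estimate $\widehat M = \max_i \hat p_i$ of $p_{\max}$. Using the variance-aware multinomial concentration bound of \Cref{lemma:max-multinomial-error} rather than a uniform Chernoff bound, each $\hat p_i$ is then accurate to $\tilde O(\varepsilon \sqrt{p_i(1-p_i)\,\rho/K})$; this refinement is essential for matching the tight completeness threshold in Property~2. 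In the second phase I would draw a uniformly random offset $r$ from an interval of width $\Theta(\varepsilon)$ centred near $\widehat M - 5\varepsilon$, set the shared threshold $\tau = \widehat M - r$, and include coin $i$ in $S$ iff $\hat p_i \geq \tau$; the returned estimate is $\hat p_{\max} = \widehat M$.

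The verification then proceeds property by property. Soundness (1) is immediate: any $i \in S$ has $\hat p_i \geq \tau \geq \widehat M - 6\varepsilon$, so combined with the Phase~1 error this gives $p_i \geq p_{\max} - 10\varepsilon$. Property~4 follows similarly. Completeness (2) uses that a coin with $p_i$ within the tight window of $p_{\max}$ has Phase~1 error strictly smaller than the offset $r$, so that its empirical bias safely exceeds $\tau$; the precise shape $\varepsilon\sqrt{\min(p_{\min},1-p_{\max})/(\log(N/\delta)\log(K/\delta))}$ comes out of the variance-aware tail bound above. For the output-size bound (3), an averaging argument over the uniform choice of $r$ shows that the fraction of the $C$ pseudo-maxima lying too close to $\tau$ is $O(1/\log(K/\delta))$ in expectation, so a Markov-style argument yields $|S| \geq C/(18\log(6K/\delta))$ with high probability. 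Replicability follows because (a) the random offset $r$ is shared between the two runs, so only coins whose Phase~1 estimates straddle $\tau$ across the runs can cause disagreement; (b) non-pseudo-maxima are at distance at least $\varepsilon$ below $\tau$ and never straddle it; and (c) for the at-most-$C \leq K$ pseudo-maxima that can, a union bound over the small-probability event that $r$ falls within one Phase~1 standard deviation of the gap $\widehat M - \hat p_i$ contributes at most $\rho$.

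\textbf{Main obstacle.} The hardest part will be matching the variance-aware completeness threshold in Property~2 together with the $1/\log(K/\delta)$ slack in Property~3 under the tight $\tilde O(NK/(\rho\varepsilon^2))$ sample budget. This forces Bernstein-type concentration rather than uniform Chernoff bounds in Phase~1, and a careful coupling of the two independent runs in the replicability proof. Tuning the width of the random-offset interval so that the Phase~1 noise, the per-coin threshold gaps, and the replicability failures all balance simultaneously will require substantially more careful bookkeeping than the analysis of \rAdaptiveHeavyHitters{}; in particular, the precise constants ($10\varepsilon$, $7\varepsilon$, $18\log(6K/\delta)$, etc.) will need to be calibrated in tandem rather than one at a time.
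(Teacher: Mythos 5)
Your proposal takes a genuinely different route from the paper, and unfortunately the route has a fatal replicability gap. You propose flipping each of the $N$ coins roughly $m_0 = \tilde\Theta(K/(\rho\varepsilon^2))$ times and then thresholding the empirical biases against a single shared random threshold $\tau$ drawn from an interval of width $\Theta(\varepsilon)$. For a near-threshold coin $i$, the standard deviation of $\hat p_i$ is $\sigma_i \approx \sqrt{p_i/m_0} = \Theta(\varepsilon\sqrt{\rho p_i/K})$, so (over the random $\tau$) the probability that two independent runs straddle $\tau$ on coin $i$ is $\Theta(\sigma_i/\varepsilon) = \Theta(\sqrt{\rho/K})$. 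Crucially, the $\hat p_i$ are \emph{independent} across coins (they are independent Bernoullis, not one multinomial), so \Cref{lemma:max-multinomial-error} does not give you the correlation-driven cancellation it provides in the heavy-hitters setting. Since up to $C \leq K$ coins can sit within $O(\varepsilon)$ of $p_{\max}$ and hence near $\tau$, the only available bound is a union bound over those coins, giving a non-replication probability of order $K\sqrt{\rho/K} = \sqrt{K\rho}$, which vastly exceeds $\rho$ for any $K \geq 1$. Fixing this by increasing Phase~1 to $\tilde\Theta(K^2/(\rho^2\varepsilon^2))$ samples per coin (so that per-coin non-replication is $\rho/K$) blows the total up to $\tilde O(NK^2/(\rho^2\varepsilon^2))$, a factor of $K/\rho$ over the claimed budget.

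The paper avoids this by \emph{not} thresholding independent coins directly. It hashes the $N$ coins into $B \approx K/\log(K/\delta)$ buckets, runs \findMaximum{} to turn each bucket into a single categorical ``argmax'' distribution $\distribution_{\max,b}$ in which the $O(\log)$ pseudo-maxima in that bucket become $\Omega(1/\log)$-heavy hitters, and then invokes \rAdaptiveHeavyHitters{} at replicability $\rho/(2B)$. Inside \rAdaptiveHeavyHitters{} the candidates are elements of one multinomial, so the errors really do enjoy the $\ell_1$ cancellation you tried to invoke, and the union bound is taken over the $B \approx K/\log$ bucket-level subroutine calls rather than over $K$ independent coins. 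This is also where the $\Omega(C/\log(K/\delta))$ in Property~3 actually comes from (only one pseudo-maximum per bucket is guaranteed to be kept), whereas in your direct scheme essentially all $C$ pseudo-maxima exceed $\tau$ and the claimed $1/\log$ slack has no structural explanation. You would need to rebuild the algorithm around the bucketing-plus-argmax reduction (or some other compression step that converts independent per-coin noise into correlated multinomial noise) before the replicability accounting can close.
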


We begin with a high level overview of our algorithm.

\paragraph{Algorithm Overview: $K = 1$}

First, consider the simple case of $K = 1$.
Consider an input distribution with $p_{i^*} > p_i + \varepsilon$ for all $i \neq i^*$ where $p_{i^*} = p_{\max}$ is the unique $\varepsilon$ pseudo-maximum coin.
Our algorithm proceeds as follows.
First, the subroutine $\findMaximum$ takes $\frac{1}{(q_0 - p_0)^2}$ samples from each coin, returning the coin with the highest empirical bias.
By taking sufficiently many samples, we have amplified the bias of the $N$ coins so that with high probability, $\findMaximum$ returns the coin with bias $p_{i^*}$. 
Let $\distribution_{\max}$ denote the output distribution of $\findMaximum$.
When $K = 1$, $i^*$ is a $\frac{1}{2}$-heavy hitter of the distribution $\distribution_{\max}$, while all other coins have probability mass at most $\frac{1}{N}$ in the distribution $\distribution_{\max}$.
Then, running our $\rAdaptiveHeavyHitters$ algorithm, we require only $\frac{1}{\rho}$ samples to replicably find a $\frac{1}{2}$-heavy hitter of $\distribution_{\max}$.
Overall, the sample complexity of this algorithm is,
\begin{equation*}
    \frac{N}{(q_0 - p_0)^2 \rho}
\end{equation*}
Furthermore, since $\rAdaptiveHeavyHitters$ is replicable, our algorithm for identifying $K = 1$ bias coin is also replicable.

\paragraph{Algorithm Overview: $B$ Buckets to Detect $K$ Biased Coins}

For larger values of $K$, the bias coins are no longer $\frac{1}{2}$-heavy hitters, but instead $\frac{1}{K}$ heavy hitters.
Since $\rAdaptiveHeavyHitters$ has sample complexity $\frac{1}{\varepsilon^2 (v - \varepsilon) \rho}$, we require $\frac{K^3}{\rho}$ samples from the distribution $\distribution_{\max}$, which becomes prohibitively expensive.
Indeed, for $K > N^{1/3}$, the total sample complexity exceeds $N^2$, with which we can already solve the general $N$-coin problem.
To address this issue, we randomly distribute the $N$ coins into $B$ buckets.
In expectation (indeed with high probability), each bucket has $\frac{K}{B}$ bias coins (up to polylogarithmic factors), so the pseudo-maximum coins are $\frac{B}{K}$-heavy hitters in output distribution $\findMaximum$ restricted to the $b$-th bucket.
Within each bucket, we replicably find $\frac{B}{K}$-heavy hitters.
Since we want to be replicable over all buckets, we require our algorithms to be $\frac{\rho}{B}$-replicable.
In particular, we draw $\frac{K^3}{B^3} \frac{B}{\rho} = \frac{K^3}{B^2 \rho}$ samples from $\distribution_{\max}$ so that the sample complexity for each bucket is,
\begin{equation*}
    \frac{K^3}{B^2 \rho} \frac{N}{B (q_0 - p_0)^2} = \frac{N K^3}{B^3 (q_0 - p_0)^2 \rho}
\end{equation*}
Over all $B$ buckets, the sample complexity is,
\begin{equation*}
    \frac{N K^3}{B^2 (q_0 - p_0)^2 \rho}
\end{equation*}
Therefore, we hope to maximize $B$ subject to the constraint $B \leq K$.
In particular, setting $B = K$ (up to constants), we obtain the desired sample complexity,
\begin{equation*}
    \frac{N K}{(q_0 - p_0)^2 \rho}
\end{equation*}

We formally present our algorithms below.

\IncMargin{1em}
\begin{algorithm}

\SetKwInOut{Input}{Input}\SetKwInOut{Output}{Output}\SetKwInOut{Parameters}{Parameters}
\Input{Sample access to $N$ Bernoulli Distributions $\distribution_i$ with parameter $p_i \in [0, 1]$.}
\Parameters{$\rho$ replicability and $\delta$ accuracy}
\Output{}

$B \gets \frac{K}{12 \log \frac{6 K}{\delta}}$

Uniformly at random distribute $N$ coins into $B$ buckets
\label{line:r-k-pseudo-maximum-identifier:bucket}

$S \gets \emptyset$

\For{$b = 1$ to $B$}{
    Let $I_b \subset [N]$ denote the set of coins in bucket $b$

    $\delta_{\max} \gets \frac{B}{18K}$
    
    Let $\distribution_{\max, b}$ supported on $I_b$ denote the output distribution of $\findMaximum(I_b, \delta_{\max}, \varepsilon)$
    \label{line:r-k-pseudo-maximum-identifier:bucket-distribution}
    
    $S_{b} \gets \rAdaptiveHeavyHitters\left(\distribution_{\max, b}, \frac{B}{9K} , \frac{B}{18K}, \frac{\delta}{3B}, \frac{\rho}{2B} \right)$ 
    \label{line:r-k-pseudo-maximum-identifier:heavy-hitters}

    \If{$S_b \neq \emptyset$}{
        Let $i \in S_{b}$ be chosen arbitrarily and $\distribution_{i}$ denote samples drawn from coin $i$

        Let $\phi: \set{T, H} \mapsto \set{0, 1}$
        
        $\hat{p}_{\max, b} \gets \rAdaptiveStatQ\left(\distribution_i, \phi, \frac{\rho}{2B}, \varepsilon, \frac{\delta}{3B} \right)$ is a replicable estimate of the bias of coin $i$
    }
}

$\hat{p}_{\max} \gets \max_{b} \hat{p}_{\max, b}$

$S \gets \bigcup S_b$ for $b$ where $\hat{p}_{\max, b} \geq \hat{p}_{\max} - 5 \varepsilon$
\label{line:r-k-pseudo-maximum-identifier:trim-sets}

\Return $S, \hat{p}_{\max}$

\caption{$\rKPseudoMaximumIdentifier(N, K, \varepsilon, \rho, \delta)$}
\label{alg:k-pseudo-maximum-identifier}

\end{algorithm}
\DecMargin{1em}

\IncMargin{1em}
\begin{algorithm}

\SetKwInOut{Input}{Input}\SetKwInOut{Parameters}{Parameters}
\Input{Sample access to $N$ Bernoulli Distributions $\distribution_i$ with parameter $p_i \in [0, 1]$.}
\Parameters{$\delta$ accuracy}

$m \gets \frac{27}{\varepsilon^2} \log \frac{2N}{\delta}$

\For{$i \in [N]$}{
    Draw $m$ samples $S_i = \set{b_j}_{j = 1}^{m}$ from $\distribution_i$.

    $\hat{p}_i \gets \frac{1}{m} \sum_{j = 1}^{m} b_j$
}

\Return $i \gets \arg \max_{i} \hat{p}_i$

\caption{$\findMaximum(N, \delta, \varepsilon)$}
\label{alg:find-maximum}

\end{algorithm}
\DecMargin{1em}

Below, we provide the auxiliary lemmas required by Lemma \ref{lemma:r-k-pseudo-maximum-identifier}.
The first Lemma shows that with high probability, the size of each bucket is roughly balanced around $\frac{N}{B}$ and does not contain more than $\frac{K}{B}$ pseudo-maximum coins (as long as there are at most $C \leq K$ such coins).

\begin{lemma}
    \label{lemma:num-coins-in-bucket}
    Let $C \leq K$.
    Suppose there is a subset $S \subset [N]$ of $|S| = C$ coins
    For any bucket $b \in [B]$, let $N_b$ denote the number of coins in bucket $b$ and $M_b$ denote the number of coins in $S$ in bucket $b$. 
    Then, with probability at least $1 - \frac{\delta}{2}$, $\frac{N}{2B} \leq N_b \leq \frac{3N}{2B}$ and $M_b \leq \frac{3K}{2B}$ for all $b \in [B]$.
\end{lemma}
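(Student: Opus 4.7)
The lemma is a routine application of multiplicative Chernoff bounds, and the main task is simply bookkeeping the parameter dependencies. Recall the bucket assignments in Line~\ref{line:r-k-pseudo-maximum-identifier:bucket} are done independently and uniformly at random, so for each bucket $b \in [B]$ we have $N_b \sim \mathrm{Binom}(N, 1/B)$ with $\E[N_b] = N/B$, and $M_b$ is stochastically dominated by $\mathrm{Binom}(K, 1/B)$ (since $|S| = C \leq K$) with $\E[M_b] = C/B \leq K/B$.

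The first step is to verify that the relevant means are large enough. By assumption $K \leq N/4$, and the algorithm sets $B = K/(12 \log(6K/\delta))$, so $N/B \geq 48 \log(6K/\delta)$ and $K/B = 12 \log(6K/\delta)$. Applying the standard multiplicative Chernoff bound $\Pr[|X - \E[X]| > \frac{1}{2}\E[X]] \leq 2\exp(-\E[X]/12)$ to $N_b$ gives
\[
\Pr\!\left( N_b \notin \tfrac{N}{2B},\tfrac{3N}{2B} \right) \leq 2\exp(-N/(12B)) \leq 2 (\delta/(6K))^4.
\]
Applying the same bound to a $\mathrm{Binom}(K,1/B)$ surrogate for $M_b$, together with stochastic domination, gives
\[
\Pr\!\left( M_b > \tfrac{3K}{2B} \right) \leq \exp(-K/(12B)) = \delta/(6K).
\]

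The second step is a union bound over the $2B$ bad events (two per bucket). Since $B \leq K$, the total failure probability is at most
\[
2B \cdot \tfrac{\delta}{6K} + 2B \cdot 2 (\delta/(6K))^4 \leq \tfrac{\delta}{3} + o(\delta) \leq \tfrac{\delta}{2},
\]
as desired. There is essentially no technical obstacle here; the only care required is to use stochastic domination when $C$ is much smaller than $K$ (so that $\E[M_b]$ may be tiny and a direct multiplicative Chernoff on $M_b$'s own mean would not give the desired upper-tail bound of the form $M_b \leq 3K/(2B)$), and to check that the algorithm's choice of $B$ makes both $N/B$ and $K/B$ at least logarithmic in $K/\delta$ so that the per-bucket failure probabilities absorb the final union bound.
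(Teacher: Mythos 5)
Your proof is correct and follows essentially the same route as the paper: identify $N_b \sim \mathrm{Binom}(N, 1/B)$, reduce the bound on $M_b$ to the case $|S| = K$ (you phrase this as stochastic domination by $\mathrm{Binom}(K, 1/B)$, the paper equivalently augments $S$ to size $K$), apply multiplicative Chernoff using the fact that $B$ is chosen so that $K/B = 12\log(6K/\delta)$ and $N/B \geq 48\log(6K/\delta)$, then union bound over $B$ buckets. The only nit is that your final union-bound display multiplies both per-bucket probabilities by $2B$ rather than $B$ (you have $B$ events of each type, not $2B$), but the slack is enough that the $\delta/2$ bound still holds.
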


\begin{proof}[Proof of Lemma \ref{lemma:num-coins-in-bucket}]
    For each coin let $X_{i, b}$ be the binary random variable with $X_{i, b} = 1$ if coin $i$ is placed into bucket $b$.
    Then, the number of coins in bucket $b$ is $N_b = \sum_{i} X_{i, b}$.
    Since $\Pr(X_{i, b} = 1) = \frac{1}{B}$ we have the expectation $\E{N_b} = \E{\sum_{i} X_{i, b}} = \frac{N}{B}$.
    As the sum of $N$ binary random variables, we can use a multiplicative Chernoff bound to bound the number coins in bucket $b$,
    \begin{equation*}
        \Pr \left( \left|N_b - \frac{N}{B}\right| \geq \frac{N}{2B} \right) < 2 \exp \left( - \frac{N}{12B} \right) \leq 2 \exp \left( - \frac{K}{12B} \right) < \frac{\delta}{2K}
    \end{equation*}
    Since we only require the upper bound $M_b \leq \frac{3K}{2B}$, we can assume without loss of generality that $C = K$ by augmenting the set of coins $S$ with an arbitrary set of $K - C \geq 0$ coins.
    If there are at most $\frac{3K}{2B}$ coins from this augmented set in each bucket, there are at most $\frac{3K}{2B}$ coins with bias $p_i = q_0$ in each bucket.
    By a similar argument, we have $M_b = \sum_{p_i = q_0} X_{i, b}$ and $\E{M_b} = \frac{K}{B}$. 
    Therefore,
    \begin{equation*}
        \Pr \left( M_b - \frac{K}{B} \geq \frac{K}{2B} \right) < \exp \left( - \frac{K}{12B} \right) < \frac{\delta}{4K}
    \end{equation*}
    Since $B \leq K$, we can union bound over $B$ buckets to conclude the proof.
\end{proof}

Next, the following lemma shows that $\findMaximum$ with high probability will not output any coin that is not an $\varepsilon$ pseudo-maximum in its bucket.

\begin{lemma}
    \label{lemma:find-maximum-low-bias}
    For any bucket $b \in [B]$ let $p_{\max, b}$ be the maximum bias of coins in bucket $b$.
    Let $E$ denote the event that any coin $i$ in the bucket $b$ with $p_i < p_{\max, b} - \varepsilon$ is output by $\findMaximum$.
    Then,
    \begin{equation*}
        \Pr\left(E\right) < \delta_{\max}
    \end{equation*}
\end{lemma}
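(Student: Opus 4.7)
The plan is to show that with high probability every empirical bias $\hat p_i$ produced by $\findMaximum$ is within $\varepsilon/2$ of its true value $p_i$, which immediately rules out low bias coins ever being the argmax.

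First I would apply a Hoeffding (or Chernoff) bound to each coin individually. Since $\hat p_i$ is the average of $m = \frac{27}{\varepsilon^2}\log\frac{2N}{\delta_{\max}}$ independent $\set{0,1}$ random variables with mean $p_i$, we get
\[
\Pr\left(|\hat p_i - p_i| \geq \varepsilon/2\right) \;\leq\; 2\exp(-m\varepsilon^2/2) \;\leq\; 2\left(\tfrac{2N}{\delta_{\max}}\right)^{-13} \;\leq\; \frac{\delta_{\max}}{N}
\]
for $N$ sufficiently large (and trivially for small $N$ by adjusting the constant $27$). Union bounding over all $N_b \leq N$ coins in bucket $b$, the event
\[
G \;=\; \Bigl\{\,|\hat p_i - p_i| < \varepsilon/2 \ \text{ for all } i \in I_b\,\Bigr\}
\]
holds with probability at least $1 - \delta_{\max}$.

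Next I would argue that $G$ directly implies the lemma. Let $i^\star \in I_b$ be any coin achieving $p_{i^\star} = p_{\max, b}$. Conditioned on $G$ we have $\hat p_{i^\star} > p_{\max, b} - \varepsilon/2$. Meanwhile, for any coin $i \in I_b$ with $p_i < p_{\max, b} - \varepsilon$, we have $\hat p_i < p_i + \varepsilon/2 < p_{\max, b} - \varepsilon/2 < \hat p_{i^\star}$. Hence $i$ cannot be the argmax and is not returned by $\findMaximum$. Thus $E \subseteq \overline G$, giving $\Pr(E) \leq \Pr(\overline G) < \delta_{\max}$ as claimed.

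There is no real obstacle here; the only care required is choosing the constant in $m$ so that the per-coin failure probability is smaller than $\delta_{\max}/N$, which is already ensured by the definition of $m$ in \Cref{alg:find-maximum}.
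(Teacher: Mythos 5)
Your proof is correct and follows essentially the same route as the paper: per-coin concentration on $\hat p_i$ followed by a union bound over the $N$ coins, then observing that concentration forces the empirical argmax to be within $\varepsilon$ of $p_{\max,b}$. The paper states it via a multiplicative Chernoff bound with threshold $\varepsilon/3$ (which is where the constant $27$ comes from), whereas you use additive Hoeffding with threshold $\varepsilon/2$; both suffice for the algorithm's choice of $m$, and in fact your version matches the definition $m = \frac{27}{\varepsilon^2}\log\frac{2N}{\delta}$ in \Cref{alg:find-maximum} more faithfully than the paper's own proof, which carries over some stale $q_0, p_0$ notation.
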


\begin{proof}[Proof of Lemma \ref{lemma:find-maximum-low-bias}]
    The result follows from a simple application of the Chernoff bound. 
    In particular, for all $i$,
    \begin{equation*}
        \Pr\left( |\hat{p}_i - p_i| \geq \frac{q_0 - p_0}{3} \right) = \Pr\left( |\hat{p}_i - p_i| \geq \frac{q_0 - p_0}{3 p_i} p_i \right) < 2 \exp \left( - \frac{(q_0 - p_0)^2}{27 p_i} m \right) < \frac{\delta_{\max}}{2N}
    \end{equation*}
    by our choice of $m = \frac{27 q_0}{(q_0 - p_0)^2} \log \frac{2N}{\delta}$.
    
    Therefore, with probability at least $1 - \delta_{\max}$, $\findMaximum$ outputs $i$ such that $p_i \geq p_{\max, b} - \varepsilon$.
\end{proof}

The next lemma shows that coins with bias sufficiently close to the maximum or are not dominated by too many other coins are returned by $\findMaximum$ with reasonable probability.

\begin{lemma}
    \label{lemma:prob-lower-bias-wins}
    There exists a universal constant $c_0$ satisfying the following.
    Let $m, n > 0$ be integers and $0 < a < b$ satisfying,
    \begin{equation*}
        b - a \leq c_0 \frac{\min(a, 1 - b)}{\sqrt{n m}}
    \end{equation*}
    
    Suppose $Y_0 \sim \BinomD{m}{a}$ and $Y_1, \dotsc, Y_n \sim \BinomD{m}{b}$. 
    Then,
    \begin{equation*}
        \Pr\left(Y_0 \geq \max_{i = 0}^{n} Y_i \right) \geq \frac{1}{4 (n + 1)}
    \end{equation*}
\end{lemma}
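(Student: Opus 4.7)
The plan is a change-of-measure argument that compares the target probability to the symmetric baseline in which all $n+1$ variables share parameter $b$. In that baseline, at least one variable attains the maximum, so by symmetry $\Pr(Y_0' \geq \max_i Y_i) \geq \frac{1}{n+1}$ when $Y_0' \sim \BinomD{m}{b}$. Writing $g(y) = \Pr(\max_{i \geq 1} Y_i \leq y)$ and $L(y) = (a/b)^y ((1-a)/(1-b))^{m-y}$ for the binomial likelihood ratio, we have $\Pr(Y_0 \geq \max_i Y_i) = \sum_{y} g(y)\, L(y)\, \Pr_b(Y_0 = y)$, so it suffices to show that $L(y) \geq 1/2$ on the typical range of $Y_0$ under $\BinomD{m}{b}$.

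The main technical step is to bound $L(y)$ from below on the interval $I = [mb - T, mb + T]$ for $T = C \sigma \sqrt{\log(n+1)}$, where $\sigma = \sqrt{mb(1-b)}$ and $C$ is a large absolute constant. Writing $\eps = b - a$ and Taylor-expanding, the value $\log L(mb) = -m D(b \| a) \geq -O(m\eps^2 / \min(a, 1-b))$ using the standard elementary KL bound, while the slope of $\log L$ in $y$ is $\log(a(1-b)/(b(1-a))) = -\eps/(b(1-b)) + O(\eps^2)$. This slope is negative, so $\log L$ is minimized at $y = mb + T$; the slope-times-$T$ contribution is $O(\eps T/(b(1-b))) = O(c_0 \min(a, 1-b) \sqrt{\log(n+1)/(n\, b(1-b))})$. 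The key observation is $\min(a, 1-b) \leq \sqrt{b(1-b)}$ (since $\min(a,1-b) \leq a \leq b$ and $\min(a,1-b) \leq 1-b$), which simplifies the slope contribution to $O(c_0 \sqrt{\log(n+1)/n})$, a quantity uniformly bounded in $n \geq 1$. Combined with the KL contribution (at most $O(c_0^2/n)$), choosing $c_0$ sufficiently small gives $\log L(y) \geq -\log 2$, i.e., $L(y) \geq 1/2$, throughout $I$.

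Finally, I would combine the pieces: a standard Chernoff (Bernstein) bound for the binomial yields $\Pr_b(|Y_0 - mb| > T) \leq 1/(2(n+1))$ for $C$ sufficiently large, so restricting the change-of-measure sum to $I$ gives $\Pr(Y_0 \geq \max_i Y_i) \geq \tfrac{1}{2}\bigl[\Pr_b(Y_0' \geq \max_i Y_i) - \Pr_b(|Y_0 - mb| > T)\bigr] \geq \tfrac{1}{2}\bigl[\tfrac{1}{n+1} - \tfrac{1}{2(n+1)}\bigr] = \tfrac{1}{4(n+1)}$. The main obstacle is the calibration of the Taylor expansion in Step 2: the hypothesis $\eps \leq c_0 \min(a,1-b)/\sqrt{nm}$ must simultaneously absorb three separate contributions to $\log L$ (the KL divergence at $y = mb$, the slope-times-$T$ deviation at the worst endpoint of $I$, and the second-order corrections) uniformly over $I$, and the scaling works out precisely because the inequality $\min(a,1-b) \leq \sqrt{b(1-b)}$ exactly cancels the factor $b(1-b)$ appearing in the denominator of the slope.
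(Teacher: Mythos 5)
Your change-of-measure argument is correct in its essentials and takes a genuinely different route from the paper. The paper proves the lemma indirectly: supposing $\Pr(Y_0 \geq \max_i Y_i) < \tfrac{1}{4(n+1)}$, it notes that the symmetric version with $Y_0 \sim \BinomD{m}{b}$ equals exactly $\tfrac{1}{n+1}$, and argues that repeating the argmax experiment $O(n)$ times would then distinguish $\BinomD{m}{a}$ from $\BinomD{m}{b}$ with constant advantage, contradicting the mutual-information bound $I(X:Z) = O\lp(nm(b-a)^2/\min(a,1-b)\rp)$ obtained from \Cref{lemma:one-coin-mutual-info-bound} via \Cref{lemma:alg-mutual-info-lower-bound}. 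You compare the two probabilities directly by writing the target as $\sum_y g(y)\,L(y)\,\Pr_b(Y_0 = y)$, lower-bounding the likelihood ratio on a high-probability window, and dropping a small tail to pass from $\tfrac{1}{n+1}$ to $\tfrac{1}{4(n+1)}$. The paper's route is short in context because it reuses the information-theoretic machinery already in place for the other lower bounds; yours is self-contained and more elementary, and makes the interaction between $c_0$, the window width, and the slope of $\log L$ fully explicit.

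One detail needs repair. The window $T = C\sqrt{mb(1-b)\log(n+1)}$ does not always give $\Pr_b(|Y_0' - mb| > T) \leq \tfrac{1}{2(n+1)}$: when $mb(1-b) = O(1)$ the binomial is essentially Poisson, its tail at distance $T$ decays like $e^{-\Theta(T\log T)}$ rather than $e^{-\Theta(T^2/(mb(1-b)))}$, and since $T = O(\sqrt{\log(n+1)})$ in this regime we have $T\log T = o(\log n)$, so the required $\tfrac{1}{2(n+1)}$ decay fails for $n$ large. Replacing $T$ by the Hoeffding window $C\sqrt{m\log(n+1)}$ (which only uses that $Y_0'$ is a sum of $m$ variables in $[0,1]$) fixes this uniformly: the tail is $2(n+1)^{-2C^2}$, and the slope requirement is unaffected since $T\,\bigl|\log\tfrac{a(1-b)}{b(1-a)}\bigr| \leq \tfrac{2\eps T}{\min(a,1-b)} \leq 2c_0 C\sqrt{\log(n+1)/n}$ remains $O(c_0)$ for all $n,m \geq 1$. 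Incidentally, with the elementary slope bound $\bigl|\log\tfrac{a(1-b)}{b(1-a)}\bigr| \leq \eps\bigl(\tfrac{1}{a} + \tfrac{1}{1-b}\bigr) \leq \tfrac{2\eps}{\min(a,1-b)}$ you can avoid the Taylor expansion and the $\min(a,1-b) \leq \sqrt{b(1-b)}$ observation entirely; that bound pairs cleanly with the hypothesis $\eps \leq c_0\min(a,1-b)/\sqrt{nm}$ and is what the chain of inequalities actually needs.
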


\begin{proof}[Proof of Lemma \ref{lemma:prob-lower-bias-wins}]
    As before, let $Y_i$ denote the observed number of heads given $m$ samples from the $i$-th coin. 
    Let $X$ be a uniform random bit and we let $Y_0 \sim \BinomD{m}{a}$ if $X = 0$ and $Y_0 \sim \BinomD{m}{b}$ if $X = 1$.
    Let $Y = (Y_0, \dotsc, Y_n)$ be the collection of independent binomial random variables.
    Given $Y$, let $g(Y) = \arg \max Y_i$ with ties broken arbitrarily (say uniformly at random). 
    Then, by symmetry,
    \begin{equation*}
        \Pr(g(Y) = 0 \mid X = 1) = \frac{1}{n + 1}
    \end{equation*}
    Note that we aim to lower bound $\Pr(g(Y) = 0 \mid X = 0)$. 
    Suppose for contradiction,
    \begin{equation*}
        \Pr(g(Y) = 0 \mid X = 0) < \frac{1}{4} \Pr(g(Y) = 0 \mid X = 1) = \frac{1}{4 (n + 1)}
    \end{equation*}.
    Then, consider the following algorithm that distinguishes $X$ with $51\%$ probability using $O(n m)$ samples.
    Choose some constant $C_0$ such that,
    \begin{equation*}
        \Pr\left( \BinomD{C_0 \cdot n}{\frac{1}{n + 1}} < \frac{1}{2(n + 1)} \right) + \Pr\left( \BinomD{C_0 \cdot n}{\frac{1}{4(n + 1)}} > \frac{1}{2(n + 1)} \right) < \frac{1}{10}
    \end{equation*}
    Then, we can compute $g(Y)$ at most $C_0 \cdot n$ times and decide $X = 0$ if and only if the $g(Y) = 0$ at least $\frac{1}{2(n + 1)}$ times.
    By our above arguments, this decides $X$ with at least $51\%$ probability.

    Consider now $Z = (Y^{(1)}, \dotsc, Y^{(C_0 n)})$ to be the samples used by our algorithm. 
    Again, by independence and Lemma \ref{lemma:one-coin-mutual-info-bound},
    \begin{equation*}
        I(X: Z) \leq C_0 n I(X: Y) \leq C_0 n I(X: Y_0) = \bigO{\frac{n m (b - a)^2}{\min(a, 1 - b)}}
    \end{equation*}

    By our above arguments, our algorithm computes $f(Z) = X$ with probability at least $51\%$.
    There is some constant $c_0$ such that if $b - a \leq c_0 \frac{\min(a, 1 - b)}{\sqrt{nm}}$, then $I(X: Z) < 2 \cdot 10^{-4}$.
    Then, by Lemma \ref{lemma:alg-mutual-info-lower-bound} no algorithm given $Z$ can guess $X$ with probability more than $51\%$ of the time.
    In particular, we have,
    \begin{equation*}
        b - a \geq c_0 \frac{\min(a, 1 - b)}{\sqrt{n m}}
    \end{equation*}
\end{proof}

We now prove Lemma \ref{lemma:r-k-pseudo-maximum-identifier}.

\begin{proof}[Proof of Lemma \ref{lemma:r-k-pseudo-maximum-identifier}]
    Let $N_b$ be the number of coins in bucket $b$ and $M_b$ be the number of $\varepsilon$ pseudo-maximum coins in bucket $b$.
    We condition on the following three events:
    \begin{enumerate}
        \item For each bucket $b \in [B]$, $\frac{N}{2B} \leq N_b \leq \frac{3N}{2B}$ and $M_b \leq \frac{3K}{2B}$.
        \item For each bucket $b \in [B]$, $\rAdaptiveHeavyHitters(\distribution_{\max, b})$ returns a set $S$ containing all elements with mass at least $\frac{B}{6K}$ and no elements with mass at most $\frac{B}{18K}$.
        \item For each bucket $b \in [B]$, $\rAdaptiveStatQ(\distribution_{i})$ returns an estimate $\hat{p}_{\max, b}$ such that $|p_i - \hat{p}_{\max, b}| \leq \frac{\varepsilon}{4}$
    \end{enumerate}
    Note that only the first condition requires an assumption $C \leq K$.
    By Lemma \ref{lemma:num-coins-in-bucket} and the correctness of $\rAdaptiveHeavyHitters$ (Theorem \ref{thm:r-adaptive-heavy-hitters}) and $\rAdaptiveStatQ$ (Theorem \ref{thm:r-adaptive-stat-q}), we can union bound the probability that any condition does not hold by at most $\delta$.
    Thus, in the following assume all conditions hold.

    We first argue that any coin chosen in $S_b$ must have bias close to the maximum in its bucket.
    For any bucket $b$, let $p_i$ denote the bias of the chosen coin in $S_b$.
    Then by \Cref{lemma:find-maximum-low-bias}, we claim
    \begin{equation}
        \label{eq:s-b-max-bias}
        |p_i - p_{\max, b}| \leq \varepsilon
    \end{equation}
    Otherwise, by \Cref{lemma:find-maximum-low-bias}, the set of coins with bias $p_i < p_{\max, b} - \varepsilon$ are chosen with probability at most $\delta_{\max} \leq \frac{B}{18 K}$.
    Thus, any coin in $S_b$ must have bias at least $p_{\max, b} - \varepsilon$.
    This holds without any assumption on $C \leq K$.

    Now, we prove the correctness of our algorithm.
    Fix a bucket $b$ and consider the distribution $\distribution_{\max, b}$.
    Suppose there is no $9 \varepsilon$ pseudo-maximum coin in bucket $b$.
    We hope to prove that no coins from this bucket are returned in the final output.
    First, if $S_{b}$ is empty we do not include any coins from the bucket $b$ as desired. 
    Otherwise, if $S_b$ is non-empty, we show that $S_b$ will not be included in the output set $S$ in \Cref{line:r-k-pseudo-maximum-identifier:trim-sets}. 
    Thus, assume $S_{b}$ is non-empty.
    We compute $\hat{p}_{\max, b}$ such that
    \begin{align*}
        |p_{\max} - \hat{p}_{\max, b}| &\geq |p_{\max} - p_{\max, b}| - |p_{\max, b} - p_i| - |p_i - \hat{p}_{\max, b}| \\
        &> 9 \varepsilon - \varepsilon - \varepsilon \\
        &> 7 \varepsilon,
    \end{align*}
    where $|p_{\max, b} - p_i| \leq \varepsilon$ by \Cref{eq:s-b-max-bias} and $\hat{p}_{\max, b}$ is an $\varepsilon$-accurate estimate of $p_i$.
    Consider now the coin with maximum bias $p_{\max}$ say in bucket $b^*$.
    This coin has mass in $\distribution_{\max, b^*}$ at least
    \begin{equation*}
        \frac{(1 - \delta)}{M_b} \geq (1 - \delta) \frac{2B}{3K} \geq \frac{B}{2K},
    \end{equation*}
    so we have $S_{b^*} \neq \emptyset$.
    Again, let $i$ denote the randomly sampled coin from $S_{b^*}$, so
    \begin{equation}
        \label{eq:max-est-bias-c-leq-k}
        |\hat{p}_{\max, b^*} - p_{\max}| \leq |\hat{p}_{\max, b^*} - p_i| + |p_i - p_{\max}| < 2 \varepsilon.
    \end{equation}
    Combining the above equations, we have
    \begin{equation*}
        \hat{p}_{\max, b} < \hat{p}_{\max, b^*} - 5 \varepsilon \leq \hat{p}_{\max} - 5 \varepsilon,
    \end{equation*}
    so no coin from $S_{b}$ is included in the output set, as desired.

    Otherwise, suppose there is a $9 \varepsilon$ pseudo-maximum coin in bucket $b$.
    By \Cref{eq:s-b-max-bias}, no coin with bias $p_i < p_{\max, b} - \varepsilon$ will be in $S_{b}$.
    In particular, any coin in $S_b$ is a $10 \varepsilon$ pseudo-maximum.
    Thus, even if $S_b \subset S$, we have every coin in $S$ is a $10 \varepsilon$ pseudo-maximum.
    
    It remains to show that if there is a $\frac{c_0 \varepsilon}{9} \sqrt{\frac{2 B \min(p_{\min}, 1 - p_{\max})}{K \log(2N/\delta)}}$ pseudo-maximum, this coin is included in $S$.
    First, we claim that it is included in $S_b$.
    Applying Lemma \ref{lemma:prob-lower-bias-wins} with $m = \frac{27}{\varepsilon^2} \log \frac{2N}{\delta}$ and $M_b \leq \frac{3K}{2B}$, we see that any coin satisfying
    \begin{equation*}
        p_i \geq p_{\max} - \frac{c_0 \varepsilon}{9} \sqrt{\frac{2 B \min(p_{\min}, 1 - p_{\max})}{K \log(2N/\delta)}} \geq p_{\max, b} - c_0 \frac{\min(p_i, 1 - p_{\max, b})}{\sqrt{M_b m}}
    \end{equation*}
    is an $\varepsilon$ pseudo-maximum for large enough $N$ and has mass at least $\frac{1}{4 M_b} \geq \frac{B}{6 K}$.
    Therefore, it is included in $S_{b}$ by $\rAdaptiveHeavyHitters$.
    Furthermore, if $p_i$ is a $\frac{c_0 \varepsilon}{9} \sqrt{\frac{2 B \min(p_{\min}, 1 - p_{\max})}{K \log(2N/\delta)}}$ pseudo-maximum among all $[N]$ coins, it must be such a pseudo-maximum in its bucket, and therefore is included in $S_b$.
    Finally, we show that $S_b$ should be included in the final set $S$.
    
    To see this, let $i$ be the randomly sampled coin from $S_{b}$ and $i^*$ be the randomly samped coin from $S_{b^*}$ containing $p_{\max}$. Then
    \begin{align*}
        |\hat{p}_{\max} - \hat{p}_{\max, b}| &\leq |\hat{p}_{\max} - p_{i^*}| + |p_{i^*} - p_{\max}| + |p_{\max} - p_{\max, b}| + |p_{\max, b} - p_i| + |p_i - \hat{p}_{\max, b}| \\
        &\leq \varepsilon + \varepsilon + \frac{c_0 \varepsilon}{9} \sqrt{\frac{2 B \min(p_{\min}, 1 - p_{\max})}{K \log(2N/\delta)}} + \varepsilon + \varepsilon \\
        &\leq 4 \varepsilon + \frac{c_0 \varepsilon}{9} \sqrt{\frac{2 B \min(p_{\min}, 1 - p_{\max})}{K \log(2N/\delta)}} \\
        &\leq 5 \varepsilon,
    \end{align*}
    for appropriately large $N$, and therefore $S_b \subset S$, as desired.
    By our choice of $B, K$, we have that $S$ includes all coins that are $\frac{c_0 \varepsilon}{9} \sqrt{\frac{\min(p_{\min}, 1 - p_{\max})}{6 \log(2N/\delta) \log(6K/\delta)}} = \bigTh{\varepsilon \sqrt{\frac{\min(p_{\min}, 1 - p_{\max})}{\log(N/\delta) \log(K/\delta)}}}$ pseudo-maxima.

    Next, we argue that $|S|$ is not too small. 
    Let $i$ be a pseudo-maximum coin.
    If $p_i = p_{\max, b}$ is the coin with maximum bias in its bucket, then $\distribution_{\max, b}$ samples $i$ with probability at least $\frac{(1 - \delta)}{M_b} \geq \frac{B}{2K}$ so that $i \in S_b$.
    Furthermore, if $i_b$ denotes the randomly sampled coin in $S_b$
    \begin{equation*}
        \hat{p}_{\max, b} \geq p_{i_b} - \varepsilon \geq p_{i} - 2 \varepsilon \geq p_{\max} - 3 \varepsilon \geq \hat{p}_{\max} - 5 \varepsilon,
    \end{equation*}
    so $i \in S_b \subset S$.
    In particular, $S$ includes at least every $\varepsilon$ pseudo-maximum coin such that $p_i = p_{\max, b}$.
    In other words, $|S|$ is at least the number of buckets containing an $\varepsilon$ pseudo-maximum coin.
    Recall that we conditioned on
    \begin{equation*}
        M_b \leq \frac{3K}{2B} \leq 18 \log \frac{6K}{\delta},
    \end{equation*}
    so applying the pigeon-hole principle, there are at least $\frac{C}{18 \log(6K/\delta)}$ buckets with a $\varepsilon$ pseudo-maximum coin.

    Finally, let $b$ be a bucket such that $S_b \neq \emptyset$ and $S_b \subset S$.
    Let $i \in S_b$ and $p_{i, b}$ be the representative chosen from $S_b$.
    Then, by \Cref{eq:s-b-max-bias},
    \begin{equation*}
        |p_i - \hat{p}_{\max}| \leq |p_i - p_{i, b}| + |p_{i, b} - \hat{p}_{\max, b}| + |\hat{p}_{\max, b} - \hat{p}_{\max}| \leq 7 \varepsilon.
    \end{equation*}
    Note that this holds with no assumption on $C$, the number of $\varepsilon$ pseudo-maximum coins.

    \paragraph{Replicability}
    Next we prove that $\rKPseudoMaximumIdentifier$ is replicable.
    Consider any two executions of $\rKPseudoMaximumIdentifier$.
    Due to shared randomness, in Line \ref{line:r-k-pseudo-maximum-identifier:bucket} the indices $[N]$ are split into $B$ buckets in the same way.
    Therefore, for all $b \in [B]$, the distribution $\distribution_{\max, b}$ is identical across both executions of $\rKIdentifier$.
    Since $\rAdaptiveStatQ$ and $\rAdaptiveHeavyHitters$ are $\frac{\rho}{2B}$ replicable, we conclude that $\rKPseudoMaximumIdentifier$ is replicable by the union bound.
    Note that the choice of $i \in S_b$ is identical across both executions.

    \paragraph{Sample Complexity}
    Finally, we bound the sample complexity.
    Fix a given bucket $b \in [B]$.
    Theorem \ref{thm:r-adaptive-heavy-hitters} states that $\rAdaptiveHeavyHitters$ has expected sample complexity 
    \begin{equation*}
        \bigO{\frac{K^3}{B^3} \frac{B}{\rho} \log \frac{B K}{\delta B}} = \bigO{\frac{K^3}{B^2 \rho} \log \frac{K}{\delta}}.
    \end{equation*} 
    Then, since the sample complexity of $\findMaximum$ is
    \begin{equation*}
        \bigO{\frac{N}{B \varepsilon^2} \log \frac{K N}{B}},
    \end{equation*}
    we conclude that the overall expected sample complexity (summed over $B$ buckets) is
    \begin{equation*}
        \bigO{B \cdot \frac{K^3}{B^2 \rho} \log \frac{K}{\delta} \cdot \frac{N}{B \varepsilon^2} \log \frac{K N}{B}} = \bigO{\frac{K N}{\rho \varepsilon^2} \log^3 \frac{K}{\delta} \log \frac{K N}{\delta}} = \bigO{\frac{K N}{\rho \varepsilon^2} \log^{4} \frac{N}{\delta}},
    \end{equation*}
    using that $\frac{K}{B} = \bigO{\log \frac{K}{\delta}}$.
    Theorem \ref{thm:r-adaptive-stat-q} states that $\rAdaptiveStatQ$ has expected sample complexity $\bigO{\frac{1}{\rho \varepsilon^2} \log \frac{B}{\delta}}$ but this does not affect the overall sample complexity.
\end{proof}

Above, we presented an algorithm that efficiently recovers pseudo-maximum coins if there are few of them.
Below, we use random sampling to efficiently recover pseudo-maximum coins if there are many.
Combining these ideas we obtain an algorithm obtaining pseudo-maximum coins.
We are now ready to prove \Cref{thm:r-pseudo-maximum-identifier-formal}.

\ThmPseudoMaximumIdentifier*

Again, we begin with a high level overview of the algorithm.
Let $T = N^{1/3} K^{2/3} = K \left( \frac{N}{K} \right)^{1/3}$.
Our algorithm begins by calling $\rKPseudoMaximumIdentifier\left( N, T, \frac{\varepsilon}{10} \right)$, obtaining a set $S_0$ and an estimate $\hat{p}_{\max, 0}$ that with high probability is an estimate of the maximum bias of any coin in $S_0$.
Next, we sample a set $I \subset [N]$ by including each element independently with probability $\frac{K}{T}$ thus obtaining a set of size roughly $\frac{NK}{T}$.
Finally, we run $\rAdaptiveStatQ$ to estimate the bias of each coin in $I$ (denoted $\hat{p}_i$) and return a set containing:
\begin{enumerate}
    \item $S_0$ if $\hat{p}_{\max, 0} \geq \hat{p}_{\max, I} - \varepsilon$ where $\hat{p}_{\max, I}$ is the maximum empirical estimate from $I$.
    \item $I^* \subset I$ consisting of coins whose bias is $\varepsilon$-close to $\max(\hat{p}_{\max, 0}, \hat{p}_{\max, I})$.
\end{enumerate}

By the size of $|I|$, the expected sample complexity is
\begin{equation*}
    \frac{N^2 K^2}{T^2 \rho \varepsilon^2} + \frac{T N}{\rho \varepsilon^2} = \frac{N^{4/3} K^{2/3}}{\rho \varepsilon^2}
\end{equation*}
by our choice of $T$.
In particular, we can identify a single pseudo-maximum coin in $\frac{N^{4/3}}{\rho \varepsilon^2}$ expected sample complexity.

\paragraph{Few Pseudo-Maximum Coins: $C \leq T$}
We consider the case where $C$ is small.
Then, completeness follows immediately from Lemma \ref{lemma:r-k-pseudo-maximum-identifier} if $S_0 \subset S$.
To see this, we argue that when $C \leq T$, the maximum bias in $S_0$ is a good estimate of the maximum bias overall, i.e. $|\hat{p}_{\max, 0} - p_{\max}| < \frac{\varepsilon}{2}$.
Therefore, no coin in $I$ can have $\hat{p}_i > \hat{p}_{\max, 0} + \varepsilon$ and therefore $S_0 \subset S$.

To argue soundness, we simply observe that $\hat{p}_i$ is a good estimate of $p_i$ for all $i \in I$.
Since $p_i \geq \hat{p}_i - \frac{\varepsilon}{2}$, we can conclude $p_i \geq \hat{p}_{\max, 0} - \varepsilon$ implies $p_i$ is an $\varepsilon$ pseudo-maximum.
The soundness of every coin in $S_0$ is guaranteed by Lemma \ref{lemma:r-k-pseudo-maximum-identifier}.

\paragraph{Many Pseudo-Maximum Coins: $C \geq T$}

When $C \geq T$, we can argue that with high probability the expected number of pseudo-maximum coins in $I$ is large (at least $K$).
Let $F$ denote the set of $\varepsilon$ pseudo-maximum coins.
In this case, a randomly chosen subset $I$ will include many pseudo-maximum coins.
Completeness then follows as every coin in $|I \cap F|$ will satisfy $|\hat{p}_i - p_i|$ is small enough such that $\hat{p}_i$ is large enough for $i$ to be included in $S$.
Soundness follows from above since we only include coins with high empirical biases, and since many pseudo-maximum coins are included in $I$, any coin included in the final set has high true bias.

We present the algorithm and proof of Theorem \ref{thm:r-pseudo-maximum-identifier} below.

\IncMargin{1em}
\begin{algorithm}

\SetKwInOut{Input}{Input}\SetKwInOut{Output}{Output}\SetKwInOut{Parameters}{Parameters}
\Input{Sample access to $N$ Bernoulli Distributions $\distribution_i$ with parameter $p_i \in [0, 1]$.}
\Parameters{$\rho$ replicability and $\delta$ accuracy}
\Output{}

$K \gets \max\left(K, 6 \log \frac{3}{\delta} \right)$
\label{line:r-pseudo-maximum-identifier:set-k}

$T \gets N^{1/3} K^{2/3}$

$S_0, \hat{p}_{\max, 0} \gets \rKPseudoMaximumIdentifier\left(N, T, \frac{\varepsilon}{10}, \frac{\rho}{2}, \frac{\delta}{4}\right)$

$I \subset [N]$ is sampled randomly by including $i \in I$ with probability $\frac{2K}{T}$.

\lIf{$|I| > \frac{2KN}{T}$}{\Return $\emptyset$}

$\hat{p}_{\max, I} \gets 0$

\For{$i \in I$}{
    $\hat{p}_i \gets \rAdaptiveStatQ(\distribution_i, \phi, \frac{\rho}{2I}, \varepsilon, \frac{\delta}{4 I})$

    $\hat{p}_{\max, I} \gets \max(\hat{p}_i, \hat{p}_{\max, I})$
}

$S \gets \emptyset$

\If{$\hat{p}_{\max, 0} \geq \hat{p}_{\max, I} - 3 \varepsilon$}{
    $S \gets S_0$
}

\For{$i \in I$}{
    $S \gets S \cup \set{i}$ if $\hat{p}_i \geq \max(\hat{p}_{\max, 0}, \hat{p}_{\max, I}) - \varepsilon$
}

\Return $S$

\caption{$\rPseudoMaximumIdentifier(N, K, \varepsilon, \rho, \delta)$}
\label{alg:pseudo-maximum-identifier}

\end{algorithm}
\DecMargin{1em}

\begin{proof}[Proof of Theorem \ref{thm:r-pseudo-maximum-identifier}]
    We condition on the following events:
    \begin{enumerate}
        \item $\rKPseudoMaximumIdentifier$ satisfies the output constraints of Theorem \ref{lemma:r-k-pseudo-maximum-identifier}.
        \item $\rAdaptiveStatQ$ outputs $|\hat{p}_i - p_i| \leq \varepsilon$ for all $i \in I$.
        \item $|I| \leq \frac{3KN}{T}$.
    \end{enumerate}
     Note that $\E{|I|} = \frac{2 K N}{T}$ and $\Pr(|I| > \frac{3 K N}{T}) < \exp \left(- \frac{K N }{6 T}\right) < \exp \left( - \frac{K}{6}\right) < \frac{\delta}{4}$.
     Then, by the union bound all events occur with probability at least $1 - \frac{3 \delta}{4}$. 

    We begin with correctness.
    
    Let $C$ denote the number of $\varepsilon$ pseudo-maximum coins.
    We consider the following two cases:
    \begin{enumerate}
        \item {\bf Case 1: $C \leq T$}.
        Note Lemma \ref{lemma:r-k-pseudo-maximum-identifier} holds.
        Then, $\hat{p}_{\max, 0} \geq p_{\max} - \frac{\varepsilon}{5}$ by Lemma \ref{lemma:r-k-pseudo-maximum-identifier} (Equation \ref{eq:max-est-bias-c-leq-k} applied with $\varepsilon/10$).
        Let $p_{\max, I} \leq p_{\max}$ denote the maximum true bias in $I$.
        Then, we have
        \begin{equation*}
            \hat{p}_{\max, 0} \geq p_{\max} - \frac{\varepsilon}{5} \geq p_{\max, I} - \frac{\varepsilon}{5} > \hat{p}_{\max, I} - \frac{6}{5} \varepsilon,
        \end{equation*}
        so $S_0 \subset S$.
        By Lemma \ref{lemma:r-k-pseudo-maximum-identifier}, every $i \in S_0 \subset S$ is an $\varepsilon$ pseudo-maximum since
        \begin{equation*}
            p_i \geq \hat{p}_{\max, 0} - \frac{7 \varepsilon}{10} \geq p_{\max} - \frac{9 \varepsilon}{10}.
        \end{equation*}
        Furthermore, for any $i \in S \setminus S_0$, we have $i \in S \cap I$ so that
        \begin{equation*}
            p_i \geq \hat{p}_i - \varepsilon \geq \hat{p}_{\max, 0} - 2 \varepsilon > p_{\max} - \frac{16}{5} \varepsilon.
        \end{equation*}
        So any $i \in S$ is a $\frac{11}{5} \varepsilon$ pseudo-maximum, thus satisfying soundness.
        Completeness is immediate from Lemma \ref{lemma:r-k-pseudo-maximum-identifier} since $C \leq T$ and $S_0 \subset S$.

        \item {\bf Case 2: $C > T$}.
        We additionally condition on the event that $I$ contains at least $K$ $\varepsilon$-pseudo-maximum coins. 
        Following a similar argument to Lemma \ref{lemma:num-coins-in-bucket}, if we include each coin in $I$ with probability $\frac{2 K}{T}$.
        Then, $\E{|I \cap C|} \geq C \frac{2 K}{T} > 2 K$.
        Furthermore, applying a standard Chernoff bound,
        \begin{equation*}
            \Pr \left( |I \cap C| < K \right) < \exp \left( - \frac{K}{6} \right) < \frac{\delta}{4}
        \end{equation*}
        by our choice of $K$ in Line \ref{line:r-pseudo-maximum-identifier:set-k}.
        
        From Lemma \ref{lemma:r-k-pseudo-maximum-identifier} we have $|\hat{p}_{\max, 0} - p_i| \leq \frac{7}{10} \varepsilon$ for any $i \in S$.
        Therefore $S_0 \subset S$ implies that for all $i \in S$,
        \begin{equation*}
            p_i \geq \hat{p}_{\max, 0} - \frac{7}{10} \varepsilon \geq \hat{p}_{\max, I} - \frac{37}{10} \varepsilon \geq p_{\max, I} - \frac{47}{10} \varepsilon \geq p_{\max} - \frac{57}{10} \varepsilon.
        \end{equation*}
        Thus, every $i \in S_0$ is a $6 \varepsilon$ pseudo-maximum.
        Following similar arguments as above, we have every $i \in I \cap S$ is a $4 \varepsilon$ pseudo-maximum, as
        \begin{equation*}
            p_i \geq \hat{p}_i - \varepsilon \geq \hat{p}_{\max, I} - 2 \varepsilon \geq p_{\max, I} - 3 \varepsilon \geq p_{\max} - 4 \varepsilon.
        \end{equation*}
        Thus, soundness holds.

        For completeness, we observe that for any $i \in I \cap F$,
        \begin{align*}
            \hat{p}_i &\geq p_i - \varepsilon \geq p_{\max} - 2 \varepsilon \geq p_{\max, I} - 2 \varepsilon \geq \hat{p}_{\max, I} - 3 \varepsilon \\
            \hat{p}_i &\geq p_i - \varepsilon \geq p_{\max} - 2 \varepsilon \geq p_{\max, 0} - 2 \varepsilon \geq \hat{p}_{\max, 0} - 3 \varepsilon
        \end{align*}
        so that $I \cap F \subset S$ and $|S| \geq |I \cap F| \geq K$.
    \end{enumerate}

    \proofparagraph{Replicability and Sample Complexity}
    Replicability follows from a union bound over the replicability of $\rKPseudoMaximumIdentifier$ (Lemma \ref{lemma:r-k-pseudo-maximum-identifier}) and $\rAdaptiveStatQ$ (Theorem \ref{thm:r-adaptive-stat-q}).
    
    By Lemma \ref{lemma:r-k-pseudo-maximum-identifier}, the expected sample complexity of $\rKPseudoMaximumIdentifier$ is at most
    \begin{equation*}
        \bigO{\frac{TN}{\rho \varepsilon^2} \log^{4} \frac{N}{\delta}}.
    \end{equation*}
    Then, by Theorem \ref{thm:r-adaptive-stat-q} and $|I| \leq \frac{2 K N}{T}$, the expected sample complexity over all iterations of $\rAdaptiveCoinTester$ is
    \begin{equation*}
        \bigO{\frac{K^2 N^2}{T^2 \rho \varepsilon^2} \log^3 \frac{N}{\delta}},
    \end{equation*}
    where the logarithmic term is cubed to consider the case $K < 6 \log(4/\delta)$.
    Summing the two terms and applying our choice of $T$, we obtain the desired overall sample complexity
    \begin{equation*}
        \bigO{\frac{N^{4/3} K^{2/3}}{\rho \varepsilon^2} \log^{4} \frac{N}{\delta}}.
    \end{equation*}
\end{proof}

\subsection{Approximate Replicability and Sample Complexity}
In the previous section we considered relaxing in some sense the correctness constraint of the problem, requiring it only return $K$ maximally biased coins. In this section, we will instead relax the requirement of replicability, and only require that the set of output coins across two runs match \textit{approximately}.


\approxreplicability*

Observe that $(\rho, 0)$-replicability is exactly equivalent to $\rho$-replicability.
Our definition of approximate replicability requires that with high probability, there are few disagreements between the output sets of $\innerAlg$ given two samples $S_p, S_p'$, whereas the strict definition requires that there are no disagreements between the output sets. We give an approximately replicable algorithm for the $N$-Coin problem with two major improvements in sample and computational efficiency. First, as the approximation parameter $R$ increases, the sample complexity decreases as $\frac{N^2}{R}$, smoothly interpolating between the best known replicable and optimal non-replicable dependence. Second, even for $R=O(1)$, we show approximate replicability can be achieved with only \textit{logarithmic} overhead in $\rho^{-1}$.


\begin{theorem}[\Cref{thm:r-approx-n-coin-alg}]
    \label{thm:r-approx-n-coin-alg-formal}
    Algorithm \ref{alg:r-approx-n-coin-tester} is a $(\rho, R)$-replicable algorithm solving the $N$-coin problem (Definition \ref{def:n-coin-problem}) with expected sample complexity $\bigO{\frac{q_0 N^2 \log(N/\delta) \log(1/\rho)}{(q_0 - p_0)^2 R}}$.
\end{theorem}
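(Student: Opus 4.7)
The plan is to obtain Algorithm \ref{alg:r-approx-n-coin-tester} by running $N$ independent instances of the adaptive single-coin tester $\rAdaptiveCoinTester$ from \Cref{thm:r-adapt-coin-problem}, one per coin, each with its own fresh samples and internal randomness, and to choose the per-coin replicability parameter $\rho'$ as small as the concentration analysis allows while still controlling the total number of disagreements across the two runs. Specifically, I will set $\rho' := c R / (N \log(1/\rho))$ for a sufficiently small constant $c$, and set per-coin failure probability $\delta' := \delta/N$. The output set is simply the union of coins whose individual tester accepts.

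Correctness (with failure probability $\delta$) follows by a straightforward union bound over the $N$ invocations of $\rAdaptiveCoinTester$, using its single-coin guarantee at accuracy $\delta' = \delta/N$. For sample complexity, \Cref{thm:r-adapt-coin-problem} shows that an $\rho'$-replicable single-coin test has expected sample complexity $O\bigl(\tfrac{q_0 \log(1/\delta')}{(q_0-p_0)^2 \rho'}\bigr)$. Summing over $N$ coins and plugging in $\rho' = \Theta(R/(N\log(1/\rho)))$ and $\delta' = \delta/N$ yields expected total complexity
\[
N \cdot O\!\left(\frac{q_0 \log(N/\delta)}{(q_0 - p_0)^2} \cdot \frac{N \log(1/\rho)}{R}\right) = O\!\left(\frac{q_0 N^2 \log(N/\delta)\log(1/\rho)}{(q_0-p_0)^2 R}\right),
\]
matching the stated bound.

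The main obstacle is showing $(\rho, R)$-replicability. Let $X_i \in \{0,1\}$ indicate that coin $i$ appears in exactly one of the two output sets; because each coin tester uses independent samples and independent internal randomness across the two runs and across coins, the $X_i$ are mutually independent with $\mathbb{E}[X_i] \leq \rho'$. Thus $\mu := \mathbb{E}\bigl[\sum_i X_i\bigr] \leq N\rho' = c R / \log(1/\rho)$. The target is $\Pr[\sum_i X_i \geq R] \leq \rho$. Since Markov's inequality only yields $\mu/R = c/\log(1/\rho)$, which is far too weak, I will instead invoke a multiplicative Chernoff bound. With $R/\mu = \log(1/\rho)/c \gg 1$, the standard bound $\Pr[\sum_i X_i \geq R] \leq (e\mu/R)^R \leq (ec/\log(1/\rho))^R$ gives the required tail bound $\leq \rho$ in the natural regime $R \gtrsim \log(1/\rho)$ (choosing $c$ sufficiently small); for $R$ below this threshold the theorem's bound already exceeds the non-approximate worst-case complexity, so we may simply run the strict $\rho$-replicable algorithm from \Cref{thm:r-n-coin-problem-formal} as a fallback. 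The key subtlety to handle carefully is ensuring genuine independence of the $X_i$, which is why we explicitly assign independent randomness to each per-coin subroutine rather than sharing a single random string across coins as is done in standard replicable composition.
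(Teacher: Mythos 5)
Your construction mirrors the paper's closely: $N$ independent instances of $\rAdaptiveCoinTester$, each with its own internal randomness (which, as you correctly note, is what makes the per-coin disagreement indicators $X_i$ genuinely independent), per-coin replicability $\rho' = \Theta\bigl(R/(N\log(1/\rho))\bigr)$, correctness by a union bound, sample complexity by linearity of expectation, and aggregate replicability via a Poisson-type tail bound on $\sum_i X_i$. This is exactly what $\rApproxMultiCoinTester$ does (with per-coin parameter $\rho' = R/(2N(1+2\log(4/\rho)))$ and a sample cutoff used to bound the worst case).

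The gap is in the small-$R$ regime, and your fallback does not close it. You are right to worry about the concentration step: with $\mu := \mathbb{E}[\sum X_i] = \Theta(R/\log(1/\rho))$, the bound $\Pr[\sum X_i \geq R] \leq (e\mu/R)^R$ gives roughly $\exp\bigl(-R\,\log\log(1/\rho)\bigr)$, which is at most $\rho$ only when $R \gtrsim \log(1/\rho)/\log\log(1/\rho)$. The paper at this point simply invokes ``standard concentration bounds'' and writes $\Pr[\sum R_i > R] < \exp(-R\log(4/\rho))$, but this exponent is not what multiplicative Chernoff gives at deviation $1+\delta = 1+2\log(4/\rho)$; the true exponent is $-R\log\log(1/\rho)$ up to constants, so the paper's own proof has the same hole you sensed. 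Your proposed repair, however, does not work: the strict $\rho$-replicable algorithm of \Cref{thm:r-n-coin-problem-formal} has expected complexity $\tilde{\Theta}(N^2/\rho)$, which \emph{always} exceeds the target $\tilde{O}(N^2\log(1/\rho)/R)$ whenever $R\geq 1$ and $\rho<1$ (since $\log(1/\rho) < 1/\rho$), so the fallback never achieves the claimed bound for any admissible $R$. Indeed at $R=1$ the definition of $(\rho,R)$-replicability reduces to strict $\rho$-replicability, and the theorem's claimed expected complexity $\tilde{O}(N^2\log(1/\rho))$ would then beat $\tilde{\Theta}(N^2/\rho)$ by a factor $1/(\rho\log(1/\rho))$ --- so the stated bound cannot hold for all $R\geq 1$ as written. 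Salvaging the statement requires either an explicit hypothesis $R=\Omega\bigl(\log(1/\rho)\bigr)$, or replacing the $\log(1/\rho)/R$ overhead with something like $(1/\rho)^{1/R}/R$, which interpolates correctly between the $1/\rho$ overhead at $R=1$ and $1/R$ at $R\gtrsim\log(1/\rho)$; with the bound as written, neither your Chernoff step nor your fallback covers the range $1 \leq R \lesssim \log(1/\rho)/\log\log(1/\rho)$.
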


We note also that the worst case sample complexity can be bounded via Markov's inequality as
\begin{equation*}
    \bigO{\frac{q_0 N^2}{(q_0 - p_0)^2 R \rho} \log \frac{N}{\delta} \log \frac{1}{\rho}}.
\end{equation*}
Later in this section, we also give a lower bound.

\begin{restatable}{theorem}{ThmApproxReplicabilityLB}
    \label{thm:r-approx-n-coin-lower-adaptive}
    Let $p_0 < q_0$ with $p_0, q_0 \in \left( \frac{1}{4}, \frac{3}{4} \right)$.
    Let $1 \leq R \leq N$.
    Any non-adaptive $(1/20, R)$-replicable algorithm for the $N$-coin problem requires sample complexity $\bigOm{\frac{N^2}{(q_0 - p_0)^2 R \log^3 N}}$.
\end{restatable}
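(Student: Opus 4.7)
My plan is to reduce the claim to the exact-replicability $N$-coin lower bound \Cref{thm:n-coin-lower-const-delta} via a block-aggregation construction. Given any non-adaptive $(1/20, R)$-replicable algorithm $\innerAlg$ on $N$ coins with sample complexity $m$, I will build a $(1/20)$-exactly-replicable algorithm $\outerAlg$ on $N' = \Theta(N/R)$ coins using the same $m$ coordinate samples, then invoke \Cref{thm:n-coin-lower-const-delta} on $\outerAlg$.

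To construct $\outerAlg$: partition $[N]$ into $N'$ blocks of size $R$. Given biases $q_1, \ldots, q_{N'}$ for an $N'$-coin instance, simulate an $N$-coin instance in which every coin in block $j$ shares bias $q_j$; each of $\innerAlg$'s $m_i$ coordinate-sample requests on coin $i$ is answered by drawing from $\BernD{q_j}$ where $j$ is $i$'s block, so $\outerAlg$ consumes exactly $m$ coordinate samples in total. Run $\innerAlg$ and let $\outerAlg$'s output on coin $j$ be the strict majority of block $j$'s coins in $\innerAlg$'s output. Correctness of $\outerAlg$ follows from $\innerAlg$'s per-coin correctness by a union bound: with probability at least $1-N\delta$ all $N$ coins are classified correctly, so every block is unanimous and the majority matches. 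For replicability, with probability at least $19/20$ we have $|\innerAlg(S;r)\triangle \innerAlg(S';r)| < R$, so no single block can accumulate $R/2$ or more disagreements, hence no block's majority can flip, giving $\outerAlg(S;r)=\outerAlg(S';r)$.

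Applying \Cref{thm:n-coin-lower-const-delta} with constant $\rho=1/20$ to $\outerAlg$ yields $m \geq \bigOm{N'^2/((q_0-p_0)^2 \log^3 N')}$. The main obstacle is that this straightforward reduction only delivers $\bigOm{N^2/(R^2(q_0-p_0)^2\log^3 N)}$, a factor of $R$ short of the stated bound. To close the gap I expect to tune the block size $k$ strictly below $R$ with a soft-threshold aggregation rule, so that $\outerAlg$ attains an effective exact-replicability parameter $\rho_\outerAlg = O(\sqrt{R/N})$ rather than $O(1)$; the $1/\rho^2$ dependence in \Cref{thm:n-coin-lower-const-delta} then supplies the missing factor of $R$. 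An alternative route is a direct isoperimetric argument in the spirit of \Cref{sub-sec:n-coin-lb-tech}, applied to the coordinate-wise partitions $A_i = \{p : i \in \innerAlg(S_p; r)\}$, upper-bounding $\sum_i \area(\partial A_i)$ via Buffon's needle (\Cref{lem:expected-torus-surface-intersection}) using the approximate-replicability bound $\Ep[|\innerAlg(S;r)\triangle \innerAlg(S';r)|] \leq R + N\rho$ combined with the mutual-information Lipschitz scale of \Cref{lemma:replicable-lipschitz}.
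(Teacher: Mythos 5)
Your block-aggregation reduction has two problems, one a genuine logical flaw and one a quantitative gap you already flagged. The flaw: from $|\innerAlg(S;r) \triangle \innerAlg(S';r)| < R$ it does \emph{not} follow that ``no single block can accumulate $R/2$ or more disagreements.'' All $R-1$ disagreements can land in a single block of size $R$, which is more than $R/2$ for $R \geq 3$, so the majority in that block can flip. Hence $\outerAlg$ as you defined it is not $(1/20)$-exactly-replicable and the reduction collapses. Separately, even if the majority argument were sound, your $\outerAlg$ must pay for all $m$ of $\innerAlg$'s coordinate samples (the block biases $q_j$ come from the $N'$-coin instance's sample oracle, so they cannot be simulated internally), which is why you only get $\Omega\bigl(\frac{N^2}{R^2(q_0-p_0)^2\log^3 N}\bigr)$ and are a factor $R$ short. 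Your two sketches for closing that gap are too vague to check, but neither looks promising: the ``effective $\rho_\outerAlg = O(\sqrt{R/N})$'' claim is not supported by \Cref{def:approx-replicable-n-coin}, which bounds a tail probability and in particular only gives $\Ep[|\triangle|] \leq R + N/20$, and for small $R$ the $N/20$ term swamps $R$ and provides no better control than ordinary $(1/20)$-replicability; the per-coordinate isoperimetric sketch has the same problem.

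The paper recovers the missing factor of $R$ by a reduction with the same high-level shape (restrict a large instance to a block, apply \Cref{thm:n-coin-lower-const-delta}) but with two crucial differences. First, instead of planting identical biases within blocks and taking a majority, it takes a single $\frac{N}{20R}$-coin instance as the ``real'' block, draws the other $20R-1$ blocks itself from the hard distribution (so their biases are known to $\outerAlg$ and all their coordinate samples can be generated from $\outerAlg$'s internal randomness at zero sample cost), and applies a uniformly random permutation over the $20R$ blocks before feeding the combined $N$-coin instance to $\innerAlg$. Only the real block consumes actual samples, so after reducing to $m_i \leq 2m/N$ per coin, $\outerAlg$'s sample complexity is $O(m/R)$ rather than $m$. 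Second, replicability is argued via symmetry rather than a majority vote: the random permutation makes the $20R$ blocks exchangeable, so conditioned on at most $R$ disagreements total, the expected number landing in the real block is at most $R/(20R)=1/20$, and Markov gives exact agreement on the real block with probability at least $19/20$; union-bounding with the $1/20$ failure event yields a $(1/10)$-exactly-replicable $\outerAlg$. Applying \Cref{thm:n-coin-lower-const-delta} to $\outerAlg$ on $N/(20R)$ coins with $O(m/R)$ samples then gives exactly the claimed $m = \Omega\bigl(\frac{N^2}{R(q_0-p_0)^2\log^3 N}\bigr)$.
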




We begin with a high level overview of the algorithm.

\paragraph{Algorithm Overview}
Following the replicable algorithm for the $N$-coin problem, our algorithm for approximate replicability simply invokes $N$ parallel instances of the $\rAdaptiveCoinTester$ single coin testing algorithm.
However, instead of requiring $\frac{\rho}{N}$ replicability of each instance, we only require $\frac{R}{N}$ replicability.
Therefore, in expectation there are at most $R$ coins where the output sets of the algorithm $\rApproxMultiCoinTester$ given two samples disagrees.
Using standard concentration bounds, since the number of discrepancies is the sum of independent Bernoulli random variables, we can show that the number of discrepancies does not exceed its expectation (significantly) with high probability, where we incur only a polylogarithmic factor in $\frac{1}{\rho}$.

We now present the algorithm and proof.
Let $\multiCoinTester$ denote the standard non-replicable algorithm solving the $N$-coin problem with sample complexity $\bigO{\frac{N}{(q_0 - p_0)^2} \log \frac{N}{\delta}}$.

\IncMargin{1em}
\begin{algorithm}

\SetKwInOut{Input}{Input}\SetKwInOut{Output}{Output}\SetKwInOut{Parameters}{Parameters}
\Input{Sample access to $N$ Bernoulli Distributions $\distribution_i$ with parameter $p_i \in [0, 1]$. Bias thresholds $p_0 < q_0$ with $\varepsilon = q_0 - p_0$.}
\Parameters{$R$-coin replicability and $\delta$-accuracy}
\Output{$S$ such that $i \in S$ if $p_i \geq q_0$, and $i \notin S$ if $p_i \leq p_0$.}

$\delta \gets \min(\delta, \rho/4)$

$\mathrm{SampleLimit} \gets \frac{100 q_0 N^2}{(q_0 - p_0)^2 R \rho} \log \frac{N}{\delta} \log \frac{1}{\rho}$

\If{Total number of samples collected exceeds $\mathrm{SampleLimit}$}{
    \Return $\multiCoinTester (N, p_0, q_0, \delta/2)$
}

$S \gets \emptyset$

\For{$i \in [N]$}{
    $\rho' \gets \frac{R}{2 N \left( 1 + 2 \log \frac{4}{\rho} \right)}$
    
    \If{$\rAdaptiveCoinTester \left(\distribution_i, p_0, q_0, \rho', \frac{\delta}{2N}\right) = \accept$}{
        $S \gets S \cup \set{i}$
    }
}

\Return $S$

\caption{$\rApproxMultiCoinTester(N, p_0, q_0, R, \delta)$} 
\label{alg:r-approx-n-coin-tester}

\end{algorithm}
\DecMargin{1em}

\begin{proof}[Proof of Theorem \ref{thm:r-approx-n-coin-alg-formal}]
    We prove that Algorithm $\rApproxMultiCoinTester$ is correct, approximate replicable, and has the stated sample complexity.
    Throughout the proof, we assume no sampling errors occur, as is shown in \Cref{claim:sample-error-bound}.
    We obtain correctness by the union bound, over $N$ invocations of $\rAdaptiveCoinTester$ and one invocation of $\multiCoinTester$.

    \proofparagraph{Sample Complexity}
    We now bound the expected sample complexity of Algorithm $\rApproxMultiCoinTester$.
    The expected sample complexity of any given invocation of $\rAdaptiveCoinTester$ is,
    \begin{equation*}
        \bigO{\frac{q_0}{(q_0 - p_0)^2 \rho'} \log \frac{N}{\delta}} = \bigO{\frac{q_0 N}{(q_0 - p_0)^2 R} \log \frac{N}{\delta} \log \frac{1}{\rho}}
    \end{equation*}
    so that over $N$ invocations, the expected sample complexity is,
    \begin{equation*}
        \bigO{\frac{q_0 N^2}{(q_0 - p_0)^2 R} \log \frac{N}{\delta} \log \frac{1}{\rho}}
    \end{equation*}
    Now, by Markov's inequality, the probability that the total sample complexity taken across all invocations exceeds 
    \begin{equation*}
        \bigO{\frac{q_0 N^2}{(q_0 - p_0)^2 R \rho} \log \frac{N}{\delta} \log \frac{1}{\rho}}
    \end{equation*}
    is at most $\rho$, which yields the expected sample complexity as $\multiCoinTester$ requires fewer samples than the expected sample complexity of $N$ invocations of $\rAdaptiveCoinTester$.
    Furthermore, note that this gives a bound on the worst case sample complexity as the sample complexity of $\multiCoinTester$ does not exceed the sample limit.

    \proofparagraph{Replicability}
    We claim that Algorithm $\rApproxMultiCoinTester$ is $(\rho, R)$-replicable. 
    In particular, we compute the expected number of replicated experiments.
    Consider one execution of the algorithm, with samples $S_p$.
    By Markov's inequality, for a sufficiently large constant $C$, the probability that the sample limit is reached is at most $\frac{\rho}{4}$. 
    Therefore, we may assume that the sample limit is not reached.
    Conditioned on this, we observe that $\rAdaptiveCoinTester$ is replicable whenever the sampled threshold $r_i$ is not within $\rho'$ of the true bias $p_i$ (and no sampling errors occur).
    In particular, let $R_i$ be a random variable denoting whether coin $i$ is replicable, so that $\set{R_i}$ is a set of $N$ independent Bernoulli random variables with parameter at most $2 \rho'$.
    Therefore,
    \begin{equation*}
        \E{\sum R_i} = N \rho' = \frac{R}{1 + 2 \log \frac{4}{\rho}}
    \end{equation*}
    Following standard concentration bounds,
    \begin{equation*}
        \Pr \left( \sum R_i > R \right) =  \Pr \left( \sum R_i > \left(1 + 2 \log \frac{4}{\rho} \right) \E{\sum R_i} \right) < \exp \left( - R \log \frac{4}{\rho} \right) \leq \frac{\rho}{4}
    \end{equation*}
    
    Therefore, with probability at most $\frac{\rho}{2}$, any execution of the algorithm has at most $R$ coins where the true bias is more than $\rho'$ from the random threshold $r_i$.
    Union bounding over two executions of the algorithm give us $(\rho, R)$-replicability.
\end{proof}

Next, we prove a lower bound against non-adaptive algorithms, using \Cref{thm:n-coin-lower-const-delta}.

\ThmApproxReplicabilityLB*

\paragraph{Proof Overview}
To obtain our lower bound, we show that given a $(\rho, R)$-replicable algorithm $\mathcal{A}$ on $m$ samples, it is possible to construct a $\Theta(1)$-replicable algorithm for the $\frac{N}{R}$-Coin Problem using roughly $\Theta(m)$ samples. 
This allows us to appeal to \Cref{thm:n-coin-lower-const-delta} (lower bound for the standard replicable $N$-Coin Problem) to prove the result.
At a high level, a $(\rho, R)$-replicable algorithm disagrees on at most $R$ coins over two independent samples with probability at least $1 - \rho$.
Thus, we can design a $\rho$-replicable algorithm for the $\frac{N}{2 R}$-coin problem by hiding the $\frac{N}{2 R}$ coins that we care about being replicable on among $N$ coins.
Note that this creates $2 R$ instances of the $\frac{N}{2 R}$-coin problem. 
Thus, on average, we expect fewer than one of the $R$ disagreements to lie in the $\frac{N}{2R}$ coins that we care about, thus obtaining a replicable algorithm for the $\frac{N}{2R}$-coin problem.

In order to successfully hide the coins we want to be replicable, we need to ensure that all input distributions are indistinguishable from one another.
Since we have an explicit hard input distribution in \Cref{thm:n-coin-lower-const-delta} (drawing all biases uniformly $p_i \in (p_0, q_0$), we can take a hard instance of the $\frac{N}{2R}$-coin problem and extend it to a hard instance of the $N$-coin problem by adding coins with biases drawn uniformly from $(p_0, q_0)$.

Specifically, we will design a $(1/20)$-replicable algorithm for input distributions where $p_i \in (p_0, q_0)$ is sampled independently for all $i$ as described in the lower bound of \Cref{thm:n-coin-lower-const-delta}.
Suppose there is a non-adaptive $(1/20, R)$-replicable algorithm $\innerAlg$ for the $N$-coin problem with sample complexity $m$.
We construct a non-adaptive $\frac{1}{20}$-replicable algorithm $\outerAlg$ for the $\frac{N}{20R}$-coin problem given inputs from this distribution.

In particular, we draw $20R - 1$ input distributions independently from same hard instance (all biases randomly from $(p_0, q_0)$).
Combined with the input distribution given to $\outerAlg$, the input distribution is drawn from the hard distribution on $N$ coins.
Whenever $\innerAlg$ requests a sample from a coin in the input distribution given to $\outerAlg$, we draw a sample.
Whenever $\innerAlg$ requests a sample from any other coin, we can simulate a sample as we know the bias of the coin.
Furthermore, we randomly permute the indices of the coins so that $\innerAlg$ cannot distinguish which coins in its input distribution are in the input distribution of $\outerAlg$.
Therefore, the sample complexity of $\outerAlg$ is roughly $\frac{m}{20R}$, since $\innerAlg$ solves $20R$ instances of $\outerAlg$.

Upon receiving the outputs of $\innerAlg$, we output the subset of outputs relevant to the input distribution of $\outerAlg$.
With high probability, over two samples the outputs of $\innerAlg$ disagree in at most $R$ places.
On expectation, there are $\frac{1}{20}$ disagreements on the outputs relevant to $\outerAlg$.
In particular, this shows that $\outerAlg$ is $\frac{1}{20}$-replicable.
Since Theorem \ref{thm:n-coin-lower-const-delta} states that $\outerAlg$ has sample complexity at least $\frac{N^2}{R^2 (q_0 - p_0)^2}$, we have shown that $\innerAlg$ has sample complexity at least $\frac{N^2}{R (q_0 - p_0)^2}$. 

We now give the proof of Theorem \ref{thm:r-approx-n-coin-lower-adaptive}.

\begin{proof}
    Let $\delta < \frac{1}{20}$.
    As before, we restrict our adversary to construct a hard instance against the at least $N/2$ coins with sample complexity $m_i \leq \frac{2 m}{N}$.
    Suppose for contradiction there is a $(1/20, R)$-replicable non-adaptive algorithm $\innerAlg$ solving the $N$-coin problem with sample complexity $\littleO{\frac{N^2}{(q_0 - p_0)^2 R \log^3 N}}$. 

    \subsubsection*{Step 1: Constructing a Hard Input Distribution}

    We show our lower bound for a specific distribution of problem instances.
    In particular, following the lower bound of Theorem \ref{thm:n-coin-lower-const-delta}, consider the following distribution of instances.
    The adversary will independently for all $i \in [N]$ choose a coin with bias $p_i \in [p_0, q_0]$ uniformly at random.
    Let $\hypotheses$ denote this distribution of input instances.
    As we have shown in Theorem \ref{thm:n-coin-lower-const-delta}, any $\frac{1}{10}$-replicable algorithm solving the $N$-coin problem with the input distribution drawn from $\hypotheses(N)$ has sample complexity at least
    \begin{equation*}
        \bigOm{\frac{N^2}{(q_0 - p_0)^2 \rho^2 \log^{3} N}} = \bigOm{\frac{N^2}{(q_0 - p_0)^2 \log^{3} N}}.
    \end{equation*}

    \subsubsection*{Step 2: Constructing a $\rho$-replicable Algorithm}
    
    Consider the following $\frac{1}{10}$-replicable algorithm $\outerAlg$ for the $\frac{N}{20 R}$ coin problem.
    Following Theorem \ref{thm:n-coin-lower-const-delta}, we have that the sample complexity of $\outerAlg$ must be at least
    \begin{equation*}
        \bigOm{\frac{N^2}{(q_0 - p_0)^2 R^2 \log^3 N}}.
    \end{equation*}
    
    \IncMargin{1em}
    \begin{algorithm}
    
    \SetKwInOut{Input}{Input}\SetKwInOut{Output}{Output}
    \SetKwInOut{Parameters}{Parameters}
    \Input{Sample access to $\frac{N}{20R}$ Bernoulli Distributions $\distribution_i$ with parameter $p_i \in [0, 1]$. Bias thresholds $p_0 < q_0$ with $\varepsilon = q_0 - p_0$. $\rho$-replicable Algorithm $\innerAlg$.}
    \Parameters{$\frac{1}{10}$ replicability and $\delta$ accuracy}
    \Output{$S$ such that $i \in S$ if $p_i \geq q_0$, and $i \notin S$ if $p_i \leq p_0$.}
    
    $h^{(1)}, \dotsc, h^{(20R - 1)}$ are $20R - 1$ instances drawn independently from $\hypotheses\left( \frac{N}{20R} \right)$

    $h^{(20R)}$ is the vector of biases given by $\distribution_{1}, \dotsc, \distribution_{N/20R}$

    $\pi$ is a uniformly random permutation on $20R$ elements

    $h \gets \set{h^{(\pi(i))}}_{i = 1}^{20R}$ is a vector of $N$ biases

    $\innerAlg$ gets samples access to $N$ Bernoulli distributions where the biases are given by $\distribution_i \sim \BernD{h_i}$.

    \Return $\innerAlg\left( N, R, p_0, q_0, \frac{1}{20}, \delta \right)[\pi(20R)]$, the outputs of $\innerAlg$ on $h^{(20R)}$. 
    
    \caption{$\outerAlg\left(\frac{N}{20R}, p_0, q_0, \frac{1}{10}, \delta\right)$} 
    \label{alg:r-approx-n-coin-tester-lb}
    
    \end{algorithm}
    \DecMargin{1em}

    We now prove that $\outerAlg$ is a $\frac{1}{10}$-replicable algorithm with efficient sample complexity, therefore deriving a contradiction.

    \subsubsection*{Step 3: Obtaining a Contradiction for Theorem \ref{thm:n-coin-lower-const-delta}}

    First, we show the sample complexity of $\outerAlg$.
    Whenever $\innerAlg$ requests a sample from $h^{(j)}$ for $j < 20R$, we can generate a sample from $h^{(j)}$ using the internal randomness of $\outerAlg$, since the bias vector $h^{(j)}$ is sampled by $\outerAlg$ and therefore any sample from $h^{(j)}$ can be simulated using the random coins of $\outerAlg$.
    Whenever $\innerAlg$ requests a sample from $h^{(20R)}$, we draw a sample from the corresponding distribution $\distribution_{i}$.
    The sample complexity of $\innerAlg$ is by assumption
    \begin{equation*}
        \littleO{\frac{N^2}{(q_0 - p_0)^2 R \log^3 N}}.
    \end{equation*}
    Then, since each coin has sample complexity $m_i \leq \frac{2 m}{N}$, the sample complexity of $\outerAlg$ is at most
    \begin{equation*}
        \littleO{\frac{N^2}{(q_0 - p_0)^2 R^2 \log^3 N}},
    \end{equation*}
    since each coin has at most twice the average number of samples $\littleO{\frac{N}{(q_0 - p_0)^2 R \log^3 N}}$ and each $h^{(j)}$ distribution consists of $\frac{N}{20R}$ coins.
    Therefore, it remains to prove that $\outerAlg$ is $\frac{1}{10}$-replicable algorithm solving the $\frac{N}{20R}$-coin problem to obtain a contradiction to Theorem \ref{thm:n-coin-lower-const-delta}.

    We begin by showing correctness.
    With probability at least $1 - \delta$, $\innerAlg$ accepts all coins with bias $p_i \geq q_0$ and rejects all coins with bias $p_i \leq p_0$.
    Since $h^{(20R)}$ is a subset of the all $N$ coins, $\outerAlg$ also accepts all coins with bias $p_i \geq q_0$ and rejects all coins with bias $p_i \leq p_0$, whenever $\innerAlg$ does.

    It remains to show that $\outerAlg$ is $\rho$-replicable. 
    Let $h^{(20R)}$ be some distribution on $\frac{N}{20R}$ coins and $S, S'$ be two samples drawn from this distribution.
    Then, over both executions of the algorithm $\outerAlg$, since random strings are shared, the same distribution $h$ is given to $\innerAlg$.
    Since $\innerAlg$ is $\left(\frac{1}{20}, R \right)$-replicable, then with probability at least $\frac{19}{20}$, the two output strings $\innerAlg(S; r), \innerAlg(S'; r)$ differ in at most $R$ indices. 
    Let $D^{(i)}$ be a binary random variable indicating whether one of these differing indices correspond to the input distribution $h^{(i)}$.
    Again, since $h^{(i)}$ are drawn i.i.d. from $\hypotheses$ and $\pi$ is a uniformly random permutation, the binary random variables $D^{(i)}$ are identically distributed.
    In particular, conditioned on the event that $\innerAlg(S; r), \innerAlg(S'; r)$ differ in at most $R$ indices, we have
    \begin{equation*}
        D = \sum D^{(i)} \leq R.
    \end{equation*}
    Thus,
    \begin{equation*}
        \E{D^{(20R)}} \leq \frac{\E{D}}{20R} \leq \frac{1}{20}.
    \end{equation*}
    By Markov's inequality,
    \begin{equation*}
        \Pr(D^{(20R)} > 1) \leq \frac{1}{20},
    \end{equation*}
    so that $\outerAlg$ is $\frac{1}{10}$-replicable after applying the union bound on the event that the output strings of $\innerAlg$ differ in more than $R$ positions.
\end{proof}

\section{Conclusion}
In this section, we give some additional motivation for why replicable hypothesis testing (and more generally replicable statistics) is important, as well as raising some interesting open questions left open by our work.

\subsection{Replicable Hypothesis Testing}
Our work is primarily motivated by the lack of replicability in science \cite{baker2016}.
As a concrete example
we examine the $p$-value method, which is a widely used algorithmic framework for conducting null hypothesis significance testing in a wide range of scientific disciplines \cite{thiese2016p}.
In spite of its widespread applications, it is a testing procedure with well documented replicability issues \cite{nuzzo2014statistical, halsey2015fickle}.
Specifically, suppose we have a null hypothesis modeled by the distribution $\distribution_{null}$.
After collecting a sample set $S$ from the true data distribution $\distribution$, we compute a test statistic $\phi(S)$ --- for example a $Z$-score --- and then the corresponding $p$-value
\begin{equation*}
    p(S) = \Pr_{S' \sim \distribution_{null}} (\phi(S') \geq \phi(S))
\end{equation*}
where we can $\reject$ the null hypothesis if $p(S) < p_0$ (typically $p_0 = 0.05$ is a common value in the scientific literature).
We say that the experiment has \emph{power} $q_0$ for some distribution $\distribution'$ if the probability of rejecting a null hypothesis when data is drawn from $\distribution'$ is at least $q_0$.
When $S$ is drawn from the null distribution, a simple calculation yields that $p(S)$ will always be uniformly distributed. Consequently, the power of the experiment for $\distribution_{null}$ is at most $p_0$.
We say a result is \emph{statistically significant} if we reject the null hypothesis.


Unfortunately, random chance alone can cause a single experiment to reject a null hypothesis (causing a false positive). 
In fact, over a sufficiently large number of experiments, we \emph{expect} to reject a large number of null hypotheses.
Decreasing the $p$-value used for significance might decrease the number of published studies where the null hypothesis is true, but would not necessarily increase \emph{replicability} or remove bias. 
As a toy example, say in our experiment, we are taking $100$ samples of a boolean value, fixing a $p$-value of $.01$ to determine significance, and say that each event either has probability $.5$ (the null hypothesis) or $.6$. 
To get a significant result, we need to have the event occur 63 times.
For the nulls, this happens $1\%$ of the time.  
For the non-nulls, this happens $31\%$ of the time.  
However, over a set of $20,000$ experiments split evenly between null and non-null hypotheses, we expect that $100$ of the null experiments produce false positives (and be more likely to be published than the $3,100$ non-null experiments).
While we could use alternative statistical standards, such as error bars or false discovery rate, any deterministic standard based on these criteria would have the same issue and not be guaranteed to replicate (see \cite{impagliazzo2022reproducibility, CMY23}).
In all cases, it seems that the decision of whether some work is significant is a combination of truth (difference between the actual situation and the null hypothesis), and luck (whether the particular data exaggerates or minimizes this difference).

Naturally, we would like the results of all of these experiments to be \emph{simultaneously replicable}.
To this end, replicability requires that with high probability (over both the internal randomness of the algorithm and the data samples) that the result can be replicated.
In this sense, the decision of significance is a function of the random choices and underlying distribution, but not the particular data used.
In this way, the ``luck'' is factored out at the start, leaving only the ``truth'' to determine the outcome.

Furthermore, our replicable algorithms afford a certain degree of protection against malicious actors.
Consider a dishonest scientist who manipulates the experimental procedure or data in order to lower the $p$-value and therefore increase chances of publication.
Guarantees for replicability then imply that another team of scientists can efficiently verify whether the original study followed the claimed procedure, as with high probability, any replicating team should obtain the same conclusion by following the published experimental procedure.

A perhaps more insidious source of error is a dishonest scientist who adversarially chooses some internal randomness $r$ for the algorithm in order to claim statistical significance of their experiment, otherwise following the experimental procedure as claimed.
In this case, it is possible that a null hypothesis that would not have been rejected by choosing a random threshold honestly is in fact rejected. 
However, for each of our algorithms, regardless of the choice of internal randomness $r$, the algorithm is a (non-replicable) deterministic algorithm that solves the $(p_0, q_0)$-hypothesis testing problem.
Thus, it is not possible to fabricate a random string $r$ that allows the scientist to reject a completely incorrect result.
That is, an adversarial choice of random string can only help the scientist publish a result whose correctness is ambiguous, maintaining the desired correctness properties of the experimental procedure.

\subsection{Future Work}

In this paper, we've looked at how to make replicable experiments that simulate a given experiment.  We treat the given experiment as a black box, and assume that scientists are willing and able to repeat the experiment independently several times.  This might be useful in data-rich situations, where scientists can take a large amount of data and partition it randomly into batches and analyse each batch using the same methods. However, in many other scientific situations, this is costly or even impossible.  Data can be very difficult to obtain, analysis tools might involve equipment with limited uses, data from different sources can be differently distributed, and different experiments might involve highly correlated or even identical data.  In future work, we would like to look at different ways that might make replicable hypothesis testing more useful and more robust. 

One possible way to get around the lower bounds we showed is to incorporate replicability earlier in the experimental design process. 
Instead of trying to estimate the power of a given experiment to distinguish a distribution from the null, we might more directly look at the distances between distributions being implicitly tested by the experiment, or specific types of statistics being computed in the experiment.
For example, in the setting of differentially private hypothesis testing, there has been work in constructing differentially private algorithms for linear regression and the $\chi^2$ statistic \cite{DBLP:conf/icml/RogersVLG16, DBLP:conf/nips/AlabiV22, DBLP:conf/icml/KazanSGB23}.
We believe that finding replicable ways of computing such functions is an interesting direction for future research in replicable hypothesis testing.

In our work we have assumed access to i.i.d.\ samples from the underlying distribution $\distribution$.
It would be very interesting to see if algorithms can be replicable if dependence is introduced in the samples.
Our current notion of replicability also only guarantees replicability if different experiments are getting identically distributed data.  
This is not always the case in the scientific literature, so we would like to look at extending the guarantees of identical outcomes to ``close'' distributions of data for two runs, where close is defined via some suitable distance metric between distributions. 

Finally, there are several interesting technical open questions that remain from our work:

\begin{enumerate}
    \item Is there a tiling of $\R^{N}$ where each cell has radius $1$ and surface area to volume ratio $O(N)$ with an efficient membership oracle? 
    As a first step, we ask whether there is a tiling with an efficient membership oracle whose sets have radius $1$ and surface area to volume ratio $O(N^{3/2 - c})$ for any $c > 0$.

    \item While we settle the sample complexity of non-adaptive algorithms for the $N$-Coin problem, we do not resolve the sample complexity of adaptive algorithms.
    Are there adaptive algorithms for the $N$-Coin problem with expected sample complexity $O(N^{2 - c})$ for any $c > 0$?
    In fact, it is not even known if there are algorithms with sample complexity $o(N^2)$.
\end{enumerate}

\section*{Acknowledgements}

We would like to thank Jelena Bradic, Byron Chin, Nicholas Genise, Rex Lei, Shachar Lovett, Toniann Pitassi, Mark Schultz, and Saket Shah for many helpful discussions and thoughtful comments.

\newpage
\bibliographystyle{alpha}
\bibliography{references}

\newcommand{\etalchar}[1]{$^{#1}$}
\begin{thebibliography}{DPVWV24}

\bibitem[AKKV05]{alekhnovich2005hardness}
Mikhail Alekhnovich, Subhash~A Khot, Guy Kindler, and Nisheeth~K Vishnoi.
\newblock Hardness of approximating the closest vector problem with pre-processing.
\newblock In {\em 46th Annual IEEE Symposium on Foundations of Computer Science (FOCS'05)}, pages 216--225. IEEE, 2005.

\bibitem[AV22]{DBLP:conf/nips/AlabiV22}
Daniel Alabi and Salil~P. Vadhan.
\newblock Hypothesis testing for differentially private linear regression.
\newblock In {\em NeurIPS}, 2022.

\bibitem[Bak16]{baker2016}
Monya Baker.
\newblock 1,500 scientists lift the lid on reproducibility.
\newblock {\em Nature}, 533(7604), 2016.

\bibitem[Bar98]{barthe1998reverse}
Franck Barthe.
\newblock On a reverse form of the brascamp-lieb inequality.
\newblock {\em Inventiones mathematicae}, 134(2):335--361, 1998.

\bibitem[BBKN14]{beimel2014bounds}
Amos Beimel, Hai Brenner, Shiva~Prasad Kasiviswanathan, and Kobbi Nissim.
\newblock Bounds on the sample complexity for private learning and private data release.
\newblock {\em Machine learning}, 94:401--437, 2014.

\bibitem[BDKT12]{bhaskara2012unconditional}
Aditya Bhaskara, Daniel Dadush, Ravishankar Krishnaswamy, and Kunal Talwar.
\newblock Unconditional differentially private mechanisms for linear queries.
\newblock In {\em Proceedings of the forty-fourth annual ACM symposium on Theory of computing}, pages 1269--1284, 2012.

\bibitem[BE12]{begley2012raise}
C~Glenn Begley and Lee~M Ellis.
\newblock Raise standards for preclinical cancer research.
\newblock {\em Nature}, 483(7391):531--533, 2012.

\bibitem[BGH{\etalchar{+}}23]{DBLP:conf/stoc/BunGHILPSS23}
Mark Bun, Marco Gaboardi, Max Hopkins, Russell Impagliazzo, Rex Lei, Toniann Pitassi, Satchit Sivakumar, and Jessica Sorrell.
\newblock Stability is stable: Connections between replicability, privacy, and adaptive generalization.
\newblock In {\em Proceedings of the 55th Annual {ACM} Symposium on Theory of Computing, {STOC}}, pages 520--527. {ACM}, 2023.

\bibitem[BH95]{benjaminihochberg}
Yoav Benjamini and Yosef Hochberg.
\newblock Controlling the false discovery rate: a practical and powerful approach to multiple testing.
\newblock {\em Journal of the Royal statistical society: series B (Methodological)}, 57(1):289--300, 1995.

\bibitem[BJV04]{beyond_lc}
Thomas Baigneres, Pascal Junod, and Serge Vaudenay.
\newblock How far can we go beyond linear cryptanalysis?
\newblock In {\em International Conference on the Theory and Application of Cryptology and Information Security}, pages 432--450. Springer, 2004.

\bibitem[BM88]{bierstone1988semianalytic}
Edward Bierstone and Pierre~D Milman.
\newblock Semianalytic and subanalytic sets.
\newblock {\em Publications Math{\'e}matiques de l'IH{\'E}S}, 67:5--42, 1988.

\bibitem[Bon35]{bonferroni1935calcolo}
Carlo~E Bonferroni.
\newblock Il calcolo delle assicurazioni su gruppi di teste.
\newblock {\em Studi in onore del professore salvatore ortu carboni}, pages 13--60, 1935.

\bibitem[But72]{butler1972simultaneous}
GJ~Butler.
\newblock Simultaneous packing and covering in euclidean space.
\newblock {\em Proceedings of the London Mathematical Society}, 3(4):721--735, 1972.

\bibitem[CCMY23]{CCMY23}
Zachary Chase, Bogdan Chornomaz, Shay Moran, and Amir Yehudayoff.
\newblock Local borsuk-ulam, stability, and replicability.
\newblock {\em CoRR}, abs/2311.01599, 2023.

\bibitem[CFB19]{cherapanamjeri2019fast}
Yeshwanth Cherapanamjeri, Nicolas Flammarion, and Peter~L Bartlett.
\newblock Fast mean estimation with sub-gaussian rates.
\newblock In {\em Conference on Learning Theory}, pages 786--806. PMLR, 2019.

\bibitem[CIO15]{cardinal2015solving}
Jean Cardinal, John Iacono, and Aur{\'e}lien Ooms.
\newblock Solving $ k $-sum using few linear queries.
\newblock {\em arXiv preprint arXiv:1512.06678}, 2015.

\bibitem[CMY23]{CMY23}
Zachary Chase, Shay Moran, and Amir Yehudayoff.
\newblock Replicability and stability in learning.
\newblock {\em CoRR}, abs/2304.03757, 2023.

\bibitem[DK16]{diakonikolas2016testing}
Ilias Diakonikolas and Daniel~M Kane.
\newblock A new approach for testing properties of discrete distributions.
\newblock In {\em 2016 IEEE 57th Annual Symposium on Foundations of Computer Science (FOCS)}, pages 685--694. IEEE, 2016.

\bibitem[DK23]{diakonikolas2023algorithmic}
Ilias Diakonikolas and Daniel~M Kane.
\newblock {\em Algorithmic high-dimensional robust statistics}.
\newblock Cambridge university press, 2023.

\bibitem[DKS98]{dinur1998approximating}
Irit Dinur, Guy Kindler, and Shmuel Safra.
\newblock Approximating-cvp to within almost-polynomial factors is np-hard.
\newblock In {\em Proceedings 39th Annual Symposium on Foundations of Computer Science (Cat. No. 98CB36280)}, pages 99--109. IEEE, 1998.

\bibitem[DPVWV24]{dixon2024list}
Peter Dixon, A~Pavan, Jason Vander~Woude, and NV~Vinodchandran.
\newblock List and certificate complexities in replicable learning.
\newblock {\em Advances in Neural Information Processing Systems}, 36, 2024.

\bibitem[DR24]{dadush2024strongly}
Daniel Dadush and Akshay Ramachandran.
\newblock Strongly polynomial frame scaling to high precision.
\newblock In {\em Proceedings of the 2024 Annual ACM-SIAM Symposium on Discrete Algorithms (SODA)}, pages 962--981. SIAM, 2024.

\bibitem[DRSD14]{dadush2014closest}
Daniel Dadush, Oded Regev, and Noah Stephens-Davidowitz.
\newblock On the closest vector problem with a distance guarantee.
\newblock In {\em 2014 IEEE 29th Conference on Computational Complexity (CCC)}, pages 98--109. IEEE, 2014.

\bibitem[DSW14]{dvir2014breaking}
Zeev Dvir, Shubhangi Saraf, and Avi Wigderson.
\newblock Breaking the quadratic barrier for 3-lcc's over the reals.
\newblock In {\em Proceedings of the forty-sixth annual ACM symposium on Theory of computing}, pages 784--793, 2014.

\bibitem[DTK23]{diakonikolas2023strongly}
Ilias Diakonikolas, Christos Tzamos, and Daniel~M Kane.
\newblock A strongly polynomial algorithm for approximate forster transforms and its application to halfspace learning.
\newblock In {\em Proceedings of the 55th Annual ACM Symposium on Theory of Computing}, pages 1741--1754, 2023.

\bibitem[EHKS23]{eaton2023replicable}
Eric Eaton, Marcel Hussing, Michael Kearns, and Jessica Sorrell.
\newblock Replicable reinforcement learning.
\newblock {\em arXiv preprint arXiv:2305.15284}, 2023.

\bibitem[EKK{\etalchar{+}}23]{DBLP:conf/iclr/EsfandiariKK0MV23}
Hossein Esfandiari, Alkis Kalavasis, Amin Karbasi, Andreas Krause, Vahab Mirrokni, and Grigoris Velegkas.
\newblock Replicable bandits.
\newblock In {\em The Eleventh International Conference on Learning Representations, {ICLR} 2023, Kigali, Rwanda, May 1-5, 2023}. OpenReview.net, 2023.

\bibitem[EKM{\etalchar{+}}23]{EKMVZ23}
Hossein Esfandiari, Amin Karbasi, Vahab Mirrokni, Grigoris Velegkas, and Felix Zhou.
\newblock Replicable clustering.
\newblock {\em CoRR}, abs/2302.10359, 2023.

\bibitem[ES19]{ezra2019nearly}
Esther Ezra and Micha Sharir.
\newblock A nearly quadratic bound for point-location in hyperplane arrangements, in the linear decision tree model.
\newblock {\em Discrete \& Computational Geometry}, 61:735--755, 2019.

\bibitem[For02]{forster2002linear}
J{\"u}rgen Forster.
\newblock A linear lower bound on the unbounded error probabilistic communication complexity.
\newblock {\em Journal of Computer and System Sciences}, 65(4):612--625, 2002.

\bibitem[FP85]{fincke1985improved}
Ulrich Fincke and Michael Pohst.
\newblock Improved methods for calculating vectors of short length in a lattice, including a complexity analysis.
\newblock {\em Mathematics of computation}, 44(170):463--471, 1985.

\bibitem[Gal03]{gallagher2003low}
Robert~G Gallagher.
\newblock {\em Low-density parity-check codes}.
\newblock MIT-Press, 2003.

\bibitem[GKM21]{DBLP:conf/nips/GhaziKM21}
Badih Ghazi, Ravi Kumar, and Pasin Manurangsi.
\newblock User-level differentially private learning via correlated sampling.
\newblock In {\em Advances in Neural Information Processing Systems 34}, pages 20172--20184, 2021.

\bibitem[GLRV16]{DBLP:conf/icml/RogersVLG16}
Marco Gaboardi, Hyun{-}Woo Lim, Ryan~M. Rogers, and Salil~P. Vadhan.
\newblock Differentially private chi-squared hypothesis testing: Goodness of fit and independence testing.
\newblock In {\em 33nd International Conference on Machine Learning, {ICML}}, 2016.

\bibitem[Hal01]{hales2001honeycomb}
Thomas~C Hales.
\newblock The honeycomb conjecture.
\newblock {\em Discrete \& computational geometry}, 25:1--22, 2001.

\bibitem[HCEVD15]{halsey2015fickle}
Lewis~G Halsey, Douglas Curran-Everett, Sarah~L Vowler, and Gordon~B Drummond.
\newblock The fickle p value generates irreproducible results.
\newblock {\em Nature methods}, 12(3):179--185, 2015.

\bibitem[HKL20]{hopkins2020power}
Max Hopkins, Daniel Kane, and Shachar Lovett.
\newblock The power of comparisons for actively learning linear classifiers.
\newblock {\em Advances in Neural Information Processing Systems}, 33:6342--6353, 2020.

\bibitem[HKLM20]{hopkins2020point}
Max Hopkins, Daniel Kane, Shachar Lovett, and Gaurav Mahajan.
\newblock Point location and active learning: Learning halfspaces almost optimally.
\newblock In {\em 2020 IEEE 61st Annual Symposium on Foundations of Computer Science (FOCS)}, pages 1034--1044. IEEE, 2020.

\bibitem[Hol79]{holm1979}
Sture Holm.
\newblock A simple sequentially rejective multiple test procedure.
\newblock {\em Scandinavian journal of statistics}, pages 65--70, 1979.

\bibitem[HT10]{hardt2010geometry}
Moritz Hardt and Kunal Talwar.
\newblock On the geometry of differential privacy.
\newblock In {\em Proceedings of the forty-second ACM symposium on Theory of computing}, pages 705--714, 2010.

\bibitem[ILPS22]{impagliazzo2022reproducibility}
Russell Impagliazzo, Rex Lei, Toniann Pitassi, and Jessica Sorrell.
\newblock Reproducibility in learning.
\newblock In {\em {STOC} '22: 54th Annual {ACM} {SIGACT} Symposium on Theory of Computing, Rome, Italy, June 20 - 24, 2022}, pages 818--831. {ACM}, 2022.

\bibitem[Ioa05a]{ioannidis2005contradicted}
John~PA Ioannidis.
\newblock Contradicted and initially stronger effects in highly cited clinical research.
\newblock {\em Jama}, 294(2):218--228, 2005.

\bibitem[Ioa05b]{ioannidis2005most}
John~PA Ioannidis.
\newblock Why most published research findings are false.
\newblock {\em PLoS medicine}, 2(8):e124, 2005.

\bibitem[Kan83]{kannan1983improved}
Ravi Kannan.
\newblock Improved algorithms for integer programming and related lattice problems.
\newblock In {\em Proceedings of the fifteenth annual ACM symposium on Theory of computing}, pages 193--206, 1983.

\bibitem[KIYK23]{komiyama2023improved}
Junpei Komiyama, Shinji Ito, Yuichi Yoshida, and Souta Koshino.
\newblock Improved algorithms for replicable bandits.
\newblock 2023.

\bibitem[KKL{\etalchar{+}}24]{kalavasis2024replicable}
Alkis Kalavasis, Amin Karbasi, Kasper~Green Larsen, Grigoris Velegkas, and Felix Zhou.
\newblock Replicable learning of large-margin halfspaces.
\newblock {\em arXiv preprint arXiv:2402.13857}, 2024.

\bibitem[KKMV23]{DBLP:conf/icml/KalavasisKMV23}
Alkis Kalavasis, Amin Karbasi, Shay Moran, and Grigoris Velegkas.
\newblock Statistical indistinguishability of learning algorithms.
\newblock In {\em International Conference on Machine Learning, {ICML}}, 2023.

\bibitem[KLM18]{kane2018generalized}
Daniel~M Kane, Shachar Lovett, and Shay Moran.
\newblock Generalized comparison trees for point-location problems.
\newblock {\em arXiv preprint arXiv:1804.08237}, 2018.

\bibitem[KLMZ17]{kane2017active}
Daniel~M Kane, Shachar Lovett, Shay Moran, and Jiapeng Zhang.
\newblock Active classification with comparison queries.
\newblock In {\em 2017 IEEE 58th Annual Symposium on Foundations of Computer Science (FOCS)}, pages 355--366. IEEE, 2017.

\bibitem[KROW12]{kindler2012spherical}
Guy Kindler, Anup Rao, Ryan O'Donnell, and Avi Wigderson.
\newblock Spherical cubes: optimal foams from computational hardness amplification.
\newblock {\em Communications of the ACM}, 55(10):90--97, 2012.

\bibitem[KSGB23]{DBLP:conf/icml/KazanSGB23}
Zeki Kazan, Kaiyan Shi, Adam Groce, and Andrew~P. Bray.
\newblock The test of tests: {A} framework for differentially private hypothesis testing.
\newblock In {\em International Conference on Machine Learning, {ICML}}, 2023.

\bibitem[KVYZ23]{KVYZ23}
Amin Karbasi, Grigoris Velegkas, Lin~F. Yang, and Felix Zhou.
\newblock Replicability in reinforcement learning.
\newblock {\em CoRR}, abs/2305.19562, 2023.

\bibitem[Laa16]{laarhoven2016sieving}
Thijs Laarhoven.
\newblock Sieving for closest lattice vectors (with preprocessing).
\newblock In {\em International conference on selected areas in cryptography}, pages 523--542. Springer, 2016.

\bibitem[Mei93]{meiser1993point}
Stefan Meiser.
\newblock Point location in arrangements of hyperplanes.
\newblock {\em Information and Computation}, 106(2):286--303, 1993.

\bibitem[{Mey}84]{meyer1984polynomial}
Friedhelm {Meyer auf der Heide}.
\newblock A polynomial linear search algorithm for the n-dimensional knapsack problem.
\newblock {\em Journal of the ACM (JACM)}, 31(3):668--676, 1984.

\bibitem[Mic01]{micciancio2001hardness}
Daniele Micciancio.
\newblock The hardness of the closest vector problem with preprocessing.
\newblock {\em IEEE Transactions on Information Theory}, 47(3):1212--1215, 2001.

\bibitem[Mic04]{micciancio2004almost}
Daniele Micciancio.
\newblock Almost perfect lattices, the covering radius problem, and applications to ajtai's connection factor.
\newblock {\em SIAM Journal on Computing}, 34(1):118--169, 2004.

\bibitem[MP21]{mook2021lattice}
Ethan Mook and Chris Peikert.
\newblock Lattice (list) decoding near minkowski’s inequality.
\newblock {\em IEEE Transactions on Information Theory}, 68(2):863--870, 2021.

\bibitem[MSS23]{MSS23}
Shay Moran, Hilla Schefler, and Jonathan Shafer.
\newblock The bayesian stability zoo.
\newblock {\em CoRR}, abs/2310.18428, 2023.

\bibitem[NR23]{naor2023integer}
Assaf Naor and Oded Regev.
\newblock An integer parallelotope with small surface area.
\newblock {\em Journal of Functional Analysis}, 285(10):110122, 2023.

\bibitem[NTZ13]{nikolov2013geometry}
Aleksandar Nikolov, Kunal Talwar, and Li~Zhang.
\newblock The geometry of differential privacy: the sparse and approximate cases.
\newblock In {\em Proceedings of the forty-fifth annual ACM symposium on Theory of computing}, pages 351--360, 2013.

\bibitem[Nuz14]{nuzzo2014statistical}
Regina Nuzzo.
\newblock Statistical errors.
\newblock {\em Nature}, 506(7487):150, 2014.

\bibitem[Reg09]{regev2009lattices}
Oded Regev.
\newblock On lattices, learning with errors, random linear codes, and cryptography.
\newblock {\em Journal of the ACM (JACM)}, 56(6):1--40, 2009.

\bibitem[Rog50]{rogers50}
C.~A. Rogers.
\newblock A note on coverings and packings.
\newblock {\em Journal of the London Mathematical Society}, s1-25(4):327--331, 1950.

\bibitem[San04]{santalo2004integral}
Luis~A Santal{\'o}.
\newblock {\em Integral geometry and geometric probability}.
\newblock Cambridge university press, 2004.

\bibitem[Sha95]{shaffer1995multiple}
Juliet~Popper Shaffer.
\newblock Multiple hypothesis testing.
\newblock {\em Annual review of psychology}, 46(1):561--584, 1995.

\bibitem[{\v{S}}id67]{vsidak1967}
Zbyn{\v{e}}k {\v{S}}id{\'a}k.
\newblock Rectangular confidence regions for the means of multivariate normal distributions.
\newblock {\em Journal of the American Statistical Association}, 62(318):626--633, 1967.

\bibitem[TRO16]{thiese2016p}
Matthew~S Thiese, Brenden Ronna, and Ulrike Ott.
\newblock P value interpretations and considerations.
\newblock {\em Journal of thoracic disease}, 8(9):E928, 2016.

\bibitem[Tuk49]{tukey1949}
John~W Tukey.
\newblock Comparing individual means in the analysis of variance.
\newblock {\em Biometrics}, pages 99--114, 1949.

\bibitem[Vad17]{vadhan2017complexity}
Salil Vadhan.
\newblock The complexity of differential privacy.
\newblock {\em Tutorials on the Foundations of Cryptography: Dedicated to Oded Goldreich}, pages 347--450, 2017.

\bibitem[vEB81]{van1981another}
Peter van Emde~Boas.
\newblock Another np-complete problem and the complexity of computing short vectors in a lattice.
\newblock {\em Tecnical Report, Department of Mathmatics, University of Amsterdam}, 1981.

\bibitem[W{\etalchar{+}}13]{wellner2013weak}
Jon Wellner et~al.
\newblock {\em Weak convergence and empirical processes: with applications to statistics}.
\newblock Springer Science \& Business Media, 2013.

\bibitem[Wil19]{wilson2019hmp}
Daniel~J Wilson.
\newblock The harmonic mean p-value for combining dependent tests.
\newblock {\em Proceedings of the National Academy of Sciences}, 116(4):1195--1200, 2019.

\end{thebibliography}

\newpage
\appendix

\section{Stronger Lower Bounds for the Replicable \texorpdfstring{$N$}{N}-Coin Problem}
\label{app:replicable-lb-isoperimetry}

We give two stronger lower bounds mean estimation and the $N$-Coin problem, the first removing the $\delta \leq \rho$ requirement, and the second directly analyzing the $\ell_\infty$ setting for any value of $\varepsilon$.

\subsection{Lower Bounds for Learning Biases}
\label{app:replicable-l-2-lb}

We begin with the lower bound for the $\ell_{2}$ Learning $N$-Coin Problem.

\lTwoLearnNCoinLB*

Above using the assumption $\delta \leq \rho$, we argued that any replicable algorithm induces a partition of the cube $\cube = \left[\frac{1}{4}, \frac{3}{4}\right]^{N}$ such that each part is entirely contained in a ball of radius $2 \varepsilon$.
Removing this assumption, we still show in \Cref{lemma:learn-f-p-l-properties} that the replicable algorithm induces partitions of $\cube$ such that at most a $\delta$-fraction of the points in the partition are not contained in a ball of radius $\varepsilon$.
In particular, a replicable algorithm induces partitions with small diameter, as long as some degree of error is allowed in the partition.

\begin{proof}
    As in \Cref{thm:replicable-implies-isoperimetry}, assume without loss of generality all fixed sample sizes $m_i \leq \frac{2m}{N}$, where $m$ is the total sample complexity of algorithm $\innerAlg$.
    
    Again, define $\cube = \left[ \frac{1}{4}, \frac{3}{4} \right]^{N}$ and define the adversary to sample a bias vector $p \in \cube$ uniformly.
    Fix a good random string $r$ (\Cref{def:good-random-string}).
    
    As before, apply \Cref{lemma:replicable-lipschitz} and define $R = \bigTh{\sqrt{N/m}}$ such that for any $p, q \in \cube$ satisfying $\norm{p - q}{2} \leq R$ and $\hat{p}$,
    \begin{equation*}
        |\Pr_{S_p}(\innerAlg(S_p; r) = \hat{p}) - \Pr_{S_q}(\innerAlg(S_q; r) = \hat{p})| < \frac{1}{15}.
    \end{equation*}
    Note that we set $R = \bigTh{\sqrt{N/m}}$ instead of $R = \bigTh{1/\sqrt{m}}$ since we are now in the coordinate sampling model.
    However, the proof of \Cref{lemma:replicable-lipschitz} holds identically under the assumptions $m_i \leq \frac{2m}{N}$ for all coins $i$.
    
    Then, we use $\innerAlg(; r)$ to define partitions of the cube $\cube$ as in \Cref{def:f-p-hat} for all $\ell \in [0, R]$.

    Whenever $p \in \bigcup_{\hat{p}} \boundary F_{\hat{p}, \ell}$, $\innerAlg(; r)$ is not replicable given samples from $p$ with probability at least $\frac{3}{8}$ since $p \not\in F_{\hat{p}}$ (otherwise $p \in F_{\hat{p}} = \interior(F_{\hat{p}}) \subset \interior(F_{\hat{p}, \ell})$ is not on the boundary). 
    Then
    \begin{equation*}
        \frac{3}{8} \vol \left( \bigcup_{\ell} \bigcup_{\hat{p}} \boundary F_{\hat{p}, \ell} \cap \cube \right) \leq 3 \rho \cdot \vol(\cube).
    \end{equation*}
    Furthermore, the sets $\boundary F_{\hat{p}, \ell}$ are disjoint for all distinct $\hat{p}, \ell$ by applying the following lemma to the set $F_{\hat{p}}$.

    \begin{restatable}{lemma}{fplProperties}
        \label{lemma:learn-f-p-l-properties}
        Let $F_{\hat{p}, \ell} = F_{\hat{p}} + B_{\ell}$ be the collection of subsets indexed by $\hat{p}$ with non-empty $F_{\hat{p}}$.
        Then, the following properties hold:
        \begin{enumerate}
            \item (Disjoint) $\closure \left(F_{\hat{p_1}, \ell} \cap \cube\right) \cap \closure \left(F_{\hat{p_2}, \ell} \cap \cube\right) = \emptyset$ for all $\hat{p_1} \neq \hat{p_2}$.
            \label{item:learn-f-p-l-properties:disjoint}
    
            \item (Large Total Volume) 
            \begin{equation*}
                \vol \left( \bigcup_{\hat{p}} F_{\hat{p}, \ell} \cap \cube \right) = \sum_{\hat{p}} \vol \left( F_{\hat{p}, \ell} \cap \cube \right) \geq (1 - 8 \rho) \vol(\cube).
            \end{equation*}
            \label{item:learn-f-p-l-properties:total-volume}
    
            \item (Small Total Error) 
            \begin{equation*}
                \frac{\vol \left( \bigcup_{\hat{p}} (F_{\hat{p}, \ell} \cap \cube) \setminus B_{\varepsilon}(\hat{p}) \right)}{\vol(\cube)} = \frac{\sum_{\hat{p}} \vol \left( (F_{\hat{p}, \ell} \cap \cube) \setminus B_{\varepsilon}(\hat{p}) \right)}{\vol(\cube)} \leq 5 \delta.
            \end{equation*}
            \label{item:learn-f-p-l-properties:total-error}
    
            \item There are finitely many non-empty $F_{\hat{p}}$ and the number of non-empty $F_{\hat{p}} \cap \cube$ is at least,
            \begin{equation*}
                \frac{(1 - 8 \rho - 5 \delta) \vol(\cube)}{\varepsilon^{N} \vol(B_1)}.
            \end{equation*}
            \label{item:learn-f-p-l-properties:finite-bound}
        \end{enumerate}
    \end{restatable}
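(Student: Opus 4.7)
The plan is to prove the four properties in order, reusing the Lipschitz bound (\Cref{lemma:replicable-lipschitz}) and the two global guarantees afforded by $r$ being good (\Cref{def:good-random-string}). Throughout, set $R = c/\sqrt{m}$ (or $c\sqrt{N/m}$ in the coordinate model) from \Cref{lemma:replicable-lipschitz}, so that for all $p, q \in \cube$ with $\snorm{2}{p-q} \leq R$, the acceptance probabilities $h_{\hat{p}}(p)$ and $h_{\hat{p}}(q)$ differ by less than $1/15$. We will only use $\ell \leq R$, so every point of $\closure(F_{\hat{p},\ell})$ lies within distance $\ell \leq R$ of some point of $\closure(F_{\hat{p}})$.

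For Property~\ref{item:learn-f-p-l-properties:disjoint}, I would first note that for any $x \in \closure(F_{\hat{p},\ell})$, the Lipschitz bound gives $h_{\hat{p}}(x) \geq 3/4 - 1/15 = 41/60 > 1/2$ (using continuity of $h_{\hat{p}}$ at the nearby point in $\closure(F_{\hat{p}})$, where $h_{\hat{p}} \geq 3/4$). If some $x$ lay in the closures corresponding to two distinct labels $\hat{p}_1 \neq \hat{p}_2$, then $h_{\hat{p}_1}(x) + h_{\hat{p}_2}(x) > 1$, contradicting the fact that the $\{h_{\hat{p}}\}$ are probabilities on disjoint outcomes. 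Disjointness of closures immediately gives the sum-equals-union statement on volumes (and analogously on ``volume of difference with $B_\varepsilon(\hat{p})$'') used later.

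For Property~\ref{item:learn-f-p-l-properties:total-volume}, since $F_{\hat{p}} \subset F_{\hat{p},\ell}$, it suffices to lower bound $\vol(\bigcup F_{\hat{p}} \cap \cube)$. For any $p \in \cube \setminus \bigcup F_{\hat{p}}$, \Cref{lemma:complement-not-replicable} says $\innerAlg(;r)$ fails replicability on $p$ with probability at least $3/8$. Integrating against the uniform $p \in \cube$ and using the global replicability bound $3\rho$ from goodness gives $\vol(\cube \setminus \bigcup F_{\hat{p}})/\vol(\cube) \leq 8\rho$, which yields the stated $(1-8\rho)$ fraction.

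For Property~\ref{item:learn-f-p-l-properties:total-error}, I would take any $p \in F_{\hat{p},\ell}$ and apply the Lipschitz argument used in Property~\ref{item:learn-f-p-l-properties:disjoint} to conclude $h_{\hat{p}}(p) \geq 41/60$. If moreover $p \notin B_\varepsilon(\hat{p})$, then with probability at least $41/60$ the algorithm outputs $\hat{p}$, which is $\varepsilon$-far from $p$, so $\innerAlg(;r)$ is incorrect on $p$ with probability at least $41/60$. Integrating this against the uniform prior on $\cube$ and using the global correctness bound $3\delta$ from goodness gives $\vol\left(\bigcup_{\hat{p}} (F_{\hat{p},\ell} \cap \cube) \setminus B_\varepsilon(\hat{p})\right)/\vol(\cube) \leq (60/41)\cdot 3\delta < 5\delta$, using disjointness from Property~\ref{item:learn-f-p-l-properties:disjoint} to turn the sum into a volume.

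For Property~\ref{item:learn-f-p-l-properties:finite-bound}, finiteness follows because $\innerAlg$ runs on at most $2^m$ distinct sample sequences (for fixed $r$), so there are at most $2^m$ distinct canonical outputs $\hat{p}$ and hence at most $2^m$ non-empty $F_{\hat{p}}$. For the count, combine Properties~\ref{item:learn-f-p-l-properties:total-volume} and~\ref{item:learn-f-p-l-properties:total-error} at $\ell = 0$: disjointness gives
\[
\sum_{\hat{p}} \vol\bigl(F_{\hat{p}} \cap \cube \cap B_\varepsilon(\hat{p})\bigr) \;\geq\; \vol\bigl(\textstyle\bigcup_{\hat{p}} F_{\hat{p}} \cap \cube\bigr) - \vol\bigl(\textstyle\bigcup_{\hat{p}} (F_{\hat{p}} \cap \cube) \setminus B_\varepsilon(\hat{p})\bigr) \;\geq\; (1 - 8\rho - 5\delta)\vol(\cube),
\]
and each summand is bounded above by $\vol(B_\varepsilon) = \varepsilon^N \vol(B_1)$, giving the claimed lower bound on the number of non-empty $F_{\hat{p}} \cap \cube$.

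The main obstacle is being careful about the constants in Property~\ref{item:learn-f-p-l-properties:total-error}: the $5\delta$ bound (rather than the naive $6\delta$) depends on pushing the Lipschitz constant down from $3/4$ to $41/60$ on $\closure(F_{\hat{p},\ell})$, which requires using closure points correctly, so I would take care to justify the passage from the open set $F_{\hat{p}}$ to its closure before invoking continuity of $h_{\hat{p}}$.
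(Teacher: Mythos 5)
Your proof is correct and follows the same path as the paper's for all four properties; the argument structure (Lipschitz bound to push acceptance probability from $3/4$ on $F_{\hat p}$ down to a constant $>1/2$ on $\closure(F_{\hat p,\ell})$, then integrating the pointwise non-replicability/incorrectness against the uniform prior on $\cube$ and invoking the two goodness bounds) is identical. The only genuine difference is your finiteness argument in Property~\ref{item:learn-f-p-l-properties:finite-bound}: you count the at-most-$2^m$ distinct deterministic outputs of $\innerAlg(;r)$, whereas the paper argues that the canonical outputs $\hat p$ of distinct non-empty $F_{\hat p}$ must be $\Omega(\sqrt{N/m})$-separated and appeals to compactness of $\cube$. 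Your counting route is cleaner and buys the same conclusion with no metric reasoning, while the paper's separation argument also records a packing bound that it doesn't end up needing in this lemma; either is fine. One cosmetic note: the paper rounds the Lipschitz-shifted constant $3/4 - 1/15 = 41/60$ down to $3/5$ before the integration step, so the published inequality is $3\delta \geq \tfrac{3}{5}\cdot\Pr_p(\cdot)$; your sharper $41/60$ gives the slightly better $\tfrac{180}{41}\delta$ but both bound by the stated $5\delta$.
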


    Thus, to lower bound the volume of the left hand side above, it suffices to lower bound the surface area $\boundary F_{\hat{p}, \ell} \cap \cube$.
    \Cref{lemma:learn-f-p-l-properties} shows that the sets satisfy certain properties that force the partition to have large surface area.
    Using these properties, in Lemma \ref{lemma:learn-f-p-l-total-area} we argue that the sets $F_{\hat{p}, \ell}$ have large surface area.

    \begin{restatable}{lemma}{fplTotalArea}
        \label{lemma:learn-f-p-l-total-area}
        Let $\cube = [1/4, 3/4]^{N}$ and $F_{\hat{p}, \ell}$ be a collection of sets satisfying the properties of \Cref{lemma:learn-f-p-l-properties}.
        Then
        \begin{equation*}
            \area \left( \bigcup_{\hat{p}} \boundary F_{\hat{p}, \ell} \cap \cube \right) \geq \left( \frac{1}{4 \varepsilon} - 4 \right) N \vol(\cube).
        \end{equation*}
    \end{restatable}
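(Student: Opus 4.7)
My plan is to control the total surface area by applying the radius-bounded version of the isoperimetric inequality (the second corollary after \Cref{lemma:isoperimetric-inequality}) to suitable ball-truncated sub-pieces of each $F_{\hat{p},\ell}$. Concretely, I would fix a radius $r^* \in [\varepsilon, 2\varepsilon]$ chosen via an averaging argument described below, and set $G_{\hat{p}} = F_{\hat{p},\ell} \cap \cube \cap B_{r^*}(\hat{p})$. Each $G_{\hat{p}}$ lies inside a ball of radius $r^*$, so the isoperimetric corollary gives $\area(\boundary G_{\hat{p}}) \geq (N/r^*)\vol(G_{\hat{p}})$. Summing, Properties~\ref{item:learn-f-p-l-properties:total-volume} and~\ref{item:learn-f-p-l-properties:total-error} of \Cref{lemma:learn-f-p-l-properties} together give $\sum_{\hat{p}} \vol(G_{\hat{p}}) \geq (1 - 8\rho - 5\delta)\vol(\cube)$, which under $\rho,\delta<1/16$ is a positive constant fraction of $\vol(\cube)$.

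Next I would relate $\boundary G_{\hat{p}}$ to the surface we actually care about. It decomposes as a subset of three pieces: a portion of $\boundary F_{\hat{p},\ell} \cap \cube$ (what we want to lower bound), a portion of $F_{\hat{p},\ell} \cap \boundary \cube$ (the cube boundary), and a portion of $F_{\hat{p},\ell} \cap \cube \cap \boundary B_{r^*}(\hat{p})$ (a spherical cap). Summing the three-term decomposition and rearranging yields
\begin{equation*}
    \sum_{\hat{p}} \area(\boundary F_{\hat{p},\ell} \cap \cube) \geq \frac{N}{r^*} \sum_{\hat{p}} \vol(G_{\hat{p}}) - \sum_{\hat{p}} \area(F_{\hat{p},\ell} \cap \boundary \cube) - \sum_{\hat{p}} \area(F_{\hat{p},\ell} \cap \cube \cap \boundary B_{r^*}(\hat{p})).
\end{equation*}
The cube-boundary term is bounded by $\area(\boundary \cube) = 4N\vol(\cube)$ using the disjointness of the $F_{\hat{p},\ell}$ (Property~\ref{item:learn-f-p-l-properties:disjoint}).

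The main obstacle is controlling the total spherical cap area; I expect to handle it via a co-area averaging trick exploiting the small total error volume. For each $\hat{p}$, the intersection of $F_{\hat{p},\ell} \cap \cube$ with the annulus $B_{2\varepsilon}(\hat{p}) \setminus B_\varepsilon(\hat{p})$ is contained in $(F_{\hat{p},\ell} \cap \cube) \setminus B_\varepsilon(\hat{p})$, so Property~\ref{item:learn-f-p-l-properties:total-error} gives $\sum_{\hat{p}} \vol(F_{\hat{p},\ell} \cap \cube \cap (B_{2\varepsilon}(\hat{p}) \setminus B_\varepsilon(\hat{p}))) \leq 5\delta \vol(\cube)$. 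Rewriting the left-hand side via the co-area formula as $\sum_{\hat{p}} \int_\varepsilon^{2\varepsilon} \area(F_{\hat{p},\ell} \cap \cube \cap \boundary B_r(\hat{p}))\,dr$ and averaging over $r$ produces $r^* \in [\varepsilon, 2\varepsilon]$ with $\sum_{\hat{p}} \area(F_{\hat{p},\ell} \cap \cube \cap \boundary B_{r^*}(\hat{p})) \leq 5\delta\vol(\cube)/\varepsilon$. Substituting this $r^*$ into the displayed inequality and simplifying under $\rho,\delta<1/16$ should yield the claimed bound $(\tfrac{1}{4\varepsilon} - 4)N\vol(\cube)$, possibly after a small tightening of constants (for instance, shrinking the averaging interval around $\varepsilon$ or absorbing the spherical error into the $4N$ correction).
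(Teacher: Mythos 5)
Your approach is genuinely different from the paper's. The paper applies the isoperimetric inequality directly to the (untruncated) sets $F_{\hat{p},\ell}\cap\cube$, parametrizes by an effective radius $r_{\hat p}$ defined through $r_{\hat p}^{N}\vol(B_1)=\vol(F_{\hat p,\ell}\cap\cube)$, and then runs a Lagrangian/KKT optimization (\Cref{lemma:kkt-minimize}) to show the worst case is when all $r_{\hat p}$ are equal, using the volume and error constraints to bound that common value by $\approx 2\varepsilon$. You instead truncate each piece to a ball, invoke the covering-radius corollary of \Cref{lemma:isoperimetric-inequality}, decompose the truncated boundary into desired surface $+$ cube boundary $+$ spherical cap, and kill the cap term by a co-area averaging argument. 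Your route is in some ways cleaner since it avoids the KKT step.

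However there is a real gap in the constants that is not a ``small tightening''. Because you truncate to $B_{r^*}(\hat p)$, your volume lower bound is $\sum_{\hat p}\vol(G_{\hat p})\geq (1-8\rho-5\delta)\vol(\cube)$, not $(1-8\rho)\vol(\cube)$. The paper keeps the full $(1-8\rho)$ factor because its isoperimetric bound applies to all of $F_{\hat p,\ell}\cap\cube$; the error $5\delta$ only enters through the constraint on $r_0$, not through the volume. With $\rho,\delta<1/16$ the paper's product $\frac{1}{r_0}(1-8\rho)$ can reach $\frac{1}{2\varepsilon}\cdot\frac{1}{2}=\frac{1}{4\varepsilon}$. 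Yours, even at the most favorable $r^*=\varepsilon$, gives $\frac{1}{\varepsilon}(1-8\rho-5\delta)\geq\frac{3}{16\varepsilon}$, which is strictly less than $\frac{4}{16\varepsilon}=\frac{1}{4\varepsilon}$. Once you also add the spherical-cap correction $\frac{5\delta}{(r^*-\varepsilon)}\vol(\cube)$ it only gets worse: optimizing the averaging window $[\varepsilon,\varepsilon+s]$ over $s$ gives, for large $N$, a main term of roughly $\frac{(1-8\rho-5\delta)N}{\varepsilon}\vol(\cube)-\frac{2\sqrt{(1-8\rho-5\delta)\cdot 5\delta\cdot N}}{\varepsilon}\vol(\cube)$, still bounded by $\frac{3N}{16\varepsilon}\vol(\cube)$. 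So your argument proves a weaker statement with leading constant $\leq\frac{3}{16}$ in place of $\frac{1}{4}$, and no choice of averaging interval recovers the stated bound. To fully match the lemma you would need some way to reclaim the $5\delta$ volume lost to truncation, which is precisely what the paper's radius-parametrized KKT argument achieves.
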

    
    Since $\boundary F_{\hat{p}, \ell}$ are disjoint for distinct $\hat{p}, \ell$ by \Cref{lemma:learn-f-p-l-properties},
    \begin{align*}
        \vol\left( \bigcup_{\ell} \bigcup_{\hat{p}} \boundary F_{\hat{p}, \ell} \cap \cube \right) &= \int_{0}^{R} \area \left( \bigcup_{\hat{p}} \boundary F_{\hat{p}, \ell} \cap \cube \right) d\ell \\
        &= \bigOm{\frac{R N}{\varepsilon} \vol(\cube)} \\
        &= \bigOm{\frac{N^{3/2}}{\varepsilon \sqrt{m}} \vol(\cube)}
    \end{align*}
    by plugging in $R = \bigTh{\sqrt{N/m}}$ and $\varepsilon < \frac{1}{17}$.
    In particular, whenever $r$ is a good random string, the probability a uniformly chosen $p$ falls into $\boundary F_{\hat{p}, \ell}$ for some $\hat{p}, \ell$ is at least $\bigOm{\frac{N^{3/2}}{\varepsilon \sqrt{m}}}$.
    Since $\innerAlg(; r)$ is replicable with probability at least $1 - 3 \rho$,
    \begin{equation*}
        \frac{N^{3/2}}{\varepsilon \sqrt{m}} = O(\rho)
    \end{equation*}
    or equivalently $m = \bigOm{\frac{N^3}{\varepsilon^2 \rho^2}}$.
    This concludes the proof of Theorem \ref{thm:l-2-learning-n-coin-lb}.
\end{proof}

\paragraph{Surface Area of Partition}
First, we show that the $F_{\hat{p}, \ell}$ sets satisfy certain useful properties.

\fplProperties*

\begin{proof}
    Note that for every $p \in \closure(F_{\hat{p}, \ell})$, $\Pr(\innerAlg(S_{p}; r) = \hat{p}) \geq \frac{3}{5}$, so that $\closure(F_{\hat{p}, \ell})$ are disjoint for distinct $\hat{p}$, thus proving Property \ref{item:learn-f-p-l-properties:disjoint}. 

    We now prove Property \ref{item:learn-f-p-l-properties:total-volume}.
    Suppose $\vol \left( \bigcup_{\hat{p}} F_{\hat{p}} \cap \cube \right) < (1 - 8 \rho) \vol(\cube)$.
    Our adversary chooses $p \in \cube$ uniformly at random, so that $p \not\in \bigcup_{\hat{p}} F_{\hat{p}} \cap \cube$ with probability at least $8 \rho$.
    By Lemma \ref{lemma:complement-not-replicable}, if $p \not\in \bigcup_{\hat{p}} F_{\hat{p}}$, $\innerAlg(; r)$ is not replicable on samples from $p$ with probability at least $\frac{3}{8}$.
    However, since $r$ is a good random string and satisfies replicability, this implies $3 \rho < 3 \rho$, a contradiction.
    The property then follows for all $\ell \geq 0$ from observing $F_{\hat{p}} \subset F_{\hat{p}, \ell}$.

    We now verify property \ref{item:learn-f-p-l-properties:total-error}.
    Suppose $p \in \bigcup_{\hat{p}} F_{\hat{p}, \ell} \setminus B_{\varepsilon}(\hat{p})$.
    Then, $\Pr_{S_p} \left( \innerAlg(S_p; r) = \hat{p} \right) \geq \frac{3}{5}$ but $p \not\in B_{\varepsilon}(\hat{p})$ implies
    \begin{equation*}
        \Pr_{S_p} \left( p \not\in B_{\varepsilon}(\hat{p}) \right) \geq \frac{3}{5}.
    \end{equation*}
    Since $p \in \cube$ is uniformly chosen,
    \begin{align*}
        3 \delta &\geq \Pr_{p \sim \cube, S_p} \left( p \not\in B_{\varepsilon}(\hat{p}) \right) \\
        &\geq \frac{3}{5} \Pr_{p} \left(p \in \bigcup_{\hat{p}} F_{\hat{p}} \setminus B_{\varepsilon}(\hat{p}) \right),
    \end{align*}
    or equivalently,
    \begin{align*}
        \frac{\vol \left( \bigcup_{\hat{p}} (F_{\hat{p}} \cap \cube) \setminus B_{\varepsilon}(\hat{p}) \right)}{\vol(\cube)} = \Pr_{p} \left(p \in \bigcup_{\hat{p}} F_{\hat{p}} \setminus B_{\varepsilon}(\hat{p}) \right) &\leq 5 \delta.
    \end{align*}

    We now prove Property \ref{item:learn-f-p-l-properties:finite-bound}. 
    To show finitely many $F_{\hat{p}}$ are non-empty, note that every non-empty $F_{\hat{p}}$ is separated by at least $d_0 \sqrt{N/m}$ for some constant $d_0$.
    Note that
    \begin{equation*}
        F_{\hat{p}} \cap \cube = \left( F_{\hat{p}} \cap \cube \cap B_{\varepsilon}(\hat{p}) \right) \cup \left( F_{\hat{p}} \cap \cube \setminus B_{\varepsilon}(\hat{p}) \right)
    \end{equation*}
    and by Property \ref{item:learn-f-p-l-properties:disjoint} these are disjoint for distinct $\hat{p}$.
    We note that since $F_{\hat{p}}$ are finite and disjoint, the sums of the volumes is the volume of the unions.
    Combining Properties \ref{item:learn-f-p-l-properties:total-volume} and \ref{item:learn-f-p-l-properties:total-error}, we have
    \begin{align*}
        \sum_{\hat{p}} \vol \left( F_{\hat{p}} \cap \cube \cap B_{\varepsilon}(\hat{p}) \right) &= \sum_{\hat{p}} \vol \left( F_{\hat{p}} \cap \cube \right) - \vol \left( F_{\hat{p}} \cap \cube \setminus B_{\varepsilon}(\hat{p}) \right) \\
        &\geq \left( 1 - 8 \rho - 5 \delta \right) \vol(\cube).
    \end{align*}
    Let $X$ denote the number of $\hat{p}$ with non-empty $F_{\hat{p}, \ell}$ or equivalently non-empty $F_{\hat{p}}$ so that
    \begin{equation*}
        \sum_{\hat{p}} \vol \left( F_{\hat{p}} \cap \cube \cap B_{\varepsilon}(\hat{p}) \right) \leq X \vol \left(  B_{\varepsilon}(\hat{p}) \right) = X \varepsilon^{N} \vol(B_1).
    \end{equation*}
    Combining, we have
    \begin{equation*}
        X \geq \frac{\left( 1 - 8 \rho - 5 \delta \right) \vol(\cube)}{\varepsilon^{N} \vol(B_1)}.
    \end{equation*}
\end{proof}

Then, we show that all sets satisfying the desired properties have non-negligible surface area.
We require the following lemma regarding minimizing multi-variable functions.

\begin{lemma}
    \label{lemma:kkt-minimize}
    Let $C_f, C_1, C_2$ be constants.
    Let $f, g_1, g_2: \R^{d} \mapsto \R$ be defined:
    \begin{align*}
        f(x) &= C_f \sum_{i = 1}^{d} x_i^{N - 1} \\
        g_1(x) &= \sum_{i = 1}^{d} x_i^{N} - C_1 \\
        g_2(x) &= C_2 - \sum_{i = 1}^{d} x_i^{N}.
    \end{align*}
    Then, if $x^*$ minimizes $f$ subject to constraints $g_1, g_2 \leq 0$, $x_{1}^* = x_{2}^* = \dotsc = x_{d}^*$.
\end{lemma}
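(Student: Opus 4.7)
The plan is to apply the Karush-Kuhn-Tucker (KKT) conditions to this constrained optimization, which is the standard tool for problems of this type. First I would form the Lagrangian
\begin{equation*}
L(x,\lambda_1,\lambda_2) = C_f \sum_{i=1}^d x_i^{N-1} + \lambda_1\Bigl(\sum_{i=1}^d x_i^N - C_1\Bigr) + \lambda_2\Bigl(C_2 - \sum_{i=1}^d x_i^N\Bigr),
\end{equation*}
with dual variables $\lambda_1,\lambda_2 \geq 0$ associated with the two inequality constraints. Differentiating coordinate-wise, stationarity at a minimizer $x^*$ yields, for every $i$,
\begin{equation*}
C_f (N-1)(x_i^*)^{N-2} + (\lambda_1 - \lambda_2)\, N (x_i^*)^{N-1} = 0,
\end{equation*}
which factors as $(x_i^*)^{N-2}\bigl[\,C_f(N-1) + (\lambda_1 - \lambda_2)\, N\, x_i^*\,\bigr] = 0$.

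From this factorization I would conclude that each $x_i^*$ is either $0$ or equal to the common value $c := -C_f(N-1)/\bigl((\lambda_1-\lambda_2)\, N\bigr)$, which is independent of $i$. In other words, the nonzero coordinates of any KKT point share a single value. It remains to rule out the ``mixed'' possibility, i.e.\ that some coordinates vanish while others equal $c$, so as to conclude $x_1^* = \cdots = x_d^* = c$. Complementary slackness forces at least one of $g_1,g_2$ to be active (otherwise $\lambda_1 = \lambda_2 = 0$ reduces stationarity to $(x_i^*)^{N-2}=0$, which is inconsistent with the feasible set when $C_2 > 0$), so the value of $\sum_i (x_i^*)^N$ is pinned to $C_1$ or $C_2$, which in turn pins down the common value $c$ via the number of nonzero coordinates.

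The main obstacle is this final elimination of the mixed case, since the KKT factorization alone permits solutions with some coordinates zero. To finish, I would invoke the additional positivity structure inherited from the application (the $x_i^*$ correspond to positive geometric quantities of nonempty sets in the surrounding lemma), together with a direct comparison of $f$ under the active constraint, which shows that the fully equal configuration is the unique admissible minimizer. Once this step is in place, the equality of all coordinates follows immediately from the stationarity equation.
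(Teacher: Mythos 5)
Your core derivation matches the paper's: form the Lagrangian, take the coordinate-wise stationarity condition
\begin{equation*}
C_f(N-1)(x_i^*)^{N-2} + N(x_i^*)^{N-1}(\mu_1 - \mu_2) = 0,
\end{equation*}
and factor it. The paper's own proof is in fact \emph{less} careful than yours: it goes directly from this identity to $x_i^* = \frac{C_f(N-1)}{N(\mu_1-\mu_2)}$, tacitly dividing by $(x_i^*)^{N-2}$ and by $(\mu_1-\mu_2)$, without ruling out $x_i^* = 0$ or $\mu_1 = \mu_2$ (and also dropping a sign). So you have correctly identified a genuine gap that the published argument glosses over.

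Where your proposal goes wrong is in the claim that a ``direct comparison of $f$ under the active constraint shows that the fully equal configuration is the unique admissible minimizer.'' That comparison actually points the \emph{other} way whenever $N \geq 3$. If $k$ of the coordinates equal a common value $c > 0$ and the rest are $0$, the active constraint fixes $kc^{N} = S$, and then
\begin{equation*}
f = C_f\, k\, c^{N-1} = C_f\, S^{(N-1)/N}\, k^{1/N},
\end{equation*}
which is \emph{increasing} in $k$. So on the nonnegative orthant the minimum is attained at $k=1$ (all the mass concentrated on a single coordinate), not at $k=d$; the all-equal point is a local maximum among the KKT candidates rather than the minimizer. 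Your complementary-slackness observation rules out $\mu_1 = \mu_2 = 0$, but it does not pin down $k$, and the positivity of the underlying radii only forces $x_i^* > 0$ strictly, which turns the $k<d$ configurations into an unattained infimum rather than eliminating them. As stated, the lemma therefore needs an additional hypothesis (for example, that the constraint forcing $\sum x_i^N \ge C_2$ is what binds \emph{and} a per-coordinate positive lower bound, or simply a per-coordinate cap such as $x_i \le \varepsilon$ that is tight) before the all-equal point can be the global minimizer; neither your sketch nor the paper's one-line proof supplies such a justification.
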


\begin{proof}
    Note $x^*$ is a local optimum, so it must satisfy the Karush-Kuhn-Tucker conditions.
    In particular, for all $i$, there exist multipliers $\mu_1, \mu_2$ such that
    \begin{align*}
        0 &= C_f (N - 1) (x_i^*)^{N - 2} + N (x_i^*)^{N - 1} (\mu_1 - \mu_2) \\
        x_i^* &= \frac{C_f (N - 1)}{N (\mu_1 - \mu_2)}.
    \end{align*}
\end{proof}

\fplTotalArea*

\begin{proof}
    For ease of exposition, we drop the $\ell$ and write $F_{\hat{p}}$.
    Note that since we only use the properties required by \Cref{lemma:learn-f-p-l-properties}, the proof also goes through for all $F_{\hat{p}, \ell}$.
    
    For each $\hat{p}$, let $r_{\hat{p}}$ be the radius such that $r_{\hat{p}}^{N} \vol(B_1) = \vol(B_{r_{\hat{p}}}) = \vol(F_{\hat{p}} \cap \cube)$.
    Then, the isoperimetric inequality (\Cref{lemma:isoperimetric-inequality}) implies that
    \begin{align*}
        \area(\boundary (F_{\hat{p}} \cap \cube)) &\geq N \vol(F_{\hat{p}} \cap \cube)^{(N - 1)/N} \vol(B_{1})^{1/N} \\
        &= N \vol(F_{\hat{p}} \cap \cube)^{(N - 1)/N} \left( \frac{1}{r_{\hat{p}}^{N}} \vol(B_{r_{\hat{p}}}) \right)^{1/N} \\
        &= \frac{N}{r_{\hat{p}}} \vol(F_{\hat{p}} \cap \cube) \\
        &= N r_{\hat{p}}^{N - 1} \vol(B_1).
    \end{align*}
    Note that there are only finitely many non-zero $r_{\hat{p}}$ by Property \ref{item:learn-f-p-l-properties:finite-bound}.
    In particular, by summing over the finitely many $\hat{p}$ with non-zero $r_{\hat{p}}$, we obtain
    \begin{equation}
        \label{eq:r-hat-p-minimize}
        \sum_{\hat{p}} \area(\boundary (F_{\hat{p}} \cap \cube)) \geq N \vol(B_1) \sum_{\hat{p}} r_{\hat{p}}^{N - 1}.
    \end{equation}
    
    In order to lower bound this quantity, we show that $r_{
    \hat{p}}$ must satisfy certain constraints.
    In particular, by \Cref{lemma:learn-f-p-l-properties} and Property \ref{item:learn-f-p-l-properties:total-error},
    \begin{align*}
        5 \delta \vol(\cube) &\geq \vol \left( \bigcup_{\hat{p}} (F_{\hat{p}} \cap \cube) \setminus B_{\varepsilon}(\hat{p}) \right) \\
        &= \sum_{\hat{p}} \vol \left( (F_{\hat{p}} \cap \cube) \setminus B_{\varepsilon}(\hat{p}) \right) \\
        &\geq \sum_{\hat{p}} \vol \left( F_{\hat{p}} \cap \cube \right) - \vol \left( B_{\varepsilon}(\hat{p}) \right) \\
        &= \vol(B_1) \sum_{\hat{p}} \left( r_{\hat{p}}^{N} - \varepsilon^{N} \right).
    \end{align*}
    Rearranging, we obtain the constraint
    \begin{equation}
        \label{eq:r-hat-p-constraint-correct}
        \sum_{\hat{p}} r_{\hat{p}}^{N} \leq \frac{5 \delta \vol(\cube)}{\vol(B_1)} + \sum_{\hat{p}} \varepsilon^{N}.
    \end{equation}
    From \Cref{lemma:learn-f-p-l-properties} and Property \ref{item:learn-f-p-l-properties:total-volume} we also obtain the constraint
    \begin{equation}
        \label{eq:r-hat-p-constraint-replicable}
        \vol(B_1) \sum_{\hat{p}} r_{\hat{p}}^{N} = \sum_{\hat{p}} \vol \left( B_{r_{\hat{p}}} \right) = \sum_{\hat{p}} \vol \left( F_{\hat{p}} \cap \cube \right) \geq (1 - 8 \rho) \vol(\cube).
    \end{equation}

    Under the constraint of \Cref{eq:r-hat-p-constraint-correct} and \Cref{eq:r-hat-p-constraint-replicable}, we observe that \Cref{eq:r-hat-p-minimize} is minimized when $r_{\hat{p}}$ are all equal by Lemma \ref{lemma:kkt-minimize}. 
    Thus, to lower bound \Cref{eq:r-hat-p-minimize}, we set all $r_{\hat{p}} = r_0$ for the optimizer $r_0$.
    Then
    \begin{align*}
        N \vol(B_1) \sum_{\hat{p}} r_{\hat{p}}^{N - 1} &\geq N \vol(B_1) \sum_{\hat{p}} r_{0}^{N - 1} \\
        &= \frac{N}{r_0} \vol(B_1) \sum_{\hat{p}} r_{0}^{N} \\
        &\geq \frac{N}{r_0} (1 - 8 \rho) \vol(\cube),
    \end{align*}
    where the last inequality uses the constraint of \Cref{eq:r-hat-p-constraint-replicable}.
    
    Let $X$ denote the number of $\hat{p}$ with non-empty $F_{\hat{p}}$.
    Then, since $r_0$ satisfies the constraint from \Cref{eq:r-hat-p-constraint-correct},
    \begin{equation*}
        r_{0}^{N} \leq \frac{5 \delta \vol(\cube)}{ \vol(B_1) X} + \varepsilon^{N}.
    \end{equation*}
    Now, apply Property \ref{item:learn-f-p-l-properties:finite-bound} so that
    \begin{align*}
        X &\geq \frac{\left( 1 - 8 \rho - 5 \delta \right) \vol(\cube)}{\varepsilon^{N} \vol(B_1)} \\
        \frac{5 \delta \vol(\cube)}{\vol(B_1) X} &\leq \frac{5 \delta}{\left( 1 - 8 \rho - 5 \delta \right)} \varepsilon^{N}.
    \end{align*}
    By our assumption $\rho, \delta < \frac{1}{16}$, we can upper bound $r_{0}^{N} \leq 2 \varepsilon^{N}$.
    
    Thus, we upper bound $r_0 \leq 2^{1/N} \varepsilon \leq 2 \varepsilon$.
    Combining all of the above, we obtain the result
    \begin{equation*}
        \sum_{\hat{p}} \area(\boundary (F_{\hat{p}} \cap \cube)) \geq N \vol(B_1) \sum_{\hat{p}} r_{\hat{p}}^{N - 1} \geq \frac{N}{r_0} \frac{\vol(\cube)}{2} \geq \frac{N}{4 \varepsilon} \vol(\cube),
    \end{equation*}
    where we used \Cref{eq:r-hat-p-minimize} and our above bound on $r_0$.

    Now, we apply Lemma \ref{lemma:boundary-w/o-cube} and use Property \ref{item:learn-f-p-l-properties:disjoint} to observe that
    \begin{align*}
        \sum_{\hat{p}} \area(\boundary F_{\hat{p}} \cap \cube) &\geq \sum_{\hat{p}} \area(\boundary (F_{\hat{p}} \cap \cube)) - \area(F_{\hat{p}} \cap \boundary \cube) \\
        &\geq \frac{N}{2 \varepsilon} \vol(\cube) - \area(\boundary \cube) \\
        &= \frac{N}{4 \varepsilon} \vol(\cube) - 4 N \vol(\cube) \\
        &= \left( \frac{1}{4 \varepsilon} - 4 \right) N \vol(\cube),
    \end{align*}
    where $\area(\boundary \cube) = 2 N \frac{1}{2^{N - 1}} = 4 N \vol(\cube)$.
\end{proof}

\subsection{A Direct Lower Bound for the \texorpdfstring{$N$}{N}-Coin Problem}
\label{app:replicable-l-inf-lb-reflection}

We prove a sample complexity lower bound for non-adaptive $\rho$-replicable algorithms solving the $N$-coin problem. 

\begin{theorem}
    \label{thm:n-coin-lower-const-prob}
    Let $p_0 < q_0$ with $p_0, q_0 \in \left( \frac{1}{4}, \frac{3}{4} \right)$.
    Let $\rho < \frac{1}{10}$ and $\delta < \frac{1}{40 \sqrt{N}}$.
    Any non-adaptive $\rho$-replicable algorithm $\innerAlg$ solving the $(p_0, q_0)$ $N$-Coin Problem requires at least $\bigOm{\frac{N^2}{\rho^2 (q_0 - p_0)^2}}$ coordinate samples.
\end{theorem}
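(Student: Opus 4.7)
The plan is to extend the replicable-to-partition reduction of \Cref{thm:replicable-implies-isoperimetry} from the $\ell_2$-learning setting to the $N$-coin problem, and then combine a reflection-based isoperimetric lower bound with the mutual-information Lipschitz estimate of \Cref{lemma:replicable-lipschitz} to rule out any replicable algorithm with fewer than $\Omega(N^2/(\rho^2(q_0-p_0)^2))$ coordinate samples. As in \Cref{sub-sec:vector-to-non-adaptive}, I would first pass to the WLOG assumption that each coin receives at most $2m/N$ samples (restricting the adversary to the at least $N/2$ under-sampled coins), so that the Lipschitz bound of \Cref{lemma:replicable-lipschitz} yields the scale $R = \Theta(\sqrt{N/m})$ in the $\ell_2$ metric. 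Fixing a distribution-good random string $r$ via Markov (as in \Cref{lemma:non-adapt-learning-n-coin-good}) against a uniform adversary on $\cube = [p_0,q_0]^N$, the canonical sets
\[
F_{\hat o} = \bigset{p \in \cube : \Pr_{S_p}[\innerAlg(S_p;r) = \hat o] > 3/4}, \qquad \hat o \in \{0,1\}^N,
\]
together with their $R$-thickenings cover $(1-O(\rho))\vol(\cube)$ by the same argument as \Cref{lemma:learn-f-p-l-properties}, and replicability forces
\[
\sum_{\hat o} \area(\partial F_{\hat o} \cap \interior(\cube)) \lesssim \frac{\rho}{R}\vol(\cube) = O(\rho\sqrt{m/N}) \cdot (q_0-p_0)^N.
\]
To obtain the desired sample-complexity lower bound, it suffices to show the matching lower bound $\sum_{\hat o} \area(\partial F_{\hat o}\cap \interior(\cube)) \gtrsim (\sqrt{N}/(q_0-p_0))\vol(\cube)$.

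The main obstacle, flagged in \Cref{sub-sec:n-coin-lb-tech}, is that each $F_{\hat o}$ is only $\ell_\infty$-small (radius $q_0-p_0$), so its $\ell_2$-radius can be as large as $\sqrt{N}(q_0-p_0)$, making naive application of the $\ell_2$ isoperimetric inequality to $F_{\hat o} \cap \cube$ useless: the contribution of the cube boundary $\partial \cube$ dominates by a factor of $\sqrt{N}$ and cannot simply be subtracted as in the proof of \Cref{thm:l-2-learning-n-coin-lb}. To circumvent this, I will apply a reflection trick. For each $\hat o \in \{0,1\}^N$, let $c_{\hat o}$ denote the unique corner of $\cube$ where $\hat o$ is the correct answer (with $p_i \in \{p_0,q_0\}$ according to $\hat o_i$), and call the $N$ faces of $\cube$ touching $c_{\hat o}$ its \emph{incident} faces. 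By correctness of $\innerAlg(;r)$ against the uniform adversary, any intersection of $F_{\hat o}$ with a non-incident face corresponds to bias vectors where $\hat o$ is off by a full $(q_0-p_0)$ on at least one coordinate; the total $(N-1)$-volume of such intersections across all $\hat o$ is thus at most $O(\delta) \cdot N \cdot 2^N (q_0-p_0)^{N-1}$, which is $o((q_0-p_0)^{N-1})$ by our hypothesis $\delta < 1/(40\sqrt{N})$ and hence negligible.

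After discarding this negligible mass, each $F_{\hat o}$ touches $\partial\cube$ only on its $N$ incident faces, so we may iteratively reflect $F_{\hat o}$ across each incident face to obtain an extended set $\widetilde F_{\hat o}$ consisting of $2^N$ congruent copies of $F_{\hat o}$ arranged around $c_{\hat o}$. By construction $\widetilde F_{\hat o}$ has $c_{\hat o}$ in its interior, is contained in a cube of side $2(q_0-p_0)$ centered at $c_{\hat o}$ (so diameter $O(\sqrt{N}(q_0-p_0))$), has volume $2^N \vol(F_{\hat o} \cap \cube)$, and has (piecewise smooth) boundary of total area exactly $2^N \area(\partial F_{\hat o} \cap \interior(\cube))$, since the reflected copies of the cube-boundary pieces cancel in pairs. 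Applying the $\ell_2$ isoperimetric inequality in the covering-radius form (\Cref{lemma:isoperimetric-inequality}, corollary 2) to $\widetilde F_{\hat o}$ yields
\[
2^N \area(\partial F_{\hat o} \cap \interior(\cube)) = \area(\partial \widetilde F_{\hat o}) \gtrsim \frac{N}{\sqrt{N}(q_0-p_0)} \cdot 2^N \vol(F_{\hat o} \cap \cube),
\]
and summing over $\hat o$ using $\sum_{\hat o}\vol(F_{\hat o} \cap \cube) \geq (1-O(\rho))\vol(\cube)$ gives the promised lower bound. Combining with the replicability-based upper bound rearranges to $m \gtrsim N^2/(\rho^2(q_0-p_0)^2)$, proving the theorem. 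The delicate step will be the reflection construction itself: I will need to verify that the discarded $O(\delta\sqrt{N})$-fraction of non-incident boundary intrusions can be absorbed without disturbing the $2^N$-fold symmetry, which should follow by standard semialgebraic-set arguments together with the smallness hypothesis on $\delta$.
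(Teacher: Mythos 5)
You have reproduced the paper's own proof essentially line by line: the same canonical sets $F_{\hat o}$ defined via the $3/4$-threshold, the same mutual-information Lipschitz radius $R = \Theta(\sqrt{N/m})$ from \Cref{lemma:replicable-lipschitz}, the same reflection of each $F_{\hat o}$ around the agreeing corner $c_{\hat o}$ of $\cube$, and the same bound $\vol(\bigcup_{\ell \leq R}\bigcup_{\hat o}\partial F_{\hat o,\ell}\cap\cube) \leq O(\rho)\vol(\cube)$ from replicability. One framing deserves care, though: the non-incident-face contribution is \emph{not} negligible. The paper's Lemma \ref{lemma:f-o-l-total-area} shows it equals $10\delta N/\varepsilon \cdot \vol(\cube)$, which is the same order as the main term $\sqrt{N}/(2\varepsilon)\cdot\vol(\cube)$ and is smaller only by a constant factor precisely because of the hypothesis $\delta < 1/(40\sqrt N)$. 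So rather than being "discarded," this term must be carried through the reflection as a $2^N$-weighted error (Lemma \ref{lemma:surface-area-G}) and subtracted at the end. Relatedly, your exact-equality claim $\area(\partial\widetilde F_{\hat o}) = 2^N\area(\partial F_{\hat o}\cap\interior(\cube))$ is not quite right even after handling the non-incident faces: Lemma \ref{lemma:surface-area-G} also tracks a $2^{N-1}$-counted term for $\partial G_{\hat o}$ lying on the incident faces $Y_{\hat o}$, which Lemmas \ref{lemma:interior-C-F-G-equivalence} and \ref{lemma:agreement-G-subset-F} then show is dominated by $\partial F_{\hat o}$. Neither point changes your conclusion — they are precisely the executions of the "delicate step" you already flagged — but "negligible" and "exactly $2^N$" are the places where a careful writeup would diverge from your sketch.
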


Since we can amplify correctness by taking the majority output, we obtain the lower bound for general $\delta$, losing only log factors.

\NCoinLBLogN*

\begin{proof}
    Let $\varepsilon = q_0 - p_0$.
    Suppose for contradiction there is an non-adaptive $\rho$-replicable algorithm $\innerAlg$ with sample complexity $\littleO{\frac{N^2}{\rho^2 \varepsilon^2 \log^{3} N}}$.
    We construct the folowing non-adaptive $\rho$-replicable algorithm $\outerAlg$.

    \paragraph{Non-Adaptive Algorithm $\outerAlg$}
    We run $\innerAlg$ on $T$ independent samples $S_1, \dotsc, S_T$ (and independent random strings) with replicability parameter $\frac{\rho}{T}$.
    Let $\hat{o}_{t}$ be the respective outputs of $\innerAlg$ on samples $S_{t}$.
    Then, we $\accept$ coin $i \in [N]$ if and only if at least half of the outputs $\hat{o}_t$ output $\accept$ for the $i$-th coin.

    First, observe that $\outerAlg$ is replicable, as across two executions of $\outerAlg$, each output $\hat{o}_{t}$ is not replicable with probability at most $\frac{\rho}{T}$.
    By the union bound, all outputs are replicable with probability at least $1 - \rho$.
    Since our final output is a deterministic function (i.e. majority) of the outputs $\hat{o}_{t}$, the final output is replicable.

    We next argue correctness.
    Suppose $p_i \leq p_0$.
    Then, for each $t$, $\hat{o}_{t}$ accepts $i$ with probability at most $\delta < \frac{1}{3}$.
    Since each execution is independent, it follows from standard concentration bounds that the probability that at least half of the vectors $\hat{o}_{t}$ accept the $i$-th coin is at most
    \begin{equation*}
        \Pr\left(\sum_{t = 1}^{T} \hat{o}_{t, i} > \frac{T}{2}\right) < \exp(-T/3).
    \end{equation*}
    A similar argument holds if $p_i \geq q_0$.
    Union bounding over all $i \in [N]$, the probability any coin is decided incorrectly is at most
    \begin{equation*}
        N \exp(- T/3) < \frac{1}{40 \sqrt{N}},
    \end{equation*}
    as long as $T \geq 3 \log 40 N^{3/2} \geq \frac{9}{2} \log 40 N$.
    In particular, the sample complexity of $\outerAlg'$ is
    \begin{equation*}
        T \; \littleO{\frac{N^2 T^2}{\rho^2 \varepsilon^2 \log^{3} N}} = \littleO{\frac{N^2 T^3}{\rho^2 \varepsilon^2 \log^{3} N}} = \littleO{\frac{N^2}{\rho^2 \varepsilon^2}},
    \end{equation*}
    contradicting Theorem \ref{thm:n-coin-lower-const-delta}.
\end{proof}


Thus, it suffices to prove \Cref{thm:n-coin-lower-const-prob}.
For our overview, assume $\varepsilon:=|q_0 - p_0|$ is a constant.
As before, we argue that a replicable algorithm partitions the hypercube $\cube = [p_0, q_0]^{N}$ such that each set in the partition represents the inputs where the algorithm outputs a certain outcome vector $\hat{o} \in \set{\accept, \reject}^{N}$ with high probability.
Since most inputs must have a canonical outcome, this partition must cover at least a $1 - \rho$ fraction of the input space $\cube$. 
Due to its large volume, we apply the isoperimetric inequality and argue that this partition must have large surface area.
Concretely, we show that the surface area of any such partition is at least $\sqrt{N} \vol(\cube)$.
In particular, on the boundary of each part, we bound the probability of the canonical outcome away from $1$, and therefore argue that whenever an input distribution is selected near the boundary the algorithm fails to be replicable on this input.
Using our mutual information bound, we can argue in that algorithms with low sample complexity cannot effectively distinguish input distributions that are similar to one another.
Concretely, if $\norm{p - q}{2} \leq \sqrt{N/m}$, the probability of any outcome can only change by a constant, where $m$ is the total coordinate samples taken by the algorithm.
As a result, the algorithm fails to be replicable on any point within $\sqrt{N/m}$ of the boundary, and such a point is selected with probability $N / \sqrt{m} \leq \rho$, thus obtaining the desired lower bound.

We define the set of distributions that agree or disagree with a certain outcome \newline
$\hat{o} \in \set{\accept, \reject}^{N}$. 
Note that we only define the notion of agreement on the boundary of the cube, since this is the only region where we enforce correctness.

\begin{definition}
    \label{def:bias-outcome-agreement}
    Given an outcome $\hat{o} \in \set{\reject, \accept}^{N}$, a bias vector $p \in \boundary \cube$ \emph{disagrees} with $\hat{o}$ if there is any coin $i \in I(p)$ where $\hat{o}$ is incorrect.
    That is, there is $i$ where $p_i = p_0$ and $\hat{o}_i = \accept$ or $p_i = q_0$ and $\hat{o}_i = \reject$.
    A vector $p \in \boundary \cube$ \emph{agrees} with $\hat{o}$ if it does not disagree with $\hat{o}$.

    Given an outcome $\hat{o}$, let $Y_{\hat{o}} \subset \boundary \cube$ 
    denote the set of input bias vectors that agree with $\hat{o}$. 
\end{definition}

\begin{proof}
    As in \Cref{thm:replicable-implies-isoperimetry}, we assume that the algorithm takes a fixed sample size $m_i \leq \frac{2m}{N}$ for each coordinate, where $m$ is the total coordinate sample complexity.
    Note that in the $N$-coin problem, the algorithm is only required to give correct output for a certain coin if its bias is either $p_0$ or $q_0$.
    In this case we will define a slightly different notion of good random strings, 
    where we impose correctness constraints only on the boundary $\boundary \cube$, rather than the whole cube $\cube = [p_0, q_0]^{N}$.
    \begin{definition}
        \label{def:l-inf-good-random-string}
        A random string $r$ is good if it satisfies the following two conditions:
        \begin{enumerate}
            \item (Boundary Correctness)
            \begin{equation*}
                \Pr_{p \sim \boundary \cube, S_p} \left(p \not\in Y_{\innerAlg(S_p; r)}\right) \leq 3 \delta .
            \end{equation*}
            \item (Global Replicability)
            \begin{equation*}
                \Pr_{p \sim \cube, S_p, S_p'} \left( \innerAlg(S_p; r) \neq \innerAlg(S_p'; r) \right) \leq 3 \rho .
            \end{equation*}
        \end{enumerate}
    \end{definition}
    A similar argument applying Markov's inequality as in \Cref{lemma:non-adapt-learning-n-coin-good} shows that at least $\frac{1}{3}$ fraction of random strings are good.

    \begin{restatable}{lemma}{manyGoodRTest}
        \label{lemma:non-adapt-n-coin-good}
        If $r$ is selected uniformly at random, $r$ is good with probability at least $\frac{1}{3}$.
    \end{restatable}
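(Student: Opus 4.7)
}
The plan is to mimic the analogous argument for $\ell_2$ mean estimation (\Cref{lemma:non-adapt-learning-n-coin-good}): apply Markov's inequality twice, once to the correctness guarantee and once to the replicability guarantee, then union bound over the two failure events. The only subtlety compared to the $\ell_2$ case is that correctness is required only at points on the boundary $\partial \cube$ (since the $N$-coin problem only enforces a guarantee when each $p_i \in \{p_0,q_0\}$), but since the adversary distribution for correctness is chosen to be uniform over $\partial \cube$, this causes no issue.

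More concretely, first I would fix the adversary distribution: a bias vector $p$ drawn uniformly from $\partial \cube$ for the correctness condition, and uniformly from $\cube$ for the replicability condition. Since $\innerAlg$ is $\delta$-correct on every individual instance $p \in \partial \cube$, marginalizing over $p$ and $S_p$ gives
\[
\Pr_{r, p \sim \partial \cube, S_p}\left(p \notin Y_{\innerAlg(S_p;r)}\right) \leq \delta.
\]
Applying Markov's inequality to the non-negative random variable $\Pr_{p, S_p}\left(p \notin Y_{\innerAlg(S_p;r)}\right)$ (viewed as a function of $r$) yields
\[
\Pr_r\left(\Pr_{p, S_p}\left(p \notin Y_{\innerAlg(S_p;r)}\right) > 3\delta\right) < \tfrac{1}{3}.
\]
An entirely analogous application using the $\rho$-replicability of $\innerAlg$ (averaged over $p \sim \cube$ and two independent samples $S_p, S_p'$) yields
\[
\Pr_r\left(\Pr_{p, S_p, S_p'}\left(\innerAlg(S_p; r) \neq \innerAlg(S_p'; r)\right) > 3\rho\right) < \tfrac{1}{3}.
\]

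Finally, I would union-bound the two failure events above to conclude that the probability that $r$ fails at least one of the two conditions is strictly less than $\tfrac{2}{3}$, hence at least a $\tfrac{1}{3}$ fraction of random strings are good. There is essentially no obstacle here: this is a direct adaptation of the $\ell_2$ version and the only thing to verify is that the Markov step is valid with respect to the boundary-restricted correctness notion, which just requires noting that the replicability guarantee of $\innerAlg$ holds for \emph{every} input distribution (including the uniform distribution on $\partial \cube$ or on $\cube$) and that correctness holds pointwise for each $p \in \partial \cube$.
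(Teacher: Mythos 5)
Your proposal matches the paper's proof essentially verbatim: average the pointwise correctness bound over $p \sim \partial\cube$ and the pointwise replicability bound over $p \sim \cube$, apply Markov's inequality to each as a function of $r$ to get failure probability $< 1/3$ for each condition, and union bound. No gaps.
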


    Fix a good string $r$.
    Following our previous proofs, apply \Cref{lemma:replicable-lipschitz} and define $R = \bigTh{\sqrt{N/m}}$\footnote{Note that there is now an extra factor of $N$ in the bound since we are working with the coordinate sample complexity.} such that for any $p, q \in \cube$ satisfying $\norm{p - q}{2} \leq R$ and a fixed outcome $\hat{o} \in \set{\accept, \reject}^{N}$,

    \begin{equation*}
        |\Pr_{S_p}(\innerAlg(S_p; r) = \hat{o}) - \Pr_{S_q}(\innerAlg(S_q; r) = \hat{o})| < \frac{1}{15}.
    \end{equation*}

    Then, we use $\innerAlg(; r)$ to define partitions of the cube $\cube$ for all $\ell \in [0, R]$.
    Consider the polynomial
    \begin{equation}
        \label{eq:outcome-prob-polynomial-o}
        h_{\hat{o}}(p) = \sum_{(j_1, j_2, \dotsc, j_{N})} f_{(j_1, j_2, \dotsc, j_{N})} \prod_{i = 1}^{N}  \binom{m_i}{j_i} p_i^{j_i} (1 - p_i)^{m_i - j_i},
    \end{equation}
    where $f_{(j_1, j_2, \dotsc, j_{N})}$ is the proportion of samples with $j_i$ heads on the $i$-th coin on which $\innerAlg(; r)$ outputs $\hat{o}$.
    Since every sample with $j_i$ heads on the $i$-th coin is equally likely, observe that for $p \in \cube$,
    \begin{equation*}
        h_{\hat{o}}(p) = \Pr(\innerAlg(S_p; r) = \hat{o}).
    \end{equation*}
    We then partition $\cube$ into sets of distributions for each canonical outcome $\hat{o}$.

    \begin{definition}
        \label{def:f-outcome}
        Fix random string $r$ and $\hat{o} \in \set{\accept, \reject}^{N}$.
        Define
        \begin{equation*}
            F_{\hat{o}}(r) = \bigset{p \in \R^N \given h_{\hat{o}}(p) \geq \frac{3}{4}},
        \end{equation*}
        where $h_{\hat{o}}$ is defined as in \Cref{eq:outcome-prob-polynomial-o}.
        
        For any $\ell > 0$, define $F_{\hat{o}, \ell}(r) = F_{\hat{o}}(r) + \closure(B_{\ell})$.
        When the random string $r$ is clear, we omit $r$ and write $F_{\hat{o}}, F_{\hat{o}, \ell}$.
    \end{definition}

    Note that by applying \Cref{lemma:complement-not-replicable}, whenever $p \in \bigcup_{\hat{o}} \boundary F_{\hat{o}, \ell} \cap \cube$, $\innerAlg(; r)$ is not replicable given samples from $p$ with probability at least $\frac{3}{8}$ since $p \not\in F_{\hat{o}}$ (otherwise $p \in F_{\hat{o}} = \interior(F_{\hat{o}}) \subset \interior(F_{\hat{o}, \ell})$ is not on the boundary). 
    Then
    \begin{equation*}
        \frac{3}{8} \vol \left( \bigcup_{\ell} \bigcup_{\hat{o}} \boundary F_{\hat{o}, \ell} \cap \cube \right) \leq 3 \rho \cdot \vol(\cube).
    \end{equation*}
    Furthermore, the sets $\boundary F_{\hat{o}, \ell} \cap \cube$ are disjoint for all distinct $\hat{o}, \ell$ by applying the following lemma.

    \begin{restatable}{lemma}{folProperties}
        \label{lemma:f-o-l-properties}
        Let $F_{\hat{o}, \ell}$ be the collection of subsets defined in \Cref{def:f-outcome} for some $\ell \in [0, R]$.
        Then, the following properties hold,
        \begin{enumerate}
            \item (Disjoint) $\left(F_{\hat{o}_1, \ell} \cap \cube\right) \cap \left(F_{\hat{o}_2, \ell} \cap \cube\right) = \emptyset$ for all $\hat{o}_1 \neq \hat{o}_2$.
            \label{item:f-o-l-properties:disjoint}

            \item (Large Total Volume) 
            \begin{equation*}
                \vol \left( \bigcup_{\hat{o}} F_{\hat{o}, \ell} \cap \cube \right) = \sum_{\hat{o}} \vol \left( F_{\hat{o}, \ell} \cap \cube\right) \geq (1 - 8 \rho) \cdot \vol(\cube).
            \end{equation*}
            \label{item:f-o-l-properties:total-volume}

            \item (Small Total Error) 
            \begin{equation*}
                \area \left(\bigcup_{\hat{o}} F_{\hat{o}, \ell} \cap (\boundary \cube \setminus Y_{\hat{o}}) \right) = \sum_{\hat{o}} \area \left(F_{\hat{o}, \ell} \cap (\boundary \cube \setminus Y_{\hat{o}}) \right) \leq \frac{10 \delta N}{\varepsilon} \cdot \vol(\cube),
            \end{equation*}
            \label{item:f-o-l-properties:total-error}
            where $\eps = \abs{q_0 - p_0}$. 

            \item (Orthant Contained) $F_{\hat{o}, \ell} \cap \cube = p_{\hat{o}} + D_{\hat{o}, \ell}$ where $p_{\hat{o}}$ is the vector with entries defined as
            \begin{equation*}
                p_{\hat{o}}[i] = \begin{cases}
                    p_0 & \hat{o}_i = \reject \\
                    q_0 & \hat{o}_i = \accept
                \end{cases}
            \end{equation*}
            and $D_{\hat{o}, \ell}$ lies in one orthant of $\R^{N}$.
            \label{item:f-o-l-properties:orthant}

            \item (Closed) $F_{\hat{o}, \ell}$ are closed for all $\hat{o}, \ell$.
            \label{item:f-o-l-properties:closed}
        \end{enumerate}
    \end{restatable}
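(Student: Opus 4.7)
\bigskip

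\noindent\textbf{Proof Proposal for Lemma \ref{lemma:f-o-l-properties}.}

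\emph{Item \ref{item:f-o-l-properties:closed} (Closed).} I would first dispense with the closedness property. The polynomial $h_{\hat{o}}$ is continuous on $\R^N$, so $F_{\hat{o}} = h_{\hat{o}}^{-1}([3/4,\infty))$ is closed. The closed ball $\closure(B_\ell)$ is compact, and the Minkowski sum of a closed set with a compact set is closed, giving Property~\ref{item:f-o-l-properties:closed}.

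\emph{Item \ref{item:f-o-l-properties:orthant} (Orthant Contained).} This property is essentially a structural observation. For any outcome $\hat{o}$, the vector $p_{\hat{o}}$ is a corner of $\cube$, so writing $D_{\hat{o}, \ell} \coloneqq (F_{\hat{o},\ell} \cap \cube) - p_{\hat{o}}$ gives a set contained in $\cube - p_{\hat{o}}$, which by construction lies entirely in one orthant of $\R^N$ (namely the one pointing into $\cube$ from the corner $p_{\hat{o}}$). No further work is required.

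\emph{Items \ref{item:f-o-l-properties:disjoint} and \ref{item:f-o-l-properties:total-volume} (Disjointness and Total Volume).} These follow the template from the $\ell_2$ analog (\Cref{lemma:learn-g-p-l-properties}). For disjointness, I would show via \Cref{lemma:replicable-lipschitz} that for every $p \in F_{\hat{o}, \ell}$ there exists $q \in F_{\hat{o}}$ with $\snorm{2}{p-q} \leq \ell \leq R$, hence $\Pr(\innerAlg(S_p;r) = \hat{o}) \geq \tfrac{3}{4} - \tfrac{1}{15} \geq \tfrac{3}{5}$. Since probabilities sum to at most $1$, no point can lie in $F_{\hat{o}_1,\ell}$ and $F_{\hat{o}_2,\ell}$ for distinct $\hat{o}_1, \hat{o}_2$. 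For the volume bound, since $F_{\hat{o}} \subset F_{\hat{o}, \ell}$, it suffices to bound $\vol\bigl( \cube \setminus \bigcup_{\hat{o}} F_{\hat{o}} \bigr)$. On this complement, \Cref{lemma:complement-not-replicable} gives non-replicability probability at least $\tfrac{3}{8}$. Combined with the global replicability of the good string $r$, a uniform random $p \sim \cube$ lands outside $\bigcup_{\hat{o}} F_{\hat{o}}$ with probability at most $8\rho$.

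\emph{Item \ref{item:f-o-l-properties:total-error} (Small Total Error).} This is the main new technical ingredient and the only place where the novel surface-area consideration (rather than volume) appears. My plan is as follows. Fix $p \in F_{\hat{o}, \ell} \cap (\boundary \cube \setminus Y_{\hat{o}})$. By the same Lipschitz argument as in Item~\ref{item:f-o-l-properties:disjoint}, $\Pr(\innerAlg(S_p;r) = \hat{o}) \geq \tfrac{3}{5}$; since $p \notin Y_{\hat{o}}$ the outcome $\hat{o}$ disagrees with $p$'s boundary constraints, so the algorithm is incorrect on samples from $p$ with probability at least $\tfrac{3}{5}$. By boundary correctness of $r$ (the first clause of \Cref{def:l-inf-good-random-string}), when $p \sim \boundary \cube$ uniformly:
\[
3 \delta \;\geq\; \Pr_{p\sim \boundary \cube,\, S_p}(p \notin Y_{\innerAlg(S_p;r)}) \;\geq\; \tfrac{3}{5} \sum_{\hat{o}} \Pr_{p \sim \boundary\cube}\bigl(p \in F_{\hat{o},\ell} \cap (\boundary\cube \setminus Y_{\hat{o}})\bigr),
\]
where disjointness (Item~\ref{item:f-o-l-properties:disjoint}) lets us sum over $\hat{o}$ without overcounting. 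Converting the probability to area via $\Pr_{p\sim\boundary\cube}(p\in S) = \area(S)/\area(\boundary\cube)$ and using $\area(\boundary \cube) = 2N \varepsilon^{N-1} = (2N/\varepsilon)\vol(\cube)$ yields
\[
\sum_{\hat{o}} \area\bigl(F_{\hat{o},\ell} \cap (\boundary\cube \setminus Y_{\hat{o}})\bigr) \;\leq\; 5\delta \cdot \area(\boundary\cube) \;=\; \tfrac{10 \delta N}{\varepsilon} \vol(\cube),
\]
as required.

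The main obstacle I anticipate is ensuring each step is truly analogous to the $\ell_2$ case despite the boundary-only correctness condition — the non-trivial switch in Item~\ref{item:f-o-l-properties:total-error} is that we measure error in the $(N{-}1)$-dimensional surface area of the cube boundary rather than a volume, which is precisely why the $\tfrac{N}{\varepsilon}$ factor arises and why this inequality (rather than volume containment) is what will feed the subsequent reflection argument to yield the $\ell_\infty$ lower bound.
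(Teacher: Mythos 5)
Your proposal is correct and follows essentially the same approach as the paper's proof: closedness via the preimage of a closed interval plus Minkowski sum with a compact ball, orthant containment by translating the corner $p_{\hat{o}}$, disjointness and volume via the Lipschitz/non-replicability argument transferred from the $\ell_2$ case, and the error bound by applying boundary correctness to a uniform $p \sim \boundary\cube$ and converting probability to $(N-1)$-dimensional measure. The arithmetic $\area(\boundary\cube) = (2N/\varepsilon)\vol(\cube)$ and the $5\delta$ factor match the paper exactly.
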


    Thus, to lower bound the volume of the left hand side above, it suffices to lower bound the surface area of $\boundary F_{\hat{o}, \ell} \cap \cube$.
    \Cref{lemma:f-o-l-properties} shows that the sets satisfy certain properties that force the partition to have large surface area.
    Using these properties, in Lemma \ref{lemma:f-o-l-total-area} we argue that the sets $F_{\hat{o}, \ell}$ have large surface area.

    \begin{restatable}{lemma}{folTotalArea}
        \label{lemma:f-o-l-total-area}
        Let $\cube = [p_0, q_0]^{N}$ and $F_{\hat{o}, \ell}$ be a collection of sets satisfying the properties of Lemma \ref{lemma:f-o-l-properties}.
        Then
        \begin{equation*}
            \area \left( \bigcup_{\hat{o}} \boundary F_{\hat{o}, \ell} \cap \cube \right) \geq \left( \frac{\sqrt{N}}{2 \varepsilon} - \frac{10 \delta N}{\varepsilon} \right) \vol(\cube).
        \end{equation*}
    \end{restatable}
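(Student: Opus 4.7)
The plan is to sidestep the main obstacle encountered in the $\ell_2$ argument: here each set $F_{\hat o, \ell} \cap \cube$ has covering radius up to $\sqrt{N}\varepsilon$, so naively applying the isoperimetric inequality gives a boundary lower bound of order $\frac{\sqrt{N}}{\varepsilon}\vol(\cube)$, and subtracting the full cube-boundary contribution of $O(N/\varepsilon)\vol(\cube)$ is wasteful (indeed useless when $\varepsilon \gtrsim 1/\sqrt{N}$). The remedy is the reflection trick sketched in \Cref{sub-sec:n-coin-lb-tech}: leverage Property~\ref{item:f-o-l-properties:orthant} (orthant containment) to remove the incident cube faces for free.

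Concretely, fix $\hat o$ and let $p_{\hat o}$ be the cube corner matching $\hat o$. Since by Property~\ref{item:f-o-l-properties:orthant} the set $F_{\hat o,\ell}\cap \cube$ equals $p_{\hat o} + D_{\hat o,\ell}$ for some $D_{\hat o,\ell}$ contained in a single orthant, I would form $\tilde F_{\hat o,\ell}$ by successively reflecting $F_{\hat o,\ell}\cap \cube$ across each of the $N$ coordinate hyperplanes through $p_{\hat o}$. This produces $2^N$ congruent copies whose interiors are disjoint (they sit in the $2^N$ distinct orthants around $p_{\hat o}$), so $\vol(\tilde F_{\hat o,\ell}) = 2^N \vol(F_{\hat o,\ell}\cap \cube)$. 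Because every point of $F_{\hat o,\ell}\cap \cube$ is within $\sqrt{N}\varepsilon$ of the corner $p_{\hat o}$, the reflected set has covering radius at most $\sqrt{N}\varepsilon$ around $p_{\hat o}$. The isoperimetric corollary stated after \Cref{lemma:isoperimetric-inequality} then yields
\begin{equation*}
\area(\boundary \tilde F_{\hat o,\ell}) \;\geq\; \frac{N}{\sqrt{N}\,\varepsilon}\,\vol(\tilde F_{\hat o,\ell}) \;=\; \frac{\sqrt{N}}{\varepsilon}\cdot 2^N\,\vol(F_{\hat o,\ell}\cap \cube).
\end{equation*}

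The key step, and the one I expect to require the most care, is the matching upper bound. The boundary of $\tilde F_{\hat o,\ell}$ consists of reflected copies of $\boundary(F_{\hat o,\ell}\cap \cube)$ with the portions lying on the coordinate hyperplanes through $p_{\hat o}$ \emph{removed} (these become interior after reflection). These hyperplanes are precisely the cube faces incident to $p_{\hat o}$, i.e.\ the faces comprising $Y_{\hat o}$. Using \Cref{lemma:boundary-w/o-cube} to split $\boundary(F_{\hat o,\ell}\cap \cube)$ into the interior part $\boundary F_{\hat o,\ell}\cap \cube$ and the cube-boundary part, and discarding only what lies in $Y_{\hat o}$, I obtain
\begin{equation*}
\area(\boundary \tilde F_{\hat o,\ell}) \;\leq\; 2^N\big[\area(\boundary F_{\hat o,\ell}\cap \cube) \;+\; \area\bigl(F_{\hat o,\ell}\cap (\boundary\cube \setminus Y_{\hat o})\bigr)\big].
\end{equation*}
The main subtlety here is ensuring the reflection really does absorb all of the incident-face contribution without spurious new boundary creation; this is where Property~\ref{item:f-o-l-properties:orthant} (the set sits entirely on one side of each such hyperplane) and the piecewise-smoothness of $\boundary F_{\hat o,\ell}$ (following from $F_{\hat o}$ being a sublevel set of a polynomial, hence semialgebraic) are essential.

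Combining the upper and lower bounds on $\area(\boundary \tilde F_{\hat o,\ell})$, dividing by $2^N$, and summing over $\hat o$ (the boundaries $\boundary F_{\hat o,\ell}\cap \cube$ are pairwise disjoint by Properties~\ref{item:f-o-l-properties:disjoint} and~\ref{item:f-o-l-properties:closed}) yields
\begin{equation*}
\sum_{\hat o} \area(\boundary F_{\hat o,\ell}\cap \cube) \;\geq\; \frac{\sqrt{N}}{\varepsilon}\sum_{\hat o}\vol(F_{\hat o,\ell}\cap \cube) \;-\; \sum_{\hat o}\area\bigl(F_{\hat o,\ell}\cap (\boundary\cube\setminus Y_{\hat o})\bigr).
\end{equation*}
Applying the volume lower bound (Property~\ref{item:f-o-l-properties:total-volume}) with $\rho<1/10$ so that $1-8\rho \geq 1/2$, together with the correctness surface-area bound (Property~\ref{item:f-o-l-properties:total-error}), gives
\begin{equation*}
\sum_{\hat o}\area(\boundary F_{\hat o,\ell}\cap \cube) \;\geq\; \left(\frac{\sqrt{N}}{2\varepsilon} - \frac{10\delta N}{\varepsilon}\right)\vol(\cube),
\end{equation*}
which is the claim. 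The only remaining technicality is checking that the union on the left-hand side has total surface equal to the sum (disjointness of boundaries), which follows from the $F_{\hat o,\ell}$ being closed and pairwise disjoint on $\cube$.
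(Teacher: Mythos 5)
Your proposal is correct and follows essentially the same route as the paper's proof: reflect $F_{\hat{o},\ell}\cap\cube$ across the $N$ coordinate hyperplanes through $p_{\hat o}$ (the paper's $G_{\hat o,\ell} = p_{\hat o}+R(D_{\hat o,\ell})$), apply the isoperimetric inequality to the reflected set using the $\sqrt{N}\varepsilon$ covering radius, upper-bound its surface area by the contributions from $\boundary F\cap\cube$ and the non-incident faces $F\cap(\boundary\cube\setminus Y_{\hat o})$, divide by $2^N$, and sum over $\hat o$ using Properties~\ref{item:f-o-l-properties:total-volume} and~\ref{item:f-o-l-properties:total-error}. One small imprecision worth noting: the incident-face boundary does not entirely ``become interior'' after reflection. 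Points of $\boundary F_{\hat o,\ell}$ that happen to lie on $Y_{\hat o}$ can survive on $\boundary G_{\hat o,\ell}$; what the paper shows (via Lemmas~\ref{lemma:surface-area-G}--\ref{lemma:disagreement-G-subset-F}) is that such points are counted with reflection multiplicity only $2^{N-1}$, which is then absorbed into the $2^N\area(\boundary F_{\hat o,\ell}\cap\cube)$ term since $Y_{\hat o}\subset\cube$. Your stated inequality $\area(\boundary\tilde F_{\hat o,\ell})\leq 2^N[\area(\boundary F_{\hat o,\ell}\cap\cube)+\area(F_{\hat o,\ell}\cap(\boundary\cube\setminus Y_{\hat o}))]$ is still correct for precisely this reason, but the one-sentence justification (``these become interior'') oversimplifies the multiplicity accounting that you yourself flag as the delicate step.
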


    Assume that $\delta < \frac{1}{40 \sqrt{N}}$.
    Then we have
    \begin{equation*}
        \area \left( \bigcup_{\hat{o}} \boundary F_{\hat{o}, \ell} \right) \geq \frac{\sqrt{N}}{4 \varepsilon} \cdot \vol(\cube).
    \end{equation*}
    Since $\boundary F_{\hat{o}, \ell}$ are disjoint for distinct $\hat{o}, \ell$ by \Cref{lemma:f-o-l-properties}, 
    \begin{align*}
        \vol\left( \bigcup_{\ell} \bigcup_{\hat{o}} \boundary F_{\hat{o}, \ell} \cap \cube \right) &= \int_{0}^{R} \area \left( \bigcup_{\hat{o}} \boundary F_{\hat{o}, \ell} \cap \cube \right) d\ell \\
        &= \bigOm{\frac{R \sqrt{N}}{\varepsilon} \vol(\cube)} \\
        &= \bigOm{\frac{N}{\varepsilon \sqrt{m}} \vol(\cube)}
    \end{align*}
    by plugging in $R = \bigTh{\sqrt{N/m}}$.
    In particular, whenever $r$ is a good random string, the probability a uniformly chosen $p$ falls into $\boundary F_{\hat{o}, \ell}$ for some $\hat{o}, \ell$ is at least $\bigOm{\frac{N}{\varepsilon \sqrt{m}}}$.
    Since $\innerAlg(; r)$ is replicable with probability at least $1 - 3 \rho$,
    \begin{equation*}
        \frac{N}{\varepsilon \sqrt{m}} = O(\rho)
    \end{equation*}
    or equivalently $m = \bigOm{\frac{N^2}{\varepsilon^2 \rho^2}}$.
    This concludes the proof of \Cref{thm:n-coin-lower-const-prob}.
\end{proof}

We next prove the required lemmas.

\paragraph{Preliminaries and Notation}

Consider the boundary of the hypercube $\cube = [p_0, q_0]^{N}$.
For any fixed point on this boundary $p = (p_1, p_2, \dotsc, p_{N})$, there is some subset $I(p) \subset [N]$ of indices where $p_i \in \set{p_0, q_0}$ for $i \in I(p)$.
For every point on the boundary, $|I(p)| \geq 1$.
Thus, we can partition $\boundary C = \bigcup_{i = 1}^{N} \cube_{(i)}$ where
\begin{equation*}
    \cube_{(i)} = \set{p \in \cube \given |I(p)| = i}.
\end{equation*}

For each outcome $\hat{o}$, let 
$\eps = \abs{q_0 - p_0}$ and
$\cube_{\hat{o}} = \prod_{i = 1}^{N} [0, \sigma_i \varepsilon]$ with $\sigma_i = +1$ if $\hat{o}_i = \reject$ and $\sigma_i = \snew{-1}$ if $\hat{o}_i = \accept$. In other words, we have $\cube = p_{\hat{o}} + \cube_{\hat{o}}$ for all $\hat o$.
Similarly, define $\cube_{\hat{o}, (i)}$ to be the orthant-contained set such that $\cube_{(i)} = p_{\hat{o}} + \cube_{\hat{o}, (i)}$.
Recall that $Y_{\hat{o}}$ is the set of boundary points agreeing with outcome $\hat{o}$.
We also define $Z_{\hat{o}}$ to be the orthant-contained set such that $Y_{\hat{o}} = p_{\hat{o}} + Z_{\hat{o}}$.

Let $D_{\hat{o}, \ell}$ be the set such that $F_{\hat{o}, \ell} \cap \cube = p_{\hat{o}} + D_{\hat{o}, \ell}$.
Note that $D_{\hat{o}, \ell}$ is orthant contained.
We then define the set $G_{\hat{o}, \ell}$ to be the set $p_{\hat{o}} + R(D_{\hat{o}, \ell})$ where for any set $S$ contained in one orthant of $\R^{N}$, $R(S)$ is the set $S$ reflected in all $2^{N}$ quadrants.

Formally, for a fixed vector $\sigma \in \set{\pm 1}^{N}$, define
\begin{equation*}
    R(S; \sigma) = \set{\sigma * x \given x \in S},
\end{equation*}
where $*$ denotes component-wise multiplication.
In particular,
\begin{equation*}
    R(S) = \bigcup_{\sigma} R(S; \sigma) = \set{\sigma * x \given \sigma \in \set{\pm 1}^{N}, x \in S}.
\end{equation*}

\paragraph{Fixing a Good Random String}

\manyGoodRTest*

\begin{proof}
    Note an error occurs whenever $p$ disagrees with $\innerAlg(S_p; r)$.
    By the correctness requirement of $\innerAlg$, given a sample $S_p$ drawn from the distribution parameterized by $p \in \boundary \cube$, 
    \begin{equation*}
        \Ep_{r} \left[ \Pr_{S_p} \left(p \not\in Y_{\innerAlg(S_p; p_r)}\right) \right] = \Pr_{S_p, r} \left(p \not\in Y_{\innerAlg(S_p; p_r)}\right) < \delta
    \end{equation*}
    
    In particular, since this inequality holds for every single $p$ on the boundary, if we also select $p$ on the boundary randomly,
    \begin{equation*}
        \Ep_{p \sim \boundary \cube, r} \left[ \Pr_{S_p} \left(p \not\in Y_{\innerAlg(S_p; p_r)}\right) \right] = \Ep_{r} \left[ \Pr_{p \sim \boundary \cube, S_p} \left(p \not\in Y_{\innerAlg(S_p; p_r)}\right) \right] < \delta.
    \end{equation*}
    By Markov's Inequality,
    \begin{equation*}
        \Pr_{r} \left( \Pr_{p \sim \boundary \cube, S_p} \left(p \not\in Y_{\innerAlg(S_p; p_r)}\right) > 3 \delta \right) < \frac{1}{3}.
    \end{equation*}
    
    Next, since $\innerAlg$ is replicable, for any $p \in [p_0, q_0]^{N}$, consider two independent samples $S_p, S_p'$ drawn from the distribution parameterized by $p$,
    \begin{equation*}
        \Ep_{r} \left[ \Pr(\innerAlg(S_p; r) \neq \innerAlg(S_p'; r) \right] = \Pr_{r, S_p, S_p'} \left( \innerAlg(S_p; r) \neq \innerAlg(S_p'; r) \right) < \rho.
    \end{equation*}
    As before, this bound also holds if $p \in [p_0, q_0]^{N}$ is chosen uniformly at random.
    Again by applying Markov's Inequality,
    \begin{equation*}
        \Pr_{r} \left( \Pr_{p, S_p, S_p'} \left( \innerAlg(S_p; r) \neq \innerAlg(S_p'; r) \right) > 3 \rho \right) < \frac{1}{3}.
    \end{equation*}
    By the union bound, a randomly chosen $r$ satisfies the following conditions simultaneously with probability at least $\frac{1}{3}$.
\end{proof}

In the remainder of the proof, we condition on fixing some good random string and consider the deterministic algorithm $\innerAlg(; r)$.

\paragraph{Surface Area of Partition}

\folProperties*

\begin{proof}[Proof of \Cref{lemma:f-o-l-properties}]
    Note that for every $p \in F_{\hat{o}}$, $\Pr(\innerAlg(S_{p}; r) = \hat{o}) \geq \frac{3}{4}$.
    By our choice of $\ell \leq R$, for any $q \in \closure(F_{\hat{o}, \ell} \cap \cube)$, $\norm{q - p}{2} \leq \ell$ for some $p \in F_{\hat{o}}$, so that $\Pr(\innerAlg(S_{q}; r) = \hat{o}) \geq \frac{3}{5}$.
    We have proved Property \ref{item:f-o-l-properties:disjoint}. 

    Towards Property \ref{item:f-o-l-properties:total-volume} it suffices to show $F_{\hat{o}}$ satisfies the volume constraint since $F_{\hat{o}} \subset F_{\hat{o}, \ell}$.
    Suppose $\vol \left( \bigcup_{\hat{o}} F_{\hat{o}} \cap \cube \right) < (1 - 8 \rho) \cdot \vol(\cube)$.
    Note that for any $p \not\in \bigcup_{\hat{o}} F_{\hat{o}} \cap \cube$, $\innerAlg(; r)$ is not replicable with probability at least $\frac{3}{8}$.
    Then, such a $p$ is chosen with probability at least $8 \rho$, so that the algorithm $\innerAlg(;r)$ is not replicable with probability at least $\frac{3}{8} 8 \rho$. 
    On the other hand, the algorithm $\innerAlg(;r)$ should be non-replicable with probability at most $3 \rho$
    since $r$ is a good random string, a contradiction.

    Towards Property \ref{item:f-o-l-properties:total-error}, since $F_{\hat{o}, \ell} \cap \cube$ are disjoint, we may write
    \begin{equation*}
        \area \left(\bigcup_{\hat{o}} F_{\hat{o}, \ell} \cap (\boundary \cube \setminus Y_{\hat{o}}) \right) = \sum_{\hat{o}} \area \left(F_{\hat{o}, \ell} \cap (\boundary \cube \setminus Y_{\hat{o}}) \right).
    \end{equation*}
    Suppose $p \in F_{\hat{o}, \ell} \cap (\boundary \cube \setminus Y_{\hat{o}})$ for some $\hat{o}$.
    Then, $\Pr_{S_p} \left( \innerAlg(S_p; r) = \hat{o} \right) \geq \frac{3}{5}$ but $p \in \boundary \cube \setminus Y_{\hat{o}}$ implies
    \begin{equation*}
        \Pr_{S_p} \left( p \not\in Y_{\innerAlg(S_p; r)} \right) \geq \frac{3}{5}.
    \end{equation*}
    Since $p \in \cube$ is uniformly chosen, if we condition on $p \in \boundary \cube$, we have
    \begin{align*}
        3 \delta &\geq \Pr_{p \sim \boundary \cube, S_p} \left( p \not\in Y_{\innerAlg(S_p; r)} \right) \\
        &\geq \frac{3}{5} \frac{\area \left(\bigcup_{\hat{o}} F_{\hat{o}, \ell} \cap (\boundary \cube \setminus Y_{\hat{o}}) \right)}{\area(\boundary \cube)},
    \end{align*}
    or equivalently,
    \begin{align*}
        \area \left(\bigcup_{\hat{o}} F_{\hat{o}, \ell} \cap (\boundary \cube \setminus Y_{\hat{o}}) \right) &\leq 5 \delta \area(\boundary \cube) = \frac{10 \delta N}{\varepsilon} \cdot \vol(\cube).
    \end{align*}

    Towards Property \ref{item:f-o-l-properties:orthant}, observe that $\cube = p_{\hat{o}} + \cube_{\hat{o}}$ and $F_{\hat{o}, \ell} \cap \cube \subset p_{\hat{o}} + \cube_{\hat{o}}$ so that $F_{\hat{o}, \ell} \cap \cube = p_{\hat{o}} + D_{\hat{o}, \ell}$ where $D_{\hat{o}, \ell} \subset \cube_{\hat{o}}$ is orthant contained.

    Towards Property \ref{item:f-o-l-properties:closed}, note that $F_{\hat{o}}$ is closed as the preimage of the closed interval $[3/4, \infty)$ under a continuous map $h_{\hat{o}}$.
    We claim $F_{\hat{o}} + \closure(B_{\ell})$ is closed for all $\ell$, which follows as the Minkowski sum of a closed set with a compact set is closed.
\end{proof}

We now show all sets satisfying the desired properties have non-trivial surface area.
We first bound the surface area of $G_{\hat{o}, \ell} = p_{\hat{o}} + R(D_{\hat{o}, \ell})$.
Observe that $G_{\hat{o}, \ell}$ is the set $F_{\hat{o}, \ell} \cap \cube$ reflected in all $2^{N}$ directions.
Since $R(D_{\hat{o}, \ell})$ is the union of $2^N$ copies of $D_{\hat{o}, \ell}$ where the intersection has volume zero (as a subset of $\boundary \cube$), we have $\vol(G_{\hat{o}, \ell}) = 2^{N} \vol(F_{\hat{o}, \ell} \cap \cube)$.
Then, we apply the isoperimetric inequality (Lemma \ref{lemma:isoperimetric-inequality}) to obtain
\begin{equation}
    \label{eq:area-G-o}
    \area(G_{\hat{o}, \ell}) \geq 2 \sqrt{N} \vol(G_{\hat{o}, \ell})^{(N - 1)/N} \geq \frac{\sqrt{N}}{\varepsilon} \vol(G_{\hat{o}, \ell}),
\end{equation}
where we have used $\vol(G_{\hat{o}, \ell}) = 2^{N} \vol(F_{\hat{o}, \ell} \cap \cube) \leq (2\varepsilon)^{N}$ so that $\vol(G_{\hat{o}, \ell})^{(N - 1)/N}$ is at most a factor of $2 \varepsilon$ less than $\vol(G_{\hat{o}, \ell})$.
We can then proceed to bound the surface area of $\boundary F_{\hat{o}, \ell} \cap \cube$.
We will prove our result assuming the following lemmas.

\begin{restatable}{lemma}{surfaceAreaG}
    \label{lemma:surface-area-G}
    For all $\hat{o}$, $\area(\boundary G_{\hat{o}})$ is equal to
    \begin{equation*}
        2^{N} \left( \area\left(\boundary G_{\hat{o}} \cap \interior(\cube)\right) + \area\left(\boundary G_{\hat{o}} \cap (\boundary \cube \setminus Y_{\hat{o}}) \right) \right) + 2^{N - 1} \area\left(\boundary G_{\hat{o}} \cap Y_{\hat{o}} \right).
    \end{equation*}
\end{restatable}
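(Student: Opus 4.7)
The key structural property I will use is that $G_{\hat{o}}$ is invariant under the group $\Gamma \cong (\Z/2\Z)^N$ of coordinate-wise reflections through the corner $p_{\hat{o}}$: by definition $G_{\hat{o}} = p_{\hat{o}} + R(D_{\hat{o}})$ is exactly the $\Gamma$-orbit of $D_{\hat{o}}$, so $\partial G_{\hat{o}}$ is also $\Gamma$-invariant. The plan is to decompose $\R^N$ as the union of the $2^N$ reflected cubes $C_\sigma = R_\sigma(\cube)$ for $\sigma \in \Gamma$ (where $R_\sigma$ reflects about $p_{\hat{o}}$ with signs $\sigma$) and use this invariance to recover $\area(\partial G_{\hat{o}})$ from its restriction to $\cube$ alone.

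I would split the contributions to $\area(\partial G_{\hat{o}})$ into three types according to where a generic point sits relative to these cubes. First, for each $\sigma$, $\area(\partial G_{\hat{o}} \cap \interior(C_\sigma))$ equals $\area(\partial G_{\hat{o}} \cap \interior(\cube))$ by $\Gamma$-invariance; the interiors are pairwise disjoint, so summing over the $2^N$ reflections yields the $2^N \area(\partial G_{\hat{o}} \cap \interior(\cube))$ term. Second, the ``disagreement'' faces of $C_\sigma$ lie on the outer boundary of the super-cube $p_{\hat{o}} + [-\varepsilon, \varepsilon]^N$, namely on hyperplanes $\{x_i = p_{\hat{o},i} \pm \varepsilon\}$ that correspond within $\cube$ to $\partial \cube \setminus Y_{\hat{o}}$. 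These disagreement faces are pairwise disjoint across different $\sigma$, so a direct $\Gamma$-invariance sum yields $2^N \area(\partial G_{\hat{o}} \cap (\partial \cube \setminus Y_{\hat{o}}))$. Third, the ``reflection'' faces of $C_\sigma$ (those lying on $\{x_i = p_{\hat{o},i}\}$, which within $\cube$ are exactly $Y_{\hat{o}}$) are each shared between two adjacent cubes $C_\sigma$ and $C_{\sigma \cdot s_i}$, where $s_i$ flips the $i$-th sign, so summing naively over $\sigma$ double-counts each such point; the true contribution is therefore $\tfrac{1}{2} \cdot 2^N \area(\partial G_{\hat{o}} \cap Y_{\hat{o}}) = 2^{N-1} \area(\partial G_{\hat{o}} \cap Y_{\hat{o}})$. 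Adding the three contributions recovers the claimed identity.

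The main technical obstacle will be justifying that lower-dimensional strata---points lying on two or more of the hyperplanes $\{x_i = p_{\hat{o},i}\}$ or $\{x_i = p_{\hat{o},i} \pm \varepsilon\}$, such as edges or corners of $\cube$---contribute negligibly to the $(N-1)$-dimensional surface area. These strata have Hausdorff dimension at most $N-2$, hence vanishing $(N-1)$-measure, so they can be safely discarded when tallying area. Since $F_{\hat{o}}$ is semialgebraic (as the preimage of $[3/4, \infty)$ under the polynomial $h_{\hat{o}}$ from \Cref{eq:outcome-prob-polynomial-o}), so are $D_{\hat{o}}$ and $G_{\hat{o}}$, and therefore $\partial G_{\hat{o}}$ is piecewise smooth with well-defined Hausdorff $(N-1)$-measure by the preliminaries on semialgebraic sets; this makes the measure-theoretic hygiene routine. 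Apart from this, all the geometry is dictated purely by the reflection symmetry through $p_{\hat{o}}$, so the computation is essentially a counting argument for orbit sizes: interior points and disagreement-face points have trivial stabilizer (orbit size $2^N$), while agreement-face points have stabilizer of order $2$ (orbit size $2^{N-1}$).
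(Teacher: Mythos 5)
Your proof is correct and takes essentially the same route as the paper: you decompose $\partial G_{\hat{o}}$ according to whether a point lies in the interior of some reflected cube, on an outer (``disagreement'') face, or on a reflection (``agreement'') face, and then tally the multiplicities $2^N$, $2^N$, $2^{N-1}$. Your orbit--stabilizer phrasing (trivial stabilizer vs.\ stabilizer of order $2$) is exactly the content of the paper's Lemma~\ref{lemma:num-R-sets-containment}, which counts for a point with $N-\supp(x)$ zero coordinates how many reflected copies $R(\cdot;\sigma)$ contain it; the handling of lower-dimensional strata $\cube_{(i)}$, $i\geq 2$, as measure-zero likewise matches.
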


\begin{restatable}{lemma}{interiorEquivalence}
    \label{lemma:interior-C-F-G-equivalence}
    For all $\hat{o}$,
    \begin{equation*}
        \boundary F_{\hat{o}} \cap \interior(\cube) = \boundary G_{\hat{o}} \cap \interior(\cube).
    \end{equation*}
\end{restatable}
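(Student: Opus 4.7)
The plan is to show that $F_{\hat{o}}$ and $G_{\hat{o}}$ agree on an open neighborhood of every point of $\interior(\cube)$; since the topological boundary of a set is determined by its restriction to any open neighborhood, this immediately gives $\partial F_{\hat{o}} \cap \interior(\cube) = \partial G_{\hat{o}} \cap \interior(\cube)$.

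First I would unpack the definition of $G_{\hat{o}}$. By Property~\ref{item:f-o-l-properties:orthant} of \Cref{lemma:f-o-l-properties}, the intersection $F_{\hat{o}} \cap \cube$ equals $p_{\hat{o}} + D_{\hat{o}}$ with $D_{\hat{o}}$ lying in the single orthant $O_{\hat{o}} := \cube - p_{\hat{o}}$ of $\R^N$ (the orthant whose sign pattern is $\sigma_{\hat{o}}$). By construction, $G_{\hat{o}} = p_{\hat{o}} + R(D_{\hat{o}})$ is the union of the $2^N$ coordinate-reflections of $D_{\hat{o}}$, translated back to $p_{\hat{o}}$.

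Next, fix any $p \in \interior(\cube)$. Then $p - p_{\hat{o}}$ lies in the open orthant $\interior(O_{\hat{o}})$, that is, every coordinate of $p - p_{\hat{o}}$ is strictly positive in the sign $\sigma_{\hat{o},i}$. Pick $r > 0$ small enough that the open ball $B_r(p)$ is simultaneously contained in $\interior(\cube)$ and such that $B_r(p) - p_{\hat{o}}$ stays strictly in the interior of the single orthant $O_{\hat{o}}$. Such an $r$ exists because both conditions are open. Within $B_r(p) - p_{\hat{o}}$, the only reflected copy of $D_{\hat{o}}$ that can appear is the unreflected one itself (all other orthant-copies $R(D_{\hat{o}};\sigma)$ with $\sigma \neq \sigma_{\hat{o}}$ lie in other orthants, which are disjoint from our neighborhood). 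Therefore
\[
G_{\hat{o}} \cap B_r(p) \;=\; (p_{\hat{o}} + D_{\hat{o}}) \cap B_r(p) \;=\; (F_{\hat{o}} \cap \cube) \cap B_r(p).
\]
On the other hand, since $B_r(p) \subset \interior(\cube) \subset \cube$, the right-hand side simplifies to $F_{\hat{o}} \cap B_r(p)$. Thus $G_{\hat{o}}$ and $F_{\hat{o}}$ agree on $B_r(p)$.

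Finally, I would invoke the fact that membership in $\partial S$ is characterized by every neighborhood of a point meeting both $S$ and $S^c$. Since $F_{\hat{o}}$ and $G_{\hat{o}}$ coincide on the open neighborhood $B_r(p)$ for every $p \in \interior(\cube)$, we have $p \in \partial F_{\hat{o}}$ if and only if $p \in \partial G_{\hat{o}}$, which is exactly the claimed equality. The only subtle point is verifying the ``strictly inside one orthant'' condition on $B_r(p)$, but this is straightforward because $p$ is in the open cube and hence bounded away from every coordinate hyperplane through $p_{\hat{o}}$; I anticipate no significant obstacle.
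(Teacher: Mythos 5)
Your proof is correct and takes essentially the same approach as the paper: both arguments reduce the claim to the observation that, inside $\interior(\cube)$ (which sits strictly inside a single orthant around $p_{\hat{o}}$), the reflected union $G_{\hat{o}}$ coincides with $F_{\hat{o}}$, and then appeal to the local nature of the topological boundary. The only nitpick is a notational slip — the unreflected copy is the one with $\sigma = (1,\dotsc,1)$, not $\sigma = \sigma_{\hat{o}}$ — but this does not affect the argument.
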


\begin{restatable}{lemma}{agreementSubset}
    \label{lemma:agreement-G-subset-F}
    For all $\hat{o}$,
    \begin{equation*}
        \boundary G_{\hat{o}} \cap Y_{\hat{o}} \subseteq \boundary F_{\hat{o}} \cap Y_{\hat{o}}.
    \end{equation*}
\end{restatable}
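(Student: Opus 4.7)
The plan is to exploit the structure of $Y_{\hat{o}}$: these are precisely the boundary points of $\cube$ whose ``on-the-cube-boundary'' coordinates take the value $p_{\hat{o}}[i]$, i.e., lie exactly on the reflection hyperplanes through $p_{\hat{o}}$. For any $p \in Y_{\hat{o}}$, let $I(p) \subseteq [N]$ be the set of coordinates at the cube boundary (so $(p - p_{\hat{o}})_i = 0$ for $i \in I(p)$) and $J(p) = [N] \setminus I(p)$ (so $(p - p_{\hat{o}})_j$ has a definite sign $\sigma^*_j$ matching the orthant $\cube_{\hat{o}}$). I will first show that for such $p$, membership in $G_{\hat{o}}$ and in $F_{\hat{o}}$ coincide: unwinding $G_{\hat{o}} = p_{\hat{o}} + R(D_{\hat{o}})$, $p \in G_{\hat{o}}$ amounts to the existence of $\sigma \in \{\pm1\}^N$ with $p_{\hat{o}} + \sigma \ast (p - p_{\hat{o}}) \in F_{\hat{o}} \cap \cube$. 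Since $(p-p_{\hat{o}})_i = 0$ for $i \in I(p)$, the sign $\sigma_i$ is irrelevant on $I(p)$, and since only $\sigma_j = \sigma^*_j$ keeps the point in $\cube$ for $j \in J(p)$, every valid choice of $\sigma$ yields $p_{\hat{o}} + \sigma \ast (p-p_{\hat{o}}) = p$. Hence $p \in G_{\hat{o}} \iff p \in F_{\hat{o}} \cap \cube \iff p \in F_{\hat{o}}$.

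Next, given $p \in \boundary G_{\hat{o}} \cap Y_{\hat{o}}$, I will produce a sequence $\{q_n'\} \subset F_{\hat{o}}^c$ with $q_n' \to p$, which together with $p \in F_{\hat{o}}$ (from the first step and closedness via Property \ref{item:f-o-l-properties:closed}) yields $p \in \boundary F_{\hat{o}}$. Since $G_{\hat{o}}$ is closed, $p \in \boundary G_{\hat{o}}$ gives a sequence $q_n \to p$ with $q_n \notin G_{\hat{o}}$. For each sufficiently close $q_n$, pick $\sigma^{(n)} \in \{\pm 1\}^N$ with $\sigma^{(n)}_j = \sigma^*_j$ for $j \in J(p)$ and $\sigma^{(n)}_i$ chosen so that $\sigma^{(n)}_i(q_n - p_{\hat{o}})_i$ has the sign of $\cube_{\hat{o}}$ on coordinate $i \in I(p)$. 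Define $q_n' = p_{\hat{o}} + \sigma^{(n)} \ast (q_n - p_{\hat{o}})$. By construction, $q_n' \in \cube$, and $q_n \notin G_{\hat{o}}$ forces $\sigma \ast (q_n - p_{\hat{o}}) \notin D_{\hat{o}}$ for every $\sigma$, so $q_n' \notin F_{\hat{o}} \cap \cube$, and since $q_n' \in \cube$ this gives $q_n' \notin F_{\hat{o}}$. The $J$-coordinates of $q_n'$ and $q_n$ agree, while on $I(p)$ the coordinates of $q_n'$ are just the absolute values of those of $q_n - p_{\hat{o}}$ (up to a fixed sign), both of which tend to $0$ as $q_n \to p$; hence $q_n' \to p$. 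Combining with $p \in F_{\hat{o}}$ completes the argument.

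The main subtlety I expect to spend time on is carefully handling the choice of $\sigma^{(n)}$ and verifying that $q_n'$ genuinely lies in $\cube$ (so that failing to be in $F_{\hat{o}} \cap \cube$ really does imply failing to be in $F_{\hat{o}}$). The argument uses only the orthant-containment (Property \ref{item:f-o-l-properties:orthant}) and closedness (Property \ref{item:f-o-l-properties:closed}) of $F_{\hat{o}}$ and nothing about $\ell > 0$, so the same argument also handles $F_{\hat{o},\ell}$, $G_{\hat{o},\ell}$ without modification if needed.
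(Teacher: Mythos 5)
Your argument is correct, and it takes a genuinely different (though closely related) route from the paper's. The paper proceeds by contraposition: assume $x\notin\boundary F_{\hat o}$, split into $x\in\interior(F_{\hat o})$ and $x\in\exterior(F_{\hat o})$, and in each case reflect the small ball $B_r(x)$ across the cube face through $x$ to show that $B_r(x)$ is entirely in $G_{\hat o}$ (resp.\ entirely in $G_{\hat o}^c$), so $x\notin\boundary G_{\hat o}$. You instead argue directly: first establish a pointwise equivalence $p\in G_{\hat o}\iff p\in F_{\hat o}$ for $p\in Y_{\hat o}$ (which, together with $G_{\hat o}$ being closed, places $p$ in $F_{\hat o}$), and then manufacture a witness sequence in $F_{\hat o}^c$ converging to $p$ by reflecting a sequence in $G_{\hat o}^c$ converging to $p$ back into $\cube$. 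Both hinge on the same underlying geometry (reflections across the face containing $p$ permute the relevant sets and fix $p$), but your version handles arbitrary $|I(p)|$ explicitly, whereas the paper's writes ``let $i$ denote the unique coordinate where $x[i]\in\{p_0,q_0\}$,'' implicitly assuming $|I(x)|=1$ --- harmless for the downstream surface-area computation (higher-codimension faces are null), but your version proves the lemma as literally stated. One small notational slip: for $j\in J(p)$ the condition you want is $\sigma_j=+1$ (leave the $J$-coordinates unchanged), not $\sigma_j=\sigma^*_j$ where $\sigma^*_j$ is the sign of $(p-p_{\hat o})_j$; the intent is clear from your subsequent sentence ``the $J$-coordinates of $q_n'$ and $q_n$ agree,'' but as written $\sigma_j=\sigma^*_j$ would flip coordinates whenever $\sigma^*_j=-1$.
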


\begin{restatable}{lemma}{disagreementSubset}
    \label{lemma:disagreement-G-subset-F}
    For all $\hat{o}$,
    \begin{equation*}
        \area\left(\boundary G_{\hat{o}} \cap (\boundary \cube \setminus Y_{\hat{o}})\right) \leq \area\left(F_{\hat{o}} \cap (\boundary \cube \setminus Y_{\hat{o}})\right).
    \end{equation*}
\end{restatable}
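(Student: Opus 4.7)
The plan is to work in local coordinates centered at $p_{\hat o}$, where $\cube$ becomes the orthant-contained cube $\cube_{\hat o}$ of side $\varepsilon = q_0 - p_0$, and the set of disagreeing faces $\boundary \cube \setminus Y_{\hat o}$ becomes the union $\bigcup_{i=1}^N \{x_i = \sigma_i \varepsilon\} \cap \cube_{\hat o}$ of the $N$ faces opposite to the origin. Since pairwise intersections of these opposite faces lie in $(N-2)$-dimensional hyperplanes and therefore carry zero $(N-1)$-dimensional area, it suffices to show the per-face bound
\[
\area\!\bigl(\boundary G_{\hat o} \cap \{x_i = \sigma_i \varepsilon\} \cap \cube\bigr) \;\le\; \area\!\bigl(F_{\hat o} \cap \{x_i = \sigma_i \varepsilon\} \cap \cube\bigr)
\]
for each $i$ separately, and then sum these up using sub-additivity on the left and inclusion-exclusion on the right, with the measure-zero overlaps discarded.

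For the per-face containment, I would first argue that $G_{\hat o} \cap \cube = F_{\hat o} \cap \cube$ up to a lower-dimensional set. By orthant containment (Property~\ref{item:f-o-l-properties:orthant} of \Cref{lemma:f-o-l-properties}), the set $D_{\hat o}$ lies in a single orthant, so when forming $R(D_{\hat o}) = \bigcup_{\sigma' \in \{\pm 1\}^N} R(D_{\hat o}; \sigma')$, only the identity sign pattern $\sigma' = \sigma$ contributes positive-volume mass inside the original orthant $\cube_{\hat o}$. Every other reflected copy intersects $\cube_{\hat o}$ only on an incident face $\{x_j = 0\}$, which is an $(N-1)$-dimensional subset disjoint from the opposite face $\{x_i = \sigma_i \varepsilon\}$ except on an $(N-2)$-dimensional corner. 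So on each opposite face, $G_{\hat o}$ coincides exactly with $F_{\hat o} \cap \cube$.

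Next I would invoke closedness of $F_{\hat o}$ from Property~\ref{item:f-o-l-properties:closed} of \Cref{lemma:f-o-l-properties} to deduce that $G_{\hat o}$ is closed: $D_{\hat o} = F_{\hat o} \cap \cube$ is a closed intersection, each $R(D_{\hat o}; \sigma')$ is the image of a closed set under a homeomorphism, and $R(D_{\hat o})$ is a finite union of $2^N$ closed sets. Therefore $\boundary G_{\hat o} \subseteq G_{\hat o}$, which together with the set equality on the face gives the key inclusion
\[
\boundary G_{\hat o} \cap \{x_i = \sigma_i \varepsilon\} \cap \cube \;\subseteq\; G_{\hat o} \cap \{x_i = \sigma_i \varepsilon\} \cap \cube \;=\; F_{\hat o} \cap \{x_i = \sigma_i \varepsilon\} \cap \cube,
\]
and monotonicity of the $(N-1)$-dimensional Hausdorff measure yields the per-face bound.

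The main obstacle is the bookkeeping of measure-zero exceptional sets: the incident faces along which non-identity reflected copies of $D_{\hat o}$ can enter $\cube$, and the $(N-2)$-dimensional edges where two opposite faces of $\cube$ meet. Neither carries $(N-1)$-dimensional area, so summing the per-face bound over $i \in [N]$ and assembling via sub-additivity gives the target inequality $\area(\boundary G_{\hat o} \cap (\boundary \cube \setminus Y_{\hat o})) \le \area(F_{\hat o} \cap (\boundary \cube \setminus Y_{\hat o}))$. Care is also required because $F_{\hat o}$ itself may extend outside $\cube$; but the right-hand side only looks at $F_{\hat o}$ intersected with the opposite faces, which lie inside $\cube$, so the argument is unaffected.
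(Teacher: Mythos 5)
Your proposal is correct and follows essentially the same route as the paper: reduce to the $(N-1)$-dimensional part of the disagreeing boundary, observe via \Cref{lemma:num-R-sets-containment} that at points with all nonzero local coordinates the reflected set $R(D_{\hat o})$ coincides with $D_{\hat o}$, and then invoke closedness to absorb the boundary into the set itself. The one place you diverge in execution is the last step: the paper argues by contrapositive at the level of individual points, showing that $x \notin D_{\hat o}$ has a whole ball $B_r(x)$ disjoint from $R(D_{\hat o})$ (which requires a separate $\ell_\infty$-norm observation for the part of $B_r(x)$ outside $\cube_{\hat o}$), whereas you observe once and for all that $R(D_{\hat o})$ is a finite union of homeomorphic images of the closed set $D_{\hat o}$, hence closed, hence $\boundary G_{\hat o} \subseteq G_{\hat o}$, and then restrict to the face. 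Your version avoids the pointwise neighborhood bookkeeping and is a bit cleaner, though it relies on the same two facts as the paper; the only caveat is that the displayed "equality" $G_{\hat o} \cap \{x_i = \sigma_i \varepsilon\} \cap \cube = F_{\hat o} \cap \{x_i = \sigma_i \varepsilon\} \cap \cube$ holds only up to an $(N-2)$-dimensional corner set, which you already flag in prose but should be reflected in the displayed step to keep it airtight (it is harmless for the area computation).
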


\begin{lemma}
    \label{lemma:f-o-total-area}
    Let $\cube = [p_0, q_0]^{N}$ and $F_{\hat{o}, \ell}$ be a collection of sets satisfying the properties of Lemma \ref{lemma:f-o-l-properties}.
    Then
    \begin{equation*}
        \area \left( \bigcup_{\hat{o}} \boundary F_{\hat{o}, \ell} \cap \cube \right) \geq \left( \frac{\sqrt{N}}{2 \varepsilon} - \frac{10 \delta N}{\varepsilon} \right) \vol(\cube).
    \end{equation*}
\end{lemma}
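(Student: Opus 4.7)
The plan is to combine the reflection-based isoperimetric bound on $G_{\hat{o}, \ell}$ (equation \ref{eq:area-G-o}) with the decomposition in Lemma \ref{lemma:surface-area-G}, and then translate the bound from $G_{\hat{o}, \ell}$ back to $F_{\hat{o}, \ell}$ piece-by-piece using Lemmas \ref{lemma:interior-C-F-G-equivalence}--\ref{lemma:disagreement-G-subset-F}. Fixing $\hat{o}$, we start from
\[
\frac{\sqrt{N}}{\varepsilon} \cdot 2^N \vol(F_{\hat{o},\ell} \cap \cube) \;\leq\; \area(\boundary G_{\hat{o},\ell})
\]
which follows since $\vol(G_{\hat{o},\ell}) = 2^N \vol(F_{\hat{o},\ell}\cap\cube)$. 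Using Lemma \ref{lemma:surface-area-G} to split $\area(\boundary G_{\hat{o},\ell})$ into interior, agreement, and disagreement contributions, and then applying Lemma \ref{lemma:interior-C-F-G-equivalence} (to replace the interior piece by the corresponding piece of $\boundary F_{\hat{o},\ell}$), Lemma \ref{lemma:agreement-G-subset-F} (to upper bound the agreement piece by the corresponding piece of $\boundary F_{\hat{o},\ell}$), and Lemma \ref{lemma:disagreement-G-subset-F} (to upper bound the disagreement piece by $\area(F_{\hat{o},\ell} \cap (\boundary\cube \setminus Y_{\hat{o}}))$), we arrive at
\[
\frac{\sqrt{N}}{\varepsilon} \vol(F_{\hat{o},\ell} \cap \cube) \;\leq\; \area\lp(\boundary F_{\hat{o},\ell} \cap \interior(\cube)\rp) + \tfrac{1}{2} \area\lp(\boundary F_{\hat{o},\ell} \cap Y_{\hat{o}}\rp) + \area\lp(F_{\hat{o},\ell} \cap (\boundary \cube \setminus Y_{\hat{o}})\rp).
\]

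Next, I would sum this inequality over all $\hat{o}$. On the left side, Property \ref{item:f-o-l-properties:total-volume} together with the assumption $\rho < 1/10$ gives $\sum_{\hat{o}} \vol(F_{\hat{o},\ell} \cap \cube) \geq (1 - 8\rho)\vol(\cube) \geq \tfrac{1}{2}\vol(\cube)$. On the right, the first two terms together are at most $\sum_{\hat{o}} \area(\boundary F_{\hat{o},\ell} \cap \cube)$, and the disagreement term is bounded by $\tfrac{10\delta N}{\varepsilon}\vol(\cube)$ using Property \ref{item:f-o-l-properties:total-error}. Rearranging yields
\[
\sum_{\hat{o}} \area(\boundary F_{\hat{o},\ell} \cap \cube) \;\geq\; \lp(\frac{\sqrt{N}}{2\varepsilon} - \frac{10\delta N}{\varepsilon}\rp) \vol(\cube).
\]
Finally, Property \ref{item:f-o-l-properties:disjoint} ensures that the interiors of $F_{\hat{o}_1,\ell}\cap\cube$ and $F_{\hat{o}_2,\ell}\cap\cube$ are disjoint for $\hat{o}_1\neq\hat{o}_2$, so the boundaries $\boundary F_{\hat{o},\ell}\cap \cube$ overlap only on sets of lower Hausdorff dimension and hence the sum equals the area of the union, giving the claimed bound.

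The main obstacle is simply careful bookkeeping: the isoperimetric inequality on $G_{\hat{o},\ell}$ produces a uniform factor of $2^N$ on the interior and disagreement pieces but only $2^{N-1}$ on the agreement piece (since agreement faces are shared between two reflected copies), and one must verify that this asymmetry is exactly compensated by dividing through by $2^N$ so that neither the agreement bound nor the disagreement error term inflates beyond the stated $\tfrac{10\delta N}{\varepsilon}$ correction. All substantive geometric content has already been isolated into Lemmas \ref{lemma:surface-area-G}--\ref{lemma:disagreement-G-subset-F}, so the argument at this stage is essentially an algebraic combination of the pieces together with Properties \ref{item:f-o-l-properties:disjoint}--\ref{item:f-o-l-properties:total-error}.
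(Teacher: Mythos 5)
Your proposal is correct and follows essentially the same route as the paper: apply the isoperimetric bound on the reflected set $G_{\hat{o},\ell}$, decompose its boundary via Lemma \ref{lemma:surface-area-G}, translate the interior, agreement, and disagreement pieces to $F_{\hat{o},\ell}$ via Lemmas \ref{lemma:interior-C-F-G-equivalence}--\ref{lemma:disagreement-G-subset-F}, then sum over $\hat{o}$ using Properties \ref{item:f-o-l-properties:total-volume}, \ref{item:f-o-l-properties:total-error}, and \ref{item:f-o-l-properties:disjoint}. The only cosmetic difference is that you normalize by $2^N$ before dropping the factor $\tfrac{1}{2}$ on the agreement term, whereas the paper collapses $2^{N-1} \le 2^N$ immediately; and note that your step $1-8\rho \ge \tfrac{1}{2}$ (also implicit in the paper) really needs $\rho \le 1/16$ rather than the stated $\rho < 1/10$, but this is a shared constant-factor slack, not an error you introduced.
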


\begin{proof}
    For notational simplicity, we prove the statement for $F_{\hat{o}}$, noting that an identical proof follows for $F_{\hat{o}, \ell}$.
    Fix an outcome vector $\hat{o}$.
    
    We start by relating the surface area of $G_{\hat{o}}$ and $F_{\hat{o}}$.
    To do so, we similarly partition the boundary of $G_{\hat{o}} = p_{\hat{o}} + R(D_{\hat{o}})$.
    Then, Lemma \ref{lemma:surface-area-G} implies
    \begin{align*}
        \area(\boundary G_{\hat{o}}) &= 2^{N} \left( \area\left(\boundary G_{\hat{o}} \cap \interior(\cube)\right) + \area\left(\boundary G_{\hat{o}} \cap (\boundary \cube \setminus Y_{\hat{o}}) \right) \right) + 2^{N - 1} \area\left(\boundary G_{\hat{o}} \cap Y_{\hat{o}} \right) \\
        &\leq 2^{N} \left( \area\left(\boundary F_{\hat{o}} \cap \interior(\cube)\right) + \area\left(F_{\hat{o}} \cap (\boundary \cube \setminus Y_{\hat{o}}) \right) \right) + 2^{N - 1} \area\left(\boundary F_{\hat{o}} \cap Y_{\hat{o}} \right) \\
        &\leq 2^{N} \left( \area\left(\boundary F_{\hat{o}} \cap \cube \right) + \area\left(F_{\hat{o}} \cap (\boundary \cube \setminus Y_{\hat{o}}) \right) \right),
    \end{align*}
    where we applied Lemmas \ref{lemma:interior-C-F-G-equivalence}, \ref{lemma:agreement-G-subset-F} and \ref{lemma:disagreement-G-subset-F} and $\interior(\cube), Y_{\hat{o}}$ are disjoint subsets of $\cube$.
    Then, summing over all $\hat{o}$,
    \begin{align*}
        \sum_{\hat{o}} \area\left(\boundary F_{\hat{o}} \cap \cube \right) &\geq \sum_{\hat{o}} \frac{\area(\boundary G_{\hat{o}})}{2^{N}} - \area\left(F_{\hat{o}} \cap (\boundary \cube \setminus Y_{\hat{o}}) \right),
    \end{align*}
    where we have used all $F_{\hat{o}} \cap \cube$ are closed and disjoint, so all $\boundary F_{\hat{o}} \cap \cube$ are disjoint.
    Then, we use \Cref{eq:area-G-o} and Properties \ref{item:f-o-l-properties:total-volume} and \ref{item:f-o-l-properties:total-error}, so that
    \begin{align*}
        \sum_{\hat{o}} \area\left(\boundary F_{\hat{o}} \cap \cube \right) &\geq \sum_{\hat{o}} \frac{\sqrt{N}}{2^{N} \varepsilon} \vol(G_{\hat{o}}) - \area\left(F_{\hat{o}} \cap (\boundary \cube \setminus Y_{\hat{o}}) \right) \\
        &= \sum_{\hat{o}} \frac{\sqrt{N}}{\varepsilon} \vol(F_{\hat{o}} \cap \cube) - \area\left(F_{\hat{o}} \cap (\boundary \cube \setminus Y_{\hat{o}}) \right) \\
        &\geq \frac{1}{2} \sqrt{N} \varepsilon^{N - 1} - 10 \delta N \varepsilon^{N - 1} \\
        &= \left( \frac{\sqrt{N}}{2 \varepsilon} - \frac{10 \delta N}{\varepsilon} \right) \vol(\cube).
    \end{align*}
\end{proof}

We start by describing how often points are counted in the reflected sets.

\begin{lemma}
    \label{lemma:num-R-sets-containment}
    Let $S \subset \R^{N}$ be contained in some orthant of $\R^{N}$.
    For all $x \in R(S)$ where $x$ has $N - \supp(x)$ zero coordinates, $x \in R(S; \sigma)$ for at most $2^{N - \supp(x)}$ distinct $\sigma \in \set{\pm 1}^{N}$ 

    In particular, for all $\hat{o}$, $0 \leq i \leq N$ and $x \in R\left(\cube_{\hat{o}, (i)}\right)$, $x \in R\left(\cube_{\hat{o}}; \sigma \right)$ for $2^i$ distinct $\sigma \in \set{\pm 1}^{N}$.
\end{lemma}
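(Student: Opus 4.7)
The proof is a straightforward counting argument. The key reformulation is that $x \in R(S; \sigma)$ if and only if $\sigma * x \in S$, which follows because componentwise multiplication by $\sigma \in \set{\pm 1}^N$ is an involution: from $x = \sigma * y$ with $y \in S$ one recovers $y = \sigma * x$.

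For the general statement, I would fix $x \in R(S)$ and enumerate the $\sigma$'s for which $\sigma * x \in S$. Since $S$ lies in a single orthant of $\R^N$, there is a fixed sign pattern $s \in \set{\pm 1}^N$ such that every $y \in S$ satisfies $s_j y_j \geq 0$ for all $j$. For every coordinate $j$ with $x_j \neq 0$, the requirement that $(\sigma * x)_j = \sigma_j x_j$ carry the correct sign $s_j$ uniquely determines $\sigma_j = s_j \cdot \sign(x_j)$. For every coordinate $j$ with $x_j = 0$, we have $(\sigma * x)_j = 0$ independent of $\sigma_j$, so both values of $\sigma_j$ are consistent with the per-coordinate sign constraint. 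This produces an upper bound of $2^{N - \supp(x)}$ valid $\sigma$'s (the bound need not be tight, because the joint condition $\sigma * x \in S$ may fail for some of these sign patterns).

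For the ``in particular'' specialization, I would observe that any $x \in R(\cube_{\hat{o}, (i)})$ has the form $x = \sigma_0 * y$ for some $y \in \cube_{\hat{o}, (i)}$, and $x$ and $y$ have identical zero-coordinate sets. By the definition of $\cube_{\hat{o}, (i)}$, the point $y$ has exactly $i$ coordinates on the boundary of $\cube_{\hat{o}}$ and $N - i$ coordinates strictly in the interior of the corresponding interval (hence nonzero). The boundary coordinates of $y$ take values in $\set{0, \sigma_j \varepsilon}$, so if $k$ of them equal $0$, then $\supp(x) = \supp(y) = N - k$ with $k \leq i$. Applying the first part to $S = \cube_{\hat{o}}$, which is contained in a single orthant by construction, yields at most $2^{N - \supp(x)} = 2^k \leq 2^i$ valid $\sigma$'s.

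The proof contains no real obstacle; the main subtlety is the distinction between ``zero coordinates'' and ``boundary coordinates'' of $y$, since $\cube_{\hat{o}, (i)}$ is indexed by the latter while the counting bound depends on the former. The resulting bound $2^i$ is somewhat loose (the true count being $2^k$), but this suffices for the reflection multiplicities that appear in the proof of Lemma \ref{lemma:surface-area-G}.
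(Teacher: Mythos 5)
Your proof is correct and takes a cleaner route than the paper's for the first part: you directly count the valid $\sigma$'s by observing that each nonzero coordinate of $x$ pins down $\sigma_j$ via the orthant's sign pattern, whereas the paper argues by contradiction --- assuming $2^{N-\supp(x)}+1$ distinct $\sigma$'s, it deduces via a pigeonhole argument that at least $N-\supp(x)+1$ coordinates must carry a disagreement, and for each such coordinate $c$ the two opposite-sign preimages $y_{1,c}=-y_{j,c}$ can both lie in a single orthant only if $x_c=0$, giving a contradiction. Both arguments are sound; yours is more direct and makes the mechanism (one forced bit per nonzero coordinate) explicit.

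For the ``in particular'' clause you also spot something real. The paper asserts that $x \in R(\cube_{\hat{o},(i)})$ has exactly $i$ zero coordinates and concludes the multiplicity is exactly $2^i$, but this is not right: a point of $\cube_{\hat{o},(i)}$ has $i$ \emph{boundary} coordinates, each taking a value in $\{0, \pm\varepsilon\}$, so the number of \emph{zero} coordinates is some $k \leq i$ and the multiplicity is $2^k$. The correct reading of the lemma's conclusion is ``at most $2^i$.'' As you note, this imprecision is harmless downstream: in the proof of \Cref{lemma:surface-area-G} the paper separately handles $\interior(\cube_{\hat o})$, $\cube_{\hat o,(1)}\cap Z_{\hat o}$ (one zero coordinate, multiplicity exactly $2$), and $\cube_{\hat o,(1)}\setminus Z_{\hat o}$ (no zero coordinates, multiplicity $1$), so only the general $2^{N-\supp(x)}$ bound is really invoked. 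Your accounting in terms of $k$ rather than $i$ is the more accurate statement.
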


\begin{proof}
    First, observe that $\sigma * x$ does not change the number of zero's in $x$.
    Suppose $x \in R(S)$ and $x \in R(S; \sigma)$ for $2^{N - \supp(x)} + 1$ distinct $\sigma$.
    Then, $x = \sigma_i * y_i$ for $1 \leq i \leq 2^{N - \supp(x)} + 1$ distinct $\sigma_i$ with $y_i \in S$.
    Consider $\sigma_1$.
    There are at least $N - \supp(x) + 1$ coordinates where $\sigma_1 \neq \sigma_j$ for some $j$.
    However, this implies that $y_1$ (and therefore $x$) must be zero in all these coordinates, a contradiction since $x$ has exactly $N - \supp(x)$ zero coordinates.

    Finally, $x \in R\left(\cube_{(i)}\right)$ implies $x$ has $i$ zero coordinates.
    In particular, $x \in R\left(\cube_{\hat{o}, (i)}; \sigma \right)$ for at most $2^{i}$ distinct $\sigma$ by applying our above result.
    We can argue that $x \in R\left(\cube_{\hat{o}, (i)}; \sigma \right)$ for exactly $2^{i}$ distinct $\sigma$ by fixing the coordinates of $\sigma$ where $x \neq 0$ and varying the $i$ coordinates where $x$ is exactly zero, noting that this will not change the value of $\sigma * y$ where $x = \sigma * y$.
    To conclude the proof, we observe that $x$ has $i$ zero coordinates so that $x \in R\left(\cube_{\hat{o}, (i)}; \sigma \right)$ if and only if $R\left(\cube_{\hat{o}}; \sigma \right)$.
\end{proof}

The following statements relate the boundaries of $G_{\hat{o}}, F_{\hat{o}}$.

\surfaceAreaG*

\begin{proof}
    Since $G_{\hat{o}} = p_{\hat{o}} + R(F_{\hat{o}}) \subset p_{\hat{o}} + R(\cube_{\hat{o}})$, we write $R(\cube_{\hat{o}})$ as
    \begin{align*}
        R(\cube_{\hat{o}}) &= R(\interior(\cube_{\hat{o}})) \cup R(\boundary \cube_{\hat{o}}) \\
        &= R(\interior(\cube_{\hat{o}})) \cup R\left(\cube_{\hat{o}, (1)}\right) \cup \bigcup_{i = 2}^{N} R\left(\cube_{\hat{o}, (i)}\right).
    \end{align*}
    Note that $\cube_{\hat{o}, (i)}$ are disjoint, and since the reflection $R$ does not change the number of coordinates equal to zero, the reflected sets remain disjoint.
    
    We start with $R(\interior(\cube_{\hat{o}}); \sigma)$.
    By Lemma \ref{lemma:num-R-sets-containment}, all reflections are disjoint, i.e.\ $R(\interior(\cube_{\hat{o}}); \sigma_1) \cap R(\interior(\cube_{\hat{o}}); \sigma_2) = \emptyset$ for all $\sigma_1 \neq \sigma_2$.
    Thus, 
    \begin{align*}
        \area(\boundary G_{\hat{o}} \cap (p_{\hat{o}} + R(\interior(\cube_{\hat{o}})))) &= \sum_{\sigma} \area(\boundary G_{\hat{o}} \cap (p_{\hat{o}} + R(\interior(\cube_{\hat{o}}); \sigma))) \\
        &= 2^{N} \area(\boundary G_{\hat{o}} \cap (p_{\hat{o}} + \interior(\cube_{\hat{o}})))  \\
        &= 2^{N} \area(\boundary G_{\hat{o}} \cap \interior(\cube)).
    \end{align*}
    
    Next, we consider $R\left(\cube_{\hat{o}, (1)}\right)$.
    We have previously split $\cube_{(1)}$ into $\cube_{(1)} \cap Y_{\hat{o}}$ and $\cube_{(1)} \setminus Y_{\hat{o}}$.
    Recall, that $Y_{\hat{o}} = p_{\hat{o}} + Z_{\hat{o}}$.
    Then, we can write $\cube_{(1)} \cap Y_{\hat{o}} = p_{\hat{o}} + \left( \cube_{\hat{o}, (1)} \cap Z_{\hat{o}} \right)$ since we are simply translating the sets by $p_{\hat{o}}$.
    Observe that $\cube_{\hat{o}, (1)}$ is exactly the set of $x \in \cube_{\hat{o}}$ such that $N - 1$ coordinates have absolute value in $(0, \varepsilon)$ and one coordinate has absolute value in $\set{0, \varepsilon}$.
    Furthermore, $\cube_{\hat{o}, (1)} \cap Z_{\hat{o}}$ is exactly the set where this one coordinate has absolute value $0$, while $\cube_{\hat{o}, (1)} \setminus Z_{\hat{o}}$ is exactly the set where this one coordinate has absolute value $\varepsilon$.
    
    Now, $R\left( \cube_{\hat{o}, (1)} \cap Z_{\hat{o}} \right) = \bigcup_{\sigma} R\left( \cube_{\hat{o}, (1)} \cap Z_{\hat{o}}; \sigma \right)$.
    By Lemma \ref{lemma:num-R-sets-containment}, $x \in R\left(\cube_{\hat{o}, (1)} \cap Z_{\hat{o}}\right)$ is contained in at most two distinct $R\left(\cube_{\hat{o}, (1)} \cap Z_{\hat{o}}; \sigma \right)$.
    Thus, each $x \in R\left(\cube_{\hat{o}, (1)} \cap Z_{\hat{o}}\right)$ is in at most two distinct $R\left(\cube_{\hat{o}, (1)}; \sigma \right)$.
    In fact, we observe each $x$ is in exactly two such sets (determined by fixing all coordinates of $\sigma$ except where $x = 0$).
    Thus,
    \begin{align*}
        \area\left(\boundary G_{\hat{o}} \cap \left(p_{\hat{o}} + R\left( \cube_{\hat{o}, (1)} \cap Z_{\hat{o}}\right)\right)\right) &= \frac{1}{2} \sum_{\sigma} \area\left(\boundary G_{\hat{o}} \cap \left(p_{\hat{o}} + R\left(\cube_{\hat{o}, (1)} \cap Z_{\hat{o}}; \sigma \right)\right)\right) \\
        &= 2^{N - 1} \area\left(\boundary G_{\hat{o}} \cap \left(p_{\hat{o}} + \left(\cube_{\hat{o}, (1)} \cap Z_{\hat{o}}\right)\right)\right)  \\
        &= 2^{N - 1} \area(\boundary G_{\hat{o}} \cap \left(\cube_{(1)} \cap Y_{\hat{o}}\right)).
    \end{align*}

    By a similar line of reasoning, $R\left( \cube_{\hat{o}, {(1)}} \setminus Z_{\hat{o}} \right) = \bigcup_{\sigma} R\left( \cube_{\hat{o}, (1)} \setminus Z_{\hat{o}}; \sigma \right)$, and furthermore all sets $R\left(\cube_{\hat{o}, (1)} \setminus Z_{\hat{o}}; \sigma \right)$ are disjoint by Lemma \ref{lemma:num-R-sets-containment} since all coordinates have positive absolute value. 
    This implies that
    \begin{align*}
        \area\left(\boundary G_{\hat{o}} \cap \left(p_{\hat{o}} + R\left(\cube_{\hat{o}, (1)} \setminus Z_{\hat{o}}\right)\right)\right) &= \sum_{\sigma} \area\left(\boundary G_{\hat{o}} \cap \left(p_{\hat{o}} + R\left(\cube_{\hat{o}, (1)} \setminus Z_{\hat{o}}; \sigma \right)\right)\right) \\
        &= 2^{N} \area\left(\boundary G_{\hat{o}} \cap \left(p_{\hat{o}} + \left(\cube_{\hat{o}, (1)} \setminus Z_{\hat{o}}\right)\right)\right)  \\
        &= 2^{N} \area(\boundary G_{\hat{o}} \cap \left(\cube_{(1)} \setminus Y_{\hat{o}}\right)).
    \end{align*}

    Finally, for all $i > 1$, since all $\cube_{\hat{o}, (i)}$ are at most $(N - 2)$-dimensional they have no $(N - 1)$-dimensional volume, so they cannot contribute to $\area(\boundary G_{\hat{o}})$.
    
    In summary, we have that
    \begin{align*}
        \area\left(\boundary G_{\hat{o}} \cap \left(p_{\hat{o}} + R\left(\cube_{\hat{o}, (1)} \cap Z_{\hat{o}}\right)\right)\right) &= 
        2^{N - 1} \area(\boundary G_{\hat{o}} \cap \left(\cube_{(1)} \cap Y_{\hat{o}}\right)) \\
        &= 2^{N - 1} \area\left(\boundary G_{\hat{o}} \cap Y_{\hat{o}}\right) \, ,
    \end{align*}
    where the equality follows from the fact that $Y_{\hat o}$ is supported on $\boundary \cube = \bigcup_{i=1}^N \cube_{(i)}$ but only subsets of $\cube_{(1)}$ can have non-trivial $(N-1)$-dimension volume,
    and
    \begin{align*}
        \area\left(\boundary G_{\hat{o}} \cap \left(p_{\hat{o}} + R\left(\cube_{\hat{o}, (1)} \setminus Z_{\hat{o}}\right)\right)\right) &= 2^{N} \area\left(\boundary G_{\hat{o}} \cap \left(\cube_{(1)} \setminus Y_{\hat{o}}\right)\right) \\
        &= 2^{N} \area\left(\boundary G_{\hat{o}} \cap \left(\boundary \cube \setminus Y_{\hat{o}}\right)\right).
    \end{align*}
    Summing up the contributions gives the desired bound.
\end{proof}

Our next lemmas relate the surface area of $G_{\hat{o}}$ in $\cube$ to the surface area of $F_{\hat{o}}$ in $\cube$.

\interiorEquivalence*

\begin{proof}
    Note $G_{\hat{o}} = p_{\hat{o}} + R(D_{\hat{o}})$ and $\interior(\cube) = \cube_{(0)} = p_{\hat{o}} + \cube_{\hat{o}, (0)}$.
    Then, if $x \in \interior(\cube)$, by Lemma \ref{lemma:num-R-sets-containment}, we have $x \in p_{\hat{o}} + R(\cube_{\hat{o}, (0)}, \sigma)$ only if $\sigma = (1, \dotsc, 1)$.
    In particular, $F_{\hat{o}} \cap \interior(\cube) = G_{\hat{o}} \cap \interior(\cube)$.
    We further claim that $\boundary F_{\hat{o}} \cap \interior(\cube) = \boundary G_{\hat{o}} \cap \interior(\cube)$.
    In particular, it suffices to show that
    $\boundary F_{ \hat o }  \cap \interior(\cube) \setminus \boundary G_{\hat o} = \emptyset$, and that
    $\boundary G_{\hat o} \cap \interior(\cube) \setminus \boundary F_{\hat o} = \emptyset$.

    Suppose $x \in \interior(\cube) \setminus \boundary G_{\hat{o}}$.
    Then, $x \in \interior(\cube) \cap \interior(G_{\hat{o}})$ or $\interior(\cube) \cap \exterior(G_{\hat{o}})$.
    In the former case, there is a neighborhood around $x$ in $G_{\hat{o}} \cap \interior(\cube) = F_{\hat{o}} \cap \interior(\cube) \subset F_{\hat{o}}$.
    Therefore $x \in \interior(F_{\hat{o}})$ so $x \not \in \boundary F_{\hat{o}}$.
    In the latter case we have $x \in \interior(\cube) \cap \exterior(F_{\hat{o}}) \subseteq \exterior(F_{\hat{o}})$, implying that $x \not \in \boundary F_{\hat o}$.
    Hence, we have that $\boundary F_{ \hat o }  \cap \interior(\cube) \setminus \boundary G_{\hat o} = \emptyset$.

    Suppose $x \in \interior(\cube) \setminus \boundary F_{\hat{o}}$.
    Again, we have either 
    $x \in \interior(\cube) \cap  \interior \lp( F_{\hat{o}} \rp)$
    or $x \in \interior(\cube) \cap \exterior \lp( F_{\hat{o}} \rp)$
    If $x \in \interior(\cube) \cap \interior(F_{\hat{o}})$ then $x \in \interior(G_{\hat{o}})$ as above, 
    implying that $x \not \in \boundary( G_{\hat o} )$.
    In the latter case, 
    we have $x \in \exterior(G_{\hat o})$ as above, also implying that $x \not \in \boundary(G_{\hat o})$.
\end{proof}

\agreementSubset*

\begin{proof}
    Let $x \in \boundary G_{\hat{o}} \cap Y_{\hat{o}}$.
    Since $x \in Y_{\hat{o}}$, let $i$ denote the unique coordinate where $x[i] \in \set{p_0, q_0}$.
    Without loss of generality, assume $x[i] = p_0$.
    
    Suppose $x \not\in \boundary F_{\hat{o}}$.
    If $x \in \interior(F_{\hat{o}})$, then there is a neighborhood $B_{r}(x)$ around $x$ in $F_{\hat{o}}$.
    In particular, $B_{r}(x) \cap \cube = \set{y \in B_r(x) \given y[i] \geq p_0}$.
    However, $B_{r}(x) \subset G_{\hat{o}}$, since for $y \in B_r(x)$ with $y[i] < p_0$, then $y \in p_{\hat{o}} + R\left(D_{\hat{o}}, \sigma\right) \subset G_{\hat{o}}$ where $\sigma$ has $-1$ only in the $i$-th coordinate.
    This shows that $x \not \in \boundary(G_{\hat o})$.

    If $x \in \exterior(F_{\hat{o}})$, then there is a neighborhood $x \in B_r(x) \subset \R^N \setminus F_{\hat{o}}$.
    By a similar argument, $B_r(x) \subset \R^N \setminus G_{\hat{o}}$,
    showing that $x \not \in \boundary G_{\hat o}$.
    This concludes the proof.
\end{proof}

\disagreementSubset*

\begin{proof}
    Since $\boundary \cube = \bigcup_{i = 1}^{N} \cube_{(i)}$ and $\area\left(\cube_{(i)}\right) = 0$ for all $i \geq 2$, it suffices to show
    \begin{equation*}
        \boundary G_{\hat{o}} \cap \left(\cube_{(1)} \setminus Y_{\hat{o}}\right) \subseteq F_{\hat{o}} \cap \left(\cube_{(1)} \setminus Y_{\hat{o}}\right),
    \end{equation*}
    or equivalently after translation by $p_{\hat{o}}$,
    \begin{equation*}
        \boundary R(D_{\hat{o}}) \cap \left(\cube_{\hat{o}, (1)} \setminus Z_{\hat{o}}\right) \subseteq  D_{\hat{o}} \cap \left(\cube_{\hat{o}, (1)} \setminus Z_{\hat{o}}\right).
    \end{equation*}
    Again, we observe that $x \in \cube_{\hat{o}, (1)} \setminus Z_{\hat{o}}$ implies that every coordinate is non-zero. 
    By Lemma \ref{lemma:num-R-sets-containment}, $x \in R(\cube_{\hat{o}}, \sigma)$ for a unique $\sigma = (1, 1, \dotsc, 1)$.
    Then, if $x \notin D_{\hat{o}}$, since $D_{\hat{o}}$ is closed, there is a neighborhood around $x$ disjoint from $D_{\hat{o}}$.
    We claim this neighborhood is also disjoint from $R(D_{\hat{o}})$.
    In particular
    \begin{equation*}
        B_r(x) = (B_r(x) \cap \cube_{\hat{o}}) \cup (B_r(x) \setminus \cube_{\hat{o}}),
    \end{equation*}
    where $r$ is small enough so that no point in $B_r(x)$ has any zero coordinates.
    We first consider $B_r(x) \cap \cube_{\hat{o}}$.
    As before, any point $y \in B_r(x) \cap \cube_{\hat{o}}$ is in $R(\cube_{\hat{o}}, \sigma)$ for unique $\sigma = (1, 1, \dotsc, 1)$.
    Therefore, if $y \not\in D_{\hat{o}}$, $y \not\in R(D_{\hat{o}})$.
    Finally, any point in $y \in R(D_{\hat{o}})$ satisfies $\norm{y}{\infty} \leq \varepsilon$, which is not true of $B_r(x) \setminus \cube_{\hat{o}}$.
\end{proof}

\section{Omitted Proofs}
\label{app:omitted-proofs}

\subsection{A Simple Hypothesis Testing Algorithm}
\label{app:missing-proofs-hypothesis-testing}

We provide for completeness the proof of Theorem \ref{thm:r-stat-hypothesis-testing}.

\rstathypothesistest*

\IncMargin{1em}
\begin{algorithm}
\SetKwInOut{Input}{Input}\SetKwInOut{Output}{Output}\SetKwInOut{Parameters}{Parameters}
\Input{Sample access to distribution $\distribution$ on $[0, 1]$. Probability thresholds $p_0 < q_0$ with $\varepsilon = q_0 - p_0$.}
\Parameters{$\rho$ replicability and $\delta$ accuracy}
\Output{$\failtoreject$ if $\distribution$ is uniform and $\reject$ if $\Pr_{X \sim \distribution}(X < p_0) \geq q_0$}

\BlankLine

$\tau \gets \frac{q_0 - p_0}{3}$

$\hat{p} \gets \mathrm{rSTAT}_{\tau, \rho, \delta}(\distribution, \phi(p) = \chi(p \leq p_0))$

\If{$\hat{p} \leq \frac{p_0 + q_0}{2}$}{
    \Return $\failtoreject$
}
\Else{
    \Return $\reject$
}

\caption{$\simpleHypothesisTester(\distribution, p_0, q_0, \rho, \delta)$} 
\label{alg:r-hypothesis-testing}

\end{algorithm}
\DecMargin{1em}

As a sub-routine, we require the statistical query oracle of \cite{impagliazzo2022reproducibility}.

\begin{lemma}
    \label{lemma:ilps-stat-q}
    Let $\tau, \delta, \rho > 0$ and $\rho > 4 \delta$.
    Then, $\mathrm{rSTAT}_{\tau, \rho, \delta}$ is a $\rho$-replicable statistical query oracle with sample complexity
    \begin{equation*}
        \bigO{\frac{1}{\tau^2 \rho^2} \log \frac{1}{\delta}} = \bigtO{\frac{1}{\tau^2 \rho^2}}.
    \end{equation*}
\end{lemma}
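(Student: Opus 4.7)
The plan is to reduce the coin problem to a single invocation of the replicable statistical query oracle $\mathrm{rSTAT}$ from Lemma~\ref{lemma:ilps-stat-q}, with the right choice of query, tolerance, and post-processing threshold. Concretely, given a coin with bias $p$, I will view samples as Bernoulli $\{H,T\}$-valued random variables and define the query $\phi(x) = \mathbf{1}[x = H]$, whose expectation under the sampling distribution is exactly $p$. Invoking $\mathrm{rSTAT}_{\tau,\rho,\delta'}$ on $\distribution$ and $\phi$ with tolerance $\tau = (q_0 - p_0)/3$ and replicability parameter $\rho$ produces an estimate $\hat p$ with $|\hat p - p| \leq \tau$ except with probability $\delta'$, and the oracle is $\rho$-replicable over the joint randomness. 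The final algorithm simply thresholds: output $\reject$ (reject null) if $\hat p \geq (p_0+q_0)/2$ and $\accept$ otherwise.

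For correctness, I will do the trivial case split. If $p \leq p_0$, then conditioned on the good event $|\hat p - p| \leq \tau$ we have $\hat p \leq p_0 + (q_0-p_0)/3 < (p_0+q_0)/2$, so the algorithm outputs $\accept$. Symmetrically, if $p \geq q_0$, then $\hat p \geq q_0 - (q_0-p_0)/3 > (p_0+q_0)/2$ and the algorithm outputs $\reject$. Hence the $\delta'$ failure bound of the SQ oracle transfers directly to a $\delta'$ failure bound for the coin problem.

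For replicability, I will observe that the post-processing (a deterministic threshold on $\hat p$) cannot introduce any new randomness, so two independent runs of $\rAdaptiveCoinTester$\,--\,style invocation on fresh samples with the same internal randomness agree whenever the two invocations of $\mathrm{rSTAT}$ agree, which occurs with probability at least $1-\rho$ by Lemma~\ref{lemma:ilps-stat-q}. The sample complexity is exactly that of $\mathrm{rSTAT}_{\tau,\rho,\delta'}$, namely $O\lp(\tau^{-2}\rho^{-2}\log(1/\delta')\rp) = O\lp(\frac{\log(1/\delta')}{(q_0-p_0)^2 \rho^2}\rp)$.

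The one small technical nuisance is that Lemma~\ref{lemma:ilps-stat-q} requires $\rho > 4\delta$ while the theorem only assumes $\delta < \rho$. I will handle this by invoking the oracle with $\delta' = \min(\delta,\rho/5)$; this still satisfies the hypothesis of Lemma~\ref{lemma:ilps-stat-q}, gives accuracy at least $1-\delta$ (since $\delta' \leq \delta$), and only changes the logarithmic factor by a constant because $\log(1/\delta') \leq \log(1/\delta) + \log 5$. Efficiency is inherited from the efficiency of $\mathrm{rSTAT}$, which computes an empirical mean against a random threshold in near-linear time in the sample size. I expect essentially no obstacles: this is a black-box application of the replicable SQ oracle, and the only subtle point is the parameter-regime adjustment just described.
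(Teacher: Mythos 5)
There is a genuine gap here: your proposal does not prove the stated lemma at all. The statement to be established is that $\mathrm{rSTAT}_{\tau,\rho,\delta}$ \emph{itself} is a $\rho$-replicable statistical query oracle with sample complexity $O\!\left(\frac{1}{\tau^2\rho^2}\log\frac{1}{\delta}\right)$ — i.e., one must exhibit and analyze the oracle. Your write-up instead \emph{assumes} Lemma~\ref{lemma:ilps-stat-q} as a black box and uses it to solve the $(p_0,q_0)$-coin/hypothesis-testing problem via the query $\phi(x)=\mathbf{1}[x=H]$, tolerance $\tau=(q_0-p_0)/3$, and a threshold at $(p_0+q_0)/2$. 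That is exactly the content of Theorem~\ref{thm:r-stat-hypothesis-testing} and Algorithm~\ref{alg:r-hypothesis-testing} in the paper (a downstream application, which the paper proves in the appendix by black-box invocation of $\mathrm{rSTAT}$), not of the lemma you were asked to prove. As written, the argument is circular with respect to its target: every property you need — accuracy, $\rho$-replicability, and the $\tilde O(\tau^{-2}\rho^{-2})$ sample bound — is imported from the very statement under consideration, so nothing is actually established. (For what it is worth, the paper itself does not reprove this lemma either; it cites the SQ oracle of Impagliazzo--Lei--Pitassi--Sorrell.)

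A correct proof must open the black box. The standard construction: draw $m = O\!\left(\frac{1}{\tau^2\rho^2}\log\frac{1}{\delta}\right)$ samples and compute the empirical mean $\hat\mu$ of $\phi$; partition $[0,1]$ into intervals of width $\alpha=\Theta(\tau)$ shifted by a uniformly random offset $\alpha_{\mathrm{off}}\in[0,\alpha]$ (this offset is part of the shared internal randomness); output the midpoint of the interval containing $\hat\mu$. Correctness: by Chernoff, $|\hat\mu-\mu|\le O(\rho\tau)\le\tau/2$ except with probability $\delta$, and the midpoint of an interval of width $\alpha\le\tau$ containing $\hat\mu$ is then within $\tau$ of $\mu$. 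Replicability: condition on both runs having $|\hat\mu-\mu|\le c\rho\tau$ (failure probability $2\delta<\rho/2$ using $\rho>4\delta$); the two outputs differ only if some random grid boundary $\alpha_{\mathrm{off}}+i\alpha$ falls within $c\rho\tau$ of $\mu$, which over the uniform choice of $\alpha_{\mathrm{off}}$ happens with probability $O(\rho\tau/\alpha)=O(\rho)$; choosing constants makes the total at most $\rho$. This quantitative interplay between the estimation accuracy $\Theta(\rho\tau)$ (which is what forces the $\rho^{-2}$ in the sample complexity) and the random-offset argument is the substance of the lemma, and it is entirely absent from your proposal.
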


We now prove our theorem.

\begin{proof}[Proof of Theorem \ref{thm:r-stat-hypothesis-testing}]
    Suppose $\distribution$ is uniform.
    Then, $\E{\phi(p)} = p_0$.
    In particular, $\hat{p} \leq p_0 + \tau$ with probability at least $1 - \delta$, as desired.
    Otherwise, suppose $\distribution$ is at most $p_0$ with probability at least $q_0$.
    Then, $\E{\phi(p)} \geq q_0$ so that $\hat{p} \geq q_0 - \tau$ with probability at least $1 - \delta$.
    Replicability and sample complexity follow directly from Lemma \ref{lemma:ilps-stat-q}.
\end{proof}

\subsection{Testing and Learning Biases}
\label{app:testing-learning-equivalence}

\testBinarySearchLearn*

\begin{proof}[Proof of \Cref{lemma:l-inf-tester-implies-learner}]
    We construct an algorithm that proceeds by simultaneous binary search.
    Divide $[0,1]$ into $\ceil{\frac{2}{\varepsilon}}$ sub-intervals of length $\frac{\varepsilon}{4} \leq \varepsilon' \leq \frac{\varepsilon}{2}$. For every coin $i$, we keep a set of `alive' intervals, starting with the full set.
    At each stage, the algorithm selects for each coin $i$ the median interval of length $\varepsilon$ labelled $(a_i, b_i)$ and hopes to determine whether coin $i$ has bias at most $a_i$ or at least $b_i$.
    If there are two median intervals, choose one arbitrarily.
    After determining which side of the interval the true bias lies on, we remove the current interval and all intervals on the wrong side from the set of ``alive" intervals (so that the number of ``alive" intervals strictly decreases). 
    This procedure terminates when there is at most one interval.
    If there is exactly one interval remaining, return the midpoint of the interval as an estimate for the bias of the coin.
    Otherwise if there are no intervals remaining, note that this implies there were two intervals in the previous iteration, we return the common endpoint of the two intervals.
    Finally, the binary search must terminate after $O(\log(1/\varepsilon))$ iterations. 
    We will ensure that with high probability, each execution of the $N$-Coin problem is correct, and therefore the biases are learned up to error at most the width of two intervals, or $\varepsilon$.

    It remains to describe how to execute one iteration. 
    Fix a coin $i$ and let $(a_i, b_i)$ be the median interval (with ties broken arbitrarily).
    Note that the $(a_i,b_i)$ may be different for each coordinate, so we cannot apply the $N$-coin problem directly to samples from the distribution. 
    Instead, we shift the biases by reflipping coins based on the observed samples, so that it suffices to test the interval $\left(\frac{1}{2} - \frac{\varepsilon'}{4}, \frac{1}{2} + \frac{\varepsilon'}{4} \right)$.
    In each iteration, after performing the appropriate bias shift (a different one for each coin), we execute the $N$-Coin algorithm for the on the coins with shifted biases.
    We show that solving the $N$-Coin problem on the shifted biases is equivalent to deciding which side of the interval the true bias of the coin lies on. 

    We now describe the bias shift.
    Suppose $(a, b)$ is the interval in the current iteration for coin $i$.
    Without loss of generality, assume $a \leq \frac{1 - \varepsilon'}{2}$ (otherwise we can flip the results of the coin flips).
    We transform the bias of the given coin as follows.
    If we sample $H$ from coin $i$, keep the sample as $H$ with probability $h$.
    If we sample $T$ from coin $i$, flip to $H$ with probability $t$.
    Fix an integer $k \geq 1$.
    We choose $h, t$ such that
    \begin{align*}
        \frac{1}{2} - \frac{\varepsilon'}{4} &= a h + (1 - a) t \\
        \frac{1}{2} + \frac{\varepsilon'}{4} &= b h + (1 - b) t.
    \end{align*}
    Solving,
    \begin{align*}
        \frac{(2 - \varepsilon') b}{4} &= a b h + (1 - a) b t \\
        \frac{(2 + \varepsilon') a}{4} &= a b h + a (1 - b) t \\
        \frac{b - a}{2} - \varepsilon' \frac{a + b}{4} &= ((1 - a) b - a (1 - b)) t = \varepsilon t \\
        t &= \frac{1}{2} - \frac{(a + b)}{4},
    \end{align*}
    where we have used $b - a = \varepsilon'$.
    Then
    \begin{align*}
        h &= \frac{1}{b} \left( \frac{1}{2} + \frac{\varepsilon'}{4} - (1 - b) \left(\frac{1}{2} - \frac{a + b}{4} \right) \right) \\
        &= \frac{1}{b} \left( b - \frac{b(a + b)}{4} \right) \\
        &= 1 - \frac{a + b}{4}.
    \end{align*}
    Of course, we require that $h, t \in (0, 1)$.
    Given the above two equations, we note that $a + b < 2$ so that there exist $h,  t \in (0, 1)$, as desired.
    
    Thus, any coin with bias at most $a$ has bias at most $\frac{1}{2} - \frac{\varepsilon'}{4}$ after the transformation, and any coin with bias at least $b$ has bias at least $\frac{1}{2} + \frac{\varepsilon'}{4}$ after the transformation.
    In particular, since the tester correctly decides if the transformed coin has bias at most $\frac{1}{2} - \frac{\varepsilon'}{4}$ or at least $\frac{1}{2} + \frac{\varepsilon'}{4}$, then the tester correctly determines if the original coin has bias at most $a$ or at least $b$.

    In each iteration, for some sufficiently large constant $C$ we will execute a $\frac{\rho}{C \log (1/\varepsilon)}$-replicable algorithm for the $\left( \frac{1}{2} - \frac{\varepsilon}{16}, \frac{1}{2} - \frac{\varepsilon}{16} \right)$ $N$-Coin problem (note $\frac{\varepsilon}{16} < \frac{\varepsilon'}{4}$) with error $\frac{\delta}{C \log (1/\varepsilon)}$, which has sample complexity 
    \begin{equation*}
        f\left( N, \frac{1}{2} - \frac{\varepsilon}{16}, \frac{1}{2} + \frac{\varepsilon}{16}, \frac{\rho}{C \log(1/\varepsilon)}, \frac{\delta}{C \log(1/\varepsilon)} \right).
    \end{equation*}

    Overall correctness and replicability follows from union bounding over the $O(\log 1/\varepsilon)$ above executions, so that each execution of the $N$-Coin algorithm is correct and replicable.
    To conclude, we note that the sample complexity is
    \begin{equation*}
        \bigO{\log \frac{1}{\varepsilon} \cdot f\left( N, \frac{1}{2} - \varepsilon, \frac{1}{2} + \varepsilon, \frac{\rho}{\log(1/\varepsilon)}, \frac{\delta}{\log(1/\varepsilon)} \right)}.
    \end{equation*}.
\end{proof}

\subsection{From Vector Samples to Arbitrary Non-Adaptive Algorithms}
\label{sub-sec:vector-to-non-adaptive}

\productPartition*

\begin{proof}[Proof of \Cref{prop:product-partition}]
    Let $V$ denote the subspace spanned by the first $N/2$ coordinates.
    Let $\partition_1, \partition_2$ be partitions of hypercubes $\cube_1$ (reps. $\cube_2$) be the $N/2$-dimensional hypercube $\left[ \frac{1}{2} - \boxwidth \varepsilon, \frac{1}{2} + \boxwidth \varepsilon \right]^{N/2}$ with $0$ on the last $N/2$ (resp. first $N/2$) coordinates.
    Then $\partition' = \partition \oplus \partition = \partition_1 + \partition_2$.
    Note $\cube_1 \subset V, \cube_2 \subset V^{\perp}$.
    Let $\cube = \left[ \frac{1}{2} - \boxwidth \varepsilon, \frac{1}{2} + \boxwidth \varepsilon \right]^{N}$ so $\cube = \cube_1 + \cube_2$.
    
    Toward Property \ref{item:approximate-cube-partition:disjoint}, suppose $x \in (S_1 + T_1) \cap (S_2 + T_2)$ so that,
    \begin{equation*}
        s_1 + t_1 = s_2 + t_2
    \end{equation*}
    for $s_i \in S_i$ and $t_i \in T_i$.
    Then, for any vector $v \in V$, $v \cdot (s_i + t_i) = v \cdot s_i$.
    Since $s_i \in V$, we have $s_1 = s_2$.
    Thus, $S_1 = S_2$ since $s_i \in S_i$ and the $S_i$ are disjoint.
    Similarly, $T_1 = T_2$.

    Toward Property \ref{item:approximate-tiling:non-zero-volume}, observe $\vol(S + T) = \volF{N/2}(S) \volF{N/2}(T) > 0$.

    Toward Property \ref{item:approximate-cube-partition:semialgebraic}, we claim that $S + T$ is semialgebraic by \Cref{lemma:minkowski-sum-semialgebraic}.

    Toward Property \ref{item:approximate-cube-partition:approx-volume}, we show that the product partition has large volume in the $N$-dimensional cube.
    Since the partition $\partition$ is finite and $S + T$ are disjoint,
    \begin{align*}
        \vol\left( \bigcup_{S, T} S + T \right) &= \sum_{S, T} \vol(S + T) \\
        &= \sum_{S, T} \volF{N/2}(S) \volF{N/2}(T) \\
        &= \left( \sum_{S} \volF{N/2} (S) \right) \left( \sum_{T} \volF{N/2} (T) \right) \\
        &\geq (1 - \rho) \vol(\cube_{1}) (1 - \rho) \vol(\cube_{2}) \\
        &\geq (1 - 2 \rho) \vol(\cube).
    \end{align*}

    Toward Property \ref{item:approximate-cube-partition:radius}, we equip $S_{v} + T_{u}$ with the label $v + u$.
    Then, for any $x \in S_{v} + T_{u}$, where $x = s + t$ with $s \in S_{v}$ and $t \in T_{u}$,
    \begin{equation*}
        \norm{(s + t) - (v + u)}{2} \leq \norm{s - v}{2} + \norm{t - u}{2} \leq 2 \varepsilon.
    \end{equation*}

    Toward Property \ref{item:approximate-cube-partition:surface-area}, we first show
    \begin{align*}
        \boundary(S + T) \cap \cube = \boundary(S + T) \cap \cube = ((\boundary S + T) \cup (S + \boundary T)) \cap \cube = ((\boundary S + T) \cap \cube) \cup ((S + \boundary T) \cap \cube).
    \end{align*}
    Since $\area((\boundary S + T) \cap (S + \boundary T)) = 0$, we have
    \begin{align*}
        \area(\boundary(S + T) \cap \cube) &= \area((\boundary S + T) \cap \cube) + \area((S + \boundary T) \cap \cube) \\
        &= \area((\boundary S \cap \cube_1) + (T \cap \cube_2)) + \area((S \cap \cube_1) + (\boundary T \cap \cube_2)) \\
        &= \volF{N/2 - 1}(\boundary S \cap \cube_1) \volF{N/2}(T \cap \cube_2) + \volF{N/2}(S \cap \cube_1) \volF{N/2 - 1}(\boundary T \cap \cube_2),
    \end{align*}
    using \Cref{lemma:orthogonal-minkowki-sum-intersect} in the second inequality.
    In the third inequality, we note that the volume of the sum of two orthogonal sets is the product of the volumes.
    In particular, by writing $\vol(\cube) = \volF{N/2}(\cube_1) \volF{N/2}(\cube_2)$, we factorize the sums
    \begin{align*}
        \sum_{S, T} \frac{\area(\boundary(S + T) \cap \cube)}{\vol(\cube)} &= \sum_{S, T} \frac{\volF{N/2 - 1}(\boundary S \cap \cube_1) \volF{N/2}(T \cap \cube_2)}{\vol(\cube)} \\
        &+ \sum_{S, T} \frac{\volF{N/2}(S \cap \cube_1) \volF{N/2 - 1}(\boundary T \cap \cube_2)}{\vol(\cube)} \\
        &= \left( \sum_{S} \frac{\volF{N/2 - 1}(\boundary S \cap \cube_1)}{\volF{N/2}(\cube_1)} \right) \left( \sum_{T} \frac{\volF{N/2}(T \cap \cube_2)}{\volF{N/2}(\cube_2)} \right) \\
        &+ \left( \sum_{S} \frac{\volF{N/2}(S \cap \cube_1)}{\volF{N/2}(\cube_1)} \right) \left( \sum_{T} \frac{\volF{N/2 - 1}(\boundary T \cap \cube_2)}{\volF{N/2}(\cube_2)} \right) \\
        &\leq \left( \sum_{S} \frac{\volF{N/2 - 1}(\boundary S \cap \cube_1)}{\volF{N/2}(\cube_1)} \right) + \left( \sum_{T} \frac{\volF{N/2 - 1}(\boundary T \cap \cube_2)}{\volF{N/2}(\cube_2)} \right) \\
        &\leq 2 A.
    \end{align*}

    Finally, note that the membership oracles can easily be queried on each half of the coordinates, thus obtaining a membership oracle for the product partition $\partition'$.
\end{proof}

\begin{lemma}
    \label{lemma:orthogonal-minkowki-sum-intersect}
    Let $V, \cube_1, \cube_2$ be defined as in the proof of \Cref{prop:product-partition}.
    For $A \subset V$ and $B \subset V^{\perp}$, 
    \begin{equation*}
        (A + B) \cap \cube = A \cap \cube_1 + B \cap \cube_2.
    \end{equation*}
\end{lemma}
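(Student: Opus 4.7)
The proof is a direct application of the uniqueness of orthogonal decompositions, so the plan is to verify the two set inclusions separately.

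For the forward inclusion $(A+B) \cap \cube \subseteq (A \cap \cube_1) + (B \cap \cube_2)$, I would take $x \in (A+B) \cap \cube$ and write $x = a + b$ with $a \in A \subseteq V$ and $b \in B \subseteq V^\perp$. Since $x \in \cube = \cube_1 + \cube_2$, we also have $x = c_1 + c_2$ for some $c_1 \in \cube_1 \subseteq V$ and $c_2 \in \cube_2 \subseteq V^\perp$. Because $V$ and $V^\perp$ are orthogonal complements, the decomposition of $x$ into its $V$ and $V^\perp$ components is unique. Hence $a = c_1 \in \cube_1$ and $b = c_2 \in \cube_2$, so $a \in A \cap \cube_1$ and $b \in B \cap \cube_2$, yielding $x \in (A \cap \cube_1) + (B \cap \cube_2)$.

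For the reverse inclusion $(A \cap \cube_1) + (B \cap \cube_2) \subseteq (A+B) \cap \cube$, I would take $x = a + b$ with $a \in A \cap \cube_1$ and $b \in B \cap \cube_2$. Then trivially $x \in A + B$, and since $a \in \cube_1$ and $b \in \cube_2$ we get $x \in \cube_1 + \cube_2 = \cube$. This direction requires no appeal to orthogonality.

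There is essentially no obstacle here; the only subtlety is the uniqueness of the orthogonal decomposition, which holds because $V$ is spanned by the first $N/2$ standard basis vectors and $V^\perp$ is spanned by the last $N/2$ standard basis vectors, so any vector $x \in \R^N$ has a unique splitting into its $V$-part and $V^\perp$-part. Given how short the argument is, I would simply present both inclusions in a single paragraph in the final write-up.
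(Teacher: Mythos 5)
Your proof is correct and follows essentially the same route as the paper's: both directions reduce to the fact that a vector in $\R^N$ splits uniquely into its first-$N/2$-coordinate and last-$N/2$-coordinate parts, which is exactly what the paper exploits when it argues that $z_1$ (vanishing on the last $N/2$ coordinates) must lie in $\cube_1$ once $z \in \cube$. You phrase this via the uniqueness of orthogonal decomposition; the paper phrases it coordinate by coordinate, but the content is identical.
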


\begin{proof}
    Suppose $z \in (A + B) \cap \cube$. 
    Then, $z = z_1 + z_2$ with $z_1 \in A, z_2 \in B$.
    Furthermore, $z_1$ (resp. $z_2$) is zero on the last $N/2$ (resp. first $N/2$) coordinates.
    Then, since $z \in \cube$ we have $z_1 \in \cube_1, z_2 \in \cube_2$ so that $z_1 \in A \cap \cube_1$ and $z_2 \in B \cap \cube_2$.

    Now, suppose $z \in A \cap \cube_1 + B \cap \cube_2$ so that $z = z_1, z_2$ with $z_1 \in A \cap \cube_1, z_2 \in B \cap \cube_2$.
    Then, since $z_1$ (resp. $z_2$) is zero on the last $N/2$ (resp. first $N/2$) coordinates, $z \in \cube$.
    We conclude by observing that $z \in A + B$.
\end{proof}


\end{document}